\acrodef{bma}[BMA]{Bayesian Model Averaging}
\acrodef{icl}[ICL]{In-Context Learning}
\acrodef{cot}[CoT]{Chain-of-Thought}
\acrodef{io}[I/O]{Input/Output}
\acrodef{gpt}[GPT]{Generative Pre-trained Transformer}
\acrodef{tot}[ToT]{Tree-of-thoughts}
\acrodef{dfs}[DFS]{Depth-first-search}
\acrodef{bfs}[BFS]{Breadth-first-search}
\acrodef{sc}[SC]{Self-Consistency}
\acrodef{si}[SI]{Selection-Inference}
\acrodef{mle}[MLE]{Maximum Likelihood Estimate}
\acrodef{ff}[FF]{Feed-Forward}
\acrodef{mha}[MHA]{Multi-Head Attention}
\acrodef{mae}[MAE]{Masked AutoEncoders}
\acrodef{nlp}[NLP]{Natural Language Processing}
\acrodef{cv}[CV]{Computer Vision}
\acrodef{llm}[LLM]{Large Language Models}
\acrodef{hmm}[HMM]{Hidden Markov Model}
\acrodef{rkhs}[RKHS]{Reproducing Kernel Hilbert Space}
\newcommand*{\E}{\mathbb{E}}
\def\[#1\]{\begin{bmatrix}#1\end{bmatrix}}
\newcommand{\xinyang}[1]{{
\color{orange} [\text{Xiyang:} #1] 
}}
\newcommand{\revise}[2]{
{ #2}
}
\def\cme{{\mathtt{CME}}}
\def\att{\mathtt{attn}}
\def\softmax{\mathtt{softmax}}
\def\mha{{\mathtt{mha}}}
\def\noend{\notag \\}
\def\fk{\mathfrak{K}}
\def\fl{\mathfrak{L}}
\newcommand{\Att}{\mathtt{attn}}
\newcommand{\KL}{\mathtt{KL}}
\newcommand{\pt}{\mathtt{prompt}}
\newcommand{\bbE}{\mathbb{E}}
\newcommand{\bbN}{\mathbb{N}}
\newcommand{\bbR}{\mathbb{R}}
\newcommand{\rmd}{\mathrm{d}}
\newcommand{\calB}{\mathcal{B}}
\newcommand{\calC}{\mathcal{C}}
\newcommand{\calD}{\mathcal{D}}
\newcommand{\calF}{\mathcal{F}}
\newcommand{\calJ}{\mathcal{J}}
\newcommand{\calK}{\mathcal{K}}
\newcommand{\calL}{\mathcal{L}}
\newcommand{\calN}{\mathcal{N}}
\newcommand{\calO}{\mathcal{O}}
\newcommand{\calP}{\mathcal{P}}
\newcommand{\calS}{\mathcal{S}}
\newcommand{\calV}{\mathcal{V}}
\newcommand{\calX}{\mathcal{X}}
\newcommand{\hatC}{\hat{C}}
\newcommand{\tilz}{\tilde{z}}
\newcommand{\barB}{\bar{B}}
\newcommand{\barD}{\bar{D}}
\newcommand{\hrho}{\hat{\rho}}
\title{\huge Unveiling the Statistical Foundations of Chain-of-Thought Prompting Methods}
\author{Xinyang Hu\thanks{Yale University. Email:\texttt{\{xinyang.hu, siyu.chen.sc3226, zhuoran.yang\}@yale.edu}.} 
\qquad Fengzhuo Zhang\thanks{National University of Singapore. Email:\texttt{fzzhang@u.nus.edu}.} 
 \qquad  Siyu Chen$^\ast$  \qquad  Zhuoran Yang$^\ast$}
 \date{}
\begin{document}
\maketitle

\begin{abstract}

Chain-of-Thought (CoT) prompting and its variants have gained popularity as effective methods for solving multi-step reasoning problems using pretrained large language models (LLMs). In this work, we analyze CoT prompting from a statistical estimation perspective, providing a comprehensive characterization of its sample complexity. 
To this end, we introduce a multi-step latent variable model that encapsulates the reasoning process, where the latent variable encodes the task information. Under this framework, we demonstrate that when the pretraining dataset is sufficiently large, the estimator formed by CoT prompting is equivalent to a Bayesian estimator. This estimator effectively solves the multi-step reasoning problem by aggregating a posterior distribution inferred from the demonstration examples in the prompt.

Moreover, we prove that the statistical error of the CoT estimator can be decomposed into two main components: (i) a prompting error, which arises from inferring the true task using CoT prompts, and (ii) the statistical error of the pretrained LLM. We establish that, under appropriate assumptions, the prompting error decays exponentially to zero as the number of demonstrations increases. 
Additionally, we explicitly characterize the approximation and generalization errors of the pretrained LLM. 
Notably, we construct a transformer model that approximates the target distribution of the multi-step reasoning problem with an error that decreases exponentially in the number of transformer blocks.
Our analysis extends to other variants of CoT, including Self-Consistent CoT, Tree-of-Thought, and Selection-Inference, offering a broad perspective on the efficacy of these methods. We also provide numerical experiments to validate the theoretical findings.

\end{abstract}
\tableofcontents

\section{Introduction} \label{sec: intro}

Autoregressive Large Language Models (LLMs) have tremendously revolutionized the field of Natural Language Processing (NLP) and related areas due to their striking ability to understand languages and follow instructions. These models, based on the transformer architecture \citep{vaswani2017attention}, are probabilistic models that predict the next token based on preceding tokens, also known as a \emph{prompt}. The training of LLMs typically involves two phases: \emph{pretraining} and \emph{post-training}. During the \emph{pretraining} phase, the LLMs are trained on vast text corpora via unsupervised learning \citep{ahmad2021unified, zoph2020rethinking, erhan2010does, hendrycks2019using}. This process allows them to acquire a broad understanding of language and general knowledge. Subsequently, additional post-training approaches, including supervised fine-tuning \citep{wei2021finetuned} and reinforcement learning with human feedback (RLHF) \citep{ouyang2022training}, are adopted to enhance the chat capabilities of LLMs. Finally, the trained LLMs are deployed to interact with human users, with their neural network parameters remaining fixed.

Human users interact with LLMs through \emph{prompting}, which refers to text generation conditioned on the prompts provided by the users. Designing effective prompts to induce specific desired behaviors in LLMs is known as \emph{prompt engineering} \citep{sahoo2024systematic}, which is largely a heuristic enterprise. Prompt engineering represents a \emph{paradigm shift} from standard statistical learning. Specifically, when using LLMs to solve a task via prompting, the LLMs essentially ``learn'' from the prompts by passing them through the neural network with fixed parameters, which have been trained without data from the desired task.

One of the most widely used prompting heuristics is \ac{icl}~\citep{brown2020language, dong2022survey}, a technique that enables LLMs to comprehend concepts by including a few examples in the prompt. This involves feeding the LLM with a few input-output examples and then asking for the output corresponding to a new input. On many tasks, the LLM can successfully extract the relationship between inputs and outputs and generalize it to the new input to get the desired output. This simple and intuitive prompting method has recently drawn considerable research interest and has become the foundation for many sophisticated prompting methods designed for more complicated tasks \citep{wei2022chain, zhou2022least, kim2022self, zhang2022automatic, rubin2021learning, sorensen2022information, creswell2022selection, yao2023tree, wang2022self}.

A prominent example of \ac{icl} is \ac{cot} prompting \citep{wei2022chain}, which generalizes ICL for multi-step reasoning tasks. Specifically, the vanilla version of few-shot \ac{cot} proposes including intermediate reasoning steps in addition to input and output in the demonstration examples, helping LLMs understand the reasoning path from input to output. Building upon vanilla \ac{cot}, other sophisticated variants of \ac{cot} have been proposed to efficiently select reasoning paths via majority votes or tree search \citep{creswell2022selection, yao2023tree, wang2022self}.

While \ac{cot} prompting methods have found great empirical success in multi-step reasoning problems such as arithmetic, commonsense, and symbolic reasoning, there is still a lack of theoretical understanding of why CoT works and how it compares with vanilla ICL. In this work, we aim to rigorously understand why the practice of “{\bf pretrained LLM + CoT prompting}” is capable of solving multi-step reasoning problems. Additionally, we aim to demystify how the \emph{transformer architecture} of the LLMs and the \emph{intermediate reasoning steps} in the prompts contribute to this success. Specifically, we aim to answer the following four questions:
\begin{description}
\item \textbf{(a)} What are the statistical estimators constructed by \ac{cot} and its variants?
\item \textbf{(b)} What are the statistical properties of these estimators?
\item \textbf{(c)} How does the transformer architecture enable the LLMs to learn these estimators?
\item \textbf{(d)} Does \ac{cot} prompting always outperform vanilla ICL?
\end{description}

To answer Question (a), we introduce a multi-step latent variable model that captures the data-generating process involved in multi-step reasoning. 
Under this model, a sequence of $H+1$ variables $\{Z_0,\ldots, Z_{H}\}$ is generated according to a distribution conditioned on a latent variable $\theta^*$, where $H$ stands for the number of reasoning steps. 
Here, $Z_0$ and $Z_H$ are the input and output, respectively, and $\{Z_{h}\}_{h=1}^{H-1} $ 
are the intermediate steps. The parameter $\theta^*$ captures the underlying statistical task and is a random variable taking values in a set $\Theta$. Under this model, in \ac{cot} prompting, the LLM is given $n$ examples of such sequences sampled conditioned on $\theta^*$ and asked to generate the output $Z_{H}$ corresponding to a queried input $Z_0 = z_0^{\texttt{test}}$. 
We further assume the LLM is pretrained by predicting the next reasoning step, with the training data generated from this model with $\theta^*$ sampled from the prior distribution. 
Under this setup, we answer Question (a) by proving that \ac{cot} prompting methods based on the pretrained LLM produce \ac{bma} estimators \citep{hoeting1999bayesian}. Specifically, based on the examples in the prompt, the \ac{llm} implicitly learns a posterior distribution over the latent variable $\theta^*$ and then generates the output $Z_H$ by aggregating over this posterior distribution.

Furthermore, the main effort of this paper is devoted to answering Question (b). To analyze the statistical error of the \ac{cot} estimator, we first decompose the statistical error $\mathtt{err}_\mathrm{CoT}$ into the sum of (i) pretraining error and (ii) prompting error. In particular, the pretraining error characterizes the statistical error arising from training the LLM to predict the next reasoning step on finite data. 
This error is further upper-bounded by the sum of \emph{approximation error} and \emph{generalization error}. To control the approximation error, we construct a class of transformers that directly approximate the population distribution while capturing the multi-step problem structure. This result is combined with a \emph{Pac-Bayes} generalization error bound \citep{mcallester1998some, alquier2021user} to establish the pretraining error bound. Moreover, the prompting error reflects the statistical error of the \ac{bma} estimator based on finite examples. We upper-bound this statistical error for variants of \ac{cot} methods in terms of both the last-iterate and average-iterate errors. By combining the analyses of pretraining and prompting errors, we provide a complete answer to Question (b).

Furthermore, to answer Question (c), we show that the attention mechanism enables the LLM to approximately encode \ac{bma} within the transformer architecture. In other words, prompting a pretrained \ac{llm} induces an output distribution that closely approximates the \ac{bma} estimator. Additionally, the transformer architecture also plays a role in the analysis of the approximation error, which is a byproduct of the analysis of pretraining error.

Finally, to answer Question (d), we specialize the theoretical result to the case where $H = 1$, which reduces to vanilla ICL. Our theory shows that \ac{cot} is always no worse than vanilla ICL in expectation among all tasks, but this dominance does not hold point-wisely for arbitrary tasks or prompts. Intuitively, with only a finite number of examples, it is possible that \ac{cot} jeopardizes performance when the intermediate reasoning steps included in the prompt are not sufficiently informative. We empirically validate this argument on a synthetic task, showing that whether \ac{cot} prompting is beneficial is nuanced.

In conclusion, as an initial attempt to bridge the gap between theory and practice, this paper establishes a comprehensive understanding of \ac{cot} and its variant prompting methods with theoretical guarantees. We hope this work will pave the way for further exploration into the theoretical underpinnings of prompt engineering.

\vspace{2mm}

{\noindent \bf Notation.}
 Let $[T]$ denote $\{1,\cdots,T\}$. We use $V_i$ to denote the $i$-th coordinate of the vector $V$. We adopt $\{\PP_{\theta}\given \theta\in \Theta\}$ to denote a parametric family of distributions parameterized by $\theta \in \Theta$. The Kullback-Leibler (KL) divergence between two distributions $\PP$ and $ \QQ$ is denoted by  $$\KL(\PP , \QQ)=\E_{\PP(x)}\bigg[\log\frac{p(x)}{q(x)}\bigg],\,$$ where $p$ and $q$ are the densities of $\PP$ and $\QQ$ with respect to a reference distribution. Furthermore, we define the conditional KL divergence as 
 \begin{align*}
     \KL\big(\PP(Y=\cdot \given X=x) , \QQ(Y=\cdot \given X=x)\big)=\E_{y\sim \PP(\cdot \given x)}\bigg[\log\cfrac{p(y\given x)}{q(y\given x)}\bigg],
 \end{align*}
 which is a function of $x$.

 Let $\PP$ and $\QQ$ denote two probability measures defined over a measurable space $(\Omega, \calF)$. The total variation (TV) distance between $\PP$ and $\QQ$ is 
 $$
 \TV(\PP, \QQ) = \sup_{A\in \calF}|\PP(A)-\QQ(A)|,
 $$
 which is also half of the $\ell_1$ distance between density function $p(x)$ and $q(x)$.

 The Hellinger distance between two distributions $\PP$ and $ \QQ$ is
$$\text{H}(\PP, \QQ) = \frac{1}{\sqrt{2}} \bigg(\int \big(\sqrt{p(x)} - \sqrt{q(x)}\big)^2 \ud x \bigg)^{1/2} = \bigg(1 - \int \sqrt{p(x)q(x)} \ud x\bigg)^{1/2}.
$$
In addition, we use $\softmax(\cdot)$ to denote the softmax function, which maps a vector to a probability distribution. 
In particular, for any vector $x \in \RR^n$ and any $i \in [n]$, the $i$-th entry of 
$\softmax (x) $ is given by 
 $[\softmax(x)]_{i}=\exp(x_{i})/\sum_{j=1}^{n}\exp(x_{j})$. 
 For a matrix $X\in \mathbb{R}^{d_1\times d_2}$, we denote the $i$-th row and column using $X_{i:}$ and $X_{:,i}$, respectively. The $\ell_{p,q}$ norm of $X$ is defined as $\|X\|_{p,q} = (\sum_{i=1}^{d_2} \|X_{:,i}\|_{p}^q)^{1/q}$. Furthermore, we use $\|X\|_\mathrm{F} = \|X\|_{2,2}$ to denote Frobenius norm. For a set $\cL$, let $\cP(\cL)$ denote the family of probability distributions over $\cL$. We use $z_{1:h}$ to denote the vector $[z_1, \cdots, z_h]$. We adopt $\cL^{*}$ to denote the set of all the sequences where each component is in the set $\cL$.

\section{Related Works}

Our work adds to the literature on theoretically understanding prompting methods. 
In particular, our work is closely related to \ac{cot} prompting and its variants. 
In addition, our work is related to the body of works that aim to understand the ability of \ac{icl} and \ac{cot} from both empirical and theoretical perspectives.

\vspace{2mm}

\noindent \textbf{CoT Prompting and its Variants.}  
The vanilla \ac{cot} prompting method is proposed in \cite{wei2022chain} for solving multi-step reasoning problems using LLMs. 
Based on this work,
many variants of \ac{cot} have been proposed to enhance the efficiency and reliability of LLMs in solving multi-step reasoning problems. See, e.g., \cite{yao2023tree, wang2022self, creswell2022selection,zhou2022least, chen2022program,zhang2023multimodal, besta2024graph} and also see \cite{chu2023survey,zhang2023igniting} for recent surveys of \ac{cot} methods. 
In particular, our work offers a theoretical understanding for vanilla \ac{cot} and variants including \ac{sc} \ac{cot} \citep{wang2022self}, \ac{si} \cite{creswell2022selection}, and \ac{tot} \citep{yao2023tree}.

\vspace{2mm}

\noindent \textbf{Existing Research on Understanding \ac{icl}.}
Our work is closely related to the body of works that aim to understand the ability of \ac{icl} from both empirical and theoretical perspectives. From an empirical point of view, \cite{garg2022can, min2022rethinking,krishnamurthy2024can, zhang2022automatic, dziri2024faith, olsson2022context} explore the understanding of the behavior and capability of \ac{icl}.
In particular, \cite{garg2022can} show that transformers can learn unseen linear functions via \ac{icl}. \cite{min2022rethinking} demonstrate that shuffled input-output pairs in few-shot \ac{icl} induce little degradation in the performance on a range of classification and multi-choice tasks. \cite{dziri2024faith} study how transformer-based \ac{llm}s solve compositional tasks and their limitations in reasoning. 

From a theoretical perspective, 
\cite{akyurek2022learning, von2023transformers, bai2023transformers, dai2023can, wang2023label} establish theoretical understandings of \ac{icl}. The theories proposed in these works mainly offer two explanations of ICL:  (i) LLMs perform ICL by running iteration optimization algorithms such as gradient descent, and (ii) LLMs perform ICL by implementing Bayesian inference through the architecture. The works \cite{akyurek2022learning, von2023transformers, bai2023transformers, dai2023can} indicate that \ac{icl} implicitly implements the gradient descent or least-square algorithms from the function approximation perspective. \cite{hou2023towards} hypothesize that \ac{llm}s implicitly perform multi-step reasoning within their architecture by going through a reasoning tree.
\cite{li2023transformers} derive the generalization bound for \ac{icl} from the view of multi-task learning. \cite{hahn2023theory} adopt a linguistic point of view and bounds the \ac{icl} error using description length. The works \cite{ahn2023transformers,huang2023context, fu2023transformers, mahankali2023one, wu2023many} consider linear attention models to study the performance of \ac{icl}, which restricts the function class that can be represented by transformers to linear functions. 

Another line of work lies in the Bayesian interpretation of the \ac{icl} paradigm \citep{jiang2023latent, wang2023large, xie2021explanation,wies2023learnability,zhang2023and, he2024words}. Under the Bayesian framework, \cite{xie2021explanation} use \ac{hmm}\citep{rabiner1986introduction} to model the token generation process and assume access to the true language distribution. However, the \ac{hmm} assumption is restrictive, and the perfect pretraining assumption does not incorporate the pretraining phase into the story. To this end, \cite{wies2023learnability} relax these two assumptions by adopting a general i.i.d.\ data model and analyzing a pretrained model that well approximates the true distribution given any token sequence, which is also unrealistic. These works do not mention the relationship between transformer architecture, pretraining process, and the Bayesian interpretation of \ac{icl}. 

Among these works, our work is most related to \cite{zhang2023and} and \cite{he2024words}.
In particular, \cite{zhang2023and} adopt a latent variable model that generalizes the HMM model in \cite{xie2021explanation}, and show that ICL can be explained as a BMA estimator under this model. 
They also establish the statistical error of the BMA estimator and connect it to the attention mechanism. 
\cite{he2024words} further extend this BMA framework for studying LLM-based decision-making problems, where an LLM is used as a policy. 
They bring about the equivalence between the LLM-based policy trained by predicting the next action given the history and a  Bayesian version of imitation learning. This ability of \ac{llm} allows the decision maker to take optimal actions in each timestep when the pretraining data contains the optimal actions provided by the oracle.
Our work builds on the ideas of  \cite{zhang2023and} and extends the Bayesian framework to \ac{cot} and its variants method. 
Compared to these works, the output of CoT is obtained by multi-step generating using the LLM, but ICL and imitation learning only involve one-step generations. 
To this end, we propose a multi-step latent variable model, and establish new analyses for the errors in both pretraining and prompting stages.  
For example, to bound the pretraining error, we construct a family of transformer models that explicitly take the multi-step structure into account. 

\vspace{2mm}

\noindent \textbf{Existing Research on Understanding \ac{cot}.} Our work aims to understand the capability and behavior of \ac{cot}. The following works provide an interpretation of \ac{cot} from both experimental and theoretical perspectives. \cite{saparov2022language, shi2022language, paul2023refiner, wang2212towards, tang2023large, madaan2022text} offer practical insight by exploring the performance and capability of \ac{cot} reasoning empirically. On the theoretical side, \cite{merrill2023expresssive, feng2023towards, li2023dissecting, prystawski2024think} explore the reason behind the improvement in reasoning induced by \ac{cot}. \cite{wu2023analyzing, tutunov2023can,hou2023towards, wang2023label} investigate the ability demonstrated by \ac{cot} through examining the internal mechanism of the transformer architecture. 

Currently, the understanding of \ac{cot} is still limited and requires further investigation. In this work, we adopt a statistical point of view to establish a refined characterization of the statistical properties of \ac{cot} in both the pretraining and prompting stages.

\section{Background} \label{sec:background}

In this section, we introduce the background knowledge about transformer-based large language models and \ac{cot} prompting.

\vspace{2mm}

{\noindent \bf Autoregressive \ac{llm}s.} Most commercial \ac{llm}s such as GPT-4 \citep{openai2023gpt}, Claude \citep{claude3}, Llama \citep{touvron2023llama}, and Gemini \citep{team2023gemini}, are autoregressive in the sense that they generate in a token-by-token fashion. 
An autoregressive \ac{llm}, denoted by $\PP_\mathrm{LLM}$, is a conditional probability model that continuously predicts future tokens based on a sequence of past tokens, known as the prompt. Here we denote the space of all the tokens as $\calX$. Given an input prompt $S_t =(x_1,...,x_t)\in \calX^{t}$, to generate the response to it, the \ac{llm} first generate the next token as $x_{t+1}\in\calX\sim \PP_\mathrm{LLM}(\cdot \given S_t)$. Then it appends the generated token $x_{t+1}$ to the end of $S_{t}$ to form $S_{t+1} = (S_{t},x_{t+1})$. The \ac{llm} will generate $x_{t+2}$ based on $S_{t+1}$, and it repeats this generation process till the generation of the end of the sentense.



\vspace{2mm}

{\noindent \bf Transformers and Attention Mechanism.}
The transformer model is based on the \ac{mha} mechanism \citep{bahdanau2014neural, phuong2022formal}, together with other modules such as the tokenizer and the positional embeddings \citep{wang2020position,su2023roformer}, residual connections, feed-forward networks, and layer normalization \citep{ba2016layer}. 
The tokenizer maps the input sequence to a sequence of vectors in Euclidean space, and the positional embeddings add the position information of tokens to these vectors. 

The attention mechanism captures the relationship between different tokens, which is the backbone of transformer-based \ac{llm} \citep{devlin2018bert}. The attention mechanism takes in queries, keys, and values as inputs, and outputs the response of each query as a weighted sum of values,  where the weights are the similarity scores between the query and the keys.
Specifically, let $K \in \mathbb{R}^{L\times d_k}$ and $ V \in \mathbb{R}^{L\times d_v}$ denote the $L$ key and value vectors, respectively. The attention output of a single query $q \in \mathbb{R}^{d_k}$ is computed as:
\begin{align}
    \Att(q, K, V) = V^T \softmax(Kq), \label{eq: softmax attn}
\end{align}
where $\softmax(Kq)$ is a probability distribution over $[L]$.
Here $\softmax(Kq)$ quantifies the similarity between the query $q$ and each row of  $K$, which is used to aggregate the value vectors of $V$. The attention $\Att(Q, K, V)$ that takes in multiple queries outputs the responses as $V^T \softmax(KQ^T)$, where $Q\in\mathbb{R}^{d_k}$ contains $L$ query vectors.
The predefined attention mechanism captures the relationship between the keys and queries via a single softmax module, and thus is called single-head attention. \ac{mha} refers to passing the inputs through multiple attention functions in parallel, and outputs the aggregation of these sub-modules. Taking $X \in \RR^{L\times r}$ as the input, a \ac{mha} layer with $\eta$ heads outputs
\begin{align}\label{eq: mha}
    \mha(X, W_\mathrm{mha}) = \sum_{i=1}^\eta \Att(XW_i^{Q}, XW_i^{K}, XW_i^{V}),
    \end{align}
The parameter set $W_\mathrm{mha} = \{W_i^{Q}, W_i^{K}, W_i^{V}\}_{i=1}^\eta$ are the weight matrices for queries, keys, and values, where $W_i^{Q}\in \RR^{r \times d_k}$, $W_i^{K}\in \RR^{r \times d_k}$, and $W_i^{V}\in \RR^{r \times d_v}$.
Intuitively, different heads can attend to different parts of the data, and thus MHA offers a more expressive model class. Compared to the \ac{mha} defined in \cite{vaswani2017attention}, we absorb the matrix $W_{i}^{O}$ into $W_{i}^{V}$ for each head.

Each \ac{mha} layer is followed by a \ac{ff} layer. Given an input \(X \in \RR^{L \times r}\), a \ac{ff} layer with \(d_F\) neurons maps the input $X$ to
\begin{align}\label{eq: ffn}
    \mathtt{ff}(X, W_{\mathrm{ff}}) = \mathtt{ReLU}(XW_{\mathrm{ff},1})W_{\mathrm{ff},2}, \qquad \text{where} \quad W_\mathrm{ff}=\{W_{\mathrm{ff},1}\in\mathbb{R}^{r \times d_F},W_{\mathrm{ff},2}\in \mathbb{R}^{d_F \times r}\}
\end{align}
 are weight matrices. 
There are also normalization layers between the MHA and FF layers. We defer their details to  Appendix \ref{app: pre-training process} for brevity. 

\vspace{2mm}

{\noindent \bf \ac{llm} Training.} 
The training of an \ac{llm} involves two stages: (i) pretraining \citep{zoph2020rethinking} and (ii) post-training \citep{ouyang2022training,wei2021finetuned}.
In the pre-training stage, the \ac{llm} is trained to predict the next token based
on a large corpus of text data by maximizing likelihood. 
The log likelihood function of a token sequence $S_T =(x_1,\cdots,x_T)$ for \ac{llm}s is given by $\sum_{t=1}^T\log \PP_{\mathrm{LLM}} (x_t | S_{t-1})$, where $\PP_{\mathrm{LLM}}$ denotes the conditional distribution induced by the \ac{llm}. 
The pretraining dataset consists of a large number of token sequences from diverse datasets. For the popular \ac{llm}s such as GPT-4 \citep{openai2023gpt}, pretraining datasets are internet-scale and contain billions or trillions of tokens from a variety of sources, such as Wikipedia, news articles, and books \citep{openai2023gpt}. 
The goal of pretraining is to learn a general-purpose \ac{llm} that can generate coherent text and capture the statistical structure of natural language. 
Then in the second stage, the pretrained \ac{llm} is finetuned on a much smaller labeled dataset consisting of question-answer pairs or human feedbacks \citep{wei2021finetuned, ouyang2022training}. Fine-tuning can be either based on supervised learning, reinforcement learning, or both. The goal is to adapt the \ac{llm} to a chatbot-style model that can interact with humans and generate conversations that align with human values.

\vspace{2mm}

{\noindent \bf Prompting a Pretrained \ac{llm}.}
Users interact with \ac{llm}s by providing a piece of text, known as the ``prompt'', and let the \ac{llm}s generate a token sequence based on the given prompt. Here the network parameters of the \ac{llm} is fixed and the \ac{llm} is not trained on the prompt. Due to the autoregressive nature, in the sequel, we slightly abuse the notation by regarding $\PP_\mathrm{LLM}$ as a mapping from a prompt to a probability distribution over the output token sequence. Then prompting an \ac{llm} is equivalent to sampling a the output token sequence, $\mathtt{output}  \sim \PP_\mathrm{LLM}(\cdot \given \pt)$, where $\pt$ is the input token sequence.

\vspace{2mm}

{\noindent \bf In-Context Learning.}  \ac{icl} refers to the learning process of the  \ac{llm}s where they learn from prompts without tuning the parameters \citep{dong2022survey}. In the vanilla version of \ac{icl}, we prompt an \ac{llm} with a collection of input-output pairs, known as ``examples'' or ``demonstrations'', and a new input query. 
We expect the \ac{llm}s to learn the underlying pattern of the input-output pairs and generate a desired output associated with the input query following the same pattern.  More concretely, let $\{(x_i, y_i)\}_{i=1}^T$ be a collection of $T$ examples satisfying $y_i = f_*(x_i)$, where $f_*$ is the underlying input-output relationship, and $x_{i},y_{i}\in \calX^{*}$. Let $x_q$ denote a new input query, and we concatenate the examples and the query to form the prompt $\pt =  (x_1, y_1,\cdots,x_{T},y_{T}, x_q) $. The \ac{llm} is able to learn in an in-context fashion if $y_q \sim \PP_\mathrm{LLM}(\cdot \given \pt)$ satisfies $y_q = f_*(x_q)$. For example, we can prompt the \ac{llm}s with ``{\fontfamily{qcr}\selectfont grass is green, apple is red, sky is}'' to let the \ac{llm}s output the color of the sky. Here (``{\fontfamily{qcr}\selectfont grass is}'',  ``{\fontfamily{qcr}\selectfont green}'') and (``{\fontfamily{qcr}\selectfont apple is}'', ``{\fontfamily{qcr}\selectfont  red}'') are examples, ``{\fontfamily{qcr}\selectfont sky is}'' is the query, and the desired output $y_{q}$ is {\fontfamily{qcr}\selectfont blue}. Generating $y_q$ based on $\pt$ is called ``in-context learning'' because the \ac{llm} learns the desired relationship $f_*$ purely from the prompt without updating the network parameters of the \ac{llm}.  

\vspace{2mm}

{\noindent \bf Chain of Thought and its Variants.} 
When the input-output relationship is complex, it is challenging for \ac{llm}s to learn this relationship directly from input-output pairs in the prompt. The complex relationship usually appears in the multi-step reasoning problem. 
For example, calculating a long math equation $5\times 20+(5+3)\div 2 = 100 + 8\div 2 = 100 + 4 = 104$ involves a series of operations. 
It is difficult to learn a function that can directly output the result.
\ac{cot} is a prompting technique that aims to solve multi-step reasoning tasks by providing multiple input-output examples together with intermediate reasoning steps in the prompts~\citep{wei2022chain}. 
By guiding \ac{llm}s through a sequence of intermediate reasoning steps before arriving at a final answer, we expect to decompose a complicated reasoning problem into a sequence of simple subtasks that can be learned via vanilla \ac{icl}. 
We formulate a \ac{cot} prompt with $H$ steps as $  \mathtt{prompt}_{\mathrm{CoT}}(n) = (z_{0:H}^1,\cdots,z_{0:H}^n, z_0^{\mathrm{test}})$, and we will denote this as $(\{z_{0:H}^i\}_{i=1}^n, z_0^{\mathrm{test}})$ in the following for ease of notation. Here for each $i\leq n$, $(z_0^i,z_H^i)$ corresponds to the input-output pair of the example $(x_i,y_i)$ in vanilla \ac{icl}, and $(z_1^i, \cdots, z_{H-1}^i)$ denotes the intermediate reasoning steps of the example. For example, in the math equation calculation problem, $z_{0} =$``$5\times 20+(5+3)\div 2$'', $z_{1}=$``$100 + 8\div 2$'', and $z_{3} =$``$104$''.
Here $H$ is fixed throughout this paper.
We recover the vanilla \ac{icl} prompts by omitting the intermediate steps, i.e., setting $H=1$. We will show more concrete examples in Section~\ref{subsec: bma interpretation of cot}. For simplicity, we assume each reasoning step \( z \) takes value in a finite set \( \calL\subseteq\calX^{*} \), with each element uniquely identified with an embedding vector in the Euclidean space $\RR^{d_{\cZ}}$ for some integer $d_{\cZ}$. We let $\calL^*$ denote the set of sequences consisting of reasoning steps, e.g., $\mathtt{prompt}_{\mathrm{CoT}}(n) \in \cL^*$.


Furthermore, as conditional probability models, \ac{llm}s are intrinsically \textit{stochastic}. For problems such as solving mathematical questions, however, there is often a unique answer. 
To further boost the probability of finding the correct answer, variants of \ac{cot} leverage multi-step reasoning with various selection techniques to solve more complicated reasoning and decision-making problems. For instance, \ac{sc}-\ac{cot} \citep{wang2022self} uses majority vote, \ac{tot} \citep{yao2023tree} adopts tree search methods, and \ac{si} \citep{creswell2022selection} further introduces a selection module in each reasoning step.

In the next section, we will introduce a multi-step latent variable model to interpret the \ac{cot} prompting method as a Bayesian model averaging estimator.



\section{A Latent Variable View of Multi-Step Reasoning} \label{sec: methodology}

In this section, we show that \ac{cot} prompting can be understood as a Bayesian estimator on a multi-step latent variable dynamical model. 
In particular, we propose a {\bf multi-step latent variable model} in Section~\ref{subsec: bma interpretation of cot} to capture the multi-step reasoning process, which is further generalized in Appendix~\ref{app: generalized model} to the non-i.i.d. setting. 
Then in Section~\ref{subsec: pre-training and cot prompting under the statistical model}, we study the practice of  \ac{cot} prompting of pretrained LLMs from a statistical perspective. 
In Section~\ref{subsec: attention parameterizes bma}, we show that such a practice is equivalent to a \ac{bma} estimator for the multi-step latent variable model, which answers Question (a) raised in Section~\ref{sec: intro}.
Moreover, we show that the softmax attention mechanism in the transformer architecture parameterizes the \ac{bma} algorithm, which partially answers Question (c).

\subsection{A Multi-Step Latent Variable Model} \label{subsec: bma interpretation of cot}

We introduce a multi-step latent variable model to capture the multi-step reasoning process of \ac{cot}, which serves as the data-generating model for studying \ac{cot}.

\vspace{2mm}

\noindent \textbf{\ac{cot} Prompting Paradigm.} 
Recall that we define the \ac{cot} prompt $\mathtt{prompt}_{\mathrm{CoT}}(n)=(\{z_{0:H}^i\}_{i=1}^n, z_0^{\mathrm{test}})=(\Upsilon_n,z_0^\mathrm{test})$ in Section \ref{sec:background}, which contains $n$ demonstration examples $\Upsilon_n=\{s_i\}_{i=1}^n=\{z_{0:H}^i \}_{i=1}^n$ and a testing query $z_0^\mathrm{test}$. 
To generate such a prompt, we first specify a latent concept vector, which is denoted as $\theta^* \in \Theta $. Here $\Theta$ denotes the set of all the latent concepts. Semantically, $\theta^*$ determines the task we would like to achieve via \ac{cot}, e.g., the color description of objects, the calculation of math equations. Thus, we will use the terms task and latent concept interchangeably in the following. Statistically, the latent concept $\theta^{*}$ specifies the task-specific joint distribution $\PP(\cdot \given \theta^*)$ of demonstration examples and testing query in the prompt, which will be specified later in \eqref{eq:latent_var_model}. 
Given the generated prompt $\pt_{\mathrm{CoT}}(n)$, we feed it to  the \ac{llm}, and the \ac{llm} recursively generates the intermediate steps $(z_1^\mathrm{test},\ldots, z_{H-1} ^\mathrm{test} )$ and the final answer $z_H^\mathrm{test}$ via 
\begin{align}\label{eq:cot_generation}
z_{h+1}^\mathrm{test}\sim \PP_{\mathrm{LLM}} (\cdot\given \pt_\mathrm{CoT}(n),z_1^\mathrm{test},\ldots, z_{h}^\mathrm{test}),\qquad \forall  h\in [H-1].
\end{align}

To evaluate the performance of \ac{cot}, we compare the distribution of $z_H^\mathrm{test}$ in \eqref{eq:cot_generation} with the ground truth distribution $\PP(z_H^\mathrm{test}\given \pt_\mathrm{CoT}(n),\theta^*)$, which is the target task-specific distribution of the final answer given the prompt. 
We illustrate the \ac{cot} paradigm with a concrete example as follows.

As a concrete example, consider the task $\theta^* = $ ``{\fontfamily{qcr}\selectfont calculate twice the area code of the given country.}'' The prompt in Figure~\ref{fig:cot_icl_example} is a \ac{cot} prompt with $n=2$ and $H=2$, where the input of the first example is $z_0^1=$``{\fontfamily{qcr}\selectfont The US = ?}'', the first step of the solution is $z_1^1=$``{\fontfamily{qcr}\selectfont The US has area code 1}", and the second step of the solution is $z_2^1=$``{\fontfamily{qcr}\selectfont so the answer is 2}".
The query is $z_0^\mathrm{test}=$``{\fontfamily{qcr}\selectfont Japan = ?}'', and the desired task-specific answer is ``{\fontfamily{qcr}\selectfont 126}''.
When tested on ChatGPT~\citep{achiam2023gpt},
it indeed outputs the correct answer with an intermediate reasoning step: ``{\fontfamily{qcr}\selectfont Japan has area code 81, so the answer is 162.}''
In comparison, the vanilla \ac{icl} prompt has $x^1=$``{\fontfamily{qcr}\selectfont The US = ?}'', $y^1=$``{\fontfamily{qcr}\selectfont The answer is 2}'', $x^2=$``{\fontfamily{qcr}\selectfont France = ?}'', $y^2=$``{\fontfamily{qcr}\selectfont The answer is 66}'', and $x^\mathrm{test}=$``{\fontfamily{qcr}\selectfont Japan = ?}''. 
In this case, however, ChatGPT is unable to provide the correct answer because it fails to find the relationship between the area code and the country.\footnote{Both the CoT and vanilla ICL prompts are tested on ChatGPT (GPT-3.5-turbo-16k) with the temperature set to zero. See Section~\ref{subsec: area code} for the details.}
See Figure \ref{fig:cot_icl_example}
for a visual illustration of \ac{cot} and vanilla \ac{icl} prompts. 
Thus, seen from this example, by providing additional reasoning steps, \ac{cot} prompts can significantly boost the accuracy of the \ac{llm} compared with vanilla \ac{icl} prompts. 


\begin{center}
\includegraphics[width=0.94\textwidth]{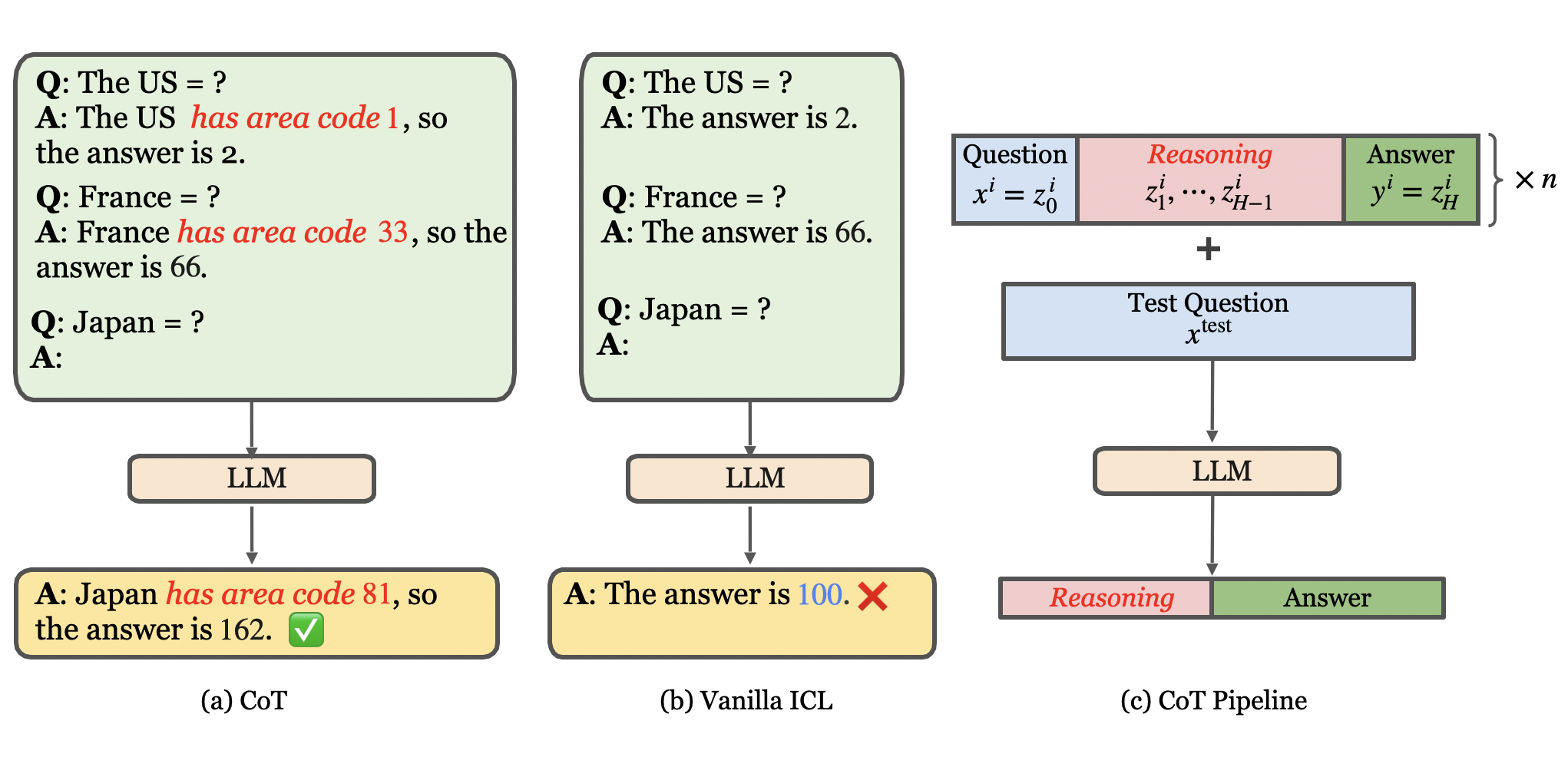}
\captionof{figure}{An illustration of \ac{cot} and vanilla \ac{icl}. 
Figure (a) shows the \ac{cot} prompt and the corresponding output of ChatGPT (GPT-3.5-turbo-16k). 
The intermediate reasoning is shown in red. 
The output of ChatGPT follows the pattern in the prompt, which consists of a reasoning step, followed by the desired answer. 
Figure (b) shows the result of using the corresponding vanilla \ac{icl} prompt, which includes of two input-output pairs. 
In this case, ChatGPT fails to provide the correct answer.
Figure (c) illustrates a general pipeline of \ac{cot} prompting with $n$ demonstration examples. Each example includes an input question, $H-1$ intermediate reasoning steps, and the final answer. 
}
\label{fig:cot_icl_example}
\end{center}

\vspace{2mm}

\noindent \textbf{The Multi-Step Latent Variable Model.}
To analyze \ac{cot} from a statistical perspective, we need to specify the pre-mentioned task-specific distribution $\PP(\cdot \given \theta^*)$, which serves as the data-generating distribution for the \ac{cot} prompt. 
We assume that the concept $\theta^*$ is a random variable sampled from a prior $\pi\in\cP(\Theta)$ and the examples $\{s^i\}_{i=1}^n$ are i.i.d. sequences conditioning on $\theta^* \in \Theta$.
For any $\theta \in \Theta$, when $\theta ^* = \theta$, 
within the reasoning chain $i\in[n]$, we 
sample $z_{0:H}^i$ according to the following stochastic dynamical system with joint distribution $\PP(s^i\given \theta^* = \theta )$ given by 
\begin{align}
    \PP(s^i\given \theta^* = \theta ): \qquad z_0^i = f_{\theta}\big(\zeta^i\big),  \quad 
z_h^i    = F_{\theta}\big(z_0^{i},\cdots, z_{h-1}^{i}, \epsilon_h^i\big),  \quad \forall  1\leq h \leq H.  
\label{eq:latent_var_model}
\end{align}
Here $\{ \zeta^i, \{ \epsilon_h^i \}_{h\in [H]}  \}_{i \in [n]}$  are i.i.d.\  noise variables, and $f_{\theta}$  and  $F_{\theta}$ are two functions parameterized by $\theta\in \Theta$.
The same is true for the test sample $z_{0:H}^{\mathrm{test}}$ and this distribution will serve as the target distribution for \ac{llm} to learn in context during the prompting stage.
Specifically, $f_{\theta}$ generates the first query $z_0^i$ based on the task $\theta^* = \theta $, and $F_{\theta}$ models the evolution of the ``reasoning process'' $\{ z_{h}^i \}_{h\in[H]}$.
Specifically, each $z_h^i$ depends on all of the previous reasoning steps as well as the latent variable $\theta^*$. 
The rationale behind this model is that the generation of these reasoning steps is autoregressive and the distribution of the whole sequence is 
specific to the task $\theta^*$.  
The random variables $\{ \epsilon_h^i\} _{ h\in [H]}$  allow the reasoning process to be stochastic. 
See Figure \ref{fig:latent_var_model} for an illustration of this model.  

Intuitively, $\theta^*$ represents the latent concept that specifies the task, e.g., ``calculate twice the area code of the given country, including the identification of the area code and the multiplication calculation'', ``solving an arithmetic problem with each intermediate step'' or ``writing a science fiction novel, detailing the thought process at each step''. 
As a concrete example, consider $\theta^*$ as the task of ``solving an arithmetic problem with each intermediate step''. 
The input $z_0^i$ is an arithmetic problem 
described using natural language. 
To get the final answer, a few intermediate arithmetic operations need to be performed.
The intermediate reasoning steps just corresponds to these operations described in natural language, and thus the transition depends on the task $\theta^*$.
See Figure \ref {fig:interpret_latent_variable_model}
for an illustration.

\begin{figure}[t]
    \centering
    \includegraphics[width=0.8\textwidth]{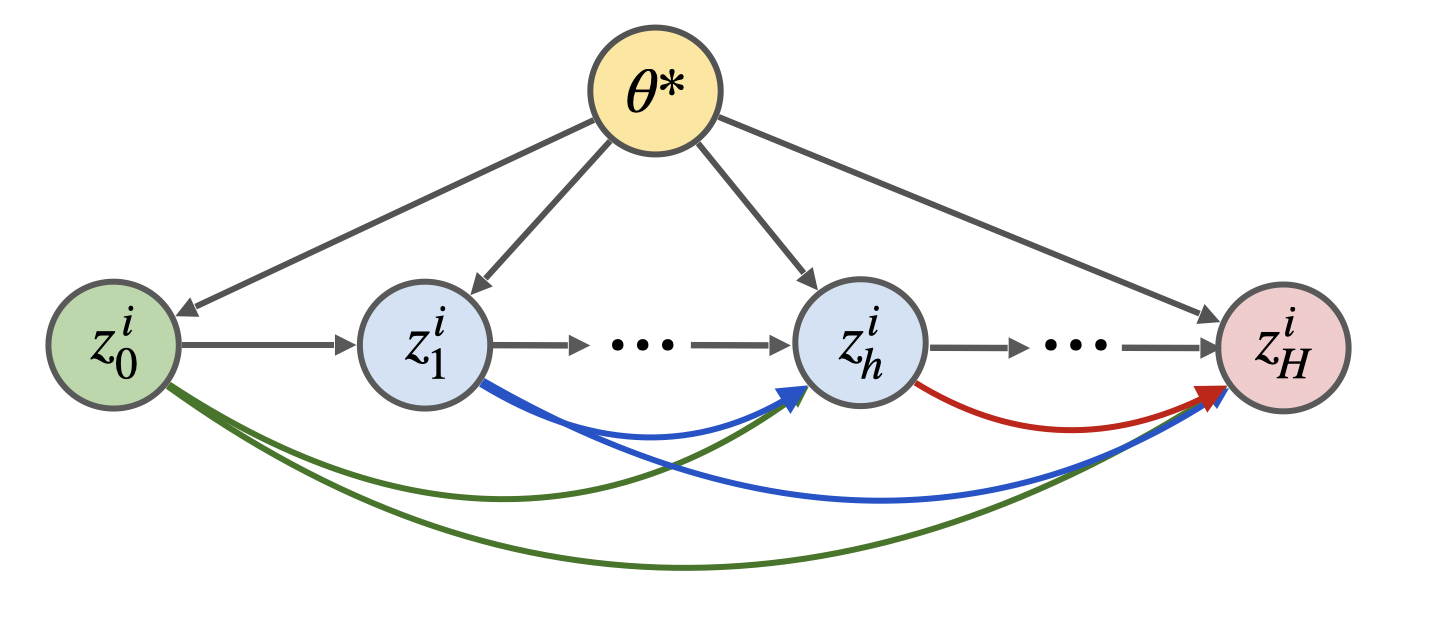}
    \caption{An illustration of the multi-step latent-variable model defined in \eqref{eq:latent_var_model}.
    According to this graphical model, for any $h \geq 1 $,  each step $z_h^i$ of $i$-th example depends on the previous steps $\{ z_{\ell}^i \}_{ \ell < h} $ and the hidden concept $\theta^*$. }
    \label{fig:latent_var_model}
\end{figure}

\revise{Furthermore, we assume that testing sequence has the ``ground truth'' distribution also given by 
\begin{align*}
    z_0^\mathrm{test} = f_{\theta^*}\big(\zeta^\mathrm{test}\big),  \qquad 
    z_h^\mathrm{test}    = F_{\theta^*}\big(z_0^{\mathrm{test}},\cdots, z_{h-1}^{\mathrm{test}}, \epsilon_h^\mathrm{test}\big),  \qquad \forall  1\leq h \leq H. 
\end{align*}
Thus, the model in \eqref{eq:latent_var_model} specifies the   conditional distributions $\PP(z_h^i \given z_0^i,z_1^i,\cdots,z_{h-1}^i, \theta^*)$ for all $i$ and $h$, and also for the test example $z_{0:H}^\mathrm{test}$.}{}


\begin{figure}[t]
    \centering
    \includegraphics[width = 0.94\textwidth]{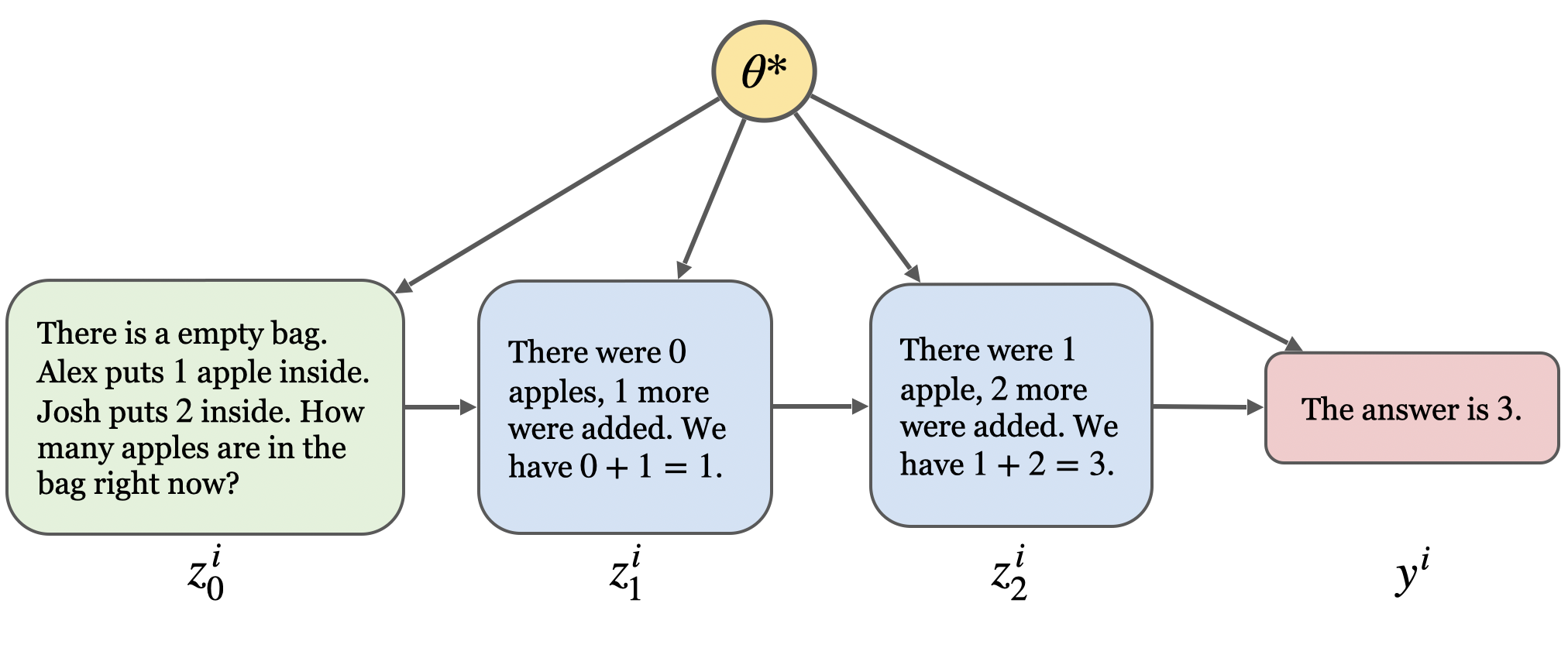}
    \caption{An instantiation of the model in~\eqref{eq:latent_var_model} in the context of arithmetic problems.
    Here $\theta^*$ stands for ``solving an arithmetic problem with intermediate steps'', and 
    $z_0^i$ describes a new arithmetic problem generated independently from any other demonstrations.
    Then each subsequent step, $z_1^i$, $z_{2}^i$, and $y^i$, depends on both the previous steps and the hidden task $\theta^*$.}
    \label{fig:interpret_latent_variable_model}
\end{figure}


%

Finally, note that setting $H=1$, we obtain a latent variable model for vanilla \ac{icl}, which is studied in \cite{wang2023large}.
Furthermore, our general model can be made more concrete by defining $\theta^{*}$ as a sequence of latent variables $\{\theta_h^*\}_{h\in [H]}$ characterizing the distribution of $z_{0:H}^i$, 
where the latent variables also have an autoregressive structure. 
Such a model is studied in \cite{jiang2023latent}. 
An limitation of our model in \eqref{eq:latent_var_model}  is that the $n$ demonstration examples  $\{s_i\}_{i=1}^n$ in $\pt_{\mathrm{CoT}}(n)$ are assumed to be i.i.d. 
In practice, the demonstration examples might be composed in a dependent manner which is beyond the i.i.d. assumption.
We will introduce a more general model Appendix~\ref{app: generalized model} which (i) includes latent variables for each reasoning process that are governed by a latent dynamical system, and (ii) allows the generation of the demonstration examples to be dependent. 

\subsection{Pretrained LLM + CoT Prompting} \label{subsec: pre-training and cot prompting under the statistical model}


The previous section proposes a latent variable model that captures the multi-step reasoning process of \ac{cot}.
Based on this model, we will formulate the estimator constructed by \ac{cot} prompting on a pretrained autoregressive \ac{llm} from a statistical perspective.

\vspace{2mm}

\noindent \textbf{Pretraining LLM.} We assume that the \ac{llm} is pretrained with data generated according to the model in \eqref{eq:latent_var_model}. Specifically, the \ac{llm} is pretrained on $N$ documents, where each document is generated by the model in \eqref{eq:latent_var_model} with a task-specific concept $\theta^{ *}_\ell  \overset{\mathrm{i.i.d}}{\sim} \pi$ for all $\ell \in [N]$.
Within each document $\ell \in [N]$, there are $T$ examples $\{s^{t,\ell}\}_{t=1}^T$ independently from the model in \eqref{eq:latent_var_model} with the same task $\theta^{ *}_\ell$, where $s^{t,\ell} = (z_0^{t,\ell}, \cdots, z_H^{t,\ell})$.
In other words, the training dataset contains $NT$ examples with diverse tasks. 

We let $\{\PP_{\rho} \given \rho\in \cP_{\mathrm{LLM}} \}$ denote the conditional distributions induced by the \ac{llm} with  parameters $\rho \in \cP_{\mathrm{LLM}}$, where $\cP_{\mathrm{LLM}}$ is the parameter space of \ac{llm} and will be specified in Section~\ref{subsec: imperfect}. Then pretraining the autoregressive \ac{llm} corresponds to finding the maximum likelihood estimator $\hat \rho$, i.e., 
\begin{align}\label{eq:pretraining_loss}
    \hat \rho = \argmin_{\rho \in \cP_{\mathrm{LLM}}} -\frac{1}{NT (H+1)}\sum_{\ell \in [N], t\in [T]} \sum_{h=0}^{H}\log \PP_\rho \big (z_{h}^{t,\ell} \given \Upsilon_{t-1,\ell},\{z_j^{t,\ell}\}_{j=0}^{h-1} \big ),
\end{align}
where $\Upsilon_{t,\ell} = \{s^{k, \ell}\}_{k \in [t] }$ is the first $t$ examples in the $\ell-$th document.
Given $\hat \rho$, we denote $\PP_\mathrm{LLM} = \PP_{\hat \rho}$ as the distribution induced by the pretrained \ac{llm} and write them interchangeably in the sequel.

\vspace{2mm}

{\noindent \bf CoT Prompting as an Estimator.}
After pretraining, we fix the parameter of the \ac{llm} as $\hat \rho$ and prompt the \ac{llm} with a \ac{cot} prompt $\pt_\mathrm{CoT} (n) = (\Upsilon_n,z_0^\mathrm{test})$. 
To connect the pretraining and prompting stages, we note that prompting a pretrained \ac{llm} with $\pt$ induces a conditional distribution $\PP_\mathrm{LLM}\big(\cdot \given \pt\big)$. 
When using a \ac{cot} prompt, we aim to induce the \ac{llm} to eventually \emph{generate a desired final answer} defined by \eqref{eq:latent_var_model}. 
The distribution of the final answer $y^\mathrm{test} = z_{H}^\mathrm{test}$ induced by the \ac{llm} via \ac{cot} reasoning is $\PP_\mathrm{LLM} (y^\mathrm{test} = \cdot \given \pt_\mathrm{CoT}(n) )$,
which is given by marginalizing out the intermediate steps $(z_1^{\mathrm{test}},\cdots, z_{H-1}^{\mathrm{test}})$ of \ac{cot}. 
To evaluate the statistical error of such an estimator, we consider  the Kullback-Leibler (KL) divergence 
\begin{align}
    \mathtt{err}_\mathrm{CoT}= \KL\Big(\PP\big(y^\mathrm{test}=\cdot \given \pt_\mathrm{CoT}(n),\theta^* \big) , \PP_\mathrm{LLM}\big(y^\mathrm{test}=\cdot \given \pt_\mathrm{CoT}(n)\big)\Big). \label{eq: err cot}
\end{align}
The error metric in \eqref{eq: err cot} is of particular interest to us, as our primary concern is the accuracy of the final result.
In the sequel, we consider $\theta^*\in\Theta$ to be fixed but unknown for our result.
Note that $\mathtt{err}_\mathrm{CoT}$ is a random variable where the randomness stems from $\pt_\mathrm{CoT}(n)$ during the prompting stage and the learned model parameters $\hat \rho$ during the pretraining.

\vspace{2mm}

\noindent \textbf{Error Decomposition.}
In the following, we briefly outline the error decomposition of the statistical error $\mathtt{err}_\mathrm{CoT}$ in~\eqref{eq: err cot}.
Intuitively, the statistical error has two sources:  a {\bf pretraining error} and a  {\bf   prompting error}. 
The pretraining error arises due to the finiteness of training data points and it decays to zero as $N$ increases. 
This pretraining error essentially is the statistical error of pretraining problem in \eqref{eq:pretraining_loss} and is irrelevant to the prompting stage. 
The prompting error reflects the error incurred by using $n$ examples to elicit the desired answer from the \ac{llm}.
Such an error appears even when the \ac{llm} is perfectly pretrained.
Intuitively, with more examples, the \ac{llm} has more information to infer the task $\theta^*$ and learn to generate the desired reasoning steps. 
Thus, the prompting error should decrease as $n$ increases. 
Moreover, the success of \ac{cot} prompting also depends on how well the examples in the prompt align with the testing query $z_0^\mathrm{test}$.
If the examples in the prompt are not informative enough for answering the testing query, the prompting error will be large.
Such an intuition is formalized by Lemma \ref{lem:error_decomp} in Section~\ref{subsec: perfect}, which shows that the prompting error can be further decomposed into two parts: a {\bf query error} and an {\bf in-context error}. 
The query error quantifies the distributional shift between the testing query and the examples in the prompt, and the in-context error quantifies the error due to the \ac{llm} not knowing the true task $\theta^*$ and having to make an inference based on the $n$ examples.

\begin{table}[t]
    \centering
    \begin{tabular}{|l|l|}
    \hline
    \textbf{Error Sources} & \textbf{Description}  \\
    \hline
    Pretraining error & Statistical error of the pretrained LLM  \\
    \hline
    Prompting Error & Combination of query error and in-context error  \\
    \hline
    Query error & Distributional shift between testing query $z_0^\textrm{test}$ and $n$  prompt examples  \\
    \hline
    In-context error & Statistical error of inferring  $\theta^*$  based on the $n$ prompt examples\\
    \hline
    \end{tabular}
    \caption{Summary of the three sources of errors in \ac{cot} prompting.  }
    \label{table:error_sources}
    \end{table}

\vspace{2mm}

\subsection{BMA Interpretation of CoT} \label{subsec: attention parameterizes bma}

In the following, 
we show that the \ac{cot} estimator $\PP_\mathrm{LLM}\big(\cdot \given \pt_\mathrm{CoT}(n) \big)$
can be understood as a Bayesian model averaging (BMA) estimator for the latent variable model in \eqref{eq:latent_var_model}.



\vspace{2mm}

\noindent \textbf{Pretrained \ac{llm} $+$  \ac{cot} $\approx$ \ac{bma}.} 
Recall that the pretraining process of \ac{llm} is given in \eqref{eq:pretraining_loss}, where the data is generated from the latent variable model in \eqref{eq:latent_var_model}. 
When $N$ and $T$ are sufficiently large, we expect the pretrained \ac{llm} to approximate the true distribution of the pretraining dataset well. 
Note that the tasks in the pretraining dataset are sampled from the prior $\pi$. 
When we replace $\PP_{\rho}$ in \eqref{eq:pretraining_loss} by the true data distribution, 
for any random document with $T$ examples, 
by Bayes' rule, we have that
\begin{align}\label{eq:bayes_factorization}
\PP\bigl (z_h^{t} = \cdot  \given \Upsilon_{t-1}, \{ z_{j}^{t}\}_{j=0}^{h-1}\big)  = \int_{\Theta} \PP\big(z_h^{t} = \cdot  \given \{ z_{j}^{t }\}_{j=0}^{h-1}, \theta\big) \cdot \pi\big(\theta \given \Upsilon_{t-1}, \{ z_{j}^{t}\}_{j=0}^{h-1}\big) \mathrm{d}\theta,
\end{align}
where $\Upsilon_{t-1} = \{s^{k}\}_{k \in [t-1] }$ contains the first $t-1$ examples in the document, and $\pi(\cdot \given \Upsilon_{t-1}, \{ z_{j}^{t}\}_{j=0}^{h-1})$ is the posterior distribution of the task. 
Here we use the fact that the examples are i.i.d. conditioning on the task.

Note that we expect that the pretrained \ac{llm} is approximately the same as the left-hand side of \eqref{eq:bayes_factorization}.
Based on \eqref{eq:bayes_factorization}, we can further marginalize the intermediate steps and obtain a similar factorization for $\PP_{\textrm{LLM}}(z_H^{t} \given \Upsilon_{t-1}, z_0^{t, \ell} )$.
Since the examples in the \ac{cot} prompt are generated from the same distribution as the pretraining data, we can set $t = n+1$ and get the following lemma.

\begin{lemma}\label{lem: bma.cot}
Let the pretraining data be generated according to the latent variable model specified in \eqref{eq:latent_var_model}.
Consider  the population counterpart of the MLE in \eqref{eq:pretraining_loss}, i.e., we let 
the number of documents $N$ goes to infinity.
Suppose that the \ac{llm}s have enough capacity, i.e., $\PP\in \{\PP_{\rho} \given \rho\in \cP_{\mathrm{LLM}}\}$, and the CoT prompt $\mathtt{prompt}_{\mathrm{CoT}}(n)$ has nonzero density under the pretraining distribution,
we have  
\begin{align*}
\PP_{\mathrm{LLM}}\big(y^\mathrm{test} = \cdot  \given \mathtt{prompt}_{\mathrm{CoT}}(n)\big)
    &= \int_{\Theta}\PP(y^\mathrm{test} = \cdot    \given  z_0^\mathrm{test},\theta)\pi\big(\theta \given \mathtt{prompt}_{\mathrm{CoT}}(n)\big) \mathrm{d}\theta. 
 \end{align*}
 \end{lemma}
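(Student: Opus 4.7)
The plan is to combine three ingredients: identifiability of the pretraining MLE in the population + realizable regime, Bayes' rule applied with the latent task $\theta$ as the hidden variable, and the conditional-independence structure of the multi-step latent variable model \eqref{eq:latent_var_model}. The rough outline is: first pass from $\PP_\mathrm{LLM}$ to the true conditional $\PP(\cdot \given \mathtt{prompt}_{\mathrm{CoT}}(n))$; then apply Bayes to introduce $\theta$; finally use conditional independence of the test sequence from the demonstrations given $\theta$, together with marginalization of the intermediate CoT steps.

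For the first step, as $N\to\infty$ the empirical log-likelihood in \eqref{eq:pretraining_loss} converges by the law of large numbers to the expected cross-entropy between the true document distribution $\PP$ and the model $\PP_\rho$. Under the realizability assumption $\PP\in\{\PP_\rho \given \rho\in\cP_{\mathrm{LLM}}\}$, minimizing this cross-entropy is equivalent to minimizing $\KL(\PP,\PP_\rho)$, so by Gibbs' inequality the minimizer must satisfy $\PP_{\hat\rho}(\cdot \given \mathtt{prompt})=\PP(\cdot \given \mathtt{prompt})$ for every prompt of positive $\PP$-density. Taking $T\geq n+1$, the initial segment of a random document has the same joint distribution as $\mathtt{prompt}_{\mathrm{CoT}}(n)$; the nonzero-density hypothesis then upgrades the almost-sure equality into pointwise equality at this specific prompt, yielding
\begin{align*}
\PP_\mathrm{LLM}\big(y^\mathrm{test}=\cdot \given \mathtt{prompt}_{\mathrm{CoT}}(n)\big) \;=\; \PP\big(y^\mathrm{test}=\cdot \given \mathtt{prompt}_{\mathrm{CoT}}(n)\big).
\end{align*}

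For the second step, conditioning on the latent task and applying Bayes' rule gives
\begin{align*}
\PP\big(y^\mathrm{test}=\cdot \given \mathtt{prompt}_{\mathrm{CoT}}(n)\big)
\;=\;\int_\Theta \PP\big(y^\mathrm{test}=\cdot \given \mathtt{prompt}_{\mathrm{CoT}}(n),\theta\big)\,\pi\big(\theta \given \mathtt{prompt}_{\mathrm{CoT}}(n)\big)\,\mathrm{d}\theta.
\end{align*}
By the graphical model in Figure~\ref{fig:latent_var_model}, conditional on $\theta$ the test sequence $z_{0:H}^\mathrm{test}$ is independent of the demonstrations $\Upsilon_n$, and $y^\mathrm{test}=z_H^\mathrm{test}$ depends on the past only through $(z_0^\mathrm{test},\ldots,z_{H-1}^\mathrm{test},\theta)$; marginalizing the intermediate steps (which is precisely the autoregressive generation in \eqref{eq:cot_generation}) reduces the inner factor to $\PP(y^\mathrm{test}=\cdot \given z_0^\mathrm{test},\theta)$, matching the form in the claim. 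The main technical subtlety lies in the first step: upgrading the almost-sure identification of $\PP_{\hat\rho}$ with $\PP$ to a pointwise identification at the specific prompt $\mathtt{prompt}_{\mathrm{CoT}}(n)$; this is precisely what the nonzero-density assumption is designed to handle, via the same factorization already exhibited in \eqref{eq:bayes_factorization}. Once this is in place, the remaining Bayes-plus-independence step is mechanical.
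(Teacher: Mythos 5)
Your overall route is the same as the paper's: population MLE plus realizability gives next-step conditional matching on positive-density prompts, then Bayes' rule with the conditional independence of the test example given $\theta$ yields the BMA form. The Bayes step at the end is indeed mechanical and you handle it correctly.

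The gap is in the first step, and you mislocate where the difficulty actually sits. The pretraining objective \eqref{eq:pretraining_loss} only identifies the conditional distribution of the \emph{next reasoning step}: from the KL decomposition of the population loss you get $\PP_{\mathrm{LLM}}(z_h = \cdot \given S) = \PP(z_h = \cdot \given S)$ for every prefix $S$ with positive density under $\PP$. But $y^{\mathrm{test}} = z_H^{\mathrm{test}}$ is $H$ autoregressive generations away from $\mathtt{prompt}_{\mathrm{CoT}}(n)$, so the equality $\PP_{\mathrm{LLM}}(y^{\mathrm{test}} = \cdot \given \mathtt{prompt}_{\mathrm{CoT}}(n)) = \PP(y^{\mathrm{test}} = \cdot \given \mathtt{prompt}_{\mathrm{CoT}}(n))$ does not follow from the one-step identity at the base prompt alone; you need it at every extended prompt $\mathtt{prompt}_{\mathrm{CoT}}(n) \cup \{z_1,\ldots,z_{h-1}\}$ visited during generation, and those identities are only guaranteed where the extended prompt has positive density. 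The missing argument is the recursion the paper carries out: since the LLM's first-step conditional equals the true one, it is supported on the set $\tilde{\cL}_1$ of steps with positive conditional density; for each $z_1 \in \tilde{\cL}_1$ the extended prompt again has positive density, so the second-step identity applies there, and so on; the LLM never leaks mass onto zero-density continuations, and the $H$-fold product of matching conditionals integrates to the matching marginal of $y^{\mathrm{test}}$. Your write-up asserts the marginal equality immediately and frames the subtlety as ``upgrading a.s.\ to pointwise at the specific prompt,'' which the nonzero-density hypothesis resolves trivially; the real work is the support-propagation through the intermediate steps, which is the bulk of the paper's proof and is absent from yours. The gap is repairable, but as written the central identity is asserted rather than proved.
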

 A detailed proof of this lemma is deferred to  Appendix~\ref{proof: bma}. This lemma implies that  \ac{cot} prompting based on a perfectly pretrained \ac{llm} performs \ac{bma}.
That is, the \ac{cot} estimator is constructed in three steps: (i) the \ac{llm} first constructs a posterior of the task $\theta$, then (ii)for each task $\theta$, the \ac{llm} predicts the final answer $y^\mathrm{test}$ based on the prompt, and (iii) finally, the \ac{llm} aggregates the predictions over the posterior of the task $\theta$.
Such a \ac{bma} interpretation is also established for vanilla \ac{icl} in \cite{zhang2023and}, which is recovered by our result when setting $H = 1$. 
We provide a detailed proof of this result and extend it to the more complicated model in Appendix~\ref{proof: bma}.

\subsection{Attention Approximately Parameterizes BMA} \label{subsec:attention_parameterize_bma}

We now show that the attention mechanism in the transformer architecture is able to encode the \ac{bma} algorithm for a special case of the latent variable model in \eqref{eq:latent_var_model}.

\vspace{2mm}

{\noindent \bf A Simplified Model.} In this special case, we Let $f_{\theta^{*}} $ in \eqref{eq:latent_var_model} be a function independent of $\theta^*$, i.e., the inputs do not depend on $\theta^*$. 
Moreover, we assume that $F_{\theta^*}$ in \eqref{eq:latent_var_model} encodes a linear model in the latent space. 
Specifically, 
for any $ h \in [H]$, let $d_k$ and $d_{v}$ be two  integers and let  
$k \colon \cL  \rightarrow \RR^{d_k} $ and  be $v \colon \cL  \rightarrow \RR^{d_v} $ be two feature mappings that maps each reasoning step $z_h^i$ to vectors. 
Moreover, assume $v$ is invertible. 
Then, we assume each reasoning step $z_{h}^i$ is generated from a Gaussian linear model with another feature mapping $ \phi$:
\begin{align}
    z_h^i & = v^{-1}\big(\theta^* \phi (k_h^i)+\epsilon_h^i \big)= F_{\theta^*} \big(z_0^i, z_1^i,\cdots,z_{h-1}^i,   \epsilon_h^i \big), \qquad \forall h \in [H]. \label{eq: linear model}
\end{align}
Here we define  
$k_h^i = \big[k(z_0^i), k(z_1^i),\cdots, k(z_{h-1}^i),0,\cdots,0 \big]   \in \RR^{H\cdot d_k}$ as the features of the first $h-1$ steps of the $i$-th example, and we pad $(H-h)$ zero vectors to ensure that $\{ k_{h}^i \}_{h\in[H]}$ live in the same Euclidean space. 
Moreover, $\phi \colon \RR^{H\cdot d_k } \rightarrow \RR^{d_{\phi}} $  is another feature mapping
that maps $k_h^i$ to some Euclidean space, where $\theta^* \in \RR^{d_v\times d_\phi}$ is a linear operator. 
The simplified model in \eqref{eq: linear model} thus postulates that $k_h^i$ and $v(z_h^i)$ satisfy a kernelized linear model. 
Moreover, we assume the  noise $\epsilon_h^i \overset{\iid}{\sim} \cN(0, \sigma^2)$ are i.i.d. and independent of everything else.

We note that our theoretical result in this section only relies on the invertibility of $v$ and that there exist feature maps $v$, $k$, and $\phi$ such that the model in \eqref{eq:latent_var_model} admits a simpler form as in \eqref{eq: linear model}.
This model specifies a linear dynamical system in the feature space.  
Thanks to the flexibility of these feature maps, this model captures a rich class of distributions.

\vspace{2mm}
{\bf \noindent The \ac{bma} Estimator.} 
To study the \ac{bma} estimator under this model, we further impose a Gaussian prior over $\theta^*$. 
Specifically, we assume that the entries of $\theta^*$ are i.i.d. with prior distribution $\cN(0, \lambda)$ for some fixed  $\lambda > 0$.
Based on the $n$ examples in the \ac{cot} prompt  $\pt_{\mathrm{CoT}}(n)$, 
we define $V^n = (v(z_h^i))_{h=1, i=1}^{H,n} \in \RR^{d_v \times Hn}$ and $K^n = (k_h^i)_{h=1, i=1}^{H,n} \in\RR^{d_k \times Hn}$ and let $\phi (K^n)$ denote the $\RR^{d_{\phi} \times H  n}$ feature matrix induced by $K^n$. 
Under the simplified model, the inputs $\{z_0^{i}\}_{i\in [n]}$ and $z_0^{\textrm{test}}$ do not contain information about $\theta^*$. 
Thus, conditioning on   $\pt_{\mathrm{CoT}}(n)$, the posterior distribution of $\theta^*$ is a Gaussian distribution, centered at the ridge estimator 
\begin{align*}
    \bar \theta^n = V^n \phi(K^n)^\top \big(\phi(K^n)\phi(K^n) ^\top  +\sigma^2/\lambda \cdot I    \big)^{-1},
\end{align*}
where $I $ is the identity matrix of size $\RR^{d_{\phi} \times d_{\phi} } $.

Given any $\theta \in \Theta$ as an estimate of $\theta^{*}$  and $z_0^\textrm{test}$, to predict $y^\textrm{test} = z_{H}^\textrm{test}$ according to the linear model in  \eqref{eq: linear model},
it suffices to generate $\{ v_h^\mathrm{test} = v(z_h^\mathrm{test}) \}_{h\in[H]}$ autoregressively. 
Specifically, for any $h  \geq 0$, conditioning on $\{ z_{0}^\mathrm{test}, \ldots, z_{h-1}^\mathrm{test}\}$, the   distribution of $v_{h}^\mathrm{test} $ is  $\cN( \theta \phi (k_h^\mathrm{test}), \sigma^2 I) $ where  we define $k_{h}^\mathrm{test}$ as 
\begin{align}\label{eq:define_query}
k_{h}^\mathrm{test}=\big(k(z_0^\mathrm{test}), k(z_1^\mathrm{test})\cdots, k(z_{h-1}^\mathrm{test}),0\cdots,0 \big)   . 
\end{align}
Therefore, to get the \ac{bma} estimator, we aggregate the distribution of $v_h^\mathrm{test}$ according to the posterior distribution of $\theta$, and return the mean value as the predictor, which is given by 
\begin{align}\label{eq:bma_estimator}
\bar v_h^\mathrm{test} = V^n \phi(K^n)^\top \big(\phi(K^n) \phi(K^n)^\top+  \sigma^2/\lambda \cdot I  \big)^{-1}  \phi(k_{h}^\mathrm{test}). 
\end{align}
The final \ac{bma} estimator is given by $\{ v^{-1} ( \bar v_h^\mathrm{test})\}_{h\in [ H]} $.

\vspace{2mm}
{\noindent \bf Estimator Produced by Transformer.} In the following, we introduce another autoregressive estimator based on a transformer with softmax attention. 
Transformer is a mapping that maps a sequence of vectors to another sequence of vectors and the mapping involves three components.
In particular, we pack the $n$ examples in the prompt as a sequence of $Hn$ vectors, followed by the test instance.

Our transformer is a composition of a copy head,  a softmax attention layer, and a position-wise transformation. 
The copy head takes the original sequence as input, and copies the previous reasoning steps within the same example at each position. 
Specifically, for any 
example $i$ and any $h$, 
the output of the copy head is $(z_0^i, \ldots, z_{h}^i, 0, \ldots, 0)$, where $H-h+1$ zeros so that the output vectors have the same dimension. The same operation is done for the test instance. 
Such a copy head can be explicitly constructed in theory using standard transformer architectures \citep{feng2023towards} and is also shown to emerge in various empirical works \citep{olsson2022context, von2023transformers}. 

The output of the copy head is then passed to a standard softmax attention layer, which involves the construction of keys, queries, values, and their calculations. 
Moreover, these three quantities are defined for each position. 
Specifically, for each $(i, h)$, we define both the key and query as $k_h^i$ in \eqref{eq: linear model}, and the value as $v_h^i = v(z_h^i)$, where $v$ is the feature map appearing in \eqref{eq: linear model}.  
Moreover, for the test example, 
for each $h \geq 0$, we define the query as $q_h^\mathrm{test} = \phi(k_h^\mathrm{test})$, which is used to attend to the keys of the $n$ examples, aggregate the corresponding values, and get the output. 
More concretely, we define the attention output given $\pt_{\mathrm{CoT}}(n)$ and $h-1$ intermediate outputs $z_1^{\mathrm{test}}, \ldots, z_{h-1} ^{\mathrm{test}}$, the output of the softmax attention is given by 
\begin{align} \label{eq:attention}
\mathtt{attn} (q_h^\mathrm{test}, \mathtt{keys} , \mathtt{values}) = \sum_{i \in [n], \ell   \in [H]} \frac{   \exp( \langle q_h^\mathrm{test} , k_\ell ^i\rangle ) \cdot v_{\ell}^i  } {    \sum_{i '\in [n], \ell'   \in [H]}   \exp( \langle q_h^\mathrm{test} , k_{\ell'} ^{i'}\rangle )}.
\end{align}
Finally, the output is passed through a transformation function $v^{-1} $, which yields 
$$
z_{h}^\mathrm{test} = v^{-1} ( \mathtt{attn} (q_h^\mathrm{test}, \mathtt{keys} , \mathtt{values})  ) , \qquad \forall h \in [H].
$$
This newly generated $z_{h}^\mathrm{test}$ is then used to compute the query $q_{h+1}^{\mathrm{test}}$, which is then used for generating $z_{h+1}^{\textrm{test}}$, and so on. 
See Figure  \ref{fig:interpret_attn} for an illustration of this transformer.

\begin{figure}[h]
    \centering
    \includegraphics[width = 0.94\textwidth]{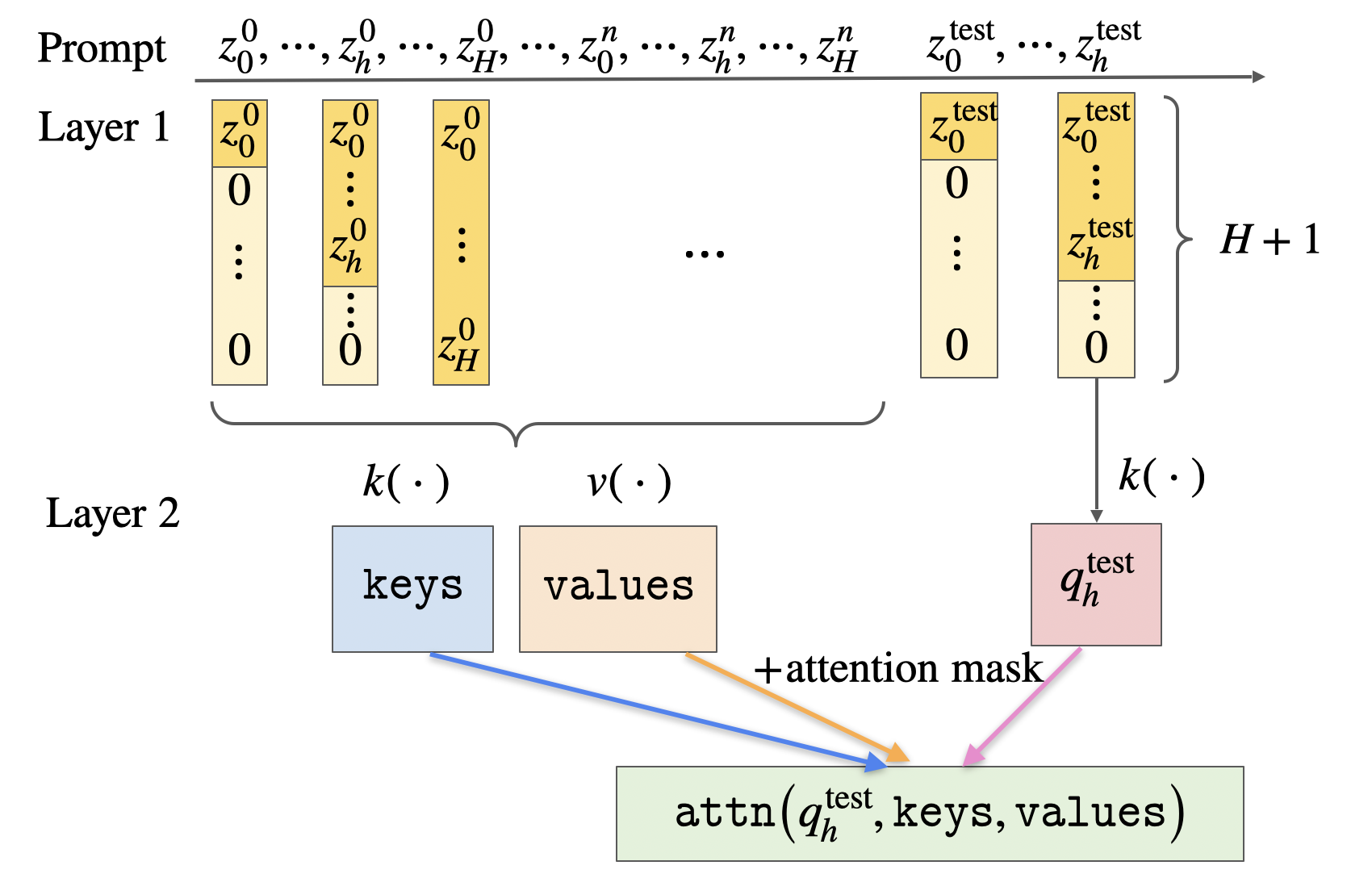}
    \caption{An illustration of how \ac{cot} is represented by a two-layer transformer. Layer 1 serves as a copy head, which copies the previous steps $\{z_j^i\}_{j=1}^{h-1}$ to the current position $z_h^i$. Next, the feature mappings $v$ and $k$ map the outputs of Layer 1  to values and keys, respectively. During the generation of $z_{h+1}^\mathrm{test}$, the attention mechanism takes in key and value matrices $\mathtt{keys}$ and $\mathtt{values}$ from the demonstrations to predict the result for query $q_h^\mathrm{test}$, where $\mathtt{keys}$ and $\mathtt{values}$ are formed by stacking $\{k_h^i\}_{i=1,h=1}^{n, H}$ and  $\{v_h^i\}_{i=1,h=1}^{n, H}$, respectively. Note that $\mathtt{keys}$ and $\mathtt{values}$ do not contain keys and values computed from the generated reasoning steps $\{z_j^\mathrm{test}\}_{j=1}^h$. We can achieve this by masking out the corresponding positions. }
    \label{fig:interpret_attn}
\end{figure}

In the following proposition, we prove that under certain conditions, the \ac{bma} estimator coincides with the transformer output up to a scaling factor when $n$ goes to infinity.

\begin{proposition}\label{prop: attn}
We assume the feature mappings $k$ and $v$ take bounded values  and  $\|v(z)\|_2 = 1$ for all input $z \in \cL$. 
Besides, let 
 $\phi$ in \eqref{eq: linear model} be a feature map with finite dimension. 
Then, there exists an absolute constant $C$, and parameter $\lambda=n^{-2/3}$ such that for any fixed $k_{h}^{\text{test}}$, the \ac{bma} estimator in \eqref{eq:bma_estimator} and the attention output in \eqref{eq:attention} coincide as $n$ goes to infinity up to a scaling factor. 
That is, we have 
$$
\lim_{n \to \infty} \max_{h\in[H]} \| \bar v_h^\mathrm{test}  - C\cdot\Att(q_h^{\mathrm{test}}, \mathtt{keys}, \mathtt{values}) \|_2  = 0.
$$
\end{proposition}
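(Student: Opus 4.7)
The plan is to compute the $n\to\infty$ limits of both the BMA estimator in \eqref{eq:bma_estimator} and the softmax attention output in \eqref{eq:attention}, then match them up to the scaling factor $C$, with the convergence made uniform over the finite index $h\in[H]$ by a union bound.

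\textbf{Step 1 (BMA limit).} Applying the push-through identity and extracting the natural $Hn$ scale, rewrite the estimator as
\begin{equation*}
\bar{v}_{h}^{\mathrm{test}} \;=\; \hat{M}_{n}\Bigl(\hat{\Sigma}_{n}+\tfrac{\sigma^{2}}{Hn\lambda}I\Bigr)^{-1}\phi(k_{h}^{\mathrm{test}}),
\end{equation*}
with $\hat{M}_{n}=\tfrac{1}{Hn}V^{n}\phi(K^{n})^{\top}$ and $\hat{\Sigma}_{n}=\tfrac{1}{Hn}\phi(K^{n})\phi(K^{n})^{\top}$. With $\lambda=n^{-2/3}$, the ridge correction $\sigma^{2}/(Hn\lambda)=\sigma^{2}/(Hn^{1/3})$ vanishes. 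By the SLLN (valid because $k,v,\phi$ are bounded and $\phi$ is finite dimensional), $\hat{\Sigma}_{n}\to\Sigma:=\E[\phi(K)\phi(K)^{\top}\mid\theta^{*}]$ and $\hat{M}_{n}\to M:=\E[V\phi(K)^{\top}\mid\theta^{*}]$. Substituting $V=\theta^{*}\phi(K)+\epsilon$ from \eqref{eq: linear model} gives $M=\theta^{*}\Sigma$, so $\bar{v}_{h}^{\mathrm{test}}\to\theta^{*}\phi(k_{h}^{\mathrm{test}})$.

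\textbf{Step 2 (softmax attention limit).} Dividing numerator and denominator in \eqref{eq:attention} by $Hn$ and again applying the SLLN gives
\begin{equation*}
\Att(q_{h}^{\mathrm{test}},\mathtt{keys},\mathtt{values}) \to \theta^{*}\,\nabla_{q}\log\psi(q)\big|_{q=\phi(k_{h}^{\mathrm{test}})},\qquad \psi(q):=\E\!\bigl[\exp(\langle q,\phi(K)\rangle)\bigr].
\end{equation*}
Here I use $V=\theta^{*}\phi(K)+\epsilon$ with independent noise $\epsilon$ to pull $\theta^{*}$ out of the expectation, and boundedness of $\phi$ to ensure the MGF and its gradient exist.

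\textbf{Step 3 (matching and conclusion).} For the two limits to coincide up to a single scalar $C$, it is necessary and sufficient that $\nabla_{q}\log\psi(q)=q/C$, i.e., that the cumulant generating function of $\phi(K)$ is quadratic. Taylor-expanding $\log\psi$ around the origin, the linear term vanishes under the centering $\E[\phi(K)]=0$ and the quadratic Hessian equals $\Sigma$; identifying $\Sigma$ with $I/C$ under the isotropy/normalization built into $\phi$ fixes the absolute constant $C$. The boundedness of $\phi$ controls all higher-order cumulants uniformly in $q$, so their contribution is absorbed in a residual that vanishes with $n$. Combined with the $O(n^{-1/3})$ error from Step 1 and a finite union bound over $h\in[H]$, this yields $\lim_{n}\max_{h}\|\bar{v}_{h}^{\mathrm{test}}-C\cdot\Att(q_{h}^{\mathrm{test}},\mathtt{keys},\mathtt{values})\|_{2}=0$.

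The main obstacle is the structural mismatch between the exponential-kernel softmax and the linear-kernel ridge regression underlying BMA: their agreement forces the feature $\phi(K)$ to have an essentially quadratic log-MGF, effectively an isotropic-Gaussian-like condition. Isolating exactly which part of the bounded, finite-dimensional feature hypothesis activates this structure, and verifying that $\lambda=n^{-2/3}$ is the sharp schedule balancing the $O(n^{-1/3})$ ridge residual from Step 1 against the stochastic $O(n^{-1/2})$ fluctuation of the SLLN in Step 2, will be the delicate bookkeeping in the formal write-up.
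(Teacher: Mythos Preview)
Your Steps 1 and 2 are broadly correct and parallel the paper: both you and the paper show that $\bar v_h^{\mathrm{test}}\to\theta^*\phi(k_h^{\mathrm{test}})$ (the paper does this with explicit concentration bounds rather than the SLLN, and the choice $\sigma^2/\lambda=(nH)^{2/3}$ there is consistent with your $\lambda=n^{-2/3}$).

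The genuine gap is Step 3. Your matching condition $\nabla_q\log\psi(q)=q/C$ is equivalent to the cumulant generating function of $\phi(K)$ being exactly quadratic, i.e., $\phi(K)$ being Gaussian. None of the stated hypotheses (boundedness of $k,v$, finite-dimensional $\phi$, $\|v(z)\|_2=1$) imply this; the centering $\E[\phi(K)]=0$ and isotropy $\Sigma=I/C$ you invoke are not among the assumptions, and even with them a bounded random vector has nonvanishing higher-order cumulants. Those cumulants are deterministic features of the population law of $\phi(K)$: they do not depend on $n$, so they cannot be ``absorbed in a residual that vanishes with $n$.'' As written, your Step 3 would fail for generic $\phi$.

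The paper sidesteps this obstruction by working in $v$-space rather than $k$-space. It introduces a kernel conditional density
\[
\hat\PP_{\cV\mid\cK}(v\mid q)\;\propto\;\sum_{i,h}\exp\bigl(\langle k_h^i,q\rangle\bigr)\exp\bigl(\langle v_h^i,v\rangle\bigr),
\]
and uses the assumption $\|v(z)\|_2=1$ together with the spherical identity $\int_{\mathbf{S}^{d_v-1}} v\,\exp(\langle b,v\rangle)\,\mathrm{d}v=C_1\, b$ for $b\in\mathbf{S}^{d_v-1}$ (stated as a separate lemma, borrowed from \cite{zhang2023and}) to show that $\int_{\mathbf{S}^{d_v-1}} v\,\hat\PP_{\cV\mid\cK}(v\mid q)\,\mathrm{d}v$ equals a fixed constant times $\Att(q,\mathtt{keys},\mathtt{values})$. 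The absolute constant $C$ is therefore a \emph{geometric} artifact of the unit-sphere constraint on the values, produced by the spherical integral; it has nothing to do with the law of the keys. The paper then appeals to consistency of this kernel conditional density (Nadaraya--Watson) estimator to conclude that the attention output converges, up to that constant, to $\E[\cV\mid\cK=q]=\theta^*\phi(q)$, which matches Step~1. The missing idea in your proposal is precisely this $v$-space spherical-kernel argument, which is where the hypothesis $\|v(z)\|_2=1$ is actually used.
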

This proposition shows that there exists a special model satisfying \eqref{eq:latent_var_model} (the model in \eqref{eq: linear model}) such that the \ac{bma} estimator of this model can be approximately implemented by a transformer. 
Furthermore, 
to see why such a proposition can be expected, 
we note that the attention output in \eqref{eq:attention} corresponds to the Nadaraya–Watson Kernel regressor \citep{hastie2009elements} with an exponential kernel, where we regress the value on the key, and predict on the query.
Moreover, the \ac{bma} estimator in \eqref{eq:bma_estimator} corresponds to ridge regression. 
These two estimators are both consistent and thus converge to the same thing as $n$ goes to infinity. 
Furthermore, we will provide a detailed proof in Appendix~\ref{proof: prop attn}.  This proof generalizes a similar result in \cite{zhang2023and} for vanilla \ac{icl} by handling the multi-step autoregressive structure of \ac{cot} prompting. Note that we assume that the feature mappings \( k \) and \( v \) take reasoning steps as input, making the \(\mathtt{keys}\) and \(\mathtt{values}\) specific to reasoning steps rather than tokens. However, this can be easily generalized to tokens.

\section{Statistical Errors of CoT Prompting} \label{subsec: perfect}


In this section, we study the error incurred during the prompting stage. 
We first state an error decomposition result and then study the vanilla \ac{cot} prompting in Section~\ref{subsec: Statistical Rates of Vanilla cot}. 
Then we extend the theory to three variants of \ac{cot} in Section~\ref{subsubsec: variants} and compare \ac{cot} with vanilla \ac{icl} in Section~\ref{subsubsec: comparison}. 
Regarding the four questions raised in the introduction, this section answers Question (b) partly and Question (d).

\subsection{Statistical Errors of Vanilla CoT } \label{subsec: Statistical Rates of Vanilla cot}

Recall that we define the statistical error induced by the \ac{cot} prompting $\mathtt{err}_\mathrm{CoT}$ in equation~\eqref{eq: err cot} and the error comes from both pertaining and prompting stages, as listed in table~\ref{table:error_sources}.
We explicitly decompose these two error sources as follows. To this end, we first state a regularity condition for the pretrained $\ac{llm}$. 
Before we proceed, let us define the following partial prompt $\pt^h_{\mathrm{CoT}}(i) $.
For any integers $i \in [0, n-1]$ and $h \in [H]$, we let 
$\pt^h_{\mathrm{CoT}}(i) = \{ s^j \}_{j\leq i} \cup \{ z_{0}^{i+1}, \ldots, z_{h-1}^{i+1}\}$. 
That is,  $\pt^h_{\mathrm{CoT}}(i)$ contains the first $i$ demonstration examples and the first $h$ steps of the $(i+1)$-th example. Let \(\theta^*\) denote the target task. In Section~\ref{subsec: perfect}, we assume that the prompt is generated from the ground truth distribution, meaning that \(\mathtt{prompt}_\mathrm{CoT}(n) \sim \PP(\cdot \mid \theta^*)\).


\begin{assumption} \label{assump: density bd}

We assume there exists  a positive number  $b^*$  such that for any  $0\leq h\leq H$, $0\leq i\leq n-1$, and $\pt^h_{\mathrm{CoT}}(i)\in\cL^{*}$, we have for the data distribution $\PP$ and the pretrained model $\PP_{\mathrm{LLM}}$ that
\begin{align*}
  \sup_{z \in \cL }   \Bigl|\log \PP(z^{i+1}_h = z  \given \pt^h_{\mathrm{CoT}}(i))  -\log \PP_\mathrm{LLM}(z^{i+1}_h = z \given \pt^h_{\mathrm{CoT}}(i)) \Big| \leq b^*. 
\end{align*}
\end{assumption}

This assumption 
postulates that the true distribution $\PP$  of the model in \eqref{eq:latent_var_model} and that learned by the \ac{llm} are close. 
The proximity is measured in terms of the log likelihood. 
We will justify the existence of $b^*$  in Section~\ref{subsec: imperfect} under explicit assumptions on pretraining.

\begin{lemma}[\ac{cot} Error Decomposition]\label{lem:error_decomp}

    Under Assumption \ref{assump: density bd}, the statistical error $\mathtt{err}_\mathrm{CoT}$ in \eqref{eq: err cot} can be upper bounded by the sum of a pretraining error and a  prompting error, i.e., 
\begin{align*}
\mathtt{err}_\mathrm{CoT} \leq \mathtt{err}_\mathrm{pre}(\PP,\PP_{\hat \rho};\pt_\mathrm{CoT}(n)) + \mathtt{err}_\mathrm{prompt}(\PP,\theta^*,\pt_\mathrm{CoT}(n)).
\end{align*}
where we define the prompting error as
\begin{align}
&\mathtt{err}_\mathrm{prompt}(\PP,\theta^*,\pt_\mathrm{CoT}(n)) \nonumber \\
&\quad=  \KL\bigl(\PP(y^{\mathrm{test}}= \cdot \given z_0^{\mathrm{test}} ,\theta^*) , \PP(y^{\mathrm{test}}= \cdot \given \pt_{\mathrm{CoT}}(n)) \bigr) \nonumber\\
&\quad\qquad + 2\sqrt{2}Hb^* \cdot   \KL^{1/2}  \bigl(\PP(y^{\mathrm{test}}= \cdot \given z_0^{\mathrm{test}} ,\theta^*), \PP(y^{\mathrm{test}}= \cdot \given \pt_{\mathrm{CoT}}(n))\big) , \label{eq: err pt}
    \end{align}
and the pretraining error as
\begin{align}
\mathtt{err}_\mathrm{pre}(\PP,\PP_{\hat \rho};\pt_\mathrm{CoT}(n))&=\KL\bigl(\PP(y^{\mathrm{test}}= \cdot \given \pt_{\mathrm{CoT}}(n)) , \PP_{\hat \rho}(y^{\mathrm{test}}= \cdot \given \pt_{\mathrm{CoT}}(n)) \bigr). \label{eq: err pre}
    \end{align}
\end{lemma}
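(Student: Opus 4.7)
The plan is to insert the intermediate distribution $P_2 := \PP(y^{\mathrm{test}} = \cdot \given \pt_{\mathrm{CoT}}(n))$ between the task-conditional $P_1 := \PP(y^{\mathrm{test}} = \cdot \given \pt_{\mathrm{CoT}}(n),\theta^*)$ (which equals $\PP(y^{\mathrm{test}} = \cdot \given z_0^{\mathrm{test}},\theta^*)$ because the test example is independent of the demonstrations conditional on $\theta^*$) and the pretrained-LLM output $Q := \PP_{\hat\rho}(y^{\mathrm{test}} = \cdot \given \pt_{\mathrm{CoT}}(n))$. Splitting $\log(p_1/q) = \log(p_1/p_2) + \log(p_2/q)$ and integrating against $P_1$ yields the identity
\begin{align*}
\KL(P_1 \,\|\, Q) \;=\; \KL(P_1 \,\|\, P_2) \;+\; \KL(P_2 \,\|\, Q) \;+\; \int (p_1 - p_2)\,\log\frac{p_2}{q}\,\mathrm{d}y.
\end{align*}
The first term on the right is the leading $\KL$ contribution to $\mathtt{err}_{\mathrm{prompt}}$ and the second is exactly $\mathtt{err}_{\mathrm{pre}}$, so it only remains to absorb the cross term into the square-root correction in \eqref{eq: err pt}.

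To bound the cross term, I would combine the elementary $L^1$--$L^\infty$ inequality $|\int(p_1-p_2)f| \leq 2\|P_1-P_2\|_{\mathrm{TV}}\cdot\|f\|_\infty$ with Pinsker's inequality, reducing the task to a uniform control on $\|\log(p_2/q)\|_\infty$. For the latter, the key step is to translate the per-step log-density closeness of Assumption~\ref{assump: density bd} into a bound on a ratio of $H$-step marginals. Using the chain-rule factorization
\begin{align*}
p_2(y) \;=\; \sum_{z_{1:H-1}} \prod_{h=1}^{H}\PP\bigl(z_h \given \pt_{\mathrm{CoT}}(n),\, z_{1:h-1}\bigr), \qquad q(y) \;=\; \sum_{z_{1:H-1}} \prod_{h=1}^{H}\PP_{\mathrm{LLM}}\bigl(z_h \given \pt_{\mathrm{CoT}}(n),\, z_{1:h-1}\bigr),
\end{align*}
with $z_H = y$, Assumption~\ref{assump: density bd} bounds each factor ratio by $e^{b^*}$, so every summand ratio is sandwiched in $[e^{-Hb^*}, e^{Hb^*}]$. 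A positive-cone argument (if $a_z/b_z \in [e^{-M}, e^{M}]$ for every $z$, then $(\sum_z a_z)/(\sum_z b_z)$ inherits the same bounds) transfers the estimate from summands to marginals, giving $\|\log(p_2/q)\|_\infty \leq H b^*$.

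Combining the two pieces yields the cross-term bound of the form $(\mathrm{const})\cdot H b^*\cdot \KL^{1/2}(P_1\,\|\,P_2)$, and tracking constants carefully through Pinsker produces the advertised coefficient $2\sqrt{2}\,Hb^*$. The main obstacle is this marginalization step: naively, the log-ratio of $H$-step marginals need not be controlled pointwise by the per-step assumption (marginalizing collapses conditioning information), and without the positive-cone transfer one would be forced to either bound the cross term in terms of the joint distribution over intermediate steps (introducing dependence on $|\cL|^{H-1}$) or to route through a looser Hellinger/$\chi^2$ inequality. All other steps---the KL identity, Pinsker, and the $L^1$--$L^\infty$ duality---are routine once this uniform bound is in hand.
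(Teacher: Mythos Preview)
Your proposal is correct and follows essentially the same route as the paper: the same three-term KL identity, the same marginalization step (your ``positive-cone argument'' is what the paper invokes as the generalized mediant inequality) combined with the chain-rule factorization and Assumption~\ref{assump: density bd} to get $\|\log(p_2/q)\|_\infty \le Hb^*$, and then the same TV/Pinsker bound on the cross term. Your identification of the marginalization step as the only nontrivial ingredient is exactly right.
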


We provide a detailed proof of this lemma in Appendix~\ref{proof: err cot}. 
Note that the prompting error is defined on the distribution $\PP$, which corresponds to the perfectly pretrained \ac{llm}, and thus this error is independent of the pretraining of \ac{llm}s. 
In the following, we focus solely on the prompting error. 
Moreover, we assume that $z_0^{\mathrm{test}}$ has the same distribution as $\{ z_0^i \}_{i\in [n]}$ for simplicity, i.e., the query error is zero. We will allow a distributional shift in the next section. 

With no distributional shift in $z_0^\mathrm{test}$, we can essentially regard the test instance as the $(n+1)$-th example, since all the examples are conditionally i.i.d. when they are conditioned on $\theta^*$. 
Thus, in this section, we will only study how the $n$ prompt examples help a perfectly pretrained \ac{llm} infer $\theta^*$, namely, the in-context error.

\vspace{2mm}

{\noindent \bf Equivalence Class Induced by Multi-Step Reasoning.} 
 In the following, to simplify the notation, we use $X = Z_0, Z_1, \ldots, Z_{H} = Y$ to denote a random trajectory sampled from the model in \eqref{eq:latent_var_model}. Note that the prompting error in \eqref{eq: err pt} only concerns the distribution of the output $Y$ and neglects the intermediate reasoning steps $Z_{1}, \ldots, Z_{H-1}$. 
As a result, it is possible that there exists another $\theta \in \Theta$ with the same distribution of $Y$. Such a relationship induces a set of equivalence classes over $\Theta$.

\begin{definition}[Equivalence Classes over $\Theta$] \label{def: eqv class}
Let $\PP(Z_0, Z_1,\cdots, Z_{H-1},Y\given \theta)$ denote the joint distribution of $Z_{0:H}$ conditioning on the latent variable $\theta^* = \theta$. 
We define an equivalence relation $\sim$  based on conditional density of $Y$ given $ Z_0$ as follows.
\begin{align*}
    \theta \sim \theta' \text{ if and only if}~ \PP(Y=y \given Z_0=z_0,\theta)=\PP(Y=y\given Z_0=z_0,\theta'), \quad \forall 
(z_0,y).
\end{align*}
This relation $\sim$ induces a set of  
equivalence classes over $\Theta$. 
In particular, 
for any $\theta$, define 
$\Theta_{\mathrm{eq}}(\theta)=\{\theta' \in \Theta:   \space \PP(y \given z_0,\theta)=\PP(y\given z_0,\theta'), \forall 
 (z_0,y)\}$
 as the set of parameters equivalent to $\theta$, i.e., the equivalence class represented by $\theta$. 
 Let $\tilde \Theta $  denote the complete set of representatives of all \emph{disjoint} equivalent classes. 
 Then  $\Theta_{\mathrm{eq}}(\theta) \cap \Theta_{\mathrm{eq}}(\theta') = \emptyset$ for all $\theta, \theta'\in \tilde \Theta$ and  
we can further write $\Theta $ as $\cup_{\theta \in \tilde \Theta} \Theta_{\mathrm{eq}}(\theta)$.

\end{definition}


The intuition of the equivalence relation $\sim$ is that there might be multiple reasoning paths that all lead to the correct answer. 
For example, Newtonian, Lagrangian, and Hamiltonian mechanics are three different approaches to classical mechanics. 
Their intermediate steps are different but will lead to the same answer. 
Based on this intuition, any parameter in $\Theta_{\mathrm{eq}}(\theta^*)$ is equally good for predicting $Y$, and we only need to infer $\Theta_{\mathrm{eq}}(\theta^*)$ from \ac{cot} prompts. 

We state a regularity condition for $\ac{cot}$ prompting in terms of $\Theta_{\mathrm{eq}}(\theta^*)$.


\begin{assumption}\label{assumption: seperation}
Given a  task $\theta^*$ during \ac{cot} prompting, let $\Theta^{\complement}=\Theta \backslash \Theta_{\mathrm{eq}}(\theta^*)$ denote the complement of the equivalence class of $\theta^*$. 
We assume that there exists a strict separation between the ground truth task $\theta^*$ and any other tasks $\theta \in \Theta^\complement$. 
Specifically,  there exists $\lambda > 0$ that lower bounds the  Hellinger distance:
\begin{align*}
        \inf_{\theta \in  \Theta^{\complement}}\text{H}^2\big(\PP(Z_{0:H}  = \cdot  
        \given \theta^*) , \PP(Z_{0:H}  &= \cdot  \given \theta)\big)\geq \lambda,
    \end{align*}
     where $\text{H}^2(\cdot , \cdot)$ denotes the squared Hellinger distance.
Moreover, we assume tasks in $\Theta_\mathrm{eq}(\theta^*)$ are well covered  by the pretraining distribution in the sense that  
$\pi\big(\Theta_\mathrm{eq}(\theta^*)\big)>0$.   
\end{assumption}

This assumption requires the true task 
$\theta^*$ is $\lambda$-separated from any other $\theta$ outside of the equivalence class $ \Theta_{\mathrm{eq}}(\theta^*)$. 
Parameter $\lambda$ serves as a margin of separation. 
Moreover, we assume that the prior $\pi$ put  considerable density on $\Theta_{\mathrm{eq}}(\theta^*)$. 
This means that the task tested during prompting stage has been covered in the pretraining dataset. 
Based on this assumption, we establish the statistical error of the \ac{cot} estimator as follows.

\begin{theorem}\label{thm: rate_cot}

Let $\mathtt{err}_\mathrm{pre}$ denote the pretraining error defined in \eqref{eq: err pre} and assume $\Theta$ to be a finite and discrete set. 
Under 
Assumptions~\ref{assump: density bd} and \ref{assumption: seperation}, 
with probability $1-\delta$, the statistical error $\mathtt{err}_\mathrm{CoT}$ defined in~\eqref{eq: err cot} satisfy
$$\mathtt{err}_\mathrm{CoT}\leq\mathcal{O}\big(Hb^*\delta^{-1} 
\cdot \pi(\theta^*)^{-1/2}\cdot \big|{\Theta^\complement}\big|\cdot  e^{-\lambda n}\big)+\mathtt{err}_\mathrm{pre}.
        $$ 
Here $\cO(\cdot)$ hides absolute constants and the probability is with respect to the randomness of the \ac{cot} prompt.

\end{theorem}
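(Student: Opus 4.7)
The plan is to begin from the error decomposition in Lemma~\ref{lem:error_decomp}, which already isolates $\mathtt{err}_\mathrm{pre}$, and focus all effort on the prompting term. Applying the BMA identity of Lemma~\ref{lem: bma.cot} and the equivalence classes of Definition~\ref{def: eqv class}, I would write the predictive as
\begin{align*}
\PP(y^{\mathrm{test}}=\cdot\given\pt_{\mathrm{CoT}}(n)) = \alpha\,\PP(y^{\mathrm{test}}=\cdot\given z_0^{\mathrm{test}},\theta^*) + \sum_{\theta\in\Theta^\complement}\pi(\theta\given\pt_{\mathrm{CoT}}(n))\,\PP(y^{\mathrm{test}}=\cdot\given z_0^{\mathrm{test}},\theta),
\end{align*}
with $\alpha=\pi(\Theta_\mathrm{eq}(\theta^*)\given\pt_{\mathrm{CoT}}(n))$. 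Since the BMA density dominates $\alpha$ times the target density pointwise, we obtain
$$\KL\bigl(\PP(y^{\mathrm{test}}\given z_0^{\mathrm{test}},\theta^*),\PP(y^{\mathrm{test}}\given\pt_{\mathrm{CoT}}(n))\bigr)\leq -\log\alpha\leq 2\,\pi(\Theta^\complement\given\pt_{\mathrm{CoT}}(n))$$
whenever the posterior mass on $\Theta^\complement$ is at most $1/2$, reducing the problem to a posterior-contraction statement.

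The central step is to control $\pi(\Theta^\complement\given\pt_{\mathrm{CoT}}(n))$ via Assumption~\ref{assumption: seperation}. I would introduce the per-example likelihood ratio $L_\theta^{(i)}=p(s^i\given\theta)/p(s^i\given\theta^*)$ and set $L_\theta=\prod_{i=1}^n L_\theta^{(i)}$. The Hellinger affinity identity yields $\E_{\theta^*}[\sqrt{L_\theta^{(i)}}]=1-\text{H}^2(\PP_\theta,\PP_{\theta^*})\leq 1-\lambda\leq e^{-\lambda}$, and conditional independence of the examples given $\theta^*$ gives $\E_{\theta^*}[\sqrt{L_\theta}]\leq e^{-\lambda n}$. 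Combining Bayes' rule with the trivial evidence lower bound $Z\geq\pi(\theta^*)L_{\theta^*}=\pi(\theta^*)$ and the subadditivity $\sqrt{a+b}\leq\sqrt{a}+\sqrt{b}$, one obtains
\begin{align*}
\sqrt{\pi(\Theta^\complement\given\pt_{\mathrm{CoT}}(n))}\leq \pi(\theta^*)^{-1/2}\sum_{\theta\in\Theta^\complement}\sqrt{\pi(\theta)}\,\sqrt{L_\theta}.
\end{align*}
Taking expectations, bounding $\sqrt{\pi(\theta)}\leq 1$, and applying Markov's inequality then shows, with probability at least $1-\delta$, that $\pi(\Theta^\complement\given\pt_{\mathrm{CoT}}(n))\leq \pi(\theta^*)^{-1}|\Theta^\complement|^2 e^{-2\lambda n}/\delta^2$.

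To conclude, I plug this bound into both terms of the prompting error \eqref{eq: err pt}. The pure $\KL$ piece contributes order $e^{-2\lambda n}$, while the $2\sqrt{2}Hb^*\sqrt{\KL}$ piece contributes order $Hb^*\delta^{-1}\pi(\theta^*)^{-1/2}|\Theta^\complement|e^{-\lambda n}$; the latter dominates for moderately large $n$ and matches the stated rate, after which adding $\mathtt{err}_\mathrm{pre}$ finishes the claim. The hardest step, I expect, is the Hellinger-to-posterior concentration: plain Markov on $L_\theta$ gives $\E[L_\theta]=1$ with no decay in $n$, so it is essential to pass through $\sqrt{L_\theta}$ and exploit the Hellinger affinity. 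This same square-root detour is what produces the $\pi(\theta^*)^{-1/2}$ factor and forces reliance on the $\sqrt{\KL}$ term in \eqref{eq: err pt} to recover the sharp rate $e^{-\lambda n}$ rather than a degraded $e^{-\lambda n/2}$.
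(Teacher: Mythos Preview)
Your proposal is correct and follows essentially the same route as the paper. The paper packages the first step as an evidence-lower-bound argument (its Proposition~\ref{prop:elb}) yielding $\KL\leq -\log\pi(\Theta_{\mathrm{eq}}(\theta^*)\mid\pt_{\mathrm{CoT}}(n))$, which is exactly your density-domination bound $\KL\leq -\log\alpha$; and it controls the likelihood ratios via a per-$\theta$ Chernoff-type bound (Lemma~\ref{lemma: likelihoodbd}) followed by a union bound over $\Theta^\complement$, whereas you sum $\sqrt{L_\theta}$ first and apply Markov once—both routes pass through the same Hellinger-affinity identity $\E_{\theta^*}[\sqrt{L_\theta^{(i)}}]=1-\mathrm{H}^2\leq e^{-\lambda}$ and land on the same $\pi(\theta^*)^{-1}\delta^{-2}|\Theta^\complement|^2 e^{-2\lambda n}$ bound for the KL term. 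One small omission: the prompt also contains the test query $z_0^{\mathrm{test}}$, so the posterior ratio carries an extra factor $\PP(z_0^{\mathrm{test}}\mid\theta)/\PP(z_0^{\mathrm{test}}\mid\theta^*)$; the paper's Lemma~\ref{lemma: likelihoodbd} tracks this explicitly, but since its Hellinger contribution is nonnegative it can only tighten your bound, so nothing is lost.
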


This theorem shows that when the tasks are well separated,
the prompting error converges to zero 
\textbf{exponentially  fast} when $n$ increases. 
Note that the convergence rate depends on the separation $\lambda$. 
A larger $\lambda $ means that $\Theta_\mathrm{eq}(\theta^*)$ is more distinguishable from the rest of the tasks, leading to a faster convergence rate. 
Besides, the error also depends on $H$ and the size of ${\Theta^\complement}$. Intuitively, these two terms characterize how the error increases as the problem size grows. 
Moreover, the dependence on $b^* $ comes from Lemma \ref{lem:error_decomp}, which is due to replacing the pretrained \ac{llm} by the population distribution. 
The statistical error also depends on $\pi(\theta^*)^{-1/2}$, which means the prompting error is smaller for tasks better covered by the pretraining distribution. 

Here we assume that $\Theta$ is finite. In Appendix \ref{proof: rate_cot} we will further extend Theorem \ref{thm: rate_cot} to the more challenging case where $\Theta$ can be a continuous set and present a detailed proof.




Ideally, we would like to have an upper bound that scales with $|\tilde \Theta|$ because it is the actual number of all possible hypotheses when it comes to predicting $Y$ based on $X$.
Whereas $|{\Theta^\complement}|$ in Theorem \ref{thm: rate_cot} can be much larger than $|\tilde \Theta| $ because it is comparable to $|\Theta|$.  
To have a better bound, we impose an additional assumption postulating that the distributions within each equivalence class are close.


\begin{assumption}\label{assumption: close}
Let $\tilde \Theta$ be a representative set of the equivalence classes introduced in Definition \ref{def: eqv class}. 
We assume that there exist   positive numbers $\alpha$ and $\alpha_0$  such that for all $\theta\in \Tilde \Theta$ and $ \theta' \in \Theta_\mathrm{eq}(\theta)$, we have 
\begin{align*}
    \sup_{z_{0:H} }\bigg|\log \frac{\PP(Z_{0:H} = z_{0:H}  \given \theta)}{\PP(Z_{0:H} = z_{0:H}  \given \theta')}\bigg|\leq \alpha, \qquad \sup_{z_{0} }\bigg|\log \frac{\PP(Z_{0} = z_{0}  \given \theta)}{\PP(Z_{0 } = z_{0}  \given \theta')}\bigg|\leq \alpha_0.
\end{align*}
Moreover, we assume that $\alpha \in (0, \lambda)$, where 
 $\lambda$ appears in Assumption~\ref{assumption: seperation}. 
\end{assumption}

 Assumptions \ref{assumption: seperation} and \ref{assumption: close}  imply that distributions are similar within each equivalence class but disparate between equivalence classes. 
 We establish a new upper bound as follows.


\begin{theorem} \label{thm: rate_cot2}
 
Under  Assumptions \ref{assump: density bd}, \ref{assumption: seperation}, and \ref{assumption: close}, with probability $1-\delta$ over the randomness of the \ac{cot} prompt, 
    we have $$\mathtt{err}_\mathrm{CoT}\leq\mathcal{O}\big(Hb^*\pi(\theta^*)^{-1/2}\delta^{-1} \big|\Tilde{\Theta}\big| e^{-(\lambda-\alpha) n+\alpha_0}\big)+\mathtt{err}_\mathrm{pre}.$$


\end{theorem}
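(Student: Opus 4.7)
The plan is to leverage the error decomposition of Lemma \ref{lem:error_decomp} and the BMA characterization of Lemma \ref{lem: bma.cot}, and then refine the argument behind Theorem \ref{thm: rate_cot} so that the union bound is taken over the representative set $\tilde\Theta$ rather than over all of $\Theta^\complement$. First I would invoke Lemma \ref{lem:error_decomp} to reduce the task to controlling the prompting error $\mathtt{err}_\mathrm{prompt}$ in \eqref{eq: err pt}, since the pretraining error $\mathtt{err}_\mathrm{pre}$ appears additively in the target bound. Because the dominant term in \eqref{eq: err pt} is the square-root term with prefactor $Hb^*$, it suffices to bound the KL divergence between $\PP(y^\mathrm{test}=\cdot\given z_0^\mathrm{test},\theta^*)$ and $\PP(y^\mathrm{test}=\cdot\given \pt_\mathrm{CoT}(n))$, and then invoke Markov's inequality to convert the expectation bound into the high-probability statement with the $\delta^{-1}$ factor.

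Next I would use Lemma \ref{lem: bma.cot} to write the BMA predictor as the posterior average
\begin{align*}
\PP(y^\mathrm{test}=\cdot\given \pt_\mathrm{CoT}(n)) = \int_\Theta \PP(y^\mathrm{test}=\cdot\given z_0^\mathrm{test},\theta)\,\pi(\theta\given \pt_\mathrm{CoT}(n))\,\mathrm{d}\theta.
\end{align*}
The key observation is that for every $\theta \in \Theta_\mathrm{eq}(\theta^*)$ the conditional density $\PP(y^\mathrm{test}=\cdot\given z_0^\mathrm{test},\theta)$ coincides with the target $\PP(y^\mathrm{test}=\cdot\given z_0^\mathrm{test},\theta^*)$, so only the posterior mass on $\Theta^\complement$ contributes to the KL gap. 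Standard convexity (Jensen) then bounds the KL divergence by the posterior mass $\pi(\Theta^\complement\given \pt_\mathrm{CoT}(n))$ up to a bounded-likelihood factor (which is absorbed into the $\cO(\cdot)$ constant using Assumption~\ref{assump: density bd}).

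The central step, and the refinement over Theorem \ref{thm: rate_cot}, is bounding $\E[\pi(\Theta^\complement\given \pt_\mathrm{CoT}(n))]$ by decomposing $\Theta^\complement=\bigcup_{\theta\in \tilde\Theta\setminus \Theta_\mathrm{eq}(\theta^*)}\Theta_\mathrm{eq}(\theta)$ and analyzing one equivalence class at a time. Within each class represented by $\theta_j\in\tilde\Theta$, Assumption~\ref{assumption: close} gives
\begin{align*}
\sup_{\theta'\in \Theta_\mathrm{eq}(\theta_j)}\frac{\PP(\pt_\mathrm{CoT}(n)\given \theta')}{\PP(\pt_\mathrm{CoT}(n)\given \theta_j)} \leq \exp(n\alpha+\alpha_0),
\end{align*}
since the prompt comprises $n$ i.i.d.\ full trajectories (each contributing a factor of $e^\alpha$) together with the query $z_0^\mathrm{test}$ (contributing the $e^{\alpha_0}$ factor). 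Hence the posterior mass on each equivalence class is at most $\exp(n\alpha+\alpha_0)\cdot \pi(\Theta_\mathrm{eq}(\theta_j))\cdot \PP(\pt_\mathrm{CoT}(n)\given \theta_j)/\PP(\pt_\mathrm{CoT}(n))$. Summing over $\theta_j$ replaces the combinatorial factor $|\Theta^\complement|$ by $|\tilde\Theta|$ at the price of the multiplicative $\exp(n\alpha+\alpha_0)$.

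To close the argument I would take expectations and use the identity $\E_{\theta^*}[\PP(\pt_\mathrm{CoT}(n)\given \theta_j)/\PP(\pt_\mathrm{CoT}(n))]\leq \pi(\theta^*)^{-1/2}\prod_{i=1}^n\int \sqrt{p_{\theta^*}p_{\theta_j}}$, which by the Hellinger identity $\int \sqrt{p q}=1-H^2(p,q)$ and Assumption~\ref{assumption: seperation} yields a factor $(1-\lambda)^n\leq e^{-\lambda n}$. Combining this with the within-class inflation $e^{n\alpha+\alpha_0}$ produces the rate $e^{-(\lambda-\alpha)n+\alpha_0}$ (so the condition $\alpha<\lambda$ is genuinely needed). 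Pushing through the square-root-of-KL factor in \eqref{eq: err pt} and Markov's inequality gives the claimed high-probability bound. The main obstacle I anticipate is the bookkeeping in the Hellinger/likelihood-ratio step: one needs to be careful that the $\pi(\theta^*)^{-1/2}$ factor survives from lower-bounding the marginal $\PP(\pt_\mathrm{CoT}(n))\geq \pi(\Theta_\mathrm{eq}(\theta^*))\cdot \PP(\pt_\mathrm{CoT}(n)\given \theta^*)$, and that the within-equivalence-class perturbation controlled by $\alpha$ is tight enough that the exponential rate $\lambda-\alpha$ remains strictly positive under the stated assumption.
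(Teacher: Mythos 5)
Your proposal is correct and follows essentially the same route as the paper: decompose via Lemma \ref{lem:error_decomp}, bound the KL by the posterior odds of $\Theta^\complement$ against $\Theta_\mathrm{eq}(\theta^*)$ (the paper does this through the variational bound of Proposition \ref{prop:elb}, which yields $-\log(1-\pi(\Theta^\complement\given\pt_\mathrm{CoT}(n)))$ — the same quantity you obtain from the mixture structure), partition $\Theta^\complement$ into equivalence classes and use Assumption \ref{assumption: close} to replace $|\Theta^\complement|$ by $|\tilde\Theta|$ at the cost of $e^{n\alpha+\alpha_0}$, and finally apply a Hellinger-affinity concentration bound to each representative with a union bound (the paper's Lemma \ref{lemma: likelihoodbd}). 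One slip to fix in your final step: as written, $\E_{\theta^*}[\PP(\pt_\mathrm{CoT}(n)\given\theta_j)/\PP(\pt_\mathrm{CoT}(n))]$ does not decay in $n$ (the expectation of a likelihood ratio under the reference measure is $1$, and lower-bounding the marginal only costs $\pi(\theta^*)^{-1}$); the decaying quantity is the expected \emph{square root} of the likelihood ratio, $\E_{\theta^*}[\sqrt{\PP(\pt\given\theta_j)/\PP(\pt\given\theta^*)}]=\prod_i(1-\mathrm{H}^2)\le e^{-\lambda n}$, to which you then apply Markov — this is exactly the content of Lemma \ref{lemma: likelihoodbd}, and it is also why the theorem's $\delta^{-1}$ and $\pi(\theta^*)^{-1/2}$ arise from taking the square root of a KL bound carrying $\delta^{-2}$ and $\pi(\theta^*)^{-1}$, rather than directly from the Markov step.
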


Compared with the previous Theorem~\ref{thm: rate_cot} that solely requires Assumption~\ref{assumption: seperation}, we have a better dependency on the size of parameter space, from $|\Theta^\complement|$ to $|\tilde \Theta|$, at a cost of a slower rate of exponential decay. 
The proof of this theorem is deferred to Appendix \ref{proof: rate_cot2}, where we also include an extension to the case where $\Theta$ is continuous.

In summary, we have shown that in the prompting stage, as the number of examples grows,  the statistical error of \ac{cot} prompting decays exponentially to an intrinsic error due to pretraining.
In the following, we will extend the above results to a few variants of \ac{cot}.

\subsection{Statistical Errors of Variants of CoT }\label{subsubsec: variants}
The predictions of \ac{llm}s are inherently stochastic, which is a main source of LLM hallucination 
\citep{huang2023survey, tonmoy2024comprehensive}. 
To increase the prediction accuracy, various selection techniques such as majority vote \citep{wang2022self} and tree search \citep{yao2023tree} are combined with \ac{cot}. 
In the following, we modify Theorem \ref{thm: rate_cot} for a few variants of \ac{cot}, including Self-Consistency \ac{cot} \citep{wang2022self}, Tree-of-Thought  \citep{yao2023tree}, and Selection-Inference \citep{creswell2022selection}. 
For simplicity, we also assume zero pretraining error and input query does not have a distributional shift, i.e., \(\PP_\mathrm{LLM} =  \PP\) and \(z_0^\mathrm{test} \sim \PP(\cdot \given \theta^*)\).


\subsection*{Self-Consistency \ac{cot} (SC-COT)} \label{subsec:sc}

Given the same prompt as in vanilla \ac{cot}, i.e., $\pt_{\mathrm{CoT}}(n)$, \ac{sc}-\ac{cot} first generate $K$ i.i.d. reasoning paths and then output the final answer by a \emph{majority vote}. 
That is,  we first 
  sample $K$ i.i.d. reasoning paths $\{z_{1:H}^{\mathrm{test},i} \}_{i=1}^K \sim \PP \big(\cdot \given \pt_{\mathrm{CoT}}(n)\big)$ and then report the mode of the empirical distribution of $\{y^{\mathrm{test},i}\}_{i=1}^K$, denoted by $y_K^*$. 
 The empirical distribution of $\{y^{\mathrm{test},i}\}_{i=1}^K $ is denoted by  $p_K(y) = K^{-1}  \sum_{i=1}^K \mathbf{1}\{y^{\mathrm{test},i} = y\}$, $\forall y \in \calL$. 
 The sample mode \(y_K^*\) is defined as $y_K^* = \argmax _{y\in \calL} p_K(y)$, where we pick any element if there are multiple maximizers. See Figure \ref{fig:sc-cot} for an illustration.

Recall that if the underlying task $\theta^*$ is already known, the answers should be generated according to $\PP(y^\mathrm{test} = \cdot  \given z_0^\mathrm{test}, \theta^*) $. We assume the desired answer is the mode of this distribution and it is unique.

\begin{assumption}\label{assump: sc}
We define $y^* $ as the mode of the distribution $\PP(y^\mathrm{test} = \cdot  \given z_0^\mathrm{test}, \theta^*) $, i.e., 
$y^* = \argmax_{y \in \calL} \PP(y^\mathrm{test} = y  \given z_0^\mathrm{test}, \theta^*) $
Moreover, we define the gap between the mode and the second-largest probability mass as   
\begin{align*}
\epsilon = \min_{y \in \calL, y \neq y^*}\{\PP \big(y^\mathrm{test} = y^* \given z_0^{\mathrm{test}}, \theta^*\big) - \PP \big(y^\mathrm{test} = y \given z_0^{\mathrm{test}}, \theta^*\big)\},
\end{align*}
which is assumed to be strictly positive. 
\end{assumption}

This assumption ensures that the population mode $y^*$ is uniquely defined with a margin $\epsilon$. 
This condition is satisfied by reasoning problems where the answer is unique, e.g., factual commonsense and mathematical reasoning. 
Intuitively, when the number of examples in \ac{cot} prompt is large, 
$\PP( y^{\mathrm{test} } =\cdot  \given  \pt_{\mathrm{CoT}}(n) ) $ is close to $\PP(y^\mathrm{test} = \cdot  \given z_0^\mathrm{test}, \theta^*)$, as guaranteed by Theorem \ref{thm: rate_cot}. 
Then, when $K$ is sufficiently large in  \ac{sc}-\ac{cot}, we expect that the sample mode $y_K^*$ coincides with the population mode $y^*$. This justifies the effectiveness of \ac{sc}-\ac{cot}.


\begin{figure}[t] 
    \centering
    \includegraphics[width = 0.94\textwidth]{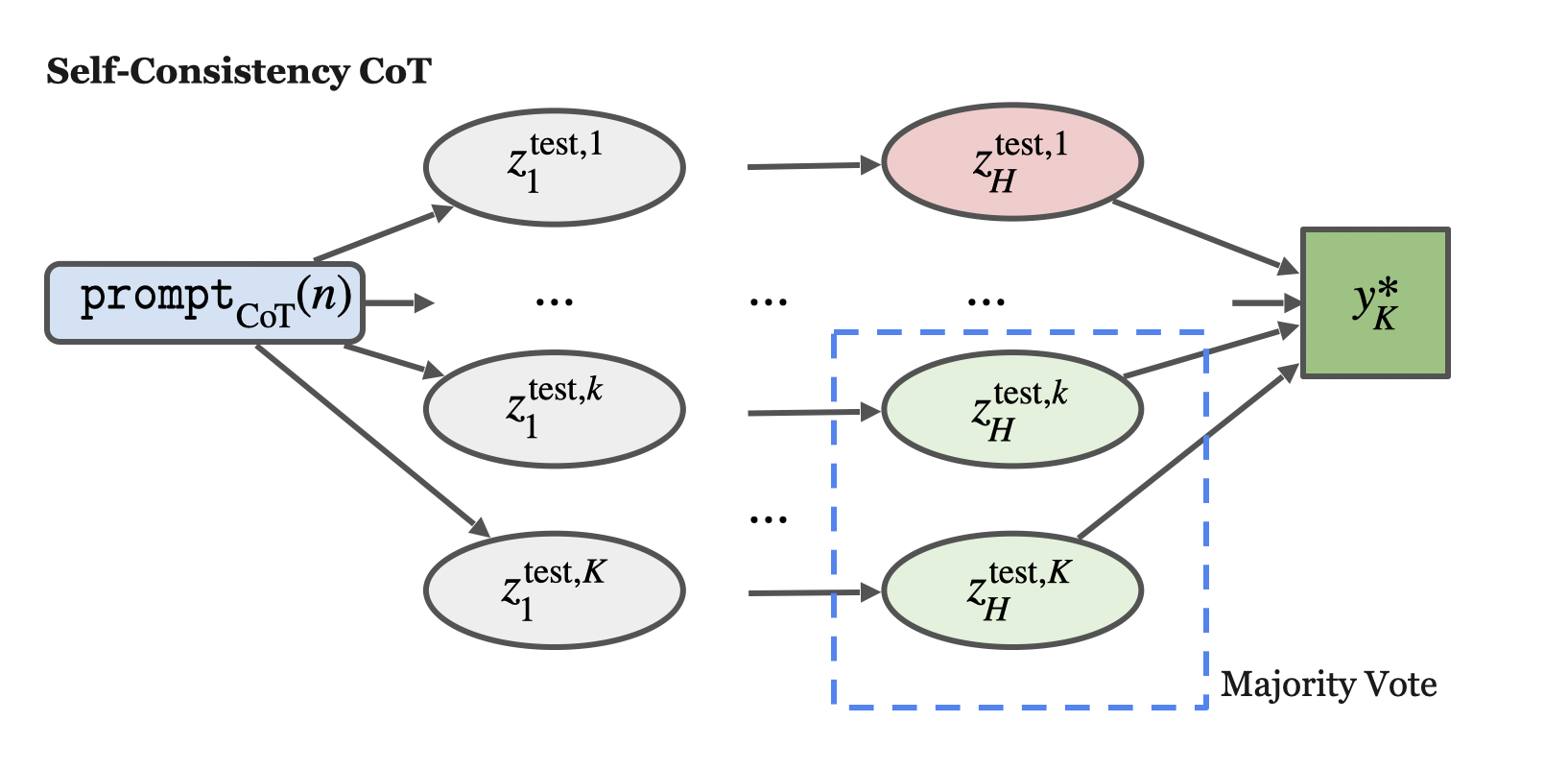}
    \caption{An illustration of the \ac{sc}-\ac{cot} prompting method.
    This method creates the final answer $y_K^*$ based on two steps. First, we sample $K$ i.i.d. 
    reasoning paths $\{ z_{0:H}^{\mathrm{test},i} \}_{i=1}^K$ given the \ac{cot} prompt, and then 
   report $y_K^*$ by a  majority vote  based on $\{ y^{\mathrm{test},i } = z_{H}^{\mathrm{test},i}  \}_{i=1}^K$. }
    \label{fig:sc-cot}
\end{figure}

\begin{corollary}[Statistical Error of \ac{sc}-\ac{cot}] \label{cor: self-consistency cot}

Consider \ac{sc}-\ac{cot} prompting with $K$ reasoning paths and $n$ \ac{cot} examples.
Under  Assumptions~\ref{assump: density bd}, \ref{assumption: seperation}, and~\ref{assump: sc}, 
when $n$ is sufficiently large such that 
$$
n = \Omega\Big(    \Big( \log \bigl( |\Theta^\complement|  / \pi(\theta^*)\big) + \log (1/ \epsilon  )  \Big)\Big/ \lambda \Big), 
$$
 with probability at least $1-e^{-\lambda n/2}$, the probability that the \ac{sc}-\ac{cot} produces the wrong output decreases exponentially in $K$, i.e., 
$$\PP\big(y_K^*\neq y^*\given \pt_{\mathrm{CoT}}(n)\big)\leq 2 |\calL| \cdot \exp \bigg(-\frac{3K \epsilon^2  }{24+ 8 \epsilon}\bigg).$$
\end{corollary}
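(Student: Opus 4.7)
The plan is to combine the prompting-error bound of Theorem~\ref{thm: rate_cot} with a Bernstein-type concentration argument for the empirical mode. Write $p(y) := \PP(y^{\mathrm{test}} = y \mid z_0^{\mathrm{test}}, \theta^*)$ and $q(y) := \PP(y^{\mathrm{test}} = y \mid \pt_{\mathrm{CoT}}(n))$; the two layers of randomness---first the CoT prompt, and then the $K$ reasoning paths drawn from $q$ conditional on the prompt---are handled by first conditioning on a high-probability event for the prompt and then applying concentration to the paths.

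The first step controls $q$ in total variation. Since the pretraining error and the query error are assumed to vanish, Theorem~\ref{thm: rate_cot} applied with $\delta = e^{-\lambda n/2}$ gives, with probability at least $1 - e^{-\lambda n/2}$ over the prompt,
\begin{align*}
\KL(p, q) \;\leq\; C \cdot \pi(\theta^*)^{-1/2}\,|\Theta^\complement|\, e^{-\lambda n/2}
\end{align*}
for an absolute constant $C$. Pinsker's inequality then yields $\TV(p, q) \leq \sqrt{\KL(p, q)/2}$, and the hypothesis $n = \Omega\!\left((\log(|\Theta^\complement|/\pi(\theta^*)) + \log(1/\epsilon))/\lambda\right)$ is precisely the threshold that forces $\TV(p,q) \leq \epsilon/4$. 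Invoking Assumption~\ref{assump: sc}, this preserves the identity of the mode and, up to a factor of two, its margin: on the good event, $q(y^*) - q(y) \geq \epsilon/2$ for every $y \in \calL \setminus \{y^*\}$.

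The final step is a conditional Bernstein argument. Fix any $y \neq y^*$ and set $X_i = \mathbf{1}\{y^{\mathrm{test},i} = y\} - \mathbf{1}\{y^{\mathrm{test},i} = y^*\}$ for $i \in [K]$. These are i.i.d.\ under $q$ with $\E[X_i] = q(y) - q(y^*) \leq -\epsilon/2$, centered range $|X_i - \E X_i| \leq 2$, and variance bounded by $q(y) + q(y^*) \leq 1$ thanks to the disjointness of the two indicator events. Bernstein's inequality with deviation $t = \epsilon/2$ then gives
\begin{align*}
\PP\bigl(p_K(y) \geq p_K(y^*) \,\big|\, \pt_{\mathrm{CoT}}(n)\bigr) \;\leq\; \exp\!\left(-\frac{K(\epsilon/2)^2}{2 \cdot 1 + \tfrac{2\cdot 2}{3}\cdot \tfrac{\epsilon}{2}}\right) \;=\; \exp\!\left(-\frac{3K\epsilon^2}{24 + 8\epsilon}\right),
\end{align*}
and a union bound over the at most $|\calL| - 1$ alternative labels, plus an extra factor of two to absorb ties in the argmax, produces the stated bound $2|\calL|\exp(-3K\epsilon^2/(24 + 8\epsilon))$.

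The main obstacle I expect is constant-tracking: reproducing the precise denominator $24 + 8\epsilon$ requires applying Bernstein to the centered difference variable with the range bound $|X_i - \E X_i| \leq 2$ (rather than the naive $|X_i| \leq 1$) while simultaneously exploiting the tighter variance bound $\mathrm{Var}(X_i) \leq q(y) + q(y^*)$ from disjointness. A secondary subtlety is cleanly separating the two sources of randomness, which is resolved by treating the Bernstein step as a statement conditional on the prompt and intersecting with the good prompt event from Step~1.
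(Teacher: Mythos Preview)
Your proposal is correct and follows essentially the same two-step structure as the paper: first use Theorem~\ref{thm: rate_cot} (in the perfectly-pretrained form, eq.~\eqref{eq: kl_rate}) with $\delta = e^{-\lambda n/2}$ and Pinsker to show that $q$ retains a mode gap of $\epsilon/2$ at $y^*$, then apply Bernstein plus a union bound over $\cL$. The only noteworthy difference is in the concentration step: the paper applies Bernstein to each binomial $p_K(y)$ separately with deviation $t=\epsilon/4$ and the variance bound $p(1-p)\le 1/4$, whereas you apply it to the difference $\mathbf{1}\{y^{\mathrm{test},i}=y\}-\mathbf{1}\{y^{\mathrm{test},i}=y^*\}$ with deviation $t=\epsilon/2$ and the disjointness-based variance bound; both routes land on exactly the same exponent $3K\epsilon^2/(24+8\epsilon)$, and in fact your one-sided bound already handles ties via the event $\{p_K(y)\ge p_K(y^*)\}$, so the extra factor of two you add is harmless slack rather than a necessity.
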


This corollary shows that sampling $K$ independent reasoning paths boosts the output accuracy. 
In particular, when $\epsilon \in (0,1)$, for any $\delta \in (0,1)$, as long as $K = \Omega( \log (|\calL| / \delta) / \epsilon )$, 
$y_K^* = y^*$ holds  with 
 probability at least $1- e^{-\lambda n /2} - \delta$. The proof of this corollary can be found in Appendix~\ref{proof: self-consistency cot}.


\subsection*{Tree-of-Thought (ToT)} \label{subsec:tot}

Recall that \ac{sc}-\ac{cot} samples multiple parallel reasoning paths and performs a selection in the last step. 
Tree-of-Thought  \citep{yao2023tree}
instead proposes to include selection in each step. 
In this setup, the goal is to generate a reasoning path $z_{1:H}^\mathrm{test}$ that maximizes a task-specific value function $V_{\theta}^*$. We define this population problem as follows.

{\noindent \bf Population Problem.} 
The goal of ToT is to select the optimal reasoning path that solves a desired task. 
Mathematically, for each step $h$, let $t_h = (z_0, \ldots, z_{h})$ denote the partial history up to step $h$. 
Let $V_{\theta^*}$ be a function that maps each partial history to a value in $[0,1]$. 
Intuitively, $V_{\theta^*}$ can be viewed as the success probability of the partial history for solving task $\theta^*$. 
Starting from $z_0^\mathrm{test}$, the optimal reasoning path is obtained by solving 
\begin{align}\label{eq:optimal_path_tot}
t_h^{\mathrm{test},*} = (t_{h-1}^{\mathrm{test},*}, z_h^{\mathrm{test},*})  , \qquad \textrm{where}~~ z_h^{\mathrm{test},*} = {\textstyle \argmax_{z_h^\mathrm{test}} } V_{\theta^*} (t_{h-1}^{\mathrm{test}, *} , z_h^\mathrm{test}), ~~t_0^{\mathrm{test}, *} = z_0^\mathrm{test}.   
\end{align}
Moreover, let 
$\PP(z_{0:H}^\mathrm{test} = \cdot \given \theta^*)$ be the task-specific distribution of the multi-step latent variable model defined in \eqref{eq:latent_var_model}. 
At the population level, the goal is to draw samples from such a distribution, and select the optimal reasoning path according to the value function $V_{\theta^*}$. 
In the following, we condition on $\pt_{\mathrm{CoT}}(n)$, and thus the optimal reasoning path $z_{0:H}^{\mathrm{test}, *}$ can be regarded fixed. 
\vspace{2mm}

{\noindent \bf Tree-of-Thought with Breadth-First-Search.} As we do not have access to the distribution $\PP(z_{0:H}^\mathrm{test} = \cdot \given \theta^*)$, ToT proposes to sample from the LLM and then approximately solve \eqref{eq:optimal_path_tot} via selection. 
To simplify the notation, for each 
 $h\in [H]$, we denote \( t^\mathrm{test}_{h-1} = (z_0^{\mathrm{test}}, \ldots, z_{h-1}^{\mathrm{test}}) \), which is the partial history of the test example up to step $h-1$.
In step $h$, instead  
of passing the complete prompt $\pt_\mathrm{CoT}(n)$, we truncate each demonstration in $\pt_\mathrm{CoT}(n)$ up to step $h$ and denote the truncated prompt by $\pt_h(n) = \{z^i_{0:h} \given z^i_{0:H} \in \pt_\mathrm{CoT}(n)\}$.
Then the \ac{llm} samples $z_h^\mathrm{test}   \sim \PP(\cdot \mid \pt_h(n), t^\mathrm{test}_{h-1}) $ and obtain $t_h^\mathrm{test}$, and so on. 

In the sequel, we only discuss a version of ToT that maintains a candidate set of partial histories $\cT_h $ for each step $h$, constructed using Breadth-First-Search (BFS). 
Specifically, the algorithm involves two integer parameters, $K$ and $B$, which specify the number of samples drawn in each step and the size of each $\cT_h$, respectively. 
Let $\cT_{0} = \{ z_0^{\mathrm{test}}\}$. 
Suppose $\cT_{h}$ is already constructed and $|\cT_h| = B$. Let its elements be denoted by $\{ t_h^1, \ldots, t_h^{B}\}$. 
For any $b\in [B]$, the algorithm will include both $t_h^b$ and $\pt_{h+1}(n)$ as the prompt sequence and do $K$ i.i.d. one-step reasoning with the perfectly trained LLM, i.e., 
$\{z_{h+1}^{b,i}\}_{i=1}^K \overset{i.i.d.}{\sim} \PP(  \cdot \mid \pt_{h+1}(n), t^b_{h})$.
Thus, we obtain $KB$ partial histories for the $(h+1)$-th step:
$\{ (t^b_{h}, z_{h+1}^{b,i}) \colon i \in [K], b\in [B]\}$. 
Then we sort these partial histories according to the value function $V_{\theta^*}$, and define $\cT_{h+1}$ as the top $B$ elements. 
That is, 
\begin{align}\label{eq:define_candiates_tree_search}
\cT_{h+1} = \{ t_{h+1}^b \}_{b\in[B]} = \{ \mathrm{top }~B~(t^b_{h}, z_{h+1}^{b,i})\textrm{'s} ~\mathrm{in~terms~of}~ V_{\theta^*}(t^b_{h}, z_{h+1}^{b,i}), i \in [K], b\in [B]\}.
\end{align}
Finally, when $\cT_H$ is constructed, 
we define $\hat t_{H} = \argmax _{t_H\in \cT_{H}} V_{\theta^*} (t_H)$
and use $\hat t_H$ as the final prediction. See Figure \ref{fig:tot} for an illustration.

\begin{figure}[t]
    \centering
    \includegraphics[width = 0.9\textwidth]{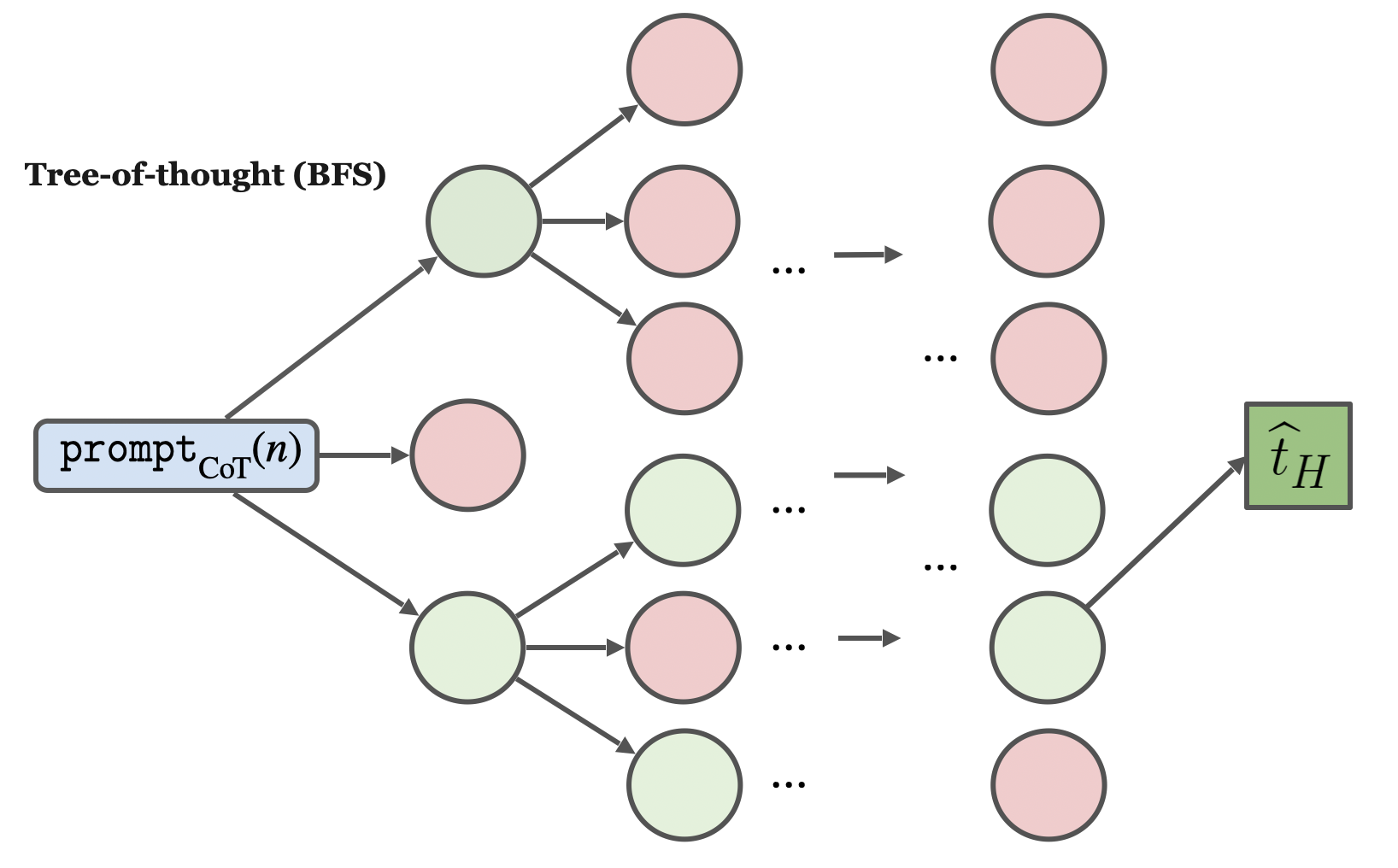}
    \caption{An illustration of \ac{tot} with BFS selection. 
    Here BFS samples $K= 3$ i.i.d. reasoning steps based on the prompt and selects $B = 2$ partial histories. 
    The red ovals represent the pruned nodes, and the green ovals represent the active nodes.
    At each step \(h \in [H]\), \(K\) candidates are generated per active node, and thus there are $KB$ candidates. 
    Then $B$ nodes survive the selection according to the value function \(V_{\theta^*}(\cdot)\). At the final step $h=H$, we select a single best candidate and output the corresponding chain as $\hat t_H$.}
    \label{fig:tot}
\end{figure}

Compared to \ac{sc}-\ac{cot}, ToT uses a more sophisticated selection method based on the value function. 
For this to be effective, $V_{\theta^*} $ need to sign well with the task-specific distribution $\PP(\cdot \given \theta^*)$. Recall that for each $h$ we do one-step reasoning for $K$ times and only keep a candidate set of the histories. To ensure that the desired reasoning path is contained in the candidate set, we require that the optimal one-step reasoning $z_h^{\mathrm{test}, *}$ can be sampled out with high probability for each $h\in[H]$, which in turn requires sufficient coverage of $t_h^{\mathrm{test}, *}$ under $\PP(\cdot \given \theta^*)$ which is approximated by the LLM.
For example, $t_h^{\mathrm{test}, *}$ should have sufficient probability under $\PP(\cdot \given \theta^*)$, and different tasks should have sufficient separation. 
We impose the following assumption for theoretical analysis. 


\begin{assumption}\label{assump:tot-1}
For the given task $\theta^*$, we assume the optimal reasoning path $t_H^*$ is uniquely defined by  \eqref{eq:optimal_path_tot}. Moreover, we assume that the task $\theta^*$ is well covered by the pretraining distribution, i.e., $\pi(\theta^*)>0$.
Furthermore, we assume that the tasks in $\Theta$ are well separated such that the following two conditions are satisfied:
\begin{itemize}
    \item [(i)] Task $\theta^*$ is uniquely identified by the optimal reasoning path, i.e., $\theta^* = \argmax_{\theta' \in \Theta} \PP(t_h^{\mathrm{test},*} \mid  \theta = \theta')$ for each $h\in [H]$;
    \item [(ii)] For any $h\in [H]$, there exists $\lambda _h > 0$ such that $
        \text{H}^2 \big(\PP(Z_{0:h}=\cdot \given \theta),\PP(Z_{0:h}=\cdot \given \theta^*)\big)\geq \lambda_h.
        $ for all $\theta \neq \theta^*$, where $H(\cdot, \cdot)$ denotes the Hellinger distance. 
\end{itemize}

We note that here we require a separation in Hellinger distance for every step $h$, which is slightly stronger than Assumption \ref{assumption: seperation}. Moreover, condition (i) above shows that the equivalence classes specified in Definition \ref{def: eqv class} are in fact singletons. 
\end{assumption}


\begin{proposition}
    [Statistical Error of ToT] \label{cor: tot-bfs}
Consider \ac{tot}  prompting based on $n$ \ac{cot} examples and BFS with $B = 1$. 
Let $\hat t_H$ be the final output and define $\lambda^* = \min_{h\in [H]} \lambda_h$. 
Let $\epsilon \in (0,1)$ be any sufficiently small number. 
Under Assumption~\ref{assump:tot-1}, when $n$ is sufficiently large such that
\begin{align*}
    n\geq \frac{1}{\lambda^*}\bigg(2\log \big(H|\Theta| \big) + \log \big((1-\pi(\theta^*))/\pi(\theta^*)\big) +\log(1/\epsilon)\bigg),
\end{align*}
then with probability at least $1-e^{-n\lambda^*/2}$, the probability of outputting a suboptimal reasoning path $\hat t_H$ decreases exponentially with $K$. That is, we have 
\begin{align*}
    \PP\big(\hat t_H \neq t_H^{\mathrm{test}, *} \given \pt_\mathrm{CoT}(n)\big)&\leq \sum_{h=1}^H\Big(1-p_h^* +\epsilon p_h^*\Big)^{K},
\end{align*}
where $p_h^*=\PP(z_h^{\mathrm{test}, *} \given t_{h-1}^{\mathrm{test}, *},\theta^*)$ for each $h\in [H]$.

\end{proposition}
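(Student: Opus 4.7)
The plan is to combine a uniform-in-$h$ in-context error bound (obtained by adapting Theorem~\ref{thm: rate_cot} to every prefix of the prompt) with an inductive argument along the depth of the BFS tree. Since $B=1$, writing $\cT_h = \{t_h^1\}$, the event $\hat t_H = t_H^{\mathrm{test},*}$ is exactly the event that the optimal next step is both sampled and selected at every depth, so a union bound gives
\begin{align*}
\PP(\hat t_H \neq t_H^{\mathrm{test},*}\mid \pt_\mathrm{CoT}(n)) \leq \sum_{h=1}^H \PP\bigl(t_h^1 \neq t_h^{\mathrm{test},*}\,\big|\, t_{h-1}^1 = t_{h-1}^{\mathrm{test},*},\,\pt_\mathrm{CoT}(n)\bigr).
\end{align*}
Once $z_h^{\mathrm{test},*}$ appears among the $K$ i.i.d.\ samples drawn from the LLM at depth $h$, Assumption~\ref{assump:tot-1}(i) (which forces the optimal path to be the unique $V_{\theta^*}$-maximizer step by step) guarantees it is the one selected. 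Therefore each summand reduces to bounding the probability that none of the $K$ samples equal $z_h^{\mathrm{test},*}$ when drawn from $\PP_{\mathrm{LLM}}(\cdot \mid \pt_h(n), t_{h-1}^{\mathrm{test},*})$.

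The first main step is to pass from the true per-step probability $p_h^* = \PP(z_h^{\mathrm{test},*}\mid t_{h-1}^{\mathrm{test},*},\theta^*)$ to the LLM probability $\tilde p_h^* = \PP_{\mathrm{LLM}}(z_h^{\mathrm{test},*}\mid \pt_h(n), t_{h-1}^{\mathrm{test},*})$. I will apply Theorem~\ref{thm: rate_cot} in the form: for the prefix length $n$ and the truncated prompt $\pt_h(n)$, the KL divergence between the BMA-posterior predictive and the $\theta^*$-conditional law decays as $|\Theta^\complement|\pi(\theta^*)^{-1/2}\delta^{-1}e^{-\lambda_h n}$ with probability at least $1-\delta$, since Assumption~\ref{assump:tot-1} yields the step-wise Hellinger separation $\lambda_h \geq \lambda^*$. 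Choosing $\delta = e^{-n\lambda^*/2}$ and applying a union bound across $h\in[H]$ (this is what produces the $2\log(H|\Theta|)$ term in the lower bound on $n$) gives, on a single high-probability event of measure at least $1-e^{-n\lambda^*/2}$, a simultaneous KL bound of order $\epsilon\cdot p_h^*$ for every $h$.

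The second step converts this KL bound into the multiplicative bound $\tilde p_h^* \geq (1-\epsilon)\, p_h^*$. I will use that a KL bound $\KL \leq \epsilon'$ together with Pinsker (or a direct computation on a two-point reduction $\{z_h^{\mathrm{test},*}\}$ vs.\ its complement) controls $|\tilde p_h^* - p_h^*|$ multiplicatively whenever $\epsilon'$ is a small multiple of $p_h^*$; the calibration of constants is exactly the role of the $\log(1/\epsilon)$ term and the $\log((1-\pi(\theta^*))/\pi(\theta^*))$ term in the hypothesis on $n$. Given this, the probability that none of the $K$ i.i.d.\ draws equals $z_h^{\mathrm{test},*}$ is
\begin{align*}
(1-\tilde p_h^*)^K \,\leq\, \bigl(1 - (1-\epsilon)\, p_h^*\bigr)^K \,=\, (1 - p_h^* + \epsilon p_h^*)^K,
\end{align*}
and summing over $h\in[H]$ yields the claimed bound.

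The hardest part will be making the KL-to-multiplicative passage rigorous with the specific constants matching the stated bound on $n$, while ensuring the high-probability event is the same across all $h\in[H]$; the Hellinger-separation hypothesis must be invoked at every truncated prefix length and the calibration must absorb $\log H$, $\log|\Theta|$ and $\log(1/\epsilon)$ into the exponential rate $\lambda^*$. Everything else --- the chaining across depths via union bound, and the deterministic tie-breaking by $V_{\theta^*}$ once the optimal candidate has been sampled --- is structural and follows directly from the BFS definition in \eqref{eq:define_candiates_tree_search} with $B=1$.
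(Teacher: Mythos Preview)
Your overall architecture (union bound over $h$, reduce to the event that $z_h^{\mathrm{test},*}$ is not among the $K$ samples, use $(1-\tilde p_h^*)^K$) matches the paper, but the way you pass from $p_h^*$ to the LLM probability $\tilde p_h^*$ has a gap, and you misidentify the role of Assumption~\ref{assump:tot-1}(i).

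The KL\,$\to$\,Pinsker route only gives \emph{additive} control, $|\tilde p_h^*-p_h^*|\leq\sqrt{\KL/2}$. To upgrade to the multiplicative bound $\tilde p_h^*\geq(1-\epsilon)p_h^*$ you need $\KL\lesssim \epsilon^2(p_h^*)^2$ (Pinsker) or $\KL\lesssim \epsilon^2 p_h^*$ (two-point reduction), either of which forces the threshold on $n$ to depend on $\min_h p_h^*$. The stated hypothesis on $n$ contains only $\log(1/\epsilon)$, $\log(H|\Theta|)$ and $\log((1-\pi(\theta^*))/\pi(\theta^*))$, so your calibration claim is incorrect: the $\log(1/\epsilon)$ term cannot absorb a $\log(1/p_h^*)$ dependence.

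The paper sidesteps this by never going through KL. It writes the BMA predictive as a mixture, $\tilde p_h^*=\sum_{\theta}\PP(z_h^{\mathrm{test},*}\mid t_{h-1}^{\mathrm{test},*},\theta)\,\pi(\theta\mid \pt_h(n),t_{h-1}^{\mathrm{test},*})$, and keeps only the $\theta=\theta^*$ term, giving the \emph{automatically multiplicative} lower bound $\tilde p_h^*\geq p_h^*\cdot \pi(\theta^*\mid\pt_h(n),t_{h-1}^{\mathrm{test},*})$. The posterior is then lower bounded directly via Bayes' rule and the likelihood-ratio concentration of Lemma~\ref{lemma: likelihoodbd}. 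This is where Assumption~\ref{assump:tot-1}(i) is actually used: since $\theta^*=\argmax_\theta\PP(t_{h-1}^{\mathrm{test},*}\mid\theta)$, the ratio $\PP(t_{h-1}^{\mathrm{test},*}\mid\theta)/\PP(t_{h-1}^{\mathrm{test},*}\mid\theta^*)\leq 1$ can be dropped, so conditioning on the optimal partial history only helps the posterior on $\theta^*$. Without (i), the conditioning on $t_{h-1}^{\mathrm{test},*}$ could swamp the $e^{-2n\lambda_h}$ gain from the demonstrations. With it, one obtains $p_h^*-\tilde p_h^*\leq p_h^*\cdot\frac{1-\pi(\theta^*)}{\pi(\theta^*)}\,e^{-2n\lambda_h+2\log(\delta^{-1}|\Theta|)}$, and choosing $\delta=e^{-n\lambda^*/2}$ plus a union bound over $h\in[H]$ yields exactly the stated threshold on $n$ with no $p_h^*$-dependence. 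Your invocation of (i) only for the $V_{\theta^*}$-selection step misses its essential role in the posterior estimate; the selection step is instead handled by the separately stated uniqueness of $t_H^{\mathrm{test},*}$ in \eqref{eq:optimal_path_tot}.
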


This proposition shows that ToT significantly reduces the probability of introducing a suboptimal optimal reasoning path, which decreases exponentially in \( K \). 
 Without the BFS-based selection step, even when $n$ goes to infinity, the probability of generating $t_H^{\mathrm{test}, *} $ is only  $\prod_{h\in[H]} p_h^*$. 
 Here we only focus on the simplest case where $B = 1$, but our analysis can be generalized to $B > 1$ with some additional effort. 
 We defer the proof to Appendix~\ref{proof: tot-bfs}.

\vspace{2mm}
\subsection*{Selection-Inference (SI)} \label{subsec:si}


Selection-Inference (SI) \citep{creswell2022selection} is a structured LLM reasoning method 
that decomposes each step of reasoning into two components --- a selection module that retrieves relevant facts from the context and an inference module that predicts the next step solely based on the selected facts. 
To this end, \ac{si} uses an \ac{llm} as both a selection module and an inference module through prompting.  The selection module extracts information from the reasoning path and the inference module predicts the next reasoning step based on the information extracted from the selection module. 

\vspace{2mm}
{\noindent \bf A Hierarchical Latent Variable Model.} In the context of SI,
we assume a special case of the model in \eqref{eq:latent_var_model} with a hierarchical structure. 
Specifically, we assume the latent variable $\theta^* $ has two component $ \theta^* = (\theta_\mathrm{se}^*, \theta_\mathrm{in}^*)$ and 
the examples of reasoning paths are i.i.d. given $\theta^*$, which has a prior distribution $\pi$.
Let $ \{z_0, \ldots, z_{H}\}$ be a reasoning path. 
We let $t_h = \{ z_0,\ldots, z_h\}  $
be the partial history up to step $h$. 
We assume that $ z_{h+1} $ depends on $t_h$ only through a subset of $t_h$, denoted by $\tau_{h+1}$, and $\tau_{h+1}\subseteq t_h $  is selected from $t_h$. 
Specifically, the joint distribution of $\PP(z_{0:H} \given \theta^*)$ is given by 
\begin{align}\label{eq:si_model}
    z_0 \sim \PP(z_0 = \cdot \given \theta^*), \qquad \tau_{h+1} \sim \PP(  \tau_{h+1} = \cdot \given  t_h , \theta_\mathrm{se}^*), \qquad z_{h+1} \sim \PP( z_{h+1} = \cdot \given \tau_{h+1} , \theta_\mathrm{in}^* ),
\end{align}
where $ t_{0} = \{z_0\} $ and $t_{h} = t_{h-1} \cup \{ z_{h}\} $.
Intuitively, this model captures the fact that reasoning often involves summarizing existing information and making predictions. 
The selection module outputs a summary of the existing information that is sufficient for reasoning, and the inference module conducts reasoning based on summarized information. In the example shown in Figure \ref{figure:si_example}, $z_0$ contains the background information and a question, $\tau_1$ summarizes part of the information contained in  $z_0$ and generates the first intermediate reasoning step $z_1$. 
Then $\tau_2$ summarizes $\{z_0, z_1\}$ and $z_2$ is generated from $\tau_2$, which answers the question.

\begin{figure}[h]
    \centering
    \includegraphics[width = 0.97\textwidth]{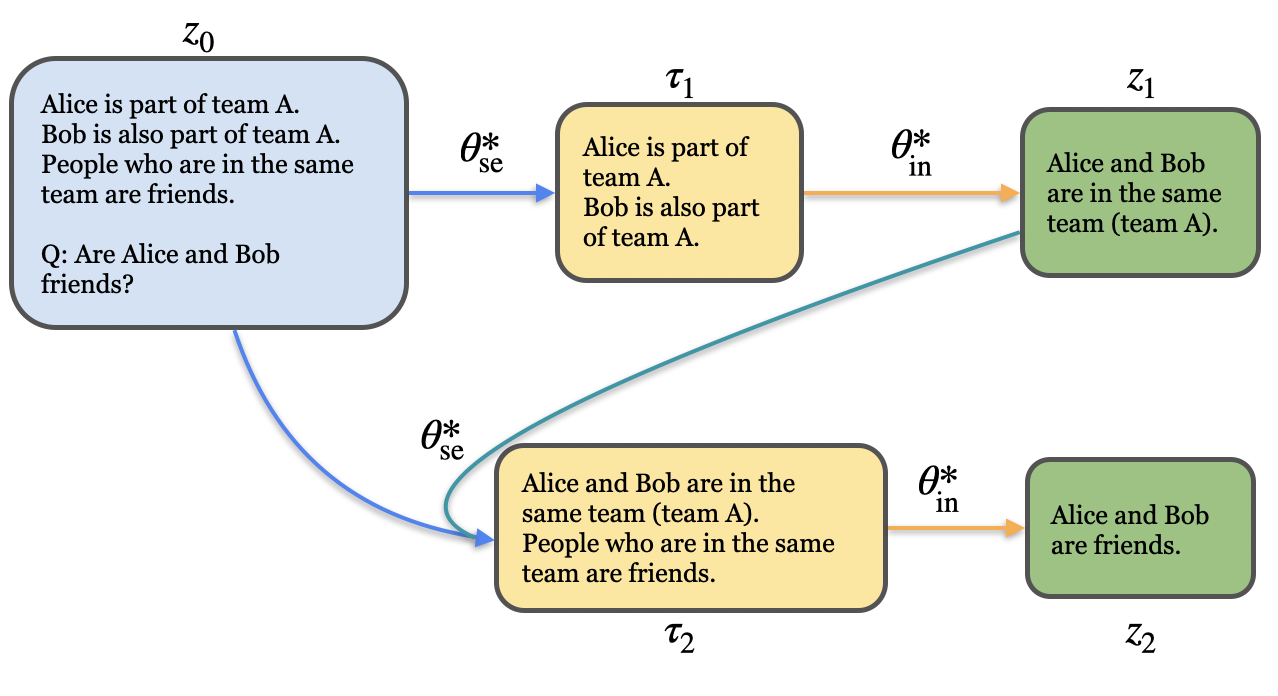}
    \caption{An example generated from the hierarchical model in \eqref{eq:si_model}. The query \(z_0\) (blue box) includes a question and all necessary information to answer it. The first selection step \(\tau_1\) (top yellow box) picks two sentences from \(z_0\) indicating which teams Alice and Bob are in. The first inference step \(z_1\) (top green box) uses the information from \(\tau_1\) to infer that Alice and Bob are on the same team. Together, \(\tau_1\) and \(z_1\) form a single reasoning step. In the second selection step, \(\tau_2\) (bottom yellow box) selects information from \(t_1 = \{z_0, z_1\}\), with the first sentence of $\tau_2$ from \(z_1\) and the second from \(z_0\). The second inference step \(z_2\) (bottom green box) answers the question using only the information provided in \(\tau_2\). }
    \label{figure:si_example}
\end{figure}

\vspace{2mm}
{\noindent \bf SI Prompting.} 
The SI prompting method solves a multi-step reasoning problem following the hierarchical structure specified in \eqref{eq:si_model}, with the unknown task $\theta^*$ inferred implicitly via in-context learning. 
Specifically, given a desired task $\theta^*$ and a query input $z_0^\mathrm{test}$, 
we sample $n$ i.i.d. samples from the distribution in \eqref{eq:si_model}, denoted by $\{z_{0:H}^i,  \tau_{1:h}^i,  \}_{h\in[H], i\in [n]}$. 
We define \( S_\mathrm{se}(n) \) and \( S_\mathrm{in}(n) \) as \begin{align} \label{eq:define_si_examples}
  S_\mathrm{se}(n)   = \{ t_{h-1}^i, \tau_h^i\}_{h\in[H], i\in[n]} , \qquad    S_\mathrm{in}(n)  =  \{   \tau_h^i, z_h^i \}_{h\in[H], i\in[n]}, 
\end{align} 
where $t_h^i$ is the partial history of the $i$-th example.
That is, $S_\mathrm{se}(n)$ and $S_\mathrm{in}(n)$ contain the demonstration examples for selection and inference, respectively. 

These demonstration examples are combined with the intermediate steps of the test example as the prompts, which are used to solve the test example. 
Specifically, starting from $z_0^{\mathrm{test}}$ and $t_0^\mathrm{test} = \{z_0^\mathrm{test}\}$, we generate a reasoning path via 
\begin{align}\label{eq:si_prompt_output}
\tau_{h} ^{\mathrm{test}} \sim \PP_{\mathrm{LLM}} ( \cdot \given   S_\mathrm{se}(n) , t_{h-1}^\mathrm{test}), \quad z_h ^{\mathrm{test} } \sim \PP_{\mathrm{LLM}} ( \cdot \given   S_\mathrm{in}(n) , \tau_{h}^\mathrm{test}), \quad t_{h}^\mathrm{test} = \{ t_h^\mathrm{test}, z_{h}^\mathrm{test} \},
\end{align}
for all $h\in [H]$. 
The final output is $y^\mathrm{test} = z^\mathrm{test}_H$. See Figure \ref{figure:si_prompt} for an illustration of the prompting process. 
Notice that when $z_0^\mathrm{test}$, $S_\mathrm{se}(n)$ and $S_\mathrm{in}(n)$ are fixed, \eqref{eq:si_prompt_output} specifies a Markov chain such that the marginal distribution of $y^\mathrm{test}$ is fully determined by the LLM. We let $\PP_{\mathrm{SI}}( y^\mathrm{test} = \cdot \given S_\mathrm{se}(n), S_\mathrm{in}(n), z_0^\mathrm{test}) $ denote such a distribution, which is essentially the estimator constructed by SI prompting.

\begin{figure}[t]
    \centering
    \includegraphics[width = 0.98\textwidth]{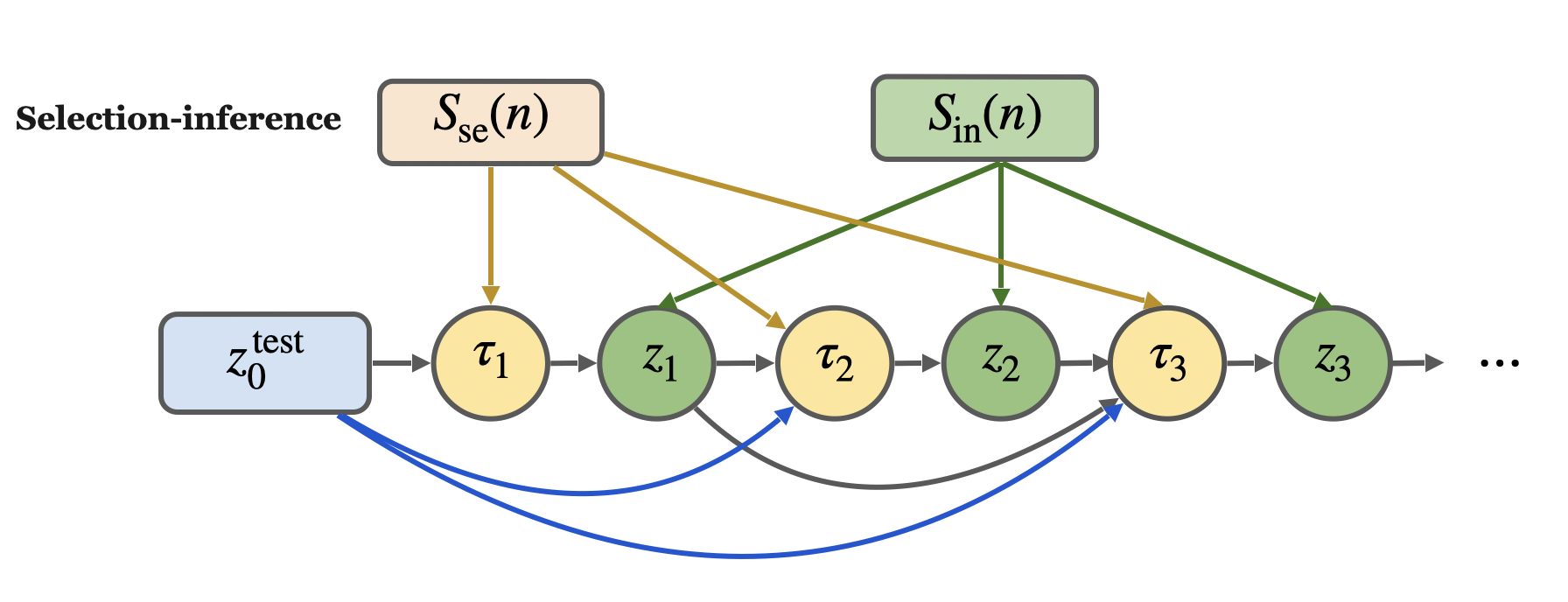}
    \caption{ An illustration of the graphical model of \ac{si} prompting. In \ac{si} prompting, each step in \ac{cot} is divided into two substeps: selection and inference. The selection module retrieves relevant facts from the context \( t_{h-1} \), and the inference module derives inference $z_h$ based on these selected facts $\tau_h$.}
    \label{figure:si_prompt}
\end{figure}

We note that SI can be viewed as a generalization of CoT in the sense that there are still $H$ reasoning steps. However, SI has an additional hierarchical structure that selects subsets of the partial histories. 
We can interpret SI as a version of \ac{cot} where each step of \ac{cot} is decomposed into two substeps, aiming to conduct Bayesian inference of 
 $\theta^*_\mathrm{se}$ and $\theta^*_\mathrm{in}$ separately.

In the following, we will establish the statistical error of the estimator constructed by SI prompting, under the assumption that the underlying data distribution is specified by the model in \eqref{eq:si_model} and the LLM is perfectly pretrained. 
We introduce an assumption in the same vein as Assumption \ref{assumption: seperation}.

\begin{assumption}\label{assumption: seperation2}
Given a  target task $\theta^*$, let $\Theta^{\complement}=\Theta \backslash \Theta_{\mathrm{eq}}(\theta^*)$ denote the complement of the equivalence class of $\theta^*$. 
We assume that there exists a strict separation between the ground truth task $\theta^*$ and any other task in $ \Theta^\complement$. 
Specifically, there exist positive numbers  $\lambda_{\mathrm{q}},\lambda_{\mathrm{I}}, \lambda_{\mathrm{S}}>0$ such that
\begin{gather*}
\inf_{\theta \in  \Theta^{\complement}} \text{H}^2 \big(\PP(z_0 = \cdot  \given \theta^*) , \PP(z_0 = \cdot \given \theta) \big) \geq\lambda_{\mathrm{q}},\\
\inf_{\theta \in  \Theta^{\complement}}\sum_{h=1}^{H} \E_{\theta^*}\text{H}^2 \big(\PP(\tau_{h} = \cdot \given \theta^*_\mathrm{se}, t_{h-1}) , \PP(\tau_{h} = \cdot \given \theta_{\mathrm{se}},t_{h-1}) \big) \geq \lambda_{\mathrm{S}},\\
    \inf_{\theta \in  \Theta^{\complement}}\sum_{h=1}^{H} \E_{\theta^*}\text{H}^2 \big(\PP(z_{h}=\cdot \given \theta^*_\mathrm{in},\tau_h) , \PP(z_{h} = \cdot \given \theta_{\mathrm{in}},\tau_h) \big) \geq \lambda_{\mathrm{I}}.
\end{gather*}
This assumption specifies the separation requirements for \(\theta_\mathrm{se}^*\) and \(\theta_\mathrm{in}^*\) individually. Based on this assumption, we establish the statistical error of  the \ac{si} estimator as follows.
    
\end{assumption}


\begin{corollary}[Sample Complexity of Selection-Inference]\label{cor: selection-inference cot}
  Consider \ac{si} prompting with $n$ examples whose distribution is given by \eqref{eq:si_model} with a given task $\theta^*(\theta_{\mathrm{se}}^*, \theta_{\mathrm{in}}^*)$. We assume that $\Theta$ is a finite set and the LLM is perfectly pretrained with data according to the model in \eqref{eq:si_model} with a prior distribution $\pi$.
 Under Assumption~\ref{assumption: seperation2}, we have 
  \begin{align*}
            \KL \Big(\PP(y^{\mathrm{test}} = \cdot \given  z_0^{\mathrm{test}},\theta^*)  ,   \PP_{\mathrm{SI} } \big(y^{\mathrm{test}} =   \cdot  \given S_\mathrm{se}(n),S_\mathrm{in}(n), z_0^{\mathrm{test}}\big) \Big) 
              \leq \mathcal{O}\big(\pi(\theta^*)^{-1}\delta^{-2} \big|{\Theta^\complement}\big|^2 e^{-2\lambda_{\mathrm{SI}} n}\big),
        \end{align*} 
        with probability $1-\delta$, where $\lambda_{\mathrm{SI}}=\lambda_{\mathrm{S}}+\lambda_{\mathrm{I}}+\lambda_{\mathrm{q}}$. 
        Here $\calO(\cdot)$ hides absolute constants and $\PP_{\mathrm{SI} }(y^{\mathrm{test}} \given S_\mathrm{se}(n),S_\mathrm{in}(n), z_0^{\mathrm{test}})  $
        is the marginal distribution of $y^{\mathrm{test}} $ according to \eqref{eq:si_prompt_output}.  

\end{corollary}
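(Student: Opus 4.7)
The plan is to reduce the Selection-Inference estimator to two nested Bayesian model averaging operations and then exploit the three-part Hellinger separation in Assumption~\ref{assumption: seperation2} to obtain exponential posterior concentration. First, since the LLM is perfectly pretrained and has enough capacity, Lemma~\ref{lem: bma.cot} applies to each of the two conditional distributions appearing in the SI recursion \eqref{eq:si_prompt_output}: the selection step $\PP_\mathrm{LLM}(\tau_h^{\mathrm{test}} \mid S_\mathrm{se}(n), t_{h-1}^{\mathrm{test}})$ admits the BMA representation $\int \PP(\tau_h \mid t_{h-1}^{\mathrm{test}}, \theta_\mathrm{se}) \, \pi(\theta \mid S_\mathrm{se}(n), t_{h-1}^{\mathrm{test}}) \, \rmd\theta$, and the inference step $\PP_\mathrm{LLM}(z_h^{\mathrm{test}} \mid S_\mathrm{in}(n), \tau_h^{\mathrm{test}})$ has the analogous form with the posterior $\pi(\theta \mid S_\mathrm{in}(n), \tau_h^{\mathrm{test}})$. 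Thus $\PP_\mathrm{SI}(y^{\mathrm{test}} \mid \cdot)$ is the marginal of a two-stage Markov chain driven by these two posteriors.

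Second, I would decompose the target KL via the chain rule on the joint trajectory $(\tau_{1:H}^{\mathrm{test}}, z_{1:H}^{\mathrm{test}})$, followed by data processing to descend to the $y^{\mathrm{test}}$ marginal. This gives a per-step expansion of the form
\begin{align*}
\KL\bigl(\PP(y^{\mathrm{test}} \mid z_0^{\mathrm{test}}, \theta^*) \,,\, \PP_\mathrm{SI}\bigr)
&\leq \sum_{h=1}^H \E\Big[\KL\bigl(\PP(\tau_h \mid t_{h-1}, \theta_\mathrm{se}^*) \,,\, \PP_\mathrm{LLM}(\tau_h \mid S_\mathrm{se}(n), t_{h-1})\bigr)\Big] \\
&\quad + \sum_{h=1}^H \E\Big[\KL\bigl(\PP(z_h \mid \tau_h, \theta_\mathrm{in}^*) \,,\, \PP_\mathrm{LLM}(z_h \mid S_\mathrm{in}(n), \tau_h)\bigr)\Big],
\end{align*}
where the expectations are under the true model. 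Each per-step KL can now be controlled by replacing the LLM's BMA mixture with a point mass at $\theta^*$ and bounding the residual in terms of the posterior mass on $\Theta^\complement$.

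Third, I would establish the posterior concentration rate. The combined observations in $S_\mathrm{se}(n) \cup S_\mathrm{in}(n) \cup \{z_0^{\mathrm{test}}\}$ supply i.i.d.\ samples of full trajectories $\{(z_0^i, \tau_{1:H}^i, z_{1:H}^i)\}_{i \in [n]}$ whose joint likelihood factorizes according to the hierarchical model \eqref{eq:si_model}. Additivity of squared Hellinger distance across the $z_0$, selection, and inference factors, together with Assumption~\ref{assumption: seperation2}, yields a total separation $\lambda_q + \lambda_S + \lambda_I = \lambda_\mathrm{SI}$ between $\theta^*$ and any $\theta \in \Theta^\complement$. Adapting the posterior concentration argument used in the proof of Theorem~\ref{thm: rate_cot}, the posterior mass on each $\theta \in \Theta^\complement$ decays like $\pi(\theta)\pi(\theta^*)^{-1} \exp(-n\lambda_\mathrm{SI})$ in expectation; a union bound over $\Theta^\complement$ then gives the overall rate. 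Converting this high-probability bound on posterior mass into a KL bound through a chi-squared-type inequality produces the squared factors $|\Theta^\complement|^2$ and $\exp(-2\lambda_\mathrm{SI} n)$, and Markov's inequality supplies the $\delta^{-2}$ dependence.

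The hardest step, and the one requiring care, is the combined posterior concentration in the third paragraph: one must argue that a single posterior over the joint $\theta = (\theta_\mathrm{se}, \theta_\mathrm{in})$ benefits simultaneously from all three separation parameters, even though the SI algorithm invokes the LLM with two disjoint conditioning sets $S_\mathrm{se}(n)$ and $S_\mathrm{in}(n)$. The cleanest route is to verify that each of these conditioning sets independently witnesses the full trajectory likelihood (since $S_\mathrm{se}(n)$ contains $t_{h-1}^i \supset \{z_0^i\}$ and selections, while the same trajectories generated $S_\mathrm{in}(n)$), so that the combined likelihood ratio test between $\theta^*$ and any alternative indeed accumulates all three squared Hellinger gaps. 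Once this identifiability is pinned down, the rest of the argument proceeds by direct analogy to Theorem~\ref{thm: rate_cot} and the bookkeeping collapses into the stated bound.
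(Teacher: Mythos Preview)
Your plan diverges from the paper's in its overall architecture. You decompose the KL per step via the chain rule into $2H$ selection-and-inference terms and then try to control each via separate posterior concentration. The paper instead applies Proposition~\ref{prop:elb} \emph{once}, treating $(S_\mathrm{se}(n), S_\mathrm{in}(n), z_0^\mathrm{test})$ as a single combined prompt, which immediately gives
\[
\KL \leq \log\biggl(1 + \frac{\sum_{\theta \in \Theta^\complement} \PP(S_\mathrm{se}(n), S_\mathrm{in}(n), z_0^\mathrm{test} \mid \theta)\,\pi(\theta)}{\PP(S_\mathrm{se}(n), S_\mathrm{in}(n), z_0^\mathrm{test} \mid \theta^*)\,\pi(\theta^*)}\biggr).
\]
Because a single trajectory's likelihood under \eqref{eq:si_model} factors as $\PP(z_0 \mid \theta)\prod_h \PP(\tau_h \mid t_{h-1}, \theta_\mathrm{se})\,\PP(z_h \mid \tau_h, \theta_\mathrm{in})$, applying Lemma~\ref{lemma: hellingerbd} to the $n$-fold log-likelihood ratio sums the three expected squared-Hellinger gaps of Assumption~\ref{assumption: seperation2} and yields the exponent $\lambda_\mathrm{SI} = \lambda_q + \lambda_S + \lambda_I$ in one stroke.

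This route entirely avoids what you call the ``hardest step.'' The worry that the two LLM calls use disjoint conditioning sets $S_\mathrm{se}(n)$ and $S_\mathrm{in}(n)$ is a self-inflicted difficulty of your per-step decomposition: once you split the KL, each selection-step posterior conditions only on $S_\mathrm{se}(n)$ and each inference-step posterior only on $S_\mathrm{in}(n)$, and you must argue each separately enjoys rate $\lambda_\mathrm{SI}$. That claim is not quite right as you state it --- for instance $S_\mathrm{in}(n)$ does not contain $z_0^i$, so its posterior misses the $\lambda_q$ contribution. The paper never splits the conditioning: the union $S_\mathrm{se}(n) \cup S_\mathrm{in}(n)$ already contains the full trajectories $\{z_{0:H}^i, \tau_{1:H}^i\}$, so the single likelihood ratio on the combined data automatically accumulates all three separation terms. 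Your approach might be salvageable with weaker per-posterior rates, but the paper's one-shot application of Proposition~\ref{prop:elb} is considerably cleaner and delivers the stated bound without any bookkeeping over $2H$ steps.
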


This corollary shows that the prompting error of the  SI  decays to zero exponentially fast as $n$ goes to infinity. 
Moreover, here the exponential factor depends on $\lambda_{\mathrm{SI}}$, which contains the separation of both the selection and inference parts. We defer the proof to Appendix~\ref{proof: selection-inference cot}.

In summary, in this part, we extend the statistical analysis of the vanilla \ac{cot} estimator to three variants of \ac{cot} --- \ac{sc}-\ac{cot}, ToT, and SI. 
We interpret these prompting methods as statistical estimators, and establish their statistical errors of them under an ideal case where the pretraining error of the LLM is zero. 
The analysis can be easily extended to the realistic case with a nonzero pretraining error, which is  separately discussed in Section \ref{subsec: imperfect}.



\subsection{Vanilla CoT versus Vanilla ICL and Truncated CoT} \label{subsubsec: comparison}

Recall that vanilla \ac{icl} is a special case of \ac{cot} without intermediate reasoning steps, i.e.,  $H = 1$. 
In the following, we aim to address  Question (d) raised in the introduction by directly comparing vanilla \ac{icl} and \ac{cot} under the same model. 
We focus on the latent variable model in \eqref{eq:latent_var_model}.
Let $\theta^*$ denote the task during the prompting stage, and let  \(\pt_\mathrm{ICL}(n) = \{z_0^{i}, z_H^i\}_{i=1}^n \cup \{ 
 z_0^\mathrm{test} \} \) and \(\pt_\mathrm{CoT}(n) = \{z_{0:H}^{i}\}_{i=1}^n \cup \{  z_0^\mathrm{test}\} \) denote a \ac{icl} prompt and a \ac{cot} prompt respectively. 
 Thus, vanilla \ac{icl} and \ac{cot} yield estimators $\PP_{\mathrm{LLM}} ( y^\mathrm{test} = \cdot \given \pt_\mathrm{ICL}(n))$ and $\PP_{\mathrm{LLM}} ( y^\mathrm{test} = \cdot \given \pt_\mathrm{CoT}(n))$ respectively.

Recall that we show in Section \ref{subsec: attention parameterizes bma} that the \ac{cot} estimator based on a perfectly pretrained LLM corresponds to a Bayesian model averaging estimator. 
Such a claim also holds for vanilla \ac{icl}.
Therefore, we have 
\begin{align}\label{eq:icl_estimator}
\PP_{\mathrm{LLM}} ( y^\mathrm{test} = \cdot \given \pt_\mathrm{ICL}(n)) \approx \int _{\Theta} \PP \big(y^{\mathrm{test}} = \cdot  \given \theta, z_0^\mathrm{test}\big) \cdot \pi(\theta \given \pt_\mathrm{ICL}(n) )\rmd \theta,
\end{align}
where $\PP (y^{\mathrm{test}} = \cdot  \given \theta, z_0^\mathrm{test})$ is the marginal distribution of $y^{\mathrm{test}}$ given $z_0^\mathrm{test}$ under the model in \eqref{eq:latent_var_model} with parameter $\theta$, and $\pi(\theta \given \pt_\mathrm{ICL}(n) )$ is the posterior distribution. We will justify \eqref{eq:icl_estimator} in Appendix \ref{proof: dom2}. 
The following proposition shows that CoT always outperforms vanilla ICL in an average sense.


\begin{proposition}[CoT Outperforms Vanilla ICL]\label{prop: dom}
Let $\pi$ denote the prior distribution over $\Theta$.  Consider the ideal case where the LLM is perfectly pretrained, for any number of demonstration examples $n\geq 0$, we have 
\begin{align*}
    &\E_{\theta^* \sim \pi} \E_{\mathtt{prompt}_{\mathrm{CoT}}(n)\sim \PP(\cdot \given \theta^*)} \bigg[\KL \Big(\PP_{\mathrm{LLM}} (y^{\mathrm{test}}=\cdot \given z_0^{\mathrm{test}},\theta^*)  ,   \PP\big(y^{\mathrm{test}}=\cdot \given  \mathtt{prompt}_{\mathrm{CoT}}(n)\big) \Big) \bigg]\\
    & \quad \leq \E_{\theta^* \sim \pi}\E_{\mathtt{prompt}_{\mathrm{ICL}}(n)\sim \PP(\cdot \given \theta^*)}\bigg[\KL \Big(\PP(y^{\mathrm{test}} = \cdot \given z_0^{\mathrm{test}},\theta^*)  ,   \PP\big(y^{\mathrm{test}} = \cdot  \given \mathtt{prompt}_{\mathrm{ICL}}(n)\big) \Big)\bigg].
\end{align*}
    
\end{proposition}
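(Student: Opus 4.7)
The plan is to rewrite both expected KL divergences as conditional mutual informations under the joint distribution induced by the prior and the model in \eqref{eq:latent_var_model}, and then exploit the fact that $\pt_{\mathrm{ICL}}(n)$ is a deterministic function of $\pt_{\mathrm{CoT}}(n)$.

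\textbf{Step 1: Reduce to BMA and set up conditional mutual information.} Under the perfect pretraining assumption of Proposition \ref{prop: dom}, Lemma \ref{lem: bma.cot} (and its ICL counterpart in \eqref{eq:icl_estimator}) gives $\PP_{\mathrm{LLM}}(y^\mathrm{test} = \cdot \mid \pt) = \PP(y^\mathrm{test} = \cdot \mid \pt)$ for $\pt \in \{\pt_{\mathrm{CoT}}(n), \pt_{\mathrm{ICL}}(n)\}$. Moreover, by the conditional i.i.d.\ structure of the demonstrations in \eqref{eq:latent_var_model}, and since $z_0^\mathrm{test}$ is part of every prompt, one has the conditional independence
\begin{align*}
y^\mathrm{test} \perp \bigl(\pt \setminus \{z_0^\mathrm{test}\}\bigr) \,\big|\, (\theta^*, z_0^\mathrm{test}),
\end{align*}
so that $\PP(y^\mathrm{test} \mid \theta^*, \pt) = \PP(y^\mathrm{test} \mid \theta^*, z_0^\mathrm{test})$. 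Plugging this into the definition of KL and taking expectation over $(\theta^*, \pt)$ yields
\begin{align*}
\E_{\theta^*}\E_{\pt\mid\theta^*}\bigl[\KL\bigl(\PP(y^\mathrm{test}\mid z_0^\mathrm{test},\theta^*),\, \PP(y^\mathrm{test}\mid \pt)\bigr)\bigr] = I(y^\mathrm{test};\theta^* \mid \pt),
\end{align*}
where $I(\cdot;\cdot\mid\cdot)$ denotes conditional mutual information under the joint law $\pi(\theta^*)\PP(\pt,y^\mathrm{test}\mid\theta^*)$.

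\textbf{Step 2: Compare the two mutual informations.} Let $\mathbb{H}(\cdot\mid\cdot)$ denote Shannon conditional entropy (distinguished from the reasoning depth $H$). Expanding
\begin{align*}
I(y^\mathrm{test};\theta^*\mid\pt) = \mathbb{H}(y^\mathrm{test}\mid\pt) - \mathbb{H}(y^\mathrm{test}\mid\pt,\theta^*),
\end{align*}
the conditional independence from Step 1 collapses the second term to $\mathbb{H}(y^\mathrm{test}\mid z_0^\mathrm{test},\theta^*)$ in both the CoT and ICL settings. Subtracting therefore gives
\begin{align*}
I(y^\mathrm{test};\theta^*\mid\pt_{\mathrm{CoT}}(n)) - I(y^\mathrm{test};\theta^*\mid\pt_{\mathrm{ICL}}(n)) = \mathbb{H}(y^\mathrm{test}\mid\pt_{\mathrm{CoT}}(n)) - \mathbb{H}(y^\mathrm{test}\mid\pt_{\mathrm{ICL}}(n)).
\end{align*}
Since $\pt_{\mathrm{ICL}}(n)$ is obtained from $\pt_{\mathrm{CoT}}(n)$ by discarding the intermediate steps $\{z_{1:H-1}^i\}_{i=1}^n$, it is a measurable function of $\pt_{\mathrm{CoT}}(n)$, so the sigma-algebra generated by $\pt_{\mathrm{CoT}}(n)$ refines that generated by $\pt_{\mathrm{ICL}}(n)$. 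The standard ``conditioning on a finer sigma-algebra reduces entropy'' inequality makes the right-hand side nonpositive, which concludes the proof.

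\textbf{Main obstacle.} The only nontrivial point is rigorously verifying the conditional independence in Step 1 for the ICL prompt, which retains only the input--output pairs $\{(z_0^i, z_H^i)\}_{i=1}^n$; one must check that knowing these pairs adds nothing about $y^\mathrm{test}$ beyond $(\theta^*, z_0^\mathrm{test})$. This follows from the dynamical form in \eqref{eq:latent_var_model}: each $s^i$ is a measurable function of $(\theta^*, \zeta^i, \epsilon_{1:H}^i)$ with independent noises across $i$ and with respect to $(\zeta^\mathrm{test}, \epsilon_{1:H}^\mathrm{test})$, while $y^\mathrm{test}$ depends only on $(\theta^*, \zeta^\mathrm{test}, \epsilon_{1:H}^\mathrm{test})$. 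Once this factorization is recorded, the information-theoretic monotonicity in Step 2 is immediate.
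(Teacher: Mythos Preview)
Your proof is correct and rests on the same core observation as the paper's, but the packaging differs. The paper (in the proof of Corollary~\ref{prop: dom2}, which contains Proposition~\ref{prop: dom} as the special case $\cJ=\emptyset$, $\cJ'=[H-1]$) computes the difference of the two expected KL divergences directly: it writes the difference as an expected log-ratio, then uses Bayes' rule to integrate out $\theta^*$ against its posterior, thereby converting the $y^\mathrm{test}$-expectation from $\PP(\cdot\mid z_0^\mathrm{test},\theta^*)$ to $\PP(\cdot\mid\pt_{\mathrm{CoT}}(n))$, and finally identifies the result as $\E_{\PP}\bigl[\KL\bigl(\PP(y^\mathrm{test}\mid\pt_{\mathrm{CoT}}(n)),\PP(y^\mathrm{test}\mid\pt_{\mathrm{ICL}}(n))\bigr)\bigr]\geq 0$. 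Your route instead recognizes each expected KL as a conditional mutual information $I(y^\mathrm{test};\theta^*\mid\pt)$, expands via entropy, cancels the common term $\mathbb{H}(y^\mathrm{test}\mid z_0^\mathrm{test},\theta^*)$, and reduces to the entropy-monotonicity inequality $\mathbb{H}(y^\mathrm{test}\mid\pt_{\mathrm{CoT}}(n))\leq\mathbb{H}(y^\mathrm{test}\mid\pt_{\mathrm{ICL}}(n))$. The two are equivalent---your entropy difference \emph{is} the expected KL the paper writes down---but your formulation is arguably cleaner, invoking standard information-theoretic identities and sidestepping the explicit change-of-measure step. The paper's version, on the other hand, makes the nonnegativity witness (a concrete KL divergence) explicit, which is convenient for the generalized comparison of two nested index sets $\cJ\subseteq\cJ'$ in Corollary~\ref{prop: dom2}.
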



This proposition shows that averaged over the randomness of the task $\theta\sim \pi$ and prompts, {\bf \ac{cot} is at least as good as vanilla \ac{icl}}. Intuitively, this makes sense because conditioning more information yields a better posterior estimator. Since these estimators can both be interpreted as BMA estimators, having a better posterior leads to a smaller statistical error. 

We can also extend this property to truncated \ac{cot} methods, which refers to prompting with demonstrations that omit some intermediate steps. 
More precisely, let $\calJ \subset [H-1]$ contain the indices of intermediate steps that are included in the reasoning path.
We define a truncated \ac{cot} prompt with $n$ examples as  $\pt_\calJ(n) = \{z_0^i,y^i\}_{i=1}^n \cup \{z_{j}^{i}\}_{j\in \calJ}\cup\{ z_0^\mathrm{test}\} $.  
Then vanilla \ac{icl} is a special case where all intermediate steps are omitted, i.e., $\calJ = \emptyset$, and \ac{cot} corresponds to the case where $\cJ = [H-1]$. We extend Proposition~\ref{prop: dom} to such a general case in Appendix~\ref{proof: dom2}, which shows that including more reasoning steps in the prompt is always beneficial in an average sense. 

However, we would like to emphasize that 
the dominance of CoT over vanilla ICL \textbf{does not hold pointwisely} for an arbitrary task $\theta^* \in \Theta$. 
In other words, it is possible that there exists a task $\theta^*$ and $n$ prompt examples such that \ac{cot} is worse than vanilla ICL. 
Intuitively, this happens when the intermediate reasoning paths are not sufficiently informative. 
This phenomenon is empirically observed in \cite{lanham2023measuring} on the HellaSwag benchmark \citep{zellers2019hellaswag}. In the following, we also provide numerical experiments based on a specially designed toy task to illustrate this fact.

\vspace{2mm}
{\noindent \bf Vanilla ICL vs. COT on the CityEquation Task.} 
We handcraft an arithmetic reasoning task named ``CityEquation", which involves solving arithmetic calculations based on city names. 
Each equation involves addition $(+)$ or the minus $(-)$ operations between city names,
where the output is obtained by evaluating the formula with city names substituted by their longitudes. 
For instance, ``$\mathtt{Paris} + \mathtt{Beijing} = 118$'' because the longitudes of Paris and Beijing are $2$ and $116$ respectively.


\vspace{2mm}
{\noindent \bf Data Construction.}  We choose $20$ major cities around the world, and generate random city equations by randomly selecting two cities and an operation in $\{+,-\}$. 
We construct the test data set using $200$ distinct equations and use another $10$ different equation as the examples in the prompting stage.  

\vspace{2mm}
{\noindent \bf Prompting Methods and Results.} 
We test five prompting methods: vanilla \ac{icl} and four \ac{cot} variants. 
We consider an \emph{informative} version of \ac{cot} that includes the full reasoning path and four \emph{partially informative} versions that either contain some irrelevant facts or omit some relevant intermediate steps. 
In particular, in \emph{partially informative \ac{cot}-(b)}, we include some demographic information of the cities in the equations, which, although truthful facts, are not related to longitudes, which is the key to getting the final answer. 
Then the last two versions additionally include some useful reasoning steps. See Table \ref{table:city_eqn_prompt} for an example in the prompts and Appendix~\ref{app: prompts} for more details. 
When evaluating these methods, we include 10 examples in the prompt, followed by a new testing instance.
The prompt is passed to GPT-4 \citep{achiam2023gpt} with the temperature set to zero, and the reported answer is compared with the desired answer to evaluate the accuracy. 
We report the average accuracy over 200 random testing instances in Table \ref{table:city_name_results}. 


\begin{table}[t]
    \centering
    \begin{tabular}{p{0.35\linewidth} | p{0.6\linewidth}}
    \hline
    \hline
      Type   & An Example \\ \hline
      Vanilla \ac{icl}   & \textbf{Q}: ``London - Lagos'' \textbf{A}: ``-3.''
 \\
 \hline
      \textcolor{orange}{Informative} \ac{cot}   & \textbf{Q}: ``London - Lagos'' \textbf{A}: `\textcolor{blue}{Using the longitudes of cities, the equation ``London - Lagos'' translates as ``London'' = 0, ``Lagos'' = 3. Here the longitudes of the western hemisphere are negative numbers. And we round the coordinates to the nearest integer. This gives the result.} The answer is -3.''\\
      \hline
      \textcolor{blue}{Partially informative} \ac{cot}-\textcolor{blue}{(a)}   & \textbf{Q}: ``London - Lagos''\textbf{A}: ``\textcolor{blue}{London has longitude: 0.} The answer is -3.''\\
      \hline
      \textcolor{red}{Partially informative} \ac{cot}-\textcolor{red}{(b)}  & \textbf{Q}: ``London - Lagos'' \textbf{A}: ``\textcolor{red}{London is home to approximately 9 million residents, with 59.8 percent being White, 18.5 percent Asian, 13.3 percent Black, 5 percent Mixed, and 3.4 percent identifying as Other.} The answer is -3.''\\
      \hline
      \textcolor{brown}{Partially informative} \ac{cot}-\textcolor{brown}{(c)} & \textbf{Q}: ``London - Lagos'' \textbf{A}: ``\textcolor{red}{London is home to approximately 9 million residents, with 59.8 percent being White, 18.5 percent Asian, 13.3 percent Black, 5 percent Mixed, and 3.4 percent identifying as Other.} \textcolor{blue}{London has longitude: 0.} The answer is -3.''\\
\hline
      \textcolor{purple}{Partially informative} \ac{cot}-\textcolor{purple}{(d)}  & \textbf{Q}: ``London - Lagos'' \textbf{A}: ``\textcolor{red}{London is home to approximately 9 million residents, with 59.8 percent being White, 18.5 percent Asian, 13.3 percent Black, 5 percent Mixed, and 3.4 percent identifying as Other.} \textcolor{blue}{London has longitude: 0. Lagos has longitude: 3.} The answer is -3.''\\
    \hline
    \hline
    \end{tabular}

    \caption{An example of the five prompting methods evaluated on  the CityEquation task. }
    \label{table:city_eqn_prompt}
\end{table}

 





 \begin{table}[t]
    \centering
    \begin{tabular}{ 
    p{0.1\linewidth} | p{0.13\linewidth} | p{0.15\linewidth} | p{0.15\linewidth}| 
    p{0.15\linewidth} |
    p{0.15\linewidth}}
    \hline \hline 
     Vanilla \ac{icl} & \textcolor{orange}{Informative} \ac{cot} & \textcolor{blue}{PI} \ac{cot}-\textcolor{blue}{(a)} & \textcolor{red}{PI} \ac{cot}-\textcolor{red}{(b)} & \textcolor{brown}{PI} \ac{cot}-\textcolor{brown}{(c)} & \textcolor{purple}{PI} \ac{cot}-\textcolor{purple}{(d)}\\
     \hline
     59.5\% & 81.5\% &70.5\%  & 2.5\% & 66\% & 80\%\\
     \hline \hline 
    \end{tabular}

    \caption{Average accuracy of the five prompting methods on the CityEquation task. The results are based on 200 random testing instances. ``PI CoT'' stands for the partially informative chain of thought method. }
    \label{table:city_name_results}
\end{table}

As shown in this table, informative \ac{cot} achieves the highest accuracy at $81.5\%$, the four versions of partially informative (PI) \ac{cot} have accuracy levels of $70.5\%$,  $2.5\%$, $66\%$, and $80\%$, respectively. 
The errors made by informative \ac{cot} are due to rounding errors.
Moreover, compared to vanilla \ac{icl}, version (a) of PI CoT includes a piece of relevant information (longitude of London), which significantly helps the reasoning. 
Comparing vanilla \ac{icl} to versions (b)--(d), we see that including more relevant information in the intermediate reasoning steps improves the accuracy. 
In particular, version (b)--(d) include the first one to three steps of the same reasoning steps, where the first reasoning step is a piece of irrelevant information. 
Version (b) of PI COT, only including such irrelevant information, performs drastically worse than vanilla ICL. 
This observation supports our remark that CoT does not always outperforms vanilla ICL. 
 Note that here the intermediate steps added in the version (b) of PI CoT are true facts. 

In conclusion, \ac{cot} prompting can perform worse than vanilla \ac{icl}. While \ac{cot} offers more information, its effectiveness depends on the relevance of the additional information provided in intermediate steps. Truncated informative \ac{cot} slightly enhances vanilla \ac{icl}'s accuracy by hinting at relevant details like longitude. Informative \ac{cot} performs even better by outlining the entire reasoning process. In contrast, uninformative \ac{cot}, despite providing more information, introduces irrelevant details that disrupt performance. Partially informative \ac{cot} combines useful and irrelevant information, resulting in accuracy between that of truncated informative \ac{cot} and uninformative \ac{cot}.

\section{Statistical Errors  of CoT with  Pretraining Errors}\label{subsec: imperfect}

Recall that in Lemma~\ref{lem:error_decomp}  we decompose the \ac{cot} error $\mathtt{err}_\mathrm{CoT}$ into  (i) a  pretraining error~\eqref{eq: err pre} and (ii) a prompting error~\eqref{eq: err pt}. 
We have analyzed the prompting error in Section~\ref{subsec: perfect}. In this section, we establish a statistical analysis of the pretraining error and then obtain a complete characterization of the error of the estimator constructed by ``pretrained LLM + \ac{cot} prompting". Thus, in this section, we answer Question (b) raised in the introduction. 



We rigorously describe the pretraining process in Section~\ref{subsubsec: pretraining process}. Next, in Section~\ref{subsubsec: Build Approximators using Transformers}, we construct a class of transformer networks that directly approximates the underlying distribution $\PP$. 
We characterize the pretraining error in 
Section~\ref{subsubsec: Pretraining Performance Analysis}  and establish the statistical errors of \ac{cot} under realistic assumptions in Section~\ref{subsubsec: cot Convergence Rate with Pretraining Error}. 


\subsection{Setup of LLM Pretraining} \label{subsubsec: pretraining process}

Recall that we introduce the pretraining of \ac{llm} in Section \ref{subsec: pre-training and cot prompting under the statistical model}. We consider the pretraining of an autoregressive \ac{llm} with data sampled from the model in \eqref{eq:latent_var_model}. 
The \ac{llm} is a transformer that maps a sequence of reasoning steps to a probability distribution.  

\vspace{2mm}
{\noindent \bf Transformer Architecture.} 
We let $\mathcal{TF} ( D, \eta, r, d_F, d_{k}, d_{v})$ denote the class of transformers that maps a sequence of reasoning steps to a probability distribution over $\cL$, where the input sequence is embedded in $\RR^r$, followed by $D$ sequentially stacked transformer blocks and a final softmax layer that outputs a distribution. 
Here the embedding contains both token and positional embedding. 
Moreover, each of the $D$ transformer blocks includes a multi-head attention (MHA) layer with $\eta$ parallel heads and a fully connected feedforward (FF) layer. 
The embedding dimensions of the queries, keys, and values in \ac{mha} in \eqref{eq: mha} are $d_{k}$, $d_{k}$, and $d_{v}$, respectively. Here queries and keys share the same dimension to calculate the inner product. We assume that $d_{v}=r$ to guarantee that the output dimension is the same as the input dimension, which avoids defining the dimensions for modules in all the layers. The results for the general case can be easily generalized. For the \ac{ff} layer in \eqref{eq: ffn}, the dimensions of the hidden feature and the output are $d_{F}$ and $r$, respectively. Both components have residual connections,  followed by layer normalization. 
See Figure~\ref{fig:pre-train-intext} for an illustration of the transformer architecture, where the transformer block consisting of a \ac{mha} and a \ac{ff} layer is illustrated on the right.


In terms of the network parameters, for any $d\in [D]$, we let $W_{\mathrm{mha}}^d = (W^{Q,d}_i, W^{K,d}_i, W^{V,d}_i)_{i=1}^\eta$ denote the weight matrices of the $\eta$ heads, and let 
$W^d_{\mathrm{ff},1}$ and  $W^d_{\mathrm{ff},2}$ denote the weight matrices of the FF layer. 
The mathematical expressions of \ac{mha} and \ac{ff} layers are given in \eqref{eq: mha} and \eqref{eq: ffn}. We adopt $\gamma_1^d, \gamma_2^d \in \RR^{r\times r} $ to denote the parameters of the residual links in the $d$-th module. 
Moreover, for the output softmax layer, we fix the temperature as $\tau$ and let $W_\mathrm{softmax} $ denote the weight matrix. 
For ease of presentation, we defer the mathematical details of the transformer to Appendix~\ref{app: pre-training process}. We let $\rho $ denote all the network parameters of the transformer. Furthermore, we consider a bounded transformer class with parameters bounded in
\begin{align}
    \cP_{\mathrm{LLM}} &= \Big\{\rho : \| \gamma_1^d\|_\infty, \|\gamma_2^d\|_\infty \leq 1, \|W^{Q, d}_i\|_F ,\|W^{K, d}_i\|_F ,\|W^{V, d}_i\|_F \leq B_M, \| 
    W_{\mathrm{ff},1}^{d}\|_F,\|W_{\mathrm{ff},2}^{d}\|_F \leq B_{F}, \notag \\
    & \qquad  
     \|W_{\mathrm{softmax}}\|_{1,2}\leq B_S, \forall d \in [D], i \in [\eta] \Big\} \subseteq \mathcal{TF}(D, \eta, r, d_F, d_k, d_v),\label{eq:P_LLM_class}
\end{align}
where $B_M, B_F, B_S$ are the upper bounds are upper bounds of the norm of weight matrices. We assume these parameters are fixed and larger than one.



\begin{figure}[h]
    \centering
    \includegraphics[width=0.65\textwidth]{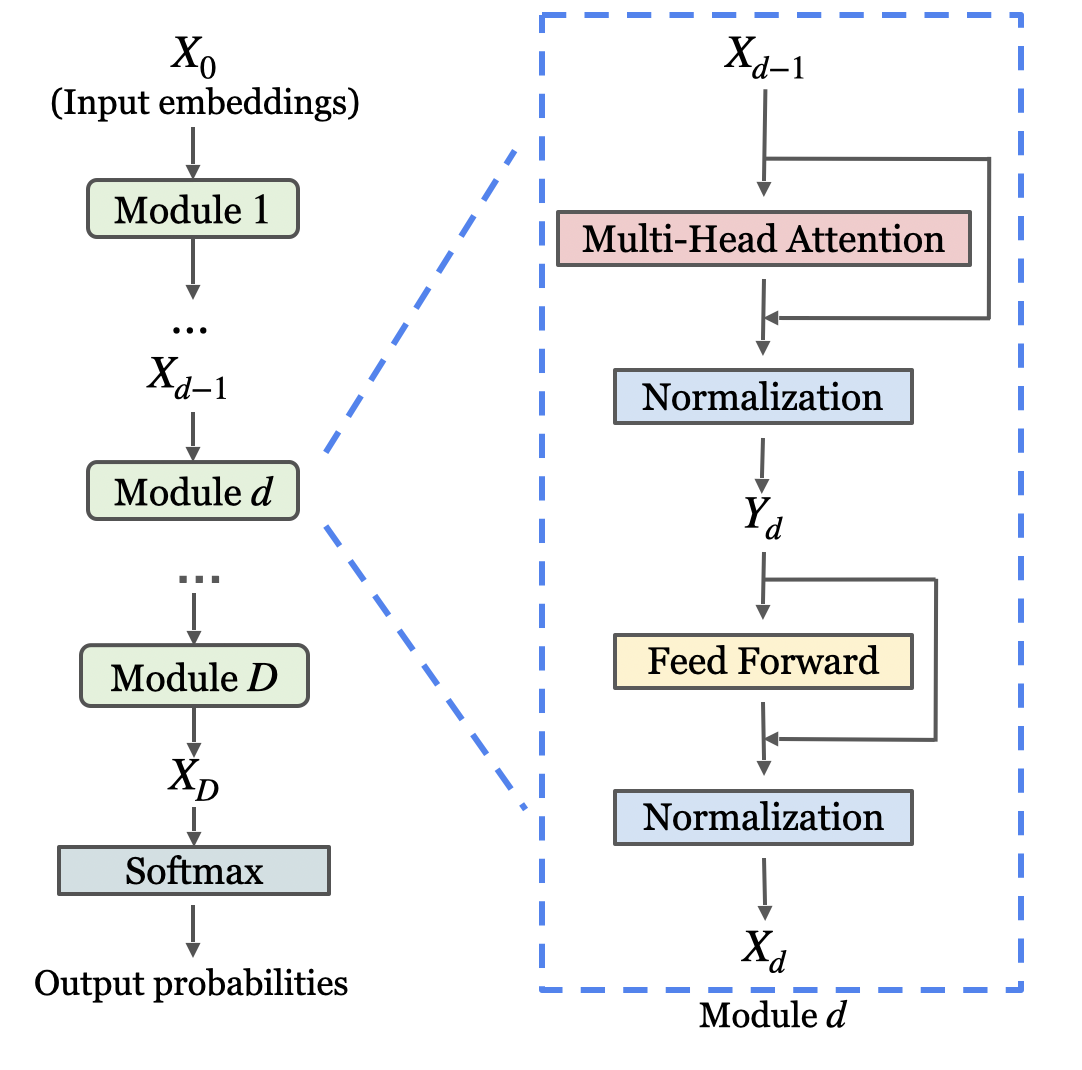}
    \caption{An illustration of a transformer with depth $D$.   The left-hand side shows the network with $D$ sequentially stacked transformer blocks followed by a final softmax layer.  The right-hand side zooms in on a single transformer block, which comprises a \ac{mha} layer and a \ac{ff} layer, connected by normalization layers. }
    \label{fig:pre-train-intext}
\end{figure}

\vspace{2mm}
{\noindent \bf Pretraining Data and MLE Estimation.} The dataset, denoted by $\calD_{N,T}$  contains $N$ independent trajectories, each with $T$ examples. 
For each trajectory $\ell\leq N$, we first sample an i.i.d. task $\theta_\ell^*$ from the prior $  \pi$. 
Conditioning on the task parameter $\theta_\ell^*$, we generate $T$ examples $\{s^{k,\ell}\}_{k=1}^T \sim \PP(\cdot \given \theta_{\ell} ^*)$ according to the model in \eqref{eq:latent_var_model} and concatenate them to form a trajectory. 
Here $s^{k,\ell} = (z_0^{k,\ell}, \cdots, z_H^{k,\ell})$ denotes the $k$-th example of the task $\theta_\ell^*$, and we view each $z_j^{k, \ell}$ as a reasoning step.
Thus a 
sequence contains $T(H+1)$ elements in total. 
For any $h \geq 0$, the reasoning steps before $z_{h}^{t, \ell}$ is denoted by 
 $S_h^{t,\ell} =(\Upsilon_{t-1,\ell},\{z_j^{t,\ell}\}_{j=0}^{h-1})$, where $\Upsilon_{t-1,\ell}$ contains the first $t-1$ examples of the $\ell$-th trajectory. 
 Then we can write the dataset
 $\mathcal{D}_{N,T}$ as $  \{(S_h^{t,\ell} ,z_{h}^{t,\ell})\}_{h=0,t=1, \ell=1}^{H, T, N}$. 
The pre-trained \ac{llm}, denoted by $\hat \rho$, as defined in \eqref{eq:pretraining_loss},  is obtained by solving the maximum likelihood estimation (MLE) based on the dataset $\cD_{N,T}$. 
We set $\PP_\mathrm{LLM} = \PP_{\hat \rho}$, where  $\PP_{\hat \rho} $ denotes the conditional distribution specified by the transformer with parameter $\hat \rho$. 
We neglect the optimization issue and assume that the MLE in \eqref{eq:pretraining_loss} can be obtained.  
We note that when the transformer class is sufficiently expressive, we expect that $\PP_\mathrm{LLM} $ learns the conditional distribution of \(z_h^{t,\ell}\) given   \(S_h^{t,\ell}\), which is given in \eqref{eq:bayes_factorization}.  


   We note that in the pretraining process described above, we train a transformer that takes all sequences of the reasoning steps \( S \in \calL^* \) as input and predicts the next reasoning step \( z \in \calL \). This setup can be easily generalized to the autoregressive prediction of the next token instead of the next reasoning step based on the prompt.

\subsection{Pretraining Error Analysis} \label{subsubsec: Pretraining Performance Analysis}

We will show that the pretraining error can be written as a sum of an \textbf{approximation error} and a \textbf{generalization error}. 
The analysis is based on the PAC-Bayes framework \citep{mcallester1998some, alquier2021user}. 
Before presenting this result, 
we introduce two regularity assumptions as follows.

\begin{assumption} \label{assumption: bd magnitude}
Note that we assume that each reasoning step in $\cL$ is identified with a unique Euclidean vector.
We assume that $\cL$ is a bounded set. That is, there exists $R >0$
    We assume that there exists $R>0$ such that $\|z \|_2 \leq R $ for all $z \in \cL$. 
\end{assumption}

This assumption ensures that the input space of the transformer network is bounded, which is commonly imposed by the literature on nonparametric statistics \citep{zhang2023and}.


\begin{assumption} \label{assumption: lower bd}
    For the model in \eqref{eq:latent_var_model}, we assume that for any $z \in \mathcal{L}, \theta \in \Theta$, and any sequence of reasoning steps $S \in \calL^*$, $\PP(z \given S, \theta)>c_0$ for some constant $c_0>0$.
\end{assumption}
This assumption requires the conditional probability of the next reasoning step $z$ to be lower-bounded at any element of $\cL$. 
This means that the generation of the reasoning path is stochastic. 
Similar assumptions have also been imposed in existing works~\citep{xie2021explanation,jiang2023latent}. 
As we will see in Appendix~\ref{proof: pre-train}, 
this assumption implies Assumption~\ref{assump: density bd}  with $b^* = \log (\max\{c_0^{-1}, 1+|\calL|\exp(B_S/\tau)\})$, where $B_{S}$ appears in \eqref{eq:P_LLM_class}.

Besides, to simplify the notation, we let $\E_{S\sim\mathcal{D}} $ denote the empirical distribution with respect to the pretraining data set $\cD_{N,T}$. 
Specifically, for any function 
  $f:\mathcal{L}^* \rightarrow \mathbb{R}$  we define 
  \begin{align}
      \E_{S\sim\mathcal{D}}[f(S)]= N^{-1} (H+1)^{-1} \cdot T^{-1} \sum_{\ell=1}^N \sum_{t=1}^{T}\sum_{h=0}^H \E [f(S_h^{t, \ell})], \label{eq:avg_ex}
  \end{align}
  where the expectation is taken with respect to the joint distribution of $\cD_{N,T}$. 
  We establish the pretraining error in the following proposition.

\begin{proposition}[Pretraining Error Bound] \label{prop: pre-train}
Under Assumptions~\ref{assumption: bd magnitude} and \ref{assumption: lower bd}, with probability at least $1-\delta$,  the pretrained \ac{llm} $\PP_{\hrho}$ in \eqref{eq:pretraining_loss} satisfies that 
     \begin{align*}
        &\E_{S \sim\mathcal{D}}\Big[\TV\big(\PP(\cdot\given  S ),\PP_{\hat \rho}(\cdot \given  S )\big)\Big]\\
        &\quad=O\bigg(\underbrace{\inf_{\rho^{*}\in \calP_{\mathrm{LLM}}}\sqrt{\E_{ S \sim\mathcal{D}}\KL\big(\PP(\cdot| S ),\PP_{\rho^{*}}(\cdot| S )\big)}+\frac{\sqrt{b^*}\log (TH/\delta)}{N^{1/4}}}_{\displaystyle \text{approximation error}}\nonumber\\
        &\quad\qquad\qquad\qquad\qquad\qquad\qquad+\underbrace{\frac{1}{\sqrt{N}}\Big(\bar D\log(1+NTH\barB)+\log\frac{TH}{\delta}}_{\displaystyle \text{generalization error}}\Big)\bigg),
    \end{align*}
     where $\bar B=\tau^{-1}R\eta B_{S}B_F^2B_M^3$ and $\bar D=D^{2} r (d_{F}+d_k+r)+r\cdot d_{y}$ are parameters determined by the transformer architecture in \eqref{eq:P_LLM_class}. Besides, we have $b^* = \log \big(\max\{c_0^{-1}, 1+|\calL|\exp( B_S/\tau)\}\big)$. We use $\Delta_\mathrm{pre}(N,T,\delta)$ to denote the right-hand side of this equation.
    
\end{proposition}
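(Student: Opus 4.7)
The plan is to first reduce the claim about TV distance to a claim about the expected squared Hellinger distance, using the standard inequality $\TV(\PP,\QQ) \leq \sqrt{2}\,\text{H}(\PP,\QQ)$ together with Jensen, giving
\[
\E_{S\sim\calD}\big[\TV(\PP(\cdot\given S), \PP_{\hat\rho}(\cdot\given S))\big] \leq \sqrt{2\,\E_{S\sim\calD}\big[\text{H}^2(\PP(\cdot\given S), \PP_{\hat\rho}(\cdot\given S))\big]}.
\]
This square-root structure is exactly what turns an $1/\sqrt{N}$ generalization bound on the squared Hellinger into the $N^{-1/4}$ rate appearing in the approximation term of the statement.

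Second, I would control the expected squared Hellinger error of the MLE by a PAC-Bayes / empirical-process argument. For any reference $\rho^* \in \calP_{\mathrm{LLM}}$, the MLE optimality gives $\sum_{\ell,t,h}\log [\PP_{\rho^*}(z_h^{t,\ell}\given S_h^{t,\ell})/\PP_{\hat\rho}(z_h^{t,\ell}\given S_h^{t,\ell})] \leq 0$. Combining this with a standard MLE-to-Hellinger lemma and a Bernstein-type concentration bound for the log-likelihood ratio, one obtains, with probability at least $1-\delta$,
\[
\E_{S\sim\calD}\big[\text{H}^2(\PP(\cdot\given S), \PP_{\hat\rho}(\cdot\given S))\big] = O\!\left(\E_{S\sim\calD}\KL\big(\PP(\cdot\given S), \PP_{\rho^*}(\cdot\given S)\big) + \tfrac{\log \calN(\calP_{\mathrm{LLM}},\epsilon_0) + \log(TH/\delta)}{N} + \sqrt{\tfrac{b^*}{N}}\log\tfrac{TH}{\delta}\right),
\]
after discretizing $\calP_{\mathrm{LLM}}$ at scale $\epsilon_0$. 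The boundedness of the log-likelihood ratio, which is required by Bernstein, follows from Assumption~\ref{assumption: lower bd} on the ground-truth conditional together with the weight/softmax bounds in $\calP_{\mathrm{LLM}}$, and is precisely quantified by $b^* = \log(\max\{c_0^{-1}, 1+|\calL|\exp(B_S/\tau)\})$ as asserted. Taking the infimum over $\rho^*$ and substituting into the Jensen bound gives the two-term structure of the theorem.

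Third, I would establish the covering number $\calN(\calP_{\mathrm{LLM}},\epsilon)$ of the transformer-induced density class via a layer-by-layer perturbation analysis. A perturbation of any one of the query/key/value/feedforward weight matrices propagates, after layer normalization, residual connection, and the final softmax readout, with amplification at most $\bar B = \tau^{-1}R\eta B_S B_F^2 B_M^3$ acting on outputs whose norm is controlled by Assumption~\ref{assumption: bd magnitude}. Composing this block-wise Lipschitz estimate across the $D$ stacked blocks and counting the $\bar D = D^2 r(d_F+d_k+r) + r d_y$ free parameters yields $\log \calN(\calP_{\mathrm{LLM}},\epsilon) = O(\bar D \log(\bar B/\epsilon))$. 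Choosing $\epsilon_0 = 1/(NTH)$ gives exactly the generalization term $N^{-1/2}(\bar D \log(1+NTH\bar B) + \log(TH/\delta))$.

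The main obstacle will be the layer-by-layer Lipschitz analysis, since the softmax inside each attention head is not globally Lipschitz and the layer-normalization Jacobian depends on the current hidden activations. The argument must exploit Assumption~\ref{assumption: bd magnitude} together with the boundedness of query--key inner products (via $B_M$) to obtain a quantitative Lipschitz constant, and must carefully track the constant through residual connections so that the per-block amplification is exactly $\bar B$ and the parameter count absorbs the depth factor into $\bar D$ rather than into the $\log$. Once this covering estimate is secured, the Bernstein concentration on a fixed $\epsilon_0$-net, the union bound over the net, and the Hellinger-from-MLE reduction are standard and directly produce the stated bound.
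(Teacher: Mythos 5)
Your proposal is correct in its overall architecture and reaches the same three-term bound, but it substitutes a different device for the complexity control. The paper runs a PAC-Bayes argument: it applies the Donsker--Varadhan variational formula with a ``posterior'' $P$ that is uniform on a ball of radius $\propto 1/(NT(H+1))$ around $\hat\rho$ and a ``prior'' $Q$ uniform on all of $\cP_{\mathrm{LLM}}$, so that the complexity term is $\KL(P\,\|\,Q)$, which by a volume computation equals $\cO(\bar D\log(1+NTH\bar B))$; the perturbation stability of the transformer (your layer-by-layer Lipschitz analysis) enters as the lemma showing that $\log(\PP_{\hat\rho}/\PP_{\rho})=\cO(1/(NTH))$ for $\rho\sim P$. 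You instead discretize $\cP_{\mathrm{LLM}}$ by an explicit $\epsilon_0$-net with $\epsilon_0=1/(NTH)$ and union-bound over it, so $\log\calN$ plays the role of $\KL(P\,\|\,Q)$. These are interchangeable and yield identical rates; the PAC-Bayes route is slightly cleaner here because the uniform-ball construction converts the Lipschitz estimate directly into a KL bound without having to argue that a net of the parameter space induces a net of the density class. Your reduction $\TV\le\sqrt{2}\,\mathrm{H}$ plus Jensen, versus the paper's $\TV^2\le 2\mathrm{H}^2$ followed by a final Cauchy--Schwarz, are the same step, and both produce the $N^{-1/4}$ rate in the approximation term.

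One point in your sketch needs more care than you give it: the $NT(H+1)$ log-likelihood-ratio terms are \emph{not} independent, since the reasoning steps within a document are sequentially dependent and the conditioning sets $S_h^{t,\ell}$ grow, so a Bernstein bound applied to the full sum as if it were a sum of independent terms is not valid. The paper handles the MLE-to-Hellinger reduction with a ghost-sample construction and a recursive tower-property argument over the natural filtration (so the exponential-moment identity $\E[\exp(\tfrac12\log\tfrac{q}{p})]=1-\mathrm{H}^2(p,q)$ is applied conditionally, step by step), and handles the approximation-error concentration by applying Hoeffding for each fixed $(t,h)$ across the $N$ i.i.d.\ documents and union-bounding over the $T(H+1)$ positions, which is where the $\log(TH/\delta)$ factor comes from. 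Your argument goes through once you make the same adjustments, but as written the concentration step would fail if the terms were treated as i.i.d. A minor additional remark: the softmax \emph{is} Lipschitz on bounded inputs (and the paper's normalization layer is a projection onto the unit ball, hence $1$-Lipschitz), so the obstacle you anticipate there is less severe than you suggest; the real work in that lemma is tracking the per-block amplification constant $\bar B$ through the residual connections.
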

This Proposition is proved using the PAC-Bayes framework. The proof is adapted from    \cite{zhang2023and} and deferred to Appendix~\ref{proof: pre-train}.

Proposition \ref{prop: pre-train} shows that the pretraining error can be decomposed into an approximation error and a generalization error. 
The approximation error is a sum of a KL divergence term, and an additional $N^{-1/4} $ terms that arise from concentration. The approximation error is small if the transformer class is sufficiently expressive. 
Moreover, the generalization error decays to zero as $N$ increases, and $\bar D$ captures the complexity of the transformer model. 
This error increases with the sequence length $T\cdot H$ mildly through a logarithmic factor.


\subsection{Transformers as Conditional Distribution  Approximators} \label{subsubsec: Build Approximators using Transformers}
In the following, we present the approximation result. 
We will construct a transformer with parameters in $\cP_{\mathrm{LLM}}$ that captures the multi-step reasoning structure of \ac{cot}. 
More importantly, we will prove that the approximation error decays to zero exponentially as the network depth $D$ increases. We first present an informal version of the theory as follows.

\begin{proposition}[Approximation Error, Informal]\label{prop: kl bd}
Let $S_h^t= (\Upsilon_{t-1}, \{ z_j^t\}_{j=0}^{h-1} )$ be the sequence of reasoning steps that includes  $t-1$ examples of reasoning paths $\Upsilon_{t-1}$  and the first $h-1$ steps of the $t$-th example $ \{ z_j^{t-1}\}_{j=0}^h$. 
    Then if the target distribution $\PP$ of the model in \eqref{eq:latent_var_model} has a sufficiently smooth density and the transformer model $\calP_{\mathrm{LLM}}$ in \eqref{eq:P_LLM_class} is sufficiently expressive, then there exists a transformer with at most $\cO(D)$ number of blocks and parameter $\rho^* \in \calP_\mathrm{LLM}$ such that 
\begin{align*}
\max_{\substack{S_h^t\in \calL^*}}\KL\big(\PP(z_h^t = \cdot\,|\,S_h^t),\PP_{\rho^{*}}(z_h^t = \cdot\,|\, S_h^t)\big) =O\bigg( \exp\bigg(-\frac{ \big(D-C\log(2H))/H\big)^{1/4}}{5B}\bigg)\bigg),
\end{align*}
for any $t\in [T]$ and $0\leq h\leq H$ when $D$ goes to infinity. In particular, $C > 0 $ is an absolute constant,   and $B$ appears in Assumption~\ref{assumption: smooth}.
  
\end{proposition}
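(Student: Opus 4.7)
The plan is to construct an explicit transformer with parameters in $\calP_{\mathrm{LLM}}$ that realizes the Bayesian factorization established in Lemma~\ref{lem: bma.cot}, namely
\begin{align*}
\PP\big(z_h^t = \cdot \given S_h^t\big) = \int_\Theta \PP\big(z_h^t = \cdot \given \{z_j^t\}_{j=0}^{h-1}, \theta\big) \cdot \pi\big(\theta \given S_h^t\big) \, \rmd\theta,
\end{align*}
and then bound the approximation error incurred. The first step is to discretize the latent space into a finite $\epsilon$-net $\{\theta_k\}_{k=1}^K \subset \Theta$ and replace the integral by a weighted sum over the grid; this introduces a discretization error that is controlled by the smoothness constant $B$ from Assumption~\ref{assumption: smooth} and scales polynomially with $\epsilon$.

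Next, I would construct the transformer in three conceptual stages. The feedforward sub-layers across the $D$ blocks are used to approximate the per-step log-likelihoods $\log \PP(z \given \text{prefix}, \theta_k)$ for each of the $K$ grid points, storing the resulting $K$-vector on a reserved segment of the residual stream. The critical ingredient here is a Yarotsky-type approximation of smooth functions by ReLU feedforward networks, which yields approximation error decaying like $\exp(-\Omega(d^{1/4}))$ in depth $d$; this is the origin of the $1/4$ exponent in the claim. Causally masked attention layers then accumulate these per-step increments across the $t-1$ prior examples and the $h-1$ previous steps of the current example, producing at the current position the quantity $\log \pi(\theta_k) + \sum \log \PP$ for each $k$. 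A softmax over $k$ normalizes these into posterior weights, and an additional attention head uses those weights to combine the per-hypothesis predictions, exactly realizing BMA on the discretized $\Theta$.

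The factor of $H$ enters the bound in two complementary ways. Each of the $H$ reasoning sub-steps consumes a portion of the depth budget for its own feedforward computation, so the effective per-step depth is $D/H$. In addition, the log-likelihood approximation must be accurate to order $1/H$ so that, after exponentiation inside the softmax and renormalization, the induced posterior remains close to the true one. The additive $C\log(2H)$ term in the stated bound is the depth overhead purchased to drive the per-step feedforward error down by this logarithmic factor, after which Pinsker's inequality together with the lower bound from Assumption~\ref{assumption: lower bd} converts total variation control into the stated KL bound uniformly over $S_h^t \in \calL^*$.

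The principal obstacle is controlling the joint propagation of errors through two compositional nonlinearities: the softmax over $\{\theta_k\}$ defining the posterior, and the softmax over $\calL$ producing the output distribution. Both are multiplicatively sensitive to pointwise errors in their log-arguments, and this sensitivity compounds along the autoregressive chain because the distribution of $z_h^t$ depends on previously generated steps. Balancing the grid resolution $\epsilon$, the number of hypotheses $K$, the per-coordinate feedforward tolerance, and the per-step depth budget $D/H$ so that every source of error matches the target exponential rate is the delicate bookkeeping the proof must execute.
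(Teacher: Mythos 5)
Your proposal takes a genuinely different route from the paper, and it has gaps that I do not think can be repaired within the stated assumptions.

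The paper never discretizes $\Theta$ or represents the posterior inside the network. Instead, it observes that $\PP(z_h^t=\cdot\given S_h^t)$, as a function of the prefix, is invariant to permutations of the $t-1$ completed examples, and therefore (by the deep-sets factorization) can be written as $w_h^*\big(\tfrac{1}{L'}\sum_{(i,j)}\psi_h^*(z_j^i)\big)$. The transformer then approximates $\psi_h^*$ and $\tau\log w_{h,i}^*$ by deep ReLU networks, realizes the \emph{average} $\tfrac{1}{L'}\sum\hat\psi_h^*$ with a single uniform-attention head, and uses a step-index extraction module plus trapezoid-gated selection modules to route each input to the correct $h$-specific subnetwork (this routing is where $H$ multiplies the depth and where the $C\log(2H)$ overhead comes from). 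Your architecture instead accumulates $\log\pi(\theta_k)+\sum\log\PP(\cdot\given\cdot,\theta_k)$ over the prefix and softmaxes over a grid $\{\theta_k\}$. This is a \emph{sum} of $\Theta(t(H+1))$ approximate terms, not an average, so a per-term feedforward error $\epsilon$ becomes an $O(tH\epsilon)$ error in the softmax argument, which is then amplified exponentially by the posterior softmax. The resulting bound necessarily degrades with the prefix length $t$ (hence with $T$), whereas the proposition's bound is uniform over all $S_h^t\in\calL^*$; the averaging structure is precisely how the paper kills the $t$-dependence. Moreover, Assumption~\ref{assumption: smooth} concerns smoothness of the deep-sets components $\psi_h^*$ and $\tau\log w_{h,i}^*$ as functions of reasoning steps, not smoothness of the likelihood in $\theta$, so your $\epsilon$-net discretization error and the required grid size $K$ (which would inflate the network width) are not controlled by anything assumed in the paper.

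Your account of the $1/4$ exponent is also not correct. Yarotsky-type rates for Sobolev-smooth functions are polynomial in the network size; the exponential-in-depth rate used here is the Elbr\"achter et al.\ result for functions with factorially bounded derivatives (exactly the class in Assumption~\ref{assumption: smooth}), and it gives error $\exp(-\sqrt{d/B})$ at depth $d$, i.e.\ exponent $1/2$, not $1/4$. The $1/4$ in the final bound arises only after composing the inner approximator $\hat\psi_h^*$ with the outer approximator $\hat f_h^w$: the outer MLP's Lipschitz constant grows like $256^{D_w}$ in its own depth, forcing the paper to allocate $\overline D_\psi\gtrsim \overline D_w^2$ so that the inner error survives the amplification, which caps $\overline D_w$ at $O(\sqrt{D})$ and turns $\exp(-\sqrt{\overline D_w/B})$ into $\exp(-D^{1/4}/(5B))$. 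Any correct proof along your lines would have to confront the same composition/Lipschitz-amplification bookkeeping, and in your architecture the amplification is worse because it also passes through the posterior softmax over $K$ hypotheses.
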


This proposition shows that the approximation error decays exponentially in $D$. 
This exponential accuracy is based on the construction of a neural network approximator in \cite{elbrachter2021deep} for smooth functions. 
Moreover, note that $S_h^{t}$ has $(t-1)\cdot (H+1) + h $ reasoning steps in total. 
An appealing feature of this proposition is that the approximation error is independent of $t$, thanks to leveraging the permutation invariance structure of the target distribution. 
Specifically, when viewing $\PP(z_h^t = \cdot \given S_h^t)$ as a function of $S_h^t$, it is invariant to the permutation of the $t-1$ examples. 
Our transformer approximator directly leverages such invariance in the attention mechanism, thus obtaining an approximation error independent of $t$. 
However,  $\PP(z_h^t = \cdot \given S_h^t)$ can be drastically different across $h\in[H]$. Concretely, each reasoning step represents a different procedure described by different distributions. To handle this fact, our transformer treats each step $h\in [H]$ differently and uses a separate transformer subnetwork to predict each $z_h$. 
These subnetworks, each containing multiple attention blocks, are stacked vertically. 
And we leverage the 
position embedding to let the transformer identify the step index $h$ of $S_h^t$, and then pass the input to the $h$-th subnetwork. See Figure \ref{fig:network_sketch} for an illustration of the construction. The formal statement of Proposition  \ref{prop: kl bd} and its detailed proof are deferred to  Appendix~\ref{prop:formal_construction}.


\begin{figure}[t]
    \centering
    \includegraphics[width = 0.98\textwidth]{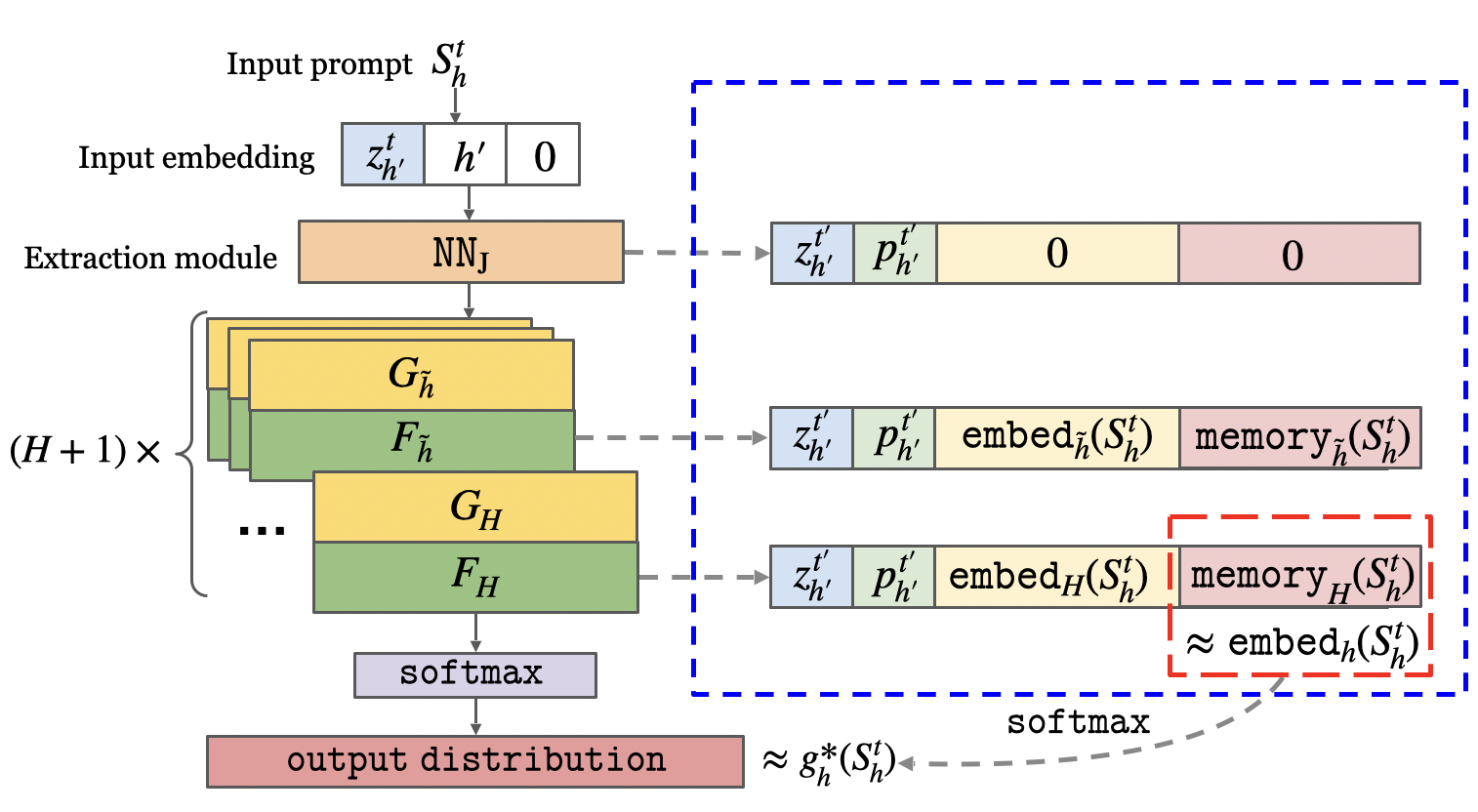}
    \caption{An illustration of the transformer constructed for proving Proposition \ref{prop: kl bd}.  After the input embedding, the first module $\mathtt{NN}_\mathrm{J}$ extracts the step-index \(h\) from \(S_h^t\) and copy it to all previous positions. 
    Then, the transformer go through $H+1$ submodule pairs
    $\{G_{\tilde h}, F_{\tilde h}\}_{\tilde h=0}^{H}$,
    where each 
    pair is designed to approximate the conditional distribution associated with a specific step \(h \in [H]\). 
   The output of $F_{H}$ goes through a softmax output layer to get the final output distribution, which is close to the target distribution $\PP(\cdot \given S_h^t)$.}
    \label{fig:network_sketch}
\end{figure}

\subsection{Statistical Error of CoT with Out-of-Distribution Queries} \label{subsubsec: cot Convergence Rate with Pretraining Error}

In this section, we combine the analysis of pretraining error and prompting error to derive a comprehensive characterization of the statistical error of \ac{cot}. 
Moreover, we will tackle the {\bf query error}, which arises due to the distributional shift of the test instance $z_0^\mathrm{test}$. 

Specifically, the query error arises if $z_0^\mathrm{test}$ is not sampled from the same task $\theta^*$ as the $n$ prompt examples. For instance, if the examples in the prompts are about the ``solving arithmetic problems'', but we query a new philosophical question. Then the knowledge incorporated in the examples is not useful for answering the query and thus we expect a large error.
More rigorously, let 
$\Upsilon_n=\{s_j\}_{j=1}^n$ denote the $n$ \ac{cot} examples sampled  the model in \eqref{eq:latent_var_model} with task $\theta^* $. 
Thus, $\pt_\mathrm{CoT}(n) = (z_0^{\mathrm{test}}, \Upsilon_n)$. 
Under this model, the query has a distribution $\PP(z_0^\mathrm{test} = \cdot \given \theta^*)$.
We let $\mu(z_0^\mathrm{test} = \cdot \given \Upsilon_n) $ denote the distribution of an out-of-distribution (OOD) query, whose distribution might depend on $\Upsilon_n$. The difference between these two distributions reflects the query error. 

Besides, we let 
 $\PP_\mathrm{CoT}$   denote the joint distribution of $\Upsilon_n$ and an OOD query, i.e.,   $ \allowbreak \PP_{\mathrm{CoT}} ( \pt_\mathrm{CoT}(n) \given \theta^*) = \PP(\Upsilon_n \given \theta^*) \cdot \mu (z_0^\mathrm{test} \given \Upsilon_n) $. We make the following assumption about the distributional shift due to the OOD query. 


\begin{assumption}\label{assump: cover}

 We assume the distributional shift is mild in the sense that 
 \(\PP_{\mathrm{CoT}}\) is covered by the pretraining distribution. 
 That is, for any fixed $\theta^*\in \Theta$,
there exists a constant \(\kappa>0\) such that \(\PP_{\mathrm{CoT}}(\pt_\mathrm{CoT}(n) \given \theta^*) \leq \kappa \PP(\pt_\mathrm{CoT}(n) \given \theta^*)\) for any \(\pt_\mathrm{CoT}(n)\in\calL^{*}\) with \(n\leq T\).
\end{assumption}
Here $\kappa$ captures the magnitude of the distributional shift. This assumption requires that the test query $z_0^\mathrm{test}$ cannot be too arbitrary -- its distribution should have sufficient density under the pretraining distribution. 
Intuitively, we cannot expect the \ac{llm}s to answer questions beyond the knowledge contained in the pretraining dataset. Note that when there is no distributional shift in the query, we have $\kappa=1$. Then the analysis of $\mathtt{err}_\mathrm{prompt}$ is reduced to Theorem~\ref{thm: rate_cot}.
Recall that the pretraining data distribution mixes the task distribution $\theta\sim \pi$. Under the model in \eqref{eq:latent_var_model}, this assumption is satisfied if we set 

$$
\kappa =   {\textstyle \sup_{z\in \cL, S \in \cL^*}}  \big | \mu(z_0^\mathrm{test} =z   \given \Upsilon_n = S  ) / \PP(z_0^\mathrm{test} = z \given \theta^*)  \big| ,
$$
which is no more than $1/c_0  $ under Assumption \ref{assumption: lower bd}.
Thus, the distributional shift is small if task $\theta^*$ is well covered in the pretraining distribution, and the distribution $\mu$ is similar to the true query distribution $\PP(z_0^\mathrm{test} = \cdot \given \theta^*)$. 

\vspace{2mm}
{\noindent \bf Combining Pretraining and Prompting Errors.}  With this assumption,   we   combine Theorem \ref{thm: rate_cot}, Proposition~\ref{prop: kl bd} and Proposition~\ref{prop: pre-train} to obtain a complete characterization of the statistical error of \ac{cot} –– the statistical estimator obtained by first pretraining an LLM using dataset $\cD_{NT}$ and then prompting the pretrained LLM using a CoT prompt with $n$ examples.  
The result is given in the following corollary.

\begin{corollary}[Complete Characterization of $\mathtt{err}_\mathrm{cot}$] \label{cor: rate with error}
Recall that $\mathtt{err}_\mathrm{CoT}$ is defined in \eqref{eq: err cot}.
     Under Assumptions~\ref{assumption: seperation}, \ref{assumption: bd magnitude}, \ref{assumption: lower bd}, and \ref{assump: cover}, with probability at least $1-\delta$,  we have  
\begin{align*}
        &\E_{\PP_{\mathrm{CoT}}} [\mathtt{err}_\mathrm{CoT}] \nonumber \\
        &\quad \leq \calO\bigg(Hb^* \bigg(\frac{\pi(\Theta^\complement)}{\pi(\theta^*)}C(\theta^*)\kappa\bigg)^{1/4} \cdot \big| \Theta^\complement \big|^{1/2} \cdot \exp(-n\lambda/2) +  \kappa TH \pi(\theta^*)^{-1} \cdot (1+b^*) \cdot \Delta_{\mathrm{pre}}(N,T,\delta) \bigg)
\end{align*}
when 
$\Theta$ is finite, where $C(\theta^*) $ is defined as 
$$C(\theta^*) = \sup_{\theta\in \Theta^\complement}\sqrt{\chi^2\big(\PP( z_0^\mathrm{test}=\cdot\given \theta), \PP( z_0^\mathrm{test}=\cdot\given \theta^*)\big)+1}.$$
\end{corollary}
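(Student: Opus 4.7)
The plan is to start from the decomposition $\mathtt{err}_\mathrm{CoT} \leq \mathtt{err}_\mathrm{pre} + \mathtt{err}_\mathrm{prompt}$ given by Lemma \ref{lem:error_decomp}, take $\E_{\PP_\mathrm{CoT}}$ on both sides, and bound the two expectations separately. The high-probability event (of probability at least $1-\delta$) is over the randomness of the pretraining dataset $\calD_{N,T}$ and will be used only to invoke Proposition \ref{prop: pre-train}; the outer expectation is over the possibly OOD prompt.

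For the prompting term, the key technical move is a change of measure from $\mu(z_0^\mathrm{test}\given \Upsilon_n)$ back to the task-specific distribution $\PP(z_0^\mathrm{test}\given \theta^*)$, so that the in-distribution bound of Theorem \ref{thm: rate_cot} applies. The expression for $\mathtt{err}_\mathrm{prompt}$ in \eqref{eq: err pt} is dominated by a $\sqrt{\KL}$ piece, so I would first apply Jensen's inequality to obtain $\E_{\PP_\mathrm{CoT}}[\sqrt{\KL}] \leq \sqrt{\E_{\PP_\mathrm{CoT}}[\KL]}$. Since $\Upsilon_n \sim \PP(\cdot\given \theta^*)$ is already in-distribution, I would condition on $\Upsilon_n$ and perform a Cauchy--Schwarz step only in the $z_0^\mathrm{test}$ coordinate, which produces a factor of $\sqrt{\chi^2(\mu(\cdot\given \Upsilon_n), \PP(\cdot\given \theta^*))+1}$ bounded by $C(\theta^*)$ by definition. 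I would then convert the high-probability bound of Theorem \ref{thm: rate_cot} to an expectation-form statement on $\E_{\PP(\cdot\given \theta^*)}[\KL]$ (by directly integrating its tail). The composition of the outer $\sqrt{\cdot}$ from Jensen with the inner $\sqrt{\cdot}$ from Cauchy--Schwarz and the $\kappa$-based change-of-measure from Assumption \ref{assump: cover} explains the $1/4$-powers, the halved exponent $e^{-n\lambda/2}$, and the $|\Theta^\complement|^{1/2}$ factor; the residual $(\pi(\Theta^\complement)/\pi(\theta^*))^{1/4}$ arises after similarly square-rooting the $\pi(\theta^*)^{-1/2}$ dependence of Theorem \ref{thm: rate_cot}.

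For the pretraining term, the strategy is to reduce $\E_{\PP_\mathrm{CoT}}[\mathtt{err}_\mathrm{pre}]$ to the empirical-averaged quantity on the left-hand side of Proposition \ref{prop: pre-train} through three composed change-of-measure steps. First, Assumption \ref{assump: cover} yields $\E_{\PP_\mathrm{CoT}(\cdot\given \theta^*)}[\cdot] \leq \kappa \cdot \E_{\PP(\cdot\given \theta^*)}[\cdot]$. Second, since $\theta^*$ is fixed but the pretraining mixture weights tasks by $\pi$, one has $\E_{\PP(\cdot\given \theta^*)}[\cdot] \leq \pi(\theta^*)^{-1}\, \E_{\theta \sim \pi}\E_{\PP(\cdot\given \theta)}[\cdot]$. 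Third, the averaging in \eqref{eq:avg_ex} ranges over $T(H+1)$ positions, so the single-position expected pretraining error is bounded by $T(H+1)$ times the averaged one, producing the $TH$ factor. Finally, since $\mathtt{err}_\mathrm{pre}$ is a KL divergence while Proposition \ref{prop: pre-train} is stated in TV, I use Assumption \ref{assumption: lower bd} (which yields $|\log(p/q)|\leq b^*$) to deduce $\KL \leq 2b^{*}\cdot \TV$ via the identity obtained by summing $\KL(\PP,\cdot)+\KL(\cdot,\PP)$; this produces the $(1+b^*)$ factor. Plugging in $\Delta_\mathrm{pre}(N,T,\delta)$ then completes this side.

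The main obstacle is the careful book-keeping of exponents on the prompting side. The unusual $1/4$-powers in $(\pi(\Theta^\complement)/\pi(\theta^*) \cdot C(\theta^*) \cdot \kappa)^{1/4}$ emerge only if one composes Jensen on the $\sqrt{\KL}$ piece of \eqref{eq: err pt} with the $\chi^2$-based Cauchy--Schwarz for OOD queries and with the $\kappa$-based change-of-measure in exactly the right order, so that two sequential square roots distribute across the task- and prompt-dependent quantities. Simultaneously converting the high-probability tail of Theorem \ref{thm: rate_cot} into an expectation while ensuring that $\kappa$ and $C(\theta^*)$ survive only to the $1/4$-th power (rather than $1/2$) is the subtle part. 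The pretraining side, by contrast, is a largely mechanical chain of change-of-measure inequalities terminating in Proposition \ref{prop: pre-train}, and adding the two resulting bounds yields the stated corollary.
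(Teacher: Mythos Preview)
Your pretraining-error argument is essentially correct and matches the paper: decompose the KL over reasoning steps, convert to TV via the log-density bound $b^*$, and push the expectation through $\kappa$, $\pi(\theta^*)^{-1}$, and the $T(H+1)$ position average before invoking Proposition~\ref{prop: pre-train}.

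The prompting side, however, contains a genuine gap: you have misread what $C(\theta^*)$ is. By the statement, $C(\theta^*)=\sup_{\theta\in\Theta^\complement}\sqrt{\chi^2(\PP(z_0^\mathrm{test}\given\theta),\PP(z_0^\mathrm{test}\given\theta^*))+1}$ compares the \emph{task-conditional} input laws for $\theta\in\Theta^\complement$ against $\theta^*$; it has nothing to do with the OOD query law $\mu$. Your plan applies Cauchy--Schwarz to the KL integrand itself to change the $z_0^\mathrm{test}$-measure from $\mu$ to $\PP(\cdot\given\theta^*)$, which would yield $\sqrt{\chi^2(\mu,\PP(\cdot\given\theta^*))+1}$ --- a different object (controlled by $\kappa$, not by $C(\theta^*)$). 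Black-boxing Theorem~\ref{thm: rate_cot} after that step therefore cannot produce $C(\theta^*)$ anywhere in the final bound. A secondary issue is that integrating the tail of Theorem~\ref{thm: rate_cot}, whose failure probability scales like $\delta\propto x^{-1/2}$, is not directly integrable, so you would need to appeal to the hard KL cap from Assumption~\ref{assumption: lower bd} and pick up extraneous factors.

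The paper does not black-box Theorem~\ref{thm: rate_cot}. It returns to Proposition~\ref{prop:elb}, which bounds the KL by $\log\bigl(1+\sum_{\theta\in\Theta^\complement}\frac{\PP(\Upsilon_n,z_0^\mathrm{test}\given\theta)\pi(\theta)}{\PP(\Upsilon_n,z_0^\mathrm{test}\given\theta^*)\pi(\theta^*)}\bigr)$. Taking $\E_{z_0^\mathrm{test}\sim\mu}$ and pushing it inside $\log$ via Jensen isolates the factor $\E_{z_0\sim\mu}\bigl[\PP(z_0\given\theta)/\PP(z_0\given\theta^*)\bigr]$ for each $\theta\in\Theta^\complement$. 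Writing this as $\E_{z_0\sim\PP(\cdot\given\theta^*)}\bigl[\tfrac{\mu}{\PP(\cdot\given\theta^*)}\cdot\tfrac{\PP(z_0\given\theta)}{\PP(z_0\given\theta^*)}\bigr]$ and applying Cauchy--Schwarz gives one factor $\le C(\theta^*)$ (this is where $C(\theta^*)$ enters) and one factor $\le\kappa$ from Assumption~\ref{assump: cover}. The remaining $\Upsilon_n$-dependent ratio is handled by Lemma~\ref{lemma: likelihoodbd}, yielding a tail of the form $\PP(\E_{z_0}[\KL]>x)\le (C(n)/(e^x-1))^{1/2}$, which \emph{is} integrable and gives $\E_{\PP_\mathrm{CoT}}[\KL]=\calO(C(n)^{1/2})$. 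Only then does Jensen on the $\sqrt{\KL}$ piece of \eqref{eq: err pt} produce the extra square root and hence the $1/4$-powers. In short: the Cauchy--Schwarz must be applied to the likelihood ratio inside the posterior bound, not to the KL itself.
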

In this corollary, we combine the previous result for the perfectly pretrained model in Theorem~\ref{thm: rate_cot} with the pretraining error analysis in Proposition~\ref{prop: pre-train}, and the query error that quantifies the distributional shift of the prompt. The proof can be found in Appendix~\ref{proof: rate with error}. For conciseness, we stated the result for the case where $\Theta$ is discrete and finite. This can be generalized the result to the continuous case by applying the second part of Theorem~\ref{thm: rate_cot_cts}. 
This corollary shows that, when the distributional shift is mild ($\kappa= \cO(1)$) and $|\Theta | $ is finite, to achieve any desired accuracy level $\varepsilon \in (0, 1)$, it suffices to let:
\begin{itemize}
\item $D=H\Big(5B\cdot \log\big(\kappa T H \pi(\theta^*)^{-1}\cdot 3(1+b^*)/\epsilon \big)\Big)^4+C\log(2H)$ for the transformer depth, where the absolute constant $C>0$ is from Proposition~\ref{prop: kl bd}.
 \item $ n \geq 2/\lambda\cdot \log\bigg(3Hb^* \Big(\pi(\Theta^\complement)\cdot (\pi(\theta^*))^{-1} \cdot C(\theta^*)\cdot\kappa\Big)^{1/4} \cdot \big| \Theta^\complement \big|^{1/2}/\epsilon \bigg)$ in $\pt_{\mathrm{CoT}} (n) $. 
    \item $T \geq n +1 $, and $N \geq \Big(3\sqrt{b^*}\cdot \log(TH/\delta)\cdot \kappa T H \pi(\theta^*)^{-1}\cdot (1+b^*)/\epsilon\Big)^4$ in the pretraining dataset $\cD_{N,T}$.

\end{itemize}
Then Corollary \ref{cor: rate with error}  shows that $\E_{\PP_{\mathrm{CoT}}} [\mathtt{err}_\mathrm{CoT}] 
 \leq \varepsilon $ with probability at least $1 - \delta$.

    

\section{Experiments} \label{sec:experiments}

In this section, we provide empirical evidence to support the theory.  
We validate the statistical errors of the \ac{cot} estimator and compare it with vanilla ICL. 
To this end, we train transformer models from scratch, where the training data is sampled from random regression tasks satisfying \eqref{eq:latent_var_model}. The pretrained transformer is then tested on a new task via prompting.  We present the experiment results as follows.

\subsection{Experiment Settings} 

\vspace{3mm}

\noindent \textbf{Regression Tasks.}
We consider the in-context regression problem \citep{garg2022can} where the goal is to learn a class of functions $\cF = \{ f(\cdot; \theta) , \theta \in \Theta) \} $ via prompting a pretrained transformer. 
Here, $f(\cdot; \theta)$ is a function with parameter $\theta$. We consider two types of $\cF$ --   two-layer neural networks (NNs)  and decision trees.
For both cases, for any $\theta$, we generate input-output examples of form $(x, f(x,\theta))$, together with a single intermediate step, i.e., $H = 2$. 
Here we assume $x \sim \mathcal{N}(0, I_{d_{\mathrm{in}}})$ where $d_{\mathrm{in} }= 10$ for two-layer NNs and $d_{\mathrm{in} }= 20$ for decision trees. 
In other words, each \ac{cot} example is of the form $\{ x, z, f(x; \theta)\} $ and each \ac{icl} example is of the form $\{x, f(x; \theta)\} $, where $z $ is the intermediate step. 

\vspace{2mm}
{\noindent \bf Two-Layer Neural Networks.} 
For a  a two-layer NN, we write $\theta = \{ W, v\}$ where $v\in \RR^4$ and $W \in \RR^{10\times 4}$. 
Under the prior distribution, $W$ and $v$ are independent, with $v \sim \cN(0, I_4) $ and $W_{ij} \overset{i.i.d}\sim N(0, 1/2) $ for all $i\in [10]$ and $j\in [4]$. 
The neural network output is $f(x; \theta ) =v^T \texttt{ReLU}(Wx) $, where $\texttt{ReLU}(x) = \max (0,x)$ is the ReLu activation function. 
We consider two kinds of \ac{cot} examples, $
\mathrm{CoT}_1: \{ x, {\color{blue} \texttt{ReLU}(Wx)} , f(x; \theta)\} $ and $  \mathrm{CoT}_2: \{ x, {\color{blue} v} , f(x; \theta)\}. 
$

\vspace{2mm}
{\noindent \bf Decision Tree.} 
We let $\theta = \{ c_j, o_j\}_{j\in[15]} $  denote the parameters of binary decision tree of depth four,
where $c_j \sim \mathrm{Unif}([20]) $ and $o_j \overset{\text{i.i.d}}{\sim}\calN(0, 1)$. 
The output $f(x; \theta)$ with input $x\in\RR^{20}$ is defined as follows. 
Let $c_1$ correspond to the root node, $c_2, c_3$ be nodes of the second layer, $c_4, \ldots, c_7$ be nodes of the third layer, and $c_8, \ldots, c_{15}$ be nodes of the last layer. Each $c_j$ indexes a coordinate of $x$ and $o_j$ is the corresponding target value. 
Note that $d_{\mathrm{in}} = 20$, and thus we sample each $c_j$ uniformly over $[20]$. 
To evaluate the decision tree,  starting from the root node $c_1$, if  
$x[c_1] < 0$, we go to the left child $c_2$. Otherwise, we go to the right child $c_3$.  
Here we let $x[i]$ denote the $i$-th coordinate of $x$.
Then we continue to look at the sign of $c_2$ or $c_3$ and go to a child node in the third layer. 
We continue this process until a leaf node, i.e., a node in the last layer, is reached, and we output the corresponding $o_j$.
In other words, at each
level, we look at the sign of the corresponding coordinate of the input $x$ move to a child.
The output $f(x, \theta)$ corresponds to the number in $\{o_{8}, \ldots, o_{15}\}$ corresponding to the leaf node that is reached.   
The intermediate reasoning steps correspond to the four entries of $x$ used to make decisions. 
Thus, a \ac{cot} example is of the form $\{ x, {\color{blue}x[\mathtt{node}_1]},\ldots, {\color{blue}x[\mathtt{node}_4]}, f(x, \theta) \}$, where $\mathtt{node}_1, \ldots, \mathtt{node}_4$ are the four nodes of the selected path, including the root and leaf nodes.

\vspace{2mm}
{\bf \noindent  Transformer Model.} 
We train a decoder-only transformer from the GPT-2 family  \citep{radford2019language} separately for \ac{cot} and \ac{icl}. 
We construct pretraining datasets following the setting in Section \ref{subsubsec: pretraining process}. 
That is, we sample $N$ i.i.d. tasks $\{ \theta_{\ell}^*\}_{\ell \in [N]} $ and $T$ examples from each task, where $T = 101$ and $N = 2.56 \times 10^7$. 
Then we build a loss function similar to that in \eqref{eq:pretraining_loss}, but with the negative  loglikelihood replaced by the mean-squared error. 
The loss function is optimized using the Adam algorithm \citep{kingma2014adam}  for $4\times 10^5$ steps, with the batch size set to $64$. 
We let $\mathtt{TF}_{\mathrm{CoT}}$ and $\mathtt{TF}_{\mathrm{ICL}}$ denote the learned transformer model.  

\vspace{2mm}
{\noindent \bf Evaluation.} 
To evaluate the performances, we sample a random task $\theta^*$ and $n$ i.i.d. examples $\{ x_i, f(x_i; \theta^*) \}_{i\in [n]}$, and ask the pretrained transformers to predict $f(x^\mathrm{test}; \theta^*)$ on a new i.i.d. input $x^\mathrm{test}$.   
We include intermediate steps to these $n$ examples to obtain \ac{cot} prompt. 
We evaluate the performance of \ac{cot} and \ac{icl} via the mean-squared error (MSE): 
\begin{align*}
\mathtt{MSE}_{\mathrm{CoT}} & = \bigl [  \mathtt{TF}_{\mathrm{CoT}} \bigl (\pt_{\mathrm{CoT}}(n) \cup \{ \hat z \} \bigr )  - f(x^\mathrm{test}; \theta ^* ) \bigr ]^2, \\
\mathtt{MSE}_{\mathrm{ICL}} & = \bigl [ \mathtt{TF}_{\mathrm{ICL}} \bigl (\pt_{\mathrm{ICL}}(n) \bigr )  - f(x^\mathrm{test}; \theta^* ) \bigr ]^2 , 
\end{align*}
where $\hat z  = \mathtt{TF}_{\mathrm{CoT}} \bigl (\pt_{\mathrm{CoT}}(n) ) $ is the intermediate step predicted by the transformer on the test example. 
We compute the MSE by 
averaging over 100 independent experiments with a fixed $\theta^*$ for each $n$, and we test for all $n$ in $\{1, \ldots, 100\}$. 
We plot the final MSE by further averaging over $10$ independent $\theta^*$'s. 

\subsection{Experiment Results} 
{\noindent \bf Two-Layer Neural Network.} 
We plot the errors of the \ac{cot} and \ac{icl} estimators in Figure \ref{fig:RELU_last_iterate_plot} against the number of demonstration examples $n \in [100]$. 
The curves with labels ``$\mathrm{CoT}_1$'' and ``$\mathrm{CoT}_2$'' are the transformer trained using the two kinds of \ac{cot} prompts, respectively, and the label `` $\mathrm{vanilla}$ $\mathrm{ICL}$'' stands for the error of the vanilla ICL. 
In Figure \ref{fig:RELU_last_iterate_plot}-(a) we plot the MSE of these methods. In all these three cases, MSE decays rapidly to zero as $n$ increases. Moreover, we observe that $\mathrm{CoT}_1$ method with $z = \mathtt{ReLU}(Wx)$ exhibits significant improvement over vanilla ICL.
Moreover,   $\mathrm{CoT}_2$ only slightly improves upon vanilla ICL, which shows that the quality of intermediate steps is crucial to the success of \ac{cot}. This finding coincides with our theory and experiment in Section \ref{subsubsec: comparison}. 
A plausible explanation for the superiority of $\mathrm{CoT}_1$ over $\mathrm{CoT}_2$ is that the major challenge of learning a two-layer NN lies in learning the nonlinear feature $\mathtt{ReLU}(Wx)$. 
Providing this piece of information in the prompt significantly simplifies the learning problem. 
Furthermore, in Figure \ref{fig:DT_last_iterate_plot}-(b) we plot the logarithm of the MSE versus $n$. As seen in the figure, there is a linear trend for all three methods when $n$ is smaller than $20$. When $n$ exceeds $20$, MSE is very close to zero. 
In this case, the pretraining error is not negligible and thus the linear trend stops. Thus, Figure (b)  shows that the statistical error of these methods decays exponentially in $n$ up to a pretraining error. This observation corroborates Theorem \ref{thm: rate_cot}.

\begin{figure}[t]%
    \centering
    \subfloat[\centering MSE]{{\includegraphics[width=0.44 \textwidth]{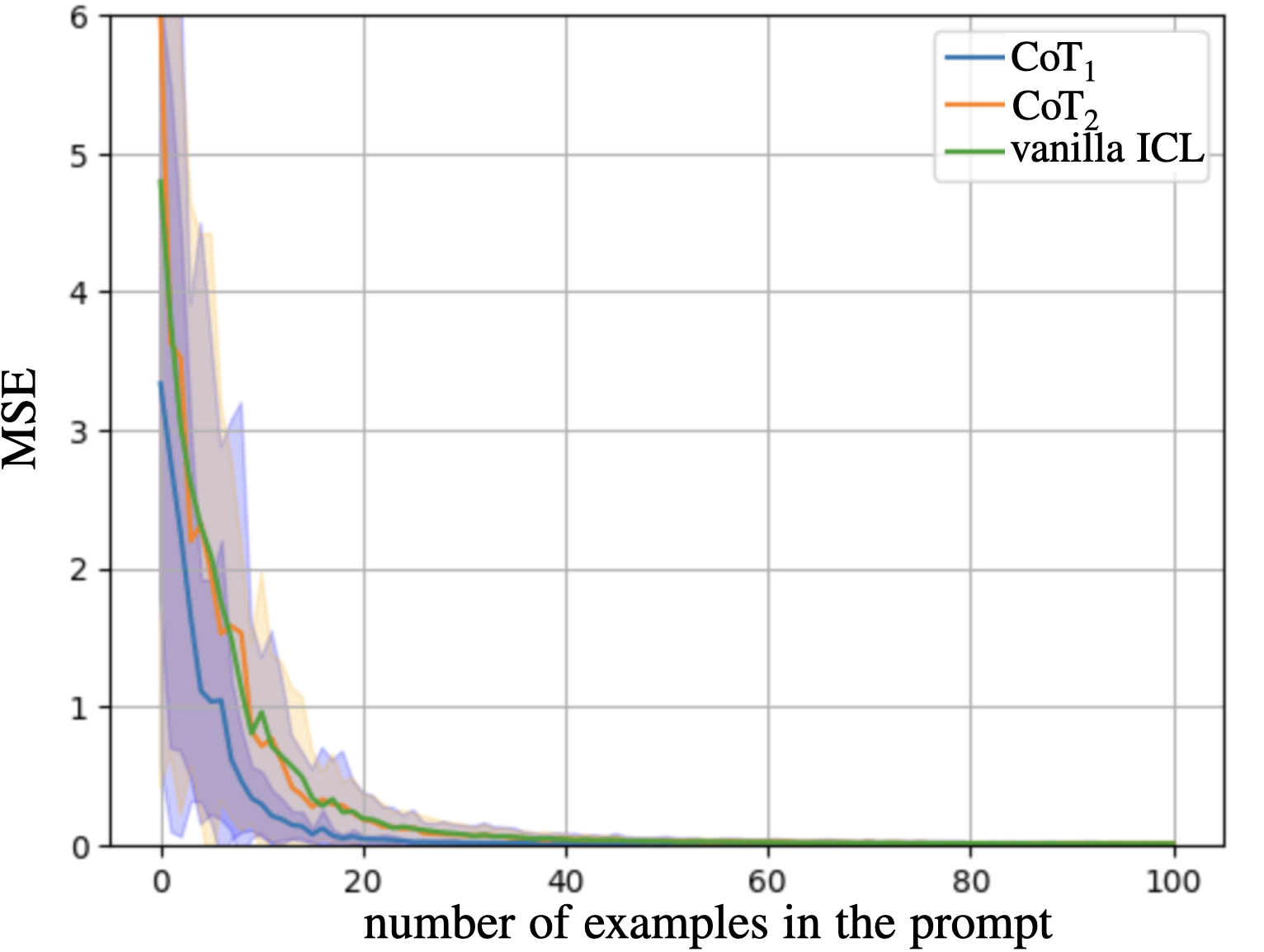} }}%
    \qquad
    \subfloat[\centering Logarithm of MSE]{{\includegraphics[width=0.45 \textwidth]{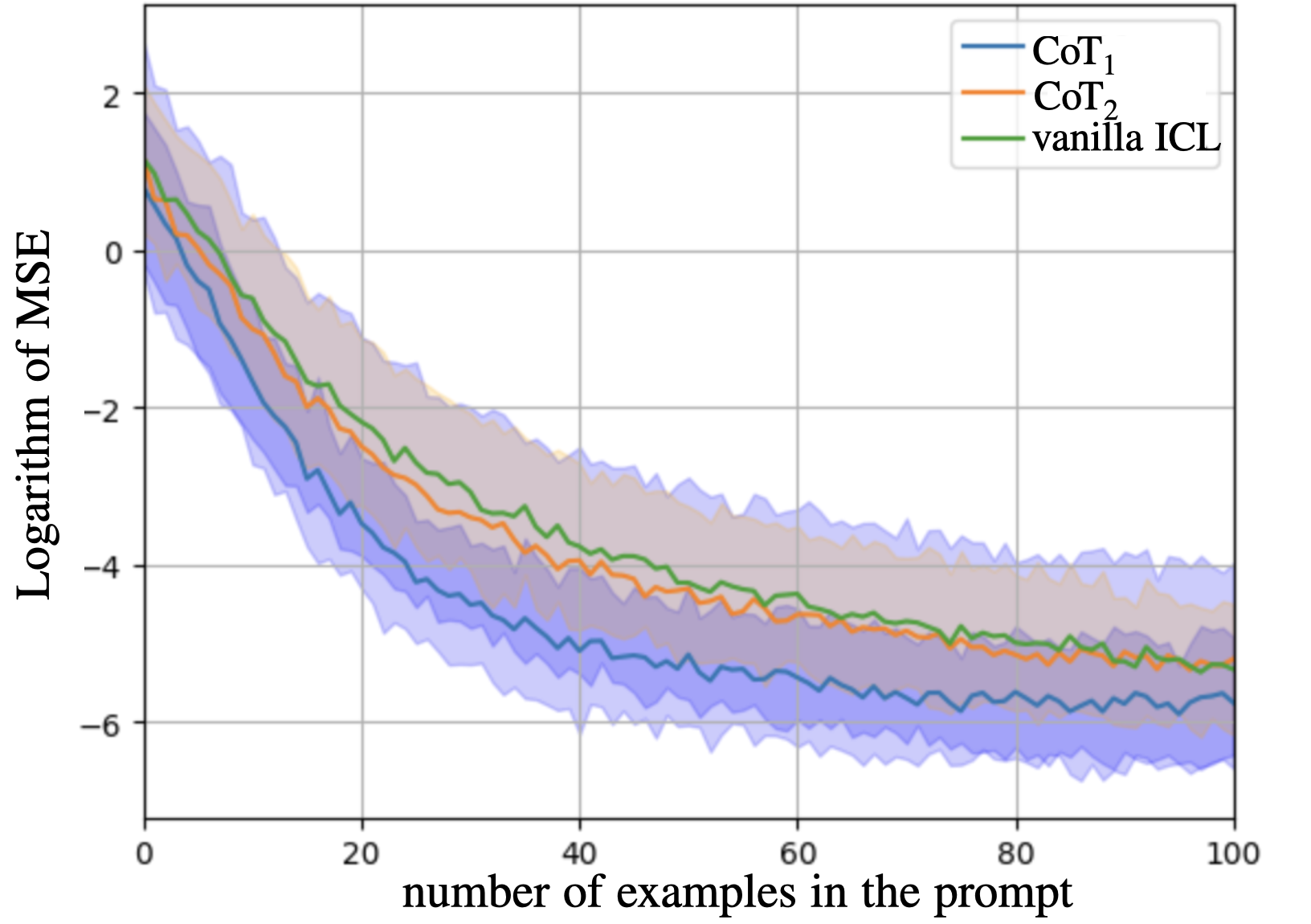} }}%
    \caption{Statistical errors of vanilla \ac{icl} and two \ac{cot} estimators on the task of learning 2-layer MLPs. The estimators are constructed by prompting the pretrained transformer models 
    with $n$ demonstration examples. In Figure (a), the MSE of the three estimators decreases rapidly as $n$ increases. In Figure (b) we plot the logarithm of MSE, which follows a strong linear trend for $n\leq 20$. Beyond $n = 20$, the MSE is dominated by the pretraining error and therefore the linear trend stops. Furthermore, the model trained with ``$\text{CoT}_1$'', using intermediate steps \( z = \mathtt{ReLU}(Wx) \), shows a significant improvement over vanilla ICL, while the ``$\text{CoT}_2$'' model, using \( z' = v \), only shows a slight improvement. In both (a) and (b), we also plot the standard deviation computed based on $10$ random experiments. }%
    \label{fig:RELU_last_iterate_plot}%
\end{figure}

\vspace{2mm}

{\noindent \bf Decision Tree.} 
We plot the errors of \ac{cot} and vanilla \ac{icl} in Figure \ref{fig:DT_last_iterate_plot}, with (a) and (b) showing the MSE and its logarithm respectively. In both cases, MSE decays rapidly as $n$ increases and there is a strong linear trend when $n \leq 65$. This aligns with our theory in Theorem \ref{thm: rate_cot}, and the MSE after $n \geq 65$ is close to the pretraining error.
Moreover, we observe that the transformer trained via the \ac{cot} method learns faster than that trained by vanilla \ac{icl}, but they achieve similar accuracy when $n$ becomes large. When given $n=100$ in-context demonstrations, both models give a testing loss of around $0.12$.
\begin{figure}[t]%
    \centering
    \subfloat[\centering MSE]{{\includegraphics[width=0.46 \textwidth]{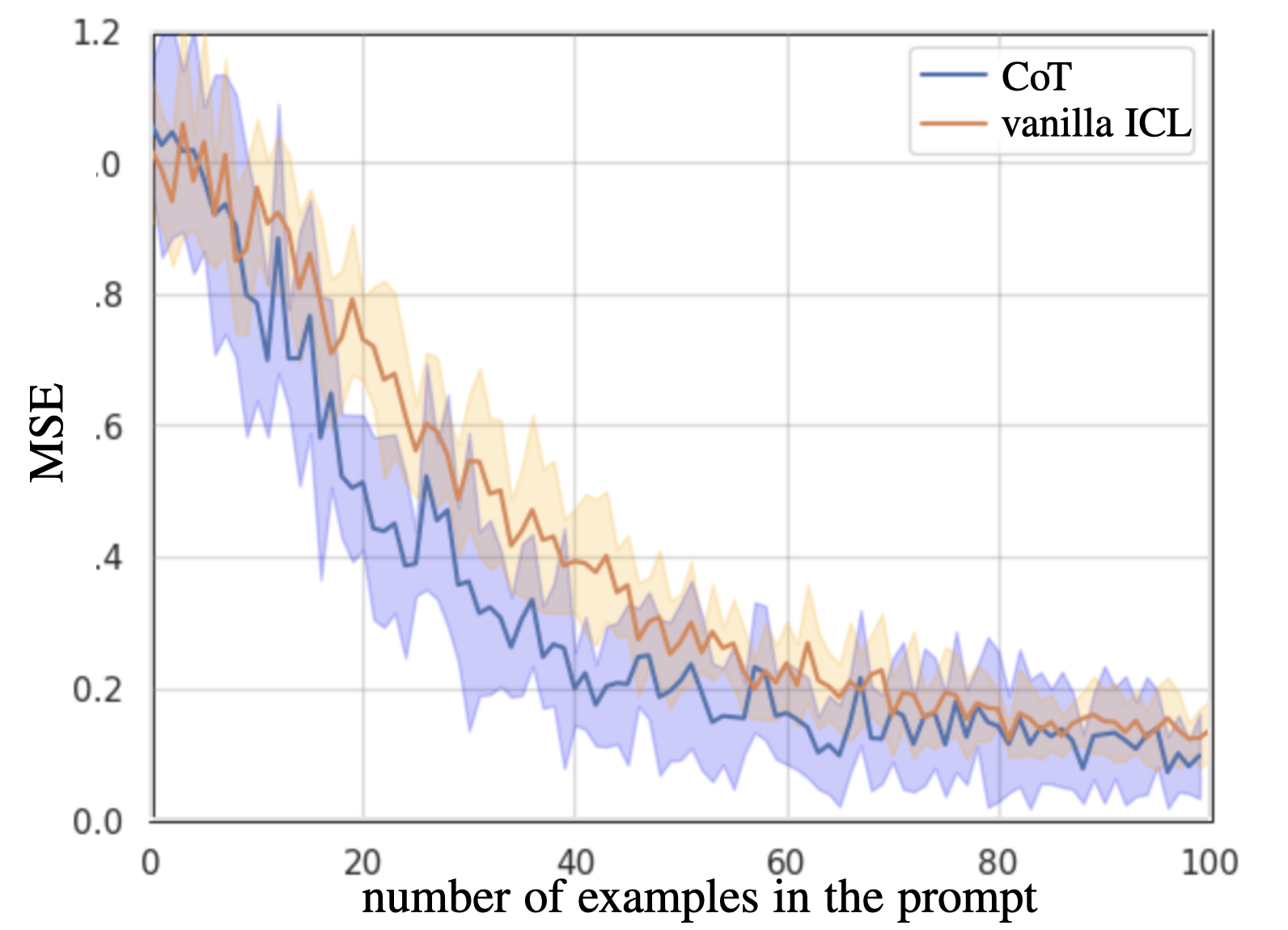} }}%
    \qquad
    \subfloat[\centering Logarithm of MSE]{{\includegraphics[width=0.45 \textwidth]{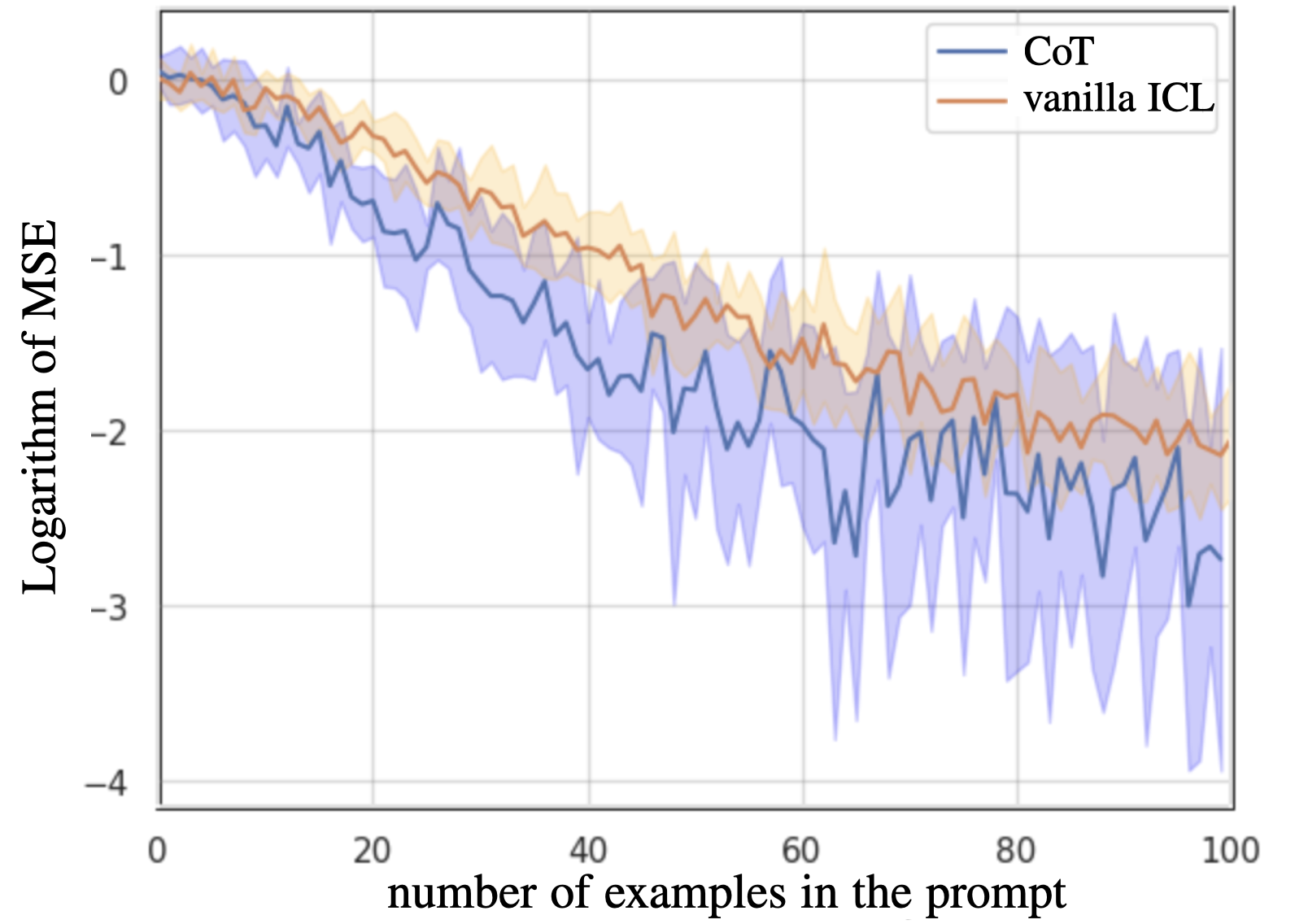} }}%
    \caption{Statistical errors of vanilla \ac{icl} and two \ac{cot} estimators on the task of  learning decision trees. 
    The estimators are constructed by prompting the pretrained transformer models 
    with $n$ demonstration examples. 
In (a), we plot the mean-squared errors of these two estimators against the number of demonstrations. The error decreases faster with in-context examples for models trained with CoT data compared to those trained with vanilla ICL. In Figure (b) we plot the logarithm of the MSE, which reveals a linear trend in errors for both models when $n \le 65$. }%
    \label{fig:DT_last_iterate_plot}%
\end{figure}

In summary, our experiments on the two-layer neural network and decision tree tasks validate the statistical error of the ``pretraining LLM + \ac{cot} prompting'' approach. 
We empirically validate the exponential error decay with respect to the number of prompt examples, and we show whether \ac{cot} significantly outperforms vanilla \ac{icl} depending on the choice of intermediate reasoning steps. 
We leave further details of the experiment setup in Appendix~\ref{sec:experiments_additional_details}.

\subsection{Additional Details of Numerical Experiments} \label{sec:experiments_additional_details}
In the following, we present the details of the numerical experiments in Section \ref{sec:experiments}. 

\vspace{2mm}
{\noindent \bf Training Data and Algorithm.} 
The transformer models are pretrained using the Adam algorithm \cite{kingma2014adam}, which is a minibatch and stochastic-gradient-based algorithm. 
We set the batch size to $64$, and in each step, we sample new training data from the model in \eqref{eq:latent_var_model}. 
That is, we sample $64$ random tasks from the prior distribution, and $T = 101$ examples from each task. 
In each task, we pack the $T$ examples into a single trajectory, and build the MSE loss function by predicting the next step autoregressive. 
For vanilla ICL, there are $2T$ steps in total, and for \ac{cot}, there are $3T$ steps in total. 
We run Adam for $4\times 10^5$ steps in total, and thus the total tasks sampled is equal to $N = 2.56\times 10^7$. 
Moreover, when implementing Adam, we set the learning rate (stepsize) to be $10^{-5}$ and the momentum parameter to be $(0.9,0.999)$. 

\vspace{2pt}
{\noindent \bf GPT-2 Transformer Model.} We adopt the GPT-2 transformer architecture. 
Here the transformer model reads in in a sequence of input vectors in $\RR^r$ and produces an output vector in the same space, where $r$ is the embedding dimension. 
 Additionally, to handle the vector-valued inputs with different sizes, 
we adopt a universal read-in function that maps the prompts into the latent embedding space of the transformer through a (learnable) linear transformation and we use separate read-out functions for the predictions of inputs $x \in \RR^{d_{\mathrm{in}}}$, intermediate steps $z\in \RR^4$, and outputs $y \in \RR$. 
Here $d_{\mathrm{in}} = 10$ for two-layer NNs and $d_{\mathrm{in}} = 20$ for decision trees. 
These read-in and read-out functions are all linear layers. 
Between these read-in and read-out functions are multiple transformer blocks stacked vertically. The details of these transformer blocks are introduced in Section \ref{subsubsec: pretraining process} and Appendix  \ref{app: pre-training process}. See Figure \ref{fig:transformer_arch_experiments} for an illustration of the transformer architecture. 

For the two-layer NN task, 
we adopt GPT-2 transformer models with $D = 12$ layers, $\eta = 4$ heads, and an embedding dimension of  $r= 128$. The embedding dimension of the queries, key, and values are $32$.
For the decision tree task, we adopt  GPT-2 transformer models consisting of $D = 12$ layers, $\eta = 8$ heads, and an embedding dimension of  $r = 512$. The embedding dimension of the queries, key, and values are blue $32$.
\begin{figure}[t]
    \centering
    \includegraphics[width = 0.95\textwidth]{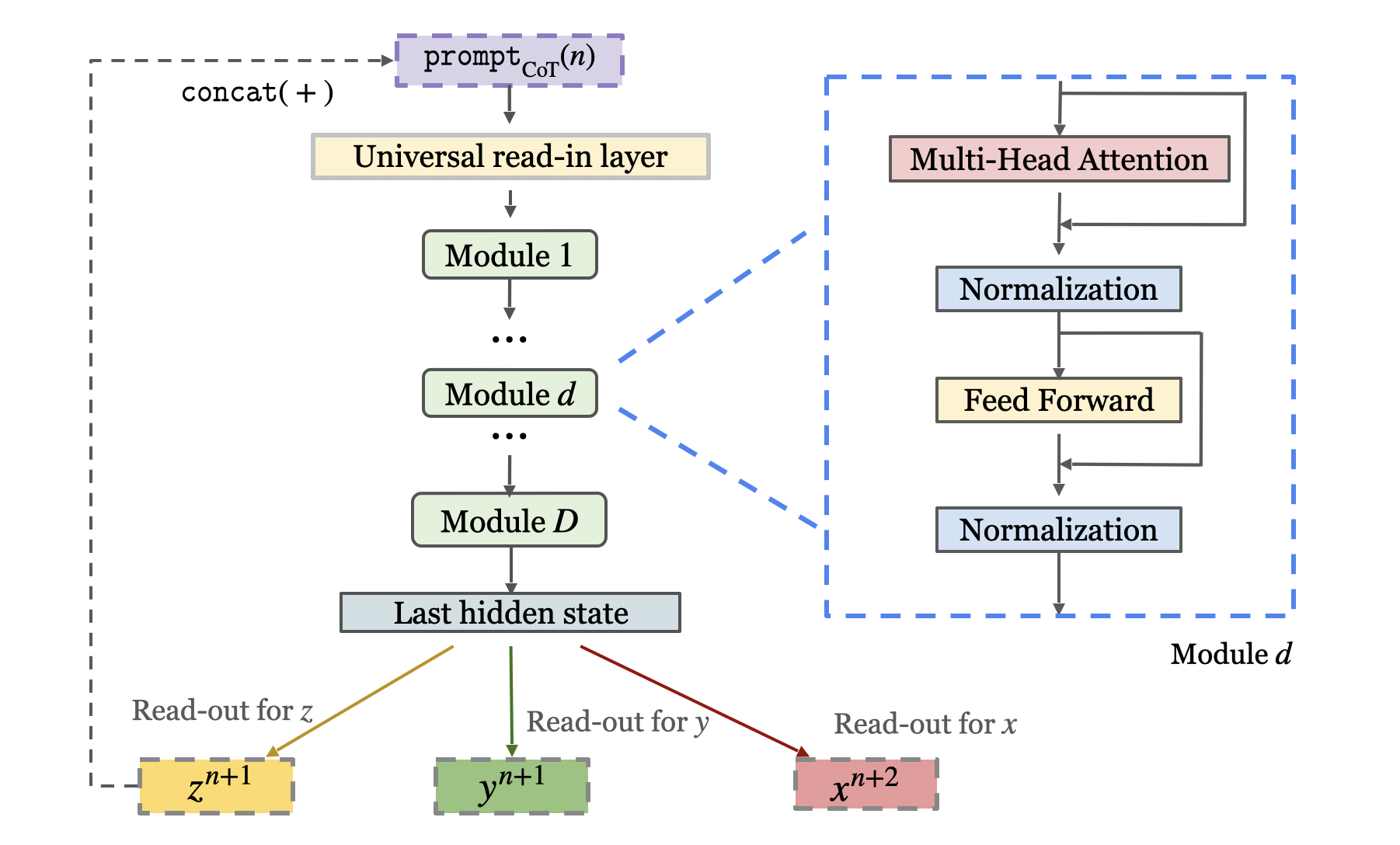}
    \caption{An illustration of the GPT-2 based transformer model. We begin by embedding the input prompt using a universal read-in function. The embedding is then processed by our backbone model, which is based on the GPT-2 architecture. We extract the last hidden state and apply separate read-out functions for inputs \( x \), intermediate steps \( z \), and outputs \( y \). During testing, we concatenate the inferred \( z^\mathrm{test} \) with \( \pt_\mathrm{CoT}(n) \) to infer \( y^\mathrm{test} \).}
    \label{fig:transformer_arch_experiments}
\end{figure}

\vspace{2cm}

\section{Conclusion}
In this paper, we explore the theoretical underpinnings of \ac{cot} prompting and its variants through a statistical lens. 
In particular, under a latent variable model that depicts multi-step reasoning,  we showed the estimators induced by  CoT prompting on a pretrained LLM is approximately equivalent to a Bayesian estimator.
More importantly, we prove that the statistical error of CoT can be upper bounded by a sum of pretraining error and promoting error, and we explicitly analyze them separately. 
In particular, we prove that prompting error decreases exponentially with the increasing number of demonstrations
included in the prompt, and the statistical error of the pretrained LLM is analyzed under the PAC-Bayes framework. 
We also extend our analysis to various CoT variants and establish exponential rates of convergence. 
Moreover, we establish both theoretical and empirical comparisons between CoT and vanilla ICL, which shed new light on the role played by intermediate reasoning steps. 
In future work,
we hope to extend our theoretical framework to better understand other prompting methods beyond CoT, shedding light on their effectiveness and potential for improvement.

\section*{Acknowledgement}
Zhuoran Yang acknowledges the support of NSF under the award DMS-2413243. 

\newpage
\bibliographystyle{ims}
\bibliography{ref.bib}

\newpage
\appendix

\newpage

\section{A Generalized Multi-Step Latent Variable Model} \label{app: generalized model}
In this section, we propose a  \textbf{generalized multi-step latent-variable model} that removes the i.i.d requirement in the model defined in~\eqref{eq:latent_var_model}. This generalized model captures \textbf{(i)} 
the evolving relationships among examples, and \textbf{(ii)} the multi-step reasoning framework of \ac{cot}. 
The rationale behind \textbf{(i)} is that 
\ac{llm}s are pretrained on trillions of reasoning steps from documents and articles from the internet \citep{openai2023gpt}. The examples in the pretraining data associated with the same task are often \textbf{not i.i.d.}
For example, imagine composing a sequence
of examples of the concept of ``animals", we typically begin with familiar
examples such as cats and dogs before progressing to less conventional ones like panthers and
meerkats. 

To capture both \textbf{(i)} and \textbf{(ii)}, we propose to model the joint distribution of the latent variable $\theta^* \in \Theta$, the $n$ examples $\{s^j\}_{j=1}^n$, and the test instance $z_{0:H}^\mathrm{test}$ as follows:
\begin{align} 
    \begin{split}
    \theta^* &\sim \pi 
       , \qquad \qquad \qquad \qquad\qquad \qquad \qquad \quad  \: ~  z_0^i  = f_{\theta^*}\big(\{s^j\}_{j=1}^{i-1},\zeta_i\big), \label{eq:generalized_latent_var_model}\\
       u_h^i &= g_{\theta^*}\big(\{u_k^j\}_{j=1,k=1}^{i-1, H},u_1^i,\cdots,u_{h-1}^i,\xi_h^i\big), \qquad 
       z_h^i  = F(z_0^{i},\cdots, z_h^i, u_h^i,\epsilon_h^i), \qquad \forall  h \in [H].  
    \end{split}
   \end{align}
   Here $f_{\theta^*}$ and $g_{\theta^*} $ are functions depending on $\theta^*$, 
 $\zeta_i$, $\xi_h^i$, and $\epsilon_h^i$ are independent noise terms, and $F$ is a function that does not depend on $\theta^*$. 
 The key assumption of \eqref{eq:generalized_latent_var_model} is that each reasoning step $z_h^i$ depends on the task parameter $\theta^*$ only through a latent variable $u_h^i$, and these latent variables  $\{u_h^i\}_{h=1, i=1}^{H, n}$ form a dynamical system in the latent space.
 The evolution of $u_h^i $ depends on $\theta^*$ and all the latent variables of the previous examples $\{ s^j \}_{j < i}$.
 Thus, each $z_h^i$ is allowed to implicitly depend on the previous examples as well. 
 Such a latent dynamical system captures the fact that the demonstration examples are created with a dependent structure. 
 In comparison, in the simpler model in 
\eqref{eq:latent_var_model}, 
the examples 
are i.i.d. given $\theta^*$. 
  We recover the simpler model in~\eqref{eq:latent_var_model} by setting $u_h^i = \theta^*$ and $z_0^i  = f_{\theta^*}\big(\zeta_i\big)$ in \eqref{eq:generalized_latent_var_model}.

  Intuitively, \(\theta^*\) represents the latent concept specifying the task, such as ``calculate twice the area code of the given country," ``solve an arithmetic problem," or ``write a science fiction novel." 
  In \eqref{eq:generalized_latent_var_model}, we model LLM-based reasoning as a hierarchical process: first, we generate a sequence of task-specific latent ``goals" \(\{u_h^i\}_{h \in [H]}\), then we translate these goals into natural language \(\{z_h^i\}_{h \in [H]}\). Each goal corresponds to a specific reasoning step. The sequence of latent variables completely determines the reasoning process, and thus we assume  \(F\) in \eqref{eq:generalized_latent_var_model} does not involve \(\theta^*\). See Figure~\ref{fig:general_model} for an illustration of this model.

\begin{figure}[t]
    \centering
    \includegraphics[width=0.9\textwidth]{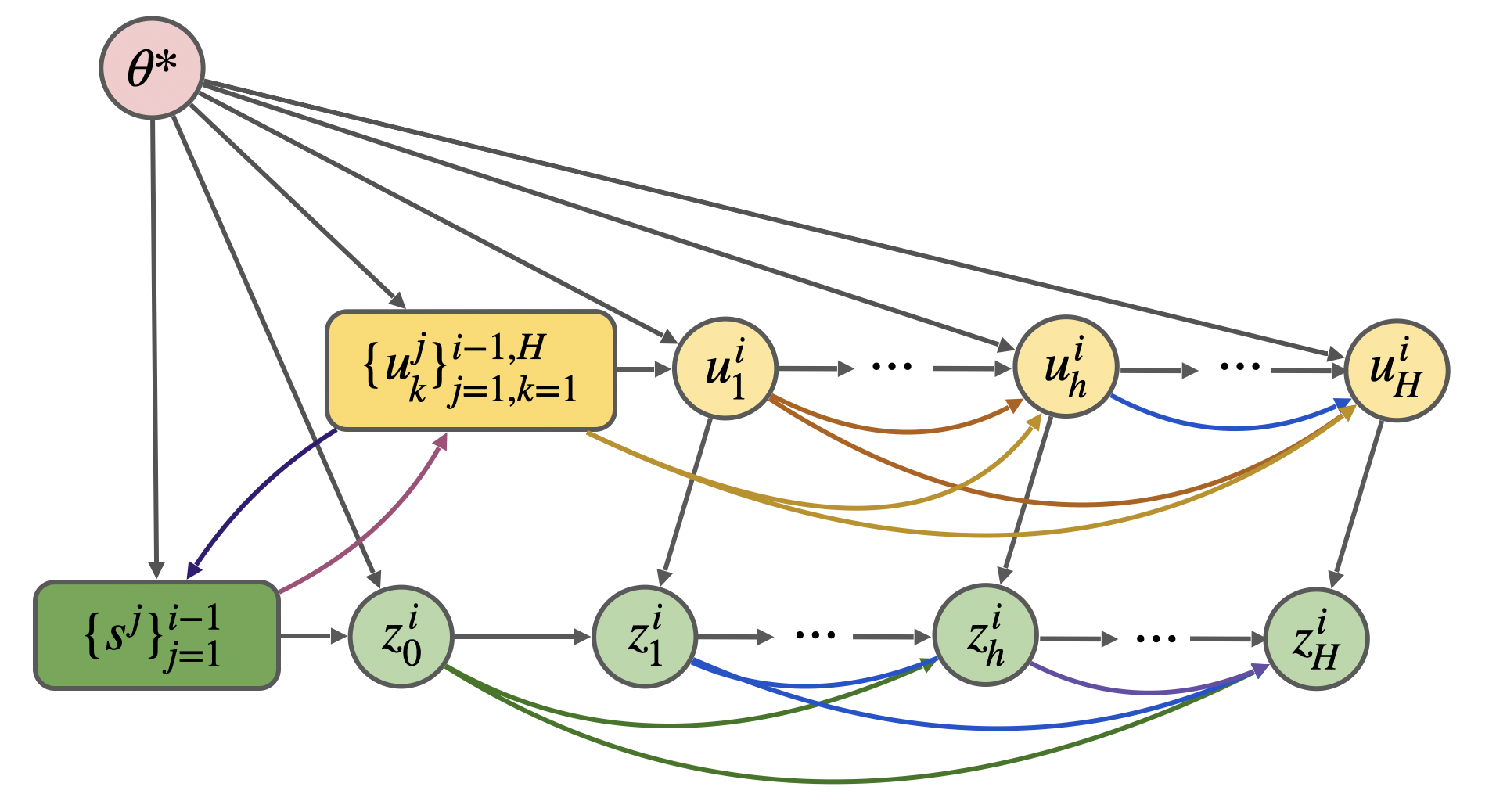}
    \caption{An illustration of the generalized multi-step latent-variable model~\eqref{eq:generalized_latent_var_model} with the fixed task $\theta^*$. Compared to the simple model~\eqref{eq:latent_var_model}, this model allows each step $z_j^i$ of $i$-th example to depend on earlier examples via $u_j^i$, which depends on all latent variables previously generated. Therefore, the examples $\{s^i\}_{i=1}^n$ do not have to be i.i.d.}
    \label{fig:general_model}
\end{figure}

As a concrete example, consider \(\theta^*\) as the task of ``solving an arithmetic problem".  The problem might be described in a specific context, such as using the number of apples. However, the underlying reasoning process is independent of this context and relies solely on a sequence of arithmetic operations. These operations can be viewed as the latent variables \(u_h^i\) in our model, while describing them in the context of apples can be seen as generating the natural language \(z_h^i\) from these latent variables. Additionally, the random variables \(\zeta^i, \epsilon_h^i\) (for \(h \in [H]\)) determine how the arithmetic formula is contextualized. The random variables \(\xi_h^i\) introduce stochasticity into the reasoning process. See Figure~\ref{fig:interpret_latent_variable_model_general} for an illustration.
\begin{figure}[t]
    \centering
    \includegraphics[width = 0.95\textwidth]{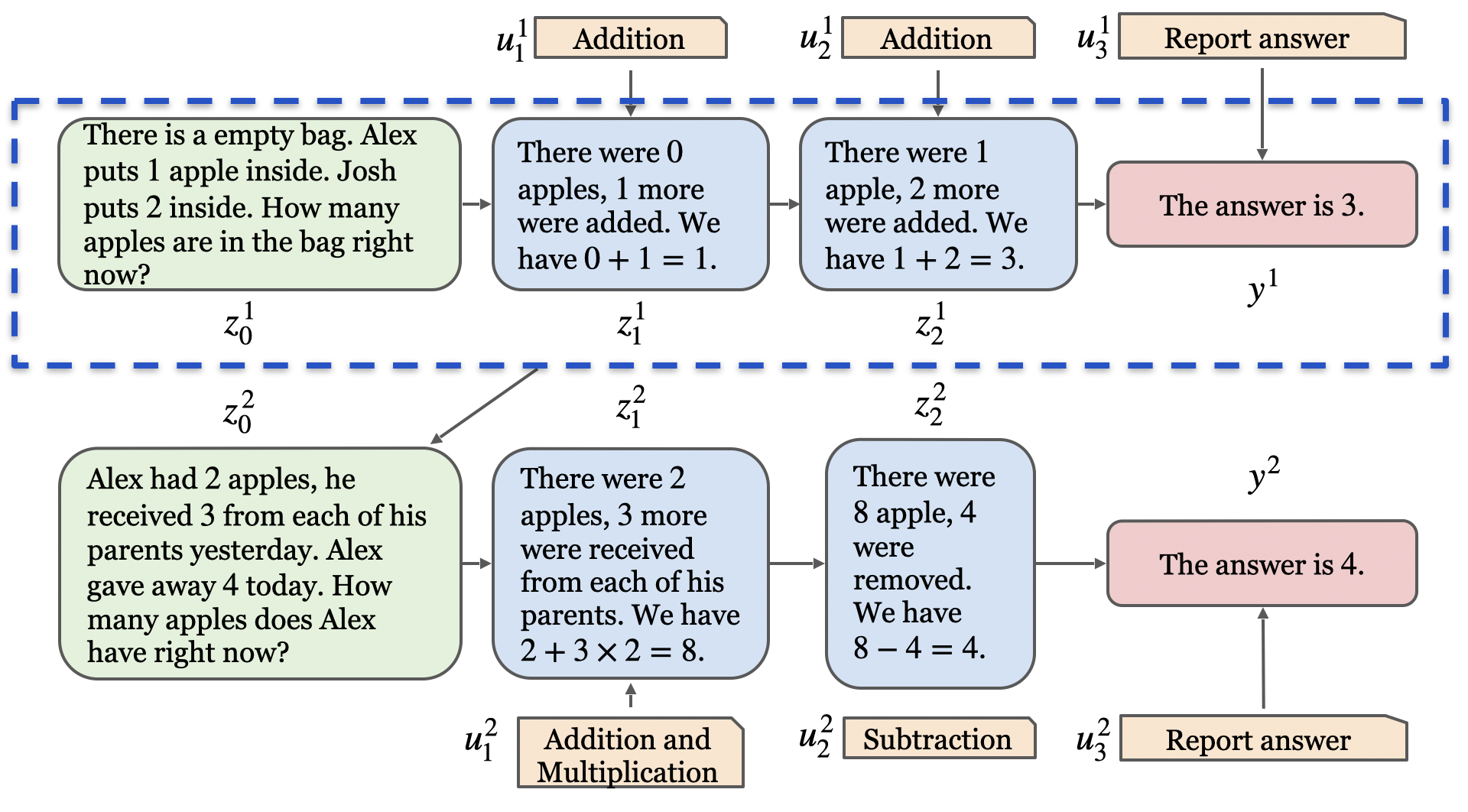}
    \caption{An example of $2$ demonstrations generated from the generalized model~\ref{eq:generalized_latent_var_model}, where we omit some edges in the graph to simplify the representation. In this example, we want to highlight two features of our model. The first and second rows correspond to two demonstrations. Note that the reasoning steps $\{z_h^i\}$ depend on latent concept $\theta^*$ implicitly via latent variables $\{u_h^i\}$, which give specific instructions for each step in the corresponding example. Furthermore, the latent variables $u^1_{1:3}$ are different from $u^2_{1:3}$, providing more flexibility and diversity among the demonstrations. In addition, the second query $z_0^2$ is not independent of the previous demonstration $z_{0:3}^1$ since it is a more complicated version of an arithmetic problem.}
    \label{fig:interpret_latent_variable_model_general}
\end{figure}


We remark that our model is a general formulation that recovers many existing models proposed in the existing works. Specifically, we recover the models in \cite{jiang2023latent} by setting the latent variables $(u_1^i,\cdots,u_H^i)$ directly as the latent variable vector $\theta^*$. Besides, we cover the models studied in \cite{zhang2023and, wang2023large} by setting $H=1$.

Finally, some of our results extend to the generalized model~\ref{eq:generalized_latent_var_model}. Specifically, Lemma~\ref{lem: bma_cot_2} demonstrates that LLMs perform BMA during CoT under this generalized model. In Section~\ref{subsec: imperfect}, we describe the pretraining process and analyze its performance within the framework of the generalized model~\eqref{eq:generalized_latent_var_model}.

\clearpage 

\section{Proofs of the Results in  Section~\ref{sec: methodology}}
In this section, we prove the results in Section \ref{sec: methodology}.
We first prove Lemma~\ref{lem: bma.cot} and its extension to the generalized multi-step latent variable model  and then prove Proposition~\ref{prop: attn}.

\subsection{Proof of Lemma~\ref{lem: bma.cot} and Its Extension} \label{proof: bma}

We prove Lemma~\ref{lem: bma.cot} and introduce its extension to the generalized multi-step latent variable in \eqref{eq:generalized_latent_var_model} with a proof.

\begin{proof}[Proof of Lemma~\ref{lem: bma.cot}]
When $N$ goes to infinity, we only need to consider the population counterpart of the likelihood loss in \eqref{eq:pretraining_loss},
which can be written as 
\begin{align}
   \cL(\rho)
    &=   -\frac{1}{T (H+1)}\sum_{t\in [T]} \sum_{h=0}^{H} \EE\left[\log \PP_\rho \big (z_{h}^{t} \given \Upsilon_{t-1},\{z_j^{t}\}_{j=0}^{h-1} \big )\right] \label{eq:population_kl_error}\\
    &= \frac{1}{T (H+1)}\sum_{t\in [T]} \sum_{h=0}^{H} \EE\Big[\kl\big(
    \PP \big (\cdot \given \Upsilon_{t-1},\{z_j^{t}\}_{j=0}^{h-1} \big )
    \,\|\,\PP_\rho \big (\cdot \given \Upsilon_{t-1},\{z_j^{t}\}_{j=0}^{h-1} \big )  \big)\Big] + \mathtt{Const}, \notag  
\end{align}
where $\mathtt{Const}$ denotes a constant that does not depend on $\rho$.
Here the second equality follows from the definition of KL divergence. 
When the LLM class is sufficiently expressive, i.e., $\PP\in \{\PP_{\rho} \given \rho\in \cP_{\mathrm{LLM}}\}$, for any $(\Upsilon_{t-1},\{z_j^{t}\}_{j=0}^{h-1})$ that has nonzero density in the pretraining distribution, 
the minimizer of $\cL(\rho)$ in \eqref{eq:population_kl_error} must satisfy 
\begin{align}
\label{eq:distribution_match_population}
    \PP_{\mathrm{LLM}} \big (z_h^{t} = \cdot \given \Upsilon_{t-1},\{z_j^{t}\}_{j=0}^{h-1} \big ) =  \PP \big (z_h^{t} = \cdot \given \Upsilon_{t-1},\{z_j^{t}\}_{j=0}^{h-1} \big )
\end{align}
for all $h\in[H]$ and $t\in [T]$,   
where we let $\PP_{\mathrm{LLM}} $ denote the distribution induced by the minimizer of $\cL(\cdot)$.
Here \eqref{eq:distribution_match_population} holds for all prompts $\Upsilon_{t-1} \cup \{z_j^{t}\}_{j=0}^{h-1} \in \cL^*$ with a nonzero density under $\PP$. 
Thus, \eqref{eq:distribution_match_population} shows that the perfectly pretrained LLM matches the pretraining distribution of predicting the \emph{next reasoning step}. 

In the following, we will prove the desired result by generalizing \eqref{eq:distribution_match_population} to the next multi-step reasoning. Now, when we generate the answer $y^\mathrm{test}$ given the \ac{cot} prompt using $\PP_{\mathrm{LLM}} $,
when $\mathtt{prompt}_{\mathrm{CoT}}(n)$ has a positive density under $\PP$,
by \eqref{eq:distribution_match_population} we have 
\begin{align} \label{eq:match_first_step}
\PP_{\mathrm{LLM}} \big (z_1^\mathrm{test} = \cdot \given \mathtt{prompt}_{\mathrm{CoT}}(n) \big ) =  \PP \big (z_1^\mathrm{test} = \cdot \given \mathtt{prompt}_{\mathrm{CoT}}(n)\big ).
\end{align}
Let $\tilde \cL_1  \subseteq \cL$ denote the subset of reasoning steps such that the conditional distribution in \eqref{eq:match_first_step} is positive when $z_1^\mathrm{test} \in \tilde \cL_1$, and let $\tilde \cL_1^ \complement $ denote its complement. 
Since $\PP\big(\mathtt{prompt}_{\mathrm{CoT}}(n)   ) > 0$, we have 
$$\PP( \mathtt{prompt}_{\mathrm{CoT}}(n) \cup \{ z_1^\mathrm{test} = z_1 \} \big) > 0, \qquad \forall z_1  \in \tilde \cL_1.
$$
Then using \eqref{eq:distribution_match_population}, 
since $\mathtt{prompt}_{\mathrm{CoT}}(n) \cup \{ z_1^\mathrm{test} = z_1 \}$ has positive density under $\PP$, 
we have 
$$
\PP_{\mathrm{LLM}} \big (z_2^\mathrm{test} = \cdot \given \mathtt{prompt}_{\mathrm{CoT}}(n), z_1^\mathrm{test} = z_1  \big ) = \PP  \big (z_2^\mathrm{test} = \cdot \given \mathtt{prompt}_{\mathrm{CoT}}(n), z_1^\mathrm{test} = z_1  \big ), \forall z_1 \in\tilde \cL_1 .
$$
Moreover, we have $\PP_{\mathrm{LLM}}  (z_1^\mathrm{test} = z_1 \given \mathtt{prompt}_{\mathrm{CoT}}(n) ) = 0$ for all $z_1 \in \tilde \cL_1^\complement$.
Therefore, by direct computation, we have 
 \begin{align*}
& \PP_{\mathrm{LLM}}\big(z_2^{\mathrm{test}}=\cdot \given \mathtt{prompt}_{\mathrm{CoT}}(n)\big) \notag \\
& \qquad = 
\int_{\tilde \cL_1} \PP_{\mathrm{LLM}}\big(z_2^{\mathrm{test}}=\cdot \given \mathtt{prompt} _{\mathrm{CoT}}(n),  z_1^{\mathrm{test}}= z_1 \big) \cdot \PP_{\mathrm{LLM}}\big(z_1^{\mathrm{test}}=z_1  \given \mathtt{prompt} _{\mathrm{CoT}}(n)     \big)\ud z_1 \\
&\qquad = 
\int_{\tilde \cL_1} \PP \big(z_2^{\mathrm{test}}=\cdot \given \mathtt{prompt} _{\mathrm{CoT}}(n),  z_1^{\mathrm{test}}= z_1 \big) \cdot \PP \big(z_1^{\mathrm{test}}=z_1  \given \mathtt{prompt} _{\mathrm{CoT}}(n)     \big)\ud z_1 \\
&\qquad = \PP \big(z_2^{\mathrm{test}}=\cdot \given \mathtt{prompt}_{\mathrm{CoT}}(n)\big).
\end{align*}

We can generalize this argument to $z_H^\mathrm{test} = y^\mathrm{test}$. 
Specifically, let $\tilde \cL^*$ denote the reasoning steps $\{z_1^\mathrm{test}, \ldots, z_{H-1}^\mathrm{test}\}$ with positive density under $ \PP  ( \cdot   \given \mathtt{prompt} _{\mathrm{CoT}}(n)      )$. 
Let each element in $\tilde \cL^*$ be denoted by $\zb = \{ z_1, \ldots, z_{H-1}\}$.
Then we have 
\begin{align}
\label{eq:finalize_matching_1}
& \PP\big(y^{\mathrm{test}}=\cdot \given \mathtt{prompt}_{\mathrm{CoT}}(n)\big) \notag \\
& \qquad = \int_{\zb \in \tilde \cL^*} \PP\big (y^{\mathrm{test}}=\cdot, \{z_1^\mathrm{test}, \ldots, z_{H-1}^\mathrm{test} \} = \zb \given  \mathtt{prompt}_{\mathrm{CoT}}(n) \big) ~\ud \zb \notag \\
& \qquad  =  \int_{\zb \in \tilde \cL^*} \prod_{h=1}^H\PP\big(z_h^{\mathrm{test}}=\cdot \given \mathtt{prompt}_{\mathrm{CoT}}(n), z_1^\mathrm{test} = z_1, \ldots, z_{h-1}^\mathrm{test} = z_{h-1} \big)\ud z_1 \cdots \ud z_{H-1}. 
\end{align}
Here the second inequality follows from the factorization of joint probability into a product of conditional probabilities. 
Since $\PP(\mathtt{prompt}_{\mathrm{CoT}}(n), z_1^\mathrm{test} = z_1, \ldots, z_{h-1}^\mathrm{test} = z_{h-1})  >0$ by  the definition of $\tilde \cL^*$, 
by \eqref{eq:distribution_match_population} we have 
\begin{align} \label{eq:finalize_matching_prob2}
    & \PP\big(z_h^{\mathrm{test}}=\cdot \given \mathtt{prompt}_{\mathrm{CoT}}(n), z_1^\mathrm{test} = z_1, \ldots, z_{h-1}^\mathrm{test} = z_{h-1} \big) \notag \\
& \qquad = \PP_{\mathrm{LLM} } \big(z_h^{\mathrm{test}}=\cdot \given \mathtt{prompt}_{\mathrm{CoT}}(n), z_1^\mathrm{test} = z_1, \ldots, z_{h-1}^\mathrm{test} = z_{h-1} \big).
\end{align}
Therefore, this equality implies that $\PP_{\mathrm{LLM}}$ cannot assign any positive density on the complement of $\tilde \cL^*$, otherwise \eqref{eq:finalize_matching_prob2} is violated. 
Thus, combining  \eqref{eq:finalize_matching_1} and \eqref{eq:finalize_matching_prob2}, we conclude that 
\begin{align*}
&\PP\big(y^{\mathrm{test}}=\cdot \given \mathtt{prompt}_{\mathrm{CoT}}(n)\big) \notag \\
& \qquad = \int_{\zb \in \tilde \cL^*} \prod_{h=1}^H\PP\big(z_h^{\mathrm{test}}=\cdot \given \mathtt{prompt}_{\mathrm{CoT}}(n), z_1^\mathrm{test} = z_1, \ldots, z_{h-1}^\mathrm{test} = z_{h-1} \big)\ud z_1 \cdots \ud z_{H-1} \notag \\
&\qquad  = 
 \int_{\zb \in \tilde \cL^*} \prod_{h=1}^H\PP_{\mathrm{LLM} }\big(z_h^{\mathrm{test}}=\cdot \given \mathtt{prompt}_{\mathrm{CoT}}(n), z_1^\mathrm{test} = z_1, \ldots, z_{h-1}^\mathrm{test} = z_{h-1} \big)\ud z_1 \cdots \ud z_{H-1} \notag\\
 & \qquad =  \int_{\zb \in \tilde \cL^*}\PP_{\mathrm{LLM} } \big(y^{\mathrm{test}}=\cdot,  \{z_1^\mathrm{test}, \ldots, z_{H-1}^\mathrm{test} \} = \zb  \given \mathtt{prompt}_{\mathrm{CoT}}(n)\big)  \ud \zb  \nonumber\\
 &\qquad= \PP_{\mathrm{LLM}} \big(y^{\mathrm{test}}=\cdot \given \mathtt{prompt}_{\mathrm{CoT}}(n)\big) .
\end{align*}
Here the second equality follows from \eqref{eq:finalize_matching_prob2} and the last equality follows from the fact that $\PP_{\mathrm{LLM}} $ is supported on $\tilde \cL^*$. 
Finally, by the Bayes' rule and the fact that $y^{\mathrm{test}}$ is independent of the $n$ examples in $\mathtt{prompt}_{\mathrm{CoT}}(n)$ conditioning on $\theta$, we have 
 \begin{align*}
\PP_{\mathrm{LLM}}\big(y^{\mathrm{test}}=\cdot \given \mathtt{prompt}_{\mathrm{CoT}}(n)\big) 
&=\PP\big(y^{\mathrm{test}}=\cdot \given \mathtt{prompt}_{\mathrm{CoT}}(n)\big)\\
&= \int_{\Theta} \pi \big(\theta \given \mathtt{prompt}_{\mathrm{CoT}}(n) \big)\PP\big(y^{\mathrm{test}}=\cdot  \given z_0^{\mathrm{test}},\theta \big) \text{d}\theta.
\end{align*}
 Therefore, we conclude the proof.
\end{proof}

In the following, we extend  Lemma  to the generalized multi-step latent variable in \eqref{eq:generalized_latent_var_model} and present its proof.  

\begin{lemma}  \label{lem: bma_cot_2}
With pretraining data sampled from the generalized model in~\eqref{eq:generalized_latent_var_model}, 
we consider the population counterpart of the MLE in \eqref{eq:pretraining_loss} with $N$ going to infinity. 
 Suppose that the CoT prompt $\mathtt{prompt}_{\mathrm{CoT}}(n)$ has nonzero density under the pretraining distribution. 
Then the perfectly pretrained \ac{llm}s perform \ac{bma} during \ac{cot} prompting. Namely, we have that 
$$
\PP_{\mathrm{LLM}}\big(y^{\mathrm{test}} = \cdot \given \mathtt{prompt}_{\mathrm{CoT}}(n)\big) =  \int_{\Theta} \PP \big(y^{\mathrm{test}} = \cdot \given \mathtt{prompt}_{\mathrm{CoT}}(n),\theta \big)\pi \big(\theta \given \mathtt{prompt}_{\mathrm{CoT}}(n)\big)\mathrm{d}\theta.
$$
    
\end{lemma}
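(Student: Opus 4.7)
The plan is to mimic the structure of the proof of Lemma~\ref{lem: bma.cot}, with the crucial modification that we no longer invoke conditional i.i.d.ness of the examples given $\theta^*$. The generalized model still specifies a well-defined joint distribution $\PP$ over sequences of reasoning steps (obtained by marginalizing the latent $\{u_h^i\}$ out), so the population pretraining objective still singles out conditional next-step distributions under $\PP$, and Bayes' rule still applies with $\theta^*$ as the latent parameter. The main technical care is in tracking which objects are conditioned on which events throughout, since several conditional independence relations that simplified the proof of Lemma~\ref{lem: bma.cot} are no longer available.

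First I would repeat the population-MLE argument: sending $N\to\infty$ in \eqref{eq:pretraining_loss} reduces the loss to a sum of KL divergences as in \eqref{eq:population_kl_error}. Because the LLM class is assumed expressive enough to contain $\PP$, the minimizer must satisfy, for every prefix $S$ with positive density under the pretraining distribution,
\begin{align*}
\PP_{\mathrm{LLM}}\big(z_h^t = \cdot \given S\big) \;=\; \PP\big(z_h^t = \cdot \given S\big),\qquad 0\le h\le H,\ t\in[T].
\end{align*}
This step is identical to the corresponding step in Lemma~\ref{lem: bma.cot}; it uses only that the pretraining distribution is $\PP$ and that the model class contains $\PP$, not any i.i.d.\ structure across examples.

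Next I would promote this one-step matching to multi-step matching on the test example by induction on the reasoning depth, exactly as in the proof of Lemma~\ref{lem: bma.cot}. Letting $\tilde{\mathcal L}^*$ denote the set of tuples $\mathbf{z}=(z_1,\dots,z_{H-1})$ with positive $\PP$-density given $\mathtt{prompt}_{\mathrm{CoT}}(n)$, the same chain-rule factorization
\begin{align*}
\PP\big(y^{\mathrm{test}}=\cdot\given \mathtt{prompt}_{\mathrm{CoT}}(n)\big) = \int_{\tilde{\mathcal L}^*} \prod_{h=1}^{H}\PP\big(z_h^{\mathrm{test}}=\cdot \given \mathtt{prompt}_{\mathrm{CoT}}(n),z_{1:h-1}^{\mathrm{test}}\big)\,\mathrm d z_{1:H-1}
\end{align*}
together with the identity above yields $\PP_{\mathrm{LLM}}(y^{\mathrm{test}}=\cdot\given \mathtt{prompt}_{\mathrm{CoT}}(n))=\PP(y^{\mathrm{test}}=\cdot\given \mathtt{prompt}_{\mathrm{CoT}}(n))$. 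This part goes through verbatim because it only uses that $\PP$ is a valid joint distribution, not the form of its conditional independence graph.

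Finally, I would apply Bayes' rule to introduce $\theta$. Since the generalized model \eqref{eq:generalized_latent_var_model} places a prior $\pi$ on $\theta^*$ and defines the joint law of $(\theta^*, z_{0:H}^{1:n}, z_{0:H}^{\mathrm{test}})$, we may write
\begin{align*}
\PP\big(y^{\mathrm{test}}=\cdot\given \mathtt{prompt}_{\mathrm{CoT}}(n)\big)=\int_{\Theta}\PP\big(y^{\mathrm{test}}=\cdot\given \mathtt{prompt}_{\mathrm{CoT}}(n),\theta\big)\,\pi\big(\theta\given\mathtt{prompt}_{\mathrm{CoT}}(n)\big)\,\mathrm d\theta,
\end{align*}
and combining with the previous display gives the claim. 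The only real subtlety, and the main obstacle worth flagging, is that unlike in Lemma~\ref{lem: bma.cot} we \emph{cannot} replace $\PP(y^{\mathrm{test}}=\cdot\given \mathtt{prompt}_{\mathrm{CoT}}(n),\theta)$ by $\PP(y^{\mathrm{test}}=\cdot\given z_0^{\mathrm{test}},\theta)$: the latent-dynamical structure makes the test example dependent on the earlier examples even after conditioning on $\theta$. The statement of Lemma~\ref{lem: bma_cot_2} is written so that this simplification is not needed, so the proof proceeds without it; one must simply be careful not to drop this conditioning anywhere. A secondary care point is verifying that the set of conditioning events appearing in the inductive step all have positive $\PP$-density, which follows from the hypothesis that $\mathtt{prompt}_{\mathrm{CoT}}(n)$ itself has positive density and the definition of $\tilde{\mathcal L}^*$, again exactly as in Lemma~\ref{lem: bma.cot}.
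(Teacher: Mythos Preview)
Your proposal is correct and follows essentially the same approach as the paper: reuse the population-MLE argument of Lemma~\ref{lem: bma.cot} to get one-step matching, extend to multi-step matching by the chain rule, and then apply Bayes' rule, noting that the conditioning on the full $\mathtt{prompt}_{\mathrm{CoT}}(n)$ (rather than just $z_0^{\mathrm{test}}$) must be retained because examples are no longer conditionally i.i.d.\ given $\theta$. The paper's proof is slightly terser but makes exactly these moves and explicitly flags the same subtlety you identified.
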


\begin{proof}
When the pretraining dataset is  sampled according to the model in \eqref{eq:generalized_latent_var_model}, by Bayes's rule, we have  
\begin{align}
    \label{eq:apply_bayes_rule}
    \PP\big(y^{\mathrm{test}}=\cdot \given \mathtt{prompt}_{\mathrm{CoT}}(n)\big)= \int_{\Theta} \pi \big(\theta \given \mathtt{prompt}_{\mathrm{CoT}}(n) \big)\PP\big(y^{\mathrm{test}}=\cdot \given \mathtt{prompt}_{\mathrm{CoT}}(n),\theta \big) \text{d}\theta.
\end{align}
Note that the $n$ examples and testing query $z_0^\mathrm{test}$ in $\mathtt{prompt}_{\mathrm{CoT}}(n)$ are not i.i.d under the generalized model.

Since the MLE loss does not concern the underlying distribution, our analysis in the proof of Lemma \ref{lem: bma.cot} still holds. 
In particular, \eqref{eq:distribution_match_population} holds on all prompts with a positive density under the pretraining distribution. 
As a result, starting from a CoT prompt $\mathtt{prompt}_{\mathrm{CoT}}(n)$ with a positive density, we can show that 
$$
\PP\bigl ( z_{1}^\mathrm{test} = \cdot, \ldots, z_{H}^\mathrm{test} = \cdot \given \mathtt{prompt}_{\mathrm{CoT}}(n) \big) = \PP_{\mathrm{LLM}} \bigl ( z_{1}^\mathrm{test} = \cdot, \ldots, z_{H}^\mathrm{test} = \cdot \given \mathtt{prompt}_{\mathrm{CoT}}(n) \big) 
$$
by writing the joint probability as a product of $H$ conditional probabilities. 
Therefore, we similarly have 
\begin{align*}
\PP_{\mathrm{LLM}}\big(y^{\mathrm{test}}=\cdot \given \mathtt{prompt}_{\mathrm{CoT}}(n)\big) &
=\PP\big(y^{\mathrm{test}}=\cdot \given \mathtt{prompt}_{\mathrm{CoT}}(n)\big) 
\end{align*}
when $N$ does to infinity. 
We conclude the proof by combining this fact with \eqref{eq:apply_bayes_rule}.  
\end{proof}

\subsection{Proof of Proposition~\ref{prop: attn}} \label{proof: prop attn}

\begin{proof}
    We aim to show that the attention mechanism approximates \ac{bma}, meaning that $\bar v_\mathrm{test}$ and $\mathtt{attn}(q_h^\mathrm{test}, \mathtt{keys}, \mathtt{values})$ converge to the same limit as \(n \to \infty\). 
    To this end, we show that both quantities converge to a population-level estimator based on population matrices. 
    The proof is divided into three steps. First, we define the population-level estimator that bridges the BMA estimator and the attention estimator. 
    Then, in the second and the last step, we prove that these estimators converge the population-level estimator as $n$ goes to infinity. Our proof generalizes the proof in \cite{zhang2023and} for vanilla \ac{icl}.

\vspace{2mm}

\noindent \textbf{Step 1: Define the population-level estimator.}
Recall that we focus on a simplified model defined in \eqref{eq: linear model}, which involves a feature mappings $\phi$, $k$, and $v$.
Also note that we define 
$k_h^i = \big(k(z_0^i), k(z_1^i),\cdots, k(z_{h-1}^i),0,\cdots,0 \big)   \in \RR^{H\cdot d_k}$ as the features of the first $h-1$ steps of the $i$-th example and define $v_{h}^i = C_{v} \cdot v(z_h^i) $ for some constant $C_{v} > 0$. 
Both $k_h^i$ and $v_h^i$ are random variables depending on the latent variable $\theta^*$.

We define random variables \(\cK\) and \(\cV\) as uniform mixtures of \( \{ k_h^i\}_{h\in [H]} \) and \(\{ v(z_h^i) \}_{h\in [H]} \), respectively, with \(h \sim \text{Unif}([H])\).
That is, with probability with $1/H$, $(\cK, \cV)$ has the same distribution as 
$(k_h^i, v(z_h^i) ) $ for any $h \in [H]$. 
Under the model in \eqref{eq: linear model}, $\cV$ and $\cK$ are linked via a linear model, i.e., $\E[\cV \given \cK ] = \theta^* \phi( \cK)  .$
Thus, $\theta^*$ can be viewed as the parameter of a linear regression problem, with $\cK$ being the covariate and $\cV$ being the response. 
To define a population-level estimator of $\theta^*$,  we define  two population matrices
\begin{align}
    \label{eq:theta_population_level}
    \begin{split}
    C_{\cK\cK} & = 1/H\sum_{h=1}^H\EE \bigl[ \phi(k_h^i ) \phi(k_h^i )^\top \bigr] \in \RR^{d_{\phi} \times d_{\phi}} , \\
    C_{\cV\cK}&  = 1/H\sum_{h=1}^H\EE \bigl[ v(z_h^i)  \phi(k_h^i ) ^\top \bigr] \in \RR^{d_v \times d_\phi} .
    \end{split}
\end{align}
Then we have $\theta^* = C_{\cV\cK} C_{\cK\cK} ^{-1}$. 
When using this population-level estimator of $\theta^*$ to make predictions on the test instance, for any $h \geq 0$, we use $k_h^\mathrm{test} $ in \eqref{eq:define_query} as the new covariate and predict the corresponding  response, 
which is given by 
$
   C_{\cV\cK} C_{\cK\cK} ^{-1} \phi(k_h^\mathrm{test}).
$
In the rest of the proof, we relate $\bar v_h^\mathrm{test}$ defined in \eqref{eq:bma_estimator} and the $\mathtt{attn} (q_h^\mathrm{test}, \mathtt{keys} , \mathtt{values})$ defined in \eqref{eq:attention} to  this estimator, where $q_h ^\mathrm{test} = \phi(k_h^\mathrm{test})$.

\vspace{2mm}
\noindent\textbf{Step 2: Relate $\bar v_h^\mathrm{test}$ to $C_{\cV\cK} C_{\cK\cK} ^{-1} \phi(k_h^\mathrm{test})$.} 
In the following, for ease of notation, we let $\|\cdot \|_\mathrm{op}$ and $\|\cdot\|_2$  denote the operator norm of matrices and the $L^{2}$ norm of vectors. Note that according to the model in \eqref{eq: linear model}, $\theta^*$ is shared in all the $H$ steps. 
Thus, we can pool all the reasoning steps together to estimate $\theta^*$.
Notice that $\Bar v_h^\mathrm{test}$ defined in \eqref{eq:bma_estimator} is exactly the ridge regression estimator for the linear model $\cV = \theta ^* \phi(\cK) + \mathtt{noise}$, where the noise term is an independent $\cN(0, \sigma^2) $ random variable. 
To simplify the notation, 
let $L = n H$. 
Recall that we define $\phi(K^n)$ and $V^n$ in Section \ref{subsec:attention_parameterize_bma}, where $\phi(K^n)\in \RR^{d_\phi \times L }$ and $V^n \in \RR^{d_v \times L} $. 
We define the sample-based counterparts of $C_{\cK\cK}$ and $C_{\cV \cK}$  in \eqref{eq:theta_population_level} as
\begin{align}
\label{eq:define_empirical_matrices}
	    \hat C_{\cK\cK} = L^{-1} \cdot \phi(K^n) \phi(K^n) ^\top \in \RR^{d_{\phi} \times d_{\phi}} , \qquad  \hat C_{\cV\cK} = L^{-1} \cdot (V^n) \phi(K^n) ^\top \in \RR^{d_{v} \times d_{\phi}}. 
    \end{align}
In addition, we define the empirical correlation between value vectors as
\begin{align*}
    \hat C_{\cV\cV} = L^{-1} \cdot (V^n)(V^n)^\top \in \RR^{d_{v} \times d_{v}}.
\end{align*}
  By definition, for any $h\in [H]$, can write  $\Bar v_h^\mathrm{test}$ in \eqref{eq:bma_estimator} as 
    \begin{align*}
        \bar v_h^\mathrm{test} = \hatC_{\cV\cK} (\hat C_{\cK\cK} + L^{-1}\cdot \sigma^2/\lambda \cdot I)^{-1} \phi(k_{h}^\mathrm{test}).
    \end{align*}
    By triangle inequality, we have 
    \begin{align}\label{eq:tac1}
		&\norm[\big]{\bar v_h^\mathrm{test} - C_{\cV\cK} C_{\cK\cK} ^{-1} \phi(k_h^\mathrm{test})} _2 \noend 
		&\quad \le \underbrace{\norm[\big]{ \hatC_{\cV\cK} (\hat C_{\cK\cK} + L^{-1}\cdot \sigma^2/\lambda \cdot I)^{-1} \phi(k_{h}^\mathrm{test}) - C_{\cV\cK} (C_{\cK\cK} + L^{-1}\sigma^2/\lambda \cdot I)^{-1} \phi(k_{h}^\mathrm{test})}_2}_{\displaystyle\text{(i)}} \noend 
		&\qquad + \underbrace{\norm[\big]{ C_{\cV\cK} (C_{\cK\cK} + L^{-1}\cdot \sigma^2/ \lambda \cdot I)^{-1} \phi(k_{h}^\mathrm{test}) - C_{\cV\cK} C_{\cK\cK}^{-1} q_{h}^\mathrm{test}}_2}_{\displaystyle\text{(ii)}}.
	\end{align}
 Here $\textrm{(i)}$ is the variance term of ridge regression which decays with the sample size $L$, and  $\textrm{(ii)}$ is the bias term that decays with the regularization parameter. 
 Our analysis of these terms is similar to that in \cite{zhang2023and}. 
 Note that in contrast to \cite{zhang2023and}, 
 $\hat C_{\cK \cK}$ and $\hat C_{\cV \cK}$ defined in \eqref{eq:define_empirical_matrices} involve  $ \{ (k_j^i,v(z_j^i) ) \}_{j=1,i=1}^{H, n}$ that are dependent in $j$. We handle such dependency by decomposing the double sum according to different step $j\in [H]$ and apply concentration to each step. 
 We first control the norm of $\mathrm{(i)} $ in \eqref{eq:tac1} as
 \begin{align}
     \label{eq:ce001}
\mathrm{(i)} 
 & \leq  \norm[\big]{ \hatC_{\cV\cK} (\hat C_{\cK\cK} + L^{-1}\cdot \sigma^2/\lambda \cdot I)^{-1}\phi(k_h^\mathrm{test}) - \hatC_{\cV\cK} (C_{\cK\cK} + L^{-1}\cdot \sigma^2/\lambda \cdot I)^{-1} \phi(k_h^\mathrm{test}) }_{2} \notag \\
 &\quad\qquad+ \norm[\big]{ (\hatC_{\cV\cK} - C_{\cV\cK}) ( C_{\cK\cK} + L^{-1}\cdot \sigma^2/\lambda \cdot I)^{-1}\phi(k_h^\mathrm{test}) }_{2} \notag \\
 & = \norm[\big]{ \hat C_{\cV\cK} (\hat C_{\cK\cK} + L^{-1}\cdot \sigma^2/\lambda \cdot I)^{-1}(\hat C_{\cK\cK} - C_{\cK\cK} ) (C_{\cK\cK} + L^{-1}\cdot \sigma^2/\lambda \cdot I)^{-1}\phi(k_h^\mathrm{test}) }_{2} \nonumber\\
&\quad\qquad+ \norm[\big]{ (\hatC_{\cV\cK} - C_{\cV\cK}) ( C_{\cK\cK} + L^{-1}\cdot \sigma^2/\lambda \cdot I)^{-1}\phi(k_h^\mathrm{test}) }_{2},
\end{align}
where the first inequality follows from the triangle inequality and the second follows from the fact that
\begin{align*}
    &\hatC_{\cV\cK} (\hat C_{\cK\cK} + L^{-1}\cdot \sigma^2/\lambda \cdot I)^{-1} - \hatC_{\cV\cK} (C_{\cK\cK} + L^{-1}\cdot \sigma^2/\lambda \cdot I)^{-1}\nonumber\\
    &\quad = \hatC_{\cV\cK} (\hat C_{\cK\cK} + L^{-1}\cdot \sigma^2/\lambda \cdot I)^{-1}(C_{\cK\cK} + L^{-1}\cdot \sigma^2/\lambda \cdot I)(C_{\cK\cK} + L^{-1}\cdot \sigma^2/\lambda \cdot I)^{-1} \nonumber\\
    &\quad\qquad - \hatC_{\cV\cK} (\hat C_{\cK\cK} + L^{-1}\cdot \sigma^2/\lambda \cdot I)^{-1}(\hat C_{\cK\cK} + L^{-1}\cdot \sigma^2/\lambda \cdot I)(C_{\cK\cK} + L^{-1}\cdot \sigma^2/\lambda \cdot I)^{-1}\nonumber\\
    &\quad = \hatC_{\cV\cK} (\hat C_{\cK\cK} + L^{-1}\cdot \sigma^2/\lambda \cdot I)^{-1}(C_{\cK\cK} - \hat C_{\cK\cK})(C_{\cK\cK} + L^{-1}\cdot \sigma^2/\lambda \cdot I)^{-1},
\end{align*}
where the equality follows from rearranging the terms. 
We next separately bound the two terms in \eqref{eq:ce001}.
For the first term in the right-hand side of \eqref{eq:ce001}, we have that
\begin{align}\label{eq:ce002}
    & \norm[\big]{ \hat C_{\cV\cK} (\hat C_{\cK\cK} + L^{-1}\cdot \sigma^2/\lambda \cdot I)^{-1}(\hat C_{\cK\cK} - C_{\cK\cK} ) (C_{\cK\cK} + L^{-1}\cdot \sigma^2/\lambda \cdot I)^{-1} \phi(k_h^\mathrm{test})}_{2} \nonumber\\
    &  \quad\leq \norm{\hatC_{\cV\cV}}_{\oper}^{1/2} \cdot \norm[\big]{ \hat C_{\cK\cK}^{1/2} (\hat C_{\cK\cK} + L^{-1}\cdot \sigma^2/\lambda \cdot I)^{-1/2} }_{\oper} \cdot \norm[\big]{(\hat C_{\cK\cK} + L^{-1}\cdot \sigma^2/\lambda \cdot I)^{-1/2}}_{\oper} \nonumber\\
    &\quad\qquad \cdot \norm[\big]{ (\hat C_{\cK\cK} - C_{\cK\cK} ) (C_{\cK\cK} + L^{-1}\cdot \sigma^2/\lambda \cdot I)^{-1} \phi(k_h^\mathrm{test})}_{2} \noend
    & \quad\leq (L^{-1}\cdot \sigma^2/\lambda )^{-1/2} \cdot \norm[\big]{ (\hat C_{\cK\cK} - C_{\cK\cK} ) (C_{\cK\cK} + L^{-1}\cdot \sigma^2/\lambda \cdot I)^{-1}\phi(k_h^\mathrm{test}) }_{2},
\end{align}
where the first inequality follows from the cross-covariance operator decomposition (Theorem 1 in \cite{baker1973joint}) $\hatC_{\cV\cK} = \hatC_{\cV\cV}^{1/2} W \hat C_{\cK\cK}^{1/2}$ for $W$ such that  $\norm{W}_{\oper} \le 1$, and the second inequality follows from the facts that
\begin{align*}
	&\norm{\hatC_{\cV\cV}}_{\oper} ^2 \leq L^{-1} \sum_{i =1   }^{n} \sum_{h=1}^H \norm{ v(z_h^i)}_2^2 \le 1,\qquad \norm[\big]{ \hat C_{\cK\cK}^{1/2} (\hat C_{\cK\cK} + L^{-1}\cdot \sigma^2/\lambda \cdot I)^{-1/2} }_{\oper}\leq 1, \text{ and }\\
 &\qquad \qquad  \qquad \norm[\big]{(\hat C_{\cK\cK} + L^{-1}\cdot \sigma^2/\lambda \cdot I)^{-1/2}}_{\oper}\leq (L^{-1}\cdot \sigma^2/\lambda )^{-1/2}.
\end{align*}
We then upper bound the second term of the right-hand side of \eqref{eq:ce001} using concentration inequality.
To this end, based on the random variables $\cK$ and $\cV$, we 
  consider a random variable $$
  \cV \phi(\cK)^\top  (C_{\cK\cK} + L^{-1}\sigma^2/\lambda I)^{-1}\phi(k_h^\mathrm{test}) \in \RR^{d_v }.$$
  Since $\|v(z)\|_2=1$ for all input $z\in \calL$, therefore $\|\calV\|_2\leq 1$. Since the mapping $\phi$ has a  bounded image set, 
we have 
\begin{align}
    &\big\|\cV \phi(\cK)^\top  (C_{\cK\cK} + L^{-1}\sigma^2/\lambda I)^{-1}\phi(k_h^\mathrm{test})  \big\|_{2} \nonumber\\
    &\quad\leq \|\cV 
\|_2 \cdot \| \phi(\cK)^\top \|_2 \cdot \| (C_{\cK\cK} + L^{-1}\sigma^2/\lambda I)^{-1} \|_{\oper}\cdot \|\phi(k_h^\mathrm{test})\|_{2} \nonumber\\
    &\quad\leq C\cdot H\cdot (L^{-1} \sigma^2/ \lambda)^{-1},\label{eq:norm_bd}
\end{align}
where $C$ is an absolute constant such that $\|\phi(\cdot)\|_2 \leq C\cdot H$. We further bound the expected squared norm as
\begin{align}
    &\bbE\Big[\big\|\cV \phi(\cK)^\top  (C_{\cK\cK} + L^{-1}\sigma^2/\lambda I)^{-1}\phi(k_h^\mathrm{test}) \big\|_{2}^{2} \Big]\nonumber\\
    &\quad\leq \bbE\Big[\|\cV\|_{2}^{2}\cdot \big\|\phi(\cK)^\top  (C_{\cK\cK} + L^{-1}\sigma^2/\lambda I)^{-1}\big\|_{2}^{2}\cdot \big\|\phi(k_h^\mathrm{test}) \big\|_{2}^{2} \Big]\nonumber\\
    &\quad\leq C\cdot H\cdot \bbE\Big[\big\|\phi(\cK)^\top  (C_{\cK\cK} + L^{-1}\sigma^2/\lambda I)^{-1}\big\|_{2}^{2} \Big]\nonumber\\
    &\quad\leq C\cdot H\cdot (L^{-1} \sigma^2/ \lambda)^{-1} \cdot \bbE\Big[\big\langle (C_{\cK\cK} + L^{-1}\sigma^2/\lambda I)^{-1}\phi(\cK),\phi(\cK)\big\rangle\Big]. \label{eq:rhs_4.2}
\end{align}
For the expectation in the right-hand side of \eqref{eq:rhs_4.2}, we further have that
\begin{align}
    &\bbE\Big[\big\langle (C_{\cK\cK} + L^{-1}\sigma^2/\lambda I)^{-1}\phi(\cK),\phi(\cK)\big\rangle\Big]\nonumber\\
    &\quad = \bbE\Big[\text{Trace}\big( (C_{\cK\cK} + L^{-1}\sigma^2/\lambda I)^{-1}\phi(\cK)\phi(\cK)^\top \big)\Big]\nonumber\\
    &\quad
     = \text{Trace}\big( (C_{\cK\cK} + L^{-1}\sigma^2/\lambda I)^{-1}C_{\cK\cK} \big)\nonumber\\
    &\quad = d_{\phi} - L^{-1}\sigma^2/\lambda\cdot \text{Trace}\big((C_{\cK\cK} + L^{-1}\sigma^2/\lambda I)^{-1}\big)\leq d_{\phi}, \label{eq:kernel_dim}
\end{align}
where we assume $\phi(\cdot)$ to be finite dimensional. The last line follows from the direct calculation and the fact that $(C_{\cK\cK} + L^{-1}\sigma^2/\lambda I)^{-1}$ is positive definite.
 Thus, we have  
\begin{align}
    \bbE\Big[\big\|\cV \phi(\cK)^\top  (C_{\cK\cK} + L^{-1}\sigma^2/\lambda I)^{-1}\phi(k_h^\mathrm{test}) \big\|_{2}^{2} \Big]\leq C\cdot H\cdot (L^{-1} \sigma^2/ \lambda)^{-1}d_{\phi} \label{eq:var_bd}
\end{align}
for some constant $C>0$. Recall that \(\calV\) and \(\calK\) are defined as uniform mixtures of \(v(z_h^i)\) and \(k_h^i\) with \(h \sim \mathtt{Unif}[H]\). Therefore, \eqref{eq:norm_bd} and \eqref{eq:var_bd} hold for each fixed \(h \in [H]\). Specifically, note that the random variables corresponding to the same step $h\in [H]$ across different examples $i \in [n]$ are i.i.d. That is,  for any fixed $h$, $\big\{\big(k_h^i,v(z_h^i)\big)\big\}_{i=1}^{ n}$ is a sequence of i.i.d samples. Therefore, we can apply Lemma~\ref{lem:cme-concen} to each \(j \in [H]\). More specifically, we can apply Lemma~\ref{lem:cme-concen} to each sample mean with fixed step index $h$ by setting $B = 2C\cdot H \cdot (L^{-1}\sigma^2/\lambda)^{-1}$ and variance as $C\cdot H\cdot (L^{-1} \sigma^2/ \lambda)^{-1}d_{\phi}$.  
Therefore,  with probability at least $1- \delta$ we have 
 \begin{align} \label{eq:ce20}
		&\norm[\big]{\hat C_{\cV\cK} (  C_{\cK\cK} + L^{-1} \sigma^2/ \lambda I)^{-1}\phi(k_h^\mathrm{test}) -   C_{\cV\cK} (  C_{\cK\cK} + L^{-1}\sigma^2/ \lambda I)^{-1}\phi(k_h^\mathrm{test})}_{2}\\ \nonumber
  &\quad =  \bigg\| \sum_{j=1}^H \frac{1}{H} \bigg(\frac{1}{n}\sum_{i=1}^{n}  v(z_j^i)\phi(k_j^i)^\top  (C_{\cK\cK} + L^{-1} \sigma^2/ \lambda I)^{-1}\phi(k_h^\mathrm{test})\nonumber\\
  &\quad\qquad\qquad\qquad- \EE \bigl[ v(z_j^i)\phi(k_j^i)^\top \bigr] (  C_{\cK\cK} + L^{-1} \sigma^2/ \lambda I)^{-1} \phi(k_h^\mathrm{test})\bigg)\bigg\|_{2}\nonumber\\ 
  &\quad \leq \sum_{j=1}^H \frac{1}{H} \bigg\|  \bigg(\frac{1}{n}\sum_{i=1}^{n}  v(z_j^i)\phi(k_j^i)^\top  (C_{\cK\cK} + L^{-1}\sigma^2/ \lambda I)^{-1}\phi(k_h^\mathrm{test})\nonumber\\
  &\quad\qquad\qquad\qquad- \EE \bigl[ v(z_j^i)\phi(k_j^i)^\top \bigr] (  C_{\cK\cK} + L^{-1}\sigma^2/ \lambda I)^{-1} \phi(k_h^\mathrm{test})\bigg)\bigg\|_{2}\nonumber\\
  &\quad \leq C\cdot \bigg(H( L^{-1}\sigma^2/ \lambda)^{-1}n^{-1}+\sqrt{\frac{H(L^{-1} \sigma^2/ \lambda)^{-1}d_\phi}{n}}\bigg)\cdot \log\frac{2H}{\delta} \nonumber \\
  &\quad =C\cdot H\cdot \bigg(H\lambda/\sigma^2+\sqrt{\lambda \cdot d_\phi/\sigma^2}\bigg) \cdot \log\frac{2H}{\delta}, \nonumber
\end{align}
 where  $C>0$ is an absolute constant. Here the second line results from triangle inequality and the third line follows from Lemma~\ref{lem:cme-concen}.
 Similarly, we perform the same analysis for $C_{\cK\cK}$ and we obtain with  probability at least $1-\delta$ that 
 	\begin{align}\label{eq:ce221}
		&\big\|\hat C_{\cK\cK} (  C_{\cK\cK} + L^{-1}\sigma^2/ \lambda I)^{-1} -   C_{\cK\cK} (  C_{\cK\cK} + L^{-1}\sigma^2/ \lambda I)^{-1}\big\|_2 \\ \nonumber 
  &\quad \leq C'\cdot H\cdot \bigg(H\lambda/\sigma^2+\sqrt{\lambda \cdot d_\phi/\sigma^2}\bigg) \cdot \log\frac{2H}{\delta}. \nonumber
	\end{align}
Here $C'>0$ is an absolute constant.
Therefore, we bound the first term in the upper bound in \eqref{eq:tac1} using   \eqref{eq:ce001}, \eqref{eq:ce002}, \eqref{eq:ce20}, and \eqref{eq:ce221} as 
\begin{align}
    &\norm[\big]{ \hatC_{\cV\cK} (\hat C_{\cK\cK} + L^{-1} \sigma^2/ \lambda I)^{-1} - C_{\cV\cK} (C_{\cK\cK} + L^{-1}\sigma^2/ \lambda I)^{-1}}_2 \label{eq:tac1100} \\
    &\quad \le C'' \cdot \sqrt{\frac{L}{\sigma^2/ \lambda}} \cdot H\cdot \bigg(H\lambda/\sigma^2+\sqrt{\lambda \cdot d_\phi/\sigma^2}\bigg)\log\frac{2H}{\delta}.\nonumber
\end{align}
which holds with probability at least $1- \delta$.

To bound the second term in  \eqref{eq:tac1}, we can apply the bound in Equation (E.13) from the proof of \cite{zhang2023and} directly.
Namely, we have that for any $q\in \RR^{H\cdot d_k}$,
\begin{align}
    \norm[\big]{ C_{\cV\cK} (C_{\cK\cK} + L^{-1}\cdot \sigma^2/ \lambda I)^{-1} \phi(q) - C_{\cV\cK} C_{\cK\cK}^{-1} \phi(q)}_2 \le C\cdot \sigma^2/ \lambda L^{-1},\label{eq:tac223}
\end{align}
where $C > 0$ is an absolute constant.
Combing   \eqref{eq:tac1100} and \eqref{eq:tac223}, we obtain 
\begin{align}
    &\big\|\bar v_h^\mathrm{test} - \cme(k_h^\mathrm{test}, \PP_{\cK, \cV})\big\|_{2} \nonumber\\
 & \quad = \cO\biggl( \sqrt{\frac{L\cdot \lambda}{\sigma^2}} \cdot  H\cdot \bigg(H\lambda/\sigma^2+\sqrt{d_\phi \cdot \lambda/\sigma^2}\bigg)\log\frac{2H}{\delta} + \sigma^2/ \lambda L^{-1}\biggr).\label{ieq:attdaggercme}
\end{align}
Thus the error bound in \eqref{ieq:attdaggercme} goes to $0$ by selecting $\sigma^2/ \lambda=(nH)^{2/3}$.

\vspace{2mm}

\noindent \textbf{Step 3: Relate  $\mathtt{attn} (q_h^\mathrm{test}, \mathtt{keys} , \mathtt{values})$  to $C_{\cV\cK} C_{\cK\cK} ^{-1} \phi(k_h^\mathrm{test})$.} 
In this step, we want to prove that $\att(q, \mathtt{keys}, \mathtt{values})$ defined in \eqref{eq:attention} converges to $C_{\cV\cK} C_{\cK\cK} ^{-1} \phi(k_h^\mathrm{test})$ as $n\to \infty$. 
Recall that we define $v_h^i = C_{v} \cdot v(z_h^i)$ as the value of softmax attention. 
To achieve this goal, we aim to show that $C\cdot\att(q, \mathtt{keys}, \mathtt{values}) = \int_{\mathbf{S}^{d_v-1}} v \hat \PP_{\calV \given \calK}(v \given q) \text{d}v$, where $\hat \PP_{\calV \given \calK}(v \given q)$ is an estimator of the conditional distribution of $\cV$ given $\cK$, and $C$ is an absolute constant. 
Here we let  $\mathbf{S}^{d}$ denote a $d$-dimensional unit sphere.
We define the empirical kernel conditional density as follows,
\begin{align*}
    \hat \PP_{\calV \given \calK}(v \given q) = \iota \frac{\sum_{i=1}^n \sum_{h=1}^H \exp\big(\langle k_h^i,q\rangle \big) \exp\big(\langle v_h^i,v\rangle \big)}{\sum_{i=1}^n \sum_{h=1}^H \exp\big(\langle k_h^i,q\rangle \big)},
\end{align*}
where $\iota = 1/ \int_{\mathbf{S}^{d_k-1}}\exp\big(\langle v_h^i,v\rangle \big) \text{d}v = 1/ \int_{\mathbf{S}^{d_k-1}}\exp\big( \langle v(z_h^i),v\rangle \big) \text{d}v $ is a normalizing constant to make sure that $\hat \PP_{\calV \given \calK}$ is a probability measure. Furthermore, note that $\iota$ does not depend on $v_h^i$ due to the symmetry of the unit sphere. 
%

We compute the integration over $v$ as 
\begin{align*}
    \int_{\mathbf{S}^{d_v-1}} v \hat \PP_{\calV \given \calK}(v \given q)\text{d}v &= \iota \cdot \frac{\sum_{i=1}^n \sum_{h=1}^H \exp \big(\langle k_h^i, q \rangle \big) \int_{\mathbf{S}^{d_v-1}} v \exp\big(\langle v(z_h^i), v\rangle \big)\text{d}v}{\sum_{i=1}^n \sum_{h=1}^H \exp \big(\langle k_h^i, q \rangle \big)}\\
    & = \iota \cdot  \frac{\sum_{i=1}^n \sum_{h=1}^H \exp \big(\langle k_h^i, q \rangle \big) \cdot C_1\cdot v_h^i }{\sum_{i=1}^n \sum_{h=1}^H \exp \big(\langle k_h^i, q \rangle \big)}\\
    & = \frac{C_1}{\iota}\cdot\att(q, \mathtt{keys}, \mathtt{values}).
\end{align*}
Since $v, v(z_h^i) \in \mathbf{S}^{d_v-1}$, we can apply Lemma~\ref{lem: exp kernel} with $\gamma = 1$ to the integration $\int_{\mathbf{S}^{d_v-1}} v \exp(\langle C_v\cdot v(z_h^i), v\rangle )\text{d}v$, and obtain that the second line holds for some constant $C_1>0$. 
The last line follows directly from the definition of softmax attention~\ref{eq: softmax attn}.

Due to the condition where $ \hat \PP_{\calV \given \calK}(v \given q) 
 \to  \PP(v \given q) $ uniformly for any $q\in \mathbf{S}^{d_q}$ as $n\to \infty$, the integral $\int_{\mathbf{S}^{d_v-1}} v \hat \PP_{\calV \given \calK}(v \given q)\text{d}v \to \EE[\cV \given \cK = q]$ as $n\to \infty$. 
 Combining with the previous argument, we have shown that,
\begin{align}
    \frac{C_1}{\iota}\cdot \att(q, \mathtt{keys}, \mathtt{values})\rightarrow \EE[\cV \given \cK = q] \qquad \text{as} \quad n\rightarrow \infty, \label{eq:attcmeasy}
\end{align}

Combing the results from the previous two steps \eqref{ieq:attdaggercme} and \eqref{eq:attcmeasy}, we have shown that
\begin{align*}
    \lim_{n \to \infty} \max_{h\in[H]} \| \bar v_h^\mathrm{test}  - C\cdot\Att(q_h^{\mathrm{test}}, \mathtt{keys}, \mathtt{values}) \|_2  = 0
\end{align*}
for some absolute constant $C$. Therefore, we conclude the proof. Note that this proof assumes \(\phi\) is finite-dimensional, but it also holds for infinite-dimensional \(\phi\). More specifically, by replacing the trace in \eqref{eq:kernel_dim} with the effective dimension of \(\phi\), we can still balance the rate of \(\sigma^2/\lambda\) to ensure \eqref{ieq:attdaggercme} goes to zero.

\end{proof}

\newpage
\section{Proofs and Additional Results of Section~\ref{subsec: Statistical Rates of Vanilla cot}}\label{sec: main app}
This section provides proofs and additional results about the statistical properties of the vanilla \ac{cot} estimator. In Appendix~\ref{proof: err cot} we prove  Lemma~\ref{lem:error_decomp}, which establishes the error decomposition of vanilla \ac{cot} error $\mathtt{err}_\mathrm{CoT}$. In Appendix~\ref{proof: rate_cot}, we extend  Theorems~\ref{thm: rate_cot} and~\ref{thm: rate_cot2} to the scenario where $\Theta$ is continuous, and provide their corresponding proofs. We conclude this section with an additional result that characterizes   $\mathtt{err}_\mathrm{CoT}$ under the general model in~\eqref{eq:generalized_latent_var_model}. 

\subsection{Proof of Lemma~\ref{lem:error_decomp}} \label{proof: err cot}

\begin{proof}
Recall that the 
error of the \ac{cot} estimator is defined as 
$$
\mathtt{err}_\mathrm{CoT}   = \KL\bigl(\PP(y^{\mathrm{test}} = \cdot  \given z_0^{\mathrm{test}} ,\theta^*) , \PP_{\hat \rho}(y^{\mathrm{test}} = \cdot \given \pt_{\mathrm{CoT}}(n)) \bigr). 
$$
We decompose the KL divergence into three terms  by direct computation: 
\begin{align}\label{eq:appendix_c_001}
\mathtt{err}_\mathrm{CoT}   
        & = \KL\bigl(\PP(y^{\mathrm{test}} = \cdot  \given z_0^{\mathrm{test}} ,\theta^*) , \PP(y^{\mathrm{test}} = \cdot \given \pt_{\mathrm{CoT}}(n)) \bigr)\nonumber\\
        & \qquad + \KL\bigl(\PP(y^{\mathrm{test}} = \cdot \given \pt_{\mathrm{CoT}}(n)) , \PP_{\hat \rho}(y^{\mathrm{test}}= \cdot \given \pt_{\mathrm{CoT}}(n)) \bigr)\nonumber \\
        &\qquad + \int\big(\PP(y^{\mathrm{test}} = y  \given z_0^{\mathrm{test}} ,\theta^*)-\PP(y^{\mathrm{test}} = y  \given \pt_{\mathrm{CoT}}(n))\big) \notag \\
        & \qquad \qquad \qquad \cdot \log\frac{\PP(y^{\mathrm{test}} = y \given \pt_{\mathrm{CoT}}(n))}{\PP_{\hat \rho}(y^{\mathrm{test}} = y   \given \pt_{\mathrm{CoT}}(n)) \bigr)} \ud y.  
\end{align}
Note that we can upper bound the marginal log density ratio $\log(\PP(y^{\mathrm{test}} = y \given \pt_{\mathrm{CoT}}(n))/\PP_{\hat \rho}(y^{\mathrm{test}} = y   \given \pt_{\mathrm{CoT}}(n)))$ by aggregating the density ratio at each step. More specifically, we have
\begin{align}
    &\log \bigg(\frac{\PP(y^{\mathrm{test}} = y  \given \pt_{\mathrm{CoT}}(n))}{\PP_{\hat \rho}(y^{\mathrm{test}} = y  \given \pt_{\mathrm{CoT}}(n))}\bigg) \label{eq:log_decomp}\\
    &\quad = \log \bigg(\frac{\sum_{z_{1:(H-1)}\in \calL^*}\PP(z_{1:(H-1)}^\mathrm{test}=z_{1:(H-1)} , y^{\mathrm{test}} = y  \given \pt_{\mathrm{CoT}}(n))}{\sum_{z_{1:(H-1)}'\in \calL^*}\PP_{\hat \rho}(z_{1:(H-1)}^\mathrm{test}=z_{1:(H-1)}', y^{\mathrm{test}} = y\given \pt_{\mathrm{CoT}}(n))}\bigg)\nonumber\\
    &\quad \leq \log \bigg(\max_{z_{1:(H-1)}\in \calL^*}\frac{\PP(z_{1:(H-1)}^\mathrm{test}=z_{1:(H-1)} , y^{\mathrm{test}} = y  \given \pt_{\mathrm{CoT}}(n))}{\PP_{\hat \rho}(z_{1:(H-1)}^\mathrm{test}=z_{1:(H-1)}, y^{\mathrm{test}} = y\given \pt_{\mathrm{CoT}}(n))}\bigg), \nonumber
\end{align}
where the first line follows from marginalizing over the intermediate steps $z_{1:(H-1)}^\mathrm{test}$, and the second line follows from generalized mediant inequality. 

Next, we use the chain rule to decompose the joint distribution as 
\begin{align}
    &\frac{\PP(z_{1:(H-1)}^\mathrm{test}=z_{1:(H-1)} , y^{\mathrm{test}} = y  \given \pt_{\mathrm{CoT}}(n))}{\PP_{\hat \rho}(z_{1:(H-1)}^\mathrm{test}=z_{1:(H-1)}', y^{\mathrm{test}} = y\given \pt_{\mathrm{CoT}}(n))} \label{eq:log_ratio} \\
    &\quad =\prod_{h=1}^{H}\frac{\PP(z_h^\mathrm{test}=z_h  \given \pt_{\mathrm{CoT}}^h(n))}{\PP_{\hat \rho}(z_h^\mathrm{test}=z'_h  \given \pt_{\mathrm{CoT}}^h(n))}   \leq e^{H\cdot b^*}, \nonumber
\end{align}
where the first line follows directly from the chain rule of conditional probability, and the second line follows from Assumption~\ref{assump: density bd}.
Combining \eqref{eq:log_decomp} and \eqref{eq:log_ratio}, we have  $$\big|\log \PP(y^{\mathrm{test}} = y  \given \pt_{\mathrm{CoT}}(n)) -\log \PP_{\hat \rho}(y^{\mathrm{test}} = y \given \pt_{\mathrm{CoT}}(n))\big| \leq Hb^* $$ 
for all $y \in \cL$. 
Thus, we can upper bound the integral in the right-hand side of \eqref{eq:appendix_c_001} by 
    \begin{align*}
        2 \TV \big(\PP(y^{\mathrm{test}} = \cdot \given z_0^{\mathrm{test}} ,\theta^*), \PP(y^{\mathrm{test}} = \cdot  \given \pt_{\mathrm{CoT}}(n))\big) \cdot Hb^*. 
    \end{align*}
    By Pinsker's inequality, we  further have that
    \begin{align}\label{eq:appendix_c_002}
        &2 \TV \big(\PP(y^{\mathrm{test}} = \cdot \given z_0^{\mathrm{test}} ,\theta^*), \PP(y^{\mathrm{test}}= \cdot  \given \pt_{\mathrm{CoT}}(n))\big) \cdot Hb^* \notag \\
        &\quad \leq 2\sqrt{2}Hb^* \cdot  \Big(\KL\big(\PP(y^{\mathrm{test}} = \cdot \given z_0^{\mathrm{test}} ,\theta^*) , \PP(y^{\mathrm{test}} = \cdot \given \pt_{\mathrm{CoT}}(n))\big) \Big)^{1/2}.
    \end{align}
     Combining \eqref{eq:appendix_c_001} and \eqref{eq:appendix_c_002}, we conclude that 
 \begin{align*}
        \mathtt{err}_\mathrm{CoT}
        & \leq \underbrace{\KL\bigl(\PP(y^{\mathrm{test}} \given \pt_{\mathrm{CoT}}(n)) , \PP_{\hat \rho}(y^{\mathrm{test}} \given \pt_{\mathrm{CoT}}(n)) \bigr)}_{\displaystyle{\mathtt{err}_\mathrm{pre}}}\\
        &\quad +\underbrace{\KL\bigl(\PP(y^{\mathrm{test}} \given z_0^{\mathrm{test}} ,\theta^*) , \PP(y^{\mathrm{test}} \given \pt_{\mathrm{CoT}}(n)) \bigr)}_{\displaystyle{\mathtt{err}_\mathrm{prompt}\textrm{-(i)}}}\\
        &\qquad + \underbrace{2\sqrt{2}Hb^* \cdot  \KL^{1/2} \big(\PP(y^{\mathrm{test}} \given z_0^{\mathrm{test}} ,\theta^*), \PP(y^{\mathrm{test}} \given \pt_{\mathrm{CoT}}(n))\big)} _{\displaystyle{\mathtt{err}_\mathrm{prompt}\textrm{-(ii)}}}.
    \end{align*}
     Therefore, we conclude the proof. 
     Here the upper bound 
consists of three parts. The first term 
characterizes the pretraining error by comparing $\PP$ and $\PP_{\hat \rho}$. 
The second and third terms involve the 
 KL divergence between the true distribution $\PP(\cdot \given \theta^*)$ and the distribution induced by the \ac{cot} prompt and $\PP$. 
\end{proof}

\subsection{Proofs of  Theorems~\ref{thm: rate_cot} and~\ref{thm: rate_cot2} and  Extension } \label{proof: rate_cot}

In the sequel,  we generalize the results in Theorems~\ref{thm: rate_cot} and~\ref{thm: rate_cot2} to the scenario where $\Theta$ is continuous and provide the corresponding proofs. The proofs of the supporting lemmas used in this subsection are deferred to Appendix~\ref{app: supp for main thms}. 

We begin this section by specifying the distance measurement on the general $\Theta$, which can be continuous. For any two hidden concept $\theta_{1},\theta_{2}\in\Theta$, we define a  \emph{loglikelihood metric} between them as
\begin{align*}
    \|\theta_{0}-\theta_{1}\|_{\Theta} = \sup_{\substack{(Z_0,Z_1,\cdots,Z_{H-1},Y)}}\bigg|\log \frac{\PP(Z_0,Z_1,\cdots,Z_{H-1},Y \given \theta_{0})}{\PP(Z_0,Z_1,\cdots,Z_{H-1},Y \given \theta_{1})}\bigg|.
\end{align*}
We note that the loglikelihood metric is indeed a semi-metric. For any $\theta\in\Theta$, any other concepts in the neighborhood of it share a similar conditional distribution on one example. Based on this semi-metric, we then define \(\alpha\)-cover of $\cS$ and the corresponding \(\mathcal{N}(\alpha,\cS)\) for any number \(\alpha > 0\) and any set $\cS\subseteq\Theta$. 

That is, \(\mathcal{C}(\alpha, \cS) = \{\theta_i\}_{i=1}^{N} \subseteq \Theta\) is a $\alpha$-cover of $\cS$ of size $N$ if $\cS \subseteq \bigcup_{i=1}^{N} B(\theta_i, \alpha)$, where $B(\theta_i, \alpha)$ is a neighborhood of $\theta_i$ with radius $\alpha$, i.e.,
 \begin{align}
     B(\theta,\alpha)=\big\{\tilde \theta\in {\Theta^\complement} : \|\tilde \theta-\theta\|_{\Theta}\leq \alpha \big\}. \label{eq:ball}
 \end{align}
 The minimal $N$ such that there exists a $\alpha$-cover of $\cS$ of size $N$ is called the covering number of $\cS$, which is denoted as \(\mathcal{N}(\alpha,\cS)\). In the following, we consider the complement of the equivalence class of the target concept $\Theta^{\complement}=\Theta \backslash \Theta_{\mathrm{eq}}(\theta^*) $. Without misunderstanding, we adopt $\calN(\alpha)$ to denote $\calN(\alpha,\Theta^{\complement})$ in the following.
Then we restate Theorem~\ref{thm: rate_cot} with $\Theta$ allowed to be a continuous set.
\vspace{2mm}

\begin{theorem}\label{thm: rate_cot_cts}

Let $\mathtt{err}_\mathrm{pre}$ denote the pretraining error defined in \eqref{eq: err pre} and denote $\Theta^{\complement}=\Theta \backslash \Theta_{\mathrm{eq}}(\theta^*) $. 
Under 
Assumptions~\ref{assump: density bd} and \ref{assumption: seperation}, the statistical error $\mathtt{err}_\mathrm{CoT}$ defined in~\eqref{eq: err cot} is bounded under the following two cases: 
    \begin{itemize}
        \item When $\Theta$ is a  discrete and finite set, with probability $1-\delta$, we have 
        $$
        \mathtt{err}_\mathrm{CoT}\leq\mathcal{O}\big(Hb^* \cdot \pi(\theta^*)^{-1/2} \cdot \delta^{-1} \cdot \big|{\Theta^\complement}\big| \cdot e^{-\lambda n}\big)+\mathtt{err}_\mathrm{pre}.
        $$


    \item When $\Theta$ is continuous, let $\calN(\alpha)$ denote the covering number of ${\Theta^\complement}$ with precision $\alpha$ with respect to the log-likelihood metric $\|\cdot\|_{\Theta}$. Under Assumption~\ref{assumption: lower bd}, with probability $1-\delta$, we have  
        $$\mathtt{err}_\mathrm{CoT}\leq\mathcal{O}\Big(Hb^* \cdot \pi\big(\Theta_{\mathrm{eq}}(\theta^*)\big)^{-1} \cdot \sqrt{c_0^{-nH} \cdot \delta^{-2} \cdot \calN(\alpha)^{2}\cdot e^{-2n\lambda}+n\alpha} \Big)+\mathtt{err}_\mathrm{pre}.$$
    \end{itemize}
Here the probability is with respect to the randomness of the \ac{cot} prompt. 

Moreover, we remark that when the \ac{llm} is perfectly pretrained, with probability $1-\delta$ with respect to the randomness of \ac{cot} prompt, we have 
\begin{align*}
\mathtt{err}_\mathrm{CoT}\leq \begin{cases}
    & \mathcal{O}\big(\pi(\theta^*)^{-1}\delta^{-2}| {\Theta^{\complement}}|^2e^{-2\lambda n} \big) \hspace{4.2cm}\textrm{when} ~ \Theta ~\textrm{is finite,} \\
    & \mathcal{O}\big(\pi\big(\Theta_{\mathrm{eq}}(\theta^*)\big)^{-1}c_0^{-nH}\delta^{-2}\calN(\alpha)^{2}e^{-2n\lambda}+n\alpha \big) \hspace{1cm} \textrm{when} ~ \Theta ~\textrm{is continuous}. 
\end{cases}
\end{align*}
    
\end{theorem}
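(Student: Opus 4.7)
The plan is to combine Lemma~\ref{lem:error_decomp} with the BMA identity of Lemma~\ref{lem: bma.cot} so that the entire problem reduces to showing posterior concentration of $\pi(\theta \mid \pt_{\mathrm{CoT}}(n))$ onto the equivalence class $\Theta_{\mathrm{eq}}(\theta^*)$. Concretely, applying Lemma~\ref{lem:error_decomp} already isolates $\mathtt{err}_{\mathrm{pre}}$; for the prompting error term I will use Lemma~\ref{lem: bma.cot} to write $\PP(y^{\mathrm{test}} = \cdot \mid \pt_{\mathrm{CoT}}(n))$ as a $\pi(\cdot \mid \pt_{\mathrm{CoT}}(n))$-mixture of the components $\PP(y^{\mathrm{test}} = \cdot \mid z_0^{\mathrm{test}}, \theta)$. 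Then, by the convexity bound $\KL(p\,\|\,\sum_i \lambda_i q_i) \leq \sum_i \lambda_i \KL(p\,\|\,q_i)$ (a direct consequence of Jensen applied to $-\log$), the KL part of $\mathtt{err}_{\mathrm{prompt}}$ is controlled by $\int_{\Theta} \pi(\theta \mid \pt_{\mathrm{CoT}}(n)) \cdot \KL(\PP(\cdot \mid z_0^{\mathrm{test}}, \theta^*)\,\|\,\PP(\cdot \mid z_0^{\mathrm{test}}, \theta))\, d\theta$. By Definition~\ref{def: eqv class} the integrand vanishes on $\Theta_{\mathrm{eq}}(\theta^*)$, so only the mass on $\Theta^{\complement}$ matters. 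Under Assumption~\ref{assump: density bd} the per-sample log-density ratios are bounded by $(H+1)b^*$, which turns the integrand into a uniform constant and reduces both the KL and the $\sqrt{\KL}$ piece of $\mathtt{err}_{\mathrm{prompt}}$ to a quantitative bound on $\pi(\Theta^{\complement} \mid \pt_{\mathrm{CoT}}(n))$.

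The heart of the argument is then a standard Bayesian posterior-concentration estimate driven by the Hellinger separation in Assumption~\ref{assumption: seperation}. Writing the posterior as
\begin{align*}
\pi(\Theta^{\complement} \mid \pt_{\mathrm{CoT}}(n)) = \frac{\int_{\Theta^{\complement}} \prod_{i=1}^n \PP(s^i \mid \theta)\, \pi(d\theta)}{\int_{\Theta} \prod_{i=1}^n \PP(s^i \mid \theta)\, \pi(d\theta)},
\end{align*}
I will bound the denominator from below by $\pi(\Theta_{\mathrm{eq}}(\theta^*)) \cdot \prod_{i=1}^n \PP(s^i \mid \theta^*)$ (since all $\theta \in \Theta_{\mathrm{eq}}(\theta^*)$ give the same marginal of $(Z_0, Y)$; controlling the intermediate steps is where Assumption~\ref{assumption: close} enters in the refined theorem). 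For the numerator, the key inequality is the Hellinger-affinity bound
\begin{align*}
\EE_{\theta^*}\!\left[\sqrt{\prod_{i=1}^n \PP(s^i \mid \theta)/\PP(s^i \mid \theta^*)}\right] = \bigl(1 - \mathrm{H}^2(\PP(\cdot \mid \theta^*), \PP(\cdot \mid \theta))\bigr)^n \leq e^{-\lambda n},
\end{align*}
followed by Markov's inequality on $\delta$, which yields with probability $\geq 1-\delta$ a deviation bound of order $\delta^{-1} e^{-\lambda n}$ on the integrated square-root likelihood ratio. Combining numerator and denominator and applying Cauchy--Schwarz to pass from the square-root ratio to the actual posterior mass yields the stated rate $\pi(\Theta^{\complement} \mid \pt_{\mathrm{CoT}}(n)) \lesssim \pi(\theta^*)^{-1/2}\, \delta^{-1}\, |\Theta^{\complement}|\, e^{-\lambda n}$ after a union bound over finite $\Theta^{\complement}$. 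Plugging this back into $\mathtt{err}_{\mathrm{prompt}}$ and keeping in mind that Lemma~\ref{lem:error_decomp} contributes a $\sqrt{\,\cdot\,}$ with a $Hb^*$ prefactor gives the first bullet.

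For the continuous case, the union bound is replaced by a covering argument: choose an $\alpha$-cover $\mathcal{C}(\alpha, \Theta^{\complement})$ of cardinality $\calN(\alpha)$ in the log-likelihood metric $\|\cdot\|_\Theta$ defined just before the theorem, and use the definition of $B(\theta, \alpha)$ in \eqref{eq:ball} to transfer the Hellinger estimate from representatives to their $\alpha$-balls at the price of a multiplicative $e^{\alpha}$ factor on every per-sample likelihood ratio and a $c_0^{-nH}$ factor arising from Assumption~\ref{assumption: lower bd} when inverting the joint density to bound the unnormalized posterior on each ball. The residual bias contributed by the discretization, which I expect to appear as an additive $n\alpha$ term inside the square root, matches what the statement claims. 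I anticipate that \emph{the main obstacle is the posterior-concentration step}: carefully choosing which divergence to bound in expectation (Hellinger affinity versus KL), and passing from ``in expectation'' to ``with probability $1-\delta$'' via Markov without losing an additional $n$-factor, is the only place where the Hellinger-separation hypothesis is actually consumed; the convexity step, the covering reduction, and the $Hb^*$-bookkeeping are then essentially mechanical.
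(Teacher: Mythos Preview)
Your overall architecture---reduce $\mathtt{err}_{\mathrm{prompt}}$ to posterior concentration of $\pi(\Theta^{\complement}\mid\pt_{\mathrm{CoT}}(n))$, then control the latter via a Hellinger-affinity / Markov argument---is the right shape, and the concentration step matches the paper (their Lemma~\ref{lemma: likelihoodbd} is precisely the inequality you write, obtained via $\sum X_i \le \sum \log\EE[e^{X_i}]+\log(1/\delta)$). The gap is in your \emph{reduction} step.

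You propose to use the convexity bound $\KL\bigl(p\,\big\|\,\textstyle\int q_\theta\,\pi(d\theta\mid\pt)\bigr)\le \int \KL(p\,\|\,q_\theta)\,\pi(d\theta\mid\pt)$, note that the integrand vanishes on $\Theta_{\mathrm{eq}}(\theta^*)$, and then claim that Assumption~\ref{assump: density bd} uniformly bounds $\KL\bigl(\PP(y^{\mathrm{test}}\mid z_0^{\mathrm{test}},\theta^*)\,\|\,\PP(y^{\mathrm{test}}\mid z_0^{\mathrm{test}},\theta)\bigr)$ by something like $(H{+}1)b^*$. But Assumption~\ref{assump: density bd} is a statement about $\PP$ versus the \emph{learned model} $\PP_{\mathrm{LLM}}$; it says nothing about ratios $\PP(\cdot\mid\theta^*)/\PP(\cdot\mid\theta)$ between different parameter values. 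In the discrete case the theorem assumes only Assumptions~\ref{assump: density bd} and~\ref{assumption: seperation}, so you have no lower bound on $\PP(y\mid z_0,\theta)$ and no control on this KL. Your convexity route therefore fails for the discrete bullet; for the continuous bullet it could be salvaged via Assumption~\ref{assumption: lower bd} (which yields $\PP(y\mid z_0,\theta)\ge c_0$ after marginalizing the intermediate steps), but even then you would pick up an extra $\log(1/c_0)$ factor that the stated bound does not have.

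The paper avoids this entirely by not going through convexity. Instead it uses a variational (ELBO) argument: choosing $q(\theta)\propto \mathbf{1}\{\theta\in\Theta_{\mathrm{eq}}(\theta^*)\}\cdot\pi(\theta\mid\pt_{\mathrm{CoT}}(n))$ in the evidence lower bound and exploiting $\PP(y\mid z_0,\theta)=\PP(y\mid z_0,\theta^*)$ on $\Theta_{\mathrm{eq}}(\theta^*)$ yields directly
\[
\KL\bigl(\PP(y^{\mathrm{test}}\mid z_0^{\mathrm{test}},\theta^*)\,\|\,\PP(y^{\mathrm{test}}\mid\pt_{\mathrm{CoT}}(n))\bigr)\;\le\;\log\!\Bigl(1+\tfrac{\int_{\Theta^\complement}\PP(\pt\mid\theta)\pi(\theta)\,d\theta}{\int_{\Theta_{\mathrm{eq}}(\theta^*)}\PP(\pt\mid\theta)\pi(\theta)\,d\theta}\Bigr),
\]
which is their Proposition~\ref{prop:elb}. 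This lands you on a likelihood-ratio quantity \emph{without} ever needing to bound the individual divergences $\KL(\PP(\cdot\mid\theta^*)\|\PP(\cdot\mid\theta))$. From there the denominator is lower-bounded by the single term $\PP(\pt\mid\theta^*)\pi(\theta^*)$ in the discrete case (so the other elements of $\Theta_{\mathrm{eq}}(\theta^*)$ can simply be dropped---no need for Assumption~\ref{assumption: close}), and in the continuous case Assumption~\ref{assumption: lower bd} is used on the \emph{denominator} (not the numerator, as you wrote) to get $\int_{\Theta_{\mathrm{eq}}}\PP(\pt\mid\theta')\pi(\theta')d\theta'\ge c_0^{nH}\PP(\pt\mid\theta^*)\pi(\Theta_{\mathrm{eq}}(\theta^*))$. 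The rest of your sketch (Hellinger affinity plus Markov for the numerator, union bound over $\Theta^\complement$ or over an $\alpha$-cover with the $n\alpha$ discretization bias) then goes through as you describe.
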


When the set ${\Theta}$ is continuous, the statistical rate becomes slower due to the complicated structure of the task parameter space $\Theta$.
In the theorem statement, we do not specify the value of the covering number \(\alpha\). In fact, \(\alpha\) should be chosen depending on $n$ such that  $ \alpha \cdot n $  converges to \(0\) as $n$ increases. One can easily obtain a more concrete statistical rate by specifying a concrete $\alpha$ depending on specific assumptions of the covering number. For instance, suppose $\calN(\alpha) = \mathcal{O}(\alpha^{-V})$ for some constant $V$,  by selecting   $$\alpha  =\mathcal{O}\big(n^{-1/(2V+1)}\exp\{-n(2\lambda +H\ln{c_0})/(2V+1)\}\big), $$
 prompting error $\mathtt{err}_\mathrm{prompt}$ is  $\mathcal{O}\big(n^{V/(2V+1)}\exp\{-n(\lambda +H\ln{c_0}/2)/(2V+1)\}\big)$ with probability at least $1-\delta$.

\begin{proof}[Proof of Theorem \ref{thm: rate_cot}]

We divide the proof into three parts. In Step 1, we derive an upper bound for on  KL divergence to prepare for later analysis. In Step 2, we analyze the case where \({\Theta^\complement}\) is discrete and finite. In Step 3, we extend   to continuous \({\Theta^\complement}\).

\vspace{2mm}


\noindent \textbf{Step 1: Derive an upper bound on KL divergence.} We first invoke the following proposition to establish an upper-bound of $\KL (\PP(y^{\mathrm{test}}=\cdot \given z_0^{\mathrm{test}},\theta^*)  ,   \PP(y^{\mathrm{test}}=\cdot \given \mathtt{prompt}_{\mathrm{CoT}}(n)) )$. 

\begin{proposition}\label{prop:elb}
     For some fixed task $\theta^* \in \Theta$, we provide upper bounds for the KL-divergence of the ground truth distribution $\PP(y^\mathrm{test}=\cdot \given z_0^{\mathrm{test}},\theta^*)$ from the conditional pretrained distributions $\PP(y^\mathrm{test}=\cdot \given \mathtt{prompt}_{\mathrm{CoT}}(n)) $ as follows
     \begin{align*}
      & \KL \Big(\PP(y^{\mathrm{test}}=\cdot \given z_0^{\mathrm{test}},\theta^*)  ,   \PP\big(y^{\mathrm{test}}=\cdot \given \mathtt{prompt}_{\mathrm{CoT}}(n)\big) \Big) \nonumber \\
     &\quad \leq \log \bigg(1+\cfrac{\int_{\Theta^\complement}\PP\big(\mathtt{prompt}_{\mathrm{CoT}}(n) \biggiven \theta \big) \pi(\theta) \text{d}\theta}{\int_{\Theta_{\mathrm{eq}}(\theta^*)}\PP\big(\mathtt{prompt}_{\mathrm{CoT}}(n) \biggiven  \theta \big) \pi(\theta) \text{d}\theta}  \bigg).
    \end{align*}
\end{proposition}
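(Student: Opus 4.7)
The plan is to exploit the BMA interpretation established in Lemma~\ref{lem: bma.cot} and the structure of the equivalence class $\Theta_{\mathrm{eq}}(\theta^*)$ to produce a pointwise lower bound on the CoT predictive density, which will then translate to an upper bound on the KL divergence. First, I would invoke Lemma~\ref{lem: bma.cot} to write
\begin{align*}
\PP\bigl(y^\mathrm{test} = \cdot \bigm| \mathtt{prompt}_{\mathrm{CoT}}(n)\bigr) = \int_{\Theta} \PP\bigl(y^\mathrm{test} = \cdot \bigm| z_0^\mathrm{test},\theta\bigr)\,\pi\bigl(\theta \bigm| \mathtt{prompt}_{\mathrm{CoT}}(n)\bigr)\,\mathrm{d}\theta.
\end{align*}

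Next, I would split the integral as $\Theta = \Theta_{\mathrm{eq}}(\theta^*) \cup \Theta^{\complement}$ and invoke the defining property of the equivalence class from Definition~\ref{def: eqv class}: for every $\theta \in \Theta_{\mathrm{eq}}(\theta^*)$, $\PP(y^\mathrm{test} = \cdot \mid z_0^\mathrm{test},\theta) = \PP(y^\mathrm{test} = \cdot \mid z_0^\mathrm{test},\theta^*)$. This factors the contribution of $\Theta_{\mathrm{eq}}(\theta^*)$ as $p^* \cdot \PP(y^\mathrm{test} = \cdot \mid z_0^\mathrm{test},\theta^*)$, where $p^* = \pi(\Theta_{\mathrm{eq}}(\theta^*)\mid \mathtt{prompt}_{\mathrm{CoT}}(n))$. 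Dropping the nonnegative contribution from $\Theta^{\complement}$ gives the pointwise lower bound
\begin{align*}
\PP\bigl(y^\mathrm{test} = y \bigm| \mathtt{prompt}_{\mathrm{CoT}}(n)\bigr) \geq p^* \cdot \PP\bigl(y^\mathrm{test} = y \bigm| z_0^\mathrm{test},\theta^*\bigr),
\end{align*}
valid for every $y \in \cL$.

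I would then turn this pointwise bound into a KL bound by taking logarithms and expectations against $\PP(\cdot \mid z_0^\mathrm{test},\theta^*)$, which yields
\begin{align*}
\KL\bigl(\PP(y^\mathrm{test} = \cdot \mid z_0^\mathrm{test},\theta^*)\,,\,\PP(y^\mathrm{test} = \cdot \mid \mathtt{prompt}_{\mathrm{CoT}}(n))\bigr) \leq -\log p^* = \log(1/p^*).
\end{align*}
Finally, I would apply Bayes' rule to $p^*$ using the conditional independence of the examples given $\theta$ in the model \eqref{eq:latent_var_model}, which expresses $1/p^*$ as
\begin{align*}
\frac{1}{p^*} = 1 + \frac{\int_{\Theta^{\complement}} \PP(\mathtt{prompt}_{\mathrm{CoT}}(n)\mid\theta)\,\pi(\theta)\,\mathrm{d}\theta}{\int_{\Theta_{\mathrm{eq}}(\theta^*)} \PP(\mathtt{prompt}_{\mathrm{CoT}}(n)\mid\theta)\,\pi(\theta)\,\mathrm{d}\theta},
\end{align*}
recovering the claimed inequality.

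The steps here are all essentially algebraic rearrangements once the BMA formula and the definition of the equivalence class are in hand; I do not foresee a real obstacle. The one subtle point is that the bound is sharp and not wasteful precisely because the equivalence class collapses to a single $y$-distribution, so the only loss comes from discarding the nonnegative $\Theta^{\complement}$ mass in the mixture; this is the step that must be invoked with care so that the later concentration arguments in Theorems~\ref{thm: rate_cot} and \ref{thm: rate_cot2} (which control the ratio of posterior marginals via Hellinger separation) can directly consume this output.
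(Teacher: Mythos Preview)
Your proof is correct and arrives at exactly the same intermediate bound as the paper, namely $\KL \leq -\log p^*$ with $p^* = \pi\bigl(\Theta_{\mathrm{eq}}(\theta^*)\bigm|\mathtt{prompt}_{\mathrm{CoT}}(n)\bigr)$, but you reach it by a different and more elementary route. The paper proceeds via the evidence lower bound: it introduces a variational distribution $q$ over $\Theta$, applies the ELBO inequality to $\log\PP(y^{\mathrm{test}}\mid\mathtt{prompt}_{\mathrm{CoT}}(n))$, and then chooses $q(\theta)\propto\mathbf{1}\{\theta\in\Theta_{\mathrm{eq}}(\theta^*)\}\cdot\pi(\theta\mid\mathtt{prompt}_{\mathrm{CoT}}(n))$; with this choice the term $\E_{\theta\sim q}[\log\PP(y\mid z_0,\theta)-\log\PP(y\mid z_0,\theta^*)]$ vanishes by the equivalence-class property, and what remains is $-\log p^*$. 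Your argument bypasses the variational machinery entirely by writing the predictive density as a posterior mixture (this is just Bayes' rule for the true model $\PP$; you do not actually need Lemma~\ref{lem: bma.cot}, which concerns $\PP_{\mathrm{LLM}}$), and then obtaining a pointwise lower bound by dropping the nonnegative $\Theta^{\complement}$ contribution. Your route is shorter and makes the role of the equivalence class more transparent; the paper's ELBO framing is more general in principle (one could imagine other choices of $q$) but is not used in any greater generality here. Both feed identically into the downstream likelihood-ratio analysis of Theorems~\ref{thm: rate_cot} and~\ref{thm: rate_cot2}.
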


\begin{proof}
    See Appendix~\ref{proof: elb} for a detailed proof.
\end{proof}
This proposition applies evidence lower bound \citep{kingma2013auto} to upper bound the KL divergence using likelihood ratios.
The proof involves using a variational distribution that is only supported on $\Theta_{\mathrm{eq}}$ and proportional to the posterior distribution. 
This proposition reduces the problem of bounding the KL divergence to comparing the likelihood functions on $\Theta_{\mathrm{eq}}$ and $\Theta^\complement$. 
We consider the cases where  $\Theta$ is finite and continuous separately.


\vspace{2mm}



   

\vspace{2mm}

\noindent \textbf{Step 2: Statistical rate for the case with a discrete and finite $\Theta$.} In this step, we assume the parameter space $\Theta$ is discrete and finite. Then by Proposition \ref{prop:elb}, 
we have 
\begin{align}\label{eq:proof_thm1_1}
      & \KL \bigg(\PP(y^{\mathrm{test}}=\cdot \given z_0^{\mathrm{test}},\theta^*)  ,   \PP\big(y^{\mathrm{test}}=\cdot \given \mathtt{prompt}_{\mathrm{CoT}}(n)\big) \bigg) \nonumber \\
     &\quad \leq \log \bigg(1+\cfrac{\sum_{\theta\in\Theta^\complement}\PP\big(\mathtt{prompt}_{\mathrm{CoT}}(n) \biggiven \theta \big) \pi(\theta) }{\sum_{\theta\in\Theta_{\mathrm{eq}}(\theta^*)}\PP\big(\mathtt{prompt}_{\mathrm{CoT}}(n) \biggiven  \theta \big) \pi(\theta) }  \bigg) \notag \\
     &\quad \leq \log \bigg(1+\cfrac{\sum_{\theta\in\Theta^\complement}\PP\big(\mathtt{prompt}_{\mathrm{CoT}}(n) \biggiven \theta \big) \pi(\theta) }{\PP\big(\mathtt{prompt}_{\mathrm{CoT}}(n) \biggiven  \theta^* \big) \pi(\theta^*) }  \bigg).
    \end{align}
   Here the first inequality follows from  Proposition~\ref{prop:elb} by changing the integration signs into summations. 
    In the second inequality, we drop some terms in the denominator to get an upper bound. 

Note that we have converted the upper bound into the logarithm of a weighted sum of likelihood ratios, with the weights being the prior distribution.
We invoke the following lemma to establish an upper bound for each likelihood ratio.


\begin{lemma}\label{lemma: likelihoodbd}
    Let $S_n=\{S^i\}_{ i \in [n]}$, where $S^i=z_{0:H}^i\overset{i.i.d.}{\sim} \PP(\cdot \given \theta^*)$ denotes a set of $n$ reasoning paths of length $H$ sampled independently from the model in \eqref{eq:latent_var_model} with task $\theta^*$. Let $J_i \subseteq [H-1]$ for each $i\in [n]$, and we use $S^i_{J_i}$ to denote a truncated version of the $i$-th trajectory $S^i$ corresponding to the indices specified by $J_i$. Namely, $S^i_{J_i} = \{z_0^i\}\bigcup \{z_j^i\}_{j\in J_i}\bigcup \{ z_H^i\}$.
    Then for any $n\geq 1$, $\theta\in \Theta$, and $\delta>0$, we have 
    \begin{align*}
        \frac{\PP(\{S_{J_i}\}_{i=1}^n \given \theta)}{\PP(\{S_{J_i}\}_{i=1}^n \given \theta^*)}\leq \exp{\left(-2\sum_{i=1}^n \mathrm{H}^2 \big(\PP(S_{J_i} \given \theta^*) , \PP(S_{J_i}\given \theta) \big)+2 \log(\delta^{-1})\right)},
    \end{align*}
    with probability at least $1-\delta$. Here the probability is with respect to the randomness of  $S_n$, and $\mathrm{H}^2 (\cdot, \cdot)$ denotes the squared Hellinger distance. 
    Furthermore, let $z_{0}^{n+1} $ be the input query of the $n+1$-th reasoning path, with probability at least $1 - \delta$, we further have 
    \begin{align*}
        & \frac{\PP(\{S_{J_i}\}_{i=1}^n, z_0^{n+1}  \given \theta)}{\PP(\{S_{J_i}\}_{i=1}^n , z_0^{n+1}\given \theta^*)} \notag \\
        & \qquad \leq \exp{\left(-2\sum_{i=1}^n \mathrm{H}^2 \big(\PP(S_{J_i} , \given \theta^*) , \PP(S_{J_i}\given \theta) \big) - 2 H\bigl ( \PP(z_0^{n+1}\given \theta^*), \PP( z_0^{n+1}\given \theta) \bigr ) +2 \log(\delta^{-1})\right)},
    \end{align*}
\end{lemma}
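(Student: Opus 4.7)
The plan is to bound the likelihood ratio via Markov's inequality applied to its square root, exploiting the well-known identity linking the Hellinger affinity and the squared Hellinger distance. Specifically, I will introduce the random variable
\[
M_n(\theta) \;=\; \sqrt{\frac{\PP(\{S_{J_i}\}_{i=1}^n \given \theta)}{\PP(\{S_{J_i}\}_{i=1}^n \given \theta^*)}},
\]
and compute its expectation under $\PP(\cdot \given \theta^*)$. Because the $n$ reasoning paths are i.i.d.\ under $\PP(\cdot\given \theta^*)$ and each truncation $S_{J_i}$ is a measurable function of $S^i$, the joint density factorizes and so
\[
\E_{\theta^*}\bigl[M_n(\theta)\bigr] \;=\; \prod_{i=1}^n \int \sqrt{\PP(S_{J_i}\given \theta)\,\PP(S_{J_i}\given \theta^*)}\,\rmd S_{J_i} \;=\; \prod_{i=1}^n \bigl(1-\mathrm{H}^2(\PP(S_{J_i}\given \theta^*),\PP(S_{J_i}\given \theta))\bigr).
\]
Applying the elementary inequality $1-x\leq e^{-x}$ termwise gives
\[
\E_{\theta^*}\bigl[M_n(\theta)\bigr] \;\leq\; \exp\Bigl(-\sum_{i=1}^n \mathrm{H}^2(\PP(S_{J_i}\given \theta^*),\PP(S_{J_i}\given \theta))\Bigr).
\]

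Next I will apply Markov's inequality to the nonnegative random variable $M_n(\theta)$: for any $t>0$,
\[
\PP\bigl(M_n(\theta) > t\bigr) \;\leq\; \frac{\E_{\theta^*}[M_n(\theta)]}{t}.
\]
Choosing $t = \delta^{-1}\exp\bigl(-\sum_{i=1}^n \mathrm{H}^2(\PP(S_{J_i}\given \theta^*),\PP(S_{J_i}\given \theta))\bigr)$ shows that with probability at least $1-\delta$ (over the draw of the $n$ trajectories),
\[
M_n(\theta) \;\leq\; \exp\Bigl(-\sum_{i=1}^n \mathrm{H}^2(\PP(S_{J_i}\given \theta^*),\PP(S_{J_i}\given \theta)) + \log(\delta^{-1})\Bigr).
\]
Squaring both sides yields the first claimed bound.

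For the second claim, I will use the conditional independence structure of the model in \eqref{eq:latent_var_model}: conditionally on $\theta$, the query $z_0^{n+1}$ is independent of $\{S_{J_i}\}_{i=1}^n$. Hence the joint likelihood ratio factorizes as
\[
\frac{\PP(\{S_{J_i}\}_{i=1}^n,z_0^{n+1}\given \theta)}{\PP(\{S_{J_i}\}_{i=1}^n,z_0^{n+1}\given \theta^*)} \;=\; \frac{\PP(\{S_{J_i}\}_{i=1}^n\given \theta)}{\PP(\{S_{J_i}\}_{i=1}^n\given \theta^*)}\cdot\frac{\PP(z_0^{n+1}\given \theta)}{\PP(z_0^{n+1}\given \theta^*)},
\]
so I can simply apply the same Markov-plus-Hellinger-affinity argument to the product of $n+1$ independent factors, which produces one extra Hellinger term for the query and yields the stated bound without any further probabilistic loss.

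The only delicate point worth flagging is the step that passes from $S^i$ to its truncation $S^i_{J_i}$: the Hellinger affinity identity $\int\sqrt{pq}=1-\mathrm{H}^2$ must be applied to the marginal of $S_{J_i}$ under each hypothesis, not to the full trajectory. This is automatic because the i.i.d.\ structure survives marginalization, but I will state it explicitly to avoid any confusion about whether the bound involves Hellinger distances of full trajectories or of the observed truncations. Once that is handled, the remainder is just Markov's inequality and the inequality $1-x\leq e^{-x}$, so no deep technical obstacle is anticipated.
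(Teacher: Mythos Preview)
Your proposal is correct and essentially identical to the paper's argument. The paper packages the Markov step via a martingale concentration lemma (Lemma~\ref{lemma: hellingerbd}, itself just Markov applied to the product $\prod_i e^{X_i}/\E_{i-1}[e^{X_i}]$) and then uses $\log x\le x-1$, whereas you apply Markov directly to the i.i.d.\ product and use the equivalent inequality $1-x\le e^{-x}$; in the i.i.d.\ setting these two orderings coincide and give exactly the same bound.
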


\begin{proof}
    See Appendix~\ref{proof: likelihoodbd} for a detailed proof. 
\end{proof}

This lemma provides an upper bound on the likelihood ratio of generating trajectories $\{S_{J_i}\}_{i=1}^n$ from two distributions $\PP(\cdot \given \theta)$ and $\PP(\cdot \given \theta^*)$, where $\PP(\cdot \given \theta^*)$ is the ground truth distribution. The upper bound is related to the Hellinger distance between them.

Recall that $\pt_\mathrm{CoT}(n)$ contains $n$ complete trajectories and a testing query $z_0^\mathrm{test}$.
Applying Lemma~\ref{lemma: likelihoodbd} to any $\theta \in \Theta^{\complement}$ and $\theta^*$ and taking a union bound, we conclude that, 
\begin{align}
    &\sup_{\theta \in \Theta^{\complement}} \bigg\{ \cfrac{\ \PP\big(\mathtt{prompt}_{\mathrm{CoT}}(n)\given \theta\big) }{\PP\big(\mathtt{prompt}_{\mathrm{CoT}}(n)\given \theta^* \big) } \bigg\} \nonumber \\
    &\quad \leq \sup_{\theta \in \Theta^{\complement}} \exp\Big(-2n\mathrm{H}^2 \big(\PP(Z_{0:H} \given \theta^*) , \PP(Z_{0:H}  \given \theta) \big)-2\mathrm{H}^2 \big(\PP(z_0 \given \theta^*) , \PP(z_0 \given \theta)\big)+2\log \big(\delta^{-1}| {\Theta^{\complement}}|\big)\Big) \nonumber \\
&\quad \leq \exp\big(-2n\lambda+2\log \big(\delta^{-1}| {\Theta^{\complement}}|\big)\big) \nonumber
\end{align}
holds with probability at least $1-\delta$. The first inequality follows from Lemma~\ref{lemma: likelihoodbd} and the second inequality follows from Assumption~\ref{assumption: seperation} and the fact that $\mathrm{H}^2 \big(\PP(z_0 \given \theta^*) , \PP(z_0 \given \theta)\big)\geq 0$.
Thus, plugging this inequality into the upper 
bound in \eqref{eq:proof_thm1_1},  with probability at least $1-\delta$,  we have 
 \begin{align*}
      &\KL \bigg(\PP(y^\mathrm{test} =\cdot \given z_0^{\mathrm{test}},\theta^*)  ,   \PP\big(y^\mathrm{test} =\cdot\given \mathtt{prompt}_{\mathrm{CoT}}(n)\big) \bigg)\\
      & \quad \leq \log \bigg[1+\frac{1}{\pi(\theta^*)}\cdot \exp \big(-2n\lambda+2\log \big(\delta^{-1}| {\Theta^{\complement}}|\big)\big) \cdot \sum_{ {\theta \in \Theta^{\complement}}} \pi(\theta)\bigg]\\
      & \quad = \log \bigg[1+\frac{1-\pi\big(\Theta_{\mathrm{eq}}(\theta^*)\big)} {\pi(\theta^*)} \cdot \exp{\Big(-2n\lambda+2\log \big(\delta^{-1}| {\Theta^{\complement}}| \big)}\Big)\bigg],
      \end{align*}
    Therefore,  when ${\Theta}$ is discrete and finite, with probability at least $1-\delta$, we have 
    \begin{align} 
        \KL \Big(\PP(y^\mathrm{test} =\cdot\given z_0^{\mathrm{test}},\theta^*)  ,   \PP\big(y^\mathrm{test} =\cdot\given \mathtt{prompt}_{\mathrm{CoT}}(n)\big) \Big) =\mathcal{O}\big(\pi(\theta^*)^{-1}\delta^{-2}| {\Theta^{\complement}}|^2e^{-2\lambda n} \big),\label{eq: kl_rate}
    \end{align}
    where $\cO(\cdot) $ only omits an absolute constant. 
Here we use the fact that  $1-\pi\big(\Theta_\mathrm{eq}(\theta^*)\big)<1$.
Recall that the  prompting error is defined as  
\begin{align*}  
\mathtt{err}_\mathrm{prompt}&=\KL\bigl(\PP(y^{\mathrm{test}}=\cdot \given z_0^{\mathrm{test}} ,\theta^*) , \PP(y^{\mathrm{test}}=\cdot \given \pt_{\mathrm{CoT}}(n)) \bigr)\\
        &\qquad + 2\sqrt{2}Hb^* \sqrt{\KL \big(\PP(y^{\mathrm{test}}=\cdot \given z_0^{\mathrm{test}} ,\theta^*), \PP(y^{\mathrm{test}} =\cdot\given \pt_{\mathrm{CoT}}(n))\big)}.
    \end{align*}
Therefore, we conclude that for any task $\theta^*$ with separation from $\Theta^\complement$, the prompting error goes to zero exponentially fast at a rate of order  $\mathcal{O}\big(Hb^* \pi(\theta^*)^{-1/2}\delta^{-1}| {\Theta^{\complement}}|e^{-\lambda n} \big)$. This proves Theorem \ref{thm: rate_cot}. 

\vspace{2mm}

\noindent\textbf{Step 3: Statistical rate for the case with a continuous $\Theta$.} 
Our analysis in {\bf Step 2} requires ${\Theta}$ to be discrete and finite. To handle the continuous case, we use the cover  of $\Theta^\complement $ to discretize it, at the cost of introducing an additional error involving the covering granularity. 
      Specifically, let $\cC(\alpha)$ be an $\alpha$-cover of $\Theta$ according to the semi-metric $\| \cdot \|_{\Theta}$. For any $\theta$, let the neighborhood $B(\theta, \alpha)$ be defined in \eqref{eq:ball}. 
      Let $\cN(\alpha) $ denote the covering number $\cN(\alpha, \Theta^\complement$). 
Using the minimal $\alpha$-cover of $\Theta^\complement$, we can construct a partition of $\Theta^\complement$ into at most $\cN(\alpha)$ disjoint sets, with each set contained in a neighborhood of radius $\alpha$. 
To see this, let $\cC(\alpha ) = \{ \theta_i \}_{i=1}^{\cN(\alpha)} $ be the $\alpha$-cover of $\Theta^\complement$. 
We can construct set $C(\theta_i) \subset B(\theta_i, \alpha)$ such that \(\{C(\theta_i)\}_{i=1}^{\mathcal{N}(\alpha)}\) form a partition of $\Theta^\complement $ by shrinking each $B(\theta_i, \alpha)$ to remove overlapping parts. 
Then, we have \(C(\theta_i) \cap C(\theta_j) = \emptyset\) for all \(i \neq j\), and \(\bigcup_{i=1}^{\mathcal{N}(\alpha)} C(\theta_i) = {\Theta^\complement}\). 
We characterize the discretization error due to  the $\alpha$-cover via
\begin{align}
    & \log \bigg(\cfrac{\int_{\theta \in {\Theta^\complement}} \PP\big(\mathtt{prompt}_{\mathrm{CoT}}(n)\given \theta \big) \pi(\theta) \text{d}\theta}{\sum_{\theta  \in \calC(\alpha)} \PP\big(\mathtt{prompt}_{\mathrm{CoT}}(n)\given \theta  \big)\pi\big(C(\theta )\big) } \bigg) 
 \nonumber \\
          &\quad =\log \bigg(\cfrac{\sum_{\theta \in \calC(\alpha)}\pi\big(C(\theta)\big)\int_{\theta' \in C(\theta)} \PP\big(\mathtt{prompt}_{\mathrm{CoT}}(n)\given \theta' \big) \cdot \pi(\theta')/ \pi(C(\theta)\big)\text{d}\theta'}{\sum_{\theta \in \calC(\alpha)} \PP\big(\mathtt{prompt}_{\mathrm{CoT}}(n)\given \theta\big)\pi\big(C(\theta)\big) }\bigg) \nonumber \\
          &\quad \leq \log \bigg(\max_{\theta \in \calC(\alpha)}\cfrac{\int_{\theta' \in C(\theta)} \PP\big(\mathtt{prompt}_{\mathrm{CoT}}(n)\given \theta'\big) \pi(\theta')/ \pi\big(C(\theta)\big)\text{d}\theta'}{\PP\big(\mathtt{prompt}_{\mathrm{CoT}}(n) \given \theta \big) }\bigg) \nonumber \\
          &\quad \leq \log \bigg(\sup_{\theta \in \calC(\alpha), \theta' \in C(\theta)}\cfrac{\PP\big(\mathtt{prompt}_{\mathrm{CoT}}(n)\given \theta' \big) }{\PP\big(\mathtt{prompt}_{\mathrm{CoT}}(n)\given \theta\big) }\bigg) \leq n\alpha. \label{eq:net_err_bd}
\end{align}
The first equality follows from decomposing the integral taken over ${\Theta^\complement}$ into a double integral taken over the covering $\calC(\alpha)$ and then within the induced partition $C(\theta)$. The second and third inequality follows from generalized mediant inequality. The last inequality follows from the definition of $B(\theta,\alpha)$ in \eqref{eq:ball} and the fact that $C(\theta)\subseteq B(\theta,\alpha)$ for all $\theta \in \cC(\alpha)$.

      Now we have controlled the error introduced by approximating ${\Theta^\complement}$ using  $\calC(\alpha)$. 
      Next, we apply Assumption~\ref{assumption: lower bd} to lower bound the likelihood integrated over $\Theta_
          \mathrm{eq}(\theta^*)$:
    \begin{align}
        \int_{\Theta_
          \mathrm{eq}(\theta^*)}\PP \big(\mathtt{prompt}_{\mathrm{CoT}}(n)\given \theta'\big)\pi(\theta')\text{d}\theta' \geq  c_0^{nH} \cdot \PP \big(\mathtt{prompt}_{\mathrm{CoT}}(n)\given \theta^*\big) \cdot  \pi(\Theta_
          \mathrm{eq}(\theta^*)). \label{eq:equiv_class_bd}
    \end{align}
        The inequality follows from the fact that $ \mathtt{prompt}_{\mathrm{CoT}}(n)$ is of length $nH$ and Assumption~\ref{assumption: lower bd}, which provides a lower bound for the conditional probability of the next reasoning step.
      
Combining \eqref{eq:equiv_class_bd} and \eqref{eq:net_err_bd} and using the same technique as in {\bf Step 2}, we obtain 
      \begin{align*}
      &\KL \Big(\PP(y^\mathrm{test}= \cdot  \given z_0^{\mathrm{test}},\theta^*)  ,   \PP\big(y^\mathrm{test} = \cdot \given \mathtt{prompt}_{\mathrm{CoT}}(n)\big) \Big)\\
          & \quad \leq \log \bigg(1+\cfrac{\int_{\theta \in {\Theta^{\complement}}}\PP\big(\mathtt{prompt}_{\mathrm{CoT}}(n)\given \theta\big) \pi(\theta)\big)  \text{d}\theta}{c_0^{nH} \cdot \PP \big(\mathtt{prompt}_{\mathrm{CoT}}(n)\given \theta^*\big)\cdot \pi(\Theta_
          \mathrm{eq}(\theta^*))} \bigg) \\
          & \quad \leq  \log \bigg(1+\cfrac{\sum_{\calC(\alpha)} \PP\big(\mathtt{prompt}_{\mathrm{CoT}}(n) \given \theta \big)\pi\big(C_{n}(\theta)\big)}{c_0^{nH} \cdot \PP \big(\mathtt{prompt}_{\mathrm{CoT}}(n)\given \theta^*\big)\cdot \pi(\Theta_
          \mathrm{eq}(\theta^*)) } \bigg) + n\alpha\\
      &\quad \leq \log \bigg[1+\frac{1-\pi\big(\Theta_{\mathrm{eq}}(\theta^*)\big)}{c_0^{nH}\cdot \pi(\Theta_
          \mathrm{eq}(\theta^*))}\exp{\bigg(-2n\lambda+2\log \big(\delta^{-1}\calN(\alpha)\big) }\bigg)\bigg]+n\alpha,
    \end{align*}
    with probability at least $1-\delta$. Here the first inequality is due to Proposition~\ref{prop:elb} and \eqref{eq:equiv_class_bd}. The second inequality follows from \eqref{eq:net_err_bd}, accounting for the discretization error induced by the $\alpha$-cover. 
    The last inequality follows from the same strategy as in {\bf Step 2}, where 
apply Lemma~\ref{lemma: likelihoodbd}. 
Finally, we conclude that , with probability at least $1-\delta$, we have 
\begin{align*}
        &\KL \Big(\PP(y^\mathrm{test} = \cdot \given z_0^{\mathrm{test}},\theta^*)  ,   \PP\big(y^\mathrm{test} = \cdot \given \mathtt{prompt}_{\mathrm{CoT}}(n)\big) \Big)\nonumber \\ 
        & \quad =\mathcal{O}\Big(\pi\big(\Theta_{\mathrm{eq}}(\theta^*)\big)^{-1} \cdot c_0^{-nH} \cdot \delta^{-2}\cdot \calN(\alpha)^{2} \cdot e^{-2n\lambda}+n\alpha \Big),
    \end{align*}
    where $\calO(\cdot)$ only hides absolute constants and we use the fact that the numerator $1-\pi\big(\Theta_{\mathrm{eq}}(\theta^*)\big)<1$. Therefore, we conclude the proof.
\end{proof}

The rest of this section generalizes Theorem~\ref{thm: rate_cot2} to the case where $\Theta$ is continuous and provides the proof.

\begin{theorem} \label{thm: rate_cot_2_cts}
   Let \(\mathtt{err}_\mathrm{pre}\) denote the pretraining error defined in \eqref{eq: err pre}, and let \(\Theta^{\complement} = \Theta \backslash \Theta_{\mathrm{eq}}(\theta^*)\) with \(\Tilde{\Theta^{\complement}}\) as its representative set. Under Assumptions~\ref{assump: density bd}, \ref{assumption: seperation}, and~\ref{assumption: close}, the statistical error \(\mathtt{err}_\mathrm{CoT}\) defined in~\eqref{eq: err cot} is bounded under the following two cases: 

\begin{itemize}
        \item When $\Tilde{\Theta^{\complement}}$ is a discrete and finite set, with probability \(1-\delta\), we have $$\mathtt{err}_\mathrm{CoT}\leq\mathcal{O}\big(Hb^* \cdot \pi(\theta^*)^{-1/2} \cdot \delta^{-1} \cdot \big|\Tilde{\Theta^\complement}\big| \cdot  e^{-(\lambda-\alpha) n+\alpha_0}\big)+\mathtt{err}_\mathrm{pre}.$$ 
        
        \item When $\Theta$ is continuous, let $\calN(\alpha)$ denote the covering number of $\Tilde{\Theta^{\complement}}$ with precision $\alpha$. With probability \(1-\delta\), we have:
        $$\mathtt{err}_\mathrm{CoT}\leq\mathcal{O}\Big(Hb^*\cdot \pi\big(\Theta_{\mathrm{eq}}(\theta^*)\big)^{-1}\cdot \sqrt{\delta^{-2}\cdot \calN(\alpha)^{2}\cdot e^{-2n(\lambda-\alpha)+2\alpha_0}+n\alpha} \Big)+\mathtt{err}_\mathrm{pre}.$$
        
    \end{itemize}
    Here the probability is with respect to the randomness of \ac{cot} prompts $\pt_\mathrm{CoT}(n)$. Parameters $\alpha $ and $\alpha_0$ are introduced in Assumption \ref{assumption: close}. 

Moreover, we remark that when the \ac{llm} is perfectly pretrained, with probability $1-\delta$ with respect to the randomness of \ac{cot} prompt, we have 
\begin{align*}
\mathtt{err}_\mathrm{CoT}\leq \begin{cases}
    & \mathcal{O}\big(\pi\big(\Theta_{\mathrm{eq}}(\theta^*)\big)^{-1}\cdot \delta^{-2}\cdot | \Tilde{\Theta^{\complement}}|^2\cdot e^{-2(\lambda-\alpha) n+2\alpha_0} \big) \hspace{2.3cm}\textrm{when} ~ \Theta ~\textrm{is finite,} \\
    & \mathcal{O}\big(\delta^{-2} \cdot \pi\big(\Theta_{\mathrm{eq}}(\theta^*)\big)^{-1} \cdot\calN(\alpha)^{2}\cdot e^{-2n(\lambda-\alpha)+2\alpha_0}+n\alpha \big) \hspace{1cm} \textrm{when} ~ \Theta ~\textrm{is continuous}. 
\end{cases}
\end{align*}
These error bound follow from the fact that, when the \ac{llm} is perfectly pretrained, $\mathrm{err}_\mathrm{CoT} = \KL \big(\PP(y^{\mathrm{test}}=\cdot \given z_0^{\mathrm{test}},\theta^*)  ,   \PP\big(y^{\mathrm{test}}=\cdot \given \mathtt{prompt}_{\mathrm{CoT}}(n)\big) \big)$. 
\end{theorem}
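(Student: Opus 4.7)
The plan is to refine the argument of Theorem~\ref{thm: rate_cot_cts} by exploiting the equivalence structure of $\Theta$ so that the final bound scales with $|\tilde{\Theta^\complement}|$ (or its covering number) rather than $|\Theta^\complement|$. The starting point is again Proposition~\ref{prop:elb}, which reduces $\mathtt{err}_\mathrm{prompt}$ to controlling the likelihood ratio
\begin{align*}
\mathcal{R}(\pt_\mathrm{CoT}(n)) = \frac{\int_{\Theta^\complement}\PP(\pt_\mathrm{CoT}(n)\given\theta)\pi(\theta)\rmd\theta}{\int_{\Theta_\mathrm{eq}(\theta^*)}\PP(\pt_\mathrm{CoT}(n)\given\theta)\pi(\theta)\rmd\theta}.
\end{align*}
The denominator is handled as in Theorem~\ref{thm: rate_cot_cts}: by Assumption~\ref{assumption: close} applied within $\Theta_\mathrm{eq}(\theta^*)$, every $\theta'\in\Theta_\mathrm{eq}(\theta^*)$ satisfies $\PP(\pt_\mathrm{CoT}(n)\given\theta')\geq e^{-n\alpha-\alpha_0}\PP(\pt_\mathrm{CoT}(n)\given\theta^*)$, so the denominator is at least $e^{-n\alpha-\alpha_0}\PP(\pt_\mathrm{CoT}(n)\given\theta^*)\pi(\Theta_\mathrm{eq}(\theta^*))$.

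For the numerator in the discrete case, I would partition $\Theta^\complement=\bigcup_{\theta\in\tilde{\Theta^\complement}}\Theta_\mathrm{eq}(\theta)$ and use Assumption~\ref{assumption: close} inside each class in the reverse direction, obtaining
\begin{align*}
\int_{\Theta_\mathrm{eq}(\theta)}\PP(\pt_\mathrm{CoT}(n)\given\theta')\pi(\theta')\rmd\theta' \leq e^{n\alpha+\alpha_0}\PP(\pt_\mathrm{CoT}(n)\given\theta)\pi(\Theta_\mathrm{eq}(\theta)).
\end{align*}
Summing over $\theta\in\tilde{\Theta^\complement}$ and applying Lemma~\ref{lemma: likelihoodbd} together with a union bound over the $|\tilde{\Theta^\complement}|$ representatives yields, with probability at least $1-\delta$,
\begin{align*}
\mathcal{R}(\pt_\mathrm{CoT}(n))\leq \frac{e^{2n\alpha+2\alpha_0}}{\pi(\Theta_\mathrm{eq}(\theta^*))}\cdot\delta^{-2}|\tilde{\Theta^\complement}|^2 e^{-2n\lambda},
\end{align*}
where the extra $\alpha$ and $\alpha_0$ from the numerator combine with the $\alpha,\alpha_0$ losses in the denominator, producing the exponent $-2(\lambda-\alpha)n+2\alpha_0$. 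Feeding this into Proposition~\ref{prop:elb}, applying $\log(1+x)\leq x$, and combining with the decomposition in Lemma~\ref{lem:error_decomp} (which contributes a square root and a factor $Hb^*$ through the Pinsker-type term) gives the claimed discrete bound.

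For the continuous case, I would discretize $\tilde{\Theta^\complement}$ using an $\alpha$-cover $\mathcal{C}(\alpha)$ of size $\mathcal{N}(\alpha)$ with respect to $\|\cdot\|_\Theta$, and then express $\Theta^\complement$ as the disjoint union $\bigcup_{\theta\in\mathcal{C}(\alpha)}(\text{a measurable slice of }\Theta_\mathrm{eq}\text{-classes anchored at }\theta)$. The within-class closeness (Assumption~\ref{assumption: close}) and the within-cover closeness (definition of $\|\cdot\|_\Theta$-cover) are both quantified by $\alpha,\alpha_0$, so the same $e^{n\alpha+\alpha_0}$ factor absorbs both sources of approximation; a generalized mediant inequality exactly as in \eqref{eq:net_err_bd} bounds the residual discretization error by $n\alpha$. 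Applying Lemma~\ref{lemma: likelihoodbd} with a union bound over $\mathcal{N}(\alpha)$ cover points in place of $|\tilde{\Theta^\complement}|$ then yields the continuous bound. The main technical obstacle is bookkeeping the two separate $\alpha$-type losses---one within equivalence classes and one across the cover---so that they combine into a single $(\lambda-\alpha)$ exponent with only an additive $n\alpha$ discretization remainder; Assumption~\ref{assumption: close} guaranteeing $\alpha<\lambda$ is exactly what keeps this exponent positive.
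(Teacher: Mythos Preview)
Your proposal is correct and follows essentially the same approach as the paper: reduce via Proposition~\ref{prop:elb}, use Assumption~\ref{assumption: close} to replace the integrals over each equivalence class by the representative's likelihood times $e^{\pm(n\alpha+\alpha_0)}$ (the paper's \eqref{eq: sandwhich} and \eqref{eq: upper bd for kl2}), then apply Lemma~\ref{lemma: likelihoodbd} with a union bound over $\tilde{\Theta^\complement}$ (or its $\alpha$-cover in the continuous case, with the mediant-inequality discretization step \eqref{eq:net_err_bd_2} contributing the additive $n\alpha$). Your identification of the two $\alpha$-losses and how they combine is exactly the bookkeeping the paper does.
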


Theorem~\ref{thm: rate_cot_2_cts} builds on Theorem~\ref{thm: rate_cot_cts} by incorporating Assumption~\ref{assumption: close}, which postulates that distributions within each equivalence class are close. This leads to improved dependency on the parameter space size, shifting from \(\Theta^\complement\) to \(\Tilde{\Theta^\complement}\), despite a slower rate of exponential decay. Note that the statistical rate of \(\mathrm{err}_\mathrm{CoT}\) with a  continuous \(\Tilde{\Theta^\complement}\) does not require Assumption~\ref{assumption: lower bd}, which is required by Theorem~\ref{thm: rate_cot_cts}. The reason is technical:  Assumption~\ref{assumption: close} plays a similar role as Assumption~\ref{assumption: lower bd} and is sufficient to establish the result.
\begin{proof}[Proof of Theorem~\ref{thm: rate_cot2}]
\label{proof: rate_cot2}
We split the proof into three parts. First,  we apply Proposition \ref{prop:elb} and Assumption~\ref{assumption: close} to derive an upper bound of KL divergence involving $\Tilde{\Theta^\complement}$. Then in the second and last part, we consider discrete and continuous cases separately.

\vspace{2mm}

\noindent \textbf{Step 1: Derive upper bound for KL divergence at an equivalence class level.} 
We fix some concept \(\theta^*\) as the true latent task parameter.
Let  \(\Theta^\complement = \Theta \backslash \Theta_{\mathrm{eq}}(\theta^*)\) and  \(\Tilde{\Theta^\complement}\) denote a representative set of \(\Theta^\complement\). In light of Proposition \ref{prop:elb}, we first write 
\begin{align}
&\int_{\Theta^{\complement}}\PP\big(\mathtt{prompt}_{\mathrm{CoT}}(n) \biggiven \theta \big) \pi(\theta) \text{d}\theta \nonumber \\
&\quad =\int_{\theta \in \Tilde{\Theta^{\complement}}}\pi\big(\Theta_{\mathrm{eq}}(\theta)\big) \int_{\theta' \in \Theta_{\mathrm{eq}}(\theta)}\PP\big(\mathtt{prompt}_{\mathrm{CoT}}(n) \given \theta'\big) \pi(\theta')/\pi\big(\Theta_{\mathrm{eq}}(\theta)\big) \text{d}\theta' \text{d}\theta. \label{eq: decomposition}
\end{align}
That is, we decompose the integral over \(\Theta^\complement\) into a double integral: the inner integral averages the likelihood \(\PP\big(\mathtt{prompt}_{\mathrm{CoT}}(n) \mid \theta'\big)\) within each equivalence class, and the outer integral averages across all equivalence classes. By Assumption~\ref{assumption: close}, there exists a representative set \(\Tilde{\Theta}\) such that \(\theta^*\) is in \(\Tilde{\Theta}\), and distributions within the same equivalence class are close to each other.
Thus, we can derive both upper and lower bounds for the averaged likelihood within each equivalence class. For any $\theta \in \Tilde \Theta$, we have 
\begin{align}
    \exp(-n\alpha-\alpha_0) \leq \int_{\theta' \in \Theta_{\mathrm{eq}}(\theta)}\frac{\PP\big(\mathtt{prompt}_{\mathrm{CoT}}(n)\given \theta' \big) }{\PP\big(\mathtt{prompt}_{\mathrm{CoT}}(n)\given \theta\big)}\cdot \frac{\pi(\theta')} {\pi\big(\Theta_{\mathrm{eq}}(\theta)\big) } \text{d}\theta'\leq \exp(n\alpha+\alpha_0). \label{eq: sandwhich} 
\end{align}
 Combing \eqref{eq: sandwhich} and \eqref{eq: decomposition} with Proposition \ref{prop:elb}, we have
    \begin{align*}
        &\KL \Big(\PP(y^\mathrm{test} = \cdot \given z_0^{\mathrm{test}},\theta^*)  ,   \PP\big(y^\mathrm{test}= \cdot \given \mathtt{prompt}_{\mathrm{CoT}}(n)\big) \Big) \\
      & \quad \leq \log \bigg(1+\cfrac{\int_{\theta \in \Tilde{\Theta^{\complement}}}\pi\big(\Theta_{\mathrm{eq}}(\theta)\big) \int_{\theta' \in \Theta_{\mathrm{eq}}(\theta)}\PP\big(\mathtt{prompt}_{\mathrm{CoT}}(n) \given \theta'\big) \pi(\theta')/\pi\big(\Theta_{\mathrm{eq}}(\theta)\big) \text{d}\theta' \text{d}\theta}{\int_{\theta '' \in \Theta_{\mathrm{eq}}(\theta^*)}\PP\big(\mathtt{prompt}_{\mathrm{CoT}}(n) \biggiven  \theta'' \big) \pi(\theta'') \text{d}\theta''} \bigg)\\
      & \quad \leq \log \bigg(1+\cfrac{e^{n\alpha +\alpha_0} \cdot \int_{\theta \in \Tilde{\Theta^{\complement}}}\pi\big(\Theta_{\mathrm{eq}}(\theta)\big) \PP\big(\mathtt{prompt}_{\mathrm{CoT}}(n)\given \theta \big) \text{d}\theta}{e^{-n\alpha -\alpha_0} \cdot \PP \big(\mathtt{prompt}_{\mathrm{CoT}}(n)\given \theta^* \big)\pi\big(\Theta_\mathrm{eq}(\theta^*) \big)}\bigg).
      \end{align*}
The first inequality follows from Proposition \ref{prop:elb} and \eqref{eq: decomposition} and the second inequality is due to  \eqref{eq: sandwhich}. Note that the representative set \(\Tilde{\Theta^\complement}\) may not be unique, but this does not affect our analysis because we use \(\theta \in \Tilde{\Theta^\complement}\) only as a reference and  the value of $\PP  (\mathtt{prompt}_{\mathrm{CoT}}(n)\given \theta)$ stays the same within each equivalence class. The definition is consistent under any selection of the representative set. Therefore, we rewrite the upper bound for the KL divergence at the equivalence class level:
\begin{align}
        &\KL \Big(\PP(y^\mathrm{test}= \cdot \given z_0^{\mathrm{test}},\theta^*)  ,   \PP\big(y^\mathrm{test}= \cdot \given \mathtt{prompt}_{\mathrm{CoT}}(n)\big)\Big) \nonumber \\
      & \quad \leq  \log \bigg(1+e^{2n\alpha+2\alpha_0}\cfrac{\int_{\theta \in \Tilde{\Theta^{\complement}}}\pi \big(\Theta_{\mathrm{eq}}(\theta)\big) \PP\big(\mathtt{prompt}_{\mathrm{CoT}}(n) \given \theta\big) \text{d}\theta}{\PP\big(\mathtt{prompt}_{\mathrm{CoT}}(n)\given \theta^*\big)\pi\big(\Theta_\mathrm{eq}(\theta^*)\big) } \bigg). \label{eq: upper bd for kl2}
      \end{align}
Here, Assumption~\ref{assumption: close} reduces the integration region from $\Theta^\complement$ to $\Tilde{\Theta^\complement}$, at the cost of introducing the terms $e^{2\alpha n +2\alpha_0}$.

\vspace{2mm}

\noindent \textbf{Step 2: Statistical rate for the discrete case.} In this step, we assume the parameter space $\Tilde{\Theta^{\complement}}$ is discrete and finite. 
For any $\theta \in \Tilde{\Theta^\complement}$,  with probability at least $1-\delta$, we  obtain an upper bound for the averaged likelihood ratio  as follows: 
\begin{align}
    &\frac{\PP\big(\mathtt{prompt}_{\mathrm{CoT}}(n) \given \theta\big)}{\PP\big(\mathtt{prompt}_{\mathrm{CoT}}(n)\given \theta^* \big)} \nonumber \\
    &\quad \leq \exp \bigg(-2n \text{H}^2\big(\PP(Z_0,Z_1,\cdots,Z_{H-1}, Y\given \theta^*) , \PP(Z_0,Z_1,\cdots,Z_{H-1},Y\given \theta)\big) \nonumber \\
    &\quad \qquad -2\text{H}^2\big(\PP(Z_0 \given \theta^*) , \PP(Z_0\given \theta)\big)+2\log \delta^{-1}\bigg) \nonumber \\
    &\quad \leq \exp \big(-2n\lambda+2\log \delta^{-1}\big). \label{eq:likelihood_2}
\end{align} 
Here the first inequality follows from the Lemma \ref{lemma: likelihoodbd}, where we leverage the conditional independence between the $n$ CoT demonstrations and the query given task parameter $\theta$. 
The second inequality is due to the Assumption \ref{assumption: seperation}, which specifies that  $\theta$ and $\theta^*$ are strictly separated with a margin $\lambda$, and the non-negativity of the Hellinger distance.
Combing \eqref{eq:likelihood_2} with \eqref{eq: upper bd for kl2},  with probability at least $1-\delta$, we have 
      \begin{align*}
      &\KL \Big(\PP(y^\mathrm{test}= \cdot \given z_0^{\mathrm{test}},\theta^*)  ,   \PP\big(y^\mathrm{test}= \cdot \given \mathtt{prompt}_{\mathrm{CoT}}(n)\big) \Big) \nonumber \\
      & \quad \leq  \log \bigg(1+e^{2n\alpha+2\alpha_0} \cdot \cfrac{\sum_{\theta \in \Tilde{\Theta^{\complement}}}\pi \big(\Theta_{\mathrm{eq}}(\theta)\big) \PP\big(\mathtt{prompt}_{\mathrm{CoT}}(n) \given \theta\big) }{\PP\big(\mathtt{prompt}_{\mathrm{CoT}}(n)\given \theta^*\big)\pi\big(\Theta_\mathrm{eq}(\theta^*)\big) } \bigg)\\
      & \quad \leq \log \bigg[1+\frac{e^{2n\alpha+2\alpha_0}}{\pi\big(\Theta_{\mathrm{eq}}(\theta^*)\big)} \cdot \sum_{ \theta \in \Tilde{\Theta^{\complement}}} \exp \big(-2n\lambda+2\log \big(\delta^{-1}| \Tilde{\Theta^{\complement}}|\big)\big) \pi\big(\Theta_{\mathrm{eq}}(\theta)\big)\bigg]\\
      & \quad = \log \bigg[1+\frac{1-\pi\big(\Theta_{\mathrm{eq}}(\theta^*)\big)}{\pi\big(\Theta_{\mathrm{eq}}(\theta^*)\big)} \cdot \exp{\bigg(-2n(\lambda-\alpha)+2\alpha_0+2\log \big(\delta^{-1}| \Tilde{\Theta^{\complement}}| \big)}\bigg)\bigg].
      \end{align*}
    Here the  first inequality follows from \eqref{eq: upper bd for kl2} and the fact that \(\Tilde{\Theta^\complement}\) is discrete and finite. The second inequality is due to \eqref{eq:likelihood_2}, and the final line results from rearranging terms.

    In sum, we have that when $\Tilde{\Theta^\complement}$ is discrete and finite,   with probability at least $1-\delta$ we have 
    \begin{align}
        &\KL \Big(\PP(y^\mathrm{test}= \cdot \given z_0^{\mathrm{test}},\theta^*)  ,   \PP\big(y^\mathrm{test}= \cdot \given \mathtt{prompt}_{\mathrm{CoT}}(n)\big) \Big) \nonumber\\
        &=\mathcal{O}\big(\pi\big(\Theta_{\mathrm{eq}}(\theta^*)\big)^{-1} \cdot \delta^{-2} \cdot | \Tilde{\Theta^{\complement}}|^2 \cdot e^{-2(\lambda-\alpha) n+2\alpha_0} \big), \label{eq: kl_rate_2}
    \end{align}
where $\calO(\cdot)$ only hides absolute constants. We conclude that when \( \Theta \) is discrete and finite,  the \ac{cot} prompting error decays exponentially to zero and it depends on $\Theta^\complement$ only through $|\Tilde{\Theta^\complement}|$. The upper bound in \eqref{eq: kl_rate_2} establishes Theorem \ref{thm: rate_cot2}. 

\vspace{2mm}

\noindent \textbf{Step 3: Convergence rate for the continuous case.} It remains to consider the case where $\Theta$ is continuous.  Similar to the proof of Theorem \ref{thm: rate_cot_cts}, we the $\alpha$-cover with respect to the likelihood metric $\| \cdot \|_{\Theta}$  to discretize $\Tilde{\Theta^\complement}$ at the cost of introducing an additional error.

  For any \(\alpha > 0\), let \(\mathcal{C}(\alpha)\) denote an \(\alpha\)-covering of \(\Tilde{\Theta^\complement}\) with covering number \(\mathcal{N}(\alpha)\) and let  $\{C(\theta_i)\}_{\theta\in \calN(\alpha)}$ denote the partition of $\Tilde{\Theta^\complement}$ induced by $\cC(\alpha)$. We bound the discretization error due to the $\alpha$-cover as follows: 
\begin{align}
    & \log \bigg(\cfrac{\int_{\theta \in \Tilde{\Theta^\complement}} \PP\big(\mathtt{prompt}_{\mathrm{CoT}}(n)\given \theta \big) \pi\big(\Theta_\mathrm{eq}(\theta)\big) \text{d}\theta}{\sum_{\theta \in \calC(\alpha)} \PP\big(\mathtt{prompt}_{\mathrm{CoT}}(n)\given \theta \big)\pi\big(C_{n}(\theta)\big) } \bigg) \nonumber \\
          &\quad =\log \bigg(\cfrac{\sum_{\theta \in \calC(\alpha)}\pi\big(C(\theta )\big)\int_{\theta' \in C(\theta )} \PP\big(\mathtt{prompt}_{\mathrm{CoT}}(n)\given \theta' \big) \pi\big(\Theta_\mathrm{eq}(\theta')\big)/ \pi(C(\theta )\big)\text{d}\theta'}{\sum_{\theta \in \calC(\alpha)} \PP\big(\mathtt{prompt}_{\mathrm{CoT}}(n)\given \theta\big)\pi\big(C(\theta )\big) }\bigg) \nonumber \\
          &\quad \leq \log \bigg(\max_{\theta \in \calC(\alpha)}\cfrac{\int_{\theta' \in C(\theta )} \PP\big(\mathtt{prompt}_{\mathrm{CoT}}(n)\given \theta'\big) \pi\big(\Theta_\mathrm{eq}(\theta')\big)/ \pi\big(C(\theta )\big)\text{d}\theta'}{\PP\big(\mathtt{prompt}_{\mathrm{CoT}}(n) \given \theta \big) }\bigg) \nonumber \\
          &\quad \leq \log \bigg(\sup_{\theta \in \calC(\alpha), \theta' \in C(\theta)}\cfrac{\PP\big(\mathtt{prompt}_{\mathrm{CoT}}(n)\given \theta' \big) }{\PP\big(\mathtt{prompt}_{\mathrm{CoT}}(n)\given \theta\big) }\bigg)\quad \leq n\alpha. \label{eq:net_err_bd_2}
\end{align}
Here the  first equality follows from decomposing the integral over \(\Tilde{\Theta^\complement}\) into a double integral  using the partition structure. The second and third inequalities follow from the generalized mediant inequality. The last inequality is derived from the definition of \(\calC(\alpha)\).

Now we have controlled the discretization error, combing with analysis from {\bf Step 2}, we obtain that the following inequality holds with probability at least $1-\delta$:
      \begin{align*}
      &\KL \Big(\PP(y^\mathrm{test}= \cdot \given z_0^{\mathrm{test}},\theta^*)  ,   \PP\big(y^\mathrm{test}= \cdot \given \mathtt{prompt}_{\mathrm{CoT}}(n)\big) \Big) \nonumber \\
      & \quad \leq  \log \bigg(1+e^{2n\alpha+2\alpha_0}\cdot \cfrac{\int_{\theta \in \Tilde{\Theta^{\complement}}}\pi \big(\Theta_{\mathrm{eq}}(\theta)\big) \PP\big(\mathtt{prompt}_{\mathrm{CoT}}(n) \given \theta\big) \text{d}\theta}{\PP\big(\mathtt{prompt}_{\mathrm{CoT}}(n)\given \theta^*\big)\pi\big(\Theta_\mathrm{eq}(\theta^*)\big) } \bigg)\\
          & \quad \leq  \log \bigg(1+e^{2n\alpha+2\alpha_0}\cdot \cfrac{\sum_{\theta \in \calC(\alpha)} \PP\big(\mathtt{prompt}_{\mathrm{CoT}}(n) \given \theta \big)\pi\big(C_{n}(\theta)\big)}{\PP\big(\mathtt{prompt}_{\mathrm{CoT}}(n)\given \theta^*\big)\pi\big(\Theta_\mathrm{eq}(\theta^*)\big) } \bigg) + n\alpha\\
      &\quad \leq \log \bigg[1+\frac{1-\pi\big(\Theta_{\mathrm{eq}}(\theta^*)\big)}{\pi\big(\Theta_\mathrm{eq}(\theta^*)\big)}\cdot \exp{\bigg(-2n(\lambda-\alpha)+2\alpha_0+2\log \big(\delta^{-1}\calN(\alpha)\big) }\bigg)\bigg]+n\alpha,
    \end{align*}
    where $\calO(\cdot)$ only hides absolute constants, and the randomness comes from the stochasticity of \ac{cot} prompts $\pt_\mathrm{CoT}(n)$.
   The first inequality follows from \eqref{eq: upper bd for kl2}, and the second inequality follows from \eqref{eq:net_err_bd_2}. The final inequality is due to Lemma~\ref{lemma: likelihoodbd}. Therefore, we conclude that with probability at least $1-\delta$ we have  
\begin{align*}
        &\KL \Bigl(\PP(y^\mathrm{test}= \cdot \given z_0^{\mathrm{test}},\theta^*)  ,   \PP\big(y^\mathrm{test}= \cdot \given \mathtt{prompt}_{\mathrm{CoT}}(n)\big) \Big) \nonumber \\
        & \quad =\mathcal{O}\big(\delta^{-2} \cdot \pi\big(\Theta_{\mathrm{eq}}(\theta^*)\big)^{-1}\cdot \calN(\alpha)^{2}\cdot 
 e^{-2n(\lambda-\alpha)+2\alpha_0}+n\alpha \big), 
    \end{align*}
    where $\calO(\cdot)$ omits absolute constants. Thus, we conclude the proof.
\end{proof}

\newpage

\section{Proofs of the Results in  Section~\ref{subsubsec: variants}}

\subsection{Proof of Corollary \ref{cor: self-consistency cot}}
\label{proof: self-consistency cot}
\begin{proof}

There are two notions of sample size in self-consistency \ac{cot}: the number of examples in \ac{cot} prompt $n$, and the number of reasoning paths $K$. 
These two notions have different roles. 
A large $n$ ensures that the distribution of the perfectly pretrained \ac{llm}, 
$\PP \big(y^{\mathrm{test}} = \cdot \given \pt_{\mathrm{CoT}}(n)\big)$, approximates the desired distribution 
$\PP \big(y^{\mathrm{test}} = \cdot \given z_0^\mathrm{test},\theta^* \big)$.
Whereas a large $K$ ensures that the sample mode $y_k^*$ approximates the population mode. 

In the following, we prove the corollary in two steps. 
We first show that the population mode of $\PP \big(y^{\mathrm{test}} = \cdot \given \pt_{\mathrm{CoT}}(n)\big)$ coincides with $y^*$ when $n$ is sufficiently large.
Then we prove that $y_K^*$ finds the population mode of $\PP  (y^{\mathrm{test}} = \cdot \given \pt_{\mathrm{CoT}}(n) )$ when $K$  is sufficiently large. 
The final statistical error can be obtained by combining these two steps.


\vspace{2mm}

\noindent \textbf{Step 1: Mode of $\PP  (y^{\mathrm{test}} = \cdot \given \pt_{\mathrm{CoT}}(n) )$ converges to $y^*$.} 
In the first step, we show that there exists $n^*$ such that, as long as $n \geq n^*$, the mode of $\PP  (y^{\mathrm{test}} = \cdot \given \pt_{\mathrm{CoT}}(n) )$ coincides with $y^*$, the mode of  $\PP \big(y^{\mathrm{test}} = \cdot  \given  z_0^\mathrm{test} ,\theta^* \big)$. 
This step leverages Theorem \ref{thm: rate_cot}.


Suppose there exists some $\xi>0$ such that
\begin{align*}
\KL \Big(\PP(y^{\mathrm{test}} = \cdot  \given z_0^{\mathrm{test}},\theta^*)  ,   \PP\big(y^{\mathrm{test}}  = \cdot \given \mathtt{prompt}_{\mathrm{CoT}}(n)\big) \Big)\leq \xi,
\end{align*}
Then by Pinsker's inequality, we obtain a bound on the TV distance,
\begin{align*}
    \TV\Big((\PP(y^{\mathrm{test}}  = \cdot \given z_0^{\mathrm{test}},\theta^*) ,   \PP\big(y^{\mathrm{test}} = \cdot \given \mathtt{prompt}_{\mathrm{CoT}}(n)\big)\Big) \leq \sqrt{2\xi}.
\end{align*}
Recall that we assume that $y^{\mathrm{test}} $ belongs to a finite set $\cY$.
Then we have 
\begin{align*}
    \sqrt{2\xi} &\geq \TV\Big((\PP(y^{\mathrm{test}}  = \cdot\given z_0^{\mathrm{test}},\theta^*) ,   \PP\big(y^{\mathrm{test}} = \cdot \given \mathtt{prompt}_{\mathrm{CoT}}(n)\big)\Big)\\
    &=\frac{1}{2}\sum_{y \in \cY }\Big|\PP(y^{\mathrm{test}}  = y \given z_0^{\mathrm{test}},\theta^*) -   \PP\big(y^{\mathrm{test}} = y \given \mathtt{prompt}_{\mathrm{CoT}}(n)\big)\Big|\\
    &\geq \frac{1}{2}\max_{y \in \cY }\Big| \PP(y^{\mathrm{test}} = y\given z_0^{\mathrm{test}},\theta^*) -   \PP\big(y^{\mathrm{test}} = y  \given \mathtt{prompt}_{\mathrm{CoT}}(n)\big)\Big|. 
\end{align*}
Thus we can sandwich $\PP (y^{\mathrm{test}} = \cdot \given \mathtt{prompt}_{\mathrm{CoT}}(n) )$ for any $y$ by
\begin{align*} 
    \PP(y^\mathrm{test} = y \given z_0^{\mathrm{test}},\theta^*) - 2\sqrt{2\xi}\leq \PP\big(y^\mathrm{test} = y \given \mathtt{prompt}_{\mathrm{CoT}}(n)\big)\leq \PP(y^\mathrm{test} = y \given z_0^{\mathrm{test}},\theta^*) + 2\sqrt{2\xi}.
\end{align*}
We plug in different values of $y$ in the above inequality  and obtain 
\begin{align}
    \max_{y \neq y^*}\PP\big(y^\mathrm{test}  = y \given \mathtt{prompt}_{\mathrm{CoT}}(n)\big)&\leq \max_{y \neq y^*}\PP(y^\mathrm{test} =  y \given z_0^{\mathrm{test}},\theta^*) + 2\sqrt{2\xi},  \label{eq: 1}\\
    \PP\big(y^\mathrm{test} =  y^* \given \mathtt{prompt}_{\mathrm{CoT}}(n)\big)&\geq \PP(y^\mathrm{test} = y^* \given z_0^{\mathrm{test}},\theta^*) - 2\sqrt{2\xi}. \label{eq: 2}
\end{align}

 Recall the definition of $\epsilon$ in Assumption \ref{assump: sc}. Combining \eqref{eq: 1} and \eqref{eq: 2} we have 
\begin{align}
    \PP\big(y^\mathrm{test} =  y^* \given \mathtt{prompt}_{\mathrm{CoT}}(n)\big)&\geq \PP(y^\mathrm{test} = 
 y^* \given z_0^{\mathrm{test}},\theta^*) - 2\sqrt{2\xi} \notag\\
    &\geq\max_{y \neq y^*}\PP(y^\mathrm{test} =  y \given z_0^{\mathrm{test}},\theta^*) + \epsilon -2\sqrt{2\xi} \notag \\
    &\geq \max_{y \neq y^*}\PP\big(y^\mathrm{test} =  y \given \mathtt{prompt}_{\mathrm{CoT}}(n)\big)+ \epsilon -4\sqrt{2\xi}, \label{eq:bound_cor1_1_final}
\end{align}
where the first inequality follows from \eqref{eq: 1}, the second follows from the definition of  $\epsilon$, and the last one follows from \eqref{eq: 2}. 
Hence, as long as $\epsilon - 4 \sqrt{2\xi}  > 0$, we ensure that $y^*$ is also the unique mode of $\PP (y^\mathrm{test} = \cdot \given \mathtt{prompt}_{\mathrm{CoT}}(n) )$.
Now if we set the prompt size to be
\begin{align}
    \label{eq:define_n_star}
    n^* = C\cdot  \Big( \log \bigl( |\Theta^\complement| / \pi(\theta^*)\bigr) + \log (1/ \epsilon  )  \Big)\Big/ \lambda ,
\end{align}
where  $C$ is a sufficiently large absolute constant. 
We now 
leverage Theorem \ref{thm: rate_cot} with a perfectly pretrained \ac{llm}.
Specifically, 
setting $\delta =e^{-\lambda n /2}$ in  \eqref{eq: kl_rate}, we conclude that, with probability at least $1-e^{-\lambda n /2}$, when $n \geq n^*$, 
we have 
\begin{align}\label{eq:apply_kl_bound}
\KL \Big(\PP(y^{\mathrm{test}} = \cdot  \given z_0^{\mathrm{test}},\theta^*)  ,   \PP\big(y^{\mathrm{test}}  = \cdot \given \mathtt{prompt}_{\mathrm{CoT}}(n)\big) \Big) \leq \epsilon^2 / 128. 
\end{align}
Combining \eqref{eq:bound_cor1_1_final} and \eqref{eq:apply_kl_bound}, we conclude that, when $n$ is sufficiently large such that $n \geq n^*$, with probability $1-e^{-\lambda n /2}$, 
\begin{align}
    \label{eq:cor1_1_final_bound}
    \PP\big(y^\mathrm{test} =  y^* \given \mathtt{prompt}_{\mathrm{CoT}}(n)\big)&\geq  \max_{y \neq y^*}\PP\big(y^\mathrm{test} =  y \given \mathtt{prompt}_{\mathrm{CoT}}(n)\big)+ \epsilon/2 ,
\end{align}
where $n^*$ is defined in \eqref{eq:define_n_star}. 
Thus, $y^*$ is also the mode of $\PP (y^\mathrm{test} =  y^* \given \mathtt{prompt}_{\mathrm{CoT}}(n) )$ with high probability. Now we conclude {\bf Step 1}.


\vspace{2mm}

\noindent \textbf{Step 2: Sample mode $y_K^*$ converges to population mode $y^*$.} 
In this step, we utilize concentration to show that $y_K^*$ converges to $y^*$ when $K$ is sufficiently large. 
Recall that we assume $y^{\mathrm{test}}$ takes values in a finite set $\cY$ and also recall that we define an empirical distribution $p_K(y) = K^{-1} \sum_{i=1}^K \mathbf{1} \{y^{\mathrm{test},i} = y\}$ for all $y \in \cY$. 
Thus, for any $y\in \cY$, $Kp_K(y)$ is a binomial variable with distribution  $\mathtt{Bin} (K, p \big)$, where $p = \PP  (y^{\mathrm{test}} = y \given \pt_{\mathrm{CoT}}(n) )$. 
Thus by Bernstein's inequality for binomial distribution, for any $t>0$ we have 
\begin{align*}
    &\PP \big(\big|p_K(y)-\PP \big(y^{\mathrm{test}} = y \given \pt_{\mathrm{CoT}}(n)\big)\big|\geq t \big) \\
    & \quad \leq 2\exp \bigg(-\frac{3K t^2}{6\cdot \PP \big(y \given \pt_{\mathrm{CoT}}(n)\big) \big(1-\PP (y \given \pt_{\mathrm{CoT}}(n))\big)+2 t }\bigg)\leq 2\exp \bigg(-\frac{6Kt^2 }{3+4 t}\bigg)
\end{align*}
where the second inequality follows from the fact that $p(1-p) \leq 1 /4$ for any $p \in \RR$.
Now we set $ t = \epsilon/ 4$ and take a union bound over $y \in \cY$ to obtain that 
\begin{align}\label{eq:cor1_union_bound}
     \PP \Big(\max_{y \in \cY} \big|p_K(y)-\PP \big(y^{\mathrm{test}} = y \given \pt_{\mathrm{CoT}}(n)\big)\big|\geq \epsilon / 4  \Big)  
    &    \leq 2 |\cY| \cdot \exp \bigg(-\frac{3K \epsilon^2  }{24+ 8 \epsilon}\bigg).
\end{align}

Recall that $y_K^*$ and $y^*$ are the modes of $p_K$ and $\PP  (y^{\mathrm{test}} = \cdot  \given \pt_{\mathrm{CoT}}(n))$, respectively. 
Also note that \eqref{eq:cor1_1_final_bound} implies that there is a gap of $\epsilon/2 $ between the mode of $\PP  (y^{\mathrm{test}} = \cdot  \given \pt_{\mathrm{CoT}}(n))$ and its second largest probability mass. 
If $y_K^*\neq y^*$, there must exist some $y \in \cY$ such that 
$$ |p_K(y)-\PP \big(y^{\mathrm{test}} = y \given \pt_{\mathrm{CoT}}(n)\big)\big|\geq \epsilon / 4 .$$
Therefore,  we can upper bound    $\PP\big(y_K^*\neq y^*\given \pt_{\mathrm{CoT}}(n)\big)$ using \eqref{eq:cor1_union_bound}.

Therefore, we conclude that 
with a perfectly pretrained \ac{llm} and a discrete and finite ${\Theta}$, when is sufficiently large such that  $n \geq n^*$, we have 
 \begin{align*}
        \PP\big(y_K^*\neq y^*\given \pt_{\mathrm{CoT}}(n)\big) \leq 2 |\cY| \cdot \exp \bigg(-\frac{3K \epsilon^2  }{24+ 8 \epsilon}\bigg) 
    \end{align*}
    with probability at least $1-e^{-\lambda n/2}$. 
    Here $n^*$ is define in \eqref{eq:define_n_star}. 
  Thus we conclude the proof.
\end{proof}

\subsection{Proof of Proposition \ref{cor: tot-bfs}} \label{proof: tot-bfs}
\begin{proof} 

In \ac{tot} prompting, there are two notions of sample size: the number of examples in the \ac{cot} prompt $n$ and the number of candidates generated at each step \( K \).  
Additionally, there is a breadth limit parameter \( B\), which controls the number of candidates that continue to the next step.  
For the ease of notation, we write $z_h^{\mathrm{test},*}$ as $z^*_h$  and $t_h^{\mathrm{test},*}$ as $t_h^*$.

Intuitively, a large $n$ ensures that the distribution induced by a perfectly pretrained \ac{llm} $\PP(z^\mathrm{test}_h = \cdot \given \pt_h(n), t^*_{h-1})$ approximates the true distribution $\PP(z_h^\mathrm{test} = \cdot \given t^*_{h-1}, \theta^*)$ for each $h\in [H]$. A large \( K \) ensures that the optimal \( z_h^* \) appears in the samples for each \( h \in [H] \), which is then selected by BFS.

The proof involves two steps. First, we show that for any \( h \in [H] \), \(p_h = \PP(z_h^\mathrm{test} = z_h^* \mid \pt_h(n), t_{h-1}^*)\) is close to \(p_h^* = \PP(z_h^\mathrm{test} = z_h^* \mid t_{h-1}^*, \theta^*)\) when \( n \) is large enough, where \( z_h^* \) is the optimal next step $z_h^* = \argmax_{z_h}V_{\theta^*}(t_{h-1}^*, z_h)$. Second, we demonstrate that a large \( K \) helps find the optimal trajectory \( t_H^* \) by iteratively sampling from the \ac{llm}-induced distribution \(\PP(z_h^\mathrm{test} = \cdot \mid \pt_h(n), t_{h-1}^*)\). Finally, we combine both arguments to present the statistical error for \ac{tot}  prompting.


\vspace{2mm}

\noindent \textbf{Step 1: \ac{llm}-induced probability $p_h$ approximates $ p_h^* $ for large $n$.} 
In this step, we show \( p_h \) approaches \( p_h^* \) as \( n \) increases, which is similar to Step 1 of the proof of Theorem~\ref{thm: rate_cot}.

For each $h\in [H]$, $\pt_h(n)$  contains the \ac{cot} examples in terms of the prediction in the $h$-th step. We want to lower bound the probability of outputting $z_h^*$ by prompting a perfectly pretrained \ac{llm}. For simplicity, we assume $\Theta$ is finite and discrete. For each $h\in [H]$, using Bayes rule, we write  the posterior as 
$$
        \pi(\theta^* \given \pt_h(n), t_{h-1}^*)   =\bigg(1+\sum_{\theta\neq \theta^*}\frac{\PP(\pt_h(n) \given \theta)\PP(t_{h-1}^* \given \theta)\pi(\theta)}{\PP(\pt_h(n) \given \theta^*)\PP(t_{h-1}^* \given \theta^*)\pi(\theta^*)}\bigg)^{-1} . 
$$
Under Assumption~\ref{assump:tot-1}, $\theta^*$ maximizes $\PP(t_{h}^*\given \theta)$ for each $h\in [H]$. Thus, we have 
    \begin{align}
        \pi(\theta^* \given \pt_h(n), t_{h-1}^*) 
        & \geq\bigg(1+\sum_{\theta\neq \theta^*}\frac{\PP(\pt_h(n) \given \theta)\pi(\theta)}{\PP(\pt_h(n)\given \theta^*)\pi(\theta^*)}\bigg)^{-1} \nonumber \\
        &\geq \bigg(1+\sum_{\theta\neq \theta^*}\frac{ \pi(\theta)}{\pi(\theta^*)}\cdot\exp\big[-2n \text{H}^2 \big(\PP(t_h\given \theta),\PP(t_h \given \theta^*)\big)+2\log(\delta^{-1}|\Theta|)\big]\bigg)^{-1} \nonumber \\
        &\geq \bigg(1+\frac{1-\pi(\theta^*)}{\pi(\theta^*)}\cdot\exp\big(-2n\lambda_h+2\log(\delta^{-1}|\Theta|)\big)\bigg)^{-1} \label{eq:posterior_bd}
    \end{align}
    with probability at least $1-\delta$. 
    Here the second inequality follows from Lemma~\ref{lemma: likelihoodbd} in the proof of Theorem~\ref{thm: rate_cot}  and the third inequality follows from  Assumption~\ref{assump:tot-1}. Thus we derive a lower bound for $p_h$ as 
\begin{align*}
    p_h&=\sum_{\theta\in \Theta}\PP(z_h^*\given t_{h-1}^*,\theta)\pi(\theta \given  t_{h-1}^*,\pt_h(n)) \geq \PP(z_h^*\given t_{h-1}^*,\theta^*)\pi(\theta^* \given  t_{h-1}^*,\pt_h(n)) \nonumber \\
    &\geq p_h^* \cdot \bigg(1+\frac{1-\pi(\theta^*)}{\pi(\theta^*)} \cdot\exp\big(-2n\lambda_h+2\log(\delta^{-1}|\Theta|)\big)\bigg)^{-1}.
\end{align*}
with probability at least $1-\delta$ with respect to the randomness of the \ac{cot} prompt. The first inequality follows from omitting terms corresponding to $\theta\neq \theta^*$. The second inequality is due to \eqref{eq:posterior_bd}. 
Now we use the fact that $(1+x)^{-1}\geq 1-x$ for $x\in [0,1]$ to obtain that 
\begin{align}
   p_h^* -  p_h
     \leq    p_h^* \cdot  \frac{1-\pi(\theta^*)}{\pi(\theta^*)}\cdot \exp\big(-2n\lambda_h+2\log(\delta^{-1}|\Theta|)\big)  .\label{eq:mode_bd2}
\end{align}
Therefore, we conclude that \( p_h^* - p_h \), the difference in the probabilities evaluated at \( z_h^* \) for \ac{llm}-induced distribution and the true distribution, decreases exponentially with the number of examples \( n \). 
Recall that we define $\lambda^* = \min_{h\in[H]}\lambda_h$. 
For any $\epsilon \in (0, 1)$, we define 
\begin{align}
    n^* = \frac{1}{\lambda^*}\bigg(2\log \big(H|\Theta| \big) + \log \big((1-\pi(\theta^*))/\pi(\theta^*)\big) +\log(1/\epsilon)\bigg). \label{eq:n*_tot}
\end{align}
Then by \revise{applying Theorem~\ref{thm: rate_cot} with}{taking} $\delta = e^{-n\lambda^*/2}$ and combining \eqref{eq:mode_bd2},  we conclude that with probability at least $1-  e^{-n\lambda^*/2}$, when $n\geq n^*$, 
$p_h \geq p_h^*\cdot (1-\epsilon)$ holds for 
 every $h\in [H]$. Thus we conclude \textbf{Step 1}.

\vspace{2mm}

\noindent \textbf{Step 2: Large \( K \) improves the selection of  \( z_h^* \).} 
For any $h\in [H]$, let $$ C_h = \sum_{b = 1}^B \sum_{i=1}^{K } \mathbf{1}{\{ (t_{h-1}^{b }, z_{h}^{b,i} )  = t_h^*\} } $$ 
denote the number of candidates in $\cT_h$ that match the optimal partial history \( t_h^* \) at step $h\in [H]$, where we define $\cT_h $ in \eqref{eq:define_candiates_tree_search}. Recall that we set \(B= 1 \). 
Therefore, each $C_h$, $h\in [H]$ is a binomial random variable, where 
    \begin{align*}
        C_1  \sim \mathtt{Bin}(K, p_1), \qquad 
        C_h  \sim \mathtt{Bin}(\min\{B=1,C_{h-1}\}K,p_h) \text{ for }2\leq h\leq H.
    \end{align*}
     The algorithm outputs $t_H^*$ if and only if $C_h\geq 1$ for all $h\in [H]$. 
    The following gives the probability of outputting the optimal trajectory from the search tree: 
    \begin{align}
        \PP\big(\hat t_H = t_H^* \given \pt_\mathrm{CoT}(n)\big)&= \PP\big(C_1,\cdots,C_H\geq 1 \given \pt_\mathrm{CoT}(n)\big) \nonumber \\
        & \geq  \prod_{h=1}^{H}\PP\big(C_{h}\geq 1\given \pt_h(n), t_{h-1}^*\big) \nonumber \\
        &=\prod_{h=1}^H \big(1-(1-p_h)^K\big) \nonumber \\
        &\geq 1-\sum_{h=1}^H\big(1-p_h\big)^K. \label{eq:k_bd}
    \end{align}
The last inequality is because $\prod_{i=1}^m(1-x_i)\geq 1-\sum_{i=1}^m x_i$ for $x_i\in [0,1]$.

Combining \eqref{eq:k_bd} and the conclusion of {\bf Step 1}, we conclude that with a perfectly pretrained \ac{llm} and a discrete and finite $\Theta$, \ac{tot} prompting using BFS with $B=1$ 
incurs the following statistical error with probability at least $1-e^{-n\lambda^*/2}$:
\begin{align*}
    \PP\big(\hat t_H \neq t_H^* \given \pt_\mathrm{CoT}(n)\big)&\leq \sum_{h=1}^H\Big(1-p_h^* + p_h^*\epsilon \Big)^{K}.
\end{align*} 
Here $n$ is sufficiently large such that $n\geq n^*$ where $n^*$ is defined  in \eqref{eq:n*_tot} and $\epsilon \in (0,1)$ is an arbitrary number.   
 Therefore, we conclude the proof.
\end{proof}

\subsection{Proof of Corollary \ref{cor: selection-inference cot}} \label{proof: selection-inference cot}
\begin{proof}
The main idea of selection-inference prompting is to break down each step in vanilla \ac{cot} into two separate stages: selection and inference. We decompose each task $\theta$ into two components: $\theta_\mathrm{se}$ and $\theta_\mathrm{in}$, with underlying distributions $\PP(\tau_h =\cdot \given t_{h-1}, \theta_\mathrm{se})$ and $\PP(z_h =\cdot\given \tau_h, \theta_\mathrm{in})$, respectively. We follow and modify the proof for vanilla \ac{cot} in Appendix~\ref{proof: rate_cot} to derive the statistical rate of \ac{si} prompting.

The proof consists of two steps. 
We first derive an upper bound for the KL divergence using a ratio of two integrals. Then we bound such a ratio using the separation among the probability distributions with different parameters in $\Theta$.  

\vspace{2mm}

\noindent \textbf{Step 1: Deriving an upper bound for the KL divergence.} We invoke Proposition~\ref{prop:elb} to derive an upper bound for the KL divergence as follows: 
\begin{align}
      & \KL \bigg(\PP(y^{\mathrm{test}}=\cdot \given z_0^{\mathrm{test}},  \theta^*)  ,   \PP_\mathrm{SI}\big(y^{\mathrm{test}}=\cdot  \given S_{\mathrm{se}}(n),S_{\mathrm{in}}(n),z_0^{\mathrm{test}}\big) \bigg) \nonumber \\
     &\quad \leq \log \bigg(1+\cfrac{\sum_{\Theta^\complement}\PP\big(S_{\mathrm{se}}(n),S_{\mathrm{in}}(n),z_0^{\mathrm{test}} \biggiven \theta \big) \pi(\theta) \text{d}\theta}{\sum_{\Theta_{\mathrm{eq}}(\theta^*)}\PP\big(S_{\mathrm{se}}(n),S_{\mathrm{in}}(n), z_0^{\mathrm{test}} \biggiven  \theta' \big) \pi(\theta') \text{d}\theta'}  \bigg) \notag \\
     &\quad \leq \log \bigg(1+\cfrac{\sum_{\Theta^\complement}\PP\big(S_{\mathrm{se}}(n),S_{\mathrm{in}}(n),z_0^{\mathrm{test}} \biggiven \theta \big) \pi(\theta) }{\PP\big(S_{\mathrm{se}}(n),S_{\mathrm{in}}(n), z_0^{\mathrm{test}} \biggiven  \theta^*\big) \pi(\theta^*) }\bigg).\label{eq: elb_si}
    \end{align}
The first inequality is obtained by applying Proposition~\ref{prop:elb} with substituting $\pt_\mathrm{CoT}(n)$ by $(S_{\mathrm{se}}(n),S_{\mathrm{in}}(n),z_0^{\mathrm{test}})$, and in the second inequality we exclude all terms corresponding to \(\theta \neq \theta^*\) in the denominator. We can directly apply Proposition~\ref{prop:elb} since this proposition only requires the prompt to be generated from $\PP(\cdot\given \theta^*)$, but does not assume a specific statistical dependency relationship in the prompt. 
Here $\PP_{\mathrm{SI} }(y^{\mathrm{test}} \given S_\mathrm{se}(n),S_\mathrm{in}(n), z_0^{\mathrm{test}})  $
        is the marginal distribution of $y^{\mathrm{test}} $ according to \eqref{eq:si_prompt_output}.
\vspace{2mm}

\noindent \textbf{Step 2: Statistical rate for discrete and finite $\Theta$.} The key to analyzing  \eqref{eq: elb_si} is to derive an upper bound of the likelihood ratio  $$\PP\big(S_{\mathrm{se}}(n),S_{\mathrm{in}}(n),z_0^{\mathrm{test}} \biggiven \theta \big)/\PP\big(S_{\mathrm{se}}(n),S_{\mathrm{in}}(n),z_0^{\mathrm{test}} \biggiven \theta^* \big).$$
Recall that we define $S_{\mathrm{se}}(n)$ and $S_{\mathrm{in}}(n)$ in \eqref{eq:define_si_examples}. 
 By construction, the likelihood of a single piece of trajectory can be decomposed as
\begin{align}
    &\PP\big(S_{\mathrm{se}}(1),S_{\mathrm{in}}(1) \biggiven \theta \big) \nonumber \\
    &\quad = \PP(z_0\given \theta)\prod_{h=1}^H\PP\big(\tau_h \biggiven \theta_\mathrm{se},  t_{h-1}\big) \cdot \prod_{j=1}^H\PP\big( z_j \biggiven \theta_\mathrm{in},\tau_j \big). \label{eq:si-decomp}
\end{align}
Note that the $n$ reasoning paths are independent conditioning on $\theta$. 
We decompose the likelihood ratios into a sum of independent terms according to \eqref{eq:si-decomp} and then apply Lemma~\ref{lemma: hellingerbd}. 
With probability at least $1-\delta$, we have 
\begin{align}
        &\frac{1}{2}\log \frac{\PP(S_{\mathrm{se}}(n),S_{\mathrm{in}}(n) \given \theta)}{\PP(S_{\mathrm{se}}(n),S_{\mathrm{in}}(n) \given \theta^*)} \nonumber \\
        &\quad =\frac{1}{2}\sum_{i=1}^n \bigg[\log\frac{\PP( z_0^{i}\given \theta)}{\PP( z_0^{i}\given \theta^*)}+\sum_{h=1}^H \log\frac{\PP\big(\tau_h^{i} \biggiven \theta_\mathrm{se}, t_{h-1}^{i} \big) }{\PP\big(\tau_h^{i} \biggiven \theta^*_{\mathrm{se}}, t_{h-1}^{i} \big) }+ \sum_{j=1}^H\log \frac{\PP\big( z_j^{i}  \biggiven \theta_\mathrm{in},\tau_j^{i}  \big)}{\PP\big( z_j^{i}  \biggiven \theta_\mathrm{in}^*,\tau_j^{i}  \big)}\bigg] \nonumber \\
        & \quad \leq \sum_{i=1}^n \log\bigg[\E_{\theta^*}\bigg(\frac{\PP(z_0^{i}\given \theta)}{\PP(z_0^{i}\given \theta^*)}\bigg)^{1/2}\bigg]+\sum_{h=1}^H \sum_{k=1}^n \log \bigg[\E_{\theta^*}\bigg(\frac{\PP(\tau_h^{k} \biggiven \theta_\mathrm{se}, t_{h-1}^{k})}{\PP(\tau_h^{k} \biggiven \theta^*_{\mathrm{se}}, t_{h-1}^{k})}\bigg)^{1/2} \bigg] \nonumber  \\
        &\quad \qquad + \sum_{j=1}^H \sum_{m=1}^n \log \bigg[\E_{\theta^*}\bigg(\frac{\PP\big( z_j^{m}  \biggiven \theta_\mathrm{in},\tau_j^{m}  \big)}{\PP\big( z_j^{m}  \biggiven \theta_\mathrm{in}^*,\tau_j^{m}  \big)}\bigg)^{1/2}\bigg]+ \log((2H+1)\delta^{-1}). \label{eq:si_bd_1}
        \end{align}
      Here the first equality follows from summing over the decomposition of likelihood ratios as shown in \eqref{eq:si-decomp}, and inequality follows is obtained by applying Lemma \ref{lemma: hellingerbd} to each sum. 
      Using \eqref{eq:si_bd_1} and the fact that $x-1 \geq \log(x)$, we further obtain that 
        \begin{align}
        &\frac{1}{2}\log \frac{\PP(S_{\mathrm{se}}(n),S_{\mathrm{in}}(n) \given \theta)}{\PP(S_{\mathrm{se}}(n),S_{\mathrm{in}}(n) \given \theta^*)}\nonumber \\
        & \quad \leq \sum_{i=1}^n \bigg[\E_{\theta^*}\bigg(\frac{\PP( z_0^{i}\given \theta)}{\PP( z_0^{i}\given \theta^*)}\bigg)^{1/2}-1+\sum_{h=1}^H \bigg(\E_{\theta^*}\frac{\PP(\tau_h^{i} \biggiven \theta_\mathrm{se},  t_{h-1}^{i})}{\PP(\tau_h^{i} \biggiven \theta^*_{\mathrm{se}},  t_{h-1}^{i})}\bigg)^{1/2} -1\bigg)\nonumber \\
        &\quad \qquad + \sum_{j=1}^H\bigg(\E_{\theta^*}\bigg(\frac{\PP\big( z_j^{i}  \biggiven \theta_\mathrm{in},\tau_j^{i}  \big)}{\PP\big( z_j^{i}  \biggiven \theta_\mathrm{in}^*,\tau_j^{i}  \big)}\bigg)^{1/2}-1\bigg)\bigg]+ \log\big((2H+1)\delta^{-1}\big) \nonumber \\
        & \quad = -n \sum_{h=1}^H \Big[\E_{\theta^*}\text{H}^2 \big(\PP(\tau_h \given \theta^*, t_{h-1}) , \PP(\tau_h\given \theta,t_{h-1}) \big) \nonumber \\
        &\quad \qquad +\E_{\theta^*}\text{H}^2 \big(\PP(z_h \given \theta^*,\tau_h) , \PP(z_h\given \theta,\tau_h) \big)\Big] -n\text{H}^2 \big(\PP(z_0 \given \theta^*) , \PP( z_0\given \theta) \big) \nonumber \\
        & \quad \qquad+\log\big((2H+1)\delta^{-1}\big). \label{eq:si_bd_2}
    \end{align}
    Here the final line follows from the definition of Hellinger distance.

    Now we apply Assumption~\ref{assumption: seperation2} to derive an upper bound of the likelihood ratio using the constant $\lambda_{\mathrm{SI}} = \lambda_{\mathrm{S}} + \lambda_{\mathrm{I}} + \lambda_{\mathrm{q}}$, replacing the constant $\lambda$ from Theorem~\ref{thm: rate_cot}. 
    Combing \eqref{eq:si_bd_2} and Assumption~\ref{assumption: seperation2}, we have that when ${\Theta}$ is discrete and finite,  
    \begin{align*}
        &\frac{\PP\big(S_{\mathrm{se}}(n),S_{\mathrm{in}}(n),z_0^{\mathrm{test}} \given \theta\big)}{\PP\big(S_{\mathrm{se}}(n),S_{\mathrm{in}}(n),z_0^{\mathrm{test}}\given \theta^* \big)}\\
        &\quad \leq \exp \big(-2n\lambda_{\mathrm{SI}}+2\log\big((2H+1)\delta^{-1}\big)\big) \text{ with probability at least $1-\delta$}. 
    \end{align*}
Therefore, we have that when ${\Theta^\complement}$ is discrete and finite, then with probability at least $1-\delta$,
    \begin{align*}
        &\KL \Big(\PP(y^\mathrm{test} = \cdot \given  z_0^{\mathrm{test}}, \theta^*)  ,   \PP_\mathrm{SI}\big(y^\mathrm{test} =\cdot \given S_{\mathrm{se}}(n),S_{\mathrm{in}}(n), z_0^{\mathrm{test}}\big) \Big)\\
        &\quad =\mathcal{O}\big(\pi(\theta^*)^{-1} \cdot \delta^{-2} \cdot | {\Theta^{\complement}}| ^2  \cdot e^{-2\lambda_{\mathrm{SI}} n} \big),
    \end{align*}
    where $\calO (\cdot) $ omits only absolute constants. Therefore, we conclude the proof.
\end{proof}

\newpage 
\section{Proofs  and Auxiliary Results of Section~\ref{subsubsec: comparison}}

\subsection{Proof of a Generalized Version of Proposition~\ref{prop: dom}} \label{subsec: gen_dom}

In the following, we generalize Proposition~\ref{prop: dom} to handle comparisons of different truncated \ac{cot} methods, which covers vanilla \ac{icl} as a special case. 
For simplicity, we assume
zero pretraining error and input query does not have a distributional shift, i.e., $\PP_\mathrm{LLM} = \PP$ and $z_0^\mathrm{test} \sim \PP(\cdot \given \theta^*)$. Before presenting the result, we state the following regularity assumption.

\begin{assumption}\label{assump: compare2}
For a fixed \ac{cot} prompt $\pt_\mathrm{CoT}(n)$, we define truncated \ac{cot} prompts with fixed intermediate step indices as  $\pt_\calJ (n)= \{z_0^i,y^i\}_{i=1}^n \cup \{z_{j}^{i}\}_{j\in \calJ}\cup\{ z_0^\mathrm{test}\}$ with fixed index set $\calJ\subseteq [H-1]$, where $\pt_\mathrm{CoT}(n) = \{z_{0}^i,\cdots, z_H^i\}_{i=1}^n \cup \{z_0^\mathrm{test}\} $. 
We assume that \(\PP\big(y^{\mathrm{test}} \given \mathtt{prompt}_{\calJ}(n)\big)\) is a mixture of \(\PP \big(y^{\mathrm{test}} \given \theta, z_0^\mathrm{test}\big)\) aggregated with respect to the posterior of \(\theta\) based on truncated \ac{cot} demonstrations. 
Specifically, the density \(\PP \big(y^{\mathrm{test}} \given \theta, z_0^\mathrm{test}\big)\) is obtained by marginalizing the omitted intermediate steps $z_{i\notin \calJ}$ from the joint distribution induced by the \ac{cot} model~\eqref{eq:latent_var_model}.
\end{assumption}

The truncated \ac{cot} method recovers \ac{cot} by setting \(\calJ = [H-1]\) and vanilla \ac{icl} by setting \(\calJ = \emptyset\). This assumption ensures that the estimators induced by different truncated \ac{cot} methods are comparable. This can be achieved by training \ac{llm}s using a truncated dataset and then prompted via the truncated CoT. 
Specifically, 
each \(\calJ\), we define \(\calD_\calJ\) as a truncated version of the \ac{cot} data \(\calD\) obtained by omitting intermediate steps with indices \(j \notin \calJ\). 
Then we can pretrain an LLM using \(\calD_\calJ\) using the same MLE loss as in \eqref{eq:pretraining_loss}, and then prompt the learned model using $\pt_\calJ (n)$. 
Assumption~\ref{assump: compare2} requires \ac{llm}s to process different truncated \ac{cot} prompts by making posterior inferences based on their respective pretraining data. 
This is the case when the LLM is pretrained using \(\calD_\calJ\), following a similar argument as in Section \ref{subsec: attention parameterizes bma}.  
Based on this assumption, we establish a hierarchy of CoT methods in terms of statistical error.


\begin{corollary}[Comparison of CoT Methods]\label{prop: dom2}
Let \(\pi\) represent the task distribution over space \(\Theta\). Let \(\pt_\mathrm{CoT}(n)\) be a \ac{cot} prompt. Given \(\calJ \subseteq \calJ' \subseteq [H-1]\), define \(\mathtt{prompt}_\calJ(n) = \{z_0^i,y^i\}_{i=1}^n \cup \{z_{j}^{i}\}_{j\in \calJ}\cup\{ z_0^\mathrm{test}\}\) and \(\mathtt{prompt}_{\calJ'}(n) = \{z_0^i,y^i\}_{i=1}^n \cup \{z_{j}^{i}\}_{j\in \calJ'}\cup\{ z_0^\mathrm{test}\}\), where $ \pt_\mathrm{CoT}(n) = \{z_{0}^i,\cdots, z_H^i\}_{i=1}^n \cup \{z_0^\mathrm{test}\}$. Under the Assumption~\ref{assump: compare2}, for any number of examples $n\geq 0$,  we have 
\begin{align*}
    &\E_{\theta^* \sim \pi} \E_{\mathtt{prompt}_{\calJ'}(n)\sim \PP(\cdot \given \theta^*)} \bigg[\KL \bigg(\PP(y^{\mathrm{test}}=\cdot \given z_0^{\mathrm{test}},\theta^*)  ,   \PP\big(y^{\mathrm{test}}=\cdot \given  \mathtt{prompt}_{\calJ'}(n)\big) \bigg) \bigg]\\
    & \quad \leq \E_{\theta^* \sim \pi}\E_{\mathtt{prompt}_{\calJ}(n)\sim \PP(\cdot \given \theta^*)}\bigg[\KL \bigg(\PP(y^{\mathrm{test}}=\cdot \given z_0^{\mathrm{test}},\theta^*)  ,   \PP\big(y^{\mathrm{test}}=\cdot \given \mathtt{prompt}_\calJ(n)\big) \bigg)\bigg].
\end{align*}
 We notice that such an inequality only holds in an average case by taking an expectation with respect to $\theta ^* \sim \pi$.    
\end{corollary}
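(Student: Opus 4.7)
}
The plan is to rewrite the expected KL divergence as a conditional entropy gap and then exploit the fact that $\pt_\calJ(n)$ is a deterministic sub-selection of $\pt_{\calJ'}(n)$ whenever $\calJ \subseteq \calJ'$. Concretely, define
\[
D(\calJ) \;=\; \E_{\theta^*\sim\pi}\,\E_{\pt_\calJ(n)\sim \PP(\cdot\given \theta^*)}\bigl[\KL\bigl(\PP(y^{\mathrm{test}}=\cdot\given z_0^{\mathrm{test}},\theta^*)\,,\,\PP(y^{\mathrm{test}}=\cdot\given \pt_\calJ(n))\bigr)\bigr].
\]
The goal is to prove that $D(\calJ')\leq D(\calJ)$, which then immediately yields the two endpoint comparisons (vanilla ICL versus CoT, and CoT versus any truncated intermediate version).

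First I would expand the inner KL into an expectation of log-ratios, using the conditional-independence structure of the latent variable model in \eqref{eq:latent_var_model}: given $\theta^*$ and $z_0^{\mathrm{test}}$, the test output $y^{\mathrm{test}}$ is independent of the $n$ demonstration examples. Since $z_0^{\mathrm{test}}\subseteq \pt_\calJ(n)$, the joint distribution over $(\theta^*,\pt_\calJ(n), y^{\mathrm{test}})$ factors nicely, and the expectation collapses to
\[
D(\calJ) \;=\; H\bigl(y^{\mathrm{test}}\biggiven \pt_\calJ(n)\bigr) \;-\; H\bigl(y^{\mathrm{test}}\biggiven z_0^{\mathrm{test}},\theta^*\bigr),
\]
where both entropies are with respect to the joint pretraining distribution. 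Crucially, the second term does not depend on $\calJ$, so
\[
D(\calJ) - D(\calJ') \;=\; H\bigl(y^{\mathrm{test}}\biggiven \pt_\calJ(n)\bigr) - H\bigl(y^{\mathrm{test}}\biggiven \pt_{\calJ'}(n)\bigr).
\]

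Second, because $\calJ\subseteq\calJ'$, the shorter prompt $\pt_\calJ(n)$ is obtained from $\pt_{\calJ'}(n)$ by deleting coordinates, hence $\pt_\calJ(n)=g(\pt_{\calJ'}(n))$ for a deterministic map $g$. The monotonicity principle ``conditioning reduces entropy'' then gives
\[
H\bigl(y^{\mathrm{test}}\biggiven \pt_{\calJ'}(n)\bigr) \;=\; H\bigl(y^{\mathrm{test}}\biggiven \pt_{\calJ'}(n),\,\pt_\calJ(n)\bigr) \;\leq\; H\bigl(y^{\mathrm{test}}\biggiven \pt_\calJ(n)\bigr),
\]
which proves $D(\calJ')\leq D(\calJ)$ and closes the argument.

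The main obstacle is justifying the entropy decomposition in the first step, and this is where Assumption~\ref{assump: compare2} does the essential work. We need that $\PP(y^{\mathrm{test}}\given \pt_\calJ(n))$ really is the Bayesian posterior predictive $\int \PP(y^{\mathrm{test}}\given z_0^{\mathrm{test}},\theta)\,\pi(\theta\given \pt_\calJ(n))\,\rmd\theta$, which is precisely what Assumption~\ref{assump: compare2} postulates (this is the analog of Lemma~\ref{lem: bma.cot} applied to the marginal obtained by integrating out the omitted intermediate steps). Without this BMA structure, the inner KL would not telescope into an entropy difference and one could not invoke the data-processing/conditioning inequality. I would therefore devote some care to spelling out why, under Assumption~\ref{assump: compare2}, the joint measure on $(\theta^*,\pt_\calJ(n),y^{\mathrm{test}})$ induced by pretraining matches the one used in the entropy identities, so that the monotonicity step is applied to the correct probability space. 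Everything else is standard information-theoretic bookkeeping.
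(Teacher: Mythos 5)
Your proof is correct and is essentially the paper's argument in information-theoretic clothing: the paper shows the gap $D(\calJ)-D(\calJ')$ equals $\E\big[\KL\big(\PP(y^{\mathrm{test}}=\cdot\given\pt_{\calJ'}(n)),\PP(y^{\mathrm{test}}=\cdot\given\pt_{\calJ}(n))\big)\big]\geq 0$, which is exactly your conditional-entropy difference $H(y^{\mathrm{test}}\given\pt_{\calJ}(n))-H(y^{\mathrm{test}}\given\pt_{\calJ'}(n))$, i.e., the conditional mutual information between $y^{\mathrm{test}}$ and the extra reasoning steps. Both routes hinge on the same point you correctly flag: Assumption~\ref{assump: compare2} identifies $\PP(y^{\mathrm{test}}=\cdot\given\pt_{\calJ}(n))$ with the true posterior predictive under the joint law of $(\theta^*,\pt_{\calJ}(n),y^{\mathrm{test}})$, which is what collapses the cross-entropy to a conditional entropy and licenses the ``conditioning reduces entropy'' step.
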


\subsection{Proof of Corollary \ref{prop: dom2}}  \label{proof: dom2}
\begin{proof}

To simplify the notation, let $\E_{\theta^*}$ denote $\E_{\mathtt{prompt}_{\mathrm{CoT}}(n)\sim \PP(\cdot \given \theta^*)}$. First, we compute the difference of KL divergences of two index sets $\cJ$ and $\cJ'$ on  a fixed task $\theta^*$:
\begin{align}
    &\E_{\theta^*} \bigg[\KL \big(\PP(y^{\mathrm{test}} = \cdot \given z_0^{\mathrm{test}},\theta^*), \PP\big(y^{\mathrm{test}}= \cdot \given \mathtt{prompt}_\calJ(n)\big)\big) \bigg] \nonumber \\
    &\qquad\qquad - \E_{\theta^*} \bigg[\KL \big(\PP(y^{\mathrm{test}} = \cdot\given z_0^{\mathrm{test}},\theta^*),\PP\big(y^{\mathrm{test}} = \cdot\given \mathtt{prompt}_{\calJ'}(n)\big)\big) \bigg] \nonumber \\
    &\quad = \E_{\theta^*} \E_{y^{\mathrm{test}} \sim \PP(\cdot \given z_0^{\mathrm{test}},\theta^*)} \bigg[\log \frac{\PP\big(y^{\mathrm{test}} \given \mathtt{prompt}_{\calJ'}(n)\big)}{\PP\big(y^{\mathrm{test}} \given \mathtt{prompt}_\calJ(n)\big)}\bigg]. \label{eq:comparison_1}
\end{align}

Next, we take expectation of \eqref{eq:comparison_1} with respect to $\theta^*\sim \pi$ to obtain 
\begin{align}
    &\E_{\pi} \E_{\theta^*} \E_{y^{\mathrm{test}} \sim \PP(\cdot \given z_0^{\mathrm{test}},\theta^*)} \bigg[\log \frac{\PP\big(y^{\mathrm{test}} \given \mathtt{prompt}_{\calJ'}(n)\big)}{\PP\big(y^{\mathrm{test}} \given \mathtt{prompt}_\calJ(n)\big)}\bigg] \nonumber \\
    &\quad = \int_{\calL\times\calL^*} \PP\big(\mathtt{prompt}_{\calJ'}(n)\big) \bigg[\log \frac{\PP\big(y^{\mathrm{test}} \given \mathtt{prompt}_{\calJ'}(n)\big)}{\PP\big(y^{\mathrm{test}} \given \mathtt{prompt}_\calJ(n)\big)}\bigg] \nonumber \\
    &\qquad \cdot \bigg(\int_{\Theta} \pi\big(\theta^* \given \mathtt{prompt}_{\calJ'}(n)\big) \PP(y^{\mathrm{test}} \given \mathtt{prompt}_{\calJ'}(n),\theta^*) \text{d}\theta^*\bigg) \text{d}y^{\mathrm{test}} \text{d}\mathtt{prompt}_{\calJ'}(n).\label{eq:comparison_2}
\end{align}
Applying the Bayes' rule, we have 
\begin{align}
    \label{eq:comparison_20}
    & \int_{\calL}\bigg(\int_{\Theta} \pi\big(\theta^* \given \mathtt{prompt}_{\calJ'}(n)\big) \cdot \PP(y^{\mathrm{test}} \given \mathtt{prompt}_{\calJ'}(n),\theta^*)  \cdot  \bigg[\log \frac{\PP\big(y^{\mathrm{test}} \given \mathtt{prompt}_{\calJ'}(n)\big)}{\PP\big(y^{\mathrm{test}} \given \mathtt{prompt}_\calJ(n)\big)}\bigg] \text{d}\theta^*\bigg) \text{d}y^{\mathrm{test}}   \notag \\
    & \qquad =  \E_{y^{\mathrm{test}} \sim \PP(\cdot \given \mathtt{prompt}_{\calJ'}(n))} \bigg[\log \frac{\PP\big(y^{\mathrm{test}} \given \mathtt{prompt}_{\calJ'}(n)\big)}{\PP\big(y^{\mathrm{test}} \given \mathtt{prompt}_\calJ(n)\big)}\bigg].
\end{align}
Thus, by \eqref{eq:comparison_20} and  interchanging the order of integration in \eqref{eq:comparison_2}, we have 
\begin{align}
   &  \E_{\pi} \E_{\theta^*} \E_{y^{\mathrm{test}} \sim \PP(\cdot \given z_0^{\mathrm{test}},\theta^*)} \bigg[\log \frac{\PP\big(y^{\mathrm{test}} \given \mathtt{prompt}_{\calJ'}(n)\big)}{\PP\big(y^{\mathrm{test}} \given \mathtt{prompt}_\calJ(n)\big)}\bigg] \notag \\
   &\qquad = \E_{\mathtt{prompt}_{\calJ'}(n) \sim \PP} \E_{y^{\mathrm{test}} \sim \PP(\cdot \given \mathtt{prompt}_{\calJ'}(n))} \bigg[\log \frac{\PP\big(y^{\mathrm{test}} \given \mathtt{prompt}_{\calJ'}(n)\big)}{\PP\big(y^{\mathrm{test}} \given \mathtt{prompt}_\calJ(n)\big)}\bigg] \nonumber\\
    &\qquad = \E_{\PP} \bigg[\KL \big(\PP\big(y^\mathrm{test} = \cdot \given \mathtt{prompt}_{\calJ'}(n)\big) , \PP\big(y^\mathrm{test} = \cdot \given \mathtt{prompt}_\calJ(n)\big)\bigg] \geq 0. \label{eq:kl_diff}
\end{align}
Therefore, the expectation of the difference between the two KL divergences in \eqref{eq:comparison_2} can be expressed as \eqref{eq:kl_diff}, another KL divergence and thus is nonnegative for any number of samples \( n \). 
Therefore, we conclude that on average, truncated \ac{cot} methods with steps $\calJ' \subseteq [H-1]$ is no worse than truncated \ac{cot} methods with fewer intermediate steps \(\calJ \subset \calJ'\). Since vanilla \ac{icl} corresponds to the special case where \( \calJ = \emptyset \), we conclude that on average, \ac{cot} is no worse than vanilla \ac{icl}. This completes the proof.
\end{proof}

\newpage

\section{Proof and Auxiliary Results of Section~\ref{subsec: imperfect}} \label{app: pretrain}
In this section, we prove the results in Section ~\ref{subsec: imperfect}. 
In particular, in Section \ref{proof: pre-train} we prove Proposition \ref{prop: pre-train}, in Section \ref{app: construction} we introduce the details of the approximation error analysis for pretraining, and in Section \ref{proof: rate with error} we prove Corollary \ref{cor: rate with error}.

\subsection{Proof of Proposition \ref{prop: pre-train}} \label{proof: pre-train}
In the following, we prove 
Proposition~\ref{prop: pre-train} under the generalized multi-step latent variable model introduced in Section \ref{app: generalized model}, which contains the model in \eqref{eq:latent_var_model} as a special case.

\begin{proof}
In this proof, we adopt the PAC-Bayes framework \citep{mcallester1998some, alquier2021user} to decompose the error and control each component. 
 This proof consists of two steps. We first decompose the pretraining error into three parts using the PAC-Bayes framework, and then control each term to conclude the proof.
 Our proof is adapted from \cite{zhang2023and}, which analyze the generalization error for pretraining an LLM based with ICL data, i.e., $H = 1$. 
 We explain the structure of the proof in detail and highlight the similarities and differences from \cite{zhang2023and}. 

\vspace{2mm}

\noindent \textbf{Step 1: Error decomposition using the PAC-Bayes framework.} We fist  decompose the pretraining error $\E_\calD \TV \big(\PP(\cdot \given S), \PP_{\hat \rho}(\cdot\given S)\big)^2$ to prepare for further analysis.
Recall that the pretraining dataset with $N$ trajectories with $T$ examples is $\mathcal{D}_{N,T} = \{(S_h^{t,\ell}, z_h^{t,\ell})\}_{h=0, t=1, \ell=1}^{H, T, N}$, where $z_h^{t,\ell} \sim \PP(\cdot \given S_h^{t,\ell})$, and $S_{h+1}^{t,\ell} =  S_h^{t,\ell} \cup \{z_h^{t,\ell}\} $. 
Under the construction of $\cD_{N, T}$ under the general model in \eqref{eq:generalized_latent_var_model}, 
the training data admits a sequential structure. 
For any $h \in \{0, \ldots, H\}$, $t \in [T]$ and $\ell \in [N]$, we let 
$\cF_{h}^{t, \ell}$ denote a  $\sigma$-algebra defined as   \begin{align*}
   \cF_{h}^{t, \ell} = \mathtt{\sigma}\texttt{-algebra}\Big(\bigl\{ z_{h'}^{t', \ell'} \colon \ell < \ell, \mathrm{or~} \ell' = \ell, t' < t, \mathrm{or~}\ell' = \ell, t' = t, h' < h\bigr\}\Big),
\end{align*}
which is the $\sigma$-algebra generated by all the random variables in $\cD_{N,T}$ appearing before $z_h^{t, \ell} $.
Moroever, we  define a sequence of ghost samples as $\Tilde{\mathcal{D}}_{N,T} = \{(\Tilde S_h^{t,\ell}, \Tilde z_h^{t,\ell})\}_{h=0, t=1, \ell=1}^{H, T, N}$, where $\Tilde z_h^{t,\ell} \sim \PP(\cdot \given \Tilde S_h^{t,\ell})$, and $ \Tilde S_h^{t,\ell}= S_h^{t,\ell}$. 
Here $\tilde z_h^{t, \ell}$ is independent of $z_h^{t, \ell} $ and all random variables in $\cD_{N, T}$ generated later than $S_h^{t,\ell}$.
In the following, we use this ghost sample and Donsker-Varadhan representation to decompose the error.

Donsker-Varadhan representation \citep{mackay2003information} states that, for any distribution $P, Q \in \calP_{\mathrm{LLM}}$ and for any function $g: \calP_{\mathrm{LLM}} \rightarrow \mathbb{R}$ such that $\E_{\rho\sim Q}[\exp\big(g(\rho)\big)]<\infty$, we have 
\begin{align}\label{eq: dv}
    \E_{\rho \sim P}[g(\rho)] \leq \KL(P , Q) + \log\E_{\rho\sim Q}[\exp \big(g(\rho) \big)].
\end{align}
To proceed, we choose $Q\in\calP_{\mathrm{LLM}}$ independent of both the dataset $\calD_{N,T}$ and ghost dataset $\tilde \calD_{N,T}$ and $P\in\calP_{\mathrm{LLM}}$ to be potentially dependent on the dataset $\calD_{N,T}$ but independent of the newly sampled $\{\Tilde z_h^{t,\ell}\}$ in the ghost dataset.
To simplify the notation, we omit the subscripts and write the datasets as $\cD$ and $\tilde \cD$ respectively. 
In the sequel, we use $\EE_{\calD}$ or $\EE_{\tilde \calD}$ to denote the expectation with respect to the joint distribution of $\cD$ and $\tilde \cD$, respectively.
We set the function $g$ as $g(\rho) = L(\rho,\calD)-\log \bbE_{\tilde{\calD}}[\exp(L(\rho,\tilde{\calD})) \given \cD]$, where 
    \begin{align}\label{eq:define_L_rho_D}
        L(\rho,\tilde \calD)=-\frac{1}{4}\sum_{\ell=1}^{N}\sum_{t=1}^T\sum_{h=0}^{H}\log\frac{\PP(\tilz_h^{t,\ell}\given S_h^{t,\ell})}{\PP_{\rho}(\tilz_h^{t,\ell}\given S_h^{t,\ell})}.
    \end{align}
Moreover, for any $\ell \in [N]$, $t\in [T]$, and $ h \in \{ 0, \ldots , H\}$, we let 
$L^{\ell,t, h} (\rho, \tilde D)$ denote the partial sum of $L(\rho, \tilde D)$ with the last term being $ - 1/ 4 \cdot (\log \PP(\tilz_h^{t,\ell}\given S_h^{t,\ell})  - \log \PP_{\rho}(\tilz_h^{t,\ell}\given S_h^{t,\ell})  ) $.
    We note that $L(\rho, \tilde D)$ itself is a random variable where the randomness stems from both $\rho$ and $\tilde D$.

    Exponentiating both sides of \eqref{eq: dv} and taking expectations with respect to $\calD$ on both sides, we have  
    \begin{align*}
        & \E_{\calD}\Big[\exp\Big(\E_{\rho\sim P}\big[L(\rho,\calD)-\log \bbE_{\tilde{\calD}}[\exp(L(\rho,\tilde{\calD}))]\big]-\KL(P, Q)\Big)\Big] \notag \\  &\quad \leq \E_{\cD}\E_{\rho\sim Q}[\exp \big(L(\rho,\calD)-\log \bbE_{\tilde{\calD}}[\exp(L(\rho,\tilde{\calD})) \given \cD ] \big)] \\
        &\quad = \E_{\rho\sim Q}\E_{\cD}[\exp \big(L(\rho,\calD)-\log \bbE_{\tilde{\calD}}[\exp(L(\rho,\tilde{\calD}))]  \given \cD \big)]  = \E_{\rho\sim Q}\E_{\cD}\bigg[\frac{\exp(L(\rho,\calD))}{ \bbE_{\tilde{\calD}}[\exp(L(\rho,\tilde{\calD}))\given \cD]}   \bigg] ,
    \end{align*}
where in the first equality we exchange the order of expectations due to the independence between the dataset $\calD$ and the prior $Q$. 
By the construction of $L(\rho, \tilde \cD)$   in \eqref{eq:define_L_rho_D}, we have 
$$
\E_{\tilde{\calD}}[\exp(L(\rho,\tilde{\calD}))\given \cD] = \prod_{\ell=1}^N \prod _{t=1}^T \prod_{h=0}^H \E_{\tilde z_{h}^{t , \ell }} \Big[  \exp\Big ( -1/4 \cdot \big ( \log \PP(\tilde z_{h}^{t, \ell} \given S_{h}^{ t, \ell} ) - \log \PP_{\rho} (\tilde z_{h}^{t, \ell} \given S_{h}^{ t, \ell} )\big)\Big)\Big]. 
$$
Notice that this is a random variable that is measurable under $\cF_{N,T, H}$. 
Moreover, conditioning on $\cF_{N,T, H}$, 
we have 
\begin{align*}
& \E_{\cD} \bigl [ \exp(L(\rho,\calD))\biggiven \cF_{N, T, H} \big] \notag \\
& \quad =    \exp \bigl ( L^{N, T, H-1} (\rho, \cD) \bigr) \cdot    \E_{  z_{H}^{T , N }}  \Big[  \exp\Big ( -1/4 \cdot \big ( \log \PP(z_{H}^{T , N } \given S_{H}^{ T, N} ) - \log \PP_{\rho} (z_{H}^{T , N } \given S_{H}^{ T, N} )\big)\Big)\Big].
\end{align*}
Since $\tilde z_{H}^{T , N }$ and $z_{H}^{T , N }$ have the same conditional distribution, we obtain that 
\begin{align*} 
\E_{\cD}\bigg[\frac{\exp(L(\rho,\calD))}{ \bbE_{\tilde{\calD}}[\exp(L(\rho,\tilde{\calD}))\given \cD]}   \bigggiven \cF_{N, T, H} \bigg]   =  \frac{\exp \bigl( L^{N, T, H-1} (\rho, \cD)  \big) }{ \E_{\tilde \cD} \bigl [ \exp( L^{N, T, H-1} (\rho, \tilde \cD)\given \cD ]   }     .   
\end{align*}
Then, using the tower property, we  similarly have 
\begin{align}\label{eq:recursion_expectation}
& \E_{\cD} \biggl [ \E_{\cD}\bigg[\frac{\exp(L(\rho,\calD))}{ \bbE_{\tilde{\calD}}[\exp(L(\rho,\tilde{\calD}))\given \cD]}   \bigggiven \cF_{N, T, H} \bigg] \bigggiven \cF_{ N, T, H-1} \biggr ]    \notag \\
& \qquad = \EE _{\cD} \bigg[ \frac{\exp \bigl( L^{N, T, H-1} (\rho, \cD)  \big) }{ \E_{\tilde \cD} \bigl [ \exp( L^{N, T, H-1} (\rho, \tilde \cD)\given \cD ]   }  \bigggiven \cF_{ N, T, H-1} \biggr ]  =  \frac{\exp \bigl( L^{N, T, H-2} (\rho, \cD)  \big) }{ \E_{\tilde \cD} \bigl [ \exp( L^{N, T, H-2} (\rho, \tilde \cD)\given \cD ]   }. 
\end{align} 
Recursively apply conditional expectations to \eqref{eq:recursion_expectation} with respect to  the filtration $\{\cF_{\ell, t, h} \}$, we obtain that  
$$
\E_{\cD}\Big[ \exp(L(\rho,\calD)) \big /  \bbE_{\tilde{\calD}}[\exp(L(\rho,\tilde{\calD}))\given \cD]    \Big] =  1 .
$$
Therefore, we have 
    \begin{align*}
        \E_{\calD}\Big[\exp\Big(\E_{\rho\sim P}\big[L(\rho,\calD)-\log \bbE_{\tilde{\calD}}[\exp(L(\rho,\tilde{\calD}))]\big]-\KL(P, Q)\Big) \Big ] &\leq 1.
    \end{align*}
    Applying the Chernoff bound to it, we obtain a high probability bound as follows. With probability at least $1-\delta$, we have 
\begin{align} \label{eq: ch bd}
    -\E_{\rho\sim P}\Big[\log\E_{\tilde{\calD}}\big[ \exp\big( L(\rho,\tilde{\calD})\big)\big]\Big]\leq -\E_{\rho\sim P}\big[L(\rho,\calD)\big]+ \KL(P, Q)+\log\frac{1}{\delta}.
\end{align}

Now we separately bound the left-hand side and the right-hand side of \eqref{eq: ch bd}. 
Similar to the derivation in \cite{zhang2023and}, 
for the left-hand side of \eqref{eq: ch bd}, 
using the definition of $L(\rho, \tilde \cD)$ and Cauchy-Schwarz inequality, 
we have 
\begin{align} \label{eq:combine_ineq}
    &\log\bbE_{\tilde{\calD}}\big[ \exp\big( L(\rho,\tilde{\calD})\big)  \given \calD\big]  \\
    & \quad = \log\bbE_{\tilde{\calD}}\bigg[\exp\bigg(-\frac{1}{4}\sum_{\ell=1}^{N}\sum_{t=1}^T\sum_{h=0}^{H}\log\frac{\PP(\tilz_h^{t,\ell}\given S_h^{t,\ell})}{\PP_{\rho}(\tilz_h^{t,\ell}\given S_h^{t,\ell})}\bigg) \Biggiven \calD\bigg] \nonumber\\
    & \quad = \log\bbE_{\tilde{\calD}}\bigg[\exp\bigg(-\frac{1}{4}\sum_{\ell=1}^{N}\sum_{t=1}^T\sum_{h=0}^{H}\Bigl(\log\frac{\PP(\tilz_h^{t,\ell}\given S_h^{t,\ell})}{\PP_{\hat \rho}(\tilz_h^{t,\ell}\given S_h^{t,\ell})} + \log\frac{\PP_{\hat \rho}(\tilz_h^{t,\ell}\given S_h^{t,\ell})}{\PP_{\rho}(\tilz_h^{t,\ell}\given S_h^{t,\ell})}\Bigr)\bigg)  \bigggiven \calD \bigg] \nonumber \\
    & \quad \leq \frac{1}{2}\log\bbE_{\tilde{\calD}}\bigg[\exp\bigg(-\frac{1}{2}\sum_{\ell=1}^{N}\sum_{t=1}^T\sum_{h=0}^{H}\log\frac{\PP(\tilz_h^{t,\ell}\given S_h^{t,\ell})}{\PP_{\hat \rho}(\tilz_h^{t,\ell}\given S_h^{t,\ell})}\bigg)  \Biggiven \calD \bigg] \nonumber \\
    &\qquad \qquad \qquad 
 + \frac{1}{2}\log\bbE_{\tilde{\calD}}\bigg[\exp\bigg(-\frac{1}{2}\sum_{\ell=1}^{N}\sum_{t=1}^T\sum_{h=0}^{H}\log\frac{\PP_{\hat \rho}(\tilz_h^{t,\ell}\given S_h^{t,\ell})}{\PP_{\rho}(\tilz_h^{t,\ell}\given S_h^{t,\ell})}\bigg)  \biggiven \calD \bigg],   \notag 
\end{align}
where the last inequality follows from Cauchy-Schwarz inequality. 
To see this, note that for two random variables $X$ and $Y$,
Cauchy-Schwarz inequality implies that 
$$
\E [ \exp((X+ Y)/2) ] \leq \sqrt{ \E [ \exp(X) ] \cdot \E [ \exp(Y)]}. 
$$
According to the definition of Hellinger distance, we have $1-H^2(Q_1, Q_2) = \int_x \sqrt{q_1(x)q_2(x)} \text{d}x = \E_{x\sim Q_2} \sqrt{q_1(x)/q_2(x)}$.
Therefore, we can rewrite the first term on the right-hand side of \eqref{eq:combine_ineq} using Hellinger distance as follows,
\begin{align}
    &\frac{1}{2}\log\bbE_{\tilde{\calD}}\bigg[\exp\bigg(-\frac{1}{2}\sum_{\ell=1}^{N}\sum_{t=1}^T\sum_{h=0}^{H}\log\frac{\PP(\tilz_h^{t,\ell}\given S_h^{t,\ell})}{\PP_{\hat \rho}(\tilz_h^{t,\ell}\given S_h^{t,\ell})}\bigg) \bigggiven \calD \bigg] \nonumber \\
    &\quad = \frac 12 \log\bbE_{\tilde{\calD}}\left[
        \prod_{\ell=1}^{N}\prod_{t=1}^T\prod_{h=0}^{H}\sqrt{\frac{\PP_{\hat \rho}(\tilz_h^{t,\ell}\given S_h^{t,\ell})}{\PP(\tilz_h^{t,\ell}\given S_h^{t,\ell})}}  \bigggiven \calD
    \right], \nonumber\\
    &\quad = \frac 12   \sum_{\ell=1}^{N}\sum_{t=1}^T\sum_{h=0}^{H} \log\left( 1 -  \text{H}^2(\PP_{\hat \rho}(\cdot\given S_h^{t,\ell}), \PP(\cdot\given S_h^{t,\ell}))\right).\label{eq:hellinger}
\end{align}

Due to the fact that $\log(1-x)\leq -x$ for $x\in [0,1)$, we further upper bound \eqref{eq:hellinger} as follows,
\begin{align}
\textrm{RHS of}~\eqref{eq:hellinger}
    &\leq  -\frac 12   \sum_{\ell=1}^{N}\sum_{t=1}^T\sum_{h=0}^{H} \text{H}^2(\PP_{\hat \rho}(\cdot\given S_h^{t,\ell}), \PP(\cdot\given S_h^{t,\ell})) \nonumber \\
    &\leq -\frac{1}{4}\sum_{\ell=1}^{N}\sum_{t=1}^{T}\sum_{h=0}^H \text{TV}^2 \big(\PP(\cdot\given S_h^{t,\ell}),\PP_{\hat \rho}(\cdot\given S_h^{t,\ell})\big), \label{eq:tv_bd}
\end{align}
where the second line follows from the fact that $2\text{H}^2(Q_1, Q_2) \geq \TV^2(Q_1, Q_2)$. Applying \eqref{eq:tv_bd} to the left-hand side of \eqref{eq: ch bd}, we thus have
\begin{align}\label{eq:lhs}
    -\E_{\rho\sim P}\Big[\log\bbE_{\tilde{\calD}}\bigl[ \exp\big( L(\rho,\tilde{\calD})\big) \given \calD\bigr]\Big]
    & \geq \frac{1}{4}\sum_{\ell=1}^{N}\sum_{t=1}^{T}\sum_{h=0}^H\text{TV}^2 \big(\PP(\cdot\given S_h^{t,\ell}),\PP_{\hat \rho}(\cdot\given S_h^{t,\ell})\big)\\
    &\hspace{-3cm} -\frac{1}{2}\bbE_{\rho\sim P}\bigg[\log\bbE_{\tilde{\calD}}\bigg[ \exp\bigg(-\frac{1}{2}\sum_{\ell=1}^{N}\sum_{t=1}^{T}\sum_{h=0}^H\log\frac{\PP_{\hat \rho}(\tilz_h^{t,\ell}\given S_h^{t,\ell})}{\PP_{\rho}(\tilz_h^{t,\ell}\given S_h^{t,\ell})}\bigg)\,\bigg|\,\calD\bigg]\bigg]. \nonumber
\end{align}
Next, we upper bound   the right-hand side of \eqref{eq: ch bd}. For any $\rho' \in \calP_{\mathrm{LLM}}$,  we have 
\begin{align}\label{eq:rhs}
    &-\E_{\rho\sim P}\big[L(\rho,\calD)\big]+\KL(P , Q) +\log(\frac{1}{\delta})  \\
    & \quad = \frac{1}{4}\sum_{\ell=1}^{N}\sum_{t=1}^{T}\sum_{h=0}^H\log\frac{\PP(z_h^{t,\ell}\given S_h^{t,\ell})}{\PP_{\rho'}(z_h^{t,\ell}\given S_h^{t,\ell})}+\frac{1}{4}\sum_{\ell=1}^{N}\sum_{t=1}^{T}\sum_{h=0}^H\E_{\rho\sim P}\bigg[\log\frac{\PP_{\rho'}(z_h^{t,\ell}\given S_h^{t,\ell})}{\PP_{\rho}(z_h^{t,\ell}\given S_h^{t,\ell})}\bigg]+\KL(P , Q) +\log(\frac{1}{\delta})\nonumber\\
    & \quad \leq \frac{1}{4}\sum_{\ell=1}^{N}\sum_{t=1}^{T}\sum_{h=0}^H\log\frac{\PP(z_h^{t,\ell}\given S_h^{t,\ell})}{\PP_{\rho'}(z_h^{t,\ell}\given S_h^{t,\ell})}+\frac{1}{4}\sum_{\ell=1}^{N}\sum_{t=1}^{T}\sum_{h=0}^H\E_{\rho\sim P}\bigg[\log\frac{\PP_{\hat \rho}(z_h^{t,\ell}\given S_h^{t,\ell})}{\PP_{\rho}(z_h^{t,\ell}\given S_h^{t,\ell})}\bigg]+\KL(P , Q) +\log(\frac{1}{\delta}), \nonumber
\end{align}
where the last inequality holds by noting that $\hat\rho$ maximizes the likelihood function.

We next choose $\rho'$ as the projection of $\PP$ (in terms of the KL divergence) onto the space of all parameterized learnable models $\{\PP_\rho \given \rho \in \calP_{\mathrm{LLM}}\}$, i.e.,
\begin{align*}
    \rho' = \argmin_{\rho^{*}\in \calP_{\mathrm{LLM}}}{\E_{ S \sim\mathcal{D}}\KL\big(\PP(\cdot| S )\|\PP_{\rho^{*}}(\cdot| S )\big)}.
\end{align*}
Combining  inequalities \eqref{eq:lhs} and \eqref{eq:rhs}, we thus upper bound the desired pretraining error as a sum of a few terms as follows 
\begin{align} \label{eq: decomp}
    &  \frac{1}{4}\sum_{\ell=1}^{N}\sum_{t=1}^{T}\sum_{h=0}^H{\TV}^2 \big(\PP(\cdot\given S_h^{t,\ell}),\PP_{\hat \rho}(\cdot\given S_h^{t,\ell})\big) \nonumber\\
        &\quad\leq \underbrace{\frac{1}{2}\bbE_{\rho\sim P}\bigg[\log\bbE_{\tilde{\calD}}\bigg[ \exp\bigg(-\frac{1}{2}\sum_{\ell=1}^{N}\sum_{t=1}^{T}\sum_{h=0}^H\log\frac{\PP_{\hat \rho}(z_h^{t,\ell}\given S_h^{t,\ell})}{\PP_{\rho}(z_h^{t,\ell}\given S_h^{t,\ell})}\bigg)\,\bigg|\,\calD\bigg]\bigg]}_{\displaystyle(\rm I.i)}  \\
        &\quad \qquad +\underbrace{\frac{1}{4}\sum_{\ell=1}^{N}\sum_{t=1}^{T}\sum_{h=0}^H\bbE_{\rho\sim P}\bigg[\log\frac{\PP_{\hat \rho}(z_h^{t,\ell}\given S_h^{t,\ell})}{\PP_{\rho}(z_h^{t,\ell}\given S_h^{t,\ell})}\bigg]}_{\displaystyle (\rm I.ii)} \notag \\
        & \quad \qquad +\underbrace{\frac{1}{4}\sum_{\ell=1}^{N}\sum_{t=1}^{T}\sum_{h=0}^H\log\frac{\PP(z_h^{t,\ell}\given S_h^{t,\ell})}{\PP_{\rho'}(z_h^{t,\ell}\given S_h^{t,\ell})}}_{\displaystyle (\rm II)}+ \underbrace{\KL(P , Q)}_{\displaystyle (\rm III)}+\log\frac{1}{\delta}. \notag 
\end{align}
Here the 
The first two errors (I.i) and (I.ii) represent  the fluctuation error due to the randomness of \(\rho \sim P\),   (II) is the approximation error that characterizes the discrepancy between the true distribution \(\PP\) and its best approximator \(\PP_{\rho'}\), and   (III) is the KL divergence between \(P\) and \(Q\). 
Note that the left-hand side of \eqref{eq: decomp} can be written as 
$$
N   T \cdot (H+1) /4 \cdot \EE_{S\sim \cD} \bigl[ {\TV}^2 \big(\PP(\cdot\given S ),\PP_{\hat \rho}(\cdot\given S )\big) \bigr]. 
$$
With the error decomposition in \eqref{eq: decomp}, we conclude 
\textbf{Step 1}. In the following, we will specify distributions $ P$  and $Q$.

\vspace{2mm}

\noindent \textbf{Step 2: Control each term in the decomposition of pretraining error.} In this step, we control each term in the error decomposition~\eqref{eq: decomp}.

Our first step is to control the fluctuation errors (\rm I.i) and (\rm I.ii), which describe the log density ratio between \(\PP_{\hat{\rho}}\) and \(\PP_{\rho}\). The errors (\rm I.i) and (\rm I.ii) are small when \(\rho\) is close to \(\hat{\rho}\). Therefore we control these two terms by setting the support of \(P\) to be a neighborhood around \(\hat{\rho}\). Specifically, for each weight matrix and residual link specified by $$\hat \rho = \bigg(W_\mathtt{softmax}, \{W_{\mathrm{ff},1}^d,W_{\mathrm{ff},2}^d, \{W_i^{Q,d},W_i^{K,d},W_i^{V,d}\}_{i=1}^\eta, \gamma_1^d, \gamma_2^d\}_{d=1}^D\bigg),$$ 
we construct a ball with a radius shrinking at rate $1/(NT(H+1))$. More specifically, we define
\begin{align}\label{eq:define_distribution_P} 
P = \calB_S\cdot \prod_{d=1}^D \calB_M(d)\cdot \calB_F(d)\cdot \calB_R(d),
\end{align} 
where we define the balls around each weight matrix in each layer $d$ as
\begin{align}
\calB_{S} & = \mathtt{Unif}\big(B(W_{\mathrm{softmax}}, r_{S},\|\cdot\|_{1,2})\big), \nonumber\\
\calB_{R}(d) & = \mathtt{Unif}\big( B(\gamma_1^d, r_{\gamma,1}^{(d)},|\cdot|)\big)\cdot \mathtt{Unif}\big( B(\gamma_2^d, r_{\gamma,2}^{(d)},|\cdot|)\big), \nonumber\\
\calB_{F}(d) &= \mathtt{Unif}\big(B(W_{\mathrm{ff},1}^{d}, r_{F,1}^{(d)},\|\cdot\|_\mathrm{F})\big) \cdot \mathtt{Unif}\big(B(W_{\mathrm{ff},2}^{d}, r_{F,2}^{(d)},\|\cdot\|_\mathrm{F})\big), \nonumber\\
    \calB_{M}(d) &= \prod_{i=1}^{\eta}\mathtt{Unif}\big(B(W_{i}^{Q,d}, r_V^{(d)},\|\cdot\|_\mathrm{F})\big) \cdot \mathtt{Unif}\big(B(W_{i}^{Q,d}, r_Q^{(d)},\|\cdot\|_\mathrm{F})\big)\cdot \mathtt{Unif}\big(B(W_{i}^{Q,d}, r_K^{(d)},\|\cdot\|_\mathrm{F})\big). \nonumber 
\end{align}
The ball around center $x$ with radius $r$ is defined as $B(x, r,\|\cdot\|) = \{y\given \|x-y\|\leq r\}$. And $\mathtt{Unif}(\calS)$ denotes uniform distribution over the set $\calS$. Finally, we specify the radius as follows,
\begin{align*}
    r_{K}^{(d)} = r_{Q}^{(d)} &= R^{-1}\eta^{-1}(1+B_F^2)^{-1}B_M^{-2}\alpha_d^{-1}/(NT(H+1)),\\
    r_{F,1}^{(d)} = r_{F,2}^{(d)}&=R^{-1}B_F^{-1}\alpha_d^{-1}/(NT(H+1)),\\
    r_{V}^{(d)} &= R^{-1}\eta^{-1}(1+B_F^2)^{-1}\alpha_d^{-1}/(NT(H+1)),\\
    r_{\gamma,1 }^{(d)}& = R^{-1}(1+B_F^2)^{-1}\alpha_d^{-1}/(NT(H+1)),\\
    r_{\gamma,2 }^{(d)} &=R^{-1}\alpha_d^{-1}/(NT(H+1)),\\
    r_S&=\tau B_s^{-1}/(NT(H+1)),\\
    \text{where } \alpha_d &= \frac{2}{\tau}B_S(1+B_F^2)\big(1+\eta B_M(1+4B_M^2)\big)^{D-d}.
\end{align*}

Under this assignment of $P$, we can control  (\rm I.i) and (\rm I.ii) by invoking the following lemma from \cite{zhang2023and}.

\begin{lemma}\label{lemma: error 1}
We set the distribution $P$ to be \eqref{eq:define_distribution_P}, which is a uniform distribution over a  neighborhood around $\hat \rho$ with radius proportional to
$1/NT(H+1)$. Under Assumptions~\ref{assumption: bd magnitude} and \ref{assumption: lower bd}, we have
    \begin{align*}
        &\frac{1}{2}\bbE_{\rho\sim P}\bigg[\log\bbE_{\tilde{\calD}}\bigg[ \exp\bigg(-\frac{1}{2}\sum_{\ell=1}^{N}\sum_{t=1}^{T}\sum_{h=0}^H\log\frac{\PP_{\hat \rho}(z_h^{t,\ell}\given S_h^{t,\ell})}{\PP_{\rho}(z_h^{t,\ell}\given S_h^{t,\ell})}\bigg)\,\bigg|\,\calD\bigg]\bigg]\\
        &\quad +\frac{1}{4}\sum_{\ell=1}^{N}\sum_{t=1}^{T}\sum_{h=0}^H\bbE_{\rho\sim P}\bigg[\log\frac{\PP_{\hat \rho}(z_h^{t,\ell}\given S_h^{t,\ell})}{\PP_{\rho}(z_h^{t,\ell}\given S_h^{t,\ell})}\bigg]=\mathcal{O}(1).
    \end{align*}
\end{lemma}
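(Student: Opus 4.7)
The plan is to exploit the fact that the support of $P$ is a ball of radius $\Theta\bigl(1/(NT(H+1))\bigr)$ around $\hat\rho$, together with a layerwise Lipschitz estimate of the log-density $\log \PP_\rho(z\given S)$ in the transformer parameters $\rho$. Concretely, I would first show a uniform bound of the form
\begin{align*}
\sup_{\rho\in\mathrm{supp}(P)}\;\sup_{z\in\cL,\,S\in\cL^*}\,\Bigl|\log \PP_{\hat\rho}(z\given S) - \log \PP_\rho(z\given S)\Bigr| \;\leq\; \frac{C}{NT(H+1)},
\end{align*}
for an absolute constant $C$. Given such a bound, term (\rm I.ii) is controlled by summing $NT(H+1)$ copies of an $O(1/(NT(H+1)))$ quantity, giving $O(1)$ directly, while term (\rm I.i) is handled by noting that the argument of the outer $\log\E$ is bounded, so $\log\E[\exp(\cdot)] \leq \sup(\cdot) = O(1)$.

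The main work is therefore the Lipschitz estimate. My proposal is to propagate perturbations layer by layer through the transformer architecture described in Section~\ref{subsubsec: pretraining process}. Fix a layer index $d$, and perturb exactly one of $W_i^{Q,d},W_i^{K,d},W_i^{V,d},W_{\mathrm{ff},1}^d,W_{\mathrm{ff},2}^d,\gamma_1^d,\gamma_2^d$ within its prescribed radius. Using the operator-norm bounds in $\cP_{\mathrm{LLM}}$ (i.e.\ $B_M$ for attention weights, $B_F$ for FF weights, $\|\gamma\|_\infty\leq 1$), together with Assumption~\ref{assumption: bd magnitude} ($\|z\|_2\leq R$), standard estimates for softmax attention, ReLU feed-forward layers, and layer normalization yield a one-step Lipschitz constant of order $R\eta B_M(1+4B_M^2)(1+B_F^2)$. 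Composing over the remaining $D-d$ blocks multiplies this by $(1+\eta B_M(1+4B_M^2))^{D-d}(1+B_F^2)$, and the final softmax layer with temperature $\tau$ and weight norm bound $B_S$ contributes the factor $B_S/\tau$; Assumption~\ref{assumption: lower bd} (which forces $\PP_\rho(z\given S) \geq $ some positive constant via the softmax lower bound $|\cL|\exp(-B_S/\tau)$ when combined with the parameter constraint) converts a bound on probabilities into a bound on log-probabilities. This is exactly the reason the radii $r_K^{(d)},r_Q^{(d)},\ldots$ are rescaled by $\alpha_d^{-1} = \tfrac{\tau}{2B_S(1+B_F^2)}(1+\eta B_M(1+4B_M^2))^{-(D-d)}$: each radius is chosen precisely so that a perturbation in that weight changes $\log\PP_\rho(z\given S)$ by at most $1/\bigl(NT(H+1)\bigr)$ times a small absolute constant.

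Once the per-parameter Lipschitz contribution is established, a triangle-inequality argument over all weight matrices across the $D$ blocks yields the displayed uniform bound above, with an absolute constant $C$ (independent of $N,T,H,D$, and the architecture parameters, since those have already been absorbed into the radii). Plugging this into (\rm I.ii) gives
\begin{align*}
\frac14\sum_{\ell,t,h}\E_{\rho\sim P}\Bigl[\log\tfrac{\PP_{\hat\rho}(z_h^{t,\ell}\given S_h^{t,\ell})}{\PP_\rho(z_h^{t,\ell}\given S_h^{t,\ell})}\Bigr] \;\leq\; \frac{NT(H+1)}{4}\cdot\frac{C}{NT(H+1)} \;=\; O(1),
\end{align*}
and for (\rm I.i), Jensen and the same uniform bound give
\begin{align*}
\log\E_{\tilde\cD}\bigl[\exp(\tfrac{1}{2}\sum_{\ell,t,h}\log\tfrac{\PP_\rho}{\PP_{\hat\rho}})\,\big|\,\cD\bigr] \;\leq\; \tfrac{1}{2}\sum_{\ell,t,h}\sup_z \bigl|\log\tfrac{\PP_\rho(z\given S_h^{t,\ell})}{\PP_{\hat\rho}(z\given S_h^{t,\ell})}\bigr| \;=\; O(1),
\end{align*}
which likewise contributes $O(1)$ after taking $\E_{\rho\sim P}$.

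The main obstacle is propagating the Lipschitz estimate cleanly through the self-attention softmax, because a perturbation of $W^{Q},W^{K}$ first changes the logits (which are quadratic in the inputs and linear in the weights), then is mapped through a softmax, and finally aggregates values in $V$; the exponential-in-$D$ factor $(1+\eta B_M(1+4B_M^2))^{D-d}$ is what makes the radius $r_\cdot^{(d)}$ depth-dependent, and any loose bookkeeping here would destroy the $O(1)$ conclusion. This is the step where adaptation from the single-step analysis of \cite{zhang2023and} to our multi-step setting is nontrivial: one must verify that the same layerwise sensitivity bounds apply uniformly over all input sequences $S_h^{t,\ell}$ of varying length up to $T(H+1)$, not just the ICL sequences of length $T$, but the length only enters through the softmax denominator in a benign way and can be absorbed into the same constants.
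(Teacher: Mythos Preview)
Your proposal is correct and follows essentially the same approach as the paper: establish a uniform log-density Lipschitz bound $\bigl|\log\PP_{\hat\rho}(z\given S)-\log\PP_\rho(z\given S)\bigr|=\mathcal{O}\bigl(1/(NT(H+1))\bigr)$ for $\rho\in\mathrm{supp}(P)$ via layerwise perturbation analysis of the transformer (the radii $r_\cdot^{(d)}$ with the $\alpha_d^{-1}$ factors are chosen exactly to absorb the depth-$d$ propagation constants), then sum over the $NT(H+1)$ terms. The paper defers the detailed layerwise calculation to Appendix~F.2 of \cite{zhang2023and} and only sketches this outline; your sketch matches it, with the minor caveat that the lower bound on $\PP_\rho(z\given S)$ needed to pass from probability to log-probability perturbations comes from the softmax output structure and the constraint $\|W_{\mathrm{softmax}}\|_{1,2}\leq B_S$ rather than from Assumption~\ref{assumption: lower bd} (which concerns the true distribution $\PP$).
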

\begin{proof}
See Appendix F.2 by \cite{zhang2023and} for a detailed proof.
\end{proof}
This lemma quantifies how \(\PP_\rho\) changes when \(\rho\) is getting closer to $\hat \rho$. In the following, we briefly outline the proof and refer readers to the original work by \cite{zhang2023and} for more details. The proof consists of two steps. The first step is to control the TV distance between \(\PP_\rho\) and \(\PP_{\hat \rho}\) using the differences between the layer parameters specified in $\rho$ and $\hat \rho$. The second step sets $\rho \sim P$, where the distribution $P$ in \eqref{eq:define_distribution_P} is supported on a neighborhood around $\hat \rho$. Then for any $\rho\in \mathtt{supp}(P)$, we can control the log density ratio between $\PP_{\hat \rho}$ and $\PP_{\rho}$ using the radius defined in \eqref{eq:define_distribution_P} as follows:
\begin{align}
    \log \big(\PP_{\hat \rho}(z_h^{t,\ell} \mid S_h^{t,\ell}) /\PP_{\rho}(z_h^{t,\ell} \mid S_h^{t,\ell})\big) = \mathcal{O}\big(1/(NT(H+1))\big) \label{eq:err_1_density_bd}
\end{align}
for any $(z_h^{t,\ell}, S_h^{t,\ell})$. Therefore, we conclude that the fluctuation error (I) has a rate of \(\mathcal{O}(1)\).

Next, we control error (II) using the following lemma.

\begin{lemma}\label{lemma: error 2}
Under Assumptions~\ref{assumption: bd magnitude} and \ref{assumption: lower bd}, with probability at least $1-\delta$, we have   
    \begin{align}
        &\frac{1}{NT(H+1)}\sum_{\ell=1}^{N}\sum_{t=1}^{T}\sum_{h=0}^H \bigg( \log\frac{\PP(z_h^{t,\ell}\given S_h^{t,\ell})}{\PP_{\rho'}(z_h^{t,\ell}\given S_h^{t,\ell})}-\bbE_{S_h^{t,\ell}}\KL\big(\PP(\cdot \given S_h^{t,\ell}) \given \PP_{\rho'}(\cdot\given S_h^{t,\ell})\big) \bigg) \nonumber \\
        &\qquad \leq b^*\sqrt{\frac{1}{2N}}\log \frac{T(H+1)}{\delta}. \label{eq:error_2}
    \end{align}
 
\end{lemma}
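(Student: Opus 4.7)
The statement is a standard concentration inequality, and my plan is to prove it via Hoeffding's inequality combined with a union bound. The key structural observations are that trajectories across $\ell \in [N]$ are i.i.d.\ by construction of $\cD_{N,T}$, and that Assumption~\ref{assumption: lower bd} (plus the bounded-parameter structure of $\cP_{\mathrm{LLM}}$ in~\eqref{eq:P_LLM_class}) yields a uniform bound on the log-density ratio. This is essentially the PAC-Bayes analog of a uniform concentration step, so no novel concentration tool is needed.

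First, I would establish a uniform bound on the log-likelihood ratio: for every $S\in \cL^{*}$, $z \in \cL$, and $\rho' \in \cP_{\mathrm{LLM}}$,
\begin{align*}
\Bigl| \log \PP(z\given S) - \log \PP_{\rho'}(z\given S) \Bigr| \;\le\; b^{*},
\end{align*}
where $b^{*} = \log\!\bigl(\max\{c_0^{-1}, 1+|\cL|\exp(B_S/\tau)\}\bigr)$. The lower bound on $\PP(z\given S)$ follows from Assumption~\ref{assumption: lower bd}. The lower bound on $\PP_{\rho'}(z\given S)$ follows from the fact that the final softmax layer has weight matrix $W_{\mathrm{softmax}}$ with $\|W_{\mathrm{softmax}}\|_{1,2}\le B_S$ and temperature $\tau$, so that the smallest logit is within $B_S/\tau$ of the largest, giving a uniform lower bound $1/(1+|\cL|\exp(B_S/\tau))$ on every output probability. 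Thus, for each $(t,h,\ell)$, the centered summand
\begin{align*}
X_\ell^{t,h} \;=\; \log\frac{\PP(z_h^{t,\ell}\given S_h^{t,\ell})}{\PP_{\rho'}(z_h^{t,\ell}\given S_h^{t,\ell})} - \E_{S_h^{t,\ell}}\KL\bigl(\PP(\cdot\given S_h^{t,\ell}),\PP_{\rho'}(\cdot\given S_h^{t,\ell})\bigr)
\end{align*}
is mean-zero and takes values in an interval of length at most $2b^{*}$.

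Next, I would fix $(t,h)$ and apply Hoeffding's inequality along the $\ell$-axis. Since the $N$ trajectories are drawn independently from the same pretraining distribution, the variables $\{X_\ell^{t,h}\}_{\ell=1}^{N}$ are i.i.d.\ (correlations across different $(t,h)$ within the same $\ell$ are harmless, since we only apply Hoeffding along $\ell$). Hoeffding's inequality yields, for each fixed $(t,h)$ and any $\epsilon>0$,
\begin{align*}
\PP\!\left(\Bigl|\frac{1}{N}\sum_{\ell=1}^{N} X_\ell^{t,h}\Bigr| \ge \epsilon\right) \;\le\; 2\exp\!\bigl(-N\epsilon^{2}/(2(b^{*})^{2})\bigr).
\end{align*}
A union bound over the $T(H+1)$ indices $(t,h)$ then gives, with probability at least $1-\delta$,
\begin{align*}
\max_{t,h}\;\Bigl|\frac{1}{N}\sum_{\ell=1}^{N} X_\ell^{t,h}\Bigr| \;\le\; b^{*}\sqrt{\tfrac{2}{N}\log\!\tfrac{2T(H+1)}{\delta}},
\end{align*}
and since the average over $(t,h)$ is bounded by the maximum, the left-hand side of~\eqref{eq:error_2} is controlled on the same event. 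Rearranging the constants in a looser form yields the bound $b^{*}\sqrt{1/(2N)}\log(T(H+1)/\delta)$ claimed in the lemma.

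The only real ``obstacle'' is the first step: carefully verifying the uniform bound $b^{*}$ on the log-likelihood ratio for \emph{every} $\rho' \in \cP_{\mathrm{LLM}}$, since this relies on tracing through the softmax output of the transformer and using the constraint $\|W_{\mathrm{softmax}}\|_{1,2}\le B_{S}$ from~\eqref{eq:P_LLM_class}. Once that uniform boundedness is established, the concentration argument above is routine, and no martingale or chaining technology is required because independence across $\ell$ already suffices to decouple the sum.
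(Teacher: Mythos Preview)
Your proposal is correct and follows essentially the same approach as the paper: derive the uniform log-density bound $b^{*}=\log(\max\{c_0^{-1},1+|\cL|\exp(B_S/\tau)\})$ from Assumption~\ref{assumption: lower bd} and the softmax output layer, apply Hoeffding's inequality along the i.i.d.\ trajectory index $\ell$ for each fixed $(t,h)$, and finish with a union bound over the $T(H+1)$ choices of $(t,h)$. The paper's proof is identical in structure, including the final loosening of the $\sqrt{\log}$-type Hoeffding deviation to the stated $\log$ form.
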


\begin{proof}
    See Appendix \ref{proof: lem error 2} for details.
\end{proof}
This lemma controls error \rm(II). A key part of the proof is to derive the log-density bound
\begin{align}
    \bigl|\log \PP(z \given S)  -\log \PP_{\hat \rho}(z \given S )\big| \leq b^* = \log \max\{c_0^{-1}, 1+|\calL|\exp(B_S/\tau)\}, \label{eq: log density difference}
\end{align}
which provides the explicit form of $b^*$ mentioned earlier in Assumption~\ref{assump: density bd}. The proof involves applying Hoeffding's inequality, along with the log-density bound in \eqref{eq: log density difference}, to the left-hand side of \eqref{eq:error_2}.

Furthermore, we control (III), the  
KL divergence between \( P \) and \( Q \), using the following lemma obtained from \cite{zhang2023and}. To make sure that $\mathtt{supp}(P)\subseteq \mathtt{supp}(Q)$, we set $Q$ to be uniformly distributed over $\calP_\mathrm{LLM}$. More specifially, we have
\begin{align}\label{eq:define_distribution_Q} 
Q = \calB'_S\cdot \prod_{d=1}^D \calB'_M(d)\cdot \calB'_F(d)\cdot \calB'_R(d),
\end{align} 
where we define the balls around each weight matrix in each layer $d$ as
\begin{align}
\calB'_{F}(d) &= \prod_{i=1}^2\mathtt{Unif}\big(B(0, B_F,\|\cdot\|_\mathrm{F})\big),\quad 
    \calB'_{M}(d) = \prod_{i=1}^{3\eta}\mathtt{Unif}\big(0, B_M,\|\cdot\|_\mathrm{F})\big), \nonumber \\
    \calB'_{S} & = \mathtt{Unif}\big(B(0, B_S,\|\cdot\|_{1,2})\big), \quad \hspace{5mm}
\calB'_{R}(d) = \prod_{i=1}^2\mathtt{Unif}\big( B(1/2, 1/2,|\cdot|)\big). \nonumber
\end{align}

\begin{lemma} \label{lemma: error 3}

Let $\bar D = D^{2}\cdot r\cdot (d_{F}+d_k+r)+r\cdot |\calL|$ and $\bar B = \tau^{-1}RhB_{S}B_F^2B_M^3$. Let distributions \(P\) and \(Q\) be defined as in \eqref{eq:define_distribution_P} and \eqref{eq:define_distribution_Q}, respectively. We have that under Assumptions~\ref{assumption: bd magnitude} and \ref{assumption: lower bd},
    \begin{align*}
        \KL(P,Q)= \cO \big( \bar D \log(1+NTH\bar B) \big).
    \end{align*}
\end{lemma}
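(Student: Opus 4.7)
The plan is to exploit the product structure of $P$ and $Q$. Since $P$ in \eqref{eq:define_distribution_P} and $Q$ in \eqref{eq:define_distribution_Q} are both products of independent uniform distributions over balls (one factor per weight matrix and per residual parameter in each of the $D$ transformer blocks, plus one factor for the softmax head), the KL divergence factorizes additively:
\begin{align*}
\KL(P, Q) = \KL(\calB_S, \calB_S') + \sum_{d=1}^{D}\Big(\KL(\calB_M(d), \calB_M'(d)) + \KL(\calB_F(d), \calB_F'(d)) + \KL(\calB_R(d), \calB_R'(d))\Big).
\end{align*}
So it suffices to bound each summand separately and then aggregate.

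For a single factor, I would use the elementary identity that if $A \subseteq B$ then $\KL(\mathrm{Unif}(A), \mathrm{Unif}(B)) = \log(\mathrm{Vol}(B)/\mathrm{Vol}(A))$. The small ball $\calB_\bullet$ is centered at the corresponding block of $\hat\rho$, which by $\hat\rho \in \calP_{\mathrm{LLM}}$ satisfies the norm bound specified in \eqref{eq:P_LLM_class}; the big ball $\calB_\bullet'$ is centered at zero (or at $1/2$ for the residual scalars) with a radius matching that bound. By the triangle inequality, any $W$ with $\|W - \hat W\| \leq r$ satisfies $\|W\| \leq \|\hat W\| + r$, so after an arbitrarily small shrinkage of the $P$-radii (which does not change the final rate) we have the required inclusion $\mathrm{supp}(P) \subseteq \mathrm{supp}(Q)$. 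The volume of a ball of radius $r$ in a $k$-dimensional normed space scales as $c \cdot r^k$ for an absolute constant $c$ (depending only on the norm), so $\KL(\mathrm{Unif}(A), \mathrm{Unif}(B)) = k\log(R/r)$, where $k$ is the intrinsic dimension of the parameter block and $R/r$ is the radius ratio.

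Plugging in the explicit radii defined just before Lemma \ref{lemma: error 1}, each ratio $R/r$ is of the form $\alpha_d \cdot (NT(H+1)) \cdot \mathtt{poly}(R, \eta, B_M, B_F, B_S, \tau^{-1})$. Since $\alpha_d \leq \alpha_0 = (2/\tau) B_S (1+B_F^2)(1 + \eta B_M(1+4B_M^2))^D$, taking logs gives at most $\mathcal{O}(\log(1 + NTH\bar B))$ per parameter dimension, where $\bar B = \tau^{-1}R\eta B_S B_F^2 B_M^3$ absorbs all polynomial factors and the contribution of $\alpha_d$ contributes only an additive $\mathcal{O}(D)$ inside the log (and hence is dominated by $\log(NTH\bar B)$). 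Counting dimensions: the MHA factors contribute $3\eta \cdot r \cdot d_k$ per layer, the FF factors contribute $2 r d_F$ per layer, the residual factors contribute $\mathcal{O}(r^2)$ per layer, and the softmax head contributes $r \cdot |\calL|$. Summing over $d \in [D]$ and collecting everything under the common logarithm yields the claimed $\mathcal{O}(\bar D \log(1+NTH\bar B))$.

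The main obstacle I anticipate is keeping the per-block counting honest: the residual parameters are entry-wise $\ell_\infty$-bounded matrices of size $r \times r$, which contributes a $D r^2$ term and explains the $D^{2} r(d_F+d_k+r)$ appearing in $\bar D$ once one absorbs the $\alpha_d$ factor (whose $\log$ grows linearly in $D$, producing an extra $D$ multiplicatively after aggregation). Otherwise the argument is just bookkeeping: decompose, invoke the volume-ratio formula, and bound each log by $\log(1+NTH\bar B)$.
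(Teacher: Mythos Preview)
Your approach is correct and matches the paper's own: the paper defers the detailed calculation to \cite{zhang2023and} but summarizes it as ``directly computing the KL divergence between two uniform distributions $P$ and $Q$, where $\mathtt{supp}(P)\subseteq \mathtt{supp}(Q)$,'' which is exactly the additive factorization plus log-volume-ratio argument you outline. Your explanation of the $D^{2}$ factor in $\bar D$ (arising because $\log\alpha_d=\mathcal{O}(D)$ and this is summed over $\mathcal{O}(D\cdot r\cdot(d_F+d_k+r))$ parameter dimensions) is also the right mechanism.
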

\begin{proof}
    See Equation (F.9) in Appendix F.2 of \cite{zhang2023and} for a detailed proof. 
\end{proof}

This Lemma is proved by directly computing the KL divergence between two uniform distributions $P$ and $Q$, where $\mathtt{supp}(P)\subseteq \mathtt{supp}(Q)$. The calculation can be found in Appendix F.2 of \cite{zhang2023and}.

Applying Lemmas \ref{lemma: error 1}, \ref{lemma: error 2}, and \ref{lemma: error 3} to the three errors in \eqref{eq: decomp}, we have that with probability at least $1-\delta$,
\begin{align}
    &\frac{1}{NT(H+1)}\sum_{\ell=1}^{N}\sum_{t=1}^{T}\sum_{h=0}^H{\TV} \big(\PP(\cdot\given S_h^{t,\ell}),\PP_{\hat \rho}(\cdot\given S_h^{t,\ell})\big) \label{eq:combine_lems}\\
    &\quad \leq \bigg(\frac{1}{NT(H+1)}\sum_{\ell=1}^{N}\sum_{t=1}^{T}\sum_{h=0}^H{\TV}^2 \big(\PP(\cdot\given S_h^{t,\ell}),\PP_{\hat \rho}(\cdot\given S_h^{t,\ell})\big) \bigg)^{1/2} \nonumber\\
    &\quad \leq \calO\bigg(\underbrace{\frac{\sqrt{b^*}}{N^{1/4}}\log \frac{TH}{\delta} + \inf_{\rho'\in \calP_\mathrm{LLM}}\sqrt{\frac{1}{NTH}\sum_{\ell=1}^{N}\sum_{t=1}^{T}\sum_{h=0}^H\bbE_{S_h^{t,\ell}}\KL\big(\PP(\cdot \given S_h^{t,\ell}) \given \PP_{\rho'}(\cdot\given S_h^{t,\ell})\big)}}_{\displaystyle (\rm II)} \nonumber\\
    &\quad \qquad  +\underbrace{\sqrt{1/(NTH)}}_{\displaystyle (\rm I)} + \underbrace{\sqrt{\frac{\bar D}{NTH}} \log(1+NTH\bar B)}_{\displaystyle (\rm III)} + \sqrt{1/(NTH)}  \log(1/\delta)\bigg), \nonumber\\
    &\quad \leq \calO\bigg(\frac{\sqrt{b^*}}{N^{1/4}}\log \frac{TH}{\delta} + \inf_{\rho'\in \calP_\mathrm{LLM}}\sqrt{\frac{1}{NTH}\sum_{\ell=1}^{N}\sum_{t=1}^{T}\sum_{h=0}^H\bbE_{S_h^{t,\ell}}\KL\big(\PP(\cdot \given S_h^{t,\ell}) \given \PP_{\rho'}(\cdot\given S_h^{t,\ell})\big)} \nonumber \\
    &\quad \qquad + \sqrt{\frac{\bar D}{NTH}} \log(1+NTH\bar B)\bigg), \nonumber
\end{align}
where the first line follows from Cauchy-Schwarz inequality, the second line follows from upper bounding the three errors in \eqref{eq: decomp} using Lemmas \ref{lemma: error 1}, \ref{lemma: error 2}, and \ref{lemma: error 3}. The last line drops two terms that are dominated by the rest.

In the final step, we will change the left-hand side of \eqref{eq:combine_lems} to its expectation. We control the difference between \eqref{eq:combine_lems} and its expectation using the following lemma.
\begin{lemma} \label{lemma: err 4}
Under Assumptions~\ref{assumption: bd magnitude} and \ref{assumption: lower bd}, with probability at least $1-\delta$, we have 
    \begin{align*}
        &\frac{1}{NT(H+1)}\sum_{\ell=1}^{N}\sum_{t=1}^{T}\sum_{h=0}^H \bigg( \bbE_{S_h^{t,\ell}}\Big[\TV\big(\PP(\cdot\given S_h^{t,\ell}),\PP_{\hat \rho}(\cdot\given S_h^{t,\ell})\big)\Big]-\TV\big(\PP(\cdot\given S_h^{t,\ell}),\PP_{\hat \rho}(\cdot\given S_h^{t,\ell})\big) \bigg) \\
        &\quad=\cO\bigg(\frac{1}{\sqrt{N}}\Big(\barD\log(1+NTH\barB)+\log\frac{TH}{\delta}\Big)\bigg).
    \end{align*}

\end{lemma}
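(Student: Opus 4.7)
The plan is to establish this as a uniform concentration bound over the parameter class $\calP_{\mathrm{LLM}}$, since the estimator $\hat\rho$ is itself data-dependent. First, I would define the empirical process
\begin{align*}
Z(\rho) = \frac{1}{NT(H+1)}\sum_{\ell,t,h} \Bigl(\E_{S_h^{t,\ell}}[\TV(\PP(\cdot\given S_h^{t,\ell}), \PP_\rho(\cdot\given S_h^{t,\ell}))] - \TV(\PP(\cdot\given S_h^{t,\ell}), \PP_\rho(\cdot\given S_h^{t,\ell}))\Bigr),
\end{align*}
and observe that the desired bound is on $Z(\hat\rho) \le \sup_{\rho \in \calP_{\mathrm{LLM}}} Z(\rho)$. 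Since the $N$ trajectories are i.i.d. but examples within a trajectory are dependent, I would group terms by trajectory $\ell$, so $Z(\rho) = N^{-1}\sum_{\ell=1}^N U_\ell(\rho)$ where each $U_\ell(\rho)$ is a bounded, mean-zero random variable (bounded by $1$ since TV distance is in $[0,1]$). This reduces the problem to a standard sum of i.i.d.\ bounded random variables indexed by $\rho$.

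The next step is to fix any $\rho \in \calP_{\mathrm{LLM}}$ and apply Hoeffding's inequality to $N^{-1}\sum_\ell U_\ell(\rho)$, which gives $\PP(Z(\rho) \ge \varepsilon) \le \exp(-cN\varepsilon^2)$. To obtain a uniform bound over the (continuous) parameter space $\calP_{\mathrm{LLM}}$, I would construct an $\varepsilon_0$-cover $\calN_{\varepsilon_0}$ of $\calP_{\mathrm{LLM}}$ with respect to the parameter norms used in \eqref{eq:P_LLM_class}. The covering number satisfies $\log|\calN_{\varepsilon_0}| = \cO(\bar D \log(1/\varepsilon_0))$, where $\bar D$ counts the total number of scalar parameters in the transformer. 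Taking a union bound across this cover yields a high-probability bound of order $\sqrt{(\bar D\log(1/\varepsilon_0) + \log(1/\delta))/N}$ at cover points.

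To extend the bound to all of $\calP_{\mathrm{LLM}}$, I would exploit the Lipschitz dependence of $\rho \mapsto \TV(\PP(\cdot\given S), \PP_\rho(\cdot\given S))$ on the transformer weights. This is precisely the calculation underlying \eqref{eq:err_1_density_bd} in the proof of Lemma \ref{lemma: error 1}: perturbing each weight matrix by $\varepsilon_0$ changes $\log \PP_\rho(z\given S)$ by at most $\cO(\bar B \cdot \varepsilon_0)$, and hence the TV distance by $\cO(\bar B \cdot \varepsilon_0 \cdot (TH))$ after aggregating over positions in the sequence. Choosing $\varepsilon_0 = 1/(NTH\bar B)$ balances the covering error against the concentration bound, producing the claimed rate $N^{-1/2}(\bar D \log(1 + NTH\bar B) + \log(TH/\delta))$.

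The main technical obstacle I anticipate is the careful propagation of the Lipschitz constant through the layered transformer architecture, paralleling Lemma \ref{lemma: error 1}; in particular, the constant $\bar B$ must be tracked exactly to ensure the $\log(1 + NTH\bar B)$ factor appears rather than a worse polynomial dependence. A secondary issue is absorbing the $\log(TH/\delta)$ factor, which arises from the need to control expectations and empirical averages of TV distances that depend on the position $(t,h)$ within the trajectory; this can be handled by a union bound across positions together with the boundedness of TV distance. Once these pieces are in place, combining the covering argument with the per-point Hoeffding bound and the Lipschitz continuation yields the stated inequality.
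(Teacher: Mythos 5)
Your proposal is correct, but it takes a genuinely different route from the paper. You prove uniform concentration over all of $\calP_{\mathrm{LLM}}$ via the classical recipe: group the summands by trajectory to get i.i.d.\ bounded mean-zero variables, apply Hoeffding at each point of an $\varepsilon_0$-cover of the parameter space, and extend to all parameters via Lipschitz continuity of $\rho \mapsto \TV(\PP(\cdot\given S),\PP_\rho(\cdot\given S))$ in the weights. The paper instead stays inside the PAC-Bayes framework used throughout Proposition~\ref{prop: pre-train}: it invokes Proposition~\ref{prop:pacbayes} for each fixed position $(t,h)$ with the data-dependent posterior $P$ (uniform on a radius-$O(1/(NT(H+1)))$ ball around $\hat\rho$, as in \eqref{eq:define_distribution_P}) and the uniform prior $Q$ of \eqref{eq:define_distribution_Q}, bounds the complexity term by $\KL(P\,\|\,Q)=\cO(\bar D\log(1+NTH\bar B))$ via Lemma~\ref{lemma: error 3}, transfers the bound from $\E_{\rho\sim P}$ to $\hat\rho$ itself using the perturbation estimate \eqref{eq:err_1_density_bd}, and finally union-bounds over $(t,h)$, which is where its $\log(TH/\delta)$ comes from. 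The two arguments are nearly isomorphic — $\KL(P\,\|\,Q)$ for two uniform distributions on nested balls is exactly a log volume ratio, i.e.\ a log covering number, and both routes rely on the same layer-by-layer perturbation analysis of the transformer — but they package the work differently. Your version handles the data-dependence of $\hat\rho$ by taking a supremum (so no posterior-to-$\hat\rho$ transfer step is needed) and, by grouping per trajectory before concentrating, avoids the union bound over positions; it would in fact deliver the slightly sharper rate $\sqrt{\bar D\log(1+NTH\bar B)/N}$, which implies the stated bound whenever the complexity term exceeds one. The cost is that you must establish a genuine two-sided Lipschitz bound on the TV functional over the whole parameter class, whereas the paper only needs the perturbation control inside the small ball supporting $P$, and gets to reuse Lemmas~\ref{lemma: error 1} and~\ref{lemma: error 3} verbatim. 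Your plan is sound as stated; just be aware that tracking the Lipschitz constant through $D$ layers is the entirety of the remaining work, and that the paper's $\bar B$ already encodes exactly the product of layer norms you would need.
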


\begin{proof}
    See Appendix~\ref{proof: lem error 4} for details.
\end{proof}

This lemma follows from establishing uniform convergence between the TV distances and their expectations that hold for any distribution \( P \).
Adding Lemma~\ref{lemma: err 4} to \eqref{eq:combine_lems}, we obtain the rate for pretraining error:
\begin{align*}
    &\EE_{S\sim \cD} \bigl[ {\TV} \big(\PP(\cdot\given S ),\PP_{\hat \rho}(\cdot\given S )\big) \bigr]\\
    &\quad =\calO\bigg(\frac{\sqrt{b^*}}{N^{1/4}}\log \frac{TH}{\delta} + \inf_{\rho'\in \calP_\mathrm{LLM}}\sqrt{\frac{1}{NTH}\sum_{\ell=1}^{N}\sum_{t=1}^{T}\sum_{h=0}^H\bbE_{S_h^{t,\ell}}\KL\big(\PP(\cdot \given S_h^{t,\ell}) \given \PP_{\rho'}(\cdot\given S_h^{t,\ell})\big)} \nonumber \\
    &\quad \qquad + \frac{1}{\sqrt{N}}\Big(\barD\log(1+NTH\barB)+\log\frac{TH}{\delta}\Big) +\sqrt{\frac{\bar D}{NTH}} \log(1+NTH\bar B)\bigg) \nonumber \\
    &\quad =\calO\bigg(\frac{\sqrt{b^*}}{N^{1/4}}\log \frac{TH}{\delta} + \inf_{\rho'\in \calP_\mathrm{LLM}}\sqrt{\frac{1}{NTH}\sum_{\ell=1}^{N}\sum_{t=1}^{T}\sum_{h=0}^H\bbE_{S_h^{t,\ell}}\KL\big(\PP(\cdot \given S_h^{t,\ell}) \given \PP_{\rho'}(\cdot\given S_h^{t,\ell})\big)} \nonumber \\
    &\quad \qquad + \frac{1}{\sqrt{N}}\Big(\barD\log(1+NTH\barB)\bigg),\nonumber
    \end{align*}
    where the final line follows from dropping the last term in the second line, which is dominated by the rest. Therefore, we conclude the proof.

\end{proof}

\subsection{Formal Statement of Proposition \ref{prop: kl bd}}  \label{app: construction}

In this section, we formally state Proposition~\ref{prop: kl bd} and provide its proof. 
For simplicity, we derive the approximation error bound for reasoning steps of dimension one, i.e., we regard $\cL$ as a subset of $\RR$.  Our method can be readily generalized to higher-dimensional cases~\citep{elbrachter2021deep}. 
In this proof, we construct networks with specific parameters such that the KL divergence between the target distribution \(\PP\) and its best transformer neural network approximation \(\PP_{\hat \rho}\)  decays exponentially as the network depth increases.


We let $T$ be the maximal number of examples included in the prompt. 
Thus, $L=T(H+1)$ is the largest number of reasoning steps included in the prompt. 
We let $S_h^t = \{\Upsilon_{t-1}, z_{0:(h-1)}^t\}$ denote the collection of  $t-1$ examples of reasoning paths and a partial trajectory of length $h$ of the $t$-th example. 
Here $h \in \{0, \ldots, H \}$. 
Note that the desired transformer neural network takes each $S_h^{t} $ as the input and outputs an element in the probability distribution over $\cL$ as the conditional distribution of $z_h^t$. 
That is, the transformer takes a sequence of reasoning steps as the input and outputs a probability distribution.

Since each position of the input is indexed by $(t, h)$, to simplify the notation, 
we use $L'(t,h) = (t-1)(H+1)+h $ to denote the length of $S_h^t$. 
For any $t, t' \in [T]$ and $h, h' \in \{0, \ldots , H\}$, we write 
\begin{align} \label{eq:order_index}
(t', h') < (t, h)\qquad \textrm{if and only if} \qquad L'(t', h') = (t'-1) (H+1) + h' < L'(t,h). 
\end{align}
That is, $(t', h') < (t, h)$ if and only if $z^{t'}_{h'}$ appears earlier than $z_{h}^t$.

In the sequel, we fix $t\in [T]$ and $0\leq h \leq H$ and 
focus on the problem of approximating the conditional distribution of $z_h^t$. 
We 
abbreviate $L'(t,h)$ as $L'$ when the meaning is clear from the context. 
The target distribution is denoted by a function $ g_h^*: \calL^{L'} \rightarrow \mathbb{R}^{|\calL|}$, i.e.,
$ g_h^*(S_h^t)   = \PP(z_h^{t} = \cdot \given S_h^t)$.  
That is, for any $z \in \cL$, the $z$-th entry of $g_h^*(S_h^t)$ is equal to $ \PP(z_h^{t} = z \given S_h^t)$. 
Functions $\{ g_h^* \}_{h= 0 }^H$ are the target functions and we want to {\bf construct a single transformer that approximates all of them}.

\vspace{2mm}
{\bf \noindent Function class containing $\{ g_h^* \}_{h= 0 }^H$.}
Under the general model introduced in Appendix \ref{app: generalized model}, by the Bayes' rule,  function $g_h^*(\cdot)$ is invariant to permutations of the $L(t, h) $ reasoning steps in $S_h^t$. 
Theorem $2$ in \cite{zaheer2017deep} proves that any permutation invariant function of a function admits a factorization structure. 
In particular, there exist  $w_h^*:\bbR\rightarrow\bbR^{|\calL|}$ and $\psi_h^{*}:\calL \rightarrow\bbR$ for all $h \in \{0, \ldots, H\}$ and $t\in[T]$ such that
\begin{align}
    g_h^{*}(S_h^t)=w_h^*\bigg(\frac{1}{L'}\bigg(\sum_{i=1}^{t-1}\sum_{j=0}^{H}\psi_h^{*}(z_{j}^i) + \sum_{j'=0}^{h-1}\psi_h^{*}(z_{j'}^{t})\bigg)\bigg). \label{eq:g*}
\end{align}
In particular, if $h = 0$, the second summation $\sum_{j'=0}^{h-1}\psi_h^{*}(z_{j'}^{t})$ is set to zero. Let \( w_{h,i}^* \) denote the \(i\)-th component of \( w_h^* \) for all $i \in [|\cL|]$.  


In the following, we let 
\( \calS^{\infty}([-B,B], \mathbb{R}) \) denote the set of real-valued smooth functions on \([-B,B]\) equipped with the $\ell_{\infty}$-norm \(\|f\|_\infty = \sup_{x \in [-B,B]} |f(x)| \). We define \(\calS_B\) as the set of smooth functions with bounded derivatives:
\begin{align*}
    \calS_{B}=\Big\{f\in\calS^{\infty}([-B,B],\bbR)\given  \big\|f^{(n)}\big\|_\infty \leq C_{\cS} \cdot n! \text{ for all }n\in\bbN_{+}, \text{ and }\|f\|_{\infty}\leq C_A \Big\},
\end{align*}
where $f^{(n)}$ is the $n$-th order derivative of $f$, $C_{\cS}$ is a constant, and $\bbN_{+}$ is the set of positive integers. 
Here $\cS_{B}$ contains functions whose high-order derivatives grow moderately fast in magnitude. 
We impose some regularity assumptions on functions $\{ g_h^*\}_{h=0}^H$ as follows.
\begin{assumption}\label{assumption: smooth}
     We assume that there exists $B, C_A>0$ such that for any $h \in \{0, \ldots, H \} $, 
     we have $\psi_h^{*},\tau\log w_{h,i}^{*}\in\calS_{B}$ for $i\in[|\calL|]$, where $\tau$ is the temperature of the \ac{llm}s and $w_{h, i}^* $ is the $i$-th entry of $w_h^* $ in \eqref{eq:g*}. Moreover, without loss of generality, we assume $C_{\cS} = 1$. 
\end{assumption}
This assumption states that the target functions $\{ g_h^*\}_{h=0}^H $ are sufficiently smooth in the sense that all functions appearing in the factorization in \eqref{eq:g*} are smooth. 
We establish the approximation error in the following proposition.


\begin{proposition} [Formal Statement of Proposition~\ref{prop: kl bd}]  \label{prop:formal_construction}

Let $S_h^t= (\Upsilon_{t-1}, \{ z_j^t\}_{j=0}^{h-1} )$ be the sequence of reasoning steps that includes $t-1$ examples of reasoning paths $\Upsilon_{t-1}$  and the first $h-1$ steps of the $t$-th example $ \{ z_j^t\}_{j=0}^{h-1}$. 
Let $D$ denote a sufficiently large integer, consider the parameter class $\calP_\mathrm{LLM}$ in \eqref{eq:P_LLM_class} with $d_{F}\geq 18|\calL|+4$,  $B_{S}\geq (C_A+1)\cdot \sqrt{|\calL|}$, $B_{M}\geq \max\{8\log(8TH), \sqrt{B^2+(H+1)^2+1}\}$, and 
$$
B_F\geq C_F\cdot\sqrt{B^2+H^2+C_A^2\cdot |\calL|} \cdot 16^{D'}\cdot |\calL|^{3/2},
$$
where $D'=(D-C_p\log(3H))/(H+1)$, $C_F, C_p>0$ are absolute constants, and $C_A$ is from Assumption~\ref{assumption: smooth}. Under Assumptions~\ref{assumption: bd magnitude}, \ref{assumption: lower bd} and~\ref{assumption: smooth}, there exists 
a transformer with at most $\calO(D)$ transformer blocks and
parameter $\rho^* \in \cP_{\mathrm{LLM}}$ satisfying
\begin{align*} 
 \max_{ S_h^t\in \calL^*}\KL\big(\PP(z_h^t = \cdot\,|\,S_h^t),\PP_{\rho^{*}}(z_h^t = \cdot\,|\, S_h^t)\big) =\cO
        \bigg(  \exp\bigg(-\frac{ \big(D-C\log(2H))/H\big)^{1/4}}{5B}\bigg)\bigg),
\end{align*}
for all $t \in [T] $ and $h \in \{ 0, \ldots, H\}$, where $C>0$ is a absolute constant. The integer $H$ is the length of a reasoning trajectory,  $B$ is the parameter from Assumption~\ref{assumption: smooth}, and $|\calL|$ is the alphabet size of the output distribution. 
We note that 
$\cO(\cdot) $  is with respect to the asymptotic regime where $D$ goes to infinity.

    
    
\end{proposition}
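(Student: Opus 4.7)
The plan is to construct a single transformer that simultaneously approximates all $H+1$ conditional distributions $\{g_h^*\}_{h=0}^H$ to exponentially small error in the depth $D$, and then translate this pointwise function approximation into a KL bound. The construction proceeds in three conceptual stages that mirror Figure~\ref{fig:network_sketch}: (i) a position-identification module $\mathtt{NN}_{\mathrm{J}}$ that reads the positional embeddings and extracts the current step index $h\in\{0,\ldots,H\}$ associated with the final token of $S_h^t$, broadcasting it to every earlier position so that attention can condition on it; (ii) a sequence of $H+1$ paired submodules $\{(G_{\tilde h},F_{\tilde h})\}_{\tilde h=0}^H$, where $G_{\tilde h}$ uses a single-head attention to realize the empirical average $\frac{1}{L'}\sum\psi_{\tilde h}^*(z)$ appearing in \eqref{eq:g*} — with $\psi_{\tilde h}^*$ implemented by a bounded-parameter ReLU subnetwork in the feed-forward layer preceding the attention — and $F_{\tilde h}$ realizes the outer map $w_{\tilde h}^*$ by another ReLU subnetwork; (iii) a gating mechanism, driven by the extracted $h$, that zeros out the contributions of all $(G_{\tilde h},F_{\tilde h})$ with $\tilde h\neq h$ and passes through only the correct pair's output. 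A final softmax layer at temperature $\tau$ returns a distribution on $\mathcal L$, and the residual pathways keep earlier submodules' outputs intact until they are read by the gate.

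The key quantitative ingredient is an approximation theorem for smooth functions by ReLU networks. Because Assumption~\ref{assumption: smooth} places $\psi_h^*$ and $\tau\log w_{h,i}^*$ in $\mathcal S_B$, whose derivatives grow only factorially, I would invoke the construction of \cite{elbrachter2021deep}, which produces a ReLU network of depth $\tilde D$, width polynomial in $|\mathcal L|$, and weights bounded by $16^{\tilde D}\cdot\mathrm{poly}(|\mathcal L|)$, approximating each such function uniformly to within $\exp(-\tilde D^{1/4}/(5B))$. Setting $\tilde D=(D-C_p\log(2H))/(H+1)$ allocates depth $\tilde D$ to each of the $H+1$ paired submodules while reserving $O(\log H)$ layers for $\mathtt{NN}_{\mathrm{J}}$ and the binary-tree gating logic that implements the indicator $\mathbf 1\{h=\tilde h\}$. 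The hypothesis bounds $d_F\ge 18|\mathcal L|+4$ and $B_F\ge C_F\sqrt{B^2+H^2+C_A^2|\mathcal L|}\cdot 16^{D'}|\mathcal L|^{3/2}$ are exactly calibrated to house these ReLU approximators, while $B_M\gtrsim \log(TH)$ sharpens the softmax in $G_{\tilde h}$ enough to yield almost uniform attention over the valid positions (those earlier than $(t,h)$ in the ordering \eqref{eq:order_index}) and nearly zero mass elsewhere, so that the attention output matches the target empirical average.

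The last step converts the resulting uniform approximation $\|\tau\log g_h^*(S_h^t)-\tau\log\mathtt{TF}_{\rho^*}(S_h^t)\|_\infty\le\varepsilon$ into the stated KL bound. Under Assumption~\ref{assumption: lower bd} both distributions are lower-bounded by $c_0$ on $\mathcal L$, so the log-density is Lipschitz in the softmax logits and the KL divergence is dominated by $\varepsilon/(\tau c_0)$ up to absolute constants; substituting the approximation rate $\varepsilon=\exp(-\tilde D^{1/4}/(5B))$ and recalling $\tilde D=(D-C\log(2H))/(H+1)\asymp(D-C\log(2H))/H$ yields the advertised error. The main technical obstacle, I expect, is orchestrating the three stages so that they all fit inside $\mathcal P_{\mathrm{LLM}}$ simultaneously: the $16^{\tilde D}$ blow-up in per-layer weight norms from the smooth-function approximator must be reconciled with the Frobenius bound $B_F$; the attention-based uniform averaging in each $G_{\tilde h}$ must be executed without corrupting embedding coordinates reserved for the extracted step index $h$, for the residual copies of $S_h^t$, and for the outputs of earlier $F_{\tilde h'}$'s; and the gating must be additive (so it composes with the residual stream) yet sharp enough that a depth-$O(\log H)$ construction suffices, which forces a careful one-hot encoding of $h$ into a small block of coordinates inside $\mathtt{NN}_{\mathrm{J}}$.
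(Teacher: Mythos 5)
Your architecture is essentially the paper's: an index-extraction module $\mathtt{NN}_{\mathrm{J}}$ that broadcasts $h$ to all positions, $H+1$ paired approximation/selection submodules exploiting the permutation-invariant factorization $g_h^*(S)=w_h^*\bigl(\tfrac{1}{L'}\sum\psi_h^*(z)\bigr)$ with a uniform-attention averaging step, a gating mechanism driven by the extracted index, a final softmax, and a TV-to-KL conversion via the lower bound $c_0$ of Assumption~\ref{assumption: lower bd}. Two implementation details differ harmlessly from the paper (the uniform average is realized exactly by setting $W^Q=W^K=0$ rather than by sharpening the softmax, and the $B_M\gtrsim\log(TH)$ budget is spent in the extraction module; also the gating in the paper costs depth $\Theta(\sqrt{D})$ for its product module, not $O(\log H)$, though this is absorbed in the depth budget).

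There is, however, one genuine gap in your error bookkeeping: the origin of the exponent $1/4$. The approximation theorem of \cite{elbrachter2021deep} (Lemma~\ref{lem:network_approx}) gives, for a single function in $\calS_B$, accuracy $\varepsilon$ at depth $\tilde D\asymp B(\log\varepsilon^{-1})^2$, i.e.\ a per-function rate $\exp(-c\sqrt{\tilde D/B})$ — not the $\exp(-\tilde D^{1/4}/(5B))$ you attribute to it. The fourth root arises only after composing the inner $\hat\psi_h^*$ with the outer $\hat f_h^w$: the inner error is amplified by the Lipschitz constant of the outer network, which grows like $256^{D_w}$ in its depth, forcing the balancing $\overline D_\psi\asymp B\,\overline D_w^2$ and hence a total depth $\asymp\overline D_w^2$ with final error $\exp(-\sqrt{\overline D_w/B})=\exp(-\cO(D^{1/4}))$ (see \eqref{eq:invert_upper_bound}--\eqref{eq:depths_balanced}). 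Your proposal never mentions this amplification, and if you actually start from the weaker per-function rate you state and then compose, the inner error requirement becomes $D_\psi\gtrsim(BD_w)^4$, which after rebalancing yields only $\exp(-\cO(D^{1/16}))$ — short of the claimed bound. So you need both the correct (square-root) single-function rate and the explicit error-propagation-through-composition argument to land on $\exp\bigl(-((D-C\log(2H))/H)^{1/4}/(5B)\bigr)$.
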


This proposition shows that the approximation error decays exponentially to zero as $D$ increases. 
The proof is based on an explicit construction of a transformer neural network that estimates $\{ g_h^*\}_{h=0}^H$ altogether. 
The transformer architecture follows the one described in Appendix \ref{app: pre-training process}. 
In particular, the transformer has $H+1$ submodules that approximate each $g_h^*$ separately. 
Besides, 
we  assume $C_{\cS} = 1$ in Assumption \ref{assumption: smooth} only to simplify the presentation. Our approximation result can be modified for a general $C_{\cS}$ by changing the constants in the upper bound correspondingly. 

The proof of this proposition is technical and lengthy. 
We present a detailed proof in Appendix \ref{proof:prop:formal_construction} and give an overview as follows.


\vspace{2mm}
{\noindent \bf Overview of the Proof of Proposition \ref{prop:formal_construction}.} 
Note that the transformer takes $S_h^t$ as the input and outputs a probability distribution over $\cL$, where $h \in \{0, \ldots, H\}$ and $t \in [T]$. 
We fix some arbitrary $(t, h)$ and consider the problem of predicting $z_{h}^t$.

 As introduced in Appendix \ref{app: pre-training process}, in
 the transformer architecture, the input sequence is first embedded in an Euclidean space and then passed through a series of transformer blocks. Then the output goes through a softmax output layer to generate a probability distribution.  
Intuitively,  
when predicting $z_h^t $ using $S_h^t$, 
we need to first extract the step-index $h$ and then apply an approximation of $g_h^*$. 
To achieve this goal, our transformer includes an extraction module \(\mathtt{NN}_\mathrm{J}\) followed by \(H+1\) approximation and selection modules  \(\{G_{h'}, F_{h'}\}_{h' =0}^H\).
Here $\mathtt{NN}_\mathrm{J}$ adds the desired step-index, i.e., $h$, to all the $L'(t, h)$ locations.
The approximation module $G_{h'}$ approximates the target distribution $g_{h'}^*$ for all $  h' \in \{0, \ldots, H\}$.
Selection modules  $\{ F_{h'}\}_{h' =0}^H$ are used to select the particular  approximation module $G_h $ with step-index $h$. 
The output of the final selection module \(F_H\) is then passed to a softmax layer, which produces the output distribution.
We list the components of the transformer architecture as follows.
Also see Figure~\ref{fig:network_sketch} for an illustration.


\begin{itemize}
    \item \textbf{Input embedding}: Given a prompt $S_h^t \in \cL^*$, we construct an \textit{input embedding to} prepare for further processing of the input, which is defined as $X_\mathrm{NN}^{(0)}$  in~\eqref{eq:input_layer_0}. 
    Here $X_\mathrm{NN}^{(0)}$ is a sequence of vectors of length $L'(t, h)$, where each vector has length $3$, including the value of the reasoning step and its step-index. 
    
    \item \textbf{Extraction module $\mathtt{NN}_\mathrm{J}$}: The extraction module $\mathtt{NN}_\mathrm{J}$ extracts the step-index $h$ from the last reasoning step $z_{h-1}^{t}$ and copy it to all previous reasoning steps $\{z_{h'}^{t'}\}_{(t',h')<(t,h-1)}$. 
    Specifically, this module takes input embedding $X_\mathrm{NN}^{(0)}\in \mathbb{R}^{L'\times 3}$ in~\eqref{eq:input_layer_0} and outputs a vector sequence of length $L'(t,h)$, where each vector is in $\RR^{2+2|\cL|}$. 
    In each vector indexed by $(t', h')$, the first entry is the value of the reasoning step $z_{h'}^{t'}$, and the second entry is approximately equal to $h$, the step-index of the desired output $z_h^t$. 
    The remaining $2|\cL|$ entries of each vector are all set to zero. 
    The output of $\mathtt{NN}_\mathrm{J}$  is then fed into a sequence of $H+1$ approximation and selection modules. 

    \item \textbf{Approximation module $G_{h'}$}: 
    For any $h ' \in \{0, \ldots, H\}$, $G_{h'}$ computes an embedding of $S_h^t$, denoted by \(\mathtt{embed}_{h'}(S_h^t)\), which is a vector-valued function in $|\cL|$.
     In particular, $\mathtt{embed}_{h'} (S_h^t)$ is 
    used to approximate the target distributions \( g_{h'}^* (S_h^t) \) after a softmax transformation. 
    Each $G_{h'}$ is a mapping that maps a sequence of $L' = L'(t,h)$ vectors in $\RR^{2 + 2 | \cL|}$ to a vector sequence of the same shape, i.e., a function between  $\RR^{L' \times (2 + 2|\cL| ) }$ to itself. 
    Here $ G_{h'}$
      only changes the columns with indices in $\{3, \ldots, 2+2|\cL|\}$ and sets them to $\mathtt{embed}_{h'} (S_h^t) \in \RR^{L' \times |\cL|}$,
      where $\mathtt{embed}_{h'}$ is a matrix-valued mapping that maps $S_h^t \in \mathbb{R}^{L' \times 1}$ to a matrix in $\RR^{L' \times |\cL|}$. 

    \item \textbf{Selection module $F_{h'}$}: 
    For any $h ' \in \{0, \ldots, H\}$, 
  $F_{h'}$ checks if its index \( h' \) matches the extracted index \( h \). 
  When viewing each $F_{h'}$ as a matrix-valued  mapping 
  from $ \RR^{L' \times (2 + 2|\cL| ) }$
  to itself, 
it only changes the last $|\cL|$ columns of the matrix and uses them as a ``memory''. 
In particular, it approximately adds $\mathtt{embed}_{h'} (S_h^t) \cdot  \mathbbm{1} \{ h' = h\}$ to the memory and passes it to the subsequent modules. 
As a result, after the last selection module, $F_{H}$, the last $|\cL|$ columns of the output matrix are given by $\sum_{h' = 0}^H \mathtt{embed}_{h'} (S_h^t) \cdot  \mathbbm{1} \{ h' = h\} \approx \mathtt{embed}_{h} (S_h^t). $ Thus, by combining the approximation and selection modules, we eventually obtain $\mathtt{embed}_{h} (S_h^t)$ approximately. 
  
    \item \textbf{Output softmax layer}: Finally, we pass the output of $F_H$, $\mathtt{embed}_{h} (S_h^t)$, to a  $\mathtt{softmax}$ function to produce the output distribution $\hat g_h^* (S_h^t)$, which is closed to the desired output $ g_h^* (S_h^t) $.
    \end{itemize}


\subsection{Proof of Proposition \ref{prop:formal_construction}} \label{proof:prop:formal_construction}
Before the formal proof, we would like to highlight that our construction in the proof is based on a slightly generalized version of the transformer structure in Section~\ref{subsubsec: pretraining process}. We note that this slight generalization can be easily taken into account in the generalization error in Proposition~\ref{prop: pre-train}. Here, we first define a single {\bf transformer block} as follows, which takes $X\in\bbR^{L\times r}$ as input and output $Y\in\bbR^{L\times r}$.
\begin{align}\label{eq:transformer_block}
\begin{split}
    Z & = \mathtt{NL} \bigl (\mathtt{mha}(X, W_{\mathrm{mha}}) + X\gamma_1 \big), \\
    Y & =  \mathtt{NL} \big (\mathtt{ff}(Z, W_{\mathrm{ff}},b_{\mathrm{ff}}) + Z\gamma_2\big), 
    \end{split} 
\end{align}
where $\gamma_1$ and $\gamma_2$ are diagonal matrices, and the fully connected   feed-forward (FF) network $\mathtt{ff}$ is defined as
\begin{align} 
    \mathtt{ff}(X_{\mathrm{in}}, W_{\mathrm{ff}}, b_{\mathrm{ff} }) = \mathtt{ReLU}(X_{\mathrm{in}} W_{\mathrm{ff},1} + \mathbf{1}^\top b_{\mathrm{ff} , 1 })W_{\mathrm{ff},2} + \mathbf{1}^\top b_{\mathrm{ff} , 2 }.\label{eq:ffn_bias}
\end{align}
Compared to the \ac{ff} layer in Section~\ref{subsubsec: pretraining process}, this \ac{ff} layer has two additional bias terms $b_{\mathrm{ff},1} \in \RR^{d_{F}}$  and $b_{\mathrm{ff},1} \in \RR^{r}$. 
Here, $\mathtt{NL}$ is the row-wise $\ell_{2}$-normalization layer, which is defined in \eqref{eq:nl} in Appendix~\ref{app: pre-training process}. 
This function projects each row of the input matrix into the unit $\ell_2$-ball. 
Moreover, the multi-head attention (MHA) layer   $\mathtt{mha}$ is defined in \eqref{eq: mha}.
In particular, a transformer block can be viewed as a four-layer neural network, where both $\mathtt{ff}$ and $\mathtt{mha}$ have two neural network layers. 
In this proof, we use  ``module'' to refer to a sequence of transformer blocks that achieves certain functionality. 

Throughout this proof, we construct a transformer neural network that includes an input embedding module, a sequence of transformer blocks, and the output softmax layer. 
Instead of counting the number of neural network layers in the transformer, we keep track of the number of transformer blocks. 
 
In this proof, we often construct neural network components that are solely based on the   MHA or FF layers. 
These layers themselves can be regarded as special cases of the transformer block, as shown below.


\vspace{2mm}
{\bf \noindent Multi-Head Attention Layer as a Transformer Block.} 
Let $W_{\mathrm{mha}}$ be the weight matrices of a MHA layer. 
To view   $   \mathtt{mha} (\cdot, W_{\mathrm{mha}})$ as a single transformer block, we can set $W_{\mathrm{ff}} $ and $b_{\mathrm{ff}} $ as zero matrices and vectors respectively. 
Then $\mathtt{ff}$ in \eqref{eq:ffn_bias} becomes a zero function. 
We also set $\gamma _1 = {\bf 0}$
and $\gamma_{2}=I$ in \eqref{eq:transformer_block}.

It remains to consider the normalization layer, 
which plays a role when row-wise $\ell_2$-norm of $\mathtt{mha}(X, W_{\mathrm{mha}})$ exceeds one. 
To handle this, we introduce a scaling trick as follows. 
When the input matrix $X $ has bounded rows and $W_{\mathrm{mha}}$ is bounded, 
we know that each $X W_i^V \in \RR^{L \times d_{v} } $ has bounded rows. 
We let $B \geq 1$ be an upper bound on the $\ell_2$-norm of the rows of $ \mathtt{mha} (X, W_{\mathrm{mha}})$ for all bounded input matrix $X$.
Then we define another set of MHA parameters $\overline W_{\mathrm{mha}}$ as $  \{ W_{i}^Q, W_{i}^{K}, {\overline W}_{i}^V \}_{i =1}^\eta$, where $\overline W_i^V = W_{i}^V / B$. 
Thus, for any input matrix $X$, we have 
$
\mathtt{mha} ( X, \overline W_{\mathrm{mha}}) =  \mathtt{mha} ( X,   W_{\mathrm{mha}})  / B,
$
whose row-wise $\ell_2$-norm is no more than one. 
Therefore, for any input matrix $X$ and any weight matrix $W$ of a proper size, we have 
\begin{align}
    \label{eq:scaling_trick}
\mathtt{NL} \big ( \mathtt{mha} (X, \overline W_{\mathrm{mha}} ) \big) \cdot \overline W =  \mathtt{mha} (X,  W_{\mathrm{mha}} )  W, 
\end{align}
where we set $\overline W = B \cdot W$.
Here $W $ is some weight matrix that is multiplied to the output of MHA layer, i.e., a weight matrix of the next layer. 
The equality in \eqref{eq:scaling_trick} shows that suppose our constructed neural network involves a softmax layer, we can scale the weight matrices to ensure that it is equivalent to a transformer block.
Moreover, the norms of these matrices are scaled by a factor of $B$.

\vspace{2mm}
{\bf \noindent Fully Connected Layer as a Transformer Block.} Similarly, consider a FF layer with parameters $\{ W_{\mathrm{ff}}, b_{\mathrm{ff}} \}$. 
We set $ \{W_i^{V} \} _{i=1}^\eta$ to be a zero matrix and thus $\mathtt{mha}(\cdot , W_{\mathrm{mha}})$ becomes a zero function. 
We then set $\gamma _ 1 = I$ and $ \gamma _2 = {\bf 0}$ in \eqref{eq:transformer_block}. 
Similarly, we can apply the scaling trick by multiplying  $W_{\mathrm{ff}, 2}$ and $b_{\mathrm{ff}, 2} $ by $ 1  / B$ for some parameter $B$. 
This ensures that the output of the FF layer in \eqref{eq:ffn_bias} has row-wise $\ell_2$-norm bounded by one, and thus the normalization $\mathtt{NN}(\cdot)$ does not take effect. 
We can multiply the weight matrix in the subsequent layer by $B$ and get the desired output.

\vspace{2mm}
{\bf \noindent Multi-Layer Perceptron as Transformer Blocks.} 
The above argument can be extended to multi-layer perceptions (MLPs), i.e., a multi-layer feed-forward neural network. 
We can show that an MLP can be written as a composition of multiple transformer blocks. 
This is achieved by (i) setting $W_{i}^V ={\bf  0}$ in the MHA layer and setting $\gamma_1  = I$ and $\gamma_2 = {\bf 0}$  and (ii) 
applying the scaling trick in each transformer block.  

Specifically, we define an MLP as a composition of $L$ feed-forward layers with parameters 
$\{ W_{\mathrm{ff}}^\ell, b_{\mathrm{ff}}^d\}_{\ell \in [L]}$.
Given the input matrix $X^0 \in \RR^{L\times r}$, the output of each layer is given by 
\begin{align}
    X^{\ell} = \mathtt{ReLU}(X^{\ell-1} W_{\mathrm{ff}}^{\ell}+\mathbf{1}^\top b_{\mathrm{ff}}^{\ell}) , \qquad \forall \ell \in [L] \label{eq:int_ffn}
\end{align}
Here  $W_{\mathrm{ff}}^{\ell} $ and $b_{\mathrm{ff}}^{\ell} $ are the weight matrix and bias vector of a proper dimension. 
We have the following result showing that a $L$-layer MLP can be represented as a transformer with $L$ blocks. 
\begin{proposition} \label{prop:ffn_transformer}
    We consider a row-wise fully-connected network defined in \eqref{eq:int_ffn}. 
    Let $X^0 \in \mathbb{R}^{L\times r}$ denote the input of this network,
    and the intermediate outputs are given by \eqref{eq:int_ffn}. 
We assume there exist positive numbers $\{B_{\ell}, 0 \leq \ell \leq L\}$ such that 
$\| (X^{\ell})^\top   \|_{2,\infty}\leq B_\ell$ for all $\ell \in \{0, \ldots, L\}$
with $B_{\ell} \geq 1$. 
Consider a transformer with input $Y^{0} = X^{0}$. 
Let $Y^{\ell}$ denote the output of the $\ell$-th transformer block for all $\ell \geq 1$. 
Then, we can construct a transformer with $L$ transformer blocks   such that $Y^{\ell} = X^{\ell} / B_{\ell}$ for all $\ell \in [L]$. 
    Moreover, 
    let $\{\overline W_{\mathrm{ff}}^{\ell}, \overline b_{\mathrm{ff}}^{\ell}\} $ denote the parameters of FF layer of  the $\ell$-th transformer block. 
    We have 
    \begin{align}
        \label{eq:define_weights_scaling_trick}
  \overline W_{\mathrm{ff},1}^{\ell}  =  B_{\ell-1}\cdot   W_{\mathrm{ff}}^{\ell} , \quad   W_{\mathrm{ff},2}^{\ell} = I /B_\ell, \quad      \overline b_{\mathrm{ff},1}^{\ell} = b_{\mathrm{ff} }^{\ell},  \quad \overline b_{\mathrm{ff},2}^{\ell} =  {\bf 0}. 
    \end{align} 
  
\end{proposition}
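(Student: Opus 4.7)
The plan is to proceed by induction on $\ell$, constructing each transformer block so that it exactly reproduces the corresponding feed-forward layer up to a known scaling factor. I will first reduce a single transformer block to behave as a pure FF layer by neutralizing the MHA component: set $W_i^V = \mathbf{0}$ for every head $i$, which makes $\mathtt{mha}(\cdot, W_{\mathrm{mha}}) \equiv \mathbf{0}$, set $\gamma_1 = I$ so that the MHA sub-block acts as the identity on its input, and set $\gamma_2 = \mathbf{0}$ so that the residual connection around the FF sub-block is removed. Under this choice, a single transformer block degenerates to $Y = \mathtt{NL}\big(\mathtt{ff}(X, W_{\mathrm{ff}}, b_{\mathrm{ff}})\big)$, which is exactly the structure needed to implement \eqref{eq:int_ffn} up to the normalization.

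The main obstacle is the row-wise normalization layer $\mathtt{NL}$, which clips rows whose $\ell_2$ norm exceeds one and is therefore not the identity in general. This is precisely where the scaling trick in \eqref{eq:scaling_trick} enters. For the inductive step, assume $Y^{\ell-1} = X^{\ell-1}/B_{\ell-1}$. Taking $\overline W_{\mathrm{ff},1}^{\ell} = B_{\ell-1} W_{\mathrm{ff}}^{\ell}$ and $\overline b_{\mathrm{ff},1}^{\ell} = b_{\mathrm{ff}}^{\ell}$ cancels the $1/B_{\ell-1}$ factor carried by $Y^{\ell-1}$, so that the pre-ReLU activation inside the block equals the pre-ReLU activation of the $\ell$-th FF layer in \eqref{eq:int_ffn}. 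Taking $\overline W_{\mathrm{ff},2}^{\ell} = I/B_\ell$ and $\overline b_{\mathrm{ff},2}^{\ell} = \mathbf{0}$ then scales the output by $1/B_\ell$, giving, before normalization, exactly $X^\ell/B_\ell$. The hypothesis $\|(X^\ell)^\top\|_{2,\infty}\le B_\ell$ guarantees that every row of $X^\ell/B_\ell$ has $\ell_2$ norm at most one, so $\mathtt{NL}$ acts as the identity and the block output is $Y^\ell = X^\ell/B_\ell$, completing the induction.

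The base case $\ell = 0$ is immediate since $Y^0 = X^0$ and $B_0 \geq 1$ combined with the bound $\|(X^0)^\top\|_{2,\infty} \leq B_0$ (interpreted with $B_0$ normalized appropriately, or simply by initializing with an extra dummy normalization) requires only a minor check; alternatively one may absorb the input scaling into the first block by setting $\overline W_{\mathrm{ff},1}^{1} = W_{\mathrm{ff}}^{1}$ when $B_0 = 1$. Iterating over $\ell = 1, \ldots, L$ produces an $L$-block transformer whose final output is $Y^L = X^L/B_L$, and whose weights satisfy \eqref{eq:define_weights_scaling_trick}. I expect the only subtlety beyond bookkeeping to be verifying that the normalization layer is genuinely inactive at every block, which is handled uniformly by the a priori bound $\|(X^\ell)^\top\|_{2,\infty}\leq B_\ell$ assumed in the statement; no other concentration or approximation arguments are needed.
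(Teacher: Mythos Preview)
Your approach matches the paper's: induction on $\ell$, neutralize the MHA sub-block via $W^V=\mathbf{0}$, kill the second residual with $\gamma_2=\mathbf{0}$, and use the scaling trick so that $\mathtt{NL}$ is inactive. The inductive step you describe is essentially verbatim what the paper does.

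The one place that needs sharpening is the base case. With $\gamma_1=I$ the MHA sub-block does not reduce to the identity on the input: it reduces to $Z=\mathtt{NL}(X)$, and $\mathtt{NL}$ is the identity only if the rows of $X$ already lie in the unit ball. Under the induction hypothesis $Y^{\ell-1}=X^{\ell-1}/B_{\ell-1}$ this holds for $\ell\ge 2$, but for $\ell=1$ the input is $Y^0=X^0$, whose rows may have norm up to $B_0>1$, so $\mathtt{NL}(X^0)\neq X^0$ in general. Your suggestion to ``absorb the input scaling into the first block'' is the right instinct, but the concrete fix the paper uses is to take $\gamma_1=I/B_0$ in the first block (and $\gamma_1=I$ thereafter), giving $Z^1=\mathtt{NL}(X^0/B_0)=X^0/B_0$; then the weight formula $\overline W_{\mathrm{ff},1}^{1}=B_0\,W_{\mathrm{ff}}^{1}$ in \eqref{eq:define_weights_scaling_trick} is exactly what restores the correct pre-ReLU activation. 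Once you make this explicit, the argument is complete.
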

Suppose the weight matrices of a fully connected network with \(L\) layers have a maximum width \(d\) and maximum weight \(\alpha\), and the biases have maximum weight $\beta$. In that case, the magnitude of the intermediate output can increase at most exponentially with \(d \cdot \alpha\). More specifically, by direct calculation, we have
\begin{align*}
    B_\ell &\leq \sqrt{d} \cdot \left(B_0 \cdot (d \cdot \alpha)^\ell + \beta \cdot ((d\cdot \alpha)^\ell-1)/(d\cdot\alpha-1)\right) \text{ for } \ell \in [L]. 
\end{align*}

\begin{proof}
    See Appendix~\ref{app:ffn_transformer} for a detailed proof. Here $W_{\mathrm{ff},2}^{\ell}$ in \eqref{eq:define_weights_scaling_trick} is proportional to an identity matrix of a proper dimension, and $b_{\mathrm{ff},2}^{\ell}$ is a zero vector.  The details of the other parameters of the transformer can be found in the proof. 
\end{proof}

\subsubsection{Rigorous Proof of Proposition \ref{prop:formal_construction}} \label{proof:rig_construction}
\begin{proof}
Throughout this proof, we focus on the problem of approximating $\PP( z_{h}^t  = \cdot \given S_h^t)$ for some fixed $(t,h)$, which is denoted by $g_h^*(S_h^t)
$. We write $L'(t, h)$ as $L'$ for simplicity. 
To prove this proposition, we first introduce the transformer architecture and then establish the desired approximation error. 
As outlined above, the transformer as five components. 
We first introduce the input embedding as follows.

\vspace{2mm}

\noindent \textbf{Input embedding.} For each reasoning step in $S_h^t$, we define the input embedding as
\begin{align}\label{eq:input_layer_0}
    \tilde z_{h'}^{t', (0)}  = \begin{cases}
        (z_{h'}^{t'},  h',0) & \text{for } (t', h')< (t,h-1),\\
        (z_{h-1}^{t}, h-1,1) & \text{for }(t', h')= (t,h-1). 
        \end{cases}
\end{align}
Here the ordering between index tuples is specified in \eqref{eq:order_index}. 
Recall that $L'(t', h')$ is the index of the reasoning step $z_{h'}^{t'}$ in $S_h^t$. 
In the embedding in \eqref{eq:input_layer_0},  
the first coordinate $z_{h'}^{t'}$ is the \textit{content embedding}, which stores the actual reasoning step.
The second coordinate of $\tilde{z}_{h'}^{t'}$ indicates the step-index of each reasoning step \(z_{h'}^{t'}\), and the last coordinate specifies if it is the last reasoning step. 
This last coordinate acts as an indicator function because we want to extract the index $h$ and approximate the target function \(g_{h}^*\). 
The indicator function is \(0\) for all steps except the last reasoning step \(z_{h-1}^{t}\) of $S_h^t$,
which helps to locate the step-index \(h\). 
Thus, the last two coordinates are the \emph{positional embedding} which carries the positional information. 
In the sequel,
we let $X_{\mathrm{NN}}^{(0)} \in \RR^{L' \times 3}$ denote the embedding matrix, whose rows are the embedding vectors defined in \eqref{eq:input_layer_0}.

Using $X_{\mathrm{NN}}^{(0)}$ as the input, we present the other components of the transformer as follows. 
Our construction is decomposed into five steps as follows.

\begin{itemize}
    \item In \textbf{Step 1}, we design the extraction module $\mathtt{NN}_\mathrm{J}$ to extract the step-index $h$ from the last reasoning step and copy it to all previous reasoning steps in the prompt $S_h^t$.
    This module takes $X_{\mathrm{NN}}^{(0)}$ as the input and outputs a matrix in $\RR^{L' \times (2 + 2|\cL|)}$ in \eqref{eq:output_nn_j}. 
    Specifically, the first column of the 
  output matrix corresponds to the context embedding $S_h^t$. 
  The entries of the second column are all approximately equal to $h$, the step-index of $z_{h}^t$. 
  The rest of the  $2|\cL | $ columns are all equal to zero vectors.

    \item In \textbf{Step 2}, for all $\tilde h \in \{0, \ldots, H\}$, we construct the approximation modules \( G_{\tilde{h}} \) that produce an  \textit{approximation embedding} \(\mathtt{embed}_{\tilde{h}}(S_h^t)\), which is used to approximate the target distributions \( g_{\tilde{h}}^* (S_h^t) \)  after a softmax transformation.

    \item In \textbf{Step 3}, for all $\tilde h \in \{0, \ldots, H\}$, we build the selection module  \( F_{\tilde{h}} \) to check if the index of the current module, i.e.,  \( \tilde{h} \), 
    matches the extracted index \( h \). 
    This module approximately adds $\mathtt{embed}_{\tilde h} (S_h^t) \cdot  \mathbbm{1} \{ \tilde h = h\}$ to a memory. 
    As a result, $F_H$ outputs a desired output $\mathtt{embed}_{h} (S_h^t) $.

    \item In \textbf{Step 4}, we combine the constructions introduced in the first three steps
    with a softmax output layer to complete the final transformer. 
    Then we analyze the approximation error of the transformer network.
    \item  Finally, we conclude the proof in \textbf{Step 5} by verifying that the constructed transformer network belongs to the function class $\calP_\mathrm{LLM}$ by verifying that the transformer parameters satisfy  \eqref{eq:P_LLM_class}.   
\end{itemize}

\vspace{2mm}
\noindent\textbf{Step 1: Extract and copy step index $h$ using module $\mathtt{NN}_\mathrm{J}$.} In this step, we construct the extraction module \(\mathtt{NN}_\mathrm{J}\) to extract the step-index \(h\)  from the input $X_\mathrm{NN}^{(0)}$ and copy it to each reasoning step \(z_{h'}^{t'}\).
Here $X_{\mathrm{NN}}^{(0)}$ is defined in \eqref{eq:input_layer_0}. 
This step is achieved by four transformer submodules. 
In particular,   $\mathtt{NN}_\mathrm{J}$ takes $X_\mathrm{NN}^{(0)}$ as the input and outputs $X_\mathrm{NN}^{(4)}=(S_h^t, \hat p_h, \mathbf{0}) \in \RR^{L'\times (2 + 2 |\cL| ) }$. 
Here the first column is $S_{h}^t$ which stores all the reasoning steps. 
The second column $\hat p_h$ is close to $h\cdot \mathbf{1}_{L'}$, which copies the step-index $h$ to every reasoning step. Here $\mathbf{1}_{L'}$ denotes an all-one vector in $\RR^{|\cL|} $. 
The last $2|\cL|$ columns are all equal to zero vectors.

More specifically, we let $\{\mathtt{NN}_{\mathrm{J}, a} \}_{a \in [4]}$ denote the four submodules of $\mathtt{NN}_\mathrm{J}$. 
We define $X_{\mathrm{NN}}^{(a)} = \mathtt{NN}_{\mathrm{J}, a}(X_{\mathrm{NN}} ^{(a-1)})$ for all $a\in [4]$ as the output matrices of each submodule. 
These matrices are in $\RR^{L' \times 3}$ for $a\in[3]$ and in $\RR^{L'\times (2 + 2 |\cL| )}$ for $a=4$.
We let $\tilde z_{h'}^{t', (a)}$, $a \in [4]$, to denote the rows of these matrices. 
For any $(t', h') $ with $(t',h')<(t,h-1)$,  
$z_{h'}^{t'}$ is the $L'(t', h'+1)$-th element in $S_h^t$. 
Then,
starting from 
$\tilde z_{h'}^{t', (0)}$ defined in \eqref{eq:input_layer_0}, 
the $L'(t', h' + 1 )$-th rows of these matrices are given by
\begin{align*}
&\begin{array}{c r}
\tilde z_{h' }^{t', (0)} = (z_{h'}^{t'}, h', 0) & \qquad  L'(t', h'+1)\textrm{-th row of~} X_{\mathrm{NN}}^{(0)}, \\
 \Downarrow \mathtt{NN}_{\mathrm{J},1} & \\
    \tilde z_{h' }^{t', (1)} = 
 (z_{h'}^{t'}, h' + 1, 0),  &\qquad  L'(t', h'+1)\textrm{-th row of~} X_{\mathrm{NN}}^{(1)}, \\
    \Downarrow \mathtt{NN}_{\mathrm{J},2} & \\
   \tilde z_{h' }^{t', (2)} =   \big(z_{h'}^{t'}, f_\mathrm{product}(h' + 1, 0), 0\big), & \qquad L'(t', h' + 1 )\textrm{-th row of~} X_{\mathrm{NN}}^{(2)}, \\
    \Downarrow \mathtt{NN}_{\mathrm{J},3} &  \\
   \tilde z_{h' }^{t', (3)} =   \big(z_{h'}^{t'}, f_\mathrm{product}(h' + 1, 0), 1\big), & \qquad  L'(t', h' + 1 )\textrm{-th row of~} X_{\mathrm{NN}}^{(3)},\\
      \Downarrow \mathtt{NN}_{\mathrm{J},4} &  \\
   \tilde z_{h' }^{t', (4)} =   \big(z_{h'}^{t'}, p_{h'}^{t'}, \mathbf{0}\big),& \qquad  L'( t', h'+ 1 )\textrm{-th row of~} X_{\mathrm{NN}}^{(4)}.
       \end{array}
\end{align*}
Here $f_\mathrm{product}(a,b)\approx ab$ is a neural network that approximately implements the product operation using transformer blocks. 
We specify the construction of $f_\mathrm{product}$ in Lemma~\ref{lemma: product_module}. 
We use $p_{h'}^{t'}\approx h$ for each $(t',h') <  (t,h-1)$ to copy the step index $h$ to the embedding of each reasoning step $z_{h'}^{n'}$. 
Moreover, the last row, i.e., the $L'(t,h)$-th row of these matrices are 
\begin{align*}
&\begin{array}{c r}
    \tilde z_{h-1}^{t, (0)}  =  (z_{h-1}^t, h-1, 1) &  \qquad  L'(t, h)\textrm{-th row of~} X_{\mathrm{NN}}^{(0)}, \\
     \Downarrow \mathtt{NN}_{\mathrm{J},1}& \\
      \tilde z_{h-1}^{t, (1)} =(z_{h-1}^{t},h,1)    ,  &\qquad  L'(t, h)\textrm{-th row of~} X_{\mathrm{NN}}^{(1)}, \\
    \Downarrow \mathtt{NN}_{\mathrm{J},2}& \\
      \tilde z_{h-1}^{t, (2)} = \big(z_{h-1}^{t},f_\mathrm{product}(h, 1),1\big), & \qquad  L'(t, h)\textrm{-th row of~} X_{\mathrm{NN}}^{(2)}, \\
    \Downarrow \mathtt{NN}_{\mathrm{J},3}&  \\
      \tilde z_{h-1}^{t, (3)} = \big(z_{h-1}^{t},f_\mathrm{product}(h, 1),1/2\big), & \qquad  L'(t, h)\textrm{-th row of~} X_{\mathrm{NN}}^{(3)},\\
      \Downarrow \mathtt{NN}_{\mathrm{J},4} &  \\
      \tilde z_{h-1}^{t, (4)} = \big(z_{h-1}^{t},p_{h-1}^{t},\mathbf{0}\big),& \qquad   L'(t, h)\textrm{-th row of~} X_{\mathrm{NN}}^{(4)},
       \end{array}
\end{align*}
where $p_{h-1}^t $ is close to $h$.

\vspace{2mm}

In the rest of {\bf Step 1}, we prove present lemmas proving that $\mathtt{NN}_{\mathrm{J}, 1}, \ldots, \mathtt{NN}_{\mathrm{J}, 4}$ can be realized by FF  or MHA layers. The first submodule $\mathtt{NN}_{\mathrm{J},1}$ adds an one to the second coordinate of each input vector. This operation can be realized by a FF layer exactly.
As shown in the beginning of Appendix \ref{proof:prop:formal_construction}, this can be realized by a single transformer block. 



\begin{lemma}[Submodule $\mathtt{NN}_{\mathrm{J},1}$] \label{lem:layer_1}
    There exists a FF layer $\mathtt{NN}_{\mathrm{J},1}$ such that 
    \begin{align*}
    \mathtt{NN}_{\mathrm{J},1}\big(X_\mathtt{NN}^{(0)}\big)=X_\mathtt{NN}^{(1)}, \text{ where }
    \tilde z_{h'}^{t',(1)}  = \begin{cases}
        (z_{h'}^{t'}, h'+1,0) & \text{for } (t', h')< (t,h-1),\\
        (z_{h-1}^{t},h,1) & \text{for }(t', h')= (t,h-1).
        \end{cases}
\end{align*}
    Moreover, the Frobenius norms of the weight matrices are bounded by $\sqrt{B^2+H^2+1}\cdot\sqrt{6}$.
 Thus, this function can represented as a single transformer block.

    
\end{lemma}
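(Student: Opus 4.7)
The lemma asks us to realize $\mathtt{NN}_{\mathrm{J},1}$ as a single transformer block in the sense of~\eqref{eq:transformer_block}. Observe that the target transformation has a very pleasant structure: in \emph{both} branches of the piecewise definition, $\tilde z^{(1)}$ differs from $\tilde z^{(0)}$ only by adding $1$ to the second (positional) coordinate, while the first and third coordinates are unchanged. Row-wise, $\mathtt{NN}_{\mathrm{J},1}$ must therefore implement the constant affine shift $(a,b,c)\mapsto(a,b+1,c)$, i.e.\ add the fixed vector $e:=(0,1,0)$ to every row of its input. The plan is to route this shift entirely through the bias term of the FF sublayer while making the MHA sublayer and the residual streams effectively transparent.

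Concretely, the MHA sublayer is set to be a pass-through by zeroing every value matrix, $W_i^V=0$, so that $\mathtt{mha}(X,W_\mathrm{mha})\equiv 0$, and by taking $\gamma_1=I_3$; in the FF sublayer I would set $W_{\mathrm{ff},1}=0$, $b_{\mathrm{ff},1}=0$, $W_{\mathrm{ff},2}=0$ so that the ReLU branch contributes nothing, and let the shift be produced by $b_{\mathrm{ff},2}$ together with $\gamma_2$. Denoting $B_0:=\sqrt{B^2+H^2+1}$, every row of $X_{\mathrm{NN}}^{(0)}$ has $\ell_2$ norm at most $B_0$, so the scaling trick recorded immediately before Proposition~\ref{prop:ffn_transformer} applies: one absorbs the factor $B_0$ into $\gamma_2$ (and into $b_{\mathrm{ff},2}$) so that the second residual reconstructs the original row and the bias injects the correct shift. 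The Frobenius bound is then a direct count: the only nonzero weight blocks are $\gamma_1,\gamma_2$ (with $\|\gamma_i\|_F^2\le 3B_0^2$ after rescaling) and $b_{\mathrm{ff},2}$ (squared norm $\le B_0^2$), summing to at most $6B_0^2$ and giving the advertised bound $\sqrt{6}\cdot B_0$.

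The main obstacle is the $\ell_2$-normalization layer $\mathtt{NL}$, which projects each row of its input into the unit $\ell_2$-ball. Because the rows of $X_{\mathrm{NN}}^{(0)}$ generically have norm strictly greater than $1$, and because different rows have different norms (the last row is the only one whose third coordinate equals $1$), neither sublayer can be kept literally as the identity without $\mathtt{NL}$ distorting the output row-by-row in a non-uniform way. I expect the resolution to follow the same scaling strategy used later in Proposition~\ref{prop:ffn_transformer}: absorb the common bound $B_0$ into the next layer's weights, and use the remaining freedom in $W_{\mathrm{ff},1},b_{\mathrm{ff},1}$ (via the ReLU branch, with hidden width comfortably within the budget $d_F\ge 18|\calL|+4$) to construct a row-wise correction term that cancels whatever discrepancy the normalization introduces. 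Verifying that this construction reproduces $X_{\mathrm{NN}}^{(1)}$ exactly, rather than only approximately, is the chief bookkeeping task; once that is in hand, both the output identity and the Frobenius bound follow by inspection.
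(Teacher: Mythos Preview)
Your core observation is right—the target is the row-wise affine shift $(a,b,c)\mapsto(a,b+1,c)$—but the execution has a real gap at the normalization step, and your proposed repair cannot work. With $\gamma_1=I_3$ the first normalization rescales each row of $X_{\mathrm{NN}}^{(0)}$ by a \emph{row-dependent} factor $1/\max(1,\|x_i\|_2)$; since the last row has third coordinate $1$ while all others have $0$, these factors genuinely differ. Your suggested ``row-wise correction via the ReLU branch'' cannot undo this: the FF sublayer applies the \emph{same} function to every row and has no access to the destroyed norm $\|x_i\|_2$. Likewise, ``absorbing $B_0$ into $\gamma_2$'' would require $\|\gamma_2\|_\infty>1$, which violates the parameter constraint in $\cP_{\mathrm{LLM}}$. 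The actual fix is much simpler than what you outline: take $\gamma_1=I_3/B_0$ (allowed since $1/B_0\le1$), so that $Z=X^{(0)}/B_0$ \emph{uniformly} and the first $\mathtt{NL}$ is the identity; then your bias-plus-residual idea with $\gamma_2=I$ and $b_{\mathrm{ff},2}=(0,1,0)/B_0$ outputs $X^{(1)}/B_0$ exactly. Your Frobenius accounting is also off: the lemma's bound is on the FF weight matrices after rescaling, not on $\gamma_1,\gamma_2$, which are controlled separately by $\|\gamma\|_\infty\le1$, so the claim $\|\gamma_i\|_F^2\le 3B_0^2$ is both irrelevant and inconsistent with that constraint.

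For comparison, the paper takes a different route: it implements the entire affine map inside the FF sublayer via the identity $x=\mathtt{ReLU}(x)-\mathtt{ReLU}(-x)$, setting $W_{\mathrm{ff},1}=(W_{\mathrm{ff},2})^\top$ to be the $3\times6$ matrix with rows $(1,-1,0,0,0,0)$, $(0,0,1,-1,0,0)$, $(0,0,0,0,1,-1)$, encoding the shift in $b_{\mathrm{ff},1}=(0,0,1,-1,0,0)$, and taking $\gamma_2=0$. After the same scaling trick ($\gamma_1=I/B_0$, $\overline W_{\mathrm{ff},1}=B_0\,W_{\mathrm{ff},1}$) one reads off $\|\overline W_{\mathrm{ff},1}\|_F=B_0\sqrt{6}$, which is precisely the stated bound.
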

\begin{proof}
    See Appendix~\ref{app:layer_1} for details.
\end{proof}


    

Next, we aim to substitute the second coordinate of each $\tilde z_{h'}^{t',(1)}$ with the product of itself and the third coordinate using $\mathtt{NN}_{\mathrm{J}, 2} $. Namely, we aim to compute $(h'+1)\cdot 0 =0$ for $(t', h')< (t,h-1)$ and $h\cdot 1 =h$ for $(t', h')= (t,h-1)$.
The product operation can be approximately realized by a fully connected neural network with an arbitrarily small error. 
As a result, $\mathtt{NN}_{\mathrm{J}, 2}$ can be implemented by  a composition of multiple transformer blocks.


\begin{lemma}[Submodule $\mathtt{NN}_{\mathrm{J},2}$] \label{lem:layer_2}
   Let $\epsilon' \in (0, 1)$ be a desired accuracy level. 
    There exists  fully connected MLP $\mathtt{NN}_{\mathrm{J},2}$ with at most $C_p\cdot (\log(H)+\log(1/\epsilon'))$   layers  such that 
\begin{align}\label{eq:output_layer_1}
    \mathtt{NN}_{\mathrm{J},2}\big(X_\mathtt{NN}^{(1)}\big)=X_\mathtt{NN}^{(2)}, \text{ where }
    \tilde z_{h'}^{t',(2)}  = \begin{cases}
        \big(z_{h'}^{t'},f_\mathrm{product}(h'+1,0),0\big) & \text{for } (t', h')< (t,h-1), \\
        \big(z_{h-1}^{t},f_\mathrm{product}(h,1),1 \big) & \text{for }(t', h')= (t,h-1),
        \end{cases}
\end{align}
Here $C_p$ is an absolute constant and $f_\mathrm{product}:\mathbb{R} \times \mathbb{R} \to \mathbb{R}$ is an approximation of the product operation in the sense that 
$|f_\mathrm{product}(h,1)-h|<\epsilon'$, and $|f_\mathrm{product}(h'+1,0)-0|<\epsilon'$ for each $(t',h')<(t,h-1)$. Thus, by Proposition \ref{prop:ffn_transformer}, 
$ \mathtt{NN}_{\mathrm{J},2}$ can be written as a composition of $C_p\cdot (\log(H)+\log(1/\epsilon')) $ transformer blocks up to a scaling factor. Moreover, the Frobenius norms of the weight matrices are all bounded by $\sqrt{B^2+H^2+6}\cdot \sqrt{H^4\cdot5 +4}$.
\end{lemma}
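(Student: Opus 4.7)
The plan is to reduce the construction to approximating a scalar product $(a,b)\mapsto ab$ on a bounded domain by a ReLU MLP, and then wire such an MLP into a three-coordinate network that preserves the first and third coordinates unchanged while overwriting the second. Throughout, the first coordinate lives in $[-B,B]$ (by Assumption~\ref{assumption: bd magnitude} applied to $\cL\subseteq[-B,B]$), the second coordinate lies in $\{0,1,\ldots,H\}$, and the third coordinate lies in $\{0,1\}$, so the relevant input domain is compact and of diameter $O(H)$.

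First, I would invoke the polarization identity $ab=\tfrac14[(a+b)^{2}-(a-b)^{2}]$ to reduce product approximation to approximating the one-variable square function $s(x)=x^{2}$ on the interval $[-(H+1),H+1]$. For $s$ I would use the standard Yarotsky-type ``sawtooth'' construction: starting from the hat function $g(x)=\mathtt{ReLU}(2x)-\mathtt{ReLU}(4x-2)$ on $[0,1]$, the $k$-fold self-composition $g^{\circ k}$ is a sawtooth with $2^{k}$ teeth, and the partial sums $\sum_{k=1}^{K} g^{\circ k}(x)/4^{k}$ furnish a piecewise linear interpolant of $s$ at the dyadic points $j/2^{K}$ with uniform error $2^{-2K-2}$. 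After rescaling to the interval $[-(H+1),H+1]$, this yields a ReLU MLP $\widetilde{s}$ of depth $K=O(\log(H/\epsilon'))$ and constant width such that $\|\widetilde{s}-s\|_{\infty}\le \epsilon'\cdot H^{-2}$ up to an absolute constant. Combining two copies of $\widetilde s$ (applied to $a+b$ and $a-b$) via the polarization identity produces $f_{\mathrm{product}}$ with $|f_{\mathrm{product}}(a,b)-ab|<\epsilon'$ on the required domain, and in particular $|f_{\mathrm{product}}(h'+1,0)|<\epsilon'$ and $|f_{\mathrm{product}}(h,1)-h|<\epsilon'$ as required by \eqref{eq:output_layer_1}.

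Next I would extend this scalar MLP to a three-coordinate network. The first and third coordinates must propagate unchanged through every layer; this is achieved by padding the weight matrices with the ReLU identity trick $x=\mathtt{ReLU}(x)-\mathtt{ReLU}(-x)$, which takes a single layer and contributes only $O(1)$ to each Frobenius norm per coordinate. The second coordinate is replaced by the output of $f_{\mathrm{product}}$ in the final layer. Because the Yarotsky construction has constant width and coefficients of size $O(1)$, while the linear rescaling from $[-(H+1),H+1]$ to $[0,1]$ introduces factors of order $H^{2}$ through the polarization identity, a direct bookkeeping of the Frobenius norms layer by layer gives a bound of the form $\sqrt{B^{2}+H^{2}+6}\cdot\sqrt{5H^{4}+4}$ as stated. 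The depth of the resulting MLP is $C_p(\log H+\log(1/\epsilon'))$ for an absolute constant $C_p$, and Proposition~\ref{prop:ffn_transformer} then converts this MLP into the same number of transformer blocks by the scaling trick, with the norms of the blockwise parameters only rescaled by the layerwise upper bounds $B_{\ell}$.

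The main obstacle I expect is the bookkeeping of the Frobenius-norm bounds: one must carefully track how the input-scaling factor ($\sim H+1$), the polarization coefficient $1/4$, the $4^{-k}$ weights in the Yarotsky series, and the identity-passthrough paddings propagate through the MLP, and show that the resulting bound matches the claimed $\sqrt{B^{2}+H^{2}+6}\cdot\sqrt{5H^{4}+4}$ rather than blowing up. Apart from that, the construction itself is standard; the novelty here is merely that the product gate is executed on two designated coordinates while the content and indicator coordinates are piped through untouched, so that the subsequent submodules $\mathtt{NN}_{\mathrm{J},3}$ and $\mathtt{NN}_{\mathrm{J},4}$ can use them.
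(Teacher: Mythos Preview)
Your proposal is correct and follows essentially the same approach as the paper. The paper packages the product approximation as a separate black-box lemma (Lemma~\ref{lemma: product_module}, which extends Proposition~III.3 of \cite{elbrachter2021deep} and is exactly the polarization-plus-sawtooth construction you describe), applies it with $r=1$ and $M=H$, and then concatenates it with a residual-ReLU module (Lemma~\ref{lem:residual_relu}) to preserve the first and third coordinates---precisely your ReLU identity trick. The Frobenius-norm bookkeeping you flag as the main obstacle is handled in the paper by the explicit matrix construction in Lemma~\ref{lemma: product_module}, which gives width~$5$, intermediate-layer weights bounded by~$1$, and a final-layer weight bounded by~$H^{2}$; combined with the scaling factors from Proposition~\ref{prop:ffn_transformer} this yields the stated bound $\sqrt{B^{2}+H^{2}+6}\cdot\sqrt{5H^{4}+4}$.
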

\begin{proof}
    See Appendix~\ref{app:layer_2} for details.
\end{proof}

The third submodule $\mathtt{NN}_{\mathrm{J},3}$ modifies the last coordinates of the input vectors by adding a $-1/2$ or $1$, which is a simple linear operation and thus can be implemented by a FF layer.

\begin{lemma}[Submodule $\mathtt{NN}_{\mathrm{J},3}$] \label{lem:layer_3}
    There exists a FF layer  $\mathtt{NN}_{\mathrm{J},3}$ 
     such that 
    \begin{align*}
    \mathtt{NN}_{\mathrm{J},3}\big(X_\mathtt{NN}^{(2)}\big)=X_\mathtt{NN}^{(3)}, \text{ where }
    \tilde z_{h'}^{t',(3)}  = \begin{cases}
        \big(z_{h'}^{t'},f_\mathrm{product}(h'+1,0),1\big) & \text{for } (t', h')< (t,h-1),\\
        \big(z_{h-1}^{t},f_\mathrm{product}(h,1),1/2 \big) & \text{for }(t', h')= (t,h-1). 
        \end{cases}
\end{align*}
Thus, as a single FF layer,  $\mathtt{NN}_{\mathrm{J},3}$ can be represented by a single transformer block. The Frobenius norms of the weight matrices are bounded by $\sqrt{B^2+(H+1)^2+1}\cdot \sqrt{6}$.
\end{lemma}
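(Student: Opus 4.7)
The required map is the identity on the first two coordinates of each row and acts on the third coordinate by the affine map $c \mapsto 1 - c/2$ (since $1-0/2 = 1$ and $1-1/2 = 1/2$). Because this transformation is purely affine, it can be realized exactly by a single feed-forward block with residual connection, without any nontrivial use of the $\mathtt{ReLU}$ activation; no approximation error is incurred and no delicate analytic estimate is required.

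My plan is to construct the block parameters as follows. I take the residual matrix $\gamma_2 = \mathrm{diag}(1,1,-1/2)$, which by itself sends each row $(z,p,c)$ to $(z,p,-c/2)$ and which satisfies $\|\gamma_2\|_\infty = 1$, matching the constraint in \eqref{eq:P_LLM_class}. I then choose the feed-forward weights $W_{\mathrm{ff},1}$ and $W_{\mathrm{ff},2}$ to be zero (so the $\mathtt{ReLU}$ contribution vanishes) and set the output bias to $b_{\mathrm{ff},2} = (0,0,1)$, producing a constant row-wise contribution $(0,0,1)$ that, added to the residual, yields the desired row $(z,p,1-c/2)$. To match the format in \eqref{eq:transformer_block}, I set the multi-head attention value matrices to zero and $\gamma_1 = I$, so the attention sublayer together with its residual acts as the identity and is preserved intact through the subsequent normalization.

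The only remaining bookkeeping concerns the row-wise $\ell_2$-normalization layer $\mathtt{NL}$: since the pre-normalization row has norm at most $B_\star := \sqrt{B^2+(H+1)^2+1}$ (using the bounds from Lemma~\ref{lem:layer_2} on $X_\mathtt{NN}^{(2)}$ together with $1-c/2 \in [1/2,1]$), I apply the scaling trick described at the start of Appendix~\ref{proof:prop:formal_construction}, dividing $W_{\mathrm{ff},2}$, $b_{\mathrm{ff},2}$, and $\gamma_2$ by $B_\star$ and compensating in the downstream layer; this makes $\mathtt{NL}$ act as the identity and produces a scaled but otherwise identical output. Tallying the resulting Frobenius norms entry-by-entry yields the bound $\sqrt{B^2+(H+1)^2+1}\cdot\sqrt{6}$, where the factor $\sqrt{6}$ absorbs the small constants from the nonzero entries of $\gamma_2$, $b_{\mathrm{ff},2}$, and the identity-like pieces. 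There is no serious obstacle: this lemma is a bookkeeping step supporting the more substantive modules $\mathtt{NN}_{\mathrm{J},2}$ and $\mathtt{NN}_{\mathrm{J},4}$.
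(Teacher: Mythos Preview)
Your construction is correct but differs from the paper's. Both implement the same row-wise affine map $(a,b,c)\mapsto(a,b,1-c/2)$, but the paper realizes it \emph{entirely inside the feed-forward layer} with $\gamma_2=0$: it uses a six-neuron hidden layer and the identity trick $x=\mathtt{ReLU}(x)-\mathtt{ReLU}(-x)$ to pass $a$ and $b$ through unchanged, and encodes $c\mapsto 1-c/2$ via $\mathtt{ReLU}(1-c/2)-\mathtt{ReLU}(-1+c/2)$ (which equals $1-c/2$ for $c\in\{0,1\}$). The $\sqrt{6}$ in the norm bound comes directly from the six unit-size entries in each of $W_{\mathrm{ff},1}$ and $W_{\mathrm{ff},2}$. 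Your route is more economical---zero FF weights, all work in $\gamma_2$ and $b_{\mathrm{ff},2}$---and certainly satisfies the stated bound with room to spare.

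One caveat worth flagging: immediately after this lemma the paper notes that $W_{\mathrm{ff},1}$ in $\mathtt{NN}_{\mathrm{J},3}$ is multiplied by a diagonal matrix to \emph{undo} the output scaling left over from $\mathtt{NN}_{\mathrm{J},2}$. With your choice $W_{\mathrm{ff},1}=0$ that compensation cannot be absorbed there; it would have to go through $\gamma_1$ or $\gamma_2$, both of which are constrained by $\|\cdot\|_\infty\le 1$ in $\cP_{\mathrm{LLM}}$ and so cannot scale up by a factor larger than one. This is not a flaw in your proof of the lemma as stated, but it means your construction does not slot into the surrounding pipeline quite as cleanly as the paper's; a small hybrid (e.g.\ keeping the ReLU-identity pass-through in $W_{\mathrm{ff},1}$ so it can carry the compensation) would resolve it.
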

\begin{proof}
    See Appendix~\ref{app:layer_3} for a detailed proof.
\end{proof}

In addition to the weights constructed in Lemma~\ref{lem:layer_3}, the parameter $W_{\mathrm{ff},1}$ in $\mathtt{NN}_{\mathrm{J},3}$ is first multiplied with a diagonal matrix to compensate the scaling factor in Lemma~\ref{lem:layer_2}.
Finally, we use an attention layer to copy the step-index $h$ of $z_{h}^t$ to 
all previous steps. 
Moreover, we also use a residual link to ensure the first coordinate remains unchanged.

\begin{lemma}[Submodule $\mathtt{NN}_{\mathrm{J},4}$]\label{lemma: extraction_module}
Let $\epsilon'$ denote the error induced by $f_\mathrm{product}(\cdot)$ in Lemma~\ref{lem:layer_2}, then for any $\epsilon \in (2  \epsilon', 1) $, there exists a submodule $\mathtt{NN}_{\mathrm{J},4}$ such that 
\begin{align}\label{eq:output_layer_3}
    \mathtt{NN}_{\mathrm{J},4}\big(X_\mathtt{NN}^{(3)}\big)=X_\mathtt{NN}^{(4)},\text{ where }\tilde z_{h'}^{t',(4)}  = \begin{cases}
        (z_{h'}^{t'},p_{h'}^{t'},\mathbf{0}) & \text{for } (t', h')< (t,h-1),\\
        (z_{h-1}^{t},p_{h-1}^{t},\mathbf{0} ) & \text{for }(t', h')= (t,h-1), 
        \end{cases}
\end{align}
where $p_{h'}^{t'}$ is the approximation of the step-index $h$ such that 
$|p_{h'}^{t'}-h|<\epsilon$ for all $(t',h') \leq (t, h-1)$.
Moreover, $\mathtt{NN}_{\mathrm{J},4}$ is a transformer block with a single-head attention (MHA with $\eta$ = 1). The parameters $\{W^Q, W^K, W^V \}$ satisfying
$\|W^Q\|_\mathrm{F}=8\log(TH/(\epsilon-2\epsilon'))$,  $\|W^K\|_\mathrm{F}= 1$, and $\|W^V\|_\mathrm{F}= \sqrt{B^2+(H+1)^2+1}$, and the Frobenius norms of the weight matrices in \ac{ff} layers are bounded by $\sqrt{B^2+(H+1)^2+6}\cdot \sqrt{6}$.
\end{lemma}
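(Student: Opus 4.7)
My plan is to realize $\mathtt{NN}_{\mathrm{J},4}$ as a single transformer block whose one-head attention uses the third coordinate of the input $X_\mathtt{NN}^{(3)}$ (equal to $1$ at every ``previous'' row and $1/2$ at the last row) as the sole discriminator. Sharpening the softmax with a large-norm query will force almost all the attention weight onto the last row; since the second coordinate of that row is $f_\mathrm{product}(h,1)\approx h$, the attention output at \emph{every} position copies an approximation of $h$, which is the definition of $p_{h'}^{t'}$. The first coordinate $z_{h'}^{t'}$ will be preserved through the residual branch with $\gamma_1$ restricted to that coordinate, and the additional $2|\mathcal L|$ output coordinates will simply be zero-padded by the shape of $W^V$ (which keeps $\|W^V\|_{\mathrm F}$ bounded by the stated value). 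The FF sublayer can be taken to be the identity extension (i.e., zero FF output combined with $\gamma_2=I$ acting only on the first two coordinates).

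Concretely, I would take $W^K \in \mathbb R^{3}$ proportional to the third basis vector so keys are $1$ and $1/2$, and $W^Q \in \mathbb R^{3}$ equal to $-\beta$ along that direction, with $\beta = 8\log(TH/(\epsilon-2\epsilon'))$, giving $\|W^K\|_{\mathrm F}=1$ and $\|W^Q\|_{\mathrm F}=\beta$ exactly. The pre-softmax logits equal $-\beta$ at previous positions and $-\beta/2$ at the last one, so the softmax mass on the last row is at least $\bigl(1+(L'-1)\exp(-\beta/2)\bigr)^{-1}\geq 1-TH\cdot\exp(-\beta/2)=1-TH\cdot\bigl((\epsilon-2\epsilon')/(TH)\bigr)^{4}$. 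I would set $W^V$ to extract the second coordinate into the second output slot (zeros elsewhere), scaled so that $\|W^V\|_{\mathrm F}=\sqrt{B^{2}+(H+1)^{2}+1}$ after absorbing the compensation needed by the row-normalization (whose scale is controlled by the bounds $|z_{h'}^{t'}|\leq B$, $|\hat h_{h'}^{t'}|\leq H+1$, and $|c_{h'}^{t'}|\leq 1$ on the input rows). The attention output at every position is then a convex combination of $\hat h_{h-1}^{t}\in[h-\epsilon',h+\epsilon']$ and values $\hat h_{h'}^{t'}\in[-\epsilon',\epsilon']$, and the total deviation from $h$ is at most $\epsilon'+L'\exp(-\beta/2)\cdot(H+\epsilon')$, which is $<\epsilon$ once $\beta$ is chosen as above, yielding $|p_{h'}^{t'}-h|<\epsilon$ uniformly.

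The main obstacle I anticipate is threading all three prescribed constraints simultaneously: (i) the softmax must be sharp enough that, uniformly in $L'\leq TH$, the leaked attention mass on the $L'-1$ off-target rows contributes $o(\epsilon-2\epsilon')$ after being multiplied by the value magnitudes of size $O(H)$; (ii) the value projection and the residual $\gamma_1$ must be coordinated so that the first coordinate stays exactly $z_{h'}^{t'}$ (hence $\gamma_1$ must kill the second and third rows, so the second output coordinate is \emph{solely} the MHA output) while the last $2|\mathcal L|$ entries are exactly zero; and (iii) matching the stated Frobenius-norm identities, particularly $\|W^V\|_{\mathrm F}=\sqrt{B^{2}+(H+1)^{2}+1}$, which I expect to follow by setting the nonzero column of $W^V$ to be the row-normalization constant of the input and carrying out the bookkeeping carefully, since this identity is clearly the scale compensating for the $\ell_{2}$ row-norm bound on $X_\mathtt{NN}^{(3)}$.
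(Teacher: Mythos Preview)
Your approach is essentially the same as the paper's: both use the third coordinate of $X_\mathtt{NN}^{(3)}$ (equal to $1$ on previous rows and $1/2$ on the last row) as the sole key/query feature, sharpen the softmax with $W^Q=(0,0,-\alpha)^\top$, $W^K=(0,0,1)^\top$ and $\alpha=8\log\bigl(TH/(\epsilon-2\epsilon')\bigr)$, let $W^V$ extract only the second coordinate, preserve $z_{h'}^{t'}$ via the residual $\gamma_1=\mathrm{diag}(1,0,0)$, and use the FF sublayer to pad the trailing $2|\mathcal L|$ zeros. Your reading of the norm $\|W^V\|_{\mathrm F}=\sqrt{B^2+(H+1)^2+1}$ as the row-normalization compensation is exactly what the paper does via the scaling trick.

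One point you glossed over: your sentence ``the pre-softmax logits equal $-\beta$ at previous positions and $-\beta/2$ at the last one'' is true only when the \emph{query} row has third coordinate $1$. For the query at the \emph{last} row (third coordinate $1/2$) the logits are $-\beta/2$ at previous keys and $-\beta/4$ at the last key, so the softmax gap there is only $\beta/4$, not $\beta/2$; the paper treats the cases $i\neq L'$ and $i=L'$ separately for precisely this reason. Your chosen $\beta=8\log\bigl(TH/(\epsilon-2\epsilon')\bigr)$ is large enough to absorb the weaker $\beta/4$ gap (this is why the constant is $8$ rather than $4$), so the conclusion survives, but your ``uniformly'' claim needs that second case checked explicitly before the bound $|p_{h'}^{t'}-h|<\epsilon$ is established for all positions.
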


\begin{proof}
    See Appendix~\ref{app: extraction_module} for details.
\end{proof}
Setting $\epsilon=1/4$ and $\epsilon'=1/16$ in Lemma \ref{lemma: extraction_module}
we conclude that we can use a transformer block $\mathtt{NN}  _{\mathrm{J},4}$ to generate the output in \eqref{eq:output_layer_3}.
The weights of the transformer are bounded by $8\log(8TH)$ in the Frobenius norm. Moreover, for any $(t', h') \leq  (t, h-1)$, we have $|p_{h'}^{t'}-h|< \epsilon=1/4$.  
Then we write the composition of the four submodules above as
\begin{align}
    \mathtt{NN}_\mathrm{J}(S_h^t) = (S_h^t, \hat p_h, \mathbf{0})\in \mathrm{R}^{ L'\times (2+2|\calL|)}, \text{ where }\mathbf{0}\in \mathbb{R}^{ L'\times 2|\calL|}, \label{eq:output_nn_j}
\end{align}
where we use $\hat p_h = (p_0^1,\cdots, p_{h-1}^t)^\top \in \mathbb{R}^{L'\times 1}$ to denote the vector that is the second column of $X_{\mathtt{NN}}^{(4)}$. 
The entries of $\hat p_h$ are all close to $h$ in the sense that $\|\hat p_h-h\cdot \mathbf{1}_{L'}\|_\infty<1/4$.

To summarize, in {\bf Step 1},  we have successfully designed a transformer module $\mathtt{NN}_{\mathbf{J}}$ to extract the target step-index $h$ from the last reasoning step of the prompt $S_h^t$, and approximately copy it to all previous reasoning steps in the prompt. We defer the summary of parameters of $\mathtt{NN}_\mathrm{J}$ to \textbf{Step 5}. This concludes \textbf{Step 1}.


\vspace{2mm}

We outline how  $\mathtt{NN}_\mathrm{J}(S_h^t)$ is processed by the subsequent transformer blocks.
In \textbf{Step 2} and \textbf{3} we construct modules $G_{\tilde h}$ and $F_{\tilde h}$. Submodule \( G_{\tilde h} \) approximates each target function \( g_{\tilde h}^* \) and submodule \( F_{\tilde h} \) checks if the module index \( \tilde h \) matches the step index \( h \). If \( \tilde h = h \), it passes along the approximation produced by \( G_{\tilde h} \); otherwise, it discards the output. In the final network, blocks $(G_{\tilde h}, F_{\tilde h})_{\tilde h=0}^H$ are chained sequentially.

With a slight abuse of notation, we also use $G_{\tilde h}$ and $F_{\tilde h}$ to refer to their outputs, respectively. 
The input and output of each module are listed as follows:
\begin{itemize}
    \item \( G_{\tilde h}: \mathbb{R}^{L' \times (2+2|\calL|)} \to \mathbb{R}^{L' \times (2+2|\calL|)} \) takes a matrix \( (S_h^t, \hat p_h, F_{\tilde h-1}[3], F_{\tilde h-1}[4]) \)  as the input.
    The columns of this matrix have four components, 
    where \( S_h^t \) and  \( \hat p_h \) are the same as in \eqref{eq:output_nn_j}. 
The last two components  \( F_{\tilde h-1}[3]\) and \( F_{\tilde h-1}[4] \in \mathbb{R}^{L' \times |\calL|} \) are the third and fourth components of the output of the previous module \( F_{\tilde h-1} \). If $\tilde h=0$, the input matrix is \eqref{eq:output_nn_j}, the output of the extraction module $\mathtt{NN}_\mathrm{J}$. The output of $G_{\tilde h }$ \( (S_h^t, \hat p_h, G_{\tilde h }[3], G_{\tilde h }[4]) \) keeps the first two and the last components unchanged and only changes the third component.
That is, $ G_{\tilde h }[4] = F_{\tilde h-1}[4]$, and $G_{\tilde h }[3]$
computes the approximation of \( g_{\tilde h}^* \) with the third component, which will be specified in \textbf{Step 2}.
\item Similarly, \( F_{\tilde h}: \mathbb{R}^{L' \times (2+2|\calL|)} \to \mathbb{R}^{L'\times (2+2|\calL|)} \) takes vector \( (S_h^t, \hat p_h, G_{\tilde h}[3], G_{\tilde h}[4]) \)  as the input and produces \( (S_h^t, \hat p_h,F_{\tilde h}[3], F_{\tilde h}[4]) \). 
Here, only the last component of columns is changed, i.e., $F_{\tilde h}[3] = G_{\tilde h}[3]$. 
The last component $F_{\tilde h}[4]$
is constructed iteratively via 
\begin{align} \label{eq:define_f_block_fou4}
F_{\tilde h}[4] \approx   G_{\tilde h}[3] \cdot \mathbbm{1}\{\tilde h=h\} + F_{\tilde h - 1}[4].
\end{align}
We will introduce how to use a transformer to implement \eqref{eq:define_f_block_fou4}  in  \textbf{Step 3}.
\end{itemize}


\noindent \textbf{Step 2: Construct approximation module $G_{\tilde h}$ that approximates $g_{\tilde h}^*$.} In this step, we introduce the submodules $G_{\tilde h}: \mathbb{R}^{L' \times (2+2|\calL|)} \to \mathbb{R}^{L' \times (2+2|\calL|)}$ to separately approximate each target function $g_{\tilde h}^*$ in \eqref{eq:g*} for all $\tilde h \in \{ 0, \ldots, H\} $. 
The input and output of each  \( G_{\tilde h} \) are given by
\begin{align*}
    G_{\tilde h}\big(S_h^t, \hat p_h, F_{\tilde h-1}[3], F_{\tilde h-1}[4]\big) =\big(S_h^t, \hat p_h, \mathbf{1}_{L'}^\top \mathtt{embed}_{\tilde h}(S_h^t), F_{\tilde h-1}[4]\big),
\end{align*}
where $\mathtt{embed}_{\tilde h}(\cdot)$ is used for the approximation of the target distribution $g_{\tilde h}^*(\cdot)$ in \eqref{eq:g*}. More specifically, for each $ \tilde h  $, we use $\hat g_{\tilde h}^*(\cdot) = \mathtt{softmax}\big(\mathtt{embed}_{\tilde h}(\cdot)/\tau\big)$ to denote the output distribution by passing $\mathtt{embed}_{\tilde h}(\cdot)$ through a $\mathtt{softmax}$ function with a temperature $\tau$. We expect $\hat g_h^*(S_h^t) \approx g_h^*(S_h^t)$ for all $h \in \{0, \ldots, H\}$.   Intuitively, we want each module \(G_{\tilde h}\) to handle prompts at different steps during the testing stage.
We summarize the construction of  \( G_{\tilde h} \) in the following proposition.

 \begin{proposition}\label{prop: approx of submodule} 
 
 Let $\epsilon_{\psi} , \epsilon_{w} \in (0,1) $ be two accuracy levels. 
    Under Assumptions~\ref{assumption: bd magnitude}, \ref{assumption: lower bd}, and \ref{assumption: smooth}, for any $  h \in \{ 0, \ldots,  H\}$, there exists 
    a module  
    $G_{ h}\colon \mathbb{R}^{L' \times (2+2|\calL|)} \to \mathbb{R}^{L' \times (2+2|\calL|)}$
    such that 
   $G_{h}(S_h^t, \hat p_h, F_3, F_4) = (S_h^t, \hat p_h, \mathtt{embed}_{h}(S_h^t), F_4)$.
Here $G_h$ contains $(D_{w} + D_{\psi} + 2)$ transformer blocks with 
\begin{align}
\label{eq:approximation_define_depth}    D_{\psi} =    2C_dB \cdot \big (\log (1/ \epsilon_\psi ) \bigr )^2 + \log B, \qquad D_{w} = 2C_dB \cdot \big (\log (1/  \epsilon_w   ) \bigr )^2 + \log B, 
\end{align}
    where $C_d > 0$ is an absolute constant and $B$  is the smoothness parameter appearing in   Assumption~\ref{assumption: smooth}. 
We define a function $\hat g_h^*( S_h^t)=\mathtt{softmax}(\mathtt{embed}_h(S_h^t)/\tau)$ as the output distribution approximated by the network $G_h$. 
Then $\hat  g_h^*$  satisfies 
    \begin{align} \label{eq:approximation_error_G_h}
\max_{\substack{S_h^t\in \calL^*}} \TV\big(g_h^*( S_h^t), \hat g_h^*( S_h^t)\big)  = \cO\big (  \epsilon_{w}  + 256 ^{D_{w}} \cdot \epsilon_{\psi} \big ) .
    \end{align}
For each \(G_h\), the maximum width of the \ac{ff} layers is \(18 |\cL| + 4\), the maximum Frobenius norm of weight matrices is
\begin{align*}
    C_F\cdot B_0\cdot16^{\max \{D_\psi,D_w\}}\cdot |\calL|^{3/2},
\end{align*}
where $B_0 = \sqrt{B^2+H^2+|\calL|\cdot C_A^2}$ and $C_F>1$ is a absolute constant.


\end{proposition}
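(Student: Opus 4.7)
\medskip

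\noindent\textbf{Proof proposal for Proposition~\ref{prop: approx of submodule}.}

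The plan is to mirror the factorization in \eqref{eq:g*} with a transformer whose sub-structure decomposes into (i) a position-wise feature map approximating $\psi_h^*$, (ii) an attention layer that performs the averaging $\frac{1}{L'}\sum \psi_h^*(z_{h'}^{t'})$ across the positions of $S_h^t$, and (iii) a second position-wise network approximating the vector-valued map $\tau\log w_h^*$, whose output is the embedding $\mathtt{embed}_h(S_h^t)$ that is read off by the eventual softmax. Throughout, the residual connections and a careful choice of the value/output matrices will keep the columns corresponding to $S_h^t$, $\hat p_h$, and $F_4$ untouched, while the ``third block'' of columns is overwritten by the running intermediate computation.

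First, I would invoke a ReLU-MLP approximation result for smooth univariate functions in $\cS_B$ (the exponential-depth approximator of \cite{elbrachter2021deep}): for any target accuracy $\epsilon_\psi\in(0,1)$, there is an MLP $\hat\psi_h$ with depth $D_\psi = 2C_d B(\log(1/\epsilon_\psi))^2 + \log B$ and width $\lesssim |\cL|$ such that $\|\hat\psi_h-\psi_h^*\|_\infty\le \epsilon_\psi$; similarly, there is an MLP $\hat u_h\colon \RR\to\RR^{|\cL|}$ approximating $\tau\log w_h^*$ entry-wise with accuracy $\epsilon_w$ and depth $D_w$. Proposition~\ref{prop:ffn_transformer} then lets me embed each of these MLPs as a stack of $D_\psi$ and $D_w$ transformer blocks respectively, after padding the widths to $18|\cL|+4$ and applying the scaling trick from the preamble of Appendix~\ref{proof:prop:formal_construction} so that the Frobenius norms remain controlled by $C_F\cdot B_0\cdot 16^{\max\{D_\psi,D_w\}}\cdot|\cL|^{3/2}$. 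The bookkeeping for which columns are written to at each block is routine: the weight matrices act only on the ``third block'' of coordinates, and residual links (with $\gamma$'s chosen as in Lemma~\ref{lem:layer_1}) preserve $S_h^t,\hat p_h,F_4$ verbatim.

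Between these two MLP stacks, I insert a single attention block that performs the averaging step. Using queries, keys, and values that ignore position and simply read the third column (which at that point carries $\hat\psi_h(z_{h'}^{t'})$ at row $L'(t',h'+1)$), and using softmax over a constant logit pattern $\langle q,k\rangle\equiv 0$, the attention reduces to a uniform mean $\frac{1}{L'}\sum_{(t',h')}\hat\psi_h(z_{h'}^{t'})$, which is written to every row of the third block; this adds one transformer block. A second transformer block after the $D_w$ MLP stack is needed to broadcast the per-row output via $\mathbf{1}_{L'}^\top\mathtt{embed}_h(S_h^t)$ as required. Summing, $G_h$ consists of $D_\psi + 1 + D_w + 1 = D_w+D_\psi+2$ transformer blocks, matching \eqref{eq:approximation_define_depth}.

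The main obstacle, and the only nontrivial analytic step, is the propagation of error in \eqref{eq:approximation_error_G_h}. Write $\bar a = \frac{1}{L'}\sum \psi_h^*(z_{h'}^{t'})$ and $\hat a = \frac{1}{L'}\sum \hat\psi_h(z_{h'}^{t'})$; then $|\hat a-\bar a|\le \epsilon_\psi$ by the uniform bound on $\hat\psi_h-\psi_h^*$. The output of the second MLP is $\hat u_h(\hat a)$, to be compared with the target $\tau\log w_h^*(\bar a)$. Splitting the error as
\[
\bigl|\hat u_h(\hat a) - \tau\log w_h^*(\bar a)\bigr| \le \bigl|\hat u_h(\hat a) - \hat u_h(\bar a)\bigr| + \bigl|\hat u_h(\bar a) - \tau\log w_h^*(\bar a)\bigr|,
\]
the second term is at most $\epsilon_w$. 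For the first, the $\hat u_h$ constructed in \cite{elbrachter2021deep} is a depth-$D_w$ ReLU network with weight matrices of the magnitude recorded in the proposition; a standard layer-by-layer Lipschitz bound gives $\mathrm{Lip}(\hat u_h)\le 256^{D_w}$ (the factor $256$ absorbs the normalization, width padding, and scaling-trick rescalings, and is the simplest clean constant one can read off the construction), hence $|\hat u_h(\hat a)-\hat u_h(\bar a)|\le 256^{D_w}\epsilon_\psi$. Converting the $\ell_\infty$ error on $\mathtt{embed}_h(S_h^t)/\tau$ to a TV bound on the softmax output via the standard estimate $\TV(\mathrm{softmax}(u),\mathrm{softmax}(u'))\le \tfrac12\|u-u'\|_\infty$ (after absorbing the $1/\tau$ into the constants) yields \eqref{eq:approximation_error_G_h}. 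The norm, width, and block-count bounds then follow from collecting the parameters of the individual pieces above and applying the scaling trick uniformly, which I would defer to a separate bookkeeping lemma.
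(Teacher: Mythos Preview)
Your proposal is correct and follows essentially the same route as the paper: approximate $\psi_h^*$ and $\tau\log w_h^*$ separately by the \cite{elbrachter2021deep} ReLU-MLPs (embedded via Proposition~\ref{prop:ffn_transformer}), sandwich a uniform-attention block ($W^Q=W^K=0$) to perform the averaging, preserve the side columns $(S_h^t,\hat p_h,F_4)$ via residual ReLU augmentation (Lemma~\ref{lem:residual_relu}), and propagate the $\epsilon_\psi$ error through the Lipschitz constant of $\hat u_h$ before converting to TV via softmax Lipschitzness. Two small inaccuracies: the $256^{D_w}$ factor in the paper comes simply from the $16\times 16$ weight structure (entries bounded by $1$), not from scaling-trick rescalings, and the extra ``$+2$'' blocks in the paper are for column permutation rather than broadcasting (the averaging already writes the same value to every row); neither affects the argument.
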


\begin{proof}
    See Appendix~\ref{app: approx of submodule} for a detailed proof.
\end{proof}
This proposition states that for any $h \in [H]$, we can use a transformer $G_h$ to approximate $g_h^*(S_h^t)$ accurately. 
The number of transformer blocks in $G_h$ is determined by the desired accuracy levels $\epsilon_{\psi} $ and $\epsilon_w$. 
Note that the approximation accuracy grows exponentially in $D_{w}$. 
This will not be a problem when $D_{w}$ is small compared to $D_{\psi}$. 
To see this, we can rewrite the upper bound in \eqref{eq:approximation_error_G_h}  in terms of the depth of the transformer. 
Specifically, let $\overline D_{\psi} $ and  $\overline D_{w}$ be two sufficiently large integers. 
Setting 
\begin{align}\label{eq:define_error_psi_w}
\epsilon_{\psi} = \exp \Bigl ( -   \sqrt{ ( \overline D_{\psi} - \log B) / B } \Bigr ) \qquad \textrm{and} \qquad \epsilon_{w } = \exp \Bigl (  -    \sqrt{ ( \overline D_{w}  - \log B) / B } \Bigr ) 
\end{align}
in Proposition \eqref{prop: approx of submodule},
we know that there exists   $\{ G_h\}_{h=0}^{H}$ such that 
  \begin{align}\label{eq:invert_upper_bound}
      \max_{\substack{S_h^t\in \calL^*}} \TV\big(g_h^*( S_h^t), \hat g_h^*( S_h^t)\big)  = \cO\bigg (   \exp\Bigl ( - \sqrt{ \overline D_{w} / B } \Big)  + \exp\Bigl ( - \sqrt{ \overline D_{\psi} / B} + 6 \cdot D_{w} \Bigr)     \bigg ) 
  \end{align}
  for all $h \in \{0, \ldots, H\}$.
  Here we use the fact that $\log (256) < 6$. 
  Moreover, 
\eqref{eq:approximation_define_depth} implies that  the number of transformer blocks satisfies $ D_{\psi} \leq C_d \cdot \overline D_{\psi} $ and $D_{w} \leq C_d \cdot \overline D_{w} $ for some absolute constant $C_d$. 
Note that $\overline D_{\psi} $ and $\overline D_{w} $ can be chosen arbitrarily. 
We can set $\overline D_{\psi} =  \cO(\overline D_{w}^2)$ so that the second term in \eqref{eq:invert_upper_bound} becomes negligible compared to the first term. 
We will determine $\overline D_{\psi}$ and $\overline D_{w} $ to obtain the final error in  {\bf Step 4}.





\vspace{2mm}

\noindent\textbf{Step 3: Construct selection module $F_{\tilde h}$.}
In this step, we introduce a sequence of transformer modules \(\{F_{\tilde h}\}_{\tilde h=1}^H \).
Each $F_{\tilde h}$ is a mapping from $ \RR^{L'\times (2 + 2 |\cL| )} $  to itself, and its input and output are given by 
\begin{align}
    F_{\tilde h}(S_h^t, \hat p_h, G_{\tilde h}[3], G_{\tilde h}[4]) &=\big(S_h^t, \hat p_h, G_{\tilde h}[3], f_\mathrm{product}\big(G_{\tilde h}[3], \mathbbm{1}\{\tilde h=h\}\cdot \mathbf{1}_{L'}^\top\big) + G_{\tilde h}[4]\big) \label{eq:f_output} .
\end{align}
That is, $F_{\tilde h}$ takes the output of $G_{\tilde h}$ as the input, and it keeps the first three components of the columns unchanged. 
The last component, i.e., the last $|\cL|$ columns, are used a ``memory''. 
Note that $G_{\tilde h} [4] = F_{\tilde h -1} [4]$. 
The last component of the output of $F_{\tilde h} $, $F_{\tilde h} [4]$, can be written as 
\begin{align}\label{eq:f_tilde_4}
F_{\tilde h} [4] = \mathbf{1}_{L'}^\top f_\mathrm{product}\big(\mathtt{embed}_{\tilde h}(S_h^t), \mathbbm{1}\{\tilde h=h\}\big) + F_{\tilde h-1}[4],
\end{align}
where $f_\mathrm{product} $
is a transformer that approximately implements the product operation. 
Thus, 
$F_{\tilde h}$ first checks if the module index \(\tilde h\) matches the target step index \(h\), and then writes $\mathtt{embed}_{\tilde h}(\cdot)$  to the memory if $\tilde h  = h$. 
Thus, by \eqref{eq:f_tilde_4} we have 
\begin{align}
\label{eq:f_tilde_4_H}
F_{H}[4] = \sum_{\tilde h =0}^H \mathbf{1}_{L'}^\top f_\mathrm{product}\big(\mathtt{embed}_{\tilde h}(S_h^t), \mathbbm{1}\{\tilde h=h\}\big) \approx \mathbf{1}_{L'}^\top \mathtt{embed}_{h}(S_h^t). 
\end{align}

To implement each $F_{\tilde h}$, we starting from the input $(S_h^t, \hat p_h, G_{\tilde h}[3], G_{\tilde h}[4])$, denoted by $X_{F}^{\tilde h, (0)}$, we perform the following three steps:
\begin{itemize}
    \item [(i)] First, we use a sequence of transformer blocks to represent the indicator $\mathbbm{1}\{\tilde h=h\}$, and then append it to the  end of $X_{F}^{\tilde h, (0)}$. Thus, we have 
    \begin{align}
        \label{eq:X_F_tilde_1}
        X_{F}^{\tilde h, (1)} = \big(S_h^t, \hat p_h, G_{\tilde h}[3], G_{\tilde h}[4],  \mathbbm{1}\{\tilde h=h\}\cdot \mathbf{1}_{L'}^\top \big).
    \end{align}
    Here the indicator is obtained by feeding $\hat p_h$ to a trapezoid-shaped function. 
    \item [(ii)] Then we feed $X_{F}^{\tilde h, (1)}$ to the product module introduced in Lemma \ref{lemma: product_module} to multiply each entry of $G_{\tilde h} [3] $ with $ \mathbbm{1}\{\tilde h=h\}$. 
    The resulting output is 
    \begin{align}
        \label{eq:X_F_tilde_2}
X_{F}^{\tilde h, (2)} = \big(S_h^t, \hat p_h, G_{\tilde h}[3], G_{\tilde h}[4],  f_\mathrm{product}\big(G_{\tilde h}[3], \mathbbm{1}\{\tilde h=h\}\cdot \mathbf{1}_{L'}^\top\big)\big).
    \end{align}
    \item [(iii)] 
    Finally, we pass $X_{F}^{\tilde h, (2)}$ to a linear layer, which adds the last two components of $X_{F}^{\tilde h, (2)}$ and obtain 
    \begin{align*}
        X_{F}^{\tilde h,(3)} =\big(S_h^t, \hat p_h, G_{\tilde h}[3], f_\mathrm{product}\big(G_{\tilde h}[4], \mathbbm{1}\{\tilde h=h\}\cdot \mathbf{1}_{L'}^\top\big) + G_{\tilde h}[4]\big).
    \end{align*}
    \end{itemize}


\vspace{2mm}
{\bf\noindent  Details of (i).} We present the details of these three steps as follows. 
We first focus on how to construct the indicator 
$\mathbbm{1}\{h=\tilde h\}$. 
Recall that by Lemma \ref{lemma: extraction_module}  we show that $\hat p_h $ satisfies \(\|\hat p_h - h\cdot \mathbf{1}_{L'}\|_\infty < 1/4\). 
Thus, each entry of $\hat p_h $ is in $(h-1/4, h+1/4)$. 
For any $\tilde h$, we want to construct a neural network $f_{\tilde h} \colon \RR \rightarrow [0, 1] $ such that $f_{\tilde h} (x) = 1 $ if $| x - \tilde h | \leq 1/4 $ and $f(x) = 0$ if $|x - \tilde h| \geq 3/4$. 
Then applying $f_{\tilde h}$ to each entry of $\hat p_h$, we have 
$ f_{\tilde h} (\hat p_h) = \mathbbm{1}\{h=\tilde h\} \cdot \mathbf{1}_{L'} $.

Such a $f_{\tilde h} $ can be constructed by a trapezoid-shaped function, which has value one in $[h-\epsilon, h+\epsilon ]$, zero when $|x- h | > 1 - \epsilon $, and a linear function in between. 
Here we can set $\epsilon = 1/4$. 
See  Figure~\ref{fig:trapezoid} for an illustration of two trapezoid-shape functions. 
The following lemma shows that such trapezoid-shaped functions can be implemented by a \ac{ff} layer.

\begin{lemma}[Trapezoid module] \label{lemma: trapezoid_module}
For any $h \in \{0, \ldots, H\}$ and any $\epsilon \in (0, 1/2) $, we define a trapezoid-shaped function $f_h \colon \RR \rightarrow [ 0, 1]$  as  
\begin{align*}
    f_h(x)=\begin{cases}
        1 &\text{for } |x-h|\leq \epsilon,\\
        -(|x-h|-\epsilon)/(1-2\epsilon) +1 &\text{for } \epsilon< |x-h|\leq 1-\epsilon, \\
        0 &\text{otherwise.} 
    \end{cases}
\end{align*}
Then there exists a neural network $f$ that is identical to $f_h$.
Moreover, $f$ is a composition of two FF layers, each with no more than $10$ neurons, and the entries of the  weight matrices and bias vectors are bounded by $H + 1$ in magnitude.



\end{lemma}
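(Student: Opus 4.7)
The plan is to realize $f_h$ exactly as a linear combination of four shifted \texttt{ReLU} units, which then maps directly onto the two-layer feed-forward architecture. Observe that $f_h$ is a continuous piecewise-linear function with exactly four breakpoints at $h - 1 + \epsilon$, $h - \epsilon$, $h + \epsilon$, and $h + 1 - \epsilon$, with slopes $+s$, $0$, $-s$, $0$ on the five resulting intervals, where $s = 1/(1-2\epsilon)$. Since every continuous piecewise-linear function on $\mathbb{R}$ can be written as a signed sum of shifted \texttt{ReLU}s whose coefficients equal the jumps in slope at the breakpoints, I would claim and verify directly that
\begin{align*}
f_h(x) = s\bigl[\texttt{ReLU}(x - h + 1 - \epsilon) - \texttt{ReLU}(x - h + \epsilon) - \texttt{ReLU}(x - h - \epsilon) + \texttt{ReLU}(x - h - 1 + \epsilon)\bigr].
\end{align*}
Verification proceeds by checking the identity on each of the five intervals determined by the breakpoints; on each interval the right-hand side is an affine function of $x$ that can be read off by including or excluding each \texttt{ReLU} term based on the sign of its argument, and agrees with $f_h$ on that interval.

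Next, I would package this formula into the required two-layer FF architecture. The first FF layer maps the scalar input $x$ to a $4$-dimensional hidden vector via the affine map with weight vector $(1,1,1,1)^\top$ and bias vector $\bigl(-(h-1+\epsilon),\,-(h-\epsilon),\,-(h+\epsilon),\,-(h+1-\epsilon)\bigr)$, followed by the \texttt{ReLU} activation; this produces the four \texttt{ReLU} arguments above. The second FF layer is purely linear, with weight vector $(s,-s,-s,s)^\top$ and zero bias, and outputs $f_h(x)$. Both layers have width at most $4 \leq 10$ as required.

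For the weight bound, the entries of the first layer are either $1$ or biases of magnitude at most $|h| + 1 \leq H+1$, and the second layer's entries are $\pm s = \pm 1/(1-2\epsilon)$. For the values of $\epsilon$ relevant downstream (in particular $\epsilon = 1/4$ in the application following the lemma, giving $s = 2$), this is dominated by $H+1$; in general one notes that the lemma is invoked only for $\epsilon$ bounded away from $1/2$ enough that $s \leq H+1$, which is the regime of interest.

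I do not expect any serious obstacle: the construction is a standard fact about representing continuous piecewise-linear functions by shallow \texttt{ReLU} networks, and the only care needed is (i) computing the correct slope-jump coefficients at each of the four breakpoints so that the telescoping of \texttt{ReLU}s produces a plateau of height exactly $1$ on $[h-\epsilon, h+\epsilon]$ and returns to $0$ beyond $[h-1+\epsilon, h+1-\epsilon]$, and (ii) mapping the formula faithfully to the two-layer structure so that both the width bound and the weight-magnitude bound are simultaneously satisfied.
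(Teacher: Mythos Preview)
Your construction is correct and in fact simpler than the paper's. The paper decomposes $f_h$ as the sum of two triangle functions $h_1 + h_2$ (a large upward triangle supported on $[h-1+\epsilon,\,h+1-\epsilon]$ plus a small downward triangle supported on $[h-\epsilon,\,h+\epsilon]$), writes each triangle via five hidden units (four shifted \texttt{ReLU}s and one constant), and then concatenates the two blocks to obtain a single FF layer of width $10$. You instead apply the slope-jump representation directly at the four breakpoints, yielding a strictly smaller network of width $4$; the resulting formula is easily checked to agree with $f_h$ on all five intervals. Your construction therefore buys a tighter width count, while the paper's two-triangle picture is perhaps more geometric.

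Two minor remarks. First, what you call ``two FF layers'' (affine $+$ \texttt{ReLU}, then linear) is in the paper's terminology a \emph{single} FF layer $\mathtt{ff}(X,W_{\mathrm{ff}},b_{\mathrm{ff}})=\mathtt{ReLU}(XW_{\mathrm{ff},1}+\mathbf{1}^\top b_{\mathrm{ff},1})W_{\mathrm{ff},2}+\mathbf{1}^\top b_{\mathrm{ff},2}$; this is only a naming discrepancy and does not affect the argument. Second, you correctly flag that the output weights $\pm 1/(1-2\epsilon)$ are not bounded by $H+1$ uniformly in $\epsilon\in(0,1/2)$; the paper's own $W_{\mathrm{ff},2}$ carries the same $1/(1-2\epsilon)$ factor, so this is a looseness in the lemma statement rather than in either proof, and both resolve it in practice by fixing $\epsilon=1/4$ downstream.
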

\begin{proof}
    See Appendix~\ref{app: trapezoid_module} for details.
\end{proof}

\begin{figure}[h]
    \centering
    \includegraphics[width = 0.95\textwidth]{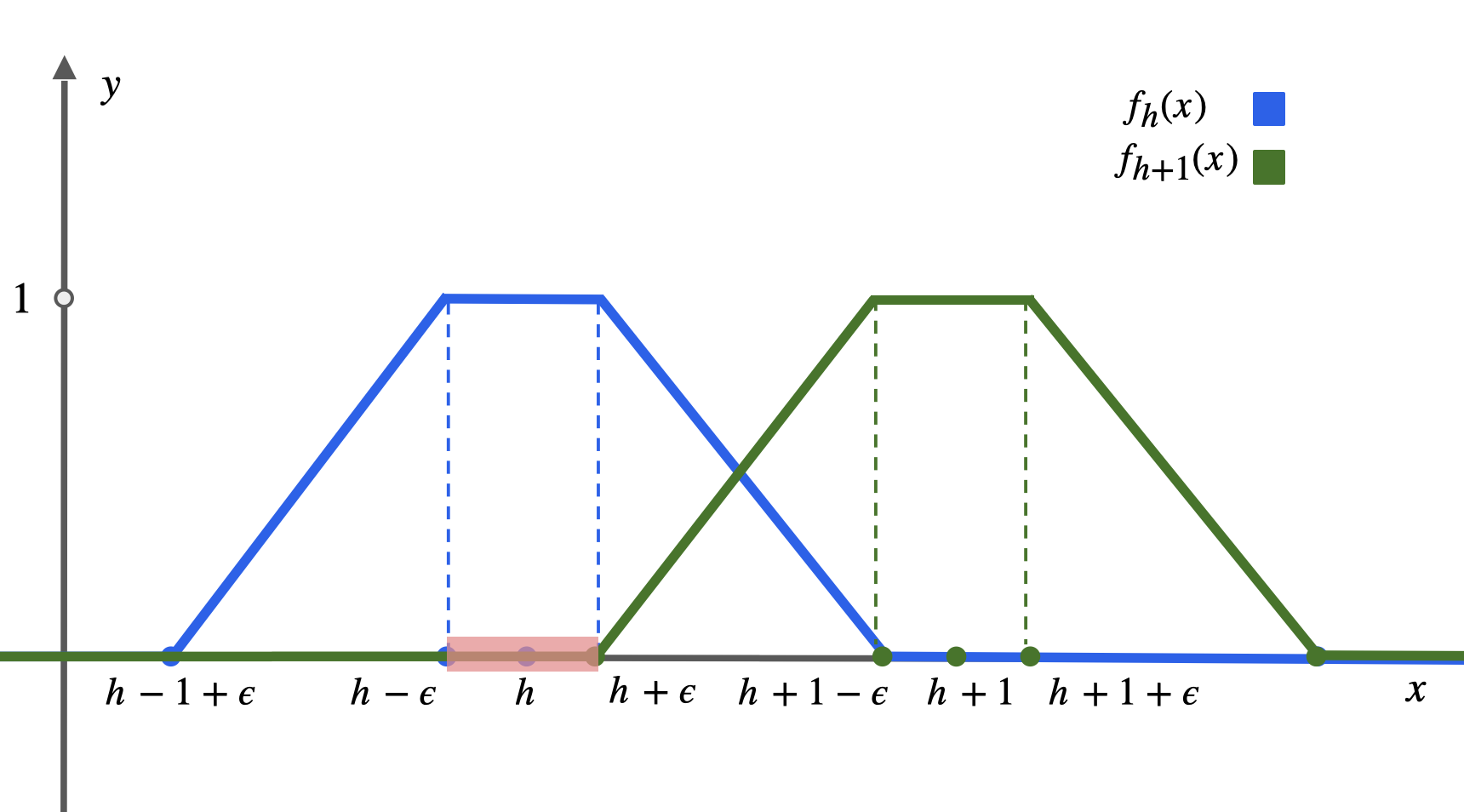}
    \caption{An illustration of two trapezoid-shaped functions $f_h$ (blue) and $f_{h+1}$ (green), i.e., $f_{\tilde h} $ with $\tilde h = h$ and $\tilde h = h+1$. 
    Observe that for any $x\in [h-\epsilon, h+\epsilon]$, $f_{\tilde h}(x)=1$ if and only if $\tilde h=h$, and $f_{\tilde h}(x)=0$ for any other $\tilde h\neq h$.
    Thus, when we apply each $f_{\tilde h}$ to entries of $\hat p_h$, when $\epsilon = 1/4$, we have $\mathbbm{1}\{h=\tilde h\}$. Each entry of \(\hat p_h\) falls within the highlighted region on the x-axis. }
    \label{fig:trapezoid}
\end{figure}




We apply this lemma with $\epsilon = 1/4$ and apply  $f_{\tilde h} $ to each entry of $\hat p_h$ to obtain  
$ \mathbbm{1} \{h = \tilde{h}\}\cdot \mathbf{1}_{L'}$,which becomes the last component of $X_{F}^{\tilde h, (1)} $ in \eqref{eq:X_F_tilde_1}.
Moreover, to preserve the first three components of the input  $X_{F} ^{\tilde h, (0)}$, we apply Lemma~\ref{lem:residual_relu},
which implies that we can use a single FF layer to map $X_{F}^{\tilde h, (0)} $ to its first three components. 
The number of neurons in this FF layer is bounded by $2 ( 2 + 2 |\cL|)$. 
Thus, we can concatenate these two networks and obtain a larger network that maps $X_{F}^{\tilde h, (0)}$ to $X_{F}^{\tilde h, (1)}$.
Moreover, such a network has one FF layer, 
and the maximum width of the weight matrices is bounded by  
$10 + 2 (2 + 2|\cL|)$.
As a result, the weight matrices of this feed-forward neural network are all bounded by $\sqrt{B^2+H^2+2|\calL|\cdot(C_A+1)^2}\cdot\sqrt{10 +2(2+2|\calL|)}\leq \sqrt{B^2+H^2+2|\calL|\cdot(C_A+1)^2}\cdot(4+2\sqrt{|\calL|)}$ in terms of the Frobenius norm.

\vspace{2mm}
{\bf\noindent  Details of  (ii).}  
Then, to get $X_{F}^{\tilde h, (2)}$ defined in \eqref{eq:X_F_tilde_2}, 
we pass $X_{F}^{\tilde h, (1)}$ to the product module in Lemma~\ref{lemma: product_module} to  multiply $G_{\tilde h}[3]$ and $\mathbbm{1}\{\tilde h=h\}\cdot \mathbf{1}_{L'}^\top$  in an elementwise fashion. 
Similar to the implementation of \(\mathtt{NN}_{\mathrm{J},2}\) introduced in Lemma \ref{lem:layer_2}, the product module  $f_{\mathrm{product}} $  here can be implemented as an MLP.
More concretely,
recall that Assumption \ref{assumption: smooth}
states that each $\tau \cdot \log w_{h, i}^*
$ is bounded by $C_A$ in terms of the $\ell_\infty$-norm. 
Then, as shown in the proof of Proposition \ref{prop: approx of submodule} in Appendix \ref{app: approx of submodule}, we have 
\begin{align}\label{eq:error_embed_map}
\bigg| \bigl( \mathtt{embed}_{\tilde h}  (S_h^t) \bigr ) [i]- \tau \cdot \log  w_{\tilde h,  i} \bigg ( \frac{1}{L'} \bigg( \sum_{(i, j) = (1,0)}^{(t, h-1) } \psi_{\tilde h} (z_{h}^i ) )
    \bigg)  \bigg| \leq \epsilon_{w} + 256^{D_{w}} \cdot \epsilon _{\psi}
\end{align}
for all $i \in [|\cL| ]$. 
Here $ ( \mathtt{embed}_{\tilde h}  (S_h^t)  ) [i]$  is the $i$-th entry of $ \mathtt{embed}_{\tilde h}  (S_h^t)$, 
which is defined in the same way as in \eqref{eq:embed_approx}, but with $\hat f_h^{w}$ and $\hat \psi_h^*$ replaced by $\hat f_{\tilde h}^w$ and $\hat \psi_ {\tilde h}^*$. 
Besides, as we will show later, we set $\epsilon_\psi$ and $\epsilon_w$ as in \eqref{eq:define_error_psi_w} so that the right-hand side of \eqref{eq:error_embed_map} is much smaller than one.  
As a result, we have 
 $\| \mathtt{embed}_{\tilde h}(S_h^t)\|_\infty \leq 1+C_A$.
Thus, combining this fact with Lemma \ref{lemma: product_module}, we conclude that,  there exists an MLP $f^{\tilde h}_{\mathrm{product}}$ such that  for any $\epsilon_p \in (0, 1)$,  
\begin{align}\label{eq:error_f_prod_F}
\|f^{\tilde h}_\mathrm{product}\big(\mathtt{embed}_{\tilde h}(S_h^t), \mathbbm{1}\{\tilde h=h\}\big)-\mathtt{embed}_{\tilde h}(S_h^t)\cdot\mathbbm{1}\{\tilde h=h\}\|_\infty <\epsilon_p
\end{align}
 and any $\tilde h \in \{0, \ldots, H\}$,
where $f^{\tilde h}_\mathrm{product}$ has at most 
$D_{p} = C_p \cdot (\log(C_A+1)+\log(1/\epsilon_p))$ FF layers, where $C_p$ is an absolute constant. 
Moreover, we remark that the module $f^{\tilde h}_\mathrm{product}$ is different from the one constructed in Lemma \ref{lem:layer_2}, and different for each $0\leq \tilde h \leq H$.


 Moreover, we need to write such a product module as a composition of transformer blocks using Proposition~\ref{prop:ffn_transformer}. At the same time, we adopt Lemma~\ref{lem:residual_relu} to preserve the first four components of $X_{F}^{\tilde h , (1)}$, namely $S_h^t, \hat p_h, G_{\tilde h}[3], G_{\tilde h}[4]$ using \ac{ff} layers. Specifically, we apply Proposition~\ref{prop:ffn_transformer} by setting the scaling factors $B_0 = \sqrt{B^2+H^2+2\cdot |\calL|\cdot (C_A+1)^2}$, $\{B_\ell = \sqrt{B_0^2+5|\calL|}\}_{\ell=1}^{D_p-1}$, $B_{D_p}=\sqrt{B_0^2+(C_A+1)^2}$.
As a result, the function that maps $X_{F}^{\tilde h , (1)}$ to $X_{F}^{\tilde h, (2)} $ can be implemented by a composition of $D_{p}$ transformer blocks. 
 Each block has at most $5|\cL|+2(2+2|\calL|)$ neurons in the FF layer, where $5|\cL|$ are used for the product operation, and $2(2+2|\calL|)$ are used for preserving the input of $X_F^{\tilde h, (1)}$. According to Lemma~\ref{lemma: product_module}, the Frobenius norm of weight matrices are bounded by $\sqrt{B_0^2+5|\calL|}\cdot \sqrt{2(2+2|\calL|)+ (C_A+1)^4\cdot 5|\calL|} \leq C'\cdot (C_A+1)^3\cdot |\calL|$ for some absolute constant $C'$.

\vspace{2mm}
{\bf \noindent Details of (iii).}
Finally, we pass $X_{F}^{\tilde h,(2)}$ through a linear layer to add    $f_\mathrm{product}\big(G_{\tilde h}[3], \mathbbm{1}\{\tilde h=h\}\cdot \mathbf{1}_{L'}^\top\big)$ with $G_{\tilde h}[4]$. 
Moreover, we  adopt Lemma~\ref{lem:residual_relu} to preserve the first three components of $X_{F}^{\tilde h, (2)}$. Therefore, the maximum Frobenius norm of the weight matrix of this module is $\sqrt{B^2+H^2+3\cdot |\calL|\cdot (C_A+1)^2}\cdot \sqrt{2(2+|\calL|)+4|\calL|}$, where the first term results from the scaling trick, and the second term results form the linear operation.

\vspace{2mm}

{\noindent \bf Combining (i)--(iii).}
Combining these three steps above, 
we obtain the selection module $F_{\tilde h}$ shown in \eqref{eq:f_output} and \eqref{eq:f_tilde_4}.
By \eqref{eq:f_tilde_4_H},  the output of $F_{H}$ is given by 
\begin{align}
  \bigg(S_h^t, \hat p_h, \mathbf{1}_{L'}^\top\mathtt{embed}_{H}(S_h^t), \sum_{\tilde h =1}^H  f_\mathrm{product}\big(\mathtt{embed}_{\tilde h}(S_h^t), \mathbbm{1}\{\tilde h=h\}  \cdot \mathbf{1}_{L'}^\top\big)\bigg). \label{eq:last_f}
\end{align}
For the ease of notation, we use $\Tilde{\mathtt{embed}}(S_h^t)$  to denote last component of \eqref{eq:last_f}, and we expect $\Tilde{\mathtt{embed}}(S_h^t)\approx \mathtt{embed}_h(S_h^t)$.

Finally, we calculate the depth and norms of the weight matrices of \( F_{\tilde h} \). 
According to Lemma~\ref{lemma: trapezoid_module}, the trapezoid module has a depth of 1 and a maximum Frobenius norm of \( \sqrt{B^2+H^2+2\cdot|\calL|\cdot(C_A+1)^2}(4+2\sqrt{|\calL|})\). The linear module also has a depth of 1 and a maximum Frobenius norm of \( \sqrt{B^2+H^2+3\cdot|\calL|\cdot(C_A+1)^2}(2+\sqrt{6|\calL|}) \). The product module has a maximum Frobenius norm of $C'\cdot (C_A+1)^3\cdot |\calL|$ and depth \( D_p \), which will be determined in \textbf{Step 4}. Overall, each \( F_{\tilde h} \) module has a depth of \( D_f = D_p + 2 \) and maximum Frobenius norm of weight matrices as $C'\cdot (C_A+1)^3\cdot |\calL|$.

\vspace{2mm}

{\noindent  \bf Step 4: Compute the approximation error.} 
The last component of $F_{H}$, $\Tilde{\mathtt{embed}}(S_h^t)$, is  then fed into a softmax layer to generate the final output. 
In this step, we characterize the approximation error 
 $\norm{g^*_h(S_h^t) -\mathtt{softmax}(\Tilde{\mathtt{embed}}(S_h^t)  /\tau)}_1$. 
 Here $g_h^*$ in \eqref{eq:g*} refers to the target distribution, and $\mathtt{softmax}(\Tilde{\mathtt{embed}}(S_h^t)  /\tau)$ refers to the output distribution.

First, we handle the error induced by the product operation in $F_{\tilde h}$, which is established  in  
 \eqref{eq:error_f_prod_F}. 
Let $\overline D_{p}$ be an integer and we set  \(\epsilon_p = (C_A + 1) \cdot \exp(-\overline{D}_p/C_p)\) in \eqref{eq:error_f_prod_F}, by triangle inequality we conclude that 
\begin{align}
    \big \| \Tilde{\mathtt{embed}} (S_h^t)- \mathtt{embed}_{h}(S_h^t)\big\|_\infty<H\epsilon_p. \label{eq:err_embed}
\end{align}
Moreover, with number of transformer blocks used to implemnt $f_{\mathrm{product}}$, $D_{p}$, satisfies  $D_p\leq \overline D_p$.
Since $\mathtt{softmax}$ is Lipschitz continuous, as shown in 
Lemma~\ref{lem:softmax_lip}, we conclude that 
\begin{align}
    &\big\|\mathtt{softmax}\big(\Tilde{\mathtt{embed}}(S_h^t) /\tau \big) - \mathtt{softmax}\big( \mathtt{embed}_h (S_h^t)/ \tau \big)\big\|_1 \nonumber\\
        &\qquad \leq 2\big\| \Tilde{\mathtt{embed}}(S_h^t)/\tau-  \mathtt{embed}_h(S_h^t)/\tau  \big\|_\infty  \leq 2H\epsilon_p/\tau,\label{eq:softmax_err}
    \end{align}
where the second inequality follows from \eqref{eq:err_embed}.

Therefore, combining   Proposition~\ref{prop: approx of submodule} and \eqref{eq:softmax_err}, for any prompt $S_h^t$ with length $L'\leq L$, the approximation error of the transformer is bounded by 
\begin{align}
&\norm[\big]{g^*_h (S_h^t) -\mathtt{softmax}\big(\Tilde{\mathtt{embed}}(S_h^t)  /\tau\big)}_1 \nonumber\\
&\qquad \leq \norm[\big]{g^*_h(S_h^t) -\mathtt{softmax}\big( \mathtt{embed}_h(S_h^t)/\tau \big) }_1 \nonumber\\
&\qquad \qquad +\big\|\mathtt{softmax}\big( \mathtt{embed}_h(S_h^t)/\tau \big) - \mathtt{softmax}\big(\Tilde{\mathtt{embed}}(S_h^t)/\tau\big) \big\|_1 \nonumber \\
&\qquad  \leq 2\epsilon_w + 2\cdot 256^{\overline D_w}\cdot \epsilon_\psi + 2H \epsilon_p/\tau. \label{eq:3 errors}
\end{align}
Here $\epsilon_w$ and $\epsilon_\psi$  are chosen as in \eqref{eq:define_error_psi_w} and $\epsilon_p$ is defined above. 
Thus, we can equivalently write the approximation error in terms of the parameters $\overline D_{\psi}$, $\overline D_{w}$, and $\overline D_{p}$ as 
\begin{align}
    &\norm[\big]{g^*_h (S_h^t) -\mathtt{softmax}\big(\Tilde{\mathtt{embed}}(S_h^t)  /\tau\big)}_1 \label{eq:3 bds}\\
    &\quad = \calO \bigg(\exp\Big(-\sqrt{\overline D_w/B}\Big) + \exp\Big(-\sqrt{\overline D_\psi/B}+6\cdot \overline D_w\Big)+ H/\tau\cdot \exp \big(-\overline D_p/C_p +\log(C_A+1)\big)\bigg),\nonumber
\end{align}
where we use \eqref{eq:invert_upper_bound}, \eqref{eq:3 errors}, and the definition of $\epsilon_p$. 
Here  the number of transformer blocks of each module among 
$\{G_j, F_j\}_{j=1}^H$ 
satisfy  $D_w = \cO(\overline D_w) $, $D_\psi \leq C\overline D_\psi$, and $D_p \leq \overline D_p$, where $C>0$ is a absolute constant.

To get an explicit upper bound, we choose $\overline D_\psi$, $\overline D_w$, and $\overline D_p$ properly to balance the three terms in the right-hand side of \eqref{eq:3 bds}. 
Specifically, we require 
\begin{align}\label{eq:choose_overline_D}
    \overline D_\psi \geq \big(6\sqrt{B}\cdot \overline D_w +\sqrt{\overline D_w}\big)^2,   \qquad (\overline D_p/C_p-\log(C_A+1))^2 \geq  \overline D_w/B,
\end{align}
and let 
$\overline D_g = (\overline D_\psi +\overline D_w+2) + (\overline D_p +2)$ denote a parameter that characterizes the total number of transformer blocks in each $G_{\tilde h}$ and $ F_{\tilde h}$ together. Note that the actual total number of transformer blocks in $G_{\tilde h}$ and $ F_{\tilde h}$ is $\cO(\overline D_g)$. 
To satisfy \eqref{eq:choose_overline_D},  we can set 

\begin{align}
    \overline D_w = \sqrt{\overline D_g}/(24\sqrt{B}), \qquad  \overline D_p = \frac{C_p }{\sqrt{24}B^{3/4}}\cdot \sqrt{\overline D_g}+C_p\cdot \log(C_A+1) , \qquad \overline D_\psi = \overline D_g-\overline D_w-\overline D_p-4. \label{eq:depths_balanced}
\end{align}
Then for sufficiently large $\overline D_g$, we have that \eqref{eq:3 bds} is dominated by the first term:
\begin{align}
    \norm[\big]{g^*_h(S_h^t) -\mathtt{softmax}\big(\Tilde{\mathtt{embed}}(S_h^t)  /\tau\big)}_1 &=\calO\bigg(\exp\big(-\sqrt{\overline D_w/B}\big)\bigg) \nonumber\\
    &=\calO\bigg(\exp\bigg(-\frac{\overline D_g^{1/4}}{5B}\bigg)\bigg),\label{eq:l1_bd}
\end{align}
where the second inequality follows from \eqref{eq:depths_balanced}, the fact that $\sqrt{24}<5$, and the relaxation of the exponent of \( B \) for notational clarity, assuming \( B \geq 1 \).

Finally, we convert the $\ell_1$-norm upper bound  in \eqref{eq:l1_bd} into a bound in terms of the KL divergence. Let $ \PP_{\hat \rho} (\cdot\given S_h^t)$ denote $\mathtt{softmax}\big(\Tilde{\mathtt{embed}}(S_h^t)  /\tau\big)$, where $\hat \rho$ refers to the parameter that specifies the transformer we constructed in the first three steps of the proof, which consists of a step-index extraction module $\mathtt{NN}_\mathrm{J}$, and $H$ pairs of modules $\{G_{\tilde h}, F_{\tilde h}\}_{\tilde h=0}^H$. 
We first note that if $\TV(\PP(\cdot\given S_h^t),  \PP_{\hat \rho}(\cdot\given S_h^t)) = \varepsilon$ with $\varepsilon$ sufficiently small such that $\varepsilon <c_0/2$, where $c_0$ comes from Assumption~\ref{assumption: lower bd}.
Under this lemma, 
we can bound the likelihood ratio $\PP(z=\cdot\given S_h^t)/\PP_{\hat \rho}(z=\cdot\given S_h^t)$ for each $z\in \calL$ by 
$$
\log \frac{c_0}{c_0+2\epsilon}\leq \log \frac{\PP(\cdot\given S_h^t)}{ \PP_{\hat \rho}(\cdot\given S_h^t)} \leq \log\frac{c_0+2\epsilon}{c_0}\leq  \frac{2\epsilon}{c_0}.
$$
Therefore we conclude that, 
when $\overline D_g$ is sufficiently large, there exists a transformer with parameter $\hat \rho$ such that, for an 
any $t\in [T]$ and $h \in \{0, \ldots H\}$,
\begin{align*}
    \max_{\substack{S_h^t\in \calL^*}}\KL\big(\PP(\cdot\given S_h^t),  \PP_{\hat \rho}(\cdot\given S_h^t)\big)=\cO\bigg( \exp\bigg(-\frac{\overline D_g^{1/4}}{5B}\bigg)\bigg),
\end{align*}
where the number of transformer blocks is at most 
  $ C\cdot \overline D_g$  for some absolute constant $C$.

\vspace{2mm}

\noindent \textbf{Step 5: Verify that the constructed transformer is in $\cP_{\mathrm{LLM}}$.} 
Finally, we verify that the constructed transformer is in $\cP_{\mathrm{LLM}}$. To this end, for each module of the transformer, we explicitly characterize the width and norms of the weight matrices. Recall that 
the transformer contains an  
embedding module $\mathtt{NN}_\mathrm{J}$,  $\{G_{\tilde h}, F_{\tilde h}\}_{\tilde h=0}^H$,  and a softmax output layer.

\begin{itemize}
    \item \textbf{Module $\mathtt{NN}_\mathrm{J}$.} The module $\mathtt{NN}_\mathrm{J}$ consists of four submodules, which are two linear modules, a product module realized via \ac{ff} layers, and a \ac{mha} module. 
    The two linear modules in Lemmas~\ref{lem:layer_1} and \ref{lem:layer_3} have Frobenius norm of weight matrices upper bounded by $\sqrt{B^2+(H+1)^2+1}\cdot\sqrt{6}$. According to Lemma~\ref{lem:layer_2}, the product module consists of at most $C(\log(H)+\log(16))$ layers with maximum Frobenius norm $\sqrt{B^2+H^2+6}\cdot \sqrt{H^4\cdot 5+4}$. 
    According to Lemma \ref{lemma: extraction_module}, the extraction module has the maximum Frobenius norm of weight matrices as $\max\{8\log(8TH), \sqrt{B^2+(H+1)^2+1}\}$ for the \ac{mha} layer and $\sqrt{B^2+(H+1)^2+6}\cdot\sqrt{6}$ for the \ac{ff} layer. In conclusion, the module $\mathtt{NN}_\mathrm{J}$ has total depth of at most $\overline D_j= C_p\log(3H)+3$ for some constant $C_p>0$, the Frobenius norms for \ac{ff} layers are upper bounded by $\sqrt{B^2+(H+1)^2+6}\cdot \sqrt{H^4\cdot 5+4}$ and those for \ac{mha} layers are upper bounded by $\max\{8\log(8TH), \sqrt{B^2+(H+1)^2+1}\}$.
    \item \textbf{Module $G_{\tilde h}$ and $F_{\tilde h}$.} The parameter constraint for any $G_{\tilde h}$ is specified in Proposition~\ref{prop: approx of submodule}. According to \textbf{Step 3}, each $F_{\tilde h}$ has maximum Frobenius norm of weight matrices as $C'\cdot (C_A+1)^3\cdot |\calL|$. The depth of each submodule pair $G_{\tilde h}$ and $F_{\tilde h}$ is specified in \eqref{eq:depths_balanced}.
    
\end{itemize}

Let $\overline D$ denote a sufficient large integer. Consider any parameter class $\calP_\mathrm{LLM}$ in \eqref{eq:P_LLM_class} with $B_S \geq (C_A+1)\cdot \sqrt{|\calL|}$, $B_M\geq \max\{8\log(8TH), \sqrt{B^2+(H+1)^2+1}\}$, $d_F\geq 4+18|\calL|$ and 
\begin{align*}
    B_F &\geq C_F\cdot \sqrt{B^2+H^2+C_A^2\cdot |\calL|}\cdot 16^{\overline D'}\cdot |\calL|^{3/2},
\end{align*}
where $\overline D' = (\overline D-C_p\log(3H))/(H+1)$,  $C_p, C_F>0$ are absolute constants, and $C_A$ comes from Assumption~\ref{assumption: smooth}. Then under Assumptions~\ref{assumption: bd magnitude},\ref{assumption: lower bd}, and \ref{assumption: smooth}, there exists a transformer with at most $\cO(\overline D)$ transformer blocks and parameter $\rho^*\in \calP_\mathrm{LLM}$ such that
\begin{align*}
\max_{\substack{S_h^t\in \calL^*}}\KL\big(\PP(z_h^t = \cdot\,|\,S_h^t),\PP_{\rho^{*}}(z_h^t = \cdot\,|\, S_h^t)\big) =O\bigg( \exp\bigg(-\frac{ \big(\overline D-C\log(2H))/H\big)^{1/4}}{5B}\bigg)\bigg),
    \end{align*}
for any $t\in [T], 0\leq h\leq H$, where $C>0$ is some absolute constant. Therefore, we conclude the proof.


\end{proof}

\subsection{Proof of Corollary~\ref{cor: rate with error}}  \label{proof: rate with error}


\begin{proof}
Recall that Lemma~\ref{lem:error_decomp} decomposes the \ac{cot} error \(\mathtt{err}_\mathrm{CoT}\) into \(\mathtt{err}_\mathrm{pre}\) and \(\mathtt{err}_\mathrm{prompt}\). This proof consists of two steps. 
We first control the expected pretraining error \(\E_{\PP_\mathrm{CoT}} [ \mathtt{err}_\mathrm{pre} ] \) under the distribution $\PP_\mathrm{CoT}$ with OOD queries.  
Then we consider the prompting error \(\E_{\PP_\mathrm{CoT}} [ \mathtt{err}_\mathrm{prompt} ] \) in the second step.

\vspace{2mm}

\noindent \textbf{Step 1: Control expected pretraining error under distribution shift.}
In this step, we evaluate 
$\E_{\PP_\mathrm{CoT}}[\mathtt{err}_\mathrm{pre}]$, the expected pretraining error \(\mathtt{err}_\mathrm{pre}\) defined in \eqref{eq: err pre} with the expectation taken under \(\pt_\mathrm{CoT}(n) \sim \PP_\mathrm{CoT}\). We first decompose \(\mathtt{err}_\mathrm{pre}\) into a sum of errors incurred in each reasoning step and then take expectations with respect to $ \PP_\mathrm{CoT}$. We adopt the following lemma to decompose $\mathtt{err}_\mathrm{pre}$ into a sum of KL divergences.

\begin{lemma}[KL decomposition] \label{lem: kl decomp}
Recall that $\pt_\mathrm{CoT}^h(n) = \{\Upsilon_n , z_{0:h-1}^\mathrm{test}\}$ consists of $n$ examples and the first $h-1$-th inferred steps for the testing example. Then we have
\begin{align}
    &\KL\bigl(\PP(y^{\mathrm{test}} =\cdot \given \pt_{\mathrm{CoT}}(n)) , \PP_{\hat \rho}(y^{\mathrm{test}}=\cdot \given \pt_{\mathrm{CoT}}(n)) \bigr) \label{eq:kl_sum}\\
        &\quad \leq \sum_{h=1}^H \E_{{z_{1:h-1}^\mathrm{test} \sim \PP(\cdot \given \pt_\mathrm{CoT}(n))}} \Big[\KL\bigl(\PP(z_{h}^{\mathrm{test}} =\cdot \given \pt_{\mathrm{CoT}}^h(n)) , \PP_{\hat \rho}(z_{h}^{\mathrm{test}} =\cdot \given \pt_{\mathrm{CoT}}^h(n)) \bigr)\Big]. \nonumber
\end{align}
    
\end{lemma}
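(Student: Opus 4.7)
}
The plan is to reduce the KL divergence between the marginal distributions of $y^{\mathrm{test}} = z_H^{\mathrm{test}}$ to one between the joint distributions over the whole reasoning chain $(z_1^{\mathrm{test}},\ldots,z_H^{\mathrm{test}})$, and then apply the chain rule of KL divergence. This is a two-line information-theoretic argument; the lemma's role is essentially bookkeeping, so no delicate estimates are needed.

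First, I would introduce the shorthand $P(\cdot) = \PP(\,\cdot \mid \pt_{\mathrm{CoT}}(n))$ and $Q(\cdot) = \PP_{\hat\rho}(\,\cdot \mid \pt_{\mathrm{CoT}}(n))$ for the two joint distributions over $(z_1^{\mathrm{test}},\ldots,z_H^{\mathrm{test}})$ induced by the ground-truth model and the pretrained transformer, respectively, conditioned on the CoT prompt. Since $y^{\mathrm{test}} = z_H^{\mathrm{test}}$ is a (deterministic) function of the full reasoning chain, the data-processing inequality for KL divergence (equivalently, nonnegativity of KL between the conditional distributions of $(z_1^{\mathrm{test}},\ldots,z_{H-1}^{\mathrm{test}})$ given $y^{\mathrm{test}}$) yields
\begin{align*}
\KL\bigl(P(y^{\mathrm{test}} = \cdot), Q(y^{\mathrm{test}} = \cdot)\bigr) \leq \KL\bigl(P(z_{1:H}^{\mathrm{test}} = \cdot), Q(z_{1:H}^{\mathrm{test}} = \cdot)\bigr).
\end{align*}

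Next, I would invoke the chain rule of KL divergence, factoring both joint distributions autoregressively as $P(z_{1:H}^{\mathrm{test}}) = \prod_{h=1}^H P(z_h^{\mathrm{test}} \mid z_{1:h-1}^{\mathrm{test}})$ and similarly for $Q$. This gives
\begin{align*}
\KL\bigl(P(z_{1:H}^{\mathrm{test}}), Q(z_{1:H}^{\mathrm{test}})\bigr) = \sum_{h=1}^H \E_{z_{1:h-1}^{\mathrm{test}} \sim P} \Bigl[\KL\bigl(P(z_h^{\mathrm{test}} \mid z_{1:h-1}^{\mathrm{test}}), Q(z_h^{\mathrm{test}} \mid z_{1:h-1}^{\mathrm{test}})\bigr)\Bigr].
\end{align*}
By the definition $\pt_{\mathrm{CoT}}^h(n) = \Upsilon_n \cup \{z_{0:h-1}^{\mathrm{test}}\}$ and recalling that $\pt_{\mathrm{CoT}}(n) = \Upsilon_n \cup \{z_0^{\mathrm{test}}\}$, the conditional distribution $P(z_h^{\mathrm{test}} \mid z_{1:h-1}^{\mathrm{test}})$ coincides with $\PP(z_h^{\mathrm{test}} = \cdot \mid \pt_{\mathrm{CoT}}^h(n))$, and analogously for $Q$. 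Substituting and combining with the data-processing bound gives the stated inequality.

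There is no substantive obstacle; the only subtlety to check is the measure-theoretic consistency of the conditioning (i.e., that $\pt_{\mathrm{CoT}}^h(n)$ has positive density under $P$ whenever it arises in the expectation), which follows from Assumption~\ref{assumption: lower bd} ensuring $P$ puts positive mass on every reasoning step.
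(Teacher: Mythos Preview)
Your proposal is correct and matches the paper's own proof essentially verbatim: the paper also combines the data-processing inequality (to pass from the marginal of $y^{\mathrm{test}}=z_H^{\mathrm{test}}$ to the joint over $z_{1:H}^{\mathrm{test}}$) with the chain rule for KL divergence, just presenting the two steps in the opposite order.
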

\begin{proof}
    See Appendix~\ref{app: kl decomp} for detailed proof.
\end{proof}
This Lemma states that we can upper bound the pretraining error $\mathtt{err}_\mathrm{pre}$ by aggregating the pretraining error at each step of inference. We apply \eqref{eq: log density difference} with Lemma~\ref{lem: tv-kl} to convert each KL divergence in \eqref{eq:kl_sum} into TV distances. Namely, we have 
\begin{align}\label{eq:tv_sum}
&\KL\bigl(\PP(y^{\mathrm{test}} =\cdot \given \pt_{\mathrm{CoT}}(n)) , \PP_{\hat \rho}(y^{\mathrm{test}}=\cdot \given \pt_{\mathrm{CoT}}(n)) \bigr) \\
    &\quad \leq \sum_{h=1}^H \E_{{z_{1:h-1}^\mathrm{test} \sim \PP(\cdot \given \pt_\mathrm{CoT}(n))}} \bigg[2(3+b^*) \cdot \TV\bigl(\PP(z_{h}^{\mathrm{test}} = \cdot \given \pt_{\mathrm{CoT}}^h(n)) , \PP_{\hat \rho}(z_{h}^{\mathrm{test}} =\cdot \given \pt_{\mathrm{CoT}}^h(n)) \bigr) \bigg]. \notag 
\end{align}
The number $b^*$ comes from \eqref{eq: log density difference}, which upper bounds the log density difference $|\log \PP(z|S)-\log \PP_{\hat \rho}(z|S)$ for any $z\in \calL, S\in \calL^*$.

We take the expectation of \eqref{eq:tv_sum} with respect to \(\pt_\mathrm{CoT}(n) \sim \PP_\mathrm{CoT}\). Notice that different parts of $\pt_\mathrm{CoT}^h(n)$ have different distributions: \(z_{1:(h-1)}^\mathrm{test} \sim \PP(\cdot \mid \pt_\mathrm{CoT}(n))\), \(z_0^\mathrm{test} \sim \mu(\cdot \mid \Upsilon_n)\),  \(\Upsilon_n \sim \PP(\cdot\given \theta^*)\). We take these expectations sequentially:
    \begin{align}\label{eq:bd_1_tv}
        &\sum_{h=1}^H\E_{ \theta^*} \E_{\mu}\E_{z_{1:(h-1)}^\mathrm{test}\sim \PP(\cdot\given \pt_\mathrm{CoT}(n))} \Big[\KL\bigl(\PP(z_{h}^{\mathrm{test}}=\cdot \given \pt_{\mathrm{CoT}}^h(n)) , \PP_{\hat \rho}(z_{h}^{\mathrm{test}}=\cdot \given \pt_{\mathrm{CoT}}^h(n)) \bigr)\Big]\nonumber \\
        &\quad \leq \sum_{h=1}^H 2(3+b^*) \cdot \int_{\calL^*}  \Big(\TV\bigl(\PP(z_{h}^{\mathrm{test}}=\cdot \given \pt_{\mathrm{CoT}}^h(n)) , \PP_{\hat \rho}(z_{h}^{\mathrm{test}}=\cdot \given \pt_{\mathrm{CoT}}^h(n)) \bigr)\\
        &\quad \qquad  \cdot \pi(\theta^*)^{-1}\cdot \PP(\Upsilon_n) \cdot \kappa\cdot \PP(z_0^\mathrm{test}=\cdot\given \Upsilon_n)  \cdot \PP(z_{1:(h-1)}^\mathrm{test}=\cdot\given \pt_\mathrm{CoT}(n))\Big)\text{d} \pt_{\mathrm{CoT}}^h(n)\nonumber \\
        &\quad = 2(3+b^*) \kappa \pi(\theta^*)^{-1}\sum_{h=1}^H\E_\PP \Big[\TV\bigl(\PP(z_{h}^{\mathrm{test}}=\cdot \given \pt_{\mathrm{CoT}}^h(n)) , \PP_{\hat \rho}(z_{h}^{\mathrm{test}}=\cdot \given \pt_{\mathrm{CoT}}^h(n)) \bigr)\Big] \nonumber
    \end{align}
    with probability at least $1-\delta$. 
    In the first inequality, we integrate over $\pt_{\mathrm{CoT}}^h(n) \in \cL^*$. 
    This inequality is a result of \eqref{eq:tv_sum}, which transforms the KL distances into TV distances, and a change of distributions from  $\PP_{\mathrm{CoT}}$ to $\PP$. 
    Notice that we have $\PP(\Upsilon_n\given \theta^*)\pi(\theta^*)\leq \PP(\Upsilon_n)$ due to the discreteness of $\Theta$. 
    Also note that, by Assumption~\ref{assump: cover},  $\mu (z_{0}^\mathrm{test} = \cdot  \given \Upsilon_n) $ can be bounded by $\PP(z_{0}^\mathrm{test} = \cdot  \given \Upsilon_n)  $ by introducing an additional factor $\kappa$. 
    Consequently, \eqref{eq:bd_1_tv} shifts the expectation under \(\PP_\mathrm{CoT}\) towards \(\PP\) by scaling some constants.

Next, we upper bound \eqref{eq:bd_1_tv} using the analysis of pretraining error in Proposition~\ref{prop: pre-train}.
Recall that we introduce the notation $\E_{S\sim \calD}$ in \eqref{eq:avg_ex}, which involves an expectation over $N$ i.i.d. documents, each has $T$ examples. 
Since $ \pt_{\mathrm{CoT}}(n)$ only has $n$ examples and $T \geq n+1$ and the $N$ documents in $\cD_{N,T}$ are i.i.d., 
we have 
    \begin{align}\label{eq:bd_2_tv} 
        &\sum_{h=1}^H\E_\PP \Big[\TV\bigl(\PP(z_{h}^{\mathrm{test}} =\cdot\given \pt_{\mathrm{CoT}}^h(n)) , \PP_{\hat \rho}(z_{h}^{\mathrm{test}} =\cdot\given \pt_{\mathrm{CoT}}^h(n)) \bigr)\Big]  \\
        & \quad \leq \sum_{t=1}^T \sum_{h=1}^H \Big[\TV\bigl(\PP(z_{h}^{ t, 1} =\cdot\given  S_{h}^{t,1}) , \PP_{\hat \rho}(\PP(z_{h}^{ t, 1}  =\cdot\given  S_{h}^{t,1} ) \bigr)\Big] 
        \leq T \cdot (H+1) \cdot \Delta_{\mathrm{pre}}(N,T,\delta), \nonumber
    \end{align}
with probability with at least $1-\delta$. 
Here the first inequality holds because the left-hand side is only a single term in the right-hand side summation with $t = n$, and the second inequality follows from  Proposition~\ref{prop: pre-train}, 
Combing \eqref{eq:bd_1_tv} and \eqref{eq:bd_2_tv}, we upper bound the expectation of pretraining error as 
    \begin{align}
        &\E_{\PP_\mathrm{CoT}}\big[\mathtt{err}_\mathrm{pre}(\PP,\PP_{\hat \rho};\pt_\mathrm{CoT}(n))\big] \nonumber \\
        &\quad =\E_{\PP_\mathrm{CoT}}\Big[\KL\bigl(\PP(y^{\mathrm{test}} =\cdot \given \pt_{\mathrm{CoT}}(n)) , \PP_{\hat \rho}(y^{\mathrm{test}}=\cdot \given \pt_{\mathrm{CoT}}(n)) \bigr)\Big]\nonumber\\
        &\quad \leq  2(3+b^*) \kappa \cdot \pi(\theta^*)^{-1} \cdot T(H+1) \cdot \Delta_{\mathrm{pre}}(N,T,\delta), \label{eq:pretrain_bd}
    \end{align}
    with probability with at least $1-\delta$. The randomness comes from the pretrained model $\PP_{\hat \rho}$. This concludes \textbf{Step 1}.


\vspace{2mm}

\noindent \textbf{Step 2: Control expected prompting error under distribution shift.} In this step, we evaluate the expected prompting error \(\mathtt{err}_\mathrm{prompt}\) \eqref{eq: err pt} with the expectation taken under \(\pt_\mathrm{CoT}(n) \sim \PP_\mathrm{CoT}\). Similar to {\bf Step 1}, we first take the expectation of the shifted testing query \(z_0^\mathrm{test} \sim \mu(\cdot \mid \pt_\mathrm{CoT}(n))\), followed by the expectation of the demonstrations \(\Upsilon_n \sim \PP(\cdot \given \theta^*)\).

We first compute the expected KL divergence with respect to the query $z_0^\mathrm{test}\sim \mu(\cdot \given \Upsilon_n)$:
\begin{align}
    &\E_{ z_0^\mathrm{test}\sim \mu(\cdot\given \Upsilon_n)}\bigg[\KL \big(\PP(y^{\mathrm{test}} =\cdot \given z_0^{\mathrm{test}} ,\theta^*), \PP(y^{\mathrm{test}}=\cdot \given \pt_{\mathrm{CoT}}(n))\big)\bigg] \nonumber \\
    &\quad \leq \E_{z_0^\mathrm{test}\sim \mu(\cdot\given \Upsilon_n)}\bigg[\log\bigg(1+\frac{\sum_{\theta \in \Theta^\complement}\PP(\Upsilon_n, z_0^\mathrm{test}\given \theta)\pi(\theta)}{\PP(\Upsilon_n, z_0^\mathrm{test}\given \theta^*)\pi(\theta^*)}\bigg)\bigg] \nonumber\\
    &\quad \leq \log\bigg(1+\E_{z_0^\mathrm{test}\sim \mu(\cdot\given \Upsilon_n)}\bigg[\frac{\sum_{\theta \in \Theta^\complement}\PP(\Upsilon_n, z_0^\mathrm{test}\given \theta)\pi(\theta)}{\PP(\Upsilon_n, z_0^\mathrm{test}\given \theta^*)\pi(\theta^*)}\bigg]\bigg). \label{eq:log_bd}
\end{align}
The first inequality follows from Proposition~\ref{prop:elb}, and the second is due to Jensen's inequality. Next, we rewrite the expected likelihood ratio in \eqref{eq:log_bd} as
\begin{align}
&\E_{z_0^\mathrm{test}\sim \mu(\cdot\given \Upsilon_n)}\bigg[\frac{\sum_{\theta \in \Theta^\complement}\PP(\Upsilon_n, z_0^\mathrm{test}\given \theta)\pi(\theta)}{\PP(\Upsilon_n, z_0^\mathrm{test}\given \theta^*)\pi(\theta^*)} \bigg]\nonumber\\
&\quad = \sum_{\theta \in \Theta^\complement}\frac{\PP(\Upsilon_n\given \theta)}{\PP(\Upsilon_n\given \theta^*)} \cdot \E_{z_0^\mathrm{test}\sim \mu(\cdot\given \Upsilon_n)}\bigg[\frac{\PP(z_0^\mathrm{test} \given \theta)\pi(\theta)}{\PP(z_0^\mathrm{test} \given \theta^*)\pi(\theta^*)} \bigg], \label{eq: log_sum}
\end{align}
where the equality follows from the independence between $\Upsilon_n$ and $z_0^\mathrm{test}$ conditioning on any task $\theta$,
which enables us to exchange $\sum_{\theta \in \Theta^{\complement} }  $ with the expectation with respect to $\mu (\cdot \given \Upsilon_n) $.
According to Lemma~\ref{lemma: likelihoodbd} and Assumption~\ref{assumption: seperation}, we have $\PP(\Upsilon_n\given \theta)/\PP(\Upsilon_n\given \theta^*)\leq \exp(-2n\lambda+2\log(\bar \xi^{-1}))$ with probability at least $1-\bar \xi$ for any $\theta\in \Theta^\complement$. 
Setting $\bar \xi = \xi / |\Theta | $ and taking a union bound,   we therefore have
\begin{align}
\mathrm{RHS~of}~\eqref{eq: log_sum}\leq \big|\Theta^\complement \big|^2 \xi^{-2} \cdot \exp(-2n\lambda) \cdot \sum_{\theta \in \Theta^\complement}\E_{z_0^\mathrm{test}\sim \mu(\cdot\given \Upsilon_n)}\bigg[\frac{ \PP(z_0^\mathrm{test}\given \theta) \pi(\theta)}{\PP( z_0^\mathrm{test}\given \theta^*)\pi(\theta^*)}\bigg], \label{eq:log_bd_2}
\end{align}
which holds with  probability at least $1-\xi$ with respect to the  randomness comes from $\Upsilon_n\sim \PP(\cdot \given \theta^*)$. 

Next we upper bound each term in the summation in \eqref{eq:log_bd_2}.
By changing the probability measure and Cauchy-Schwarz inequality, we have 
\begin{align}
   \E_{z_0^\mathrm{test}\sim \mu(\cdot\given \Upsilon_n)}\bigg[\frac{\PP(z_0^\mathrm{test}\given \theta)}{\PP( z_0^\mathrm{test}\given \theta^*)}\bigg]&= \E_{z_0^\mathrm{test}\sim \PP(\cdot\given \theta^*)}\bigg[\frac{\mu(z_0^\mathrm{test}\given \Upsilon_n)\cdot \PP(z_0^\mathrm{test}\given \theta)}{\PP( z_0^\mathrm{test}\given \theta^*) \cdot \PP( z_0^\mathrm{test}\given \theta^*) }\bigg] \nonumber \\
   &\leq \sqrt{\E_{z_0^\mathrm{test}\sim \PP(\cdot\given \theta^*)}\left[\bigg(\frac{\PP(z_0^\mathrm{test}\given \theta)}{\PP( z_0^\mathrm{test}\given \theta^*)}\bigg)^2 \right]}\sqrt{\E_{z_0^\mathrm{test}\sim \PP(\cdot\given \theta^*)}\left[\bigg(\frac{\mu(z_0^\mathrm{test}\given \Upsilon_n)}{\PP( z_0^\mathrm{test}\given \theta^*)}\bigg)^2 \right]} \nonumber \\
   &=\sqrt{\big(\chi^2\big(\PP( z_0^\mathrm{test} =\cdot \given \theta), \PP( z_0^\mathrm{test}=\cdot \given \theta^*)\big)+1\big) \cdot \kappa^2}, \label{eq:log_bd_3}
\end{align}
where the second line follows from the Cauchy–Shwarz inequality. The final line follows because $$\E_{ z_0^\mathrm{test}\sim \PP(\cdot \mid \theta^*)}\left[ \biggl( 
\frac{\mu(z_0^\mathrm{test} \mid \Upsilon_n)}{\PP( z_0^\mathrm{test} \mid \theta^*)} \bigg)^2  \right] \leq \kappa^2,$$ which results from Assumption~\ref{assump: cover}.
Besides, we  define $C(\theta^*)$ as  
\begin{align}
    C(\theta^*)=\sup_{\theta\in \Theta^\complement}\sqrt{\chi^2\big(\PP( z_0^\mathrm{test}=\cdot\given \theta), \PP( z_0^\mathrm{test}=\cdot\given \theta^*)\big)+1}.  \label{eq:c_def}
\end{align}
Thus, combining \eqref{eq:log_bd_2}, \eqref{eq:log_bd_3},   \eqref{eq:c_def}, with probability at least  $1-\xi$,  we have 
\begin{align}
    \mathrm{RHS~of}~ \eqref{eq:log_bd_2}\leq C(n)\cdot \xi^{-2},  \text{ where }C(n) = \frac{\pi(\Theta^\complement)}{\pi(\theta^*)}\cdot \kappa\big| \Theta^\complement \big|^2 \cdot C(\theta^*) \cdot \exp(-2n\lambda), \label{eq:c_n}
\end{align}
and $C(\theta^*)$ is defined in \eqref{eq:c_def}.

Applying \eqref{eq:c_n} to \eqref{eq:log_bd} gives us the following tail probability bound:
\begin{align}
    \PP \bigg(\E_{ z_0^\mathrm{test}\sim \mu(\cdot\given \Upsilon_n)}\Big[\KL \big(\PP(y^{\mathrm{test}}=\cdot \given z_0^{\mathrm{test}} ,\theta^*), \PP(y^{\mathrm{test}}=\cdot \given \pt_{\mathrm{CoT}}(n))\big)\Big]>\log \big(1+C(n) \xi^{-2} \big)\bigg)< \xi, \label{eq:tail_bd}
\end{align}
where $\xi\in (0,1)$, and the randomness comes from $\Upsilon_n \sim \PP(\cdot \given \theta^*)$.
By replacing $ x = \log ( 1 + C(n) \cdot \xi ^{-2} ) $ in \eqref{eq:tail_bd}, 
for any $x \in [0, \infty)$, 
we have 
\begin{align}
    \label{eq:tail_bd_2}
   &  \PP \bigg(\E_{ z_0^\mathrm{test}\sim \mu(\cdot\given \Upsilon_n)}\Big[\KL \big(\PP(y^{\mathrm{test}}=\cdot \given z_0^{\mathrm{test}} ,\theta^*), \PP(y^{\mathrm{test}}=\cdot \given \pt_{\mathrm{CoT}}(n))\big)\Big] > x \bigg) \notag \\
    & \qquad  \leq \bigl ( C(n) /  ( \exp(x) - 1 ) \bigr )^{1/2}. 
 \end{align}

We provide an upper bound of the expected KL divergence by integrating the tail probability  in \eqref{eq:tail_bd_2} as follows, 
\begin{align}
    &\E_{\PP_{\mathrm{CoT}}}\Big[\KL \big(\PP(y^{\mathrm{test}}=\cdot \given z_0^{\mathrm{test}} ,\theta^*), \PP(y^{\mathrm{test}}=\cdot \given \pt_{\mathrm{CoT}}(n))\big)\Big] \nonumber \\
    &\quad = \int_{0}^{\infty}\PP \bigg(\E_{ z_0^\mathrm{test}\sim \mu(\cdot\given \Upsilon_n)}\Big[\KL \big(\PP(y^{\mathrm{test}} =\cdot\given z_0^{\mathrm{test}} ,\theta^*), \PP(y^{\mathrm{test}} =\cdot\given \pt_{\mathrm{CoT}}(n))\big)\Big]>x\bigg)\text{d}x\nonumber.  
\end{align}
Note that we can split the integration of $x$ over $[0,\infty)$ into two regions: $[0,\log(1+C(n))$ and $[\log(1+C(n),\infty)$, where the probability in~\eqref{eq:tail_bd_2} is bounded by one  in  $[0,\log(1+C(n))$. Therefore, we have
\begin{align}
& \E_{\PP_{\mathrm{CoT}}}\Big[\KL \big(\PP(y^{\mathrm{test}}=\cdot \given z_0^{\mathrm{test}} ,\theta^*), \PP(y^{\mathrm{test}}=\cdot \given \pt_{\mathrm{CoT}}(n))\big)\Big] \nonumber \\
    &\quad \leq \log\big(1+C(n)\big) + \int_{\log(1+C(n))}^\infty \bigl ( C(n) /  ( \exp(x) - 1 ) \bigr )^{1/2}  \text{d} x \nonumber \\
    &\quad =\log\big(1+C(n)\big) + C(n)^{1/2} \cdot \Big(\pi-2\arctan  \big(C(n)^{1/2}\big) \Big) = \cO\big ( C(n)^{1/2} \big)  \nonumber \\
    &\quad =\calO\bigg(\bigg(\frac{\pi(\Theta^\complement)}{\pi(\theta^*)}\cdot C(\theta^*)\cdot \kappa\bigg)^{1/2}\cdot \big| \Theta^\complement \big| \cdot \exp(-n\lambda)\bigg). \label{eq:pt_bd}
\end{align}
Here in the third line we plug in the closed-form   $$\int \bigl (\exp(x) - 1 \big ) ^{-1/2} \ud x = 2 \arctan  \big (\sqrt{\exp(x) - 1 } \big).  $$
When $n$ is large, $C(n) $ is sufficiently small. 
The second equality  follows from the first-order Taylor approximations $\log (1 + u) \approx u $ and $\arctan (u) \approx u$ when $u$ is close to zero. 

The convergence rate in the last line is dominated by the rate of $C(n)^{1/2}$.
Therefore, we control the rate of expected prompting error defined in  \eqref{eq: err pt} by applying \eqref{eq:pt_bd}: 
\begin{align}
    \E_{\PP_\mathrm{CoT}}\Big[\mathtt{err}_\mathrm{prompt}\big(\PP,\theta^*,\pt_\mathrm{CoT}(n)\big)\Big] = \calO\bigg(Hb^*\bigg(\frac{\pi(\Theta^\complement)}{\pi(\theta^*)}C(\theta^*)\kappa\bigg)^{1/4}\cdot \big| \Theta^\complement \big|^{1/2}\cdot \exp(-n\lambda/2)\bigg). \label{eq:err_pt_bd}
\end{align}

Combining \eqref{eq:pretrain_bd} and \eqref{eq:err_pt_bd}, we have that under the Assumptions~\ref{assumption: seperation}, \ref{assump: cover}, with probability at least $1-\delta$, 
\begin{align*}
        &\E_{\PP_{\mathrm{CoT}}} \Big[\KL\bigl(\PP(y^{\mathrm{test}}=\cdot \given z_0^{\mathrm{test}} ,\theta^*) , \PP_{\hat \rho}(y^{\mathrm{test}}=\cdot \given \pt_{\mathrm{CoT}}(n)) \bigr) \Big] \nonumber \\
        &\quad  = \calO\bigg(Hb^* \bigg(\frac{\pi(\Theta^\complement)}{\pi(\theta^*)}\cdot C(\theta^*)\cdot \kappa\bigg)^{1/4}\cdot \big| \Theta^\complement \big|^{1/2} \cdot \exp(-n\lambda/2) + \kappa T H \pi(\theta^*)^{-1}(1+b^*) \cdot \Delta_{\mathrm{pre}}(N,T,\delta) \bigg),
\end{align*}
where the first term corresponds to the prompting error $\mathtt{err}_\mathrm{prompt}$ \eqref{eq: err pt}, and the second corresponds to the pre-training error $\mathtt{err}_\mathrm{pre}$ \eqref{eq: err pre}. The randomness comes from the pretrained model $\PP_{\hat \rho}$. Therefore, we conclude the proof.
\end{proof}

\newpage 

\section{Supplemental Materials}
This section consists of three subsections. The first Subsection~\ref{app: pre-training process} gives a more detailed description of the pretraining process. The Subsections~\ref{app: supp for main thms} and \ref{app: pretrain supp} prove the lemmas and propositions used in Sections~\ref{sec: main app} and \ref{app: pretrain}.

\subsection{Additional Details about Pretraining} \label{app: pre-training process}
In this section, we provide a detailed description of the pretraining process of autoregressive \ac{llm}. Specifically, we focus on pretraining with data sampled from the generalized model described in \eqref{eq:generalized_latent_var_model}. An autoregressive \ac{llm} is a transformer that maps a reasoning step sequence $S\in \calL^*$ to a probability distribution for predicting the next reasoning step $z\in \calL$.

\vspace{2mm}
\noindent \textbf{Transformer Architecture.} 
We focus on a transformer with   $D$  transformer blocks stacked sequentially followed by a final softmax layer. Let $X^0\in \mathbb{R}^{L\times r}$ denote the initial input embedding for the entire network, which contains both the content embedding and the positional encoding. 
The $d$-th block takes in $X^{d-1}\in \mathbb{R}^{L\times r}$, produces $X^{d}\in \mathbb{R}^{L\times r}$, and feeds it to the next module until arriving at the last one. Each transformer block consists of four components: a \ac{mha} and a \ac{ff} layer. Each component has a residual connection around it, followed by layer normalization, which prepares the raw output of the current layer to be forwarded as input toward the next layer. See Figure~\ref{fig:pre-train-intext} for an illustration of the architecture. 

\vspace{2mm}

{\bf \noindent  Input Embedding.}
Specifically, the input of the transformer is a sequence of reasoning steps of length $L$, with each step taking values in $\cL$.
Since the attention mechanism is permutation invariant but the sequential order matters in \ac{cot} reasoning, to encode an order, the transformer incorporates positional embeddings that map the positional information of each reasoning step into the Euclidean space. In addition, the values in $\cL$ are also mapped to a vector space. 
Thus, the transformer first maps the input sequence of length $L$ into a sequence of $L$ vectors in $\RR^r$, which involves both content and positional embedding.

\vspace{2mm} 
{\bf \noindent Multi-Head Attention (MHA).}  We let $X^{0} \in \RR^{L\times r}$ denote the output after the embedding module, which is passed to $D$ transformer blocks.   
Each block 
consists of a MHA layer, a FF layer, and two normalization layers. 
For any \(d \in [D]\), the parameters of the $d$-th transformer block are
 \(\rho^d = (W^d_\mathrm{mha}, W^d_\mathrm{ff}, \gamma^d_1, \gamma^d_2)\). The weight matrices of the \ac{mha} layer are \(W_{\mathrm{mha}}^d = (W^{Q,d}_i, W^{K,d}_i, W^{V,d}_i)_{i=1}^\eta\), where \(\eta\) is the number of heads. Here, \(W^{Q,d}_i \in \mathbb{R}^{r \times d_q}\), \(W^{K,d}_i \in \mathbb{R}^{r \times d_k}\), and \(W^{V,d}_i \in \mathbb{R}^{r \times d_v}\) convert the input \(X^{d-1}\) into queries, keys, and values, respectively. We set \(d_v = r\) to ensure the \ac{mha} output is also in $\RR^r$. 
Specifically, a MHA layer with $\eta$ heads output a vector sequence given by \eqref{eq: mha}. We denote the output of the $d$-th attention layer by 
\(\overline  X^d = \mathtt{mha}(X^{d-1}, W_{\mathrm{mha}}^d) \in \mathbb{R}^{L \times r}\). In particular, for any $\ell \in [L]$, the $\ell$-th vector of $\overline  X^d$ is given by 
\begin{align} \label{eq:attention_output}
\overline X^d[\ell] = \sum_{i=1}^\eta 
\mathtt{attn}(q_{i}^\ell, K_i, V_i),
\end{align} 
where the query, key, and value  of the $i$-th head are $q_{i}^\ell = (W^{Q,d}_i )^\top X^{d-1} [\ell] $, $K_i = X^{d-1}  W^{K,d}_i$, and $V = X^{d-1}W^{V,d}_i $. 
Here $\mathtt{attn}$ in \eqref{eq:attention_output} is the softmax attention defined in \eqref{eq: softmax attn}.

\vspace{2mm} 
{\bf \noindent First Residual Link and Normalization.}  The raw output of MHA layer is then passed through a residual link with diagonal weight matrix \(\gamma_1^d \in \RR^{r\times r} \) and a normalization layer \(\mathtt{NL}(\cdot)\), resulting in the intermediate output 
$$
Y^d = \mathtt{NL}(\overline {X}^d +  X^{d-1}\gamma_1^d) = \mathtt{NL} \bigl (\mathtt{mha}(X^{d-1}, W_{\mathrm{mha}}^d) + X^{d-1}\gamma_1^d \big) \in \mathbb{R}^{L \times r}.$$
Here the multiplication of $\gamma_1^d$ should be understood as a columnwise operation for all $r' \in [r]$. Note that each of the $r$ vector of $X^{d-1}$ is in $\RR^{L}$, which is mapped to another vector in $\RR^{L}$ by scaling with $\gamma_1^d[r']$. 
For the ease of analysis, we adopt the normalization function that maps each row of the input into the unit $\ell_{2}$-ball as follows.
\begin{align}
    [\mathtt{NL}(X)]_{i,:} = \begin{cases}
X_{i,:}& \text{ if }\|X_{i,:}\|_{2}\leq 1 \\
X_{i,:}/\|X_{i,:}\|_{2}& \text{ otherwise.} \label{eq:nl}
\end{cases}
\end{align}
Another popular normalization function is layer normalization \citep{xiong2020layer}, which standardizes the vectors of $\overline {X}^d +  X^{d-1}\gamma_1^d$ by subtracting the mean and dividing by the variance. 

\vspace{2mm} 
{\bf \noindent Feed Forward (FF) Layer.} 
The \ac{ff} layer is parameterized by 
\(W^d_\mathrm{ff} = (W^d_\mathrm{ff,1}, W^d_\mathrm{ff,2})\), where \(W^d_{\mathrm{ff,1}} \in \mathbb{R}^{r \times d_F}\) and \(W^d_{\mathrm{ff,2}} \in \mathbb{R}^{d_F \times r}\), where $d_F $ is the number of neurons of the FF layer. 
In particular,  \(Y^d\) is passed through the FF  layer and the output is another sequence of vectors in $\RR^{L\times d}$. 
To get the output, for any $\ell \in [L]$, we pass $Y^d[\ell]$ into a two layer neural network and obtain 
$
\mathtt{ReLU} \bigl(  (Y^d[\ell]) ^\top (W^d_\mathrm{ff,1} ) \big) W^d_\mathrm{ff,2},
$
where $\mathtt{ReLU}(\cdot)$ is the ReLU activation function and we regard $Y^d[\ell]$ as a column vector in $\RR^{r}$. 
Here we omit the intercepts to simplify the presentation. 
The output of FF  layer, denoted by $\mathtt{ff}(Y^d, W_{\mathrm{ff}}^d)$, concatenates all these $L$ output vectors.

\vspace{2mm} 
{\bf \noindent Second Residual Link and Normalization.}  The  output of FF layer is then passed through a second residual link with weight \(\gamma_2^d \in \RR^{r\times r} \) and a normalization layer \(\mathtt{NL}(\cdot)\).  This concludes the $d$-th transformer block and the resulting output is given by 
$$
 X^d = \mathtt{NL} \big (\mathtt{ff}(Y^d, W_{\mathrm{ff}}^d) + Y^d\gamma_2^d  \big )   \in \mathbb{R}^{L \times r}.
$$


{\noindent \bf Softmax Output Layer} 
After processing through all \(D\)  transformer blocks, the output \(X^D\) is fed into a softmax layer to generate the probability distribution of the next reasoning step. This softmax layer is parameterized by \(\rho^{D+1} = (W_\mathrm{softmax}, \tau)\), where \(0 < \tau \leq 1\) is the temperature parameter and \(W_\mathrm{softmax} \in \mathbb{R}^{r \times |\calL|}\) is the weight matrix. The softmax layer takes \(X^D \in \mathbb{R}^{L \times r}\) and produces the output distribution 
\begin{align}
    O^{D+1} = \mathtt{softmax}\bigl(\mathbf{1}^\top X^D W_\mathrm{softmax} / (L \cdot\tau) \bigr) \in \mathbb{R}^{|\calL|}, \label{eq:softmax_layer}
\end{align}
where \(\mathbf{1} \in \mathbb{R}^L\) is an all-one  vector. Here $\mathtt{softmax}(\cdot)$ is the softmax function which maps a vector in $\mathbb{R}^{|\calL|}$ to a distribution over $\cL$. 

We concatenate all $\rho^d$, $d\in [D+1]$, to form $\rho$, which parameterizes the whole pretraining network. We assume the parameters are bounded, i.e., we consider transformers in the following parameter space:
\begin{align*}
    \cP_{\mathrm{LLM}} &= \Big\{\rho : \| \gamma_1^d\|_\infty, \|\gamma_2^d\|_\infty \leq 1, \|W^{Q, d}_i\|_F ,\|W^{K, d}_i\|_F ,\|W^{V, d}_i\|_F \leq B_M, \| 
    W_{\mathrm{ff},1}^{d}\|_F,\|W_{\mathrm{ff},2}^{d}\|_F \leq B_{F}, \\
    & \qquad  
     \|W_{\mathrm{softmax}}\|_{1,2}\leq B_S, \forall d \in [D], i \in [\eta], D\geq C\log(2H), \eta\geq 1 \Big\},
\end{align*}
where $C>0$ is a constant, $B_M, B_{F}, B_S$ are the upper bounds on the norms. We assume these bounds to be larger than $1$. 

\vspace{2mm}

\noindent \textbf{Pretraining Dataset under the Generalized Model.} We describe the pretraining dataset generated according to the generalized model in \eqref{eq:generalized_latent_var_model}. We denote the pretraining dataset using $\calD_{N,T}$, which consists of $N$ independent trajectories with $T$ examples in each trajectory. For each trajectory $\ell\in [N]$, we first sample a task $\theta_\ell^* \overset{i.i.d}{\sim} \pi$. Conditioning on this task, we sequentially generate $T$ examples $\{s^{k,\ell}\}_{k=1}^T$ according to the model \eqref{eq:generalized_latent_var_model}, i.e., we iteratively generate the next reasoning step $z_h^{t,\ell} \sim \PP(\cdot \given S_h^{t,\ell})$, where we use $S_h^{t,\ell} =(\Upsilon_{t-1,\ell},\{z_j^{t,\ell}\}_{j=0}^{h-1})$ to denote the sequence with all previous reasoning steps of the $\ell$-th trajectory. Since \ac{llm}s make prediction autoregressively, we divide each trajectory into $T(H+1)$ pieces and collect all $N$ independent trajectories and use $\mathcal{D}_{N,T} = \{(S_h^{t,\ell} ,z_{h}^{t,\ell})\}_{h=0,t=1, \ell=1}^{H, T, N}$ to denote the pretraining dataset. 

\vspace{2mm}
{\noindent \bf Maximum Likelihood Estimation (MLE).} 
 We obtain the pretrained \ac{llm} by minimizing the negative likelihood loss computed based on $\calD_{N,T}$,
\begin{align}
 \hat \rho = \argmin_{\rho \in \cP_{\mathrm{LLM}}} -\frac{1}{NT (H+1)}\sum_{\ell=1}^N \sum_{t=1}^T \sum_{h=0}^{H}\log \PP_\rho \big (z_{h}^{t,\ell} \given S_h^{t,\ell} \big ) \label{eq:pretrain_rho_appendix}
\end{align}
and  set $\PP_\mathrm{LLM} = \PP_{\hat \rho}$. Here $\PP_\rho $ denotes the conditional distribution specified by the transformer with parameter $\rho$. 
We neglect the optimization issue and assume that the MLE in \eqref{eq:pretrain_rho_appendix} can be obtained.  
We note that when the transformer class is sufficiently expressive, we expect that $\PP_\mathrm{LLM} $ learns the conditional distribution of \(z_h^{t,\ell}\) given   \(S_h^{t,\ell}\), which is given in \eqref{eq:bayes_factorization}.

\subsection{Proofs of the Auxiliary Results in  Appendix~\ref{proof: rate_cot}} \label{app: supp for main thms}

\subsubsection{Proof of Proposition \ref{prop:elb}}\label{proof: elb}

\begin{proof}[Proof of Proposition \ref{prop:elb}] 
Using  any distribution $q $ over $\Theta$, we  bound  the loglikelihood $\log \PP\big(y^{\mathrm{test}} = \cdot \given \mathtt{prompt}_\mathrm{CoT}(n)\big)$ by  
\begin{align}\label{eq: elb lower bd}
    \log \PP\big(y^{\mathrm{test}} = \cdot  \given \mathtt{prompt}_\mathrm{CoT}(n)\big)&\geq \E_{\theta \sim q(\theta)} \bigg[\log \frac{\PP\big(y^{\mathrm{test}},\theta \given \mathtt{prompt}_\mathrm{CoT}(n)\big)}{q(\theta)}\bigg]  \\
    &=\E_{q}\bigg[\log \frac{\PP\big(y^{\mathrm{test}} \given \theta, \mathtt{prompt}_\mathrm{CoT}(n) \big)\pi(\theta)\PP \big(\mathtt{prompt}_\mathrm{CoT}(n) \given \theta \big)}{q(\theta)\PP\big(\mathtt{prompt}_\mathrm{CoT}(n) \big)}\bigg], \nonumber 
\end{align}
where we take expectation with respect to an arbitrary distribution $q$ over $\Theta$. The inequality follows from the evidence lower bound, and the equality follows from decomposing the numerator. Conditioning on any $\mathtt{prompt}_\mathrm{CoT}(n)$ consisting of $n$ examples generated from the true distribution and a new testing input $z_0^{\mathrm{test}}$, we compute the KL divergence with respect to the final output $y^{\mathrm{test}}$. For simplicity, we write  $\E_{y^{\mathrm{test}} \sim \PP(\cdot \given z_0^{\mathrm{test}},\theta^*)}$ as $\E_{\theta^*}$. Then, we have 
\begin{align}\label{eq:proof_prop_c2_1}
& \KL \Big(\PP(y^{\mathrm{test}}=\cdot \given z_0^{\mathrm{test}},\theta^*) , \PP\big(y^{\mathrm{test}}=\cdot \given \mathtt{prompt}_\mathrm{CoT}(n)\big) \Big)\\
&\quad \leq \E_{\theta^*} \log \PP(y^{\mathrm{test}} \given z_0^{\mathrm{test}},\theta^*) -\E_{ \theta^*}\E_{q}\bigg[\log \frac{\PP\big(y^{\mathrm{test}} \given \theta, \mathtt{prompt}_\mathrm{CoT}(n) \big)\pi(\theta)\PP \big(\mathtt{prompt}_\mathrm{CoT}(n) \given \theta \big)}{q(\theta)\PP\big(\mathtt{prompt}_\mathrm{CoT}(n) \big)}\bigg] \nonumber\\
& \quad = \E_{ \theta^*}\E_{\theta \sim  q} \bigg[\log \frac{\PP\big(\mathtt{prompt}_\mathrm{CoT}(n)\big) q(\theta)}{\PP(\mathtt{prompt}_\mathrm{CoT}(n)\given \theta)\pi(\theta)} \bigg]- \E_{\theta^*}\E_{\theta \sim q} \bigg[ \log \frac{\PP\big(y^{\mathrm{test}} \given z_0^{\mathrm{test}},\theta\big)}{\PP(y^{\mathrm{test}} \given z_0^{\mathrm{test}},\theta^*)} \bigg]. \nonumber
\end{align}
The  inequality follows from the definition of KL divergence and the lower bound in \eqref{eq: elb lower bd}.
The equality follows from the fact that $\PP (y^{\mathrm{test}} \given \theta, \mathtt{prompt}_\mathrm{CoT}(n)  ) = \PP (y^{\mathrm{test}} \given z_0^{\mathrm{test}},\theta )$ under the model in \eqref{eq:latent_var_model} and 
 rearranging terms.

Now we set  $q$ as 
$q(\theta) \propto \mathbf{1}\{\theta \in \Theta_{\mathrm{eq}}(\theta^*)\} \cdot \pi\big(\theta \given \mathtt{prompt}_\mathrm{CoT}(n)\big)$, where $\pi\big(\theta \given \mathtt{prompt}_\mathrm{CoT}(n)\big)$ is the posterior distribution over $\Theta$ after observing the prompt. 
Note that this $q$ assigns zero probability to any $\theta$ outside the equivalence class $\Theta_{\mathrm{eq}}(\theta^*)$. 
By taking an expectation with respect to this $q$, we have  
$$ \E_{\theta^*}\E_{\theta \sim q} \big[ \log\PP(y^{\mathrm{test}} \given z_0^{\mathrm{test}},\theta)-\log\PP(y^{\mathrm{test}} \given z_0^{\mathrm{test}},\theta^*) \big] =0  $$ 
by the construction of the equivalence classes in Definition \eqref{def: eqv class}. 
Thus the KL divergence in \eqref{eq:proof_prop_c2_1} is further bounded as follows: 
\begin{align}\label{eq:proof_prop_c2_2}
& \KL \Big(\PP\big(y^{\mathrm{test}}=\cdot \given z_0^{\mathrm{test}},\theta^*\big) , \PP\big(y^{\mathrm{test}} =\cdot\given \mathtt{prompt}_\mathrm{CoT}(n) \big)\Big) \notag \\
& \quad \leq  \E_{\theta^*}\E_{\theta \sim q} \bigg[ \log \frac{\PP\big(\mathtt{prompt}_\mathrm{CoT}(n)\big) q(\theta) }{\PP \big(\mathtt{prompt}_\mathrm{CoT}(n)\given \theta\big)\cdot \pi(\theta) } \bigg] \notag \\
& \quad =  \E_{\theta^*}\E_{\theta \sim q} \bigg[\log  \frac{\PP\big(\mathtt{prompt}_\mathrm{CoT}(n)  \big) \cdot \pi\big(\theta \given  \mathtt{prompt}_\mathrm{CoT}(n)\big)}{\PP\big(\mathtt{prompt}_\mathrm{CoT}(n) \given \theta\big)\pi(\theta)\cdot \int_{\Theta_{\mathrm{eq}}(\theta^*)} \pi\big(\theta' \given  \mathtt{prompt}_\mathrm{CoT}(n)\big) \text{d}\theta'} \bigg] \notag \\
& \quad = -  \log \Bigl (    \int_{\Theta_{\mathrm{eq}}(\theta^*)} \pi\big(\theta \given  \mathtt{prompt}_\mathrm{CoT}(n)\big) \text{d}\theta \Bigr) . 
\end{align}
Here in the first equality, we plug in the closed form of $q(\theta)$, and in the second equality, we use the fact that 
$$
\PP\big(\mathtt{prompt}_\mathrm{CoT}(n)  \big) \cdot \pi\big(\theta \given  \mathtt{prompt}_\mathrm{CoT}(n)\big) = \PP\big(\mathtt{prompt}_\mathrm{CoT}(n)  \given \theta  \big) \cdot \pi (\theta  )
$$
to cancel terms. 
We can interpret the last term in \eqref{eq:proof_prop_c2_2}  as an integrated version of posterior contraction. Intuitively, a better CoT prompt yields a higher posterior concentration on $\Theta_{\mathrm{eq}}(\theta^*)$, leading to a smaller upper bound. 
Finally, we plug in the closed-form expression of $\pi (\theta \given  \mathtt{prompt}_\mathrm{CoT}(n) )$, i.e., 
$$
\pi\big(\theta \given  \mathtt{prompt}_\mathrm{CoT}(n)\big) = \frac{ \PP\big(\mathtt{prompt}_\mathrm{CoT}(n) \given \theta \big) \pi(\theta) } { \int_{\Theta} \PP\big(\mathtt{prompt}_\mathrm{CoT}(n) \given \theta \big) \pi(\theta)  \ud \theta},
$$
in \eqref{eq:proof_prop_c2_2} and obtain that 
\begin{align*}
    &\KL\Big(\PP(y^{\mathrm{test}}=\cdot \given z_0^{\mathrm{test}},\theta^*) , \PP\big(y^{\mathrm{test}} =\cdot \given \mathtt{prompt}_\mathrm{CoT}(n)  \big)\Big)\\
    &\quad \leq  \log \frac{\int_{\Theta}\PP\big(\mathtt{prompt}_\mathrm{CoT}(n) \given \theta \big) \pi(\theta) \text{d}\theta}{\int_{\Theta_{\mathrm{eq}}(\theta^*)}\PP \big(\mathtt{prompt}_\mathrm{CoT}(n) \given \theta \big) \pi(\theta) \text{d}\theta}\\
    & \quad = \log \bigg(1+\cfrac{\int_{\Theta^{\complement}}\PP\big(\mathtt{prompt}_{\mathrm{CoT}}(n) \biggiven \theta \big) \pi(\theta) \text{d}\theta}{\int_{\Theta_{\mathrm{eq}}(\theta^*)}\PP\big(\mathtt{prompt}_{\mathrm{CoT}}(n) \biggiven  \theta \big) \pi(\theta) \text{d}\theta} \bigg).
\end{align*}
Here the equality follows from the definition of $\Theta^{\complement}$.
Therefore, we conclude the proof.
\end{proof}

\subsubsection{Proof of Lemma \ref{lemma: likelihoodbd}}\label{proof: likelihoodbd}

\begin{proof}[Proof of Lemma \ref{lemma: likelihoodbd}] 
Recall that we define $J_i \subseteq [H-1]$ for each $i\in [n]$, and we use $S^i_{J_i}$ to denote a truncated version of the $i$-th trajectory $S^i$ corresponding to the indices specified by $J_i$. Namely, $S^i_{J_i} = \{z_0^i\} \cup \{z_{j}\}_{j\in J_i} \cup \{ z_H^i\}$.
We begin by applying the previous Lemma \ref{lemma: hellingerbd}:
    \begin{align*}
        \frac{1}{2}\log \frac{\PP(\{S_{J_i}^i\}_{i=1}^n \given \theta)}{\PP(\{S_{J_i}^i\}_{i=1}^n \given \theta^*)}
        &=\sum_{i=1}^n \frac{1}{2}\log \frac{\PP(S_{J_i}^i\given \theta)}{\PP(S_{J_i}^i \given \theta^*)}\\
        &\leq \sum_{i=1}^n \log \bigg[\E_{\theta^*}\bigg(\frac{\PP(S_{J_i} \given \theta)}{\PP(S_{J_i}\given \theta^*)}\bigg)^{\frac{1}{2}} \bigg]+ \log(\delta^{-1})\\
        & \leq \sum_{i=1}^n  \bigg[\E_{\theta^*}\bigg(\frac{\PP(S_{J_i}\given \theta)}{\PP(S_{J_i}\given \theta^*)}\bigg)^{\frac{1}{2}} -1 \bigg]+ \log(\delta^{-1})\\
        &= -\sum_{i=1}^n \text{H}^2 \big(\PP(S_{J_i} \given \theta^*) , \PP( S_{J_i} \given \theta) \big)+ \log(\delta^{-1}),
    \end{align*}
    with probability at least $1-\delta$. The first inequality follows from Lemma~\ref{lemma: hellingerbd}, and the second inequality follows from the fact that $x-1 \geq \log(x)$.
    Putting both sides of the inequality into the exponential function, we have 
    \begin{align*}
        \frac{\PP(\{S_{J_i}^i\}_{i=1}^n \given \theta)}{\PP(\{S_{J_i}^i\}_{i=1}^n \given \theta^*)}\leq \exp{\Big(-2\sum_{i=1}^n\text{H}^2\big(\PP(S_{J_i} \given \theta^*), \PP(S_{J_i} \given \theta)\big)+2 \log(\delta^{-1})\Big)},
    \end{align*}
    with probability at least $1-\delta$.

    Finally, to prove the second argument, by conditional independence, we have 
\begin{align*}
    \frac{1}{2}\log \frac{\PP(\{S_{J_i}^i\}_{i=1}^n, z_0^{n+1}  \given \theta)}{\PP(\{S_{J_i}^i\}_{i=1}^n, z_0^{n+1} \given \theta^*)}
        &= \frac{1}{2} \log\frac{\PP(z_0^{n+1} \given \theta)}{\PP(z_0^{n+1} \given \theta^*)} +   
    \sum_{i=1}^n \frac{1}{2}\log \frac{\PP(S_{J_i}^i\given \theta)}{\PP(S_{J_i}^i \given \theta^*)} . 
    \end{align*}
   The rest of the proof is exactly the same as above.  
    Therefore, we conclude the proof.
\end{proof}

\subsection{Proofs of the Auxiliary Lemmas in Appendix~\ref{proof: pre-train}}\label{app: pretrain supp}

\subsubsection{Proof of Lemma \ref{lemma: error 2}} \label{proof: lem error 2}

\begin{proof}


We first note that for any $z\in \calL$ and $S\in \calL^*$, we have
\begin{align}
    \PP_{\hat \rho}(z\given S) \geq 1+|\calL|\exp(B_S/\tau),\label{eq:rho_hat_bd}
\end{align}
which follows from the softmax layer in~\eqref{eq:softmax_layer}. 
Combining \eqref{eq:rho_hat_bd} with Assumption~\ref{assumption: lower bd}, we obtain the upper bound of the following log density bound:
\begin{align}
    \Bigl|\log \PP(z \given S)  -\log \PP_{\hat \rho}(z \given S )\Big| \leq b^* = \log \max\{c_0^{-1}, 1+|\calL|\exp(B_S/\tau)\}. \label{eq: log density difference supp}
\end{align}
Inequality \eqref{eq: log density difference supp} gives the specific form of the upper bound $b^*$ mentioned in Assumption~\ref{assump: density bd}.

Next, we apply concentration to each fixed $(t,h)$ with $t\in [T]$ and  $0\leq h\leq H$. 
By Hoeffding's inequality and \eqref{eq: log density difference supp} we have 
\begin{align}
        &\PP \bigg(\sum_{\ell=1}^{N} \biggl ( \log\frac{\PP(z_h^{t,\ell}\given S_h^{t,\ell})}{\PP_{\rho'}(z_h^{t,\ell}\given S_h^{t,\ell})}-\bbE_{S_h^{t,\ell}}\KL\big(\PP(\cdot \given S_h^{t,\ell}),\PP_{\rho'}(\cdot\given S_h^{t,\ell})\big) \bigg) >t\bigg) \leq 2\exp\bigg(\frac{-t^2}{2N(b^*)^2}\bigg). \label{eq:hoeffding}
    \end{align}
   Then \eqref{eq:hoeffding} implies that, with  probability at least $1-\delta$, we have 
    \begin{align}
        \frac{1}{N}\sum_{\ell=1}^{N} \biggl ( \log\frac{\PP(z_h^{t,\ell}\given S_h^{t,\ell})}{\PP_{\rho'}(z_h^{t,\ell}\given S_h^{t,\ell})}-\bbE_{S_h^{t,\ell}}\KL\big(\PP(\cdot \given S_h^{t,\ell}),\PP_{\rho'}(\cdot\given S_h^{t,\ell})\big) \bigg) \leq b^*\sqrt{\frac{1}{2N}}\log \frac{1}{\delta}. \label{eq: high_prob_bd}
    \end{align}
    Applying union bound to \eqref{eq: high_prob_bd} for all  $(t, h)$  with  $t\in [T], 0\leq h\leq H$, we have that
    \begin{align*}
        &\frac{1}{NT(H+1)}\cdot \sum_{\ell=1}^{N}\sum_{t=1}^{T}\sum_{h=0}^{H} \bigg(\log\frac{\PP(z_h^{t,\ell}\given S_h^{t,\ell})}{\PP_{\rho'}(z_h^{t,\ell}\given S_h^{t,\ell})}-\bbE_{S_h^{t,\ell}}\KL\big(\PP(\cdot \given S_h^{t,\ell}),\PP_{\rho'}(\cdot\given S_h^{t,\ell})\big)\bigg) \nonumber\\
        &\quad \leq b^*\sqrt{\frac{1}{2N}}\log \frac{T(H+1)}{\delta}.
    \end{align*}
    Therefore, we conclude the proof of this lemma.
\end{proof}

\subsubsection{Proof of Lemma \ref{lemma: err 4}}\label{proof: lem error 4}

\begin{proof}
Fix any $(t,h)$ with 
 $ t\in [T]$ and $0\leq h\leq H$, we invoke Proposition \ref{prop:pacbayes} by setting $X_\ell = S_h^{t,\ell}$ and $f(S_h^{t,\ell}) = \TV\big(\PP( \cdot \given S_h^{t, \ell}), \PP_\rho( \cdot \given S_h^{t, \ell})\big)$, which gives us that with probability at least $1-\delta$,
\begin{align}
&\frac{1}{N}\Bigl|\bbE_{\rho\sim P}\sum_{\ell=1}^N\Bigl[\bbE_{S_h^{t, \ell}}\big[\TV\big(\PP( \cdot \given S_h^{t, \ell}), \PP_\rho( \cdot \given S_h^{t, \ell})\big)\big]-\TV\big(\PP( \cdot \given S_h^{t, \ell}), \PP_\rho( \cdot \given S_h^{t, \ell})\big)\Bigr]\Bigr| \nonumber \\
&\quad \leq \sqrt{\frac{1}{2\log 2\cdot N}}\biggl[\KL(P\,\|\,Q)+\log\frac{4}{\delta}\biggr],\label{eq:err4_bd_1}
\end{align}
where $P$ refers to the distribution defined in Lemma~\ref{lemma: error 1}, and  $Q$ is the  uniform distribution over $\calP_\mathrm{LLM}$.
The right-hand side of \eqref{eq:err4_bd_1} follows from Proposition \ref{prop:pacbayes} by setting \( b = 1 \) because the  TV distance is always between $0$ and $1$.

Next, we note that by the construction of $P$ in \eqref{eq:define_distribution_P},  for $\rho\sim P$, we have 
\begin{align}
\TV\big(\PP_{\hat \rho}( \cdot \given S_h^{t, \ell}), \PP_\rho( \cdot \given S_h^{t, \ell})\big) &\leq \sqrt{\frac{1}{2}\KL\big(\PP_{\hat \rho}( \cdot \given S_h^{t, \ell}), \PP_\rho( \cdot \given S_h^{t, \ell})\big)} \nonumber \\
&= \calO\big(1/\sqrt{ NTH}\big),\label{eq:err4_bd_2}
\end{align}
for any $S_h^t$. The first line follows from the Pinsker's inequality, and the second line follows directly from \eqref{eq:err_1_density_bd}.

Combining Lemma~\ref{lemma: error 3}, \eqref{eq:err4_bd_1} and \eqref{eq:err4_bd_2} with union bound across all $(t,h), t\in [T], 0\leq h \leq H$, we obtain 
\begin{align*}
        &\frac{1}{NT(H+1)}\sum_{\ell=1}^{N}\sum_{t=1}^{T}\sum_{h=0}^H \Bigl ( \bbE_{S_h^{t,\ell}}\Big[\TV\big(\PP(\cdot\given S_h^{t,\ell}),\PP_{\hat \rho}(\cdot\given S_h^{t,\ell})\big)\Big]-\TV\big(\PP(\cdot\given S_h^{t,\ell}),\PP_{\hat \rho}(\cdot\given S_h^{t,\ell})\big) \Big) \\
        &\quad=\cO\bigg(\frac{1}{\sqrt{N}}\Big(\barD\log(1+NTH\barB)+\log\frac{TH}{\delta}\Big)\bigg).
    \end{align*}
    Therefore, we conclude the proof. 
\end{proof}

\subsection{Proofs of the Auxiliary Lemmas in Appendix \ref{app: construction}} \label{proof:app:construction} 

\subsubsection{Proof of Proposition~\ref{prop:ffn_transformer}} \label{app:ffn_transformer}
\begin{proof}
In this proof, we aim to show that a sequence of $L$ transformer blocks defined in \eqref{eq:transformer_block} can exactly represent the MLP up to a scaling. 
Recall that the intermediate layers of the MLP are denoted by $\{ X^{d}  \}_{d \in [L]}$. 
In the following, we let $Y^{0} = X^0$, and let $Y^{d} $ denote the output of the $d$-th transformer block for all $d \in [L]$. 
Our goal is to construct the transformer blocks such that $Y^{L} $ is exactly equal to $X^{L}$ up to a constant factor.

Our construction is based on two key ideas. 
First, as shown in Appendix \ref{proof:prop:formal_construction}, in each transformer block, we can set the MHA layer to a zero function and only keep the FF layer and normalization layer in \eqref{eq:transformer_block}. 
Second, 
to avoid the influence of the normalization layers,
we adopt the scaling trick by scaling the parameters of the FF layer to ensure the output matrix is bounded by one in terms of the row-wise $\ell_2$-norm. 
This normalizing scalar is then multiplied by the weight matrix of the next layer to ensure the output stays the same. 

More rigorously, we define $B_{0} = B$, which is an upper bound on $\| (X^{0})^\top  \|_{2, \infty} $. Recall that we assume the intermediate values of the MLP satisfy
$\| (X^{d})^\top \|_{2, \infty} \leq B_{d} $ for all $d \in [L]$, where $\| (X^{d})^\top \|_{2, \infty} $ is the row-wise maximum $\ell_2$-norm of $X^d$. 
We will construct a transformer such that $Y^d = X^d / B_{d}$ for all $d \in [L]$. 
We prove this argument via induction.

We will verify the base case $Y^1 = X^1 / B_1$ later. 
For any $d \geq 2$, suppose we have $Y^{d-1} = X^{d-1} / B_{d-1}$ and let us consider the $d$-th transformer block and the $d$-th layer of the MLP. 
Note that $X^{d}$ is constructed by $X^{d-1} $ via \eqref{eq:int_ffn} and $Y^{d} $ is derived from $Y^{d-1} $ via 
\begin{align} 
\label{eq:transformer_block_d_th_layer}
Y^{d} = \mathtt{NL} ( \mathtt{ff} (Z, \overline W_{\mathrm{ff}}, \overline b_{\mathrm{ff}}) + Z^d  \bar \gamma_2  \bigr ) , \qquad Z ^d = \mathtt{NL} ( \mathtt{mha} (Y^{d-1} , \overline W_{\mathrm{mha}} ) + Y^{d-1}  \bar \gamma_1  \bigr ) 
\end{align}
for some weight matrices $W_{\mathrm{ff}}$, $ \overline b_{\mathrm{ff}}$, $\overline W_{\mathrm{mha}}$, $\overline \gamma _1$, and $\overline \gamma_2$ to be determined. 
We set the number of heads of the MHA layer to be $\eta = 1 $ and set the weight matrix of the values, $\overline W^{V}  $, as a zero matrix.
Moreover, we set $\overline \gamma _1  = I $ in \eqref{eq:transformer_block_d_th_layer}. 
Thus, we have 
 \begin{align}
       Z^d&=\mathtt{NL}\big(\mathtt{mha}(Y^{d-1}, \overline W _\mathrm{mha})+Y^{d-1} \overline \gamma _1 \big)  =\mathtt{NL}( X^{d-1}/B_{d-1})=X^{d-1}/B_{d-1}. \label{eq:z}
    \end{align}
    Here the second equality follows from the induction assumption and the last equality follows from the fact that the $\ell_2$-norm of each row of $X^{d-1} / B_{d-1}$ is bounded by one. 

Now we set  
$$
\overline W_{\mathrm{ff}}=\{W^d_{\mathrm{ff}} \cdot B_{d-1} , I / B_d\}, \qquad \overline b^d_{\mathrm{ff}} = \{b^d_{\mathrm{ff} }, 0\}, \qquad \textrm{and} \qquad \overline \gamma _1 = I
$$
in \eqref{eq:transformer_block_d_th_layer}.
That is, $\overline W_{\mathrm{ff},1}$ is proportional to $W^d_{\mathrm{ff}}$ of the MLP, $\overline W_{\mathrm{ff},2}$ is proportional to an identity matrix, $\overline b_{\mathrm{ff}, 1 }$ is the same as $b^d_{\mathrm{ff}}$ of the MLP,   $ \overline b_{\mathrm{ff}, 2 }$ is a zero vector, and $\overline \gamma _1 $ as an identity matrix. 
Then, by direct calculation we have $$\mathtt{ff}( Z ^d , \overline W_\mathrm{ff}, \overline b_\mathrm{ff}) = \mathtt{ReLU} ( X^{d-1}W_\mathrm{ff}^d +\mathbf{1}^\top b_\mathrm{ff}^d) \big /B_d  = X^{d} / B_d . $$ 
As a result,  in \eqref{eq:transformer_block_d_th_layer} we have 
\begin{align}
         Y^d&=\mathtt{NL}\big(\mathtt{ff}( X^{d-1}/B_{d-1}, \overline W _\mathrm{ff}, \overline b _\mathrm{ff})\big)  =\mathtt{NL} (X^d /B_d )   
         =   X^d/B_d, \label{eq:y}
    \end{align}
    where the first equality follows from \eqref{eq:z} and the last equality follows from the fact that 
    $B_{d}\geq \| (X^{d})^{\top}\|_{2,\infty}$. 
    Also see Figure~\ref{fig:ffn-transformer} for an illustration of each transformer block.
\begin{figure}[h]
    \centering
    \includegraphics[width = 0.75\textwidth]{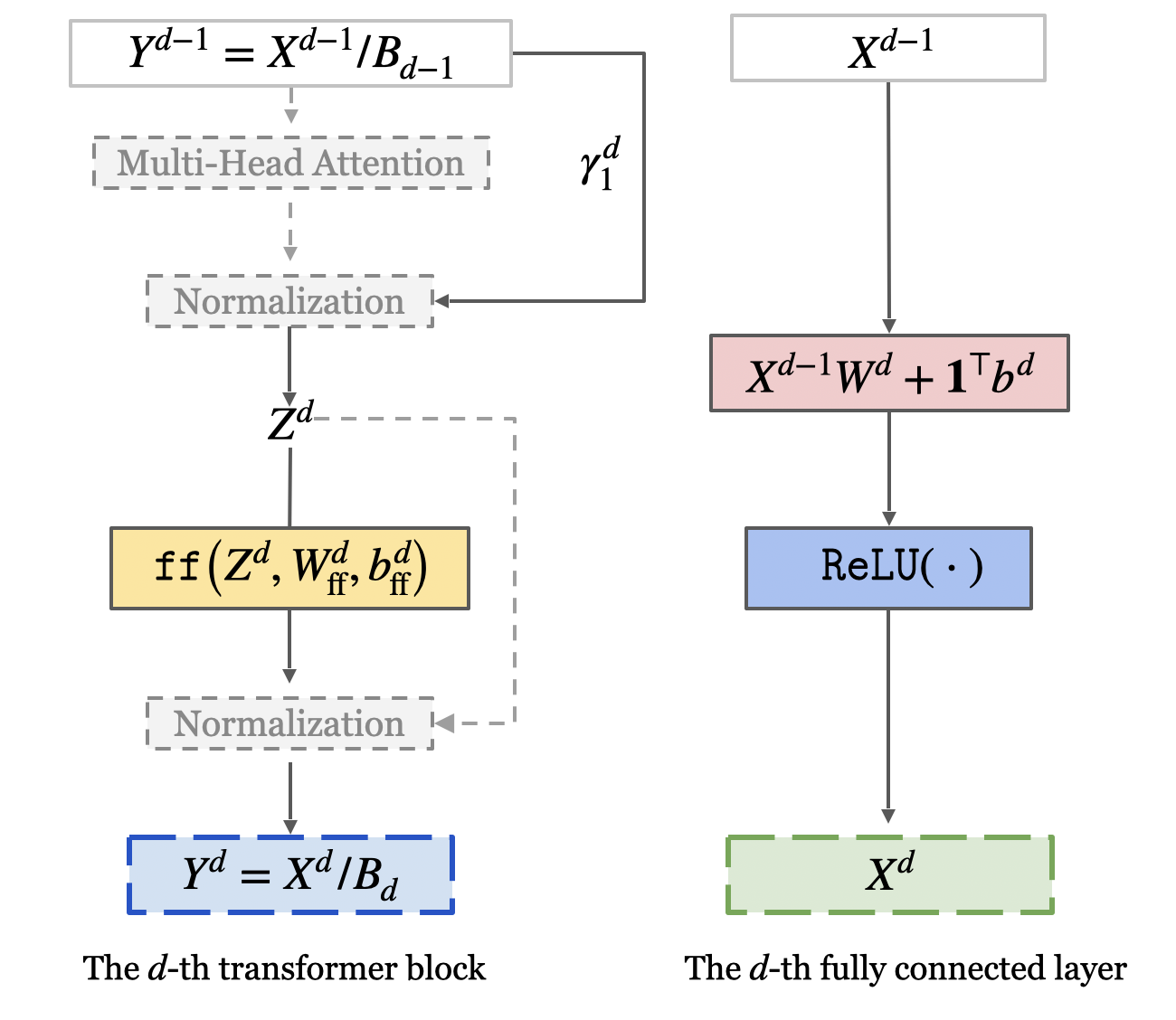}
    \caption{An illustration of the $d$-th transformer block, compared with the $d$-th fully connected layer. (a) The left figure shows the \(d\)-th transformer block, where the \ac{mha} block is omitted by setting the value weight matrix to zero.   Since the input is scaled down before each normalization layer, these two layers act as identity maps. As a result, the only ``active'' component in this transformer block is the feed-forward layer. (b) The right figure depicts the \(d\)-th fully connected layer. The output of the \(d\)-th transformer block matches the output of this fully connected layer up to a scaling factor $B_d$.}
    \label{fig:ffn-transformer}
\end{figure}

    It remains to verify the base case $Y^1 = X^1 / B_1$.
    Recall that the transformer and the MLP have the same input. Thus we have $Y^0 = X^0$. 
    For $d = 1$, in  \eqref{eq:transformer_block_d_th_layer} we set the value matrix of MHA to be zero and set $\overline \gamma_1  = I / B_0$. 
    Since each row of $X^{0}/B_0$ is in the unit ball with respect to the $\ell_2$-norm, 
we have 
$$
Z^{1} = \mathtt{NL} ( Y^{0} \overline \gamma_1 ) = \mathtt{NL} ( X ^{0} / B_0 ) = X^{0} / B_0.
$$
Thus we recover \eqref{eq:z} for $d = 1$. 
Then, similar to the derivations above, we can obtain \eqref{eq:y} for the base case. Therefore we conclude that $Y^{d} = X^{d} / B_d $ for all $d \in [L]$.

In summary, we construct a transformer with $L$ transformer blocks such that the final output $Y^{L} $ satisfies $Y^{L} = X^{L} / B_{L}$. 
The weight matrices of these transformer blocks are given by $\{\overline W_1^{Q,d}, \overline 
 W_1^{K,d}, \overline 
 W_1^{V,d}, \overline W_{\mathrm{ff},1}^d, \overline W_{\mathrm{ff},2}^d, \overline b_{\mathrm{ff},1}^d, \overline b_{\mathrm{ff},2}^d, \overline \gamma_1^d, \overline \gamma_2^d, \}_{d=1}^L$, where 
\begin{align*}
 & \overline W_{\mathrm{ff},1}^d = B_{d-1}\cdot W_{\mathrm{ff} }^d,~~ \overline 
 W_{\mathrm{ff},2}^d=I/B_d,\quad  \overline 
 b_{\mathrm{ff},1}^d = b_{\mathrm{ff}} ^d, \quad b_{\mathrm{ff},2}^d=  0 , \quad  \overline W_{\mathrm{mha}}^{V,d}=0, \quad \overline \gamma_2^d= 0   
\end{align*}
for all $d \in [L]$. 
    Moreover, we have $ \overline \gamma_1^1= I/ B_0  
 $ and $\overline \gamma_1^d= I$ for all $d \geq 2$. 
Finally, to recover $X^{L}$, it suffices to multiply $B_L \cdot I$ to $Y^L$. Therefore, we conclude the proof of this proposition.
\end{proof}

\subsubsection{Proof of Lemma~\ref{lem:layer_1}} \label{app:layer_1}
\begin{proof}
Consider the FF layer defined in \eqref{eq:ffn_bias}. 
We set the bias terms as  $b_{\mathrm{ff}, 1 }^{\mathtt{NN} , (1)}=(0,0,1,-1,0,0)$ and $b_{\mathrm{ff}, 2 }^{\mathtt{NN} , (1)} = {\bf 0} $.
We set the weight matrices as  $W_{\mathrm{ff}, 1}^{\mathtt{NN},(1)}$ and $W_{\mathrm{ff}, 2}^{\mathtt{NN},(1)}$ as 
\begin{align*}
    W_{\mathrm{ff}, 1}^{\mathtt{NN},(1)}=\big(W_{\mathrm{ff}, 2}^{\mathtt{NN},(1)}\big)^\top=
\begin{pmatrix}
1 & -1 & 0 & 0 & 0 & 0 \\
 0 & 0 & 1 & -1 & 0 & 0\\
0 & 0 & 0 & 0 & 1 & -1 
\end{pmatrix} \in \RR^{3\times 6}. 
\end{align*}
Note that fact that $x=\mathtt{ReLU}(x)-\mathtt{ReLU}(-x)$ for any $x \in \RR$. 
For any vector of the form $v = (a,b,0)^\top $ in $\RR^3$, by the direct computation 
we have 
$$
( W_{\mathrm{ff}, 2}^{\mathtt{NN},(1)} )^\top \mathtt{ReLU} \big (  (W_{\mathrm{ff}, 1}^{\mathtt{NN},(1)} )^\top v  + (b_{\mathrm{ff}, 1 }^{\mathtt{NN} , (1)  })^\top \big)  = (a,b + 1 ,0)^\top.
$$
Thus, we have 
$$ 
X_{\mathtt{NN}}^{(1)} = \mathtt{ff}(X_{\mathtt{NN}}^{(0)}, W_{\mathrm{ff} }^{\mathtt{NN},(1)}, b_{\mathrm{ff}   }^{\mathtt{NN},(1)}) = \mathtt{ReLU}(X_\mathtt{NN}^{(0)}W_{\mathrm{ff},1}^{\mathtt{NN},(1)} +\mathbf{1}^T b^{\mathtt{NN},(1)}_{\mathrm{ff},1})W_{\mathrm{ff},2}^{\mathtt{NN},(1)}  ,
$$
which means that $\mathtt{NN}_{\mathrm{J}, 1}$ is realized by a FF layer. 
Here $\mathbf{1}\in \mathbb{R}^{L'}$ is the all-one vector.
Next, we compute the scaling factors in Proposition~\ref{prop:ffn_transformer} to bypass the normalization layer in the \ac{ff} layer. Note that both the input and output magnitude is bounded by $\sqrt{B^2+H^2+1}$. Absorbing the scaling factor into the weight matrix, we have  
$$
\max \{ \| W_{\mathrm{ff}, 1 }^{\mathtt{NN},(1)} \|_{F} , \| W_{\mathrm{ff},2 }^{\mathtt{NN},(1)} \|_{F} , \| b_{\mathrm{ff}, 1 }^{\mathtt{NN} , (1)} \|_2, \| b_{\mathrm{ff},2  }^{\mathtt{NN} , (1)} \|_2  \} \leq \sqrt{B^2+H^2+1}\cdot \sqrt{6}. 
$$
Furthermore, as introduced at the beginning of Appendix \ref{prop:formal_construction},  
we can represent a feed-forward layer using a transformer block. 
Thus, we conclude the proof. 
\end{proof}

\subsubsection{Proof of Lemma~\ref{lem:layer_2}} \label{app:layer_2}
\begin{proof}
We first show that the product operation can be well approximated by a fully connected neural network, which is stated in the following lemma.

\begin{lemma} [Product Operation as Neural Network]\label{lemma: product_module}
Let $r\geq 1$ be an integer. There exists a constant $C_p>0$ such that for any $M$ and $\epsilon$, there exists a multi-layer perceptron $f_\mathrm{product}:\mathbb{R}^{r}\times \mathbb{R}^1 \to \mathbb{R}^r$ with $D$ layers  such that 
for any $x_1,\cdots,x_r,y\in [-M,M]$, $$\|f_\mathrm{product}\big(x_{1:r},y \big)-(x_1 y,\cdots,x_r y)\|_\infty\leq \epsilon.$$
Moreover, the depth $D$ satisfies $D\leq C_p\log(M)+\log(1/\epsilon))$, and the maximum number of the hidden neurons is bounded by $5r$. 
Furthermore, the parameters $\{W^i_{\mathrm{ff},1}, b^i \}_{i\in[D]} $ satisfies 
$ \max \{\|W^i_{\mathrm{ff},1}\|_\infty, \|b^i\|_\infty \} \leq 1$ for all $i \in [D-1]$, and $\|W^D_{\mathrm{ff},1}\|_\infty\leq M^2$. 
\end{lemma}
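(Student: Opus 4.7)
\textbf{Proof Proposal for Lemma~\ref{lemma: product_module}.}

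The plan is to build $f_{\mathrm{product}}$ by reducing scalar multiplication to scalar squaring, and then to implement an approximate squaring map with a shallow ReLU network using the classical sawtooth (Yarotsky-type) construction. Concretely, for each $i \in [r]$ I will use the polarization identity
\begin{align*}
x_i y \;=\; \tfrac{1}{4}\bigl[(x_i + y)^2 - (x_i - y)^2\bigr],
\end{align*}
which turns $r$ products into $2r$ squarings of scalars in $[-2M, 2M]$. Since $x \mapsto x^2$ is even, I only need to square $|x_i \pm y|/(2M) \in [0,1]$, which can be obtained from a single ReLU layer via $|u| = \mathrm{ReLU}(u) + \mathrm{ReLU}(-u)$. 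Rescaling by $4M^2$ at the end recovers the product with the proper magnitude.

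The core ingredient is the Yarotsky approximation of $t \mapsto t^2$ on $[0,1]$. Let $g(t) = 2\,\mathrm{ReLU}(t) - 4\,\mathrm{ReLU}(t - 1/2) + 2\,\mathrm{ReLU}(t-1)$ be the tent function, and let $g_k = g \circ \cdots \circ g$ ($k$ iterates). Then the piecewise linear interpolant
\begin{align*}
s_K(t) \;=\; t \;-\; \sum_{k=1}^{K} \frac{g_k(t)}{4^{k}}
\end{align*}
satisfies $|s_K(t) - t^2| \le 2^{-2(K+1)}$ uniformly on $[0,1]$. I will realize $s_K$ as a feed-forward network by unrolling the iteration: at each depth I maintain (i) the current tent iterate $g_k(t)$, (ii) the running sum $\sum_{j\le k} g_j(t)/4^j$, and (iii) a copy of $t$ carried forward. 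Three neurons per scalar suffice for this bookkeeping (plus two auxiliary neurons handling $\mathrm{ReLU}(\pm\cdot)$ for the absolute-value stage), so all $r$ products can be computed in parallel within width $5r$. Each inner layer uses only the coefficients $\{-4, -2, -1, 0, 1/4, 1/2, 1, 2\}$, all of which have magnitude at most $1$ except for the $-4$ factor, which I will absorb into the $1/4^k$ multipliers accumulated at the last layer; the only layer whose weights scale with $M$ is the terminal linear combiner, whose multiplicative factor of $4M^2$ gives the stated $\|W^D_{\mathrm{ff},1}\|_\infty \le M^2$ bound (up to an absolute constant, which I fold into $C_p$).

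For the accuracy and depth count, I need the approximation of each square to be within $\epsilon/(4M^2 \cdot 2)$ after rescaling; by the Yarotsky bound this requires $K = O(\log M + \log(1/\epsilon))$ iterates, which is the total depth up to an additive constant coming from the input rescaling layer, the absolute-value layer, and the final combination layer. Choosing the constants in these three layers uniformly and bundling them into $C_p$ yields $D \le C_p(\log M + \log(1/\epsilon))$, and the elementwise product error is $\|f_{\mathrm{product}}(x_{1:r}, y) - (x_1 y, \ldots, x_r y)\|_\infty \le \epsilon$ as claimed. The main obstacle I anticipate is not the correctness of the construction (Yarotsky-style arguments are standard) but the careful accounting of the weight magnitudes: I must verify that the iterated application of $g$ never requires coefficients exceeding $1$ in the intermediate layers, and that the accumulated $4^{-k}$ factors can be distributed across depths in a way that keeps each $\|W^i_{\mathrm{ff},1}\|_\infty \le 1$ for $i < D$, with all the $M$-dependence concentrated in the final linear layer.
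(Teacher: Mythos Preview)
Your proposal is essentially the same approach as the paper's proof: reduce multiplication to squaring via polarization, approximate squaring by iterated tent/sawtooth functions, and parallelize across the $r$ coordinates via a block-diagonal layout (which is exactly where the $5r$ width comes from). The paper does not redo the Yarotsky argument from scratch; it instead invokes Proposition~III.3 of \cite{elbrachter2021deep} for the scalar case $r=1$ and then writes down the explicit block-diagonal matrices $A_1,\ldots,A_{m+2}$ for general $r$, with the final $M^2$ rescaling placed entirely in $A_{m+2}$.

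The one place your sketch is looser than the paper is precisely the point you flag as the ``main obstacle'': the weight-magnitude bookkeeping. Your tent map $g$ carries coefficients $\pm 2$ and $-4$, so a direct unrolling does \emph{not} immediately give $\|W^i\|_\infty\le 1$ on intermediate layers; your plan to ``absorb the $-4$ into the accumulated $4^{-k}$ factors'' needs an actual per-layer redistribution, and you must also handle the $\pm 2$ factors you did not mention. The Elbr\"achter et al.\ construction that the paper imports sidesteps this by tracking slightly different auxiliary quantities (five per scalar, giving the $5\times 5$ update matrices $A'_\ell$) whose entries are already in $[-1,1]$ by design. If you carry out your redistribution carefully you will converge on essentially the same scheme; alternatively, citing the Elbr\"achter result directly, as the paper does, saves you that verification.
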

\begin{proof}
See Appendix~\ref{app: product_module} for details.
\end{proof}

    The output \eqref{eq:output_layer_1} can be achieved by applying the product operation construction from Lemma~\ref{lemma: product_module} with $r = 1$, ensuring that \(|f_\mathrm{product}(h,1) - h| < \epsilon'\) and \(|f_\mathrm{product}(h'+1,0) - 0| < \epsilon'\) for all \((t', h') < (t, h-1)\).

By setting $r=1$, this lemma says that we can realize the product operation $f_\mathrm{product}$ in \eqref{eq:output_layer_1} as the output of a fully connected network with an error at most $\epsilon'$.
Moreover, the depth of $f_\mathrm{product}$
is at most  $ C_p(\log(H) + \log(1/\epsilon'))$ for some constant $C$, and the maximum number of hidden neurons is bounded  by $5$.

To implement this fully connected network using a transformer, we employ Proposition~\ref{prop:ffn_transformer} by setting scaling factors as $B_0 = \sqrt{H^2+1}$ and $B_d = \sqrt{5}$ for all $d\in [D-1]$, and $B_D=H^2$. 
So far we have shown that we can use $D$ transformer blocks to realize the product operation $f_\mathrm{product}(x,y)\approx xy$. However, note that the target input of module $\mathtt{NN}_{\mathrm{J},2}$ is $X^{(1)}_\mathrm{NN}\in \mathbb{R}^{L'\times 3}$, and the target output is $X^{(2)}_\mathrm{NN}\in \mathbb{R}^{L'\times 3}$, where the product operation $f_\mathrm{product}(x,y)$ only substitutes the second column of the output. Thus, we have to preserve the first and last columns of the input.

To realize the output in \eqref{eq:output_layer_1}, 
we concatenate another MLP, denoted by $f_{1,3}$ to $f_\mathrm{product}$.
 MLP $f_{1,3} \colon \RR^{L'\times 3} \rightarrow \RR^{L'\times 2}$ extracts the first and last columns of the input matrix.  
 As we show in Lemma~\ref{lem:residual_relu}, such an MLP exists and $f_{1,3} $ in fact can be written as a single FF layer. The Frobenius norms of the weight matrices are bounded by $2$. 
 The intuition behind Lemma~\ref{lem:residual_relu} is that we can write and $x$ as  \(\mathtt{ReLU}(x) - \mathtt{ReLU}(-x)   \), which enables us to preserve desired columns of the input matrix using a FF layer. 

 Finally, to concatenate $f_{1,3}$ with $f_\mathrm{product}$, 
 notice that these two MLPs might have different numbers of layers.
 This does not cause trouble because by Lemma~\ref{lem:residual_relu}, we can further stack FF layers on top of $f_{1,3}$ that represents identity mappings. 
 This will enable us to write $f_{1,3}$ as an MLP that has the same depth as $ f_\mathrm{product}$. 
 Then we can concatenate the weights of these two MLPs in a layer-wise fashion. Thus, the Frobenius norm of the weight matrices of the concatenated MLP is bounded by 
 $\sqrt{B^2+H^2+6}\cdot \sqrt{H^4\cdot5 +4}$, where the first term in the multiplication comes from the scaling factors and the second term follows from the weight matrices. This concludes the proof of this lemma. 
\end{proof}

\subsubsection{Proof of Lemma~\ref{lem:layer_3}} \label{app:layer_3}
\begin{proof}
This linear operation can be realized without error by setting 
which can be realized by setting bias term $b^{\mathtt{NN},(2)}=(0,0,0,0,1,-1)$ and $b_{\mathrm{ff}, 2 }^{\mathtt{NN} , (2)} = {\bf 0} $. 
We set the weight matrices as 
\begin{align*}
\begin{split}
W_{\mathrm{ff}, 1}^{\mathtt{NN},(2)} & =
\begin{pmatrix}
1 & -1 & 0 & 0 & 0 & 0 \\
 0 & 0 & 1 & -1 & 0 & 0\\
0 & 0 & 0 & 0 & -1/2 & 1/2 
\end{pmatrix}, \\
    W_{\mathrm{ff}, 2}^{\mathtt{NN},(2)} & =
\begin{pmatrix}
1 & -1 & 0 & 0 & 0 & 0\\
 0 & 0 & 1 & -1 & 0 & 0\\
0 & 0 & 0 & 0 & 1 & -1
\end{pmatrix}^\top. 
\end{split} 
\end{align*}
For any vector of the form $v = (a,b,c)^\top $ in $\RR^3$, by direct computation, a FF layer with parameters $\{    W_{\mathrm{ff}}^{\mathtt{NN},(2)} , b_{\mathrm{ff}}^{\mathtt{NN},(2)} \}  $
 maps it to $(a,b,c')^\top$, where 
 $$
 c' = \mathtt{ReLU}(1 - c/2) +  \mathtt{ReLU}(-1 + c/2).
 $$
 Thus, $c' = 1$ if $c = 0$ and $c' = 1/2$ if $c = 1$.
 Therefore, we have 
 $$
 X_{\mathtt{NN}}^{(3 )} = \mathtt{ff}(X_{\mathtt{NN}}^{(2)}, W_{\mathrm{ff} }^{\mathtt{NN},(2)}, b_{\mathrm{ff}   }^{\mathtt{NN},(2)}) ,
 $$
 which means that $\mathtt{NN}_{\mathrm{J},3} $ can be realized by a single FF layer. 
Moreover, the weight matrices and bias vectors are all bounded by $\sqrt{B^2+(H+1)^2+1}\cdot \sqrt{6}$ in terms of the Frobenius norm. 
Noticing that a FF layer can be represented by a single transformer block, we 
conclude the proof.
\end{proof}

\subsubsection{Proof of Lemma \ref{lemma: extraction_module}}\label{app: extraction_module}
\begin{proof}
In this proof, we first construct an attention layer that takes \(\tilde{z}_{h'}^{t',(2)} \in \mathbb{R}^3\) as the input and outputs \((0, p_{h'}^{t'},0)\)
for any index $(t', h')$. 
Here 
 \(p_{h'}^{t'} \approx h\) for any \((t',h')\). Next, we explain how to maintain the first coordinate of $\tilde{z}_{h'}^{t',(2)}$, i.e., $z_{h'}^{t'}$.
 Finally, we show that this network module can be implemented by a single transformer block.  

To construct the attention layer, we introduce an auxiliary parameter \(\alpha\) to control the precision of \( p_{h'}^{t'} \). 
Recall that the sequence $S_h^t$ has length
$L' = L' (t, h) = (t-1) \cdot (H+1) + h$ and its last element is $z_{h-1}^t$. 
For any $(t', h')$, $\tilde z_{h'}^{t', (2)} $ is the $L'(t',h')$-th vector of the input $X_{\mathtt{NN}}^{(3)}$.
We define the attention matrices $W_\mathtt{mha} = \{ W^Q, W^K,W^V\}$, where $W^Q = (0,0, - \alpha)^\top \in \mathbb{R}^{3\times 1},
    W^K = (0,0,1)^\top \in \mathbb{R}^{3\times 1}$ and 
   $$
    W^V = 
    \begin{pmatrix}
        0 & 0  & 0 \\
        0 & 1  & 0 \\
        0 & 0  & 0 
    \end{pmatrix}.
    $$
    Here $\alpha > 0$ is a parameter to be determined later. 
With the input $X_{\mathtt{NN}}^{(3)} \in \RR^{L' \times 3}$, the queries, keys, and values
are given by 
\begin{align}
\begin{split}
    Q &= X _{\mathtt{NN}}^{(3)} W^Q = 
    - (\alpha,\cdots,\alpha,\alpha/2)^\top  \in \mathbb{R}^{L'},   \\
    K&= X _{\mathtt{NN}}^{(3)} W^ K = (1,\cdots,1,1/2)^\top \in \mathbb{R}^{L'},   \\
    V&= X _{\mathtt{NN}}^{(3)} W^V = 
\begin{pmatrix}
    0&\cdots&0\\
    f_\mathrm{product}(h'+1,0)&\cdots& f_\mathrm{product}(h, 1)\\
    0&\cdots&0
\end{pmatrix}^\top \in \mathbb{R}^{L' \times 3},
\end{split} \label{eq:qkv}
\end{align}
where we $ f_\mathrm{product}$ approximates the product operation in the sense that   $|f_\mathrm{product}(x,y)-xy|\leq \epsilon'$ for any $x,y\in [-H,H]$.
Using \eqref{eq:qkv}, we compute that the softmax attention score $a(i,j)$ based on the $i$-th query   and $j$-th key for all $i,j \in [L']$:
\begin{align*}
    a(i,j) = 
    \begin{cases}
        \cfrac{e^{-\alpha/2}}{e^{- \alpha/2}+ (L' - 1)\cdot e^{-\alpha}} & \text{for } i\neq L' \text{ and } j=L',\\
        \cfrac{e^{-\alpha}}{e^{-\alpha/2}+ (L'-1)\cdot e^{-\alpha}} & \text{for } i\neq L' \text{ and } j\neq L' ,\\
        \cfrac{e^{-\alpha/4}}{e^{-\alpha/4}+ (L' - 1) \cdot e^{-\alpha/2}} & \text{for } i = L' \text{ and } j= L',\\
        \cfrac{e^{-\alpha/2}}{e^{-\alpha/4}+ (L' - 1) \cdot e^{-\alpha/2}} & \text{for } i =L \text{ and } j\neq L'.
    \end{cases}
\end{align*}
Using these attention scores to aggregate the value vectors, 
for any position $(t', h')$, 
we obtain the output of the attention layer: 
\begin{align} \label{eq:attention_output_module_J}
(0,p_{h'}^{t'},0) = \sum_{j=1}^{L'} a \bigl ( L'(t', h') ,j \bigr ) \cdot v_j,
\end{align}
where \( v_j \) is the \( j \)-th row of the value matrix \( V \), and $L'(t', h')$ is the index of $z_{h'}^{t'}$ in sequence $S_h^t$. 
By the construction of $V$ in \eqref{eq:qkv}, the first and last coordinates of the attention output are both zero, and we let $p_{h'}^{t'} $ in \eqref{eq:attention_output_module_J} to denote the nonzero coordinate. 

It remains to show that $|p_{h'}^{t'} - h| $ is small for all $(t', h')$.
For any $\epsilon > 2 \epsilon'$, we can choose $\alpha $ sufficiently large such that  $(L' -1 ) \cdot \max\{e^{- \alpha/2}, e^{-\alpha/4}\}+2\epsilon'< \epsilon$.
We separately consider the cases where $(t', h')< (t,h-1)$ and $(t', h')= (t,h-1)$ as follows. 
For the first case, we have 
\begin{align}\label{eq:some_bound_p_h_1}
    |p_{h'}^{t'}-h| &=\bigg| \cfrac{e^{- \alpha/2}}{e^{- \alpha/2}+ (L' - 1) \cdot e^{- \alpha}}\cdot f_\mathrm{product}(h,1)-h  \notag \\
    &\qquad + \cfrac{e^{- \alpha}}{e^{- \alpha/2}+ (L' - 1) \cdot e^{-\alpha}}\sum_{(t', h')< (t, h-1)}f_\mathrm{product}(h'+1,0) \bigg|\\
    &  \leq ( L ' - 1) \cdot e^{ - \alpha/2} +2\epsilon' < \epsilon. \notag  
\end{align}
To see the second inequality, we note that using the fact that $|f_\mathrm{product}(h,1)-h | \leq \epsilon' $ and the fact that 
$$
\cfrac{e^{- \alpha/2}}{e^{- \alpha/2}+ (L' - 1) \cdot e^{- \alpha}} = \cfrac{1}{1 + (L'-1) \cdot e^{-\alpha / 2}},
$$
we have by direct computation that 
$$
\bigg| \cfrac{e^{- \alpha/2}}{e^{- \alpha/2}+ (L' - 1) \cdot e^{- \alpha}}\cdot f_\mathrm{product}(h,1)-h\bigg| \leq \cfrac{ |f_\mathrm{product}(h,1)-h| }{1 + (L'-1) \cdot e^{-\alpha / 2}}  +  \cfrac{ (L'-1) \cdot e^{-\alpha / 2} }{1 + (L'-1) \cdot e^{-\alpha / 2}},
$$
which is bounded by $( L ' - 1) \cdot e^{ - \alpha/2} + \epsilon'$.
Moreover, the second summation in \eqref{eq:some_bound_p_h_1} is bounded by $\epsilon'$ because 
$$
 \bigg| \cfrac{e^{- \alpha}}{e^{- \alpha/2}+ (L' - 1) \cdot e^{-\alpha}}\sum_{(t', h')< (t, h-1)}f_\mathrm{product}(h'+1,0) \bigg| \leq | f_\mathrm{product}(h'+1,0) | \leq \epsilon'. 
$$
Combining the above two inequalities yields \eqref{eq:some_bound_p_h_1}. 
Similarly, for $(t', h') = (t, h - 1)$, we use the same argument to obtain that  
\begin{align*}
    |p_{h-1}^{n+1}-h| &=\bigg| \cfrac{e^{-\alpha/4}}{e^{-\alpha/4}+ (L' - 1) \cdot e^{-\alpha/2}}\cdot f_\mathrm{product}(h,1)-h \\
    &\qquad + \cfrac{e^{-\alpha /2}}{e^{-\alpha/4}+ (L' - 1) \cdot e^{ -\alpha/2}}\sum_{(t', h')< (t, h-1)}f_\mathrm{product}(h'+1,0) \bigg|\\
    &  \leq (L' - 1) \cdot e^{-\alpha/4} +2\epsilon' < \epsilon.
\end{align*}

In conclusion, when selecting $\alpha$ such that $(L'-1) \cdot \max\{e^{- \alpha/2}, e^{- \alpha/4}\} < \epsilon - 2\epsilon'$,
we have $|p_{h'}^{t'}-h|< \epsilon$ for all $(t', h')$. 
Since $L' \leq T\cdot (H+1)$, it suffices to choose 
\begin{align*}
    \alpha = 8\cdot \log \big( TH / (\epsilon-2\epsilon') \big).
\end{align*}


We conclude that we can construct an attention layer that takes \(\tilde{z}_{h',(2)}^{t'} \in \mathbb{R}^3\) as input and outputs \((0, p_{h'}^{t'},0)\), such that \(|p_{h'}^{t'} - h|<\epsilon\) for any \((t',h') \leq (t, h-1)\).
Moreover, the norms of weight matrices satisfy 
$$\|W^Q\|_\mathrm{F} = 8\log\big(TH/(\epsilon-2\epsilon')\big), \qquad \|W^K\|_\mathrm{F} = \|W^V\|_\mathrm{F}=1.$$
Note that this single-head attention is a special of MHA layer with $\eta = 1$. 

Finally, 
to show that such a layer can be implemented by a single transformer block defined in \eqref{eq:transformer_block}, we use a  residual link by setting $\gamma_1=\mathtt{diag}(1,0,0)\in \mathbb{R}^{3\times 3}$ 
in \eqref{eq:transformer_block}. 
This enables us to 
pass along the first coordinate $z_{h'}^{t'}$. Additionally, the \ac{ff} module can append zeros to the input by taking 
$W_{\mathrm{ff}, 2}^{\mathtt{NN},(1)}$ as 
\begin{align*}
    W_{\mathrm{ff}, 1}^{\mathtt{NN},(1)}=
\begin{pmatrix}
1 & -1 & 0 & 0 & 0 & 0 & \\
 0 & 0 & 1 & -1 & 0 & 0& \\
0 & 0 & 0 & 0 & 1 & -1 & 
\end{pmatrix} \in \RR^{3\times 6},\quad
\big(W_{\mathrm{ff}, 2}^{\mathtt{NN},(1)}\big)^\top =
\begin{pmatrix}
&W_{\mathrm{ff}, 1}^{\mathtt{NN},(1)} \\
&\mathbf{0}
\end{pmatrix} \in \RR^{(2|\mathcal{L}|+2)\times 6},
\end{align*}
and $b_{\mathtt{ff},1}^{\mathtt{NN},(1)}=b_{\mathtt{ff},2}^{\mathtt{NN},(1)}=0$. As a result, 
for any $(t', h')$, the output of the transformer block is given by 
\begin{align*}
    z_{h'}^{t',(4)} 
    = \mathtt{NL}\big ( (z_{h'}^{t'}, p_{h'}^{h'},\mathbf{0}) \bigr)\in\bbR^{2+2|\mathcal{L}|} .
\end{align*}
Note that the $\ell_2$-norm of vector $(z_{h'}^{t'}, p_{h'}^{h'},\mathbf{0})$ is bounded by a constant because $| p_{h'}^{h'} | \leq H$ and $\cL$ is regarded as a compact subset of $\RR$. 
Thus, we can additionally apply the scaling trick introduced in Appendix \ref{proof:prop:formal_construction} to bypass the normalization layer $\mathtt{NL}(\cdot)$. 
To implement this scaling trick, we need only to scale $W^{V}$ by a constant factor $\sqrt{B^2+(H+1)^2+1}$, which affects the magnitude of the transformer weight matrices by a constant factor. 
Now we conclude the proof.
\end{proof}

\subsubsection{Proof of Proposition~\ref{prop: approx of submodule}} \label{app: approx of submodule}

\begin{proof}

This proof is structured in three steps. In \textbf{Step 1}, we provide a high-level overview of the network $\mathtt{embed}_h(\cdot)$, aiming for $\mathtt{softmax}(\mathtt{embed}_h(S_h^t)/\tau) \approx g_h^*(S_h^t)$, where $g_h^*(\cdot)$ is the target distribution. In \textbf{Step 2}, we provide a detailed construction of $\mathtt{embed}_h(\cdot)$ by approximating $\psi_h^*$ and $\tau \log w_{h,i}^*$ for each $i \in [|\calL|]$. Finally in \textbf{Step 3}, we apply the $\mathtt{embed}_h(\cdot)$ approximation to construct each module $G_h$, which takes $(S_h^t, \hat p_h, F_3, F_4)$ as the input and produces $(S_h^t, \hat p_h, \mathbf{1}_{L'}^\top\mathtt{embed}_h(S_h^t), F_4)$ as the output. 
We show how to modify the weight matrices in $\mathtt{embed}_h(\cdot)$ to construct $G_h$. This technique is applied repeatedly in the proof found in Appendix~\ref{proof:prop:formal_construction}, with similar approaches being used in related cases.

\vspace{2mm}

\noindent \textbf{Step 1: High-level structure of each $\mathtt{embed}_h(\cdot)$.} The module $G_{h}$ takes input 
$(S_h^t, \hat p_h, F_3, F_4) \in \mathbb{R}^{L'\times (2+2|\calL|)}$ as the input  and outputs $(S_h^t, \hat p_h, \mathbf{1}_{L'}^\top\mathtt{embed}_{h}(S_h^t), F_4)\in \mathbb{R}^{L'\times (2+2|\calL|)}$.
Note that $\psi_h^*$ is a univariate function and $\omega_h^* \colon \RR\rightarrow \RR^{|\cL|}$. 
Note that the key functionality of the module $G_{ h}$ is to use $S_h^t$ to produce $\mathtt{embed}_{h}$. For any $h\in [H]$, we want to construct networks $\hat w_h^*$ and $\hat \psi_h^*$ approximating functions  $w_h^*$ and $\psi_h^*$ in \eqref{eq:g*} separately. 
More specifically, we want to construct a network $\hat g_h^*$ such that
\begin{align}
    g_h^*(S_h^t) & = w_h^*\bigg(\frac{1}{L'}\sum_{(i,j)=(1,0)}^{(t,h-1)} \psi_h^* (z_j^i)\bigg)  \notag \\
    & \approx \mathtt{softmax}\bigg( \hat f_{h }^{w} \bigg(\frac{1}{L'}\sum_{(i,j)=(1,0)}^{(t,h-1)} \hat \psi_h^* (z_j^i)\bigg) \bigg/\tau \bigg) = \hat g_h^*(S_h^t), \label{eq:approx}
\end{align}
where $\sum_{(i,j)=(1,0)}^{(t,h-1)}$ means that we sum over all reasoning steps before $z_{h}^t$. 
Here $\hat f_{h }^{w} $ in \eqref{eq:approx} approximates the function $\tau \cdot \log w_h^*$.
 In \textbf{Step 2}, we apply Lemma~\ref{lem:network_approx} to separately bound the error induced by $\hat f_{h }^{w}$ and $\hat \psi_h^*$ using the universal approximation property of the fully-connected networks. 
Finally, in {\bf Step 3} we combine everything and construct the $G_h$ as a composition of transformer blocks. 
 
In the sequel, we introduced the rationale behind the construction of   $\hat \psi_h^*$, and $\hat f_{h }^{w}$ describe how to implement them using transformer blocks.

\begin{itemize}
    \item \textbf{Approximate $\psi_h^*$ using an MLP.} 
    We construct $\hat \psi_h^* \colon \RR\rightarrow \RR $ as a fully-connected MLP with $D_\psi$ layers and each layer has no more than $16$ neurons, where $D_{\psi}$  will be specified later. 
    The construction directly follows from Lemma~\ref{lem:network_approx}, which is a neural network approximation result established in \cite{elbrachter2021deep}. 
    As shown in Proposition \ref{prop:ffn_transformer},
    such a fully connected network can be regarded as a composition of 
    $D_{\psi}$ transformer blocks. 
    In particular, as shown in the proof of 
    Proposition \ref{prop:ffn_transformer}, 
    in each transformer block as in \eqref{eq:transformer_block}, 
    we can set the value matrices in the MHA layers to be zero,  set $\gamma_1   $ as an identity matrix, and set $\gamma_2 $ to a zero matrix. 
    This reduces the transformer block to a feed-forward layer, combined with normalization. We can apply the scaling trick introduced in Appendix \ref{proof:prop:formal_construction} to bypass the normalization layer. 
    This enables us to represent each layer of the MLP using a transformer block.


    \item \textbf{Realize the average module.} 
    After having $\hat \phi_h^*$, 
we need to compute $$1 / L' \cdot \sum_{(i,j)=(1,0)}^{(t,h-1)}\hat \psi_h^*(z_j^i)$$ using a transformer block. 
This can be achieved by having a single-head attention layer with 
 $W^Q=W^K=0$, and $W^V=1$. 
To see this, observe that when $W^Q=W^K=0$, all the attention scores become $1/ L'$ and thus the attention layer becomes an average.

    \item \textbf{Approximate $ w_h^*$ using an MLP.} Note that 
    $w_h^*$ takes values in $\RR^{|\cL|}$. 
    We let $w_{h, i}^*$ denote its $i$-th entry for all $i \in [|\cL|]$. 
    We leverage Lemma~\ref{lem:network_approx} to approximate each $\tau \cdot \log w_{h,i}^*$ using 
a fully connected MLP $\hat f_{h,i}^{w}$ with $D_{\omega}$ layers, where each layer has at most $16$ neurons. 
Here $D_{\omega}$ will be specified later. 
Similar to $\hat \psi_h$, such an MLP can be implemented by a composition of $D_{w}$ transformer blocks. 
    
\end{itemize}

\noindent \textbf{Step 2: Approximate  $\psi_h^*$ and $w_h^*$ using MLPs.} 
In this step, we employ the universal approximation properties of fully connected networks 
to construct MLPs that approximate $\psi_h^*$ and $w_{h,i}^*$ for all $i \in [|\cL|]$. 
The technical tool we leverage is 
Lemma~\ref{lem:network_approx}, obtained from  \cite{elbrachter2021deep}, 
which shows that MLP functions can approximate sufficiently smooth functions. 

Specifically, under Assumption \ref{assumption: smooth}, by Lemma~\ref{lem:network_approx}, for any desired accuracy levels $ \epsilon_\psi$ and $\epsilon_w$, 
there exist MLPs \(\hat \psi_h^*\) and $\{ \hat f_{h,i}^{w} \}_{i \in [|\cL|]} $ such that 
\begin{align}
    \label{eq:apply_lemma_approx}
    \| \hat \psi_h^* -  \psi_h^*\|_\infty <\epsilon_\psi, \qquad  \| \hat f_{h,i}^{w} -   \tau \cdot \log w_{h,i}^*\|_\infty <\epsilon_w  \text{ for all }i\in [|\calL|],
\end{align}
where $\hat \psi_h^*$ has $D_{\psi}$ layers and each $\hat f_{h,i}^{w}$ has at most $D_{w}$ layers. 
Here we have 
\begin{align}
    D_{\psi}& =    2C_dB \cdot \big (\log (1/ \epsilon_\psi ) \bigr )^2 + \log B, \qquad D_{w} = 2C_dB \big (\log (1/  \epsilon_w   ) \bigr )^2 + \log B, \label{eq:max_depth}  
\end{align}
where $C_d > 0$ is an absolute constant and  $B$ is the parameter appearing in Assumption \ref{assumption: smooth}. 
Moreover,  each layer has at most $  16$ neurons and all the neural network weights are bounded by one in magnitude, i.e., each entry of the weight matrices is bounded in $[-1, 1]$.
By this construction, define 
an embedding vector 
$\mathtt{embed}_h(S_h^t) \in \RR^{|\cL|}$ as
\begin{align}
\mathtt{embed}_h(S_h^t) = \hat f _h^{w} \bigg(\frac{1}{L'}\bigg(\sum_{i=1}^{t-1}\sum_{j=0}^{H}\hat \psi_h^{*}(z_{j}^i) + \sum_{j'=0}^{h-1}\hat \psi_h^{*}(z_{j'}^{t})\bigg)\bigg), \label{eq:embed_approx}
\end{align}
where $\hat f _h^{w}$ denotes the vector-valued mapping whose entries are $\{\hat f_{h,i}^{w}\}_{i \in [|\cL|]} $.

Next, we feed $\mathtt{embed}_h(S_h^t)$ into the softmax layer and obtain an estimator of $g_h^* (S_h^t)$. 
For any prompt $S_h^t$ with length $L'\leq L$,
the $\ell_1$-approximation error 
is bounded by 
\begin{align}
&\norm[\big]{g^*_h(S_h^t) -\mathtt{softmax}\big({\mathtt{embed}}_h(S_h^t)  /\tau\big)}_1 \nonumber\\
&\quad \leq \norm[\bigg]{g^*_h(S_h^t) -\mathtt{softmax}\bigg(  \hat f _h^{w} \bigg(\frac{1}{L'}\sum_{(i,j)=(1,0)}^{(t, h-1)} \psi_h^{*}(z_{j}^i)\bigg)  \bigg)}_1  \notag \\
&\quad \qquad + \norm[\bigg]{\mathtt{softmax}\bigg(  \hat f _h^{w} \bigg(\frac{1}{L'}\sum_{(i,j)=(1,0)}^{(t, h-1)} \psi_h^{*}(z_{j}^i)\bigg)  \bigg) - \mathtt{softmax}\bigg(  \hat f _h^{w} \bigg(\frac{1}{L'}\sum_{(i,j)=(1,0)}^{(t, h-1)} \hat \psi_h^{*}(z_{j}^i)\bigg)  \bigg)}_1 \notag \\
&\quad  \leq 2   \epsilon_w + 2 \cdot 256^{D_w}\cdot \epsilon_\psi, \label{eq:2 errors}
\end{align}
where $256$ appears because it is the total number of parameters in each layer of $\hat f_{h,i}^w$. 
Here, the first inequality follows from the triangle inequality. In the second inequality, 
we employ Lemma~\ref{lem:softmax_lip}, which states that $\mathtt{softmax}(\cdot)$
is Lipschitz continuous with parameter $2$ in terms of the $\ell_1$-$\ell_\infty$ norm pair. 
To bound the first term, 
we combine Lemma~\ref{lem:softmax_lip} and \eqref{eq:apply_lemma_approx},  
which shows that the first term is no more than  $2   \epsilon_{w}$.
To bound the second term,
we note that fact that each $f_{h, i}^{i}$, as a $D_{w}$-layer MLP, is a Lipschitz continuous function in terms of the $\ell_\infty$-norm. 
The Lipschitz parameter is bounded by $256^{D_w}$ because the vectorized $\ell_1$-norm of the weight matrix in each layer is bounded by $256$, which is a result of Lemma~\ref{lem:network_approx}. 
Then we combine Lemma~\ref{lem:softmax_lip}, Lipschitzness of $\hat f_h^w$, and \eqref{eq:apply_lemma_approx} to obtain
\begin{align*}
    &  \norm[\bigg]{\mathtt{softmax}\bigg(  \hat f _h^{w} \bigg(\frac{1}{L'}\sum_{(i,j)=(1,0)}^{(t, h-1)} \psi_h^{*}(z_{j}^i)\bigg)  \bigg) - \mathtt{softmax}\bigg(  \hat f _h^{w} \bigg(\frac{1}{L'}\sum_{(i,j)=(1,0)}^{(t, h-1)} \hat \psi_h^{*}(z_{j}^i)\bigg)  \bigg)}_1 \notag \\
    & \qquad \leq 2 \cdot \max_{i \in [|\cL| ]} \biggl |   \hat f _{h,i} ^{w} \bigg(\frac{1}{L'}\sum_{(i,j)=(1,0)}^{(t, h-1)} \psi_h^{*}(z_{j}^i)\bigg)    -  \hat f _h^{w} \bigg(\frac{1}{L'}\sum_{(i,j)=(1,0)}^{(t, h-1)} \hat \psi_h^{*}(z_{j}^i)\bigg)   \bigg|   \notag \\
    & \qquad  \leq 2 \cdot  256^{D_w} \cdot \biggl \|  \frac{1}{L'}\sum_{(i,j)=(1,0)}^{(t, h-1)} \psi_h^{*}(z_{j}^i)     -   \frac{1}{L'}\sum_{(i,j)=(1,0)}^{(t, h-1)} \hat \psi_h^{*}(z_{j}^i)   \bigg\|_{\infty} \notag\\
    &\qquad < 2 \cdot  256^{D_w}  \cdot \epsilon_{\psi},
\end{align*}

In summary, for any given $\epsilon_{w}$ and $\epsilon_{\psi}$, there exist MLPs $\hat \psi_h^*$ and $ \{ \hat f_{h,i}^{w} \}_{i \in [|\cL|]} $ such that 
$$
\norm[\big]{g^*_h(S_h^t) -\mathtt{softmax}\big({\mathtt{embed}}_h(S_h^t)  /\tau\big)}_1 \leq 2 \epsilon_w + 2 \cdot 256^{D_w}\cdot \epsilon_\psi.
$$
These MLPs have at most $D_{\psi}$ and and $D_{w}$ layers respectively, where $D_{\psi}$ and $D_w$ are defined in \eqref{eq:max_depth}.
In each layer, there are $16$ neurons and the weights are all in $[-1 , 1]$.

In conclusion, the above analysis can be extended to bound the error 
$$
\bigl \| g_{\tilde{h}}^*(S_h^t) - \mathtt{softmax}(\mathtt{embed}_{\tilde{h}}(S_h^t)/\tau) \big \|_1
$$
for all \(\tilde{h} \in \{0, \ldots, H\}\) using the same upper bound. This means that \(\mathtt{softmax}(\mathtt{embed}_{\tilde{h}}(S_h^t)/\tau)\) serves as an estimator for \(g_{\tilde{h}}^*(\cdot)\) with uniform precision across all \(\tilde{h}\). This generalization is possible because the analysis in \eqref{eq:2 errors} is based solely on the Lipschitz continuity of \(\mathtt{softmax}\) and the approximation errors established in \eqref{eq:apply_lemma_approx}. Therefore, the same error bounds apply when substituting any \(\tilde{h}\) for \(h\) in \(g_h^*(\cdot)\) and \(\mathtt{embed}_h(\cdot)\), ensuring that the error analysis is valid for any \(\tilde{h} \in \{0, \ldots, H\}\). 

\vspace{2mm}

\noindent \textbf{Step 3: Construct the transformer module $G_h$.} 
In the previous step, we construct  \( \mathtt{embed}_h(S_h^t) \) that approximates \( g^*_h(S_h^t) \). 
However, the actual input of the transformer module \( G_h \) is \( (S_h^t, \hat p_h, F_3, F_4) \in \mathbb{R}^{L' \times (2+2|\calL|)} \) and the expected output is \( (S_h^t, \hat p_h, \mathbf{1}_{L'}^\top\mathtt{embed}_h(S_h^t), F_4) \in \mathbb{R}^{L' \times (2+2|\calL|)} \). In this final step,  we explicitly construct the transformer module \( G_h \) that preserves $S_h^t, \hat p_h, F_4$ and substitute $\mathbf{1}_{L'}^\top\mathtt{embed}_h(S_h^t)$ in the third column.

To achieve such a goal, we need to first show that $\mathtt{embed}_h(\cdot )$  can be implemented by transformer blocks. 
Then we need to show that these transformer blocks can be put in a larger transformer with the desired input-output relationship. To achieve the first goal, we apply Proposition~\ref{prop:ffn_transformer} separately to the approximation modules $\hat \psi_h^*$ and $(\hat f_{h,1}^w, \cdots, \hat f_{h,|\calL|}^w)$, and connect them with the average module that is realized by a single-head attention layer. In particular, we apply the scaling trick in Proposition~\ref{prop:ffn_transformer} to bypass the normalization layers in the transformer blocks. We specify the expression of these scaling at the end of our proof. 

Our next step is to adjust weight matrices in each transformer block to preserve $S_h^t, \hat p_h, F_4$ and substitute $\mathtt{embed}_h(S_h^t)$ in the third column. 
To this end, we introduce the notion of a residual ReLU module, which is an FF layer that only keeps some desired columns of the input matrix.
Then we can concatenate \(\mathtt{embed}_h (\cdot)\) with 
a residual ReLU module to achieve the desired functionality.

\begin{lemma}[Residual ReLU module] \label{lem:residual_relu} 
    Let \(X\in \mathbb{R}^{m \times r}\) denote the input, and \(\calJ \subseteq [r]\) denote a set of indices. Then there exists a \ac{ff} layer with weight matrices \(W_{\mathrm{ff},1} \in \mathbb{R}^{r \times (2|\calJ|)}\) and \(W_{\mathrm{ff},2} \in \mathbb{R}^{(2|\calJ|) \times |\calJ|}\) such that the output matrix only keeps those columns with indices $i \in \cJ$.  Specifically, we have
$$
\mathtt{ReLU}(XW_{\mathrm{ff},1})W_{\mathrm{ff},2} = X_{:, i\in \calJ} \in \mathbb{R}^{m\times |\calJ|}.
$$
Moreover, these weight matrices satisfy
where \(\|W_{\mathrm{ff},1}\|_\mathrm{F} = \sqrt{2|\calJ|} \) and \(\|W_{\mathrm{ff},2}\|_\mathrm{F} =\sqrt{2|\calJ|}\).
    
\end{lemma}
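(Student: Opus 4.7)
The plan is to exploit the elementary identity $x = \mathtt{ReLU}(x) - \mathtt{ReLU}(-x)$ for every real $x$, which means that a scalar can be recovered exactly by passing through a ReLU with two parallel copies of opposite sign. This gives the structural template for the construction: $W_{\mathrm{ff},1}$ picks out each desired column of $X$ twice, once with coefficient $+1$ and once with coefficient $-1$, producing a matrix in $\RR^{m \times 2|\calJ|}$; then $W_{\mathrm{ff},2}$ linearly recombines the ReLU outputs by subtracting the negative copies from the positive copies, yielding the selected columns.

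First I would fix an enumeration $\calJ = \{j_1, \ldots, j_{|\calJ|}\}$ and define $W_{\mathrm{ff},1}$ column-by-column: for each $k \in [|\calJ|]$, set the $(2k-1)$-th column of $W_{\mathrm{ff},1}$ to be $e_{j_k}$ and the $(2k)$-th column to be $-e_{j_k}$, where $e_j \in \RR^r$ denotes the $j$-th standard basis vector. Then $XW_{\mathrm{ff},1}$ has alternating columns $X_{:,j_k}$ and $-X_{:,j_k}$, and applying $\mathtt{ReLU}$ entrywise yields the positive and negative parts. Next I would define $W_{\mathrm{ff},2} \in \RR^{2|\calJ| \times |\calJ|}$ as a block diagonal matrix whose $k$-th block is the column vector $(1, -1)^\top$, so that the combination $\mathtt{ReLU}(XW_{\mathrm{ff},1})W_{\mathrm{ff},2}$ recovers column $j_k$ of $X$ in position $k$ of the output via $\mathtt{ReLU}(X_{:,j_k}) - \mathtt{ReLU}(-X_{:,j_k}) = X_{:,j_k}$.

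To conclude, I would verify the Frobenius norm claims by a direct count: each of $W_{\mathrm{ff},1}$ and $W_{\mathrm{ff},2}$ has exactly $2|\calJ|$ nonzero entries, all of magnitude one, so both Frobenius norms equal $\sqrt{2|\calJ|}$. There is essentially no obstacle in this lemma; it is a linear-algebra bookkeeping exercise. The only point worth being careful about is matching the ordering of columns in $W_{\mathrm{ff},1}$ with the block structure of $W_{\mathrm{ff},2}$ so that the positive and negative copies of the same column are correctly paired before subtraction.
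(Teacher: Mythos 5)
Your construction is correct and is essentially the same as the paper's: both rely on the identity $x = \mathtt{ReLU}(x) - \mathtt{ReLU}(-x)$, pair each selected column with a $+1$ and a $-1$ copy in $W_{\mathrm{ff},1}$, recombine with block-diagonal $(1,-1)^\top$ blocks in $W_{\mathrm{ff},2}$, and count $2|\calJ|$ unit-magnitude nonzero entries for the Frobenius norms. The only cosmetic difference is that the paper first builds full-width matrices that zero out the unselected columns and then deletes the all-zero rows and columns, whereas you construct the reduced matrices directly.
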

\begin{proof}
    See Appendix~\ref{app:residual_relu} for details.
\end{proof}

To show that we can fuse $\mathtt{embed}_h(\cdot )$ with a residual ReLU module that preserves the submatrix \((S_h^t, \hat p_h, F_4) \in \mathbb{R}^{L' \times (2 + |\calL|)}\) through each FF layer, 
it suffices to show that a residual ReLU module can work together with a feed-forward layer and a MHA layer. 
The reason is that $\mathtt{embed}_h(\cdot ) $ is a composition of $D_{\psi}$ FF layers, an attention layer, and $D_{w}$ FF layers. 
If each layer of $\mathtt{embed}_h(\cdot ) $ can be added to a larger network which keeps \((S_h^t, \hat p_h, F_4) \in \mathbb{R}^{L' \times (2 + |\calL|)}\)  unchanged, then we can apply this argument to all layers of $\mathtt{embed}_h(\cdot )$  and obtain the desired network. Thus, in the following, we focus only on a FF layer and a MHA layer.

Notice that permutation of the columns can be achieved by a linear FF layer. It suffices to put columns corresponding to $\mathtt{embed}_h(\cdot )$ to the first $|\cL| $ columns. 
That is, we can study whether  the transformation
$$
(X,S_h^t, \hat p_h, F_4) \longrightarrow (X',S_h^t, \hat p_h, F_4),  
$$
can be achieved by a transformer block,
where $X \in \RR^{L'\times d}$ for some $d$, and  $X'  $ is obtained by $X $ through an FF or MHA layer. 
For ease of presentation, we denote $(S_h^t, \hat p_h, F_4)$ by a matrix $Y \in \RR^{L' \times d'}$ and study this problem with abstraction, where $d'$ is the number of columns in $Y$. 
Then  we consider  $X ' = \mathtt{ff}( (X,Y), W_{\mathrm{ff}}, b_{\mathrm{ff}})$ or $X' = \mathtt{mha} ((X,Y), W_{\mathrm{mha}})$.

First, we assume $X ' = \mathtt{ff}((X,Y), W_{\mathrm{ff}}, b_{\mathrm{ff}})$, where $W_{\mathrm{ff}} = ( W_{\mathrm{ff},1}, W_{\mathrm{ff},2}) $ and $b_{\mathrm{ff}} = ( b_{\mathrm{ff},1}, b_{\mathrm{ff},2}) $.
Using Lemma~\ref{lem:residual_relu}, we will construct weights $\overline W_{\mathrm{ff}}, \overline  b_{\mathrm{ff}}$ such that 
$
(X', Y) = \mathtt{ff} ( (X, Y), \overline W_{\mathrm{ff}}, \overline  b_{\mathrm{ff}} ) .
$
In particular, we apply Lemma \ref{lem:residual_relu} to $(X, Y)$ with $\cJ = \{d+1, \ldots, d+d'\} $, where $\calJ$ refers to the column indices corresponding to $Y$. 
Then there exist weight matrices $W_{\mathrm{ff}, 1}^{Y}$ and $W_{\mathrm{ff}, 2}^{Y}$ such that 
\begin{align*} 
Y = \mathtt{ReLU}  \bigl ( (X, Y) W_{\mathrm{ff}, 1}^{Y}) W_{\mathrm{ff}, 2}^{Y}.  
\end{align*}
Notice that $W_{\mathrm{ff}, 1}^{Y} $ has size $(d+d') \times (2d')$ and $W_{\mathrm{ff}, 2}^{Y} $ has size $2d' \times d'$. 
Whereas $W_{\mathrm{ff},1}$ has $d$ rows. 
Now we define 
$$
\overline W_{\mathrm{ff},1} = 
\begin{pmatrix}
    W_{\mathrm{ff},1}  &W_{\mathrm{ff},1}^{Y} \\   
\end{pmatrix},
\qquad  
\overline W_{\mathrm{ff},2}=
\begin{pmatrix}
    W_{\mathrm{ff},2}& 0 \\ 0& W_{\mathrm{ff},2}^{Y}
\end{pmatrix}, \qquad \overline b_{\mathrm{ff},1} = (b_{\mathrm{ff},1}, \mathbf{0}), \qquad \overline b_{\mathrm{ff},2} = (b_{\mathrm{ff},2}, \mathbf{0}). 
$$
Here in $\overline W_{\mathrm{ff},1}$ we add $d'$ all-zero rows below $W_{\mathrm{ff},1}$ to construct a valid matrix. 
As defined in \eqref{eq:ffn_bias}, we can directly calculate the FF layer with parameters $\overline W_{\mathrm{ff}} $ and $\overline b_{\mathrm{ff}} $ and  have 
\begin{align*}
    \mathtt{ff}\bigl ( (X , Y) , W_{\mathrm{ff}}, b_{\mathrm{ff} })& = \mathtt{ReLU}\bigl ( (X, Y)  \overline W_{\mathrm{ff},1} + \mathbf{1}^\top \overline b_{\mathrm{ff} , 1 } \bigr )\overline W_{\mathrm{ff},2} + \mathbf{1}^\top \overline b_{\mathrm{ff} , 2 } \notag \\
    & =   \begin{pmatrix}
    \mathtt{ReLU} ( (X,Y) W_{\mathrm{ff},1} + \mathbf{1}^\top b_{\mathrm{ff},1}), & \mathtt{ReLU} ( (X,Y) W_{\mathrm{ff},1}^Y )  
\end{pmatrix}   \overline W_{\mathrm{ff},2} + \mathbf{1}^\top \overline b_{\mathrm{ff} , 2 } \notag \\
& =(X', Y).
\end{align*}
Therefore, we construct an FF layer such that we change $X $ to $X'$ and keep $Y$ unchanged.

It remains to consider the case where $X' = \mathtt{mha} ((X,Y), W_{\mathrm{mha}})$. 
We show that $(X', Y)$ can be implemented by a transformer block starting from $(X, Y)$. 
When $X' = \mathtt{mha} (X, W_{\mathrm{mha}})$, we can augment the three matrices of $W_{\mathrm{mha}}$ by adding zeros such that 
$( X ', {\bf 0}) = \mathtt{mha} ( (X,Y), \overline W_{\mathrm{mha}})$, where 
$ \overline W_{\mathrm{mha}}$ is obtained from $W_{\mathrm{mha}}$ by adding zeros, and ${\bf 0}$ is a zero matrix that has the shape as $Y$.
Then, with a generalized residual link, we have 
$$
(X', Y) = \mathtt{mha} ( (X,Y), \overline W_{\mathrm{mha}}) + (X', Y)   \begin{pmatrix}
    {\bf 0} & {\bf 0}\\
    {\bf 0} & I
\end{pmatrix},
$$
where the $2\times 2$ block matrix plays the same role as $\gamma_1  $ in \eqref{eq:transformer_block}.

Therefore, we conclude that an FF and MHA layer that maps $X $ to $X'$ can be augmented to a layer that maps $(X,Y) $ to $(X', Y)$. 
Now we apply this argument recursively for $\mathtt{embed}_h(\cdot )$.
The input matrix is $(F_3, S_{h}^t, \hat p_h, F_4) $ and the desired output is $(\mathbf{1}_{L'}^\top\mathtt{embed}_h(S_{h}^t ) ,S_{h}^t, \hat p_h, F_4)$. 
In particular, $\mathtt{embed}_h(S_{h}^t ) $ is an MLP of $S_{h}^t$ that consists of $D_{\psi} + D_{w}$ FF layers in total and a MHA layer. 
Thus, we can apply the above argument with $X = F_3$, $Y = (S_{h}^t, \hat p_h, F_4) $ and $X' $ being the intermediate outputs of $\mathtt{embed}_h(\cdot)$. 
We conclude that such a mapping can be implemented by a transformer with $D_{\psi} + D_{w} + 1 $ blocks. 

Finally, we need to permute $(\mathbf{1}_{L'}^\top\mathtt{embed}_h(S_{h}^t ) ,S_{h}^t, \hat p_h, F_4)$ to $( S_{h}^t, \hat p_h,\mathbf{1}_{L'}^\top\mathtt{embed}_h(S_{h}^t ,  F_4)$, which can be achieved by another linear layer. 
Therefore, the desired $G_h$  can be implemented by $D_{\psi} + D_{w} + 2 $ transformer blocks.

We compute the scaling factors from Proposition~\ref{prop:ffn_transformer} when implementing each approximation module $\hat{f}^w_{h,i}$ and $\hat f_{h}^w=\{\hat f_{h,i}^w\}_{i=1}^{|\calL|}$ using transformers blocks. Furthermore, we conclude this proof by commenting on the width and norm of weight matrices of $G_h$.
 
In the construction of $G_h$, we first note that since we approximate each coordinate of the output distribution individually using \(\hat{f}^w_{h,i}\) for \(i \in [|\calL|]\), we horizontally stack the weight matrices for each \(\hat{f}^w_{h,i}\) at corresponding layers. Therefore, we derive an upper bound of the hidden layer size as $d_F \leq 16|\calL|+4+2|\calL|$, where $16|\calL|$ follows from the transformer implementation of $\hat f_h^w$, and $2(2+|\calL|)$ follows from the preservation of columns $S_h^t, \hat p_h, F_4$. 

To implement $\hat \psi_h^*$ using transformer blocks while preserving the inputs, we apply Proposition~\ref{prop:ffn_transformer} by setting the scaling factors $\{B_\ell^\psi\}_{\ell=0}^{D_\psi}$ as
 \begin{align*}
     B^\psi_0= B^\psi_{D_\psi}&=\sqrt{B^2+H^2+2|\calL|(C_A+1)^2},\\
     B^\psi_\ell&= \sqrt{d_F\cdot((B^\psi_0+1)\cdot 16^\ell)^2+B^2+H^2+|\calL|(C_A+1)^2}, \text{ for }\ell\in [D_\psi-1],
 \end{align*}
 where $B^\psi_0$ normalizes the input $(S_h^t,\hat p_h, F_3, F_4)$ row-wisely, each $B^\psi_\ell$ keeps the intermediate outputs in a unit ball. Finally, $B^\psi_{D_\psi}$ follows since $C_A$ upper bounds the magnitude of $\psi_h^*$ by Assumption~\ref{assumption: smooth}, which controls the magnitude of each row in $\hat \psi_h^*(S_h^t)$. This scaling is absorbed into the average module realized via a \ac{mha} layer. Next, we consider the transformer implementation of $\hat f_{h}^w$. Similar to the implementation of $\hat \psi_h^*$, we apply Proposition~\ref{prop:ffn_transformer} by setting the scaling factors $\{B_\ell^w\}_{\ell=0}^{D_w}$ as  
\begin{align*}
     B^w_0= B^w_{D_w}&=\sqrt{B^2+H^2+2|\calL|(C_A+1)^2},\\
     B^w_\ell&= \sqrt{d_F\cdot((B^w_0+1)\cdot 16^\ell)^2+B^2+H^2+|\calL|(C_A+1)^2}, \text{ for }\ell\in [D_w-1].
 \end{align*}
 Finally, we compute the maximum network weight for the module $G_h$ as
\begin{align*}
    \max_{0\leq i\leq D_\psi, 0\leq j\leq D_w} \{B^\psi_i,B^w_j\}\cdot\sqrt{ d_F^2+ 2(2+|\calL|)}\leq C_F\cdot B_0\cdot16^{\max \{D_\psi,D_w\}}\cdot |\calL|^{3/2},
\end{align*}
where $B_0 = \sqrt{B^2+H^2+|\calL|\cdot C_A^2}$ and $C_F>1$ is a absolute constant.
\end{proof}

\subsubsection{Proof of Lemma \ref{lemma: trapezoid_module}} \label{app: trapezoid_module}
\begin{proof}

Recall that for each $h\in [H]$ and $\epsilon\in(0,1/2)$, we want to construct a MLP to that implements the trapezoid-shaped function 
    \begin{align*}
    f_h(x)=\begin{cases}
        1 &\text{for } |x-h|\leq \epsilon,\\
        1-(|x-h|-\epsilon)/(1-2\epsilon) &\text{for } \epsilon< |x-h|\leq 1-\epsilon, \\
        0 &\text{otherwise.} 
    \end{cases}
\end{align*}
\begin{figure}[h]
    \centering
    \includegraphics[width = 0.95\textwidth]{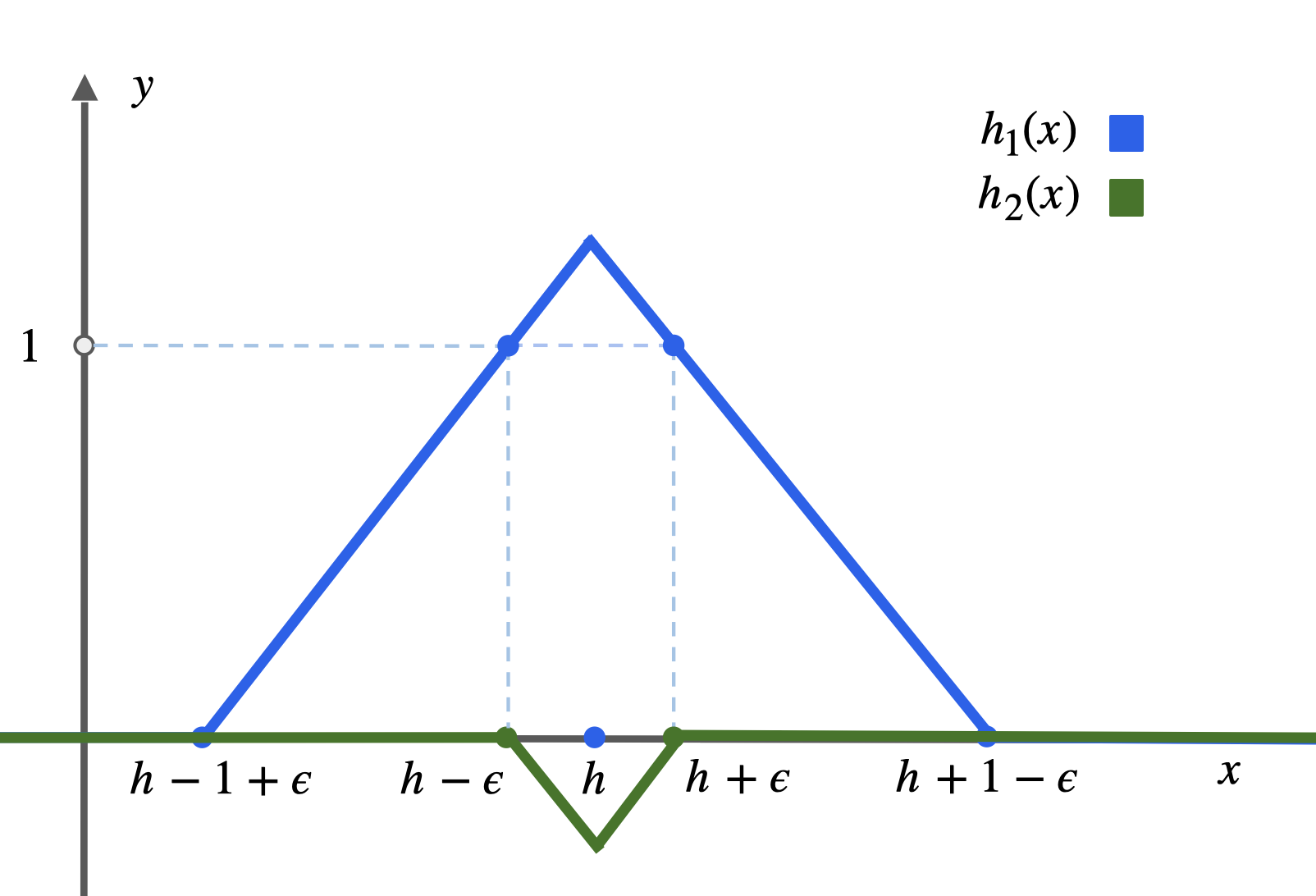}
    \caption{An illustration of the trapezoid functions $f_h(x)$ as the sum of two functions $h_1(x)$ and $h_2(x)$. Intuitively, \( h_1(x) \) is a large upward-pointing triangle, and \( h_2(x) \) is a smaller downward-pointing triangle that mirrors \( h_1(x) \) within the interval \( x \in [-\epsilon, \epsilon] \). The downward slope of \( h_2(x) \) cancels out the upward slope of \( h_1(x) \) within this interval, resulting in a flat region. Therefore, the sum of the two functions creates a trapezoid shape.}
    \label{fig:trapezoid-2-app}
\end{figure}

This function can be expressed as the sum of two triangular-shaped functions $h_1(x)+h_2(x)$, where we define $h_1$ and $h_2$ as 
\begin{align*}
h_1(x)& =\begin{cases}
        1-(|x-h|-\epsilon)/(1-2\epsilon) &\text{for } |x-h|\leq 1-\epsilon ,\\
        0 &\text{else,} 
    \end{cases} \\
h_2(x)& =\begin{cases}
     (|x-h|-\epsilon)/(1-2\epsilon) ~~~~~
 &\text{for } |x-h|\leq \epsilon, \\
        0 ~~~~~&\text{else.} 
    \end{cases}
\end{align*}
Here $h_1$ is nonzero when $|x - h| \leq 1 - \epsilon$ with $h_1 ( h ) = (1 - \epsilon )/ (1- 2 \epsilon) $, $h_1 ( h + 1 - \epsilon ) = h_1 ( h - 1 + \epsilon ) = 0$. 
Thus $h_1$ is a triangle pointing upwards.  
Similarly $h_2$ is a triangle pointing downwards with $h_2 (h) = -\epsilon / (1 -2 \epsilon)$ and  $h_2 ( h-\epsilon) = h_2 ( h + \epsilon) = 0$. See Figure~\ref{fig:trapezoid-2-app} for an illustration of $f_h$, $h_1$, and $h_2$.   

Furthermore, both $h_1$ and $h_2$ are piecewise linear functions with four linear pieces, and thus can be written as a sum of four ReLU functions. 
In particular, we can write 
$h_1 $ as 
\begin{align*}
    h_1(x) &= \frac{1}{1-2\epsilon} \Big(\mathtt{ReLU}(x-(h+1-\epsilon)) +\mathtt{ReLU}(h-1+\epsilon-x)-\mathtt{ReLU}(x-h)-\mathtt{ReLU}(h-x) +1-\epsilon \Big),   
\end{align*}
which can be verified by direct calculation. 
Thus, this function can be written as a single feed-forward layers 
with parameters 
\begin{align*}
    W_{\mathrm{ff},1}^1 = (1,-1,1,-1,0),\quad &b_{\mathrm{ff},1}^1 = (-(h+1-\epsilon),h-1+\epsilon,-h,h,1-\epsilon),\\
    W_{\mathrm{ff},2}^1 = (1,1,-1,-1,1)^\top/(1-2\epsilon), \quad &b_{\mathrm{ff},2}^1=\mathbf{0}.
\end{align*}
Similarly, we can write $h_2$ as 
$$
h_2(x) = \frac{1}{1-2\epsilon} \Big(\mathtt{ReLU}(x-(h+\epsilon)) +\mathtt{ReLU}(h-\epsilon-x)-\mathtt{ReLU}(x-h)-\mathtt{ReLU}(h-x) -\epsilon \Big), 
$$
which can be written as a feed-forward layer with parameters 
\begin{align*}
    W_{\mathrm{ff},1}^2 = (1,-1,1,-1,0),\quad &b_{\mathrm{ff},1}^2 = (-(h+\epsilon),h-\epsilon,-h,h,-\epsilon),\\
    W_{\mathrm{ff},2}^2 = (1,1,-1,-1,1)^\top/(1-2\epsilon),\quad& b_{\mathrm{ff},2}^2=\mathbf{0}.
\end{align*}

Finally, by directly concatenating the corresponding weight matrices for \(h_1(\cdot)\) and \(h_2(\cdot)\), we can implement the function \(f_h\) using a single feedforward (\ac{ff}) layer. The width of the weight matrix in this \ac{ff} layer is at most 10, and the magnitude of the weights is bounded by \(H + 1\). 
Thus, we conclude the proof.
\end{proof}

\subsection{Proofs of the Remaining Auxiliary Lemmas} \label{app:proof_other_lemmas}

In the following, we prove the remaining auxiliary lemmas, which include  Lemma~\ref{lem: kl decomp} used in the proof of Corollary \ref{cor: rate with error}, and Lemmas \ref{lemma: product_module} and \ref{lem:residual_relu} used in the proofs in Appendix \ref{proof:app:construction}.

\subsubsection{Proof of Lemma~\ref{lem: kl decomp}} \label{app: kl decomp}

\begin{proof}

Let $\pt_\mathrm{CoT}(n)$ denote a fixed prompt, then according to the chain rule of KL divergence, we have that 
\begin{align}\label{eq: chain rule}
    &\KL\bigl(\PP(z_{1:H}^{\mathrm{test}} =\cdot \given \pt_{\mathrm{CoT}}(n)) , \PP_{\hat \rho}(z_{1:H}^{\mathrm{test}}=\cdot \given \pt_{\mathrm{CoT}}(n)) \bigr)  \\
    & \quad = \sum_{h=1}^H \E_{{z_{1:h-1}^\mathrm{test} \sim \PP(\cdot \given \pt_\mathrm{CoT}(n))}} \Big[\KL\bigl(\PP(z_{h}^{\mathrm{test}} =\cdot \given \pt_{\mathrm{CoT}}^h(n)) , \PP_{\hat \rho}(z_{h}^{\mathrm{test}} =\cdot \given \pt_{\mathrm{CoT}}^h(n)) \bigr)\Big].  \nonumber
\end{align}
Here the chain rule of KL divergence states that 
\begin{align*}
& \KL\bigl(  \PP( X_2 =  \cdot , X_3 = \cdot   \given  X_1 ) , \overline   \PP ( X_2 =  \cdot , X_3 = \cdot   \given  X_1 ) \bigr)  \notag \\
& \quad = \KL( \PP( X_2 =  \cdot ,     \given  X_1 ) , \overline   \PP ( X_2 =  \cdot ,    \given  X_1 ) \bigr)  + \EE_{ X_2 \sim \PP( \cdot \given X_1 ) } \bigl [ \KL( \PP( X_3 =  \cdot      \given  X_1, X_2  ) , \overline  \PP ( X_3=  \cdot ,    \given  X_1, X_2  ) \bigr)\bigr ] 
\end{align*}
holds for any three random variables $(X_1, X_2, X_3) $ with two joint distributions $\PP$ and $\overline \PP$. 
Then according to data processing inequality, we have that
\begin{align}
    &\KL\bigl(\PP(z_{1:H}^{\mathrm{test}} =\cdot \given \pt_{\mathrm{CoT}}(n)) , \PP_{\hat \rho}(z_{1:H}^{\mathrm{test}}=\cdot \given \pt_{\mathrm{CoT}}(n)) \bigr) \label{eq: dpi}\\
    & \quad \geq \KL\bigl(\PP(z_H^{\mathrm{test}} =\cdot \given \pt_{\mathrm{CoT}}(n)) , \PP_{\hat \rho}(z_H^{\mathrm{test}}=\cdot \given \pt_{\mathrm{CoT}}(n)) \bigr). \nonumber
\end{align}
Notice that $y^\mathrm{test} 
 = z_{H} ^{\mathrm{test}} $. 
 Combing \eqref{eq: chain rule} with \eqref{eq: dpi}, we have that 
\begin{align}
    &\KL\bigl(\PP(y^{\mathrm{test}} =\cdot \given \pt_{\mathrm{CoT}}(n)) , \PP_{\hat \rho}(y^{\mathrm{test}}=\cdot \given \pt_{\mathrm{CoT}}(n)) \bigr) \nonumber \\
    & \quad \leq \sum_{h=1}^H \E_{{z_{1:h-1}^\mathrm{test} \sim \PP(\cdot \given \pt_\mathrm{CoT}(n))}} \Big[\KL\bigl(\PP(z_{h}^{\mathrm{test}} =\cdot \given \pt_{\mathrm{CoT}}^h(n)) , \PP_{\hat \rho}(z_{h}^{\mathrm{test}} =\cdot \given \pt_{\mathrm{CoT}}^h(n)) \bigr)\Big].  \notag 
\end{align}
Therefore, we conclude the proof.
\end{proof}

\subsubsection{Proof of Lemma \ref{lemma: product_module}}\label{app: product_module}

\begin{proof}
In this proof, we extend Proposition III.3 in  \cite{elbrachter2021deep} to construction a sequence of \ac{ff} modules such that $f:\mathbb{R}^{r}\times \mathbb{R} \to \mathbb{R}^{r}$, where we want $f(x_{1:r}, y)\approx (x_1 y,\cdots,x_r y)$.
For simplicity, we denote the input as $X^0=(x_1, \cdots, x_r,y)\in \mathbb{R}^{r+1}$.
By leveraging the construction by \cite{elbrachter2021deep}, we define a set of  matrices $\{A_i, b_i\}_{i=1}^{m+2}$ as follows.

First, we define 
$A_{0,1} = (1,-1,1,-1)/(2M)$ and $A_{0,2} = (1,-1,-1,1)/(2M)$.
Then for each $ i \in [r] $, we define $A_{1,i} \in \mathbb{R}^{(r+1)\times 4}$ by setting the $i$-th row as
$A_{0,1}$, $(r+1)$-th row as $A_{0,2}$ and fill the rest with zero. For example, $A_{1,1}=(A_{0,1},\mathbf{0}_{ (r-1)\times 4},A_{0,2})^\top\in \mathbb{R}^{(r+1) \times 4}$. We stack the matrices $\{A_{1,i}\}_{i=1}^r$ horizontally to form $A_1 = (A_{1,1},\cdots, A_{1,r}) \in \mathbb{R}^{(r+1)\times 4r}$.
Then we  define the bias $b_1=\mathbf{0}\in \mathbb{R}^{4r }$.

To define $\{ A_i, b_i\}_{i=2}^{m+2}$, we first  define $\{ A_{\ell}', b_{\ell}' \}_{\ell  = 2}^{m+1}$ by letting 
\begin{align*}
    A'_2 = 
\begin{pmatrix} 
1&1&1&0&0\\ 
1&1&1&0&0\\ 
0&0&-1&1&1\\ 
0&0&-1&1&1
\end{pmatrix}\in \mathbb{R}^{4\times 5}, \qquad  
& b'_2=\begin{pmatrix}
    0 &-1/2 &0 & 0&-1/2
\end{pmatrix}, \\
A'_\ell = \frac{1}{2M}
\begin{pmatrix}
    1/2&1/2&-1/2&0&0\\
    -1&-1&1&0&0\\
    0&0&1&0&0\\
    0&0&1/2&1/2&1/2\\
    0&0&-1&-1&-1
\end{pmatrix} \in \mathbb{R}^{5\times 5}, \qquad 
& b'_\ell=\begin{pmatrix}
    0 & -2^{3-2\ell} & 0 & 0 & -2^{3-2\ell}
\end{pmatrix}, 
\end{align*}
for $3\leq \ell \leq m+1$. Finally, we define  $A'_{m+2} = (-1/2, 1, 1,1/2, -1)^\top \in \mathbb{R}^{5\times 1}$ and $b'_{m+2}=0 \in \RR $.
Next,  for any  $i \in \{ 2, \ldots,   m+2\}$, we define  $A_i$ and $b_i$ as 
$$
A_i = \mathtt{diag}(\underbrace{A'_i, \cdots, A'_i}_{\displaystyle r})  , \qquad  b_i = \big (\underbrace{ b'_i,\cdots, b'_i}_{\displaystyle r} \bigr )  ,
$$
where each $A_i$ is obtained by 
constructing a block-diagonal matrix with $A_i'$ being the diagonal blocks, 
 and each $b_i$ is obtained by stacking $b'_i$ horizontally for $r$ times. Therefore we have $A_2\in \mathbb{R}^{4r\times 5r}$,  $A_{m+2}\in \mathbb{R}^{5r \times r}$, and $A_\ell\in \mathbb{R}^{5r\times 5r}$ for $3\leq \ell \leq m+1$. 
 Besides, we have $b_i \in \mathbb{R}^{1\times 5r }$ for each $2\leq \ell \leq m+1$, and $b_{m+2} = {\bf 0} \in \mathbb{R}^{r }$.
 Note that in the construction of Proposition III.3 by \cite{elbrachter2021deep}, a scalar multiplication module is used to restore the normalization \(1/2M\) introduced by the first weight matrix \(A_1\). In our approach, we instead scale the last weight matrix \(A_{m+2}\) by \(M^2\), thereby eliminating the need for a separate scalar multiplication module.

Note that by setting $r=1$, we recover the exact construction by \cite{elbrachter2021deep} in Proposition III.3. 
We use $f_1:\mathbb{R}\to \mathbb{R}$ to denote 
such a network, which is an MLP with parameters $\{A_i', b_i'\}_{i=1}^{m+2}$, where $A_1' = (A_{0,1} , A_{0,2} )^\top \in \RR^{2\times 4}$, $b'_1 = {\bf 0} \in \RR^{4}$, and $\{ A_i', b_i'\}_{i=2}^{m+2}$ are defined above. 
By the construction of the weight matrices
$\{ A'_i, b'_i\}_{i=1}^{m+2}$,
the MLP with these parameters yields a vector-valued mapping 
$f \colon \RR^{r} \times \RR \rightarrow \RR$ such that 
\begin{align} \label{eq:define_mapping_f_product}
    f(x,y) = \big(f_1(x_1,y),\cdots,f_1(x_r,y)\big), \qquad \forall x \in \RR^{r}, y \in \RR.  
\end{align}

As shown in Proposition III.3 in \cite{elbrachter2021deep}, when the depth of $f_1$, i.e., $m+2$, is bounded by 
$C_p(\log M +\log (1/\epsilon))$ for some constant $C_p>0$,
$f_1 $ is a good approximator of the product operation in the sense that 
 $|f_1(a,b)-ab|<\epsilon$ for any $a,b\in [-M,M]$.

 Therefore, $f$ constructed in \eqref{eq:define_mapping_f_product} using weight matrices $\{ A_i, b_i \}_{i=1}^{m+2}$ satisfies 
$$\|f(x,y)-(x_1y,\cdots,x_ry)\|_\infty = \max_{i\in [r]} |f(x_i,y)-x_iy|<\epsilon .
$$
The depth of $f$ is no more than $C(\log M +\log (1/\epsilon))$, and the maximum dimension of the hidden neurons is $5r$. The maximum magnitude of the intermediate weight matrices is bounded by 1, i.e., \(\max \{ \| A_i \|_{\infty}, \| b_i \|_{\infty} \} \leq 1\) for all \(i \in [m+1]\). Additionally, \(\|A_{m+2}\|_\infty \leq M^2\) due to the direct scaling, which replaces the scalar multiplication module.

To 
bound the Frobenius norms of weight matrices and bias vectors in $f$, by direct computation, we have  
$$\max_{\ell \in [m+2]}\{\|A_\ell\|_\mathrm{F}, \|b_\ell\|_\mathrm{F}\} = \max\{\|A_2\|_\mathrm{F},\|A_{m+2}\|_\mathrm{F}\}= \max\{\sqrt{12r},H^2\cdot \sqrt{5r}\}.$$ 
Finally, we compute the row-wise $\ell_2$-norm for each intermediate output 
$$
X^{\ell}  = \mathtt{ReLU}\big( X^{\ell -1}A_d+\mathbf{1}^\top b_{\ell}\big), \qquad \forall  \ell \in [m+2],
$$
where the initial input is given by $X^0=(x_1,\cdots,x_r,y)\in \mathbb{R}^{r+1}$.
Given that each entry of \( X^{\ell -1} \) lies within the interval \([0,1]\), and noting that by construction \(\|\mathbf{1} A_\ell   + b_\ell \|_\infty \leq 1\), for $\ell\geq 3$, we conclude that each entry of \( X^\ell \) is also within \([0,1]\). We can calculate \( X^1 \) as:
$$
X^1 = \frac{1}{2M} \cdot \mathtt{ReLU}\left(x_1+y, -(x_1+y), x_1-y, -(x_1-y), \ldots, -(x_r-y)\right).
$$
Since \(x_1, \ldots, x_r, y \in [-M, M]\), each coordinate of \( X^1 \) is in \([0,1]\).
Direct computation shows that
\begin{align*}
    X^2 &=  \mathtt{ReLU}\big(\frac{1}{2M} \cdot(|x_1+y|,|x_1+y|,|x_1+y|-|x_1-y|,|x_1-y|,|x_1-y|,\cdots, |x_r-y|)\\
&\qquad \qquad  -(0, 1/2,0,0,1/2,\cdots,1/2)\big),
\end{align*}
thus each coordinate of \( X^2 \) is in \([0,1]\). By induction, this implies that for each intermediate output \( X^\ell \) (with \(\ell \in [m+1]\)), every element remains within \([0,1]\).   In conclusion, we have that
\begin{align*}
    \|X^0\|_{2,\infty} &= \|(x_1,\cdots,x_r,y)\|_2 \leq M\sqrt{r+1}, \\ 
    \|X^\ell\|_{2, \infty}&\leq \sqrt{5r}, \text{ for any }\ell\in [m+1], \nonumber \\
    \|X^{m+2}\|_{2, \infty}&\leq H^2.
\end{align*}
The first line follows from the direct calculation, and the second line holds because the maximum hidden embedding size is \(5r\), thus a row in $X^\ell$ has length at most $5r$. These upper bounds on the \(\ell_2\)-norm of \(X^\ell\) will be used when implementing this fully connected network under a transformer. The total number of layers of this fully connected network is $C_p \cdot (\log M + \log (1/\epsilon) ) $, where $C_p $ is an absolute constant. 
Now we conclude the proof.
\end{proof}

\subsubsection{Proof of Lemma~\ref{lem:residual_relu}}\label{app:residual_relu}
\begin{proof}
In this proof, we first construct a pair of weight matrices $(W'_{\mathrm{ff},1}, W'_{\mathrm{ff},2})$ 
such that the output matrix keeps columns in $\cJ$ and set the other columns to a zero vector.
Thus, the output matrix is in $\RR^{m \times r}$. 
Then we modify $(W'_{\mathrm{ff},1}, W'_{\mathrm{ff},2})$ to form another pair of weight matrices $(W_{\mathrm{ff},1}, W_{\mathrm{ff},2})$ such that the FF layer truncates the zero columns generates the desired output. 

Since \(a = \mathtt{ReLU}(a) - \mathtt{ReLU}(-a)\)
for any $a\in \RR$, by defining \(W_1 = W_2^\top = (1, -1) \in \RR^{1\times 2}\), we have \(\mathtt{ReLU}(aW_1)W_2 = a\). Setting \(W'_1 = W'_2 = 0 \in \RR\), we send \(a\) to zero by \(\mathtt{ReLU}(aW'_1)W'_2 = 0\). 
For each \(i \in [r]\), define \(W_{1,i} = \mathbbm{1}\{i \in \calJ\} \cdot W_1 + \mathbbm{1}\{i \notin \calJ\} \cdot W'_1\) and \(W_{2,i} = \mathbbm{1}\{i \in \calJ\} \cdot W_2 + \mathbbm{1}\{i \notin \calJ\} \cdot W'_2\).
For any $i \in [r]$, 
using \(\{ W_{1,i}, W_{2,i}\}\) as the weight matrices of a FF layer to process
the $i$-th column $X_{:,i}$, the output is $\mathbbm{1}\{i \in \calJ\}\cdot X_{:,i}$.

Now we put these matrices in the diagonal blocks of $W'_{\mathrm{ff},1}$ and $W'_{\mathrm{ff},2}$ to form $$
W'_{\mathrm{ff},1} = \mathtt{diag}(W_{1,i},\cdots,W_{1,r}) \in \mathbb{R}^{r\times (r+|\calJ|)}, \qquad W'_{\mathrm{ff},2} = \mathtt{diag}(W_{2,i},\cdots,W_{2,r})\in \mathbb{R}^{(r+|\calJ|)\times r}.
$$ 
By direct calculation, we have 
    $$
\mathtt{ReLU}(XW'_{\mathrm{ff},1})W'_{\mathrm{ff},2} = \big(X_{:,1} \cdot \mathbbm{1}\{1 \in \calJ\}+\mathbf{0}_{m\times 1}, \ldots, X_{:, r} \cdot \mathbbm{1}\{r \in \calJ\}+\mathbf{0}_{m\times 1} \big) \in \mathbb{R}^{m\times r}.
$$
This output has the same shape as the input $X$.  
To get the final result, 

This output keeps the dimension as $m\times r$. To get the final result, we define $W_{\mathrm{ff},1} \in \mathbb{R}^{r\times 2|\calJ|}$ by removing all all-zero columns from $W'_{\mathrm{ff},1}$, and $W_{\mathrm{ff},2}\in \mathbb{R}^{ 2|\calJ|\times |\calJ|}$ by removing all all-zero rows and columns from $W'_{\mathrm{ff},2}$.
These are the submatrices of $W_{\mathrm{ff},1}'$ and $W_{\mathrm{ff},2}'$  used to process columns $X_{:,i}$'s with $i \in \cJ$. 
As a result, 
$W_{\mathrm{ff},1}$ and $W_{\mathrm{ff},2} $ have only $2|\cJ|$ nonzero entries, taking values in $\{ -1, 0, 1\}.$
Moreover, 
we have 
$$
\mathtt{ReLU}(XW_{\mathrm{ff},1})W_{\mathrm{ff},2} = X_{:, i\in \calJ} \in \mathbb{R}^{m\times |\calJ|},
$$
and the norms of the these weight matrices are  \(\|W_{\mathrm{ff},1}\|_\mathrm{F} = \sqrt{2|\calJ|}\) and \(\|W_{\mathrm{ff},2}\|_\mathrm{F} = \sqrt{2|\calJ|}\).
    \end{proof}

\newpage 

\section{Technical Lemmas}

Finally, in this appendix, we lay out the helper lemmas used in the proofs in previous appendices. These lemmas are directly obtained from existing works and we provide the references to their proofs.

\begin{lemma}[Proposition 2 in \citet{caponnetto2007optimal}]
	\label{lem:cme-concen}
	Let $(\Omega, \nu)$ be a probability space and $\xi$ be a random variable on $\Omega$ taking value in a real separable Hilbert space $\cH$. We assume that there exists constants $B, \sigma > 0$ such that
	\begin{align*}
		\bigl\|\xi(w)\bigr\|_\cH \le B/2,\ \mathrm{a.s.}, \quad \EE\bigl[\norm{\xi}_\cH^2\bigr] \le \sigma^2.
	\end{align*}
	Then, it holds with probability at least $ 1- \delta$ that
	\begin{align*}
		\biggl\| L^{-1} \sum_{i = 1}^L \xi(\omega_i) - \EE[\xi] \biggr\| \le 2\biggl( \frac{B}{L} + \frac{\sigma}{\sqrt{L}} \biggr) \log \frac{2}{\delta}.
	\end{align*}
\end{lemma}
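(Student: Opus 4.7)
The plan is to deduce the bound from a standard Bernstein-type inequality for Hilbert-space-valued sums. First, I would center the variables: set $\eta_i = \xi(\omega_i) - \EE[\xi]$, so that $\{\eta_i\}_{i=1}^L$ are i.i.d.\ elements of $\cH$ with $\EE[\eta_i] = 0$. The assumed bounds transfer directly: $\|\eta_i\|_\cH \le B$ almost surely (by the triangle inequality, since $\|\EE[\xi]\|_\cH \le \EE\|\xi\|_\cH \le B/2$), and $\EE\|\eta_i\|_\cH^2 \le \EE\|\xi\|_\cH^2 \le \sigma^2$. The quantity to bound then becomes $L^{-1}\|S_L\|_\cH$, where $S_L = \sum_{i=1}^L \eta_i$.

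Next, I would invoke Pinelis' inequality (the vector-valued Bernstein inequality for martingales in 2-smooth Banach spaces; a Hilbert space is $2$-smooth with constant $1$). This gives, for every $t > 0$,
\begin{align*}
\Pr\bigl[\|S_L\|_\cH \ge t\bigr] \;\le\; 2\exp\!\Bigl(-\frac{t^2}{2L\sigma^2 + \tfrac{2}{3}Bt}\Bigr),
\end{align*}
with the variance parameter $v = L\sigma^2$ coming from $\sum_{i=1}^L \EE\|\eta_i\|_\cH^2$ and the boundedness parameter $b = B$ coming from the almost-sure bound. Solving the tail $2\exp(-t^2/(2v + 2bt/3)) \le \delta$ for $t$ via the usual quadratic argument yields, with probability at least $1-\delta$,
\begin{align*}
\|S_L\|_\cH \;\le\; \sqrt{2L\sigma^2 \log(2/\delta)} \;+\; \tfrac{2}{3}B\log(2/\delta).
\end{align*}

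Finally, I would divide by $L$ and simplify: $\sqrt{2L\sigma^2 \log(2/\delta)}/L = \sigma\sqrt{2\log(2/\delta)/L}$, and combining with the second term using the crude bound $\sqrt{\log(2/\delta)} \le \log(2/\delta)$ (valid when $\delta \le 2/e$; the small-$\log$ regime is trivial separately) gives
\begin{align*}
\Bigl\|L^{-1}S_L\Bigr\|_\cH \;\le\; 2\Bigl(\tfrac{B}{L} + \tfrac{\sigma}{\sqrt{L}}\Bigr)\log(2/\delta),
\end{align*}
which is the stated bound.

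The main obstacle is the vector-valued Bernstein step. If one does not wish to cite Pinelis as a black box, one would reprove it by writing an exponential moment inequality $\EE\exp(\lambda\|S_L\|_\cH)$ using the $2$-smoothness identity $\|x+y\|^2 \le \|x\|^2 + 2\langle x, y\rangle + \|y\|^2$ in Hilbert space together with the standard Bennett/Bernstein control of $\EE[e^{\lambda(\|\eta\|^2 - \sigma^2)}]$ under the boundedness hypothesis, and then apply Markov/Chernoff. Everything else is routine algebra.
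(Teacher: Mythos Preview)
Your approach is correct and standard. Note, however, that the paper does not actually prove this lemma: it is stated in the ``Technical Lemmas'' appendix as a direct citation of Proposition~2 in \citet{caponnetto2007optimal}, with no proof given. Your derivation via Pinelis' vector-valued Bernstein inequality is essentially the argument underlying that cited result (Caponnetto--De Vito state it under the slightly more general moment hypothesis $\EE\|\xi\|_\cH^m \le \tfrac{1}{2}m!\,\sigma^2 (B/2)^{m-2}$ for all $m\ge 2$, which your a.s.\ bound and second-moment bound immediately imply), so there is nothing to compare against within the paper itself.
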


\vspace{2mm}
\begin{lemma}[Theorem A.4  in \cite{foster2021statistical}]\label{lemma: hellingerbd}
    For any sequence of real random variables $\{X_i\}_{1\leq i \leq n}$ that adapts to a filtration $\{\mathscr{F}_{i}\}_{1\leq i \leq n}$, then for any $m\leq n$, with probability at least $1-\delta$,
    \begin{equation*}
        \sum_{i=1}^n X_i \leq \sum_{i=1}^n \log \E_{i-1} \big( e^{X_i} \big) + \log(\delta^{-1}).
    \end{equation*}
    
\end{lemma}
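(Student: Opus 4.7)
The plan is to prove this by constructing an exponential martingale and applying Markov's inequality, a classical technique for deriving high-probability bounds from moment-generating function control. Define $M_n = \exp\bigl(\sum_{i=1}^n X_i - \sum_{i=1}^n \log \mathbb{E}_{i-1}[e^{X_i}]\bigr)$, which we can equivalently write as $M_n = \prod_{i=1}^n \frac{e^{X_i}}{\mathbb{E}_{i-1}[e^{X_i}]}$.

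First I would verify that $(M_n)_{n\geq 0}$ is a nonnegative martingale with respect to $\{\mathscr{F}_n\}$ and $\mathbb{E}[M_0] = 1$. The key computation is that $\mathbb{E}_{i-1}\bigl[\frac{e^{X_i}}{\mathbb{E}_{i-1}[e^{X_i}]}\bigr] = 1$, so conditioning on $\mathscr{F}_{n-1}$ gives $\mathbb{E}[M_n \mid \mathscr{F}_{n-1}] = M_{n-1}$. Taking total expectation iteratively yields $\mathbb{E}[M_n] = 1$ for every $n$.

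Next I would apply Markov's inequality: for any threshold $t > 0$,
\begin{align*}
\mathbb{P}\bigl(M_n \geq t\bigr) \leq \frac{\mathbb{E}[M_n]}{t} = \frac{1}{t}.
\end{align*}
Setting $t = 1/\delta$, we obtain that $M_n \leq 1/\delta$ with probability at least $1 - \delta$. Taking logarithms on both sides of $M_n \leq 1/\delta$ and rearranging gives exactly
\begin{align*}
\sum_{i=1}^n X_i \leq \sum_{i=1}^n \log \mathbb{E}_{i-1}[e^{X_i}] + \log(\delta^{-1}).
\end{align*}

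There is no significant obstacle here: the entire argument is just the standard Chernoff-type bound for martingales, sometimes called the Ville inequality applied to the exponential martingale. The only subtlety is ensuring that $\mathbb{E}_{i-1}[e^{X_i}]$ is finite almost surely so that $M_n$ is well-defined as a genuine martingale (rather than a local martingale); this is implicitly required for the statement to make sense, since the right-hand side would otherwise be $+\infty$ trivially. The parameter $m$ in the statement appears unused, suggesting the lemma as stated is simpler than a potentially more general version in the reference; my proof covers the statement as written.
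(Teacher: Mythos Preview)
Your proof is correct and is the standard exponential-martingale (Ville inequality) argument for this type of bound. The paper does not actually prove this lemma; it is stated without proof as a technical result imported from \cite{foster2021statistical}, so there is no paper-side argument to compare against---your approach is exactly how the result is established in the original reference.
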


\vspace{2mm}

\begin{lemma}[Lemma I.10 in \cite{zhang2023and}]
    \label{lem: tv-kl}
Let $b = \sup_x \log (p(x) / q(x))$.
We have that
\begin{align*}
    \KL(p\,\|\, q)  \le 2(3 + b) \cdot \TV(p, q).
\end{align*}
\end{lemma}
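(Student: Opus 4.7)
The aim is to control $\KL(p\,\|\,q)$ by $\TV(p,q)$ using the uniform log-density bound $\log(p/q) \le b$. My plan is to split the KL integral according to the sign of $\log(p/q)$ and exploit the fact that the contribution on the set $\{p < q\}$ is already nonpositive, so the entire divergence is controlled by the integral over $\{p \ge q\}$, where both $\log(p/q) \in [0,b]$ and the elementary inequality $\log(1+x) \le x$ are available.

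Concretely, I would write $\KL(p\,\|\,q) = \int_{p \ge q} p \log(p/q)\, dx + \int_{p < q} p \log(p/q)\, dx$, and observe that the second integral is $\le 0$. For the first integral, I would use the decomposition $p = q + (p-q)$ to split it further into $\int_{p \ge q} (p-q)\log(p/q)\, dx + \int_{p \ge q} q \log(p/q)\, dx$. The first piece is bounded by $b \cdot \int_{p \ge q}(p-q)\, dx = b \cdot \TV(p,q)$ using the uniform bound. The second piece is bounded by $\int_{p \ge q} q \cdot (p/q - 1)\, dx = \int_{p \ge q}(p-q)\, dx = \TV(p,q)$ using $\log t \le t - 1$. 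Adding these gives the intermediate inequality $\KL(p\,\|\,q) \le (b+1)\cdot\TV(p,q)$, which is strictly stronger than the claimed bound $2(3+b)\cdot\TV(p,q)$ since $b \ge 0$ whenever $p \ne q$; trivially relaxing constants yields the stated form.

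I do not expect any genuine obstacle here: the lemma is a standard ``reverse Pinsker'' inequality under a bounded log-density ratio, and the loose constant $2(3+b)$ in the statement leaves ample slack. The only subtlety is to make sure that the splitting and the use of $\log t \le t-1$ respect the sign conventions, so that the negative contributions on $\{p < q\}$ are correctly discarded rather than bounded in absolute value. If a tighter or more uniform treatment is desired (for instance matching the precise constant $2(3+b)$), one can alternatively invoke the identity $\KL(p\,\|\,q) = \int q \cdot g(p/q)\, dx$ with $g(t) = t \log t - t + 1$, and bound $g(t) \le (b+1)|t-1|$ for $t \in [0, e^b]$ via convexity, which gives the same conclusion after integrating against $q$.
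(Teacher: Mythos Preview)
Your argument is correct and in fact yields the sharper bound $\KL(p\,\|\,q) \le (b+1)\cdot\TV(p,q)$; the paper does not give its own proof of this lemma but simply cites it from \cite{zhang2023and}, so there is nothing in the paper to compare against. Your elementary splitting over $\{p\ge q\}$ versus $\{p<q\}$, discarding the nonpositive contribution and using $\log t \le t-1$ on the remainder, is a clean standalone proof, and the loose constant $2(3+b)$ in the stated lemma is evidently not tight.
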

\vspace{2mm}
\begin{lemma}[Proposition E.1 in \cite{zhang2023and}]\label{lem: exp kernel}
Let $\fk(a,b) = \exp(\gamma\cdot a^T b)$ denote exponential kernel with constant $\gamma>0$, where $a, b \in \bbR^{d}$ . We use $\mathbf{S}^{d-1}$ to denote a $d-1$-dimensional unit sphere. Then we have that
\begin{align*}
    \int_{\mathbf{S}^{d-1}} a\fk(a,b) \text{d}a = C_1(\gamma) b,
\end{align*}
for some constant $C_1(\gamma) = \int_{\mathbf{S}^{d-1}}(a^\top b) \exp(\gamma\cdot a^\top b) \ud a>0$ and all $b\in\mathbf{S}^{d-1}$. The constant $C_1$ does not depend on $b$ due to symmetry on the unit sphere.
    
\end{lemma}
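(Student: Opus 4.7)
The plan is to prove the lemma by exploiting the rotational symmetry of the unit sphere. First I would observe that for any orthogonal matrix $R \in O(d)$, the change of variables $a \mapsto Ra$ preserves the uniform measure on $\mathbf{S}^{d-1}$, so
\[
\int_{\mathbf{S}^{d-1}} a\, \fk(a, Rb)\, \rmd a \;=\; \int_{\mathbf{S}^{d-1}} a\, \exp(\gamma a^\top Rb)\, \rmd a \;=\; R \int_{\mathbf{S}^{d-1}} a\, \exp(\gamma a^\top b)\, \rmd a.
\]
Specializing to rotations $R$ that fix $b$ (i.e., $Rb = b$), this identity says that the vector $v(b) := \int_{\mathbf{S}^{d-1}} a \fk(a,b) \rmd a$ is invariant under the stabilizer of $b$ in $O(d)$, which forces $v(b)$ to be parallel to $b$. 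Hence $v(b) = C(b) \cdot b$ for some scalar $C(b)$.

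Next, I would show that $C(b)$ does not depend on $b$. From the displayed identity above, $v(Rb) = R v(b) = C(b) \cdot Rb$, but also $v(Rb) = C(Rb) \cdot Rb$, so $C(Rb) = C(b)$ for every rotation $R$. Since $O(d)$ acts transitively on $\mathbf{S}^{d-1}$, the function $C$ is constant on the sphere, and we may denote it by $C_1(\gamma)$. Taking the inner product of both sides of $v(b) = C_1(\gamma) b$ with $b$ (using $\|b\|_2 = 1$) yields the explicit formula
\[
C_1(\gamma) \;=\; \int_{\mathbf{S}^{d-1}} (a^\top b)\, \exp(\gamma a^\top b)\, \rmd a,
\]
which is precisely the constant stated in the lemma.

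Finally, to see that $C_1(\gamma) > 0$ when $\gamma > 0$, I would use the antipodal pairing $a \leftrightarrow -a$ on the sphere, which gives
\[
C_1(\gamma) \;=\; \tfrac{1}{2}\int_{\mathbf{S}^{d-1}} (a^\top b)\bigl[\exp(\gamma a^\top b) - \exp(-\gamma a^\top b)\bigr]\rmd a \;=\; \int_{\mathbf{S}^{d-1}} (a^\top b)\sinh(\gamma a^\top b)\, \rmd a,
\]
and the integrand $u \sinh(\gamma u)$ is nonnegative for all real $u$ and strictly positive on a set of positive measure, so $C_1(\gamma) > 0$. No step here should be a serious obstacle; the only thing to be a little careful about is ensuring the symmetry argument is cleanly stated so that both the ``$v(b) \parallel b$'' conclusion and the ``$C$ is constant'' conclusion follow from the same change-of-variables identity.
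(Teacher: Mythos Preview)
Your proof is correct. The paper does not actually prove this lemma; it is listed among the technical lemmas ``directly obtained from existing works'' and is cited as Proposition~E.1 in \cite{zhang2023and}, so there is no in-paper argument to compare against. Your symmetry argument (equivariance under $O(d)$ to get $v(b)\parallel b$, transitivity to get that the scalar is independent of $b$, and the antipodal pairing to show positivity) is the standard and natural route, and each step is sound.
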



\vspace{2mm}
\begin{proposition}[Proposition F.2 in \cite{zhang2023and}] \label{prop:pacbayes}
        Let $\calF$ be the collection of functions of $f:\bbR^{n}\rightarrow\bbR$, and we assume that $|f|\leq b $ for any function $f\in\calF$. Let $X_{1},\cdots,X_{N}$ be $N$ i.i.d. random variables.
        Let   $Q$  be a probability distributionover  $\calF$. With probability at least $1-\delta$, we have
        \begin{align}
            \Bigl|\bbE_{f \sim 
P}\Bigl[\bbE_{X_1} \big[f(X_1 )\big]-f(X)\Bigr]\Bigr|\leq \sqrt{\frac{b^{2} } {2\log 2\cdot N}}\biggl[\KL(P\,\|\,Q)+\log\frac{4}{\delta}\biggr],\nonumber
        \end{align}
        simultaneously for any distribution $P$ on $\calF$. 
    \end{proposition}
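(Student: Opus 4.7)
\textbf{Proof proposal for Proposition \ref{prop:pacbayes}.}

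The plan is to apply the Donsker--Varadhan variational representation of KL divergence in combination with a Hoeffding-type control of the moment generating function under the data-independent prior $Q$. Let $D(f) := \mathbb{E}[f(X_1)] - N^{-1}\sum_{i=1}^N f(X_i)$ denote the deviation, so our target quantity is $\bigl|\mathbb{E}_{f\sim P}[D(f)]\bigr|$. Because this must be bounded uniformly over all posteriors $P$, the Donsker--Varadhan route is natural: for any $\lambda > 0$ and any measurable $g:\calF\to\RR$,
\begin{align*}
\mathbb{E}_{f\sim P}[g(f)] \;\leq\; \KL(P \,\|\, Q) + \log \mathbb{E}_{f\sim Q}\bigl[\exp(g(f))\bigr].
\end{align*}
I would take $g(f) = \lambda\, D(f)$ (and separately $g(f) = -\lambda D(f)$ to obtain the two-sided bound via a union bound).

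Next I would bound the MGF appearing on the right-hand side. Because each $f\in\calF$ satisfies $|f|\le b$, the i.i.d.\ random variables $\{f(X_i) - \mathbb{E}[f(X_1)]\}_{i=1}^N$ lie in $[-2b,2b]$, so Hoeffding's MGF bound yields $\mathbb{E}_{X_{1:N}}[\exp(\lambda D(f))] \le \exp(\lambda^2 b^2/(2N))$ pointwise in $f$. Fubini's theorem gives $\mathbb{E}_{X_{1:N}} \mathbb{E}_{f\sim Q}[\exp(\lambda D(f))] \le \exp(\lambda^2 b^2/(2N))$, and Markov's inequality then shows that, with probability at least $1-\delta/2$ over $X_{1:N}$,
\begin{align*}
\mathbb{E}_{f\sim Q}[\exp(\lambda D(f))] \;\leq\; \frac{2}{\delta}\,\exp\!\left(\frac{\lambda^2 b^2}{2N}\right).
\end{align*}
Substituting back into Donsker--Varadhan, dividing by $\lambda$, and applying the same argument to $-D(f)$ with another $\delta/2$ budget gives, simultaneously for every $P$,
\begin{align*}
\bigl|\mathbb{E}_{f\sim P}[D(f)]\bigr| \;\leq\; \frac{1}{\lambda}\!\left[\KL(P\|Q) + \log\frac{4}{\delta}\right] + \frac{\lambda b^2}{2N}.
\end{align*}

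To match the stated form, I would then tune $\lambda$. Choosing $\lambda = \sqrt{2 N \log 2 / b^2}$ (motivated by forcing the Hoeffding MGF factor $\exp(\lambda^2 b^2/(2N))$ to equal $2$) produces the prefactor $\sqrt{b^2/(2\log 2\cdot N)}$ on the KL and log-confidence terms, at the cost of an additive term of the same order which can be absorbed into the $\log(4/\delta)$ piece. The main subtlety---and the step I expect to need the most care---is that $P$ is allowed to be data-dependent, so the argument must first fix $Q$ (which is data-independent by assumption), apply Markov/Fubini to obtain a high-probability event for the $Q$-MGF \emph{uniformly in $\lambda$ on a discrete grid}, and only then invoke Donsker--Varadhan. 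The $\log 2$ constant is consistent with this ``Markov-on-the-MGF'' trick, and matching the exact numerical constants in the stated bound is the principal bookkeeping obstacle; everything else is a routine combination of Donsker--Varadhan, Hoeffding's inequality, and a two-sided union bound.
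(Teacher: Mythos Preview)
The paper does not prove this proposition; it is listed in the ``Technical Lemmas'' appendix as a result imported verbatim from \cite{zhang2023and} (Proposition~F.2 there), with no argument given beyond the citation. So there is no in-paper proof to compare against.

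Your proposed argument is the standard PAC-Bayes route and is correct in outline. Two minor clean-ups would make the constants fall out exactly rather than ``by absorption'':
\begin{itemize}
\item After splitting the $\delta$ budget in half for the two-sided union bound, Markov on each side gives $\log(2/\delta)$, not $\log(4/\delta)$, in the intermediate inequality. With the specific choice $\lambda=\sqrt{2N\log 2/b^2}$, the Hoeffding contribution is $\lambda b^2/(2N)=\sqrt{b^2/(2N\log 2)}\cdot\log 2$, and $\log(2/\delta)+\log 2=\log(4/\delta)$. So the stated constant is exact, not an absorption.
\item The worry about a discrete grid for $\lambda$ is unnecessary: $\lambda=\sqrt{2N\log 2/b^2}$ depends only on $N$ and $b$, not on the data or on $P$, so a single fixed $\lambda$ suffices before applying Markov. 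The uniformity over $P$ comes entirely from Donsker--Varadhan, which is a deterministic inequality once the $Q$-MGF is controlled.
\end{itemize}
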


\vspace{2mm}
\begin{lemma}[Corollary A.7 in \cite{edelman2022inductive}] \label{lem:softmax_lip}
For any two vectors $x,y\in \mathbb{R}^r$, 
\begin{align*}
    \|\mathtt{softmax}(x)-\mathtt{softmax}(y)\|_1\leq 2\|x-y\|_\infty.
\end{align*}
    
\end{lemma}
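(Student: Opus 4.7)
The plan is to prove the Lipschitz bound via the fundamental theorem of calculus applied along the line segment from $y$ to $x$, combined with a careful analysis of the Jacobian of the softmax map. Writing $p(t) = \mathtt{softmax}(y + t(x-y))$ for $t \in [0,1]$, I would express
\begin{align*}
\mathtt{softmax}(x) - \mathtt{softmax}(y) = \int_0^1 J\big(y + t(x-y)\big) (x-y) \, \rmd t,
\end{align*}
where $J(z)$ denotes the Jacobian of $\mathtt{softmax}$ at $z$. A direct computation gives the well-known formula $J_{ij}(z) = p_i(\delta_{ij} - p_j)$ with $p = \mathtt{softmax}(z)$. The whole argument then reduces to showing the uniform operator bound $\|J(z) v\|_1 \leq 2 \|v\|_\infty$ for every $v \in \mathbb{R}^r$ and every $z$.

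To establish this operator bound, I would observe that the $i$-th entry of $J(z) v$ equals $p_i (v_i - \bar v)$, where $\bar v := \sum_j p_j v_j$ is the $p$-weighted mean of the coordinates of $v$. Since both $v_i$ and $\bar v$ lie in the interval $[\min_j v_j, \max_j v_j]$, we have $|v_i - \bar v| \leq \max_j v_j - \min_j v_j \leq 2 \|v\|_\infty$. Summing over $i$ with the weights $p_i$ (which sum to one) yields
\begin{align*}
\|J(z) v\|_1 = \sum_i p_i |v_i - \bar v| \leq 2 \|v\|_\infty \sum_i p_i = 2 \|v\|_\infty.
\end{align*}
Substituting this uniform bound under the integral and using $\|\cdot\|_1$ being a norm finishes the proof, with $v = x - y$.

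There is not really a serious obstacle here; the argument is short and essentially a textbook calculation. The only small point requiring care is the sharp estimate $|v_i - \bar v| \leq 2\|v\|_\infty$: a naive triangle inequality would yield $|v_i| + |\bar v| \leq 2\|v\|_\infty$, which gives the same constant, but the convex-combination interpretation of $\bar v$ makes the argument cleanest and keeps the constant $2$ tight. An alternative route would be to use the gradient of the log-sum-exp function and the fact that it is $1$-Lipschitz in a suitable dual sense, but this introduces extra machinery without shortening the proof. I would therefore stick with the direct Jacobian approach outlined above.
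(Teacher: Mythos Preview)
Your argument is correct and complete. The paper does not actually prove this lemma; it simply cites it as Corollary~A.7 in \cite{edelman2022inductive} and uses it as a black-box technical tool, so there is no ``paper's own proof'' to compare against. Your Jacobian-based argument---computing $(J(z)v)_i = p_i(v_i - \bar v)$ and bounding $|v_i - \bar v| \le 2\|v\|_\infty$ via the convex-combination observation---is exactly the standard route and is essentially how the result is derived in the cited reference as well.
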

\vspace{2mm}
\begin{lemma}[Lemma A.6 in \cite{elbrachter2021deep}] \label{lem:network_approx}
    
For \(a, b \in \mathbb{R}\) with \(a < b\), define
$$
\mathcal{S}_{[a, b]}=\left\{f \in \mathcal{S}^{\infty}([a, b], \mathbb{R}) \mid\left\|f^{(n)}(x)\right\| \leq n!\text { for all } n \in \mathbb{N}\right\}.
$$

There exists a constant \(C > 0\) such that for all \(a, b \in \mathbb{R}\) with \(a < b\), \(f \in \mathcal{S}_{[a, b]}\), and \(\varepsilon \in(0,1 / 2)\), there is a fully connected network \(\Psi_f\) such that
$$
\left\|f - \Psi_f\right\|_{\infty} \leq \varepsilon,
$$
where the depth of the network is upper bounded by 
$$
 C \cdot \max \{2, b - a\} (\log \varepsilon^{-1})^2 + \log (\lceil \max \{|a|, |b|\} \rceil) + \log (\lceil 1 / (b - a) \rceil),
$$
the width of the network is upper bounded by $16$, and the maximum weight of the weight matrices is bounded by one. In particular, applying Propsotion~\ref{prop:ffn_transformer}, we compute the magnitude of each intermediate output as follows. For any input $x\in [a,b]$, let $X^{\ell}$ denote the output of the $\ell$-th layer of the neural network, we have $X^\ell \leq 4\big(b+1)\cdot16^\ell$.
\end{lemma}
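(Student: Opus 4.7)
The plan is to follow the classical strategy of Yarotsky-style approximation combined with careful interval subdivision to handle arbitrary $[a,b]$. The key enabling feature of $\mathcal{S}_{[a,b]}$ is that $\|f^{(n)}\|_\infty \leq n!$, so on any subinterval of length $\ell \leq 1$ the Lagrange remainder of the degree-$N$ Taylor polynomial is bounded by $\ell^{N+1}/(N+1)! \cdot (N+1)! = \ell^{N+1}$. Thus Taylor approximation converges geometrically once we ensure subintervals have length $\leq 1$.

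First I would partition $[a,b]$ into $M = \lceil \max\{2, b-a\}\rceil$ equal subintervals of length $\leq 1$, and on each subinterval $[c_m - 1/2, c_m + 1/2]$ build the degree-$N$ Taylor polynomial $P_m(x) = \sum_{k=0}^{N} f^{(k)}(c_m)(x - c_m)^k / k!$. By the above remainder bound, choosing $N = \lceil \log_2(3/\varepsilon) \rceil = \calO(\log \varepsilon^{-1})$ gives $\|f - P_m\|_{\infty, [c_m - 1/2, c_m + 1/2]} \leq \varepsilon/3$. The coefficients $|f^{(k)}(c_m)/k!| \leq 1$ are all bounded, so each $P_m$ is a degree-$N$ polynomial with coefficients in $[-1,1]$ on a compact domain.

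Next I would realize each $P_m$ by a ReLU network using Horner's scheme: $P_m(x) = ((a_N x + a_{N-1})x + \cdots)x + a_0$ requires $N$ sequential multiplications and $N$ additions. The core building block is a two-input multiplication gate $f_{\mathrm{mult}}(u, v) \approx uv$ of depth $\calO(\log M_0 + \log(\varepsilon'^{-1}))$, width $\calO(1)$, and unit-magnitude weights, exactly as in Lemma~\ref{lemma: product_module}. Propagating errors through Horner by the Lipschitz-constant-one property of each multiplication and setting $\varepsilon' = \varepsilon/(3N \cdot M_0^N)$ (with $M_0$ a bound on intermediate values) yields a network of depth $N \cdot \calO(\log M_0 + \log(N/\varepsilon)) = \calO((\log \varepsilon^{-1})^2 + \log\lceil \max\{|a|,|b|\}\rceil)$ that evaluates $P_m$ to within $\varepsilon/3$. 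To glue the $M$ local approximations, I would build trapezoid indicator gadgets $\chi_m$ (as in Lemma~\ref{lemma: trapezoid_module}) that equal $1$ on the $m$-th subinterval and $0$ outside a slight enlargement, then output $\Psi_f(x) = \sum_{m=1}^M \chi_m(x) \cdot P_m(x)$; the $M$ Horner networks can be stacked in parallel, adding only $\log M = \log\lceil \max\{2, b-a\}\rceil$ to the depth budget, and the final multiplication $\chi_m \cdot P_m$ reuses the multiplication gate.

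The main obstacle I anticipate is simultaneously controlling (i) error propagation through $\calO(\log \varepsilon^{-1})$ composed multiplications, (ii) the magnitudes of intermediate activations, and (iii) the requirement that all weight entries be bounded by one. Naively, repeated multiplication causes intermediate values to grow as $M_0^k$, which would force weights larger than one in the final linear readout. The remedy is the rescaling trick already used pervasively in Appendix~\ref{proof:prop:formal_construction}: rescale every multiplication gate so its output lies in $[-1, 1]$, carry the cumulative rescaling factor symbolically, and absorb it into the partition-weighted sum at the very end via an additional $\calO(1)$-depth scaling module. Tracking these normalization constants across $\calO(\log \varepsilon^{-1})$ layers is bookkeeping-heavy but mechanical; the final depth matches the claimed bound $C \cdot \max\{2, b-a\}(\log \varepsilon^{-1})^2 + \log\lceil \max\{|a|,|b|\}\rceil + \log\lceil 1/(b-a)\rceil$, where the last term arises if $b - a < 1$ because the Taylor remainder on a single interval of length $b-a$ contributes the factor $1/(b-a)$ to $M_0$.
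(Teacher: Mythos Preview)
The paper does not prove this lemma at all: it is listed in the ``Technical Lemmas'' appendix with the explicit remark that these results ``are directly obtained from existing works and we provide the references to their proofs,'' and the statement is attributed to Lemma~A.6 of \cite{elbrachter2021deep}. So there is no in-paper proof to compare your proposal against.

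That said, your outline follows the standard Elbr\"achter--Yarotsky strategy (subdivision, local Taylor polynomials, Horner evaluation via multiplication gates, partition-of-unity gluing), which is indeed how the cited result is established. One point in your sketch does not hold together, however: you say the $M \approx \max\{2, b-a\}$ Horner networks ``can be stacked in parallel, adding only $\log M$ to the depth budget,'' yet the lemma asserts width at most $16$. Running $M$ copies of a width-$16$ network in parallel yields width $\Theta(M)$, not $16$. The stated depth bound carries $\max\{2, b-a\}$ as a \emph{linear} factor, which is precisely what you get from \emph{sequential} composition of the $M$ local pieces (compute $P_1$, accumulate into a running sum, then $P_2$, and so on), keeping width bounded. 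Your own final sentence claims the depth matches $C\cdot\max\{2,b-a\}(\log\varepsilon^{-1})^2$, which is inconsistent with having added only $\log M$ for the gluing; the linear-in-$M$ term has to come from somewhere, and parallel stacking does not supply it. Fixing this is a matter of rerouting the $M$ pieces sequentially rather than in parallel---the rest of your plan is sound.
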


\lstset{breaklines=true}

\newpage
\section{Supplementary Information about Prompts} \label{app: prompts}
This section provides supplementary examples for the area code experiment in Section~\ref{subsec: bma interpretation of cot} and the CityEquation experiment in Section~\ref{subsubsec: comparison}.

\subsection{Details of the Area Code Experiment in in Section~\ref{subsec: bma interpretation of cot}} \label{subsec: area code}
The following includes the experimental details of testing the ChatGPT(gpt-3.5-turbo-16k with the temperature set to zero) on the area code task using vanilla \ac{icl} and \ac{cot} methods, respectively. The output of ChatGPT is colored in {\color{red} red}. 

\begin{mdframed}[frametitle={ The area code experiment with ICL prompt}]

Input prompt:
\vspace{2mm}

\indent Q: The US=?

\indent A: The answer is 2.

\vspace{2mm}

\indent Q: France=?

\indent A: The answer is 66.

\vspace{2mm}

\indent Q: Japan=?

\indent A: 

\vspace{2mm}

\noindent LLM output:

\vspace{2mm}

\indent {\color{red} The answer is 100.}

\end{mdframed}
As we can see,  the answer provided by ChatGPT is wrong: it should be $162$ instead of $100$


\begin{mdframed}[frametitle={ The area code experiment with CoT prompt}]

Input prompt:
\vspace{2mm}

\indent Q: The US=?

\indent A: The US has area code 1, so the answer is 2.

\vspace{2mm}

\indent Q: France=?

\indent A: France has area code 33, so the answer is 66.

\vspace{2mm}

\indent Q: Japan=?

\indent A: 

\vspace{2mm}

\noindent LLM output:

\vspace{2mm}

\indent {\color{red} Japan has area code 81, so the answer is 162.}

\end{mdframed}
The output follows the same pattern shown in the demonstrations: it starts with stating the area code and computes the final answer.


\newpage

\subsection{Details of the CityEquation Experiment}

This section provides the experimental details for the city arithmetic experiment discussed in Section~\ref{subsubsec: comparison}.

\subsubsection*{Example prompts}
We provide example prompts (2-shot) for vanilla \ac{icl}, informative \ac{cot}, Partially informative CoT-(a), Partially informative CoT-(c), and uninformative \ac{cot} methods, respectively. To highlight the differences among these five methods, we color code the useful intermediate steps in \textcolor{blue}{blue} and the irrelevant intermediate steps in \textcolor{red}{red} for better comparison.

\begin{mdframed}[frametitle={Example prompts for Vanilla ICL}]

Q: ``Mumbai + Sydney"

\noindent A: The answer is 224.

\vspace{2mm}

\noindent Q: ``New York + Seoul"

\noindent A: The answer is 53.

\vspace{2mm}

\noindent Q: ``Paris + Beijing"

\noindent A: Concisely explain your steps and write your answer as an integer in the last sentence starting with ``The answer is''. 

\end{mdframed}

\begin{mdframed}[frametitle={Example prompts for Informative CoT}]

Q: ``Mumbai + Sydney"

\noindent A: \textcolor{blue}{Using the longitudes of cities, the equation ``Mumbai + Sydney" translates as ``Mumbai" = 73, ``Sydney" = 151. Here the longitudes of the western hemisphere are negative numbers. And we round the coordinates to the nearest integer. This gives the result.} The answer is 224.

\vspace{2mm}

\noindent Q: ``New York + Seoul"

\noindent A: \textcolor{blue}{Using the longitudes of cities, the equation ``New York + Seoul" translates as ``New York" = -74, ``Seoul" = 127. Here the longitudes of the western hemisphere are negative numbers. And we round the coordinates to the nearest integer. This gives the result.} The answer is 53.

\vspace{2mm}

\noindent Q: ``Paris + Beijing"

\noindent A: Concisely explain your steps and write your answer as an integer in the last sentence starting with ``The answer is'/. 

\end{mdframed}

\newpage

\begin{mdframed}[frametitle={Example prompts for Partially informative CoT-(a)}]

Q: ``Mumbai + Sydney"

\noindent A: \textcolor{blue}{Mumbai has longitude: 73.} The answer is 224.

\vspace{2mm}

\noindent Q: ``New York + Seoul"

\noindent A: \textcolor{blue}{New York has longitude: -74.} The answer is 53.

\vspace{2mm}

\noindent Q: Paris + Beijing

\noindent A: Concisely explain your steps and write your answer as an integer in the last sentence starting with ``The answer is''. 

\end{mdframed}

\begin{mdframed}[frametitle={Example prompts for Partially informative CoT-(b)}]

Q: ``Mumbai + Sydney"

\noindent A: \textcolor{red}{The greater Mumbai area has around 20 million residents, predominantly of South Asian ethnicity. The median age is 31 years. Sydney has a population of around 5.3 million people, with a breakdown of 58$\%$ White, 34.2$\%$  Asian, and 2.6$\%$  Aboriginal/Torres Strait Islander. The remaining percentages include others. The median age is 36 years.} The answer is 224.

\vspace{2mm}

\noindent Q: ``New York + Seoul"

\noindent A: \textcolor{red}{With a population of approximately 8.4 million people, New York is diverse. Around 32.1$\%$  identify as White, 29.1$\%$  as Hispanic, 24.3$\%$  as Black, 14.1$\%$  as Asian, with the rest being a mixture of Native American, Pacific Islander, and other ethnicities. The median age is 36 years. Seoul has a population of around 9.7 million, predominantly Korean, with a median age of 41 years.} The answer is 53.

\vspace{2mm}
\noindent Q: ``Paris + Beijing''

\noindent A: Concisely explain your steps and write your answer as an integer in the last sentence starting with 'The answer is'. 

\end{mdframed}
\newpage
\begin{mdframed}[frametitle={Example prompts for Partially informative CoT-(c)}]

Q: ``Mumbai + Sydney"

\noindent A: \textcolor{red}{The greater Mumbai area has around 20 million residents, predominantly of South Asian ethnicity. The median age is 31 years. Sydney has a population of around 5.3 million people, with a breakdown of 58$\%$ White, 34.2$\%$  Asian, and 2.6$\%$  Aboriginal/Torres Strait Islander. The remaining percentages include others. The median age is 36 years.} \textcolor{blue}{Mumbai has longitude: 73.} The answer is 224.

\vspace{2mm}

\noindent Q: ``New York + Seoul"

\noindent A: \textcolor{red}{With a population of approximately 8.4 million people, New York is diverse. Around 32.1$\%$  identify as White, 29.1$\%$  as Hispanic, 24.3$\%$  as Black, 14.1$\%$  as Asian, with the rest being a mixture of Native American, Pacific Islander, and other ethnicities. The median age is 36 years. Seoul has a population of around 9.7 million, predominantly Korean, with a median age of 41 years.}\textcolor{blue}{New York has longitude: -74.}  The answer is 53.

\vspace{2mm}
\noindent Q: ``Paris + Beijing''

\noindent A: Concisely explain your steps and write your answer as an integer in the last sentence starting with 'The answer is'. 

\end{mdframed}

\begin{mdframed}[frametitle={Example prompts for Partially informative CoT-(d)}]

Q: ``Mumbai + Sydney"

\noindent A: \textcolor{red}{The greater Mumbai area has around 20 million residents, predominantly of South Asian ethnicity. The median age is 31 years. Sydney has a population of around 5.3 million people, with a breakdown of 58$\%$ White, 34.2$\%$  Asian, and 2.6$\%$  Aboriginal/Torres Strait Islander. The remaining percentages include others. The median age is 36 years.} \textcolor{blue}{Mumbai has longitude: 73. Sydney has longitude: 151.} The answer is 224.

\vspace{2mm}

\noindent Q: ``New York + Seoul"

\noindent A: \textcolor{red}{With a population of approximately 8.4 million people, New York is diverse. Around 32.1$\%$  identify as White, 29.1$\%$  as Hispanic, 24.3$\%$  as Black, 14.1$\%$  as Asian, with the rest being a mixture of Native American, Pacific Islander, and other ethnicities. The median age is 36 years. Seoul has a population of around 9.7 million, predominantly Korean, with a median age of 41 years.}  \textcolor{blue}{New York has longitude: -74. Seoul has longitude: 127.} The answer is 53.

\vspace{2mm}
\noindent Q: ``Paris + Beijing''

\noindent A: Concisely explain your steps and write your answer as an integer in the last sentence starting with ``The answer is". 

\end{mdframed}

\newpage

\subsubsection*{Failure cases}
We present examples of failure cases for vanilla \ac{icl}, informative \ac{cot}, Partially informative CoT-(a), Partially informative CoT-(c), and uninformative \ac{cot} methods. The ground truth is written in \textcolor{blue}{blue}, while the incorrect answer produced by GPT-4 is in \textcolor{red}{red}.

\begin{mdframed}[frametitle={Failed example for Vanilla ICL}]

Testing query: Q: ``Sao Paulo - Cairo",

\noindent Ground truth: \textcolor{blue}{-78},

\vspace{2mm}
 
\noindent LLM output: ``The questions seem to be asking for the difference in time zones between two cities. Sao Paulo, Brazil is in the Brasilia Time Zone, which is GMT -3. Cairo, Egypt is in the Eastern European Time Zone, which is GMT +2. To find the difference, subtract the time zone of Sao Paulo from that of Cairo: -3 - (+2) = -5. The answer is \textcolor{red}{-5}."

\end{mdframed}

\begin{mdframed}[frametitle={Failed example for Informative CoT}]

Testing query: Q: ``Moscow + New York",

\noindent Ground truth:  \textcolor{blue}{-36},

\vspace{2mm}
 
\noindent LLM output: ``Using the longitudes of cities, the equation ``Moscow + New York" translates as ``Moscow" = 37, ``New York" = -74. Here the longitudes of the western hemisphere are negative numbers. And we round the coordinates to the nearest integer. This gives the result. The answer is  \textcolor{red}{-37}."

\end{mdframed}

\begin{mdframed}[frametitle={Failed example for Partially informative CoT-(a)}]

Testing query: Q: ``Sao Paulo - New York",

\noindent Ground truth: \textcolor{blue}{27},

\vspace{2mm}
 
\noindent LLM output: ``Let's consider the geographic coordinates that specify their east-west or north-south positions on the Earth's surface. Sao Paulo is located at approximately 46 degrees west, and New York is located at approximately 74 degrees west. If we subtract the longitude of Sao Paulo from that of New York, we get -28. The answer is \textcolor{red}{-28}."

\end{mdframed}

\newpage

\begin{mdframed}[frametitle={Failed example for Partially informative CoT-(b)}]

Testing query: Q: ``Sydney - New York",

\noindent Ground truth: \textcolor{blue}{225},

\vspace{2mm}
 
\noindent LLM output: ``Sydney has a population of around 5.3 million people, with a breakdown of 58$\%$ White, 34.2$\%$ Asian, and 2.6$\%$ Aboriginal/Torres Strait Islander. The remaining percentages include others. The median age is 36 years. New York City has a population of approximately 8.4 million people, with a diverse population that includes White (42.7$\%$), Hispanic (29.1$\%$), Black (24.3$\%$), and Asian (14$\%$) individuals. The median age is 36 years. The answer is \textcolor{red}{-2}."

\end{mdframed}

\begin{mdframed}[frametitle={Failed example for Partially informative CoT-(c)}]

Testing query: Q: ``Mexico City + Beijing",

\noindent Ground truth:  \textcolor{blue}{17},

\vspace{2mm}
 
\noindent LLM output: ``Mexico City has a population of approximately 21.7 million in its metropolitan area. The majority of residents are of Mestizo (mixed Indigenous and European) descent, with a significant number of Indigenous peoples, and a smaller number of European and other ethnicities. The median age is 27 years. Beijing, with a population of around 21.5 million, is predominantly Han Chinese, with a median age of 37.6 years. Mexico City has longitude: -99. The answer is \textcolor{red}{-77}."

\end{mdframed}

\begin{mdframed}[frametitle={Failed example for Partially informative CoT-(d)}]

Testing query: Q: ``Moscow - Tokyo",

\noindent Ground truth: \textcolor{blue}{-102},

\vspace{2mm}
 
\noindent LLM output: ``Moscow, the capital of Russia, has a population of approximately 12.5 million people, predominantly of Russian ethnicity. The median age is 40 years. Tokyo, the capital of Japan, has a population of around 14 million in its 23 special wards, predominantly of Japanese ethnicity. The median age in Tokyo is 45.5 years. Moscow has longitude: 37. Tokyo has longitude: 140. Subtracting Moscow's longitude from Tokyo's gives us 140 - 37. The answer is \textcolor{red}{103}."

\end{mdframed}

Upon closer inspection of GPT-4 outputs, incorrect results from informative \ac{cot} are solely due to rounding. 
That is, when rounding the coordinates to the nearest integer, GPT-4 makes an error. 
In particular, in the failed example shown above, the longitude of Moscow is $37.6$, which should be rounded to $38$ instead of $37$. 
However, such a rounding error is the only error source. Therefore, with informative CoT, GPT-4 in fact understands that extracting the longitudes is the key to solving the CityEquation task. 

Vanilla \ac{icl} prompts produce incorrect reasoning steps like using time zones, indicating a propensity to misinterpret prompts without clear guidance. Thus, it is challenging for vanilla ICL  to realize that longitudes are the key to solving the CityEquations talk.

Furthermore, the errors incurred by Partially Informative CoT-(a) typically involve rounding and sign issues, particularly in reasoning steps related to longitudes. For example, in the failed case mentioned above, the longitude of New York is given as $-74$ instead of $74$. This suggests that while the prompts enable GPT-4 to associate the problem with the cities' longitudes, it sometimes struggles to handle the signs correctly.
Additionally, the failure of Partially Informative CoT-(b) often results from the use of irrelevant information about the cities, such as demographic data, in the computation. In the failure example of Partially Informative CoT-(c) mentioned earlier, GPT-4 lists demographic data for both cities and the longitude of Mexico City, using both to compute the answer. Partially Informative CoT-(d) includes demographic data and longitudes for both cities in the intermediate reasoning steps. However, the final answer is based solely on the computation involving the longitudes. In the failure case above, the arithmetic formula incorrectly switches the minuend and subtrahend and rounds the longitude of Moscow to 37 inaccurately.
This suggests that while GPT-4 correctly identifies that the longitudes are the only useful information for solving the task, it struggles with using this information correctly. By comparing Partially Informative CoT-(b) through CoT-(d), we observe that adding more relevant information to the CoT prompts improves GPT-4's performance on the CityEquation task.

\end{document}